\documentclass{article}

\usepackage[margin=1in]{geometry}

\usepackage{graphicx}
\usepackage[round,authoryear]{natbib}

\usepackage{amsmath,amssymb,amsthm,mathtools,bm}
\usepackage{booktabs}
\usepackage{enumitem}
\usepackage{microtype}
\usepackage{placeins}

\usepackage{etoc}

\usepackage[
    colorlinks=true,
    citecolor=blue,
    linkcolor=black,
    urlcolor=blue
]{hyperref}

\mathtoolsset{showonlyrefs}

\newtheorem{theorem}{Theorem}[section]
\newtheorem{proposition}[theorem]{Proposition}
\newtheorem{lemma}[theorem]{Lemma}

\newtheorem{corollary}[theorem]{Corollary}
\newtheorem{remark}[theorem]{Remark}

\newtheorem{assumption}[theorem]{Assumption}

\newcommand{\one}{\mathbf{1}}

\title{From Geometry to Generalization: Why Row Normalization Can Beat Adam and Muon}

\author{
Jihwan Kim\textsuperscript{1,2}
\qquad
Dogyoon Song\textsuperscript{3}
\qquad
Chulhee Yun\textsuperscript{4,2}
\\[0.7em]
\textsuperscript{1}Seoul National University
\qquad
\textsuperscript{2}KAIST InnoCORE LLM
\\
\textsuperscript{3}University of California, Davis
\qquad
\textsuperscript{4}KAIST
\\[0.6em]
}

\date{}

\begin{document}

\maketitle

\vspace{-1.1cm}
\begin{center}
\small
\texttt{aqua4689@snu.ac.kr}
\qquad
\texttt{dgsong@ucdavis.edu}
\qquad
\texttt{chulhee.yun@kaist.ac.kr}
\end{center}
\vspace{1em}

\begin{abstract}
Different optimizers can fit the same training data while selecting classifiers with substantially different geometries, but whether this difference provably affects population performance remains unclear. We show that row-wise normalization can achieve strictly higher population accuracy than full-batch Adam, a proxy for random-reshuffling Adam, and exact-SVD Muon in high-dimensional multiclass classification. Under an isotropic Gaussian-cloud data model, this advantage arises because row normalization's class-wise Euclidean geometry asymptotically preserves the population decision-boundary directions, whereas Adam's coordinate-wise geometry and Muon's spectral geometry introduce nonvanishing distortions. Beyond isotropy, the advantage persists for full-batch training on class means with independently oriented class-mean and test-noise covariances. It holds for power-law spectra with class-mean exponent below one, even under heavily anisotropic test noise. When both covariances are diagonal and sufficiently close, the advantage over Adam can reverse, while applying the same random rotation to both restores it by changing only their alignment with Adam's coordinate axes. Synthetic and last-layer language-model experiments support the predicted advantage.
\end{abstract}

\section{Introduction}

Row-wise normalization has recently emerged as a simple and effective ingredient in optimizers for large language model training.\footnote{We use \(W\in\mathbb{R}^{d_{\mathrm{out}}\times d_{\mathrm{in}}}\); under the transposed convention, row-wise becomes column-wise normalization.}
\textsc{Scale} \citep{glentis2025memory} uses row-normalized SGD as its base method and adds momentum only to the final layer, reporting performance comparable to or better than Adam \citep{kingma2015adam} and Muon \citep{jordan2024muon} while using substantially less optimizer-state memory.\footnote{\textsc{Scale} uses the transposed weight convention and thus calls the same operation column-wise normalization.} Related row-level normalization operations also appear in recent methods such as SinkGD, NorMuon, RMNP, and MOGA \citep{scetbon2026gradient,li2025normuon,deng2026rmnp,xu2026width}. Together, these results provide growing empirical evidence that normalizing updates separately across output units can be beneficial for large-scale optimization.
Yet what drives the strong empirical performance of row-normalized methods remains poorly understood.

A natural hypothesis is that this behavior is tied to the geometry induced by the optimizer.
Recent implicit-bias results make this connection concrete: normalized steepest-descent methods can converge toward max-margin classifiers associated with their underlying norm geometry \citep{soudry2018implicit, gunasekar2018characterizing, fan2026implicit,li2026implicit,baek2026implicit}. 
Under the implicit-bias characterizations used in this work, row-normalized methods are associated with the row norm\footnote{In our paper, row norm means \(\|W\|_{\mathrm{row}}:= \max_{i}\|W_{i,:}\|_2\).}, Muon with the spectral norm, and full-batch Adam with the entrywise max-norm.
These distinct geometries can therefore lead the methods toward classifiers with substantially different decision boundaries, even when they all fit the training data well.
What remains unclear is whether these geometric differences translate into systematic differences in population performance. This motivates our central question: \emph{can row-wise normalization provably achieve higher population accuracy than competing optimizers, and if so, what geometric mechanism explains the advantage?}

Existing theory does not answer this question. The closest results analyze row-normalized methods through nonconvex convergence or width-scaling properties, but do not establish that they select statistically better solutions than competing optimizers \citep{deng2026rmnp,xu2026width}.

To the best of our knowledge, we provide the first such result. We analyze the statistical quality of the limiting classifiers selected by different optimizer geometries and establish a strict population-accuracy advantage for row-wise normalization in high-dimensional Gaussian data models. Our analysis also identifies the geometric mechanism behind this advantage: the class-wise Euclidean geometry of row-wise normalization better preserves the population decision-boundary directions, whereas coordinate-wise and spectral geometries introduce systematic directional distortions.

\begin{figure}[t]
    \centering
    \includegraphics[width=\linewidth]{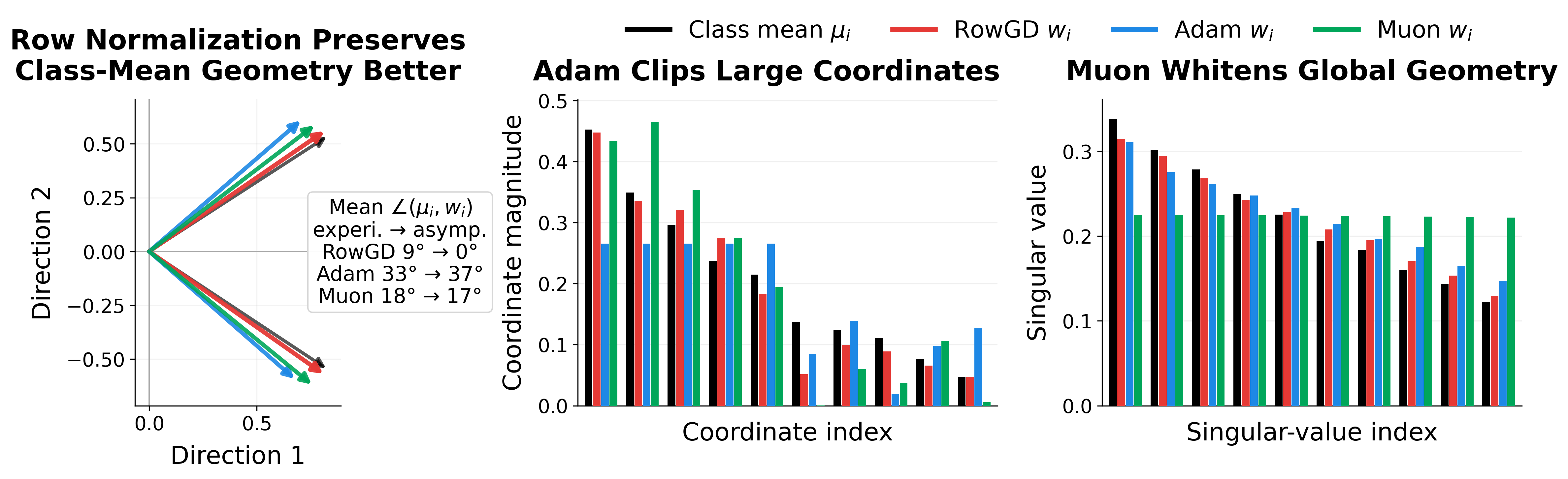}
    \caption{\textbf{Geometry of the learned classifiers.}
    Class means are compared with classifier rows learned by row-normalized GD, Adam, and Muon for 60 classes in 20 dimensions: a 2D projection of two classes, with mean angles over all classes and their proportional-limit predictions from Proposition~\ref{prop:limiting-geometry} (left); coordinate magnitudes at ranks \(1,3,\ldots,19\) after sorting by class-mean magnitude (middle); and singular values at indices \(1,3,\ldots,19\) of the full matrices (right).}
    \vspace{-4pt}
    \label{fig:optimizer-geometry-summary}
\end{figure}

\subsection{Our Contributions}

We study how optimizer geometry affects population accuracy in high-dimensional multiclass classification. Our main contributions are the following.

\begin{itemize}[leftmargin=*]
\item \textbf{Population-accuracy ordering under an isotropic Gaussian-cloud data model.} We first establish a strict population-accuracy advantage of row-normalized optimizers over representative spectral, coordinate-wise, and adaptive alternatives under our isotropic Gaussian-cloud data model. As an ingredient, we extend existing normalized steepest-descent implicit-bias analyses \citep{fan2026implicit, li2026implicit} to the row norm. Specifically, we show the advantage in three settings as follows.
\begin{itemize}
    \item \textbf{Deterministic full-batch setting.} Row-normalized GD and row-normalized GD with momentum outperform Adam, SignGD, Signum, exact-SVD Muon, and Spectral-GD (Theorem~\ref{thm:det-optimizer-accuracy-order-clean}).
    \item \textbf{Stochastic random-reshuffling setting.} Under random reshuffling, row-normalized SGD with momentum outperforms stochastic Signum and stochastic exact-SVD Muon (Proposition~\ref{thm:stoch-optimizer-accuracy-order-clean}).
    \item \textbf{Random-reshuffling Adam via AdamProxy.} We extend the AdamProxy framework of \citet{baek2026implicit} from incremental Adam to random-reshuffling mini-batch Adam. We prove that the resulting proxy approximates the one-epoch displacement of random-reshuffling Adam in the $\beta_1,\beta_2\to1$ regime (Proposition~\ref{thm:rr-adamproxy-epoch-high-memory}) and show that row-normalized SGD with momentum achieves strictly higher population accuracy than AdamProxy (Proposition~\ref{thm:adamproxy-optimizer-accuracy-order-clean}).
\end{itemize}

\item \textbf{Extension beyond isotropy.} We show that the advantage of row-normalized methods is not limited to isotropic data distributions.
For deterministic full-batch training on the class means, we consider class-mean and test-noise covariances with independent random eigenbases and power-law spectra, allowing any class-mean exponent below one and any test-noise exponent. Under these conditions, we prove that the row-normalized methods continue to outperform Adam and Muon (Theorem~\ref{thm:aniso-dd-ordering}).
We further exhibit a coordinate-aligned anisotropic regime in which row-normalized methods are worse than Adam, and show that jointly rotating the class means and test-noise covariance by an independently sampled Haar matrix restores the advantage with high probability (Proposition~\ref{prop:main-coordinate-reversal}).
\end{itemize}

\subsection{Related Work}
\label{rel}

Here we discuss only the works most directly related to our problem formulation and proof strategy; additional related work is deferred to Appendix~\ref{app:add-rel}.

\paragraph{Row-normalized optimizers in practice.}
Recent LLM optimizers use output-wise row normalization as a principal matrix update in \textsc{Scale}, RMNP, and MOGA; as one step of alternating row--column normalization in SinkGD; and as row-wise adaptive rescaling after Muon orthogonalization in NorMuon \citep{glentis2025memory,scetbon2026gradient,li2025normuon,deng2026rmnp,xu2026width}. These works establish row normalization as a practical optimizer component, whereas we theoretically compare the quality of the solutions it selects against competing optimizer geometries.

\paragraph{Theoretical comparisons involving row normalization.}
RMNP proves nonconvex convergence guarantees for row-normalized momentum methods that match existing guarantees for Muon \citep{deng2026rmnp}. MOGA compares the width dependence of smoothness and learning-rate transfer across matrix geometries, showing favorable scaling for row-normalized methods \citep{xu2026width}.

\paragraph{Implicit bias and optimizer-selected solutions.}
Prior work characterizes the norm-induced max-margin directions of full-batch SignGD, Spectral-GD, Muon, and Adam in separable multiclass linear classification \citep{fan2026implicit}. This framework has been extended to random-reshuffling stochastic optimization, where batch size and momentum govern approximation to the full-batch margin \citep{li2026implicit}. 
Separately, \citet{baek2026implicit} show that incremental Adam can have a distinct stochastic implicit bias and introduce an AdamProxy characterized by a data-dependent Mahalanobis margin problem.
We add the row norm and extend AdamProxy to multiclass random reshuffling.

\section{Preliminaries}
\label{sec:preliminaries}

\paragraph{Notation.}
For \(k\ge2\), let \([k]:=\{1,\ldots,k\}\) and \(e_1,\ldots,e_k\) be the standard basis of
\(\mathbb R^k\). 
For positive integers \(b,N\), we write \(b\mid N\) if \(N\) is divisible by \(b\).
For matrices, \(\langle A,B\rangle:=\operatorname{tr}(A^\top B)\),
\(\|A\|_{\max}:=\max_{r,j}|A_{r,j}|\), and
\(\|A\|_{\mathrm{sum}}:=\sum_{r,j}|A_{r,j}|\); we use
\(\|A\|_{\mathrm{op}}\) and \(\|A\|_{\mathrm F}\) for the spectral and Frobenius norms.
For \(W\in\mathbb R^{k\times d}\) with rows \(w_r^\top\), define
\(\|W\|_{\mathrm{row}}:=\|W\|_{2,\infty}:=\max_{r \in [k]}\|w_r\|_2\) as a row norm.
For \(a\in\mathbb R^k\) and \(c\in[k]\), define \(\mathbb{S}_c(a):=e^{a_c}/\sum_{\ell=1}^k e^{a_\ell}\), the \(c\)-th component of the softmax map.
We write \(A^{\odot p}\) for entrywise powers and \(\oslash\) for entrywise division, with \(0/0:=0\).
For nonzero vectors \(u,v\), let \(\angle(u,v)\in[0,\pi]\) denote the angle between \(u\) and \(v\).
Write \(\widehat v:=v/\|v\|_2\) for \(v\neq{\mathbf 0}\).
For random quantities indexed by \(d\), \(o_{\mathbb P}(1)\) denotes a term that converges to zero in probability as \(d\to\infty\).

\paragraph{Data model.}
We collect the structural assumptions on the data distribution in the following Gaussian-cloud data model; asymptotic and optimization conditions appear separately below.
Let \(d\) be the input dimension, \(k\) the number of classes, and \(n_i\) the class-\(i\) sample size, \(N:=\sum_{i=1}^k n_i\) and \(n_{\min}:=\min_i n_i\).

\begin{assumption}[Isotropic Gaussian-cloud data model]
\label{ass:gaussian-cloud}
The training dataset consists of \((x_{i,s},i)\) for \(i\in[k]\) and \(s\in[n_i]\) such that
\begin{equation}
x_{i,s}:=\mu_i+r z_{i,s},
    \qquad\text{where}\qquad \mu_i\overset{\mathrm{iid}}{\sim}\mathcal N({\mathbf 0},I_d),\qquad z_{i,s}\overset{\mathrm{iid}}{\sim}\mathcal N({\mathbf 0},I_d).
\label{eq:gaussian-cloud}
\end{equation}
A test example \(X^{\mathrm{te}}\) with label \(Y\) is
\begin{equation}
X^{\mathrm{te}}:=\mu_Y+r_{\mathrm{te}}z^{\mathrm{te}},\qquad \mathbb P(Y=i)=\pi_{i,d},\qquad z^{\mathrm{te}}\sim\mathcal N({\mathbf 0},I_d),
\label{eq:population-test-model-clean}
\end{equation}
where \((\mu_i)_{i=1}^k\), \((z_{i,s})_{i,s}\), \(Y\), and \(z^{\mathrm{te}}\) are mutually independent. 
Here, \(r=r(d)\ge0\), \(r_{\mathrm{te}}=r_{\mathrm{te}}(d)>0\), \(\pi_{i,d}>0\), and \(\sum_{i=1}^k\pi_{i,d}=1\).
\end{assumption}

We also use the flattened notation \(\{(x_n,y_n)\}_{n=1}^N\) for the same training set, where \(x_n=x_{i,s}\) and \(y_n=i\) under some fixed bijection between \(\{(i,s):i\in[k],\,s\in[n_i]\}\) and \([N]\).

\begin{assumption}[High-dimensional scaling]
\label{ass:asymptotic-regime}
Along a sequence of models indexed by \(d\), with \(k=k(d)\), \(n_i=n_i(d)\), and \(\pi_{\min,d}:=\min_i\pi_{i,d}\),
\begin{equation}
k/d\to\alpha\in(0,\infty),\qquad n_{\min}\to\infty,\qquad \log N=o(d),\qquad \log(1/\pi_{\min,d})=o(d).
\label{eq:asymptotic-regime-clean}
\end{equation}
Furthermore, \(r_0:=\sup_d r(d)<\infty\) and \(r_{\mathrm{te},0}:=\sup_d r_{\mathrm{te}}(d)<\infty\).
\end{assumption}

We focus on a proportional regime in which \(k\) and \(d\) remain on the same scale, excluding settings where one asymptotically dominates the other; this is qualitatively compatible with language-model last layers. The condition \(\log N=o(d)\) rules out datasets exponential in \(d\), so that the geometry is not dominated by rare deviations accumulated over exponentially many samples. Meanwhile, \(\log(1/\pi_{\min,d})=o(d)\) allows class frequencies following Zipf's law, as observed in language data.

\paragraph{Classifier and loss.}
A linear multiclass classifier is \(W\in\mathbb R^{k\times d}\), with score vector \(Wx\) at \(x\in\mathbb R^d\). We train with multiclass cross-entropy
\begin{equation}
L(W):=-\frac1N\sum_{i=1}^k\sum_{s=1}^{n_i}\log \mathbb{S}_i(Wx_{i,s}).
\label{eq:ce-loss-clean}
\end{equation}
This standard setting \citep{fan2026implicit,li2026implicit} abstracts a language model's final linear layer.

\paragraph{Population accuracy.}
We measure population performance by the probability that the true class attains the strictly largest linear score on a test example:
\begin{equation}
\operatorname{Acc}_{\pi,d}(W):=\sum_{i=1}^k\pi_{i,d}\,\mathbb P\!\left(w_i^\top X^{\mathrm{te}}> w_j^\top X^{\mathrm{te}}\ \text{for all }j\in[k]\setminus\{i\}\mid Y=i\right).
\label{eq:population-accuracy-clean}
\end{equation}

\paragraph{Norm-based update maps.}
We use three norm-based update maps. For the row norm \(\psi_{\mathrm{row}}:=\|\cdot\|_{\mathrm{row}}\), set \((\Delta_{\mathrm{row}}(G))_{r,:}:=G_{r,:}/\|G_{r,:}\|_2\) when \(G_{r,:}\neq{\mathbf 0}\), and \((\Delta_{\mathrm{row}}(G))_{r,:}:={\mathbf 0}\) otherwise. For the entrywise max-norm \(\psi_{\max}:=\|\cdot\|_{\max}\), let \(\Delta_{\max}(G):=\operatorname{sign}(G)\) entrywise, with \(\operatorname{sign}(0):=0\). For the spectral norm \(\psi_{\mathrm{sp}}:=\|\cdot\|_{\mathrm{op}}\), if \(G\neq{\mathbf 0}\), let \(G=U\Sigma V^\top\) be a compact SVD and define \(\Delta_{\mathrm{sp}}(G):=UV^\top\); set \(\Delta_{\mathrm{sp}}({\mathbf 0}):={\mathbf 0}\). For each \(g\in\{\mathrm{row},\mathrm{sp},\max\}\), these choices satisfy \(\Delta_g(G)\in\operatorname*{arg\,max}_{\psi_g(\Delta)\le1}\langle G,\Delta\rangle\), although the maximizer need not be unique.

\paragraph{Full-batch and random-reshuffling optimizers.}
For random reshuffling (RR), assume \(b\mid N\), set \(m:=N/b\), and independently permute and partition the data into \(m\) mini-batches of size \(b\) each epoch. Let \(B_t\) be the batch at step \(t\), define \(L_{B_t}(W):=|B_t|^{-1}\sum_{(i,s)\in B_t}-\log\mathbb S_i(Wx_{i,s})\), and set \(G_t:=\nabla L(W_t)\) under full-batch training and \(G_t:=\nabla L_{B_t}(W_t)\) under RR.
For \(g\in\{\mathrm{row},\mathrm{sp},\max\}\), \(\beta\in[0,1)\), and \(M_0={\mathbf 0}\), we write
\begin{equation}
M_{t+1}=\beta M_t+(1-\beta)G_t,\qquad
W_{t+1}=W_t-\eta_t\Delta_g(M_{t+1}).
\label{eq:unified-updates-clean}
\end{equation}
Under full-batch training, \(\beta=0\) gives row-normalized gradient descent (RowGD), spectral gradient descent (SpecGD), and sign gradient descent (SignGD) for \(g=\mathrm{row},\mathrm{sp},\max\), respectively; with \(\beta\in(0,1)\), the corresponding methods are row-normalized gradient descent with momentum (RowGDM), exact-SVD Muon (Muon),\footnote{Practical Muon \citep{jordan2024muon} approximates the orthogonalization step using a finite number of Newton--Schulz matrix-polynomial iterations. We instead use the exact SVD polar factor \(UV^\top\); this is equivalent to adding momentum to SpecGD and gives the variant we refer to as exact-SVD Muon.} and Signum. Under RR, we denote the momentum counterparts by RR-RowSGDM, RR-Muon, and RR-Signum. Thus, throughout the theory, Muon and RR-Muon refer to the exact-SVD variants.
Throughout, \(W_t^{\mathrm{opt}}\) denotes the parameter iterate of optimizer \(\mathrm{opt}\) at step \(t\).

\begin{assumption}[Bounded number of mini-batches]
\label{ass:bounded-minibatches}
Under random-reshuffling training with batch size \(b=b_d\), the number of mini-batches per epoch, \(m_d:=N(d)/b_d\), satisfies \(m_0:=\sup_d m_d<\infty\).
\end{assumption}

\paragraph{Adam optimizer.}
We consider Adam without bias correction or a numerical-stability constant, with \(0<\beta_1\le\beta_2<1\), using \(G_t\) from the corresponding full-batch or random-reshuffling setting:
\begin{equation}
\begin{aligned}
M_{t+1}=\beta_1M_t+(1-\beta_1)G_t, \mspace{15mu}
V_{t+1}=\beta_2V_t+(1-\beta_2)G_t^{\odot2}, \mspace{15mu}
W_{t+1}=W_t-\eta_tM_{t+1}\oslash V_{t+1}^{\odot1/2}.
\end{aligned}
\label{eq:adam-update-clean}
\end{equation}
We initialize \(M_0=V_0={\mathbf 0}\), and refer to the full-batch and random-reshuffling versions as Adam and RR-Adam, respectively.
The following condition holds almost surely at $W_0 = 0$.

\begin{assumption}[Full-batch Adam initialization]
\label{ass:adam-init}
For full-batch Adam, every entry of \(\nabla L(W_0)\) is nonzero.
\end{assumption}

\paragraph{Learning-rate schedule.} We use the following learning-rate conditions.
\begin{assumption}[Base learning-rate condition]
\label{ass:lr-base}
The sequence \(\{\eta_t\}_{t\ge0}\) is positive and nonincreasing, with \(\eta_t\to0\) and \(\sum_{t=0}^\infty \eta_t=\infty\).
\end{assumption}

\begin{assumption}[Momentum learning rate]
\label{ass:lr-momentum}
For every \(\beta\in(0,1)\) and \(c_1>0\), there exist \(t_0\in\mathbb N\) and \(c_2>0\) such that \(\sum_{s=0}^{t}\beta^s\bigl(\exp(c_1\sum_{\tau=1}^{s}\eta_{t-\tau})-1\bigr)\le c_2\eta_t\) for all \(t\ge t_0\).
\end{assumption}

Non-momentum methods require only Assumption~\ref{ass:lr-base}; momentum methods additionally require Assumption~\ref{ass:lr-momentum}. Both hold for \(\eta_t=c(t+1)^{-a}\), \(c>0\), \(a\in(0,1]\).

\section{Main Results}
\label{sec:main-results}

Section~\ref{subsec:deterministic-clean} establishes the deterministic optimizer population-accuracy ordering and identifies its limiting geometries. Section~\ref{subsec:stochastic-clean} studies robustness to random-reshuffling stochastic training, while Section~\ref{sec:anisotropic-extension} extends the analysis beyond the isotropic Gaussian-cloud data model.

\subsection{Comparison of Optimizers in the Deterministic Setting}
\label{subsec:deterministic-clean}

We first compare the full-batch optimizers defined in Section~\ref{sec:preliminaries}. Our first main result shows that RowGD and RowGDM achieve strictly higher population accuracy than SpecGD, Muon, SignGD, Signum, and Adam.

\begin{theorem}[Deterministic optimizer accuracy ordering]
\label{thm:det-optimizer-accuracy-order-clean}
Define the full-batch optimizer groups
\begin{equation}
\mathcal O_{\mathrm{row}}:=\{\mathrm{RowGD},\mathrm{RowGDM}\},\quad
\mathcal O_{\mathrm{comp}}:=\{\mathrm{SpecGD},\mathrm{Muon},\mathrm{SignGD},\mathrm{Signum},\mathrm{Adam}\}.
\label{eq:det-optimizer-groups}
\end{equation}
Suppose Assumptions~\ref{ass:gaussian-cloud}, \ref{ass:asymptotic-regime}, \ref{ass:adam-init}, \ref{ass:lr-base}, and~\ref{ass:lr-momentum} hold. Then, for every sufficiently large \(d\), with probability at least \(1-e^{-\sqrt d}\) over the training data, the following holds for all momentum parameters \(\beta \in (0,1)\) of RowGDM, Muon, and Signum:
\begin{equation}
\min_{\mathrm{opt}\in\mathcal O_{\mathrm{row}}}\liminf_{t\to\infty}\operatorname{Acc}_{\pi,d}\!\left(W_t^{\mathrm{opt}}\right)
>
\max_{\mathrm{opt}\in\mathcal O_{\mathrm{comp}}}\limsup_{t\to\infty}\operatorname{Acc}_{\pi,d}\!\left(W_t^{\mathrm{opt}}\right).
\label{eq:det-optimizer-accuracy-order-clean}
\end{equation}
\end{theorem}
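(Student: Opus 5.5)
The argument has three parts: identify each optimizer's limiting direction through an implicit-bias characterization, compute the accuracy of each limiting classifier in the proportional limit, and compare the resulting constants.

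\textbf{Step 1 (implicit bias).} The training data are linearly separable with high probability: when \(d\) is large and \(\log N=o(d)\), the points concentrate around the \(k\) class means. I will extend the normalized steepest-descent analyses of \citet{fan2026implicit} to \(\psi_{\mathrm{row}}\). Under Assumptions~\ref{ass:lr-base} and~\ref{ass:lr-momentum}, this gives \(W_t/\|W_t\|_g\to W^\star_g\), where \(W^\star_g\) is a maximizer of \(\min_{n,j\neq y_n}(w_{y_n}-w_j)^\top x_n\) over \(\psi_g(W)\le1\). For SignGD, Signum and Adam (Assumption~\ref{ass:adam-init}), the relevant norm is \(\psi_{\max}\). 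For SpecGD and Muon it is \(\psi_{\mathrm{sp}}\).

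Accuracy is scale invariant, and it is continuous in \(W\) away from matrices with coincident rows because the test distribution is Gaussian. Hence \(\liminf\) and \(\limsup\) of \(\operatorname{Acc}_{\pi,d}(W_t)\) lie within the accuracy range over the set of max-margin solutions. If a max-margin set is not a singleton, I will bound uniformly over it. I expect the row max-margin set to be essentially unique, since its constraint set is a product of balls.

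\textbf{Step 2 (structure of the max-margin solutions).} Because \(n_{\min}\to\infty\) and \(\log N=o(d)\), the noise \(rz_{i,s}\) projected onto the \(O(k)\)-dimensional span of the rows contributes a vanishing perturbation relative to \(\|\mu_i\|\asymp\sqrt d\). I will make this precise with a union bound over the \(N\) samples, which yields the \(1-e^{-\sqrt d}\) probability. So each max-margin problem is approximated by its class-means version, and I will use stability of strictly concave or strongly structured problems to pass to the limit.

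For the row norm, the class-means problem decouples: it is solved by rows \(w_i\approx\widehat\mu_i-\) a small common correction. Consequently \(\angle(w_i,\mu_i)\to0\) and the pairwise differences \(w_i-w_j\) are asymptotically aligned with \(\mu_i-\mu_j\), which is the population Bayes direction for isotropic noise with equal scales. For \(\psi_{\max}\) and \(\psi_{\mathrm{sp}}\), I will use Proposition~\ref{prop:limiting-geometry}, which supplies deterministic limiting angles \(\theta_{\max},\theta_{\mathrm{sp}}>0\) between \(w_i\) and \(\mu_i\), together with limiting row norms and cross inner products. These come from a sign-like nonlinearity \(w\approx\) clipped \(\mu\), and from the spectral flattening \(W\approx(MM^\top)^{-1/2}M\) with a Marchenko--Pastur limit for \(k/d\to\alpha\).

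\textbf{Step 3 (accuracy comparison).} Conditioned on \(Y=i\), the score gaps \((w_i-w_j)^\top X^{\mathrm{te}}\) are Gaussian. Their means \((w_i-w_j)^\top\mu_i\) and covariances \(r_{\mathrm{te}}^2(w_i-w_j)^\top(w_i-w_l)\) are determined by the Gram structure of \([W;M]\), which concentrates. I will show that the probability of winning over all \(k-1\) competitors is a Schur-type functional that increases with the alignment \(\cos\angle(w_i,\mu_i)\) at a fixed noise geometry. I will then argue that among all \(W\), the limiting row solution maximizes this class-wise accuracy: it is the matched filter \(W\propto M\) up to a vanishing correction.

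The comparators have alignment bounded away from \(1\) by Step 2. This produces a \(d\)-independent gap in the normalized signal-to-noise ratio of every pairwise margin. Because \(k\to\infty\), I will need the gap to survive the maximum over \(\sim\alpha d\) competitors. The cleanest route is to show that the comparators are equivalent to the row solution plus independent-looking isotropic noise of nonvanishing relative size. Given the means, this is a degraded observation, and the accuracy functional is then strictly monotone in the effective noise level \(r_{\mathrm{te}}^2+\sigma^2_{\mathrm{distort}}\), with a difference that stays bounded below for large \(d\).

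\textbf{Main obstacle.} The hardest step is this last one: turning a constant angular distortion into a strict accuracy gap that does not vanish, despite the maximum over \(\Theta(d)\) classes and the non-uniform priors \(\pi_{i,d}\). My first attempt will be to represent \(w_i=a\widehat\mu_i+b\,\xi_i\), with \(\xi_i\) asymptotically independent of the means and test noise. Conditional on \(M\), this makes the comparator equivalent to a classifier facing added noise, and a Gaussian comparison (Slepian/Anderson) inequality then gives strict dominance class by class.
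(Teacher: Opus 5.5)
The gap is in Step 3, and it comes from misreading the regime.

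After rescaling features by \(d^{-1/2}\), so that \(\|\mu_i\|_2\approx1\) and the test noise has per-direction scale \(r_{\mathrm{te}}/\sqrt d\), the error of a directed pair \(i\to j\) is exactly \(\overline\Phi\bigl(\sqrt d\,(u_{ij}^W)^\top\mu_i/r_{\mathrm{te}}\bigr)\), where \(u_{ij}^W=(w_i-w_j)/\|w_i-w_j\|_2\). The directed signal \((u_{ij}^W)^\top\mu_i\) is bounded below by a positive constant on the Row, Max, and Spectral faces alike. Hence every accuracy tends to one, and the ``strict accuracy gap that does not vanish'' you set out to prove does not exist. The theorem only asks for a strict inequality, at each large \(d\), between two exponentially small errors.

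The tools you propose also fail:
\begin{itemize}
\item Anderson's inequality needs a set centrally symmetric about the mean, and the correct-decision cone \(\{s_i>s_j\ \forall j\}\) is not. So adding independent anisotropic noise to the scores need not lower the probability of a correct decision. Slepian's inequality likewise needs matched means and variances, which the two score processes do not share.
\item Writing a sign or polar row as \(a\widehat\mu_i+b\,\xi_i\) with \(\xi_i\) ``independent'' is not a valid reduction. The distortion is a deterministic function of the means and enters the scores through offsets \(\xi_i^\top\mu_Y\), not only through test noise.
\item Class-by-class dominance is more than is available, since on the Spectral face only the \emph{average} boundary discrepancy is bounded below.
\item The claim that the row solution maximizes accuracy over all \(W\) is false at finite \(d\) (there are no bias terms, and priors and mean norms are unequal), and it is not needed.
\end{itemize}

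The missing idea is a global comparison on the exponential scale. On the Row face every directed pair has signal at least \(a\) uniformly, so a union bound gives \(\mathcal E_{\pi,d}\le(k-1)\overline\Phi(a\sqrt d/r_{\mathrm{te}})\). For a competitor it suffices to find one pair whose boundary direction is off by \(\delta>0\). Then \((u_{ij}^V)^\top(\mu_i-\mu_j)\le\|\mu_i-\mu_j\|_2(1-\delta^2/2)\), so one of the two directed signals is at most some \(b<a\), and \(\mathcal E_{\pi,d}\ge\pi_{\min,d}\overline\Phi(b\sqrt d/r_{\mathrm{te}})\). The error ratio is then at most \(\frac{k-1}{\pi_{\min,d}}e^{-(a^2-b^2)d/(2r_{\mathrm{te},0}^2)}\to0\), because \(k=O(d)\) and \(\log(1/\pi_{\min,d})=o(d)\). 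This is where the prior assumption enters, which your plan never uses, and it is why the maximum over \(\Theta(d)\) competitors costs only \(\log k\).

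This argument needs pairwise boundary directions \(u_{ij}^W\), not the per-row angles \(\angle(w_i,\mu_i)\) of Proposition~\ref{prop:limiting-geometry}. Accuracy is invariant under common row shifts while those angles are not. Also, that proposition gives the spectral angle only for \(\alpha>1\), whereas the theorem covers every \(\alpha\), including the hard edge \(\alpha=1\).

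Finally, in Step 2 the sample noise projected onto a \(k\)-dimensional span is not small when \(k\asymp d\). The reduction to the class-means problem works for a different reason. Averaging each class's constraints eliminates the residuals, since \(\sum_s e_{i,s}=\mathbf 0\). Meanwhile, witnesses built from the empirical means alone have small inner products with the residuals, because the means and residuals are independent.
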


\paragraph{Proof idea.}
The proof proceeds through the implicit biases of the optimizers. RowGD and RowGDM approach row-norm margin solutions, SpecGD and Muon approach spectral-norm margin solutions, and SignGD, Signum, and Adam approach max-norm margin solutions. Here, a \(\psi\)-norm margin solution minimizes \(\psi(W)\) among classifiers with training margin at least one; the formal solution sets are defined in Appendix~\ref{app:deterministic-margin}. Under the Gaussian-cloud model, these three solution sets have the distinct asymptotic classifier structures summarized in Proposition~\ref{prop:limiting-geometry}, and Gaussian pairwise-error bounds convert this geometric separation into the population-accuracy ordering.
Because Theorem~\ref{thm:det-optimizer-accuracy-order-clean} is proven through a uniform comparison over all norm-induced margin solutions approached by the optimizers, it does not require directional convergence of optimizers.

\begin{proposition}[Limiting classifier geometry]
\label{prop:limiting-geometry}
Let \(W_t^{\rm opt}\) have rows \((w_{i,t}^{\rm opt})^\top\), and let \(\Theta_t^{\rm opt}:=k^{-1}\sum_{i=1}^k\angle(w_{i,t}^{\rm opt},\mu_i)\) denote the average angle between the class mean and classifier row. 
Let \(M_\mu\in\mathbb R^{k\times d}\) have rows \(\mu_i^\top\), let \(P:=I_k-k^{-1}\mathbf1\mathbf1^\top\) and \(r_d:=\min\{k-1,d\}\), and let \(C_\mu:=PM_\mu\) have compact SVD \(U_\mu\Sigma_\mu V_\mu^\top\) and polar factor \(Q_\mu:=U_\mu V_\mu^\top\). Suppose Assumptions~\ref{ass:gaussian-cloud}, \ref{ass:asymptotic-regime}, \ref{ass:adam-init}, \ref{ass:lr-base}, and~\ref{ass:lr-momentum} hold. Then
\[
\begin{aligned}
\text{\rm RowGD/RowGDM:}\quad&
\limsup_{t\to\infty}\max_i\bigl\|\widehat w_{i,t}^{\rm opt}-\widehat\mu_i\bigr\|_2=o_{\mathbb P}(1),\\
\text{\rm SignGD/Signum/Adam:}\quad&
\limsup_{t\to\infty}\max_i\left\|\widehat w_{i,t}^{\rm opt}-\frac{\operatorname{sign}(\mu_i)}{\sqrt d}\right\|_2=o_{\mathbb P}(1),\\
\text{\rm SpecGD/Muon:}\quad&
\limsup_{t\to\infty}\frac1{r_d}\left\|\frac{PW_t^{\rm opt}}{\|PW_t^{\rm opt}\|_{\rm op}}-Q_\mu\right\|_{\rm F}^2=o_{\mathbb P}(1).
\end{aligned}
\]
Moreover, \(\limsup_{t\to\infty}|\Theta_t^{\rm opt}-\theta|=o_{\mathbb P}(1)\), where \(\theta=0\) for RowGD/RowGDM and \(\theta=\arccos\sqrt{2/\pi}\) for SignGD/Signum/Adam. 
If the aspect ratio \(\alpha>1\), then \(\theta=\arccos\kappa_{\rm sp}(\alpha)\) for SpecGD/Muon, where \(\kappa_{\rm sp}:(0,\infty)\to(0,1)\) is a positive constant depending only on \(\alpha\).\footnote{Its explicit formula is given in \eqref{eq:app-det-all-aspect-scales}. Numerically, \(\kappa_{\rm sp}(1)\approx0.84883\) and \(\kappa_{\rm sp}(2)\approx0.93280\).}
\end{proposition}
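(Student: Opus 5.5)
The plan has two layers. First, an implicit-bias reduction: every limit point of the normalized iterates lies in the relevant norm-margin solution set. Second, a purely geometric analysis of those solution sets under the Gaussian-cloud model. For the first layer I will use the normalized steepest-descent implicit-bias results of \citet{fan2026implicit}, extended to \(\psi_{\mathrm{row}}\), which is a standard mirror-type argument since \(\Delta_{\mathrm{row}}\) is a dual-norm maximizer. This gives \(\limsup_t \operatorname{dist}\bigl(W_t/\psi_g(W_t),\mathcal S_g\bigr)=0\). Here \(\mathcal S_g\) is the set of \(\psi_g\)-normalized minimizers of \(\psi_g(W)\) subject to \((w_i-w_j)^\top x_{i,s}\ge1\) for all \(i,s\) and \(j\neq i\). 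Adam is placed in the max-norm group as in \citet{fan2026implicit}, using Assumption~\ref{ass:adam-init}. All the claimed limsup statements then follow once the geometric conclusions hold uniformly over \(\mathcal S_g\). This is why no directional convergence is needed.

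For the geometric layer, the common tool is a concentration bundle holding with probability \(1-e^{-\sqrt d}\) on the event in Theorem~\ref{thm:det-optimizer-accuracy-order-clean}. It consists of three parts:
\begin{itemize}
\item \(\|\mu_i\|^2/d\to1\) and \(|\mu_i^\top\mu_j|/d=o(1)\) uniformly;
\item the noise cross terms \(\max_{i,s}|v^\top z_{i,s}|\) are \(O(\sqrt{\log N})\|v\|=o(\sqrt d)\|v\|\) for any fixed direction \(v\) built from the means, via \(\log N=o(d)\) and a net/union argument;
\item for max-norm candidates, bounds of the form \(\|\operatorname{sign}(\mu_i)\|_1/d\to\sqrt{2/\pi}\).
\end{itemize}
With this bundle, each \(\mathcal S_g\) is compared to a natural candidate, which is then shown to be near-optimal and, by a strong-convexity/stability argument, near-unique.
\begin{itemize}
\item \emph{Row norm.} The candidate is \(W=cM_\mu\). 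Its margins are \(\approx c\,d\) uniformly, so the optimum satisfies \(\psi_{\mathrm{row}}\approx 1/\sqrt d\). Conversely, for any feasible \(W\), averaging the margin constraints over samples of class \(i\) gives \((w_i-w_j)^\top\mu_i\ge 1-o(1)\|w_i-w_j\|\sqrt d\cdot o(1)\). A Cauchy–Schwarz tightness argument, using \(\max_i\|w_i\|\le (1+o(1))/\sqrt d\), then forces \(\widehat w_i\approx\widehat\mu_i\) uniformly in \(i\).
\item \emph{Max norm.} The same argument with \(\ell_\infty\)/\(\ell_1\) duality gives \(w_i\approx c\operatorname{sign}(\mu_i)\). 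The margin achieved is \(\approx c\|\mu_i\|_1\approx c\sqrt{2/\pi}\,d\), and near-equality in Hölder's inequality forces most entries to be saturated with the sign of \(\mu_i\). The angle \(\arccos\sqrt{2/\pi}\) then follows from \(\langle\operatorname{sign}(\mu_i),\mu_i\rangle/(\sqrt d\|\mu_i\|)\to\sqrt{2/\pi}\).
\item \emph{Spectral norm.} Only differences \(w_i-w_j\) matter, so I project by \(P\). The relaxed problem is to minimize \(\|PW\|_{\mathrm{op}}\) subject to \(\langle PW\text{-differences},\mu\rangle\) constraints. Its dual certificate is the nuclear-norm dual: \(\max\langle C_\mu,\Delta\rangle\) over \(\|\Delta\|_{\mathrm{op}}\le1\), attained at \(Q_\mu\). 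Near-equality \(\langle C_\mu,PW\rangle\approx\|C_\mu\|_*\|PW\|_{\mathrm{op}}\), combined with the spectrum of \(C_\mu\) being spread (Marchenko–Pastur, no vanishing bulk), gives the normalized Frobenius closeness to \(Q_\mu\). The average angle is then \(\frac1k\sum_i\langle Q_\mu e_i\text{-row},\widehat\mu_i\rangle\). This quantity concentrates to a Marchenko–Pastur integral \(\kappa_{\mathrm{sp}}(\alpha)=\int\sqrt{\lambda}\,d\rho_\alpha\big/\sqrt{\cdot}\), suitably normalized for row norms of \(Q_\mu\). The restriction \(\alpha>1\) makes the rows of \(Q_\mu\) have asymptotically equal norm, since \(Q_\mu Q_\mu^\top\approx P\).
\end{itemize}

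The main obstacle is the uniform-over-solution-set stability step. It must convert approximate optimality of the objective into closeness of directions, uniformly over \(i\) and in the presence of \(N\) noisy constraints rather than only \(k\) mean constraints. This is cleanest for the row norm, because the problem decouples per row. For the spectral norm it is hardest: nuclear/operator duality gives only an aggregate Frobenius statement, which is exactly why the proposition states the spectral claim in normalized Frobenius form and the angle result only for \(\alpha>1\). There I would first try to quantify near-equality in \(\langle C_\mu,X\rangle\le\|C_\mu\|_*\|X\|_{\mathrm{op}}\) via \(\|C_\mu\|_*-\langle C_\mu,X\rangle\ge \sigma_{\min\text{-bulk}}\cdot\frac12\|X-Q_\mu\|_F^2\), restricted to the range, using a lower bound on the bulk singular values.
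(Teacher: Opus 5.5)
Your two-layer plan (direction-set convergence to the complete margin faces, then geometry holding uniformly on those faces via averaged constraints and norm duality) matches the paper's. The gap is in the spectral stability step, which breaks down at \(\alpha=1\), a case the proposition includes. At that aspect ratio the Marchenko--Pastur law has its lower edge at \(0\), and the limiting singular-value density of \(C_\mu\) is positive there. So no \(\sigma_{\min\text{-bulk}}\) stays bounded away from zero, and your inequality \(\|C_\mu\|_*-\langle C_\mu,X\rangle\ge\frac{\sigma_{\min}}2\|X-Q_\mu\|_{\mathrm F}^2\) gives nothing. The missing idea is a truncation. Writing \(a_\ell:=1-u_\ell^\top Xv_\ell\in[0,2]\), one has \(r_d^{-1}\sum_\ell a_\ell\le 2r_d^{-1}\#\{\ell:\sigma_\ell\le\tau\}+(\tau r_d)^{-1}\sum_\ell\sigma_\ell a_\ell\). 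This must be paired with a high-probability hard-edge count \(r_d^{-1}\#\{\ell:\sigma_\ell\le t\}\le Ct\), valid down to a vanishing scale. Choosing \(\tau\) of order the square root of the deficit then gives Frobenius closeness at the slower rate \(\text{deficit}^{1/2}\). You also need the other half of the near-equality. The averaged constraint only gives \(\|PW\|_{\mathrm{op}}\ge(1-o(1))k/\|C_\mu\|_*\). The matching upper bound requires showing that the polar witness satisfies all \(N(k-1)\) sample constraints with margin \(\approx\|C_\mu\|_*/k\). That is a flatness statement, obtained from Haar concentration of the diagonal and off-diagonal entries of \((C_\mu C_\mu^\top)^{1/2}\).

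There are two further problems. First, your explanation of the \(\alpha>1\) angle is wrong. For \(\alpha>1\) we have \(r_d=d<k-1\), so \(Q_\mu Q_\mu^\top\) is a rank-\(d\) projection with trace \(d\), not \(\approx P\); the identity \(Q_\mu Q_\mu^\top=P\) holds exactly when \(\alpha<1\). In units with \(\|\mu_i\|_2\approx1\), Haar-projector concentration gives \(\|q_i\|_2^2\approx1/\alpha\) and \(q_i^\top\mu_i\approx s_\alpha=m_{1/2}(\alpha)/\sqrt\alpha\). It is this row norm that makes the cosine equal to \(m_{1/2}(\alpha)=\kappa_{\mathrm{sp}}(\alpha)\); with unit row norms you would get \(m_{1/2}(\alpha)/\sqrt\alpha\). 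Second, in none of the three cases do you control the common row shift \(\bar w:=k^{-1}\sum_iw_i\), to which the constraints are blind. From \((w_i-w_j)^\top m_i\ge1\) and \(\max_i\|w_i\|_2\le R^*\), Cauchy--Schwarz cannot force \(w_i\approx\mu_i\) until \(w_j^\top\mu_i\) is bounded. The paper handles this separately in each geometry:
\begin{itemize}
\item \emph{Row:} it sums over \(j\ne i\) to get \((w_i-\bar w)^\top m_i\ge1-1/k\), then uses the variance identity \(k^{-1}\sum_i\|w_i-\bar w\|_2^2=k^{-1}\sum_i\|w_i\|_2^2-\|\bar w\|_2^2\) to force \(\|\bar w\|_2\) small.
\item \emph{Max:} it first proves Frobenius sign rigidity of \(W/t_{\max}\) from \(\langle W,C_m\rangle\ge k-1\) and max--sum duality. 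This uses a small-ball count of tiny coordinates, which your H\"older near-equality step also needs. It then bounds the average row by Jensen before arguing row by row.
\item \emph{Spectral angles:} it uses \(\|W/a\|_{\mathrm F}^2=\|PW/a\|_{\mathrm F}^2+k\|\bar w/a\|_2^2\le d\), with \(a=\|W\|_{\mathrm{op}}=\|PW\|_{\mathrm{op}}\). This passes from the centered statement about \(PW\) to the uncentered rows \(w_i\) that appear in \(\Theta_t\).
\end{itemize}
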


\paragraph{Underlying mechanism.}
Proposition~\ref{prop:limiting-geometry} makes the source of the accuracy gap explicit. RowGD and RowGDM preserve each class-mean direction, driving \(w_i\) toward a positive multiple of \(\mu_i\). In contrast, SignGD, Signum, and Adam approach the coordinate-wise sign direction of each class mean, effectively equalizing coordinate magnitudes.
SpecGD and Muon approach the polar factor of the centered class-mean matrix, effectively equalizing its singular values. Both transformations introduce nonvanishing directional distortion relative to the original class-mean geometry. 
Under isotropic Gaussian test noise, preserving the class-mean geometry leads to better separation of unseen examples, while the distortions induced by the other methods degrade this separation. This yields the population-accuracy advantage of the row-normalized methods in Theorem~\ref{thm:det-optimizer-accuracy-order-clean}.

\begin{corollary}[Population-error exponents]
\label{cor:det-optimizer-error-exponents}
Let \(\mathcal E_{t,d}^{\mathrm{opt}} := 1-\operatorname{Acc}_{\pi,d}(W_t^{\mathrm{opt}})\). 
Suppose Assumptions~\ref{ass:gaussian-cloud}, \ref{ass:asymptotic-regime}, \ref{ass:adam-init}, \ref{ass:lr-base}, and~\ref{ass:lr-momentum} hold. 
Then
\begin{align*}
\limsup_{t\to\infty}\left|\frac{r_{\mathrm{te}}^2}{d}\log \! \frac{\mathcal E_{t,d}^{\mathrm{Adam}}}{\mathcal E_{t,d}^{\mathrm{RowGDM}}}-\frac{\pi-2}{4\pi}\right|=o_{\mathbb P}(1), \ \ \
\liminf_{t\to\infty}\frac{r_{\mathrm{te}}^2}{d}\log \! \frac{\mathcal E_{t,d}^{\mathrm{Muon}}}{\mathcal E_{t,d}^{\mathrm{RowGDM}}}\ge\frac{1-\kappa_{\rm sp}(\alpha)^2}{4} \! - \!o_{\mathbb P}(1),
\end{align*}
where \(\kappa_{\rm sp}\) is the constant as in Proposition~\ref{prop:limiting-geometry}. 
The same conclusion holds with RowGDM replaced by RowGD, Adam by SignGD or Signum, and Muon by SpecGD.
\end{corollary}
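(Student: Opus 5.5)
We prove the corollary by computing, for each limiting geometry of Proposition~\ref{prop:limiting-geometry}, the large-deviation exponent of the population error under isotropic Gaussian test noise. The error is then compared across optimizers on the scale \(d/r_{\mathrm{te}}^2\).

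\emph{Step 1: reduce the error to pairwise errors.} Fix a classifier \(W\) with rows \(w_i\). Conditional on \(Y=i\), the event \(w_j^\top X^{\mathrm{te}}\ge w_i^\top X^{\mathrm{te}}\) has probability
\[
\bar\Phi\!\left(\frac{(w_i-w_j)^\top\mu_i}{r_{\mathrm{te}}\|w_i-w_j\|_2}\right).
\]
By the union bound and the single-pair lower bound,
\[
\max_{i,j}\pi_{i,d}\,p_{ij}\;\le\;\mathcal E\;\le\;\sum_{i,j}\pi_{i,d}\,p_{ij}.
\]
Since \(\log k=O(\log d)\) and \(\log(1/\pi_{\min,d})=o(d)\), we get
\[
\frac{r_{\mathrm{te}}^2}{d}\log\mathcal E=-\tfrac12\min_{i\ne j}\frac{\gamma_{ij}^2}{d}+o(1),
\qquad
\gamma_{ij}:=\frac{(w_i-w_j)^\top\mu_i}{\|w_i-w_j\|_2},
\]
using \(\log\bar\Phi(x)=-x^2/2+O(\log x)\). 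If the test signal-to-noise \(d/r_{\mathrm{te}}^2\) stays bounded, this step needs care. Because \(r_{\mathrm{te}}\le r_{\mathrm{te},0}\), we have \(d/r_{\mathrm{te}}^2\to\infty\), so the Gaussian tail asymptotics apply uniformly. The error terms are \(O(\log d\cdot r_{\mathrm{te}}^2/d)=o(1)\).

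\emph{Step 2: evaluate the normalized margins \(\gamma_{ij}^2/d\) for each geometry.} By concentration over the \(k^2\) pairs of i.i.d. Gaussian means (Gaussian Lipschitz concentration with a union bound), \(\mu_i^\top\mu_j/d=o(1)\) and \(\|\mu_i\|^2/d=1+o(1)\) uniformly.

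For RowGDM, the rows satisfy \(\widehat w_i=\widehat\mu_i+o(1)\) uniformly. The scales of the \(w_i\) are equal up to \(1+o(1)\): the row-norm margin solution has all rows active, which we would read off from the margin characterization in Appendix~\ref{app:deterministic-margin}. This gives \(\gamma_{ij}^2/d\to\tfrac12\), so the exponent is \(-1/4\).

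For Adam, SignGD and Signum, \(w_i\propto\operatorname{sign}(\mu_i)\). Then
\[
\operatorname{sign}(\mu_i)^\top\mu_i/\sqrt d=\sqrt d\,\mathbb E|g|+o(\sqrt d)=\sqrt{2d/\pi},
\]
the cross terms \(\operatorname{sign}(\mu_j)^\top\mu_i\) are \(o(d)\), and \(\|\operatorname{sign}\mu_i-\operatorname{sign}\mu_j\|^2/d\to2\). Hence \(\gamma^2/d\to1/\pi\) and the exponent is \(-1/(2\pi)\). The difference of exponents is
\[
\tfrac14-\tfrac1{2\pi}=\tfrac{\pi-2}{4\pi},
\]
which gives the first (two-sided) statement.

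\emph{Step 3: the Muon bound (the main obstacle).} Only an average Frobenius closeness to \(Q_\mu\) is available, so pairwise margins are not controlled uniformly. We therefore prove only a one-sided bound: the Muon error is large, which requires exhibiting a single bad pair (or many bad pairs) via the lower bound \(\mathcal E\ge\max_{i,j}\pi_{i,d}\,p_{ij}\). The plan is as follows.

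1. Relate \(\gamma_{ij}\) to the cosine \(\cos\angle(q_i-q_j,\mu_i-\mu_j)\), where \(q_i\) are the rows of \(Q_\mu\). Note that \(P\) removes only a common row shift, which cancels in differences \(w_i-w_j\).
2. Use the Frobenius closeness and Markov's inequality to find pairs whose rows are within \(o(1)\) of \(q_i,q_j\).
3. Show that the average alignment \(\langle Q_\mu,C_\mu\rangle/(\|Q_\mu\|_{\mathrm F}\|C_\mu\|_{\mathrm F})\to\kappa_{\rm sp}(\alpha)\), the ratio of the Marchenko--Pastur first moment of singular values to the root second moment.
4. Convert this average into the existence of a pair \((i,j)\) with \(\gamma_{ij}^2/d\le\kappa_{\rm sp}^2/2+o(1)\), by averaging \(\gamma_{ij}\) over pairs and taking the minimum.

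Pairwise minimization then gives a Muon exponent of at most \(-\kappa_{\rm sp}^2/4+o(1)\), hence a ratio of at least \((1-\kappa_{\rm sp}^2)/4\).

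\emph{Step 4: uniformity in \(t\).} Take \(\limsup/\liminf\) over \(t\) by applying the above along subsequences, using the uniform-in-\(t\) \(o_{\mathbb P}(1)\) geometry from Proposition~\ref{prop:limiting-geometry}. The arguments are unchanged with RowGD, SignGD/Signum, or SpecGD in place of RowGDM, Adam, or Muon, since they share the same limiting geometries.
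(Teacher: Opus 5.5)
\paragraph{Overall route.} Your overall route matches the paper's (Lemma~\ref{lem:app-det-exponent-reduction} and Proposition~\ref{prop:app-det-error-exponents}). You reduce $\frac{r_{\mathrm{te}}^2}{d}\log\mathcal E$ to the smallest directed-pair signal. You show this signal is $1/\sqrt2$ for Row and $1/\sqrt\pi$ for Max (in normalized coordinates). For the spectral family you exhibit one bad unordered pair and split $(u_{ij}^W)^\top(\mu_i-\mu_j)$ into its two directed signals.

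\paragraph{Main gap: the Muon step.} The Muon part has a genuine gap between your items~3 and~4. The alignment in item~3 is a ratio of sums: for centered matrices, $\sum_{i\ne j}(q_i-q_j)^\top(\mu_i-\mu_j)=2k\langle Q_\mu,C_\mu\rangle$ and $\sum_{i\ne j}\|q_i-q_j\|_2^2=2k\|Q_\mu\|_{\mathrm F}^2$. A bad pair, however, requires control of the pairwise ratios. Let $B$ be the centered, operator-normalized classifier and $v_{ij}=b_i-b_j$. The only exact link is $\sum_{i\ne j}\|v_{ij}\|_2\,(u_{ij}^W)^\top\mu_i=k\langle B,C_\mu\rangle$, so the minimal directed signal is at most $k\langle B,C_\mu\rangle/\sum_{i\ne j}\|v_{ij}\|_2$. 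You therefore need the pair norms to be nearly constant.
\begin{itemize}
\item For $\alpha\le1$ this is automatic, because $\|v_{ij}\|_2\le\sqrt2$ while their mean square tends to $2$. Your argument then closes.
\item For $\alpha>1$, the deterministic bound $\|v_{ij}\|_2\le\sqrt2$ only gives $\min_{i\ne j}(u^W_{ij})^\top\mu_i\le\sqrt\alpha\,\kappa_{\rm sp}(\alpha)/\sqrt2+o(1)$. This is too weak, and it becomes vacuous as $\alpha$ grows.
\end{itemize}
The missing input is genuinely pairwise: $\|q_i-q_j\|_2=\sqrt{2/\alpha}+o(1)$ and $(q_i-q_j)^\top(\mu_i-\mu_j)=2s_\alpha+o(1)$ for (almost) all pairs. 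These come from Haar invariance of the left singular frame of the centered Gaussian mean matrix, via entrywise concentration of $Q_\mu Q_\mu^\top$ and $Q_\mu C_\mu^\top$. They do not follow from the Marchenko--Pastur singular-value law. The paper packages this as item~\ref{item:app-det-S4} of Proposition~\ref{prop:app-det-spectral-input}. That item yields $D_{\mathrm{avg\text{-}bdry}}\ge\delta_{\rm sp}(\alpha)-o(1)$ uniformly on the spectral face (Theorem~\ref{thm:deterministic-separation-clean}), and hence a pair whose cosine is at most $\kappa_{\rm sp}(\alpha)+o(1)$.

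\paragraph{Secondary gap: row scales.} The two-sided Row/Max computation also has a smaller gap. Proposition~\ref{prop:limiting-geometry} controls only the normalized rows $\widehat w_i$, but $u^W_{ij}$ also depends on the ratio $\|w_i\|_2/\|w_j\|_2$. Unequal row scales push $\min_{i\ne j}(u^W_{ij})^\top\mu_i$ strictly below $1/\sqrt2$ or $1/\sqrt\pi$, which would break the two-sided Adam statement.
\begin{itemize}
\item For Row you defer this to the appendix. Proposition~\ref{prop:app-det-row-rigidity}, giving $\max_i\|w_i-m_i\|_2\to0$ on the whole face, indeed supplies it.
\item For Max you silently use a common constant in $w_i\propto\operatorname{sign}(\mu_i)$. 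Justifying that needs the sharp radius $r^*_{\max}(X)=(1+o_{\mathbb P}(1))\sqrt{\pi/(2d)}$ and the deficiency/small-ball argument of Proposition~\ref{prop:app-det-sharp-directed-signals}. Together these give $\max_i\|w_i/r^*_{\max}(X)-\operatorname{sign}(\mu_i)\|_2/\sqrt d=o_{\mathbb P}(1)$ uniformly over the Max face.
\end{itemize}
These face-uniform bounds then transfer to the trajectories through Lemma~\ref{lem:app-acc-trajectory-continuity}.
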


Corollary~\ref{cor:det-optimizer-error-exponents} follows from Proposition~\ref{cor:app-det-optimizer-error-exponents} in Appendix~\ref{app:deterministic-margin}.
While Theorem~\ref{thm:det-optimizer-accuracy-order-clean} establishes a strict ordering of the limiting population accuracies, Corollary~\ref{cor:det-optimizer-error-exponents} quantifies the separation.
Although all three population errors vanish as \(d\to\infty\), RowGDM has exponentially smaller population error than Adam and Muon. This separation is captured by the logarithms of the error ratios on the scale \(d/r_{\mathrm{te}}^2\).

\subsection{Robustness to Stochastic Training}
\label{subsec:stochastic-clean}
\label{subsec:rr-adamproxy-clean}

Mini-batch updates can alter the implicit bias of an optimizer, so the full-batch ordering does not automatically extend to stochastic training. Under RR, however, sufficiently large momentum keeps RR-RowSGDM, RR-Muon, and RR-Signum close to their respective row-, spectral-, and max-norm margin solutions \citep{li2026implicit}. We show that this approximate preservation is enough for the strict population-accuracy ordering to persist.

\begin{proposition}[Stochastic optimizer accuracy ordering]
\label{thm:stoch-optimizer-accuracy-order-clean}
Suppose Assumptions~\ref{ass:gaussian-cloud}, \ref{ass:asymptotic-regime}, \ref{ass:bounded-minibatches}, \ref{ass:lr-base}, and~\ref{ass:lr-momentum} hold. Then there exists \(\beta_0=\beta_0(\alpha,r_0,m_0)\in[0,1)\) such that, for every sufficiently large \(d\), with probability at least \(1-e^{-\sqrt d}\) over the training data, the following holds for all momentum parameters \(\beta \in [\beta_0,1)\) of RR-RowSGDM, RR-Muon, and RR-Signum, and for every realization of the reshuffling:
\begin{equation}
\liminf_{t\to\infty}\operatorname{Acc}_{\pi,d}\!\left(W_t^{\mathrm{RR\text{-}RowSGDM}}\right)
>
\max_{\mathrm{opt}\in\{\mathrm{RR\text{-}Muon},\,\mathrm{RR\text{-}Signum}\}}
\limsup_{t\to\infty}\operatorname{Acc}_{\pi,d}\!\left(W_t^{\mathrm{opt}}\right).
\label{eq:stoch-optimizer-accuracy-order-clean}
\end{equation}
\end{proposition}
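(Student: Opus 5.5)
We would follow the same template as Theorem~\ref{thm:det-optimizer-accuracy-order-clean}, replacing exact convergence to margin solutions by \emph{approximate} convergence with an error that is controlled by \(1-\beta\). The starting point is the random-reshuffling implicit-bias result of \citet{li2026implicit}, extended to the row norm exactly as in the deterministic case. Under Assumptions~\ref{ass:bounded-minibatches}, \ref{ass:lr-base}, and~\ref{ass:lr-momentum}, and for every realization of the reshuffling, it should give the following. For each \(g\in\{\mathrm{row},\mathrm{sp},\max\}\), every limit point \(\bar W\) of \(W_t/\psi_g(W_t)\) satisfies
\begin{equation}
\min_{n}\bigl(\bar w_{y_n}-\bar w_j\bigr)^\top x_n\ \ge\ (1-\epsilon(\beta,m_0))\,\gamma_g\qquad\text{for all } j\ne y_n,
\end{equation}
where \(\gamma_g\) is the \(\psi_g\)-max margin and \(\epsilon(\beta,m_0)\to0\) as \(\beta\to1\) uniformly in \(d\). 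The key steps are then as follows.

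First, we verify that the constants in the RR bias bound are dimension-free. They depend on the data only through quantities such as \(\max_n\|x_n\|\) measured in the dual norm, relative to \(\gamma_g\). On the event of probability \(1-e^{-\sqrt d}\) used in the deterministic proof, these ratios are bounded by functions of \((\alpha,r_0)\). This bounding is where \(\beta_0=\beta_0(\alpha,r_0,m_0)\) originates.

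Second, we prove a stability version of Proposition~\ref{prop:limiting-geometry}: every \(\epsilon\)-approximate \(\psi_g\)-margin solution lies within distance \(\delta(\epsilon)+o_{\mathbb P}(1)\) of the limiting geometry, with \(\delta(\epsilon)\to0\). We argue this separately for each norm.
\begin{itemize}[leftmargin=*]
\item For the row norm, the margin problem decouples enough that near-optimality forces \(\widehat w_i\approx\widehat\mu_i\). This follows by strong convexity of the Euclidean ball, since the noise terms \(r z_{i,s}\) contribute only \(o(1)\) correlations in high dimension.
\item For the max-norm, the same argument goes through with the \(\ell_\infty\) ball. Near-optimality forces most coordinates to saturate at \(\operatorname{sign}(\mu_i)\), which gives the limit \(\operatorname{sign}(\mu_i)/\sqrt d\) up to \(\delta(\epsilon)\).
\item For the spectral norm, we use a quantitative uniqueness of the polar factor as the maximizer of \(\langle C_\mu,\cdot\rangle\) over the operator-norm ball. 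This uniqueness is controlled by the smallest singular value of \(C_\mu\), which is bounded below for \(\alpha\ne1\) by Marchenko--Pastur. For \(\alpha\) near one, we instead work with the Frobenius-averaged distance stated in the proposition and avoid pointwise control.
\end{itemize}

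Third, we convert geometry into accuracy using the Gaussian pairwise-error bounds from the deterministic proof. These bounds are continuous in the normalized rows. Corollary~\ref{cor:det-optimizer-error-exponents} gives a gap of order \(d/r_{\mathrm{te}}^2\) in the log-error exponents, namely \((\pi-2)/(4\pi)\) and \((1-\kappa_{\rm sp}^2)/4\). Perturbing the directions by \(\delta\) changes each exponent by \(O(\delta)\) on the same scale. We therefore pick \(\beta_0\) so that \(\delta(\epsilon(\beta_0,m_0))\) is below a fixed fraction of \(\min\{(\pi-2)/(4\pi),(1-\kappa_{\rm sp}(\alpha)^2)/4\}\). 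This makes the liminf accuracy of RR-RowSGDM exceed the limsup accuracies of RR-Muon and RR-Signum for large \(d\).

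The main obstacle is the second step for the spectral norm, together with the uniformity in \(d\). An \(\epsilon\)-suboptimal spectral solution may differ from \(Q_\mu\) substantially on directions with small singular values, so the stability modulus must be averaged correctly to feed into the accuracy bound. We would first try bounding \(\|\bar W-Q_\mu\|_F^2/r_d\) by \(\epsilon\cdot\operatorname{tr}(\Sigma_\mu)/(r_d\,\sigma_{\min}\text{-type average})\) using the identity \(\langle C_\mu,Q_\mu-\bar W\rangle\ge\tfrac12\sum_i\sigma_i\|(Q_\mu-\bar W)v_i\|^2\). The Marchenko--Pastur law keeps this average bounded.
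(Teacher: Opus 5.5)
Your skeleton matches the paper's: a random-reshuffling near-margin bound with a dimension-free relative deficit, then stability of near-minimum-norm solutions in each geometry, then Gaussian pairwise transfer and trajectory continuity. However, several links fail as written.

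\textbf{The Row step.} You conclude only $\widehat w_i\approx\widehat\mu_i$ and then assert that the pairwise-error bounds are continuous in the normalized rows. They are not. The error of a pair depends on $u_{ij}^W=(w_i-w_j)/\|w_i-w_j\|_2$, and per-row rescaling changes it. For example, $w_i=\mu_i$, $w_j=\mu_j/2$ has exactly aligned normalized rows, yet $(u_{ji}^W)^\top\mu_j\approx1/\sqrt5$. That is a squared signal of $0.2<1/\pi$, so this classifier is exponentially worse than the Max face. What you need is the unnormalized bound $\max_i\|w_i-m_i\|_2\le C(\omega_d+\xi)^{1/4}$, uniform in $i$ and in pairs, for every feasible $W$ with $\|W\|_{\mathrm{row}}\le(1+\xi)r^*_{\mathrm{row}}(X)$. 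Here $m_i$ are the empirical means and $\omega_d\to0$ is the data-concentration error. This bound follows from two facts:
\begin{itemize}
\item the class-averaged constraint $(w_i-\bar w)^\top m_i\ge1-1/k$;
\item the variance identity $\frac1k\sum_i\|w_i-\bar w\|_2^2=\frac1k\sum_i\|w_i\|_2^2-\|\bar w\|_2^2$.
\end{itemize}
The variance identity also removes the common shift $\bar w$, which your ``decoupling'' heuristic ignores.

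\textbf{Step one.} You name the right quantity, a dual-norm radius. But extending the RR analysis ``exactly as in the deterministic case'' yields the max--sum constant $R_1(X)=\max_n\|x_n\|_1\asymp\sqrt d$. Then $R_1/\gamma_{\mathrm{row}}$ and $R_1/\gamma_{\mathrm{sp}}$ grow like $\sqrt d$, and $\beta_0$ would depend on $d$. To get $\varepsilon_\psi/\gamma_\psi\le2(1-\beta)m_0(m_0^2-1)K_\psi$ you must redo the momentum analysis with the atom radius $R_\psi(X)=\max_{n,c}\psi^*(e_cx_n^\top)$. That requires:
\begin{itemize}
\item per-sample bounds $\psi^*(\nabla\ell_n)\le2R_\psi(1-p_{n,y_n})$;
\item softmax stability via $\|Ax_n\|_\infty\le R_\psi\psi(A)$;
\item the deterministic epoch cancellation $\psi^*(E_t)\le(1-\beta)m(m^2-1)R_\psi\mathcal G$.
\end{itemize}
This is the main new ingredient of the stochastic proof, and you assert it rather than derive it.

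\textbf{The spectral step at $\alpha=1$.} No bounded ``$\sigma_{\min}$-type average'' exists there. The limiting singular-value density of $C_m$ is positive at $0$, so $\frac1r\sum_\ell\sigma_\ell^{-1}\to\infty$. The weighted deficit $\sum_\ell\sigma_\ell a_\ell\le\epsilon r$, with $a_\ell=1-u_\ell^\top Bv_\ell$, can sit entirely on the $\asymp\sqrt\epsilon\,r$ smallest singular values. So this inequality cannot give a modulus linear in $\epsilon$. Use truncation instead:
\[
\frac1r\sum_\ell a_\ell\le\frac{2\,\#\{\sigma_\ell\le\tau\}}{r}+\frac{1}{\tau r}\sum_\ell\sigma_\ell a_\ell,
\]
together with a hard-edge count $\#\{\sigma_\ell\le\tau\}/r\lesssim\tau$. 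This gives an $\epsilon^{1/2}$ Frobenius modulus and hence an $\epsilon^{1/4}$ boundary modulus. Since $\epsilon$ is fixed here, a fixed-threshold count suffices. Work with $C_m$, since the summed constraints $\langle PW,C_m\rangle\ge k-1$ involve $C_m$, not $C_\mu$. You also need $\inf_{i\ne j}\|q_i-q_j\|_2\ge c_\alpha$ to pass from Frobenius distance to an average boundary gap.

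\textbf{The Max step.} Strong convexity is unavailable for the $\ell_\infty$ ball. Your saturation mechanism can be made rigorous, but the paper's route is per pair and simpler:
\begin{itemize}
\item near-optimality caps $\|w_i-w_j\|_\infty\le(\sqrt{2\pi}+Cq)/\sqrt d$;
\item feasibility forces $(w_i-w_j)^\top u_{ij}^m\ge\sqrt2-Cq$;
\item the clipped minimum-energy problem then bounds the cosine by $\kappa_{\max}+Cq<1$.
\end{itemize}
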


The condition \(m_d\le m_0\) in Assumption~\ref{ass:bounded-minibatches} means that each mini-batch contains at least a \(1/m_0\) fraction of the training set. For \(m_0 \ge 2\), the sufficient momentum threshold used in the proof has \(1-\beta_0\) proportional to \(1/[m_0(m_0^2-1)]\).

\paragraph{Approximating Adam via AdamProxy.}
Adam requires separate treatment under RR because averaging squared mini-batch gradients generally differs from squaring the averaged gradient, so its full-batch max-norm margin characterization need not persist \citep{baek2026implicit}. To address this, we develop a deterministic proxy for RR-Adam in the multiclass setting with general mini-batch size, extending the binary incremental AdamProxy construction of \citet{baek2026implicit}.
Using the flattened indexing \(\{(x_n,y_n)\}_{n=1}^N\), let \(g_n(W):=-(e_{y_n}-\mathbb S(Wx_n))x_n^\top\) denote the gradient of the cross-entropy loss for sample \((x_n,y_n)\). For each batch \(B\subset[N]\) of size \(b\), set \(G_B(W):=b^{-1}\sum_{n\in B}g_n(W)=\nabla L_B(W)\). Thus, under RR, \(G_t=G_{B_t}(W_t)\) in \eqref{eq:adam-update-clean}.

Drawing \(B\) uniformly from the \(b\)-subsets of \([N]\), define the AdamProxy direction
\begin{equation}
\Phi_{N,b}(W):=\mathbb E_B[G_B(W)]\oslash\mathbb E_B\!\left[G_B(W)^{\odot2}\right]^{\odot1/2}.
\label{eq:adamproxy-direction-clean}
\end{equation}
Equivalently, letting \(S(W):=\sum_{n=1}^N g_n(W)\), \(Q(W):=\sum_{n=1}^N g_n(W)^{\odot2}\),
\begin{equation}
\Phi_{N,b}(W) =
\sqrt{\frac{b(N-1)}{N}}\, S(W)\oslash \left(  (b-1) S(W)^{\odot2}+ (N-b) Q(W)\right)^{\odot1/2}.
\label{eq:adamproxy-direction-explicit-clean}
\end{equation}
Proposition~\ref{thm:rr-adamproxy-epoch-high-memory} shows that, in the $\beta_1, \beta_2 \to 1$ regime, this deterministic direction approximates the normalized one-epoch displacement of RR-Adam.

\begin{proposition}[One-epoch \(\beta_1,\beta_2\to1\) RR-Adam approximation]
\label{thm:rr-adamproxy-epoch-high-memory}
For each epoch \(r=0,1,2,\ldots\), let \(\tau_r:=mr\) denote the index of its first update and \(\gamma_r:=\sum_{s=0}^{m-1}\eta_{\tau_r+s}\) its total learning rate. Fix a training set with \(x_n[a]\neq0\) for all \(n\in[N]\) and \(a\in[d]\). 
Suppose that \(b<N\) and that Assumptions~\ref{ass:lr-base} and~\ref{ass:lr-momentum} hold. Then, for every \(\varepsilon>0\), RR-Adam in \eqref{eq:adam-update-clean} satisfies
\begin{equation}
\lim_{\substack{(\beta_1,\beta_2)\to(1,1)\\
0<\beta_1\le\beta_2<1}}
\limsup_{r\to\infty}
\mathbb P\left(
\left\|
\frac{W_{\tau_{r+1}}^{\mathrm{RR\text{-}Adam}}-W_{\tau_r}^{\mathrm{RR\text{-}Adam}}}{\gamma_r}
+\Phi_{N,b}\!\left(W_{\tau_r}^{\mathrm{RR\text{-}Adam}}\right)
\right\|_{\max}>\varepsilon
\right)=0,
\label{eq:rr-adamproxy-one-epoch-limit-clean}
\end{equation}
where the probability is over the random reshuffling.
\end{proposition}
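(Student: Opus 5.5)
We will analyze a single epoch \(r\) directly, with the epoch's permutation \(\sigma_r\) being the only source of randomness, and exploit that \(\eta_t\to0\) forces all iterates within the epoch to stay close to \(W_{\tau_r}\).

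\emph{Step 1: freeze the gradients.} Write \(W:=W_{\tau_r}\). Each Adam step has max-norm at most \(\eta_t\cdot C(\beta_1,\beta_2)\), since \(|M_{t+1}|/V_{t+1}^{1/2}\le(1-\beta_1)/\sqrt{1-\beta_2}\cdot\sum_s \beta_1^s\beta_2^{-s/2}\), which is finite for \(\beta_1^2<\beta_2\). Hence the regime \(\beta_1\le\beta_2\) has to be handled near \((1,1)\); if that fails we use the cruder bound coming from the geometric weights. Consequently, over any window of length \(K\) preceding \(\tau_r\), the iterates move by \(O(\sum\eta)\to0\). Because the per-sample gradients \(g_n\) are Lipschitz on bounded sets (they are built from softmax times \(x_n\)), we have \(G_t=G_{B_t}(W)+o(1)\) uniformly over a look-back window of \(K\) epochs. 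This is where Assumption~\ref{ass:lr-momentum} enters: it controls the exponentially weighted look-back, in the same way as in the momentum analyses of \citet{li2026implicit}. The difficulty is that \(W\) may diverge (\(\|W\|\to\infty\)), so the Lipschitz constants must be uniform. For this we use that \(\nabla_W\) of the softmax is bounded by \(O(\|x_n\|^2)\) independent of \(W\), so \(g_n\) is globally Lipschitz.

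\emph{Step 2: effective averages as \(\beta\to1\).} With frozen gradients, \(M_{t+1}\approx(1-\beta_1)\sum_{s\ge0}\beta_1^s G_{B_{t-s}}(W)\). The batches \(B_{t-s}\) come from many independent epochs, each a uniformly random partition. As \(\beta_1\to1\) the effective memory \(1/(1-\beta_1)\) spans many epochs, so a law of large numbers over independent epochs gives \(M_{t+1}\to\mathbb E_B[G_B(W)]\) in probability. Here we use that the marginal of each batch is a uniform \(b\)-subset, and that the weight concentrated on any single epoch is \(O(m(1-\beta_1))\to0\). Similarly \(V_{t+1}\to\mathbb E_B[G_B(W)^{\odot2}]\). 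Since \(W\) itself drifts across those many epochs, we need the memory length \(1/(1-\beta)\) times the drift to vanish. This follows from \(\eta_t\to0\) by taking \(r\to\infty\) first, for fixed \(\beta\). The order of limits in \eqref{eq:rr-adamproxy-one-epoch-limit-clean} (\(\limsup_r\) inside) is exactly what permits this.

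\emph{Step 3: the ratio and the epoch sum.} The denominator is bounded below. Indeed, \(\mathbb E_B[G_B^{\odot2}]\ge\frac{N-b}{b(N-1)N}\,Q(W)\cdot\mathrm{const}\) by \eqref{eq:adamproxy-direction-explicit-clean} since \(b<N\), and \(Q(W)_{c,a}=\sum_n (e_{y_n}-\mathbb S(Wx_n))_c^2x_n[a]^2\). Using \(x_n[a]\ne0\), this is positive, but it may tend to zero as \(W\) grows and the losses vanish. We handle this by homogeneity: both numerator and denominator scale with the softmax residuals. The ratio \(\Phi\) is \(0\)-homogeneous in a common rescaling, and the concentration errors from Step 2 are relative errors. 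We therefore express them as relative, i.e., we bound \(|M-\mathbb E G|\le\varepsilon\sqrt{\mathbb E G^{\odot2}}\) via Cauchy–Schwarz over the weighted sum of batches. This relative control near vanishing gradients is the main obstacle, and it is where we would spend the effort. Finally, summing the \(m\) steps of epoch \(r\) gives \(W_{\tau_{r+1}}-W_{\tau_r}=-\sum_s\eta_{\tau_r+s}(\Phi_{N,b}(W)+o_{\mathbb P}(1))\). Dividing by \(\gamma_r\) yields \eqref{eq:rr-adamproxy-one-epoch-limit-clean}.
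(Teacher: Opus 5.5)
Your outline has the same architecture as the paper's proof: freeze the memories at $W_{\tau_r}$, concentrate the exponential averages as $\beta\to1$, pass through the Adam ratio, and sum over the epoch. However, Step~1 as written fails, and its repair is exactly the ``relative control'' you postpone to Step~3.

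Global Lipschitz continuity of $g_n$ gives only an \emph{additive} freezing error, of the size of the drift. That drift is only polynomially small: order $\eta_{\tau_r}$ within the epoch, and order $\sum_{u=\tau_r-q}^{\tau_r-1}\eta_u$ at lag $q$. By contrast, the common scale $\sigma_\rho(W_{\tau_r})$ of $M[\rho]$ and $\sqrt{V[\rho]}$ is built from $p_{n,c}(W_{\tau_r})$ and $1-p_{n,y_n}(W_{\tau_r})$, and can be exponentially small in the margin along a diverging trajectory. An absolute $o(1)$ error can therefore swamp numerator and denominator alike. The $0$-homogeneity of $\Phi_{N,b}$ does not help, because an additive perturbation is not a common rescaling.

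The missing idea is multiplicative stability. With $h:=\|U-W\|_{\mathrm F}\max_n\|x_n\|_2$, every $p_{n,c}$, and also $1-p_{n,y_n}$, changes by a factor in $[e^{-2h},e^{2h}]$. Each coordinate of $g_n$ is $x_n[a]$ times one of these quantities, so $|g_n(U)[\rho]-g_n(W)[\rho]|\le(e^{2h}-1)|g_n(W)[\rho]|$.

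To make this relative to $\sigma_\rho(W)$ rather than per-sample, you need that for $b<N$ the map $z\mapsto\sigma(z)$ is a norm on $\mathbb R^N$. Then $b^{-1}\sum_{n\in B}|z_n|\le K_{N,b}\,\sigma(z)$ for every batch. This norm equivalence, not mere positivity of the denominator, is where $b<N$ is essential; $x_n[a]\neq0$ only guarantees $\sigma_\rho>0$. Assumption~\ref{ass:lr-momentum} is then applied to $\sum_q\beta^q\bigl(\exp(C\sum_{u=\tau_r-q}^{\tau_r-1}\eta_u)-1\bigr)$, so it controls an exponentiated drift rather than a linear one.

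Step~2 has a second, smaller gap. $W_{\tau_r}$ is a function of the same permutations that generate the batches $B_{t-s}$ in the memory, so a law of large numbers at a fixed $W$ cannot simply be evaluated at $W=W_{\tau_r}$. The paper resolves this using linearity: $G_B(W)[\rho]=T_Bz$ with $z=z^\rho(W)\in\mathbb R^N$. It then proves concentration of the exponential average of $Y_t(z)=T_{B_t}z/\sigma(z)$, and of $Y_t(z)^2$, simultaneously for all $z$ on the compact $\sigma$-unit sphere. This uses Chebyshev for each $z$, with variance $O(1-\beta)$ from independent mean-zero epoch blocks, followed by a finite net. The failure probability $\omega_\delta(\beta)\to0$ does not depend on $r$, which is what the order $\limsup_r$ then $\beta\to1$ requires.

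This uniform, normalized statement is also what yields the relative bounds $|M[\rho]-\mu_1[\rho]|\le\delta\sigma_\rho$ and $|V[\rho]-\mu_2[\rho]|\le\delta\sigma_\rho^2$ needed before invoking continuity of $a/\sqrt v$. Cauchy--Schwarz only bounds $|M|/\sqrt V$, not these deviations. (Minor: $\beta_1\le\beta_2<1$ already forces $\beta_1^2<\beta_2$, so no fallback bound is needed in Step~1.)
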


Proposition~\ref{thm:rr-adamproxy-epoch-high-memory} is a one-epoch $\beta_1, \beta_2 \to 1$ approximation at fixed data and dimension. It does not establish long-horizon agreement between the RR-Adam and AdamProxy trajectories.

Motivated by Proposition~\ref{thm:rr-adamproxy-epoch-high-memory}, let $\gamma_t:=\sum_{s=0}^{m-1}\eta_{mt+s}$ and consider the AdamProxy dynamics
\begin{equation}
W_{t+1}^{\mathrm{AP}}
=
W_t^{\mathrm{AP}}
-
\gamma_t\Phi_{N,b}\!\left(W_t^{\mathrm{AP}}\right).
\label{eq:adamproxy-dynamics-clean}
\end{equation}

\paragraph{Comparison of AdamProxy and RR-RowSGDM.}
Although AdamProxy need not approach the exact sign geometry of full-batch Adam, it retains a weaker coordinate-wise saturation effect.
Once the classifier is scaled to have unit minimum training margin, every coordinate eventually stays within a constant multiple of the typical coordinate size (Proposition~\ref{prop:app-adamproxy-boundary-separation}). 
This forces a nonvanishing distortion from the class-mean geometry, whereas high-momentum RR-RowSGDM remains close to the class-mean geometry, yielding the population-accuracy ordering below.

\begin{proposition}[RR-RowSGDM versus AdamProxy accuracy ordering]
\label{thm:adamproxy-optimizer-accuracy-order-clean}
Suppose Assumptions~\ref{ass:gaussian-cloud}, \ref{ass:asymptotic-regime}, \ref{ass:bounded-minibatches}, \ref{ass:lr-base}, and~\ref{ass:lr-momentum} hold. Then there exists \(\beta_1=\beta_1(\alpha,r_0,m_0)\in[0,1)\) such that, for every sufficiently large \(d\), with probability at least \(1-e^{-\sqrt d}\) over the training data, the following holds for all momentum parameters \(\beta\in[\beta_1,1)\) of RR-RowSGDM and for every realization of the reshuffling:
\begin{equation}
\liminf_{t\to\infty}\operatorname{Acc}_{\pi,d}\!\left(W_t^{\mathrm{RR\text{-}RowSGDM}}\right)
>
\limsup_{t\to\infty}\operatorname{Acc}_{\pi,d}\!\left(W_t^{\mathrm{AP}}\right).
\end{equation}
\end{proposition}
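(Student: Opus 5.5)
The plan mirrors the proof of Theorem~\ref{thm:det-optimizer-accuracy-order-clean}: reduce both sides to statements about the geometry of the normalized rows, then convert geometric separation into an accuracy gap via Gaussian pairwise-error bounds.

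\textbf{Lower bound for RR-RowSGDM.} I first invoke the random-reshuffling implicit-bias result of \citet{li2026implicit}, extended to the row norm exactly as for Proposition~\ref{thm:stoch-optimizer-accuracy-order-clean}. For \(\beta\ge\beta_1\), every limit point of \(W_t/\|W_t\|_{\mathrm{row}}\) then has normalized row-norm margin at least \((1-\epsilon(\beta,m_0))\) times the optimal row-norm margin, with \(\epsilon\to0\) as \(\beta\to1\). Under the Gaussian-cloud model, with probability \(1-e^{-\sqrt d}\), the row-norm margin problem is nearly solved by \(W\propto M_\mu\). The reasons are that \(\|\mu_i\|^2\approx d\), \(\mu_i^\top\mu_j=O(\sqrt{d\log k})\), and the noise cross terms are \(O(r_0\sqrt{d\log N})\), all uniformly over \(i,j,n\) because \(\log N=o(d)\). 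A strong-convexity-type argument on each row then shows that near-optimal margin forces \(\max_i\|\widehat w_i-\widehat\mu_i\|\le\delta(\epsilon)+o_{\mathbb P}(1)\). Plugging this into the pairwise error \(\mathbb P\bigl((w_j-w_i)^\top X^{\mathrm{te}}>0\mid Y=i\bigr)\), a Gaussian tail in \(\langle w_i-w_j,\mu_i\rangle/(r_{\mathrm{te}}\|w_i-w_j\|)\), and summing with a union bound gives an error exponent within \(\delta'(\beta)\) of the optimal constant \(1/4\) on the scale \(d/r_{\mathrm{te}}^2\).

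\textbf{Upper bound for AdamProxy.} Here I use Proposition~\ref{prop:app-adamproxy-boundary-separation}: after rescaling to unit minimum training margin, every coordinate of every limit point \(W\) satisfies \(c\,\tau\le|W_{i,a}|\le C\tau\), where \(\tau\) is the typical coordinate size and \(c,C\) depend only on \(\alpha,r_0,m_0\). This saturation caps the alignment, giving \(\cos\angle(w_i,\mu_i)\le\kappa_{\mathrm{AP}}<1\) uniformly. The reason is that \(\mu_i\) has Gaussian coordinates, whose empirical magnitude distribution is non-degenerate, while any vector with coordinates confined to \([c\tau,C\tau]\) has cosine with \(\mu_i\) bounded by a constant strictly below one. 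This follows from an LLN over coordinates together with a Cauchy--Schwarz defect computation.

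The misalignment then yields a lower bound on pairwise error. Choose a pair \((i,j)\) with sufficiently negatively correlated distortion, or average over pairs, to obtain an error exponent at most \((1-\kappa_{\mathrm{AP}}^2)\)-worse than \(1/4\). Equivalently, \(\liminf \frac{r_{\mathrm{te}}^2}{d}\log(\mathcal E^{\mathrm{AP}}/\mathcal E^{\mathrm{Row}})\ge c_*>0\), where \(c_*\) is independent of \(\beta\). Finally I choose \(\beta_1\) so that \(\delta'(\beta_1)<c_*/2\). For large \(d\), the strict inequality follows on the intersection of the high-probability events.

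\textbf{Main obstacle.} The hardest step is turning the coordinate box constraint into a uniform, dimension-free misalignment constant for \emph{every} limit point of AdamProxy, for all rows simultaneously. The saturation constants must not degrade with \(d\), and the lower bound on error must survive the max over classes with nonuniform \(\pi\). I would handle the priors through \(\log(1/\pi_{\min,d})=o(d)\), which makes the priors negligible at exponential scale.
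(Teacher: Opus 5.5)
Your architecture matches the paper's. You use the geometry-adapted random-reshuffling momentum bound to get a dimension-free near-row-margin set, convert a coordinate cap for AdamProxy into an angular gap, and transfer via Gaussian pairwise errors with $\log(1/\pi_{\min,d})=o(d)$ absorbing the priors. The AdamProxy misalignment step, however, does not work as written.

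Proposition~\ref{prop:app-adamproxy-boundary-separation}, via Proposition~\ref{prop:app-adamproxy-coordinate-cap}, gives only an \emph{upper} cap: $\limsup_t\|W_t^{\mathrm{AP}}/\operatorname{mar}_X(W_t^{\mathrm{AP}})\|_{\max}\le K_\kappa(X)r_{\max}^*(X)^2\lesssim C_{\mathrm{AP}}/\sqrt d$. No lower bound $|W_{i,a}|\ge c\tau$ is stated or proved, so the two-sided box $[c\tau,C\tau]$ on which your Cauchy--Schwarz defect argument rests is not available. An upper cap alone bounds no angle, since $\epsilon\mu_i$ satisfies every cap and has cosine one with $\mu_i$. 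What excludes this is the unit training margin, which your misalignment argument never invokes.

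A second problem is that you bound row angles $\angle(w_i,\mu_i)$, whereas population accuracy depends only on the differences $w_i-w_j$. Adding a common vector to all rows changes every row angle but no prediction and no margin. So a row-wise cosine cap implies no accuracy loss, and ``choose a pair with negatively correlated distortion'' does not close this.

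The missing idea handles both issues at once by working with pairwise differences under the margin constraint. Take $V$ in the capped feasible family $\mathcal C_d$ and set $v:=v_i-v_j$. The cap gives $\|v\|_\infty\le a_{\mathrm{AP}}/\sqrt d$. Averaging the class-$i$ and class-$j$ constraints gives $v^\top(m_i-m_j)\ge2$, hence $v^\top u^m_{ij}\ge\sqrt2-o(1)$. The clipped minimum-energy problem (Proposition~\ref{prop:app-det-clipped-exact}, Lemma~\ref{lem:app-adamproxy-clipped-barrier}) then shows that every such $v$ has $v^\top u^m_{ij}/\|v\|_2\le C_{\mathrm{clip}}(a_{\mathrm{AP}})+o(1)<1$. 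The reason is that $u^m_{ij}$ has Gaussian-like coordinates, so a fixed fraction of the coordinates of $\sqrt2\,u^m_{ij}$ exceed $a_{\mathrm{AP}}/\sqrt d$ and must be clipped. This holds uniformly over all ordered pairs and all of $\mathcal C_d$. Together with item (H4) of Proposition~\ref{prop:app-det-common-event}, it yields the pairwise gap $\delta_{\mathrm{AP}}$ fed into Proposition~\ref{prop:app-acc-finite-bounds}.

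A smaller point on the Row side: a small $\max_i\|\widehat w_i-\widehat\mu_i\|$ does not control $u^W_{ij}$ when row norms differ, and a per-row argument cannot see the common shift $\bar w$. The paper instead bounds $\max_i\|w_i-m_i\|_2$ for the unit-margin representative. It uses $(w_i-\bar w)^\top m_i\ge1-1/k$ and the row variance identity to make $\bar w$ small.
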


\section{Extensions to Anisotropic Gaussian Models}
\label{sec:anisotropic-extension}

In Section~\ref{sec:main-results}, we studied an isotropic Gaussian-cloud data model in which both the class means and the within-class noise are isotropic. 
In this section, we examine when the advantage of row-normalized methods persists or reverses beyond isotropy.
Motivated by the nontrivial covariance structure that can arise in language-model representations, we extend the analysis to anisotropic Gaussian distributions.

To simplify the analysis and cover a broader range of anisotropy, we consider clean training, with no within-class noise in the training data, under the following common data model.

\begin{assumption}[Anisotropic clean Gaussian model]
\label{ass:aniso-clean}
Let \(\Gamma,\Sigma_1,\ldots,\Sigma_k\) be positive-definite covariance matrices with \(\operatorname{tr}\Gamma=\operatorname{tr}\Sigma_i=d\) for every \(i\in[k]\).\footnote{A fixed trace preserves total variance while allowing covariance spectra and orientations to vary.} Training uses the clean class signals \(\{(\mu_i,i)\}_{i=1}^k\). A test example with label \(Y\) is \(X^{\mathrm{te}}:=\mu_Y+r_{\mathrm{te}}z^{\mathrm{te}}\), where, conditionally on the covariance matrices, \((Y,z^{\mathrm{te}})\) is independent of \((\mu_i)_{i=1}^k\) and
\begin{equation}
\mu_i\overset{\mathrm{iid}}{\sim}\mathcal N({\mathbf 0},\Gamma),\qquad
\mathbb P(Y=i)=\pi_{i,d}>0,\qquad
z^{\mathrm{te}}\mid Y\sim\mathcal N({\mathbf 0},\Sigma_Y).
\label{eq:aniso-training-model}
\end{equation}
Along the sequence indexed by \(d\), \(k/d\to\alpha\in(0,\infty)\) and \(r_{\mathrm{te}}>0\) satisfies \(r_{\mathrm{te},0}:=\sup_d r_{\mathrm{te}}(d)<\infty\).
\end{assumption}

\subsection{Persistence of Row-Norm Advantage under Independent Orientations}

We first specialize Assumption~\ref{ass:aniso-clean} to trace-normalized power-law covariances with independent Haar eigenbases.
Suppressing dimension subscripts, define \(T_\vartheta:=\sum_{a=1}^d a^{-\vartheta}\) and \(D_\vartheta:=\operatorname{diag}(1,2^{-\vartheta},\ldots,d^{-\vartheta})\). For \(s_\mu,s_\Sigma\ge0\), set
\begin{equation}
\Gamma:=\frac{d}{T_{s_\mu}}U_\mu D_{s_\mu}U_\mu^\top,
\qquad
\Sigma_i:=\frac{d}{T_{s_\Sigma}}O_iD_{s_\Sigma}O_i^\top.
\label{eq:aniso-covariances}
\end{equation}
Here \(U_\mu,O_1,\ldots,O_k\) are mutually independent Haar matrices on \(\mathsf O(d)\); the class-mean draws are independent of \((O_i)_{i=1}^k\), and the optimizer initializations are fixed and do not depend on \((O_i)_{i=1}^k\).

\begin{theorem}[Accuracy ordering under anisotropy with independent orientations]
\label{thm:aniso-dd-ordering}
Let \(\mathcal O_{\mathrm{row}}\) and \(\mathcal O_{\mathrm{comp}}\) be as in \eqref{eq:det-optimizer-groups}, with fixed \(\beta\in(0,1)\) for RowGDM, Muon, and Signum. Suppose Assumptions~\ref{ass:aniso-clean}, \ref{ass:adam-init}, \ref{ass:lr-base}, and~\ref{ass:lr-momentum} hold, with the covariances in \eqref{eq:aniso-covariances} for \(0\le s_\mu<1\) and \(s_\Sigma\ge0\). Assume that \(\log(1/\pi_{\min,d})=o(d/\log d)\) and that \(W_t^{\mathrm{opt}}/\|W_t^{\mathrm{opt}}\|_{\mathrm F}\) converges almost surely as \(t\to\infty\) for every \(\mathrm{opt}\in\mathcal O_{\mathrm{row}}\cup\mathcal O_{\mathrm{comp}}\). Then there exist constants \(c,C>0\), independent of \(d\), such that, for every sufficiently large \(d\), with probability at least \(1-Cd^{-c}\) over \(U_\mu\), \(O_1,\ldots,O_k\), and the class means,
\begin{equation}
\min_{\mathrm{opt}\in\mathcal O_{\mathrm{row}}}\liminf_{t\to\infty}\operatorname{Acc}_{\pi,d}\!\left(W_t^{\mathrm{opt}}\right)
>
\max_{\mathrm{opt}\in\mathcal O_{\mathrm{comp}}}\limsup_{t\to\infty}\operatorname{Acc}_{\pi,d}\!\left(W_t^{\mathrm{opt}}\right).
\label{eq:aniso-det-optimizer-ordering-clean-data}
\end{equation}
\end{theorem}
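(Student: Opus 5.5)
Under the almost-sure directional-convergence assumption, I would first identify the limiting directions of the optimizers through the margin solutions, as in the isotropic case. The training set is now the clean set of class signals \(\{(\mu_i,i)\}_{i=1}^k\), so every optimizer in \(\mathcal O_{\mathrm{row}}\cup\mathcal O_{\mathrm{comp}}\) converges in direction to a \(\psi_g\)-margin solution of the problem
\[
\min \psi_g(W)\quad\text{s.t.}\quad (w_i-w_j)^\top\mu_i\ge1\ \ \text{for all } i\ne j .
\]
This uses the normalized steepest-descent implicit-bias results invoked in the proof of Theorem~\ref{thm:det-optimizer-accuracy-order-clean}, applied with \(n_i=1\). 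Accuracy is scale invariant and continuous in the direction away from degenerate configurations, so the \(\liminf/\limsup\) reduce to the accuracy of these limit points.

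\textbf{Limiting geometries.} For the row norm I would show that \(w_i\approx c\,\mu_i\) up to a small centering correction. The Gram matrix of \((\mu_i)\) has diagonal concentrating at \(\operatorname{tr}\Gamma=d\) and off-diagonal entries of size \(\|\Gamma\|_{\mathrm F}\). Since \(s_\mu<1\), the effective rank \(d^2/\|\Gamma\|_{\mathrm F}^2\) grows linearly (up to logs) in \(d\), so the constraints are nearly decoupled. Each row then solves its own Euclidean problem, whose solution is \(\widehat\mu_i\) to leading order.

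For the max norm, the row is \(\operatorname{sign}(\mu_i)/\sqrt d\) up to \(o(1)\); here I would reuse the isotropic argument coordinate-wise. Because \(U_\mu\) is Haar, the coordinates of \(\mu_i\) are approximately Gaussian with variance \(\Gamma_{aa}\approx1\), by delocalization of Haar eigenvectors.

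For the spectral norm, the limit is the polar factor \(Q_\mu\) of \(PM_\mu\). The angle between \(Q_\mu\)'s rows and \(\mu_i\) follows from free-probability spectra of \(M_\mu\), which has anisotropic Gaussian rows.

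In each case I would record the resulting alignment constant \(\kappa_g=\langle\widehat w_i,\widehat\mu_i\rangle\): \(1\) for the row norm, roughly \(\sqrt{2/\pi}\) for the max norm, and \(\kappa<1\) for the spectral norm.

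\textbf{Accuracy comparison under anisotropic test noise.} Conditionally on \(O_i\), the pairwise error \(\mathbb P\bigl((w_j-w_i)^\top X^{\mathrm{te}}>0\mid Y=i\bigr)\) is a Gaussian tail. Its exponent is
\[
\frac{\bigl((w_i-w_j)^\top\mu_i\bigr)^2}{2r_{\mathrm{te}}^2\,(w_i-w_j)^\top\Sigma_i(w_i-w_j)} .
\]
Because \(O_i\) is Haar and independent of \(W\) (the classifier depends only on the means and on a fixed initialization), Hanson--Wright-type concentration on the sphere gives \(v^\top\Sigma_iv=\|v\|^2\bigl(1\pm O(d^{-c})\bigr)\) for all \(k^2\) difference vectors simultaneously. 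This holds with probability \(1-Cd^{-c}\), and it holds uniformly in \(s_\Sigma\) since \(\|D_{s_\Sigma}\|_{\mathrm F}\cdot d/T_{s_\Sigma}\) is \(o(d)\) for any \(s_\Sigma\ge0\) at least up to a polynomial factor. This is where the weaker polynomial probability comes from.

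Each error is thereby reduced to the isotropic-noise expression with exponent proportional to \(d\kappa_g^2\). Summing over \(j\) and over \(i\) with weights \(\pi_{i,d}\) sandwiches \(1-\operatorname{Acc}\) between \(\exp\bigl(-(1\pm o(1))\,d\kappa_g^2/(4r_{\mathrm{te}}^2)\bigr)\) up to factors \(k\) and \(1/\pi_{\min,d}\). Those factors are absorbed by \(\log(1/\pi_{\min,d})=o(d/\log d)\), which also leaves room for the \(\log d\) losses from the union bound. The strict gap in \(\kappa_g\) then yields the ordering.

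\textbf{Main obstacle.} I expect the hardest step to be the row-norm limit under an anisotropic \(\Gamma\). For \(s_\mu\) close to \(1\), the off-diagonal Gram entries are only polynomially smaller than \(d\). I must show that the perturbation of the exact margin solution is \(o(1)\) in direction and, more delicately, that it costs only \(o(d)\) in the error exponent. My first attempt would be a KKT argument: use dual variables supported near the uniform pattern, show that \(w_i=\mu_i/\|\mu_i\|^2+\text{correction}\) is feasible, and compare norms via \(\|M_\mu M_\mu^\top-\operatorname{diag}\|_{\mathrm{op}}/d\to0\). That bound holds because \(\|\Gamma\|_{\mathrm{op}}=d/T_{s_\mu}=O(d^{s_\mu})=o(d)\).
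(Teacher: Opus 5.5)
\textbf{There is a genuine gap, in the treatment of \(s_\Sigma\ge1\).} Three parts of your proposal match the paper: the reduction to clean-data margin faces, the observation that the limiting directions are independent of \(O_1,\ldots,O_k\), and the Gaussian pairwise-error sandwich. For \(0\le s_\Sigma<1\) your argument is essentially the paper's (Lemma~\ref{lem:app-aniso-clean-sharp-common}(i) with Proposition~\ref{prop:app-aniso-clean-quantified}(i)).

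The failing step is the claim that \(v^\top\Sigma_iv=\|v\|_2^2(1\pm O(d^{-c}))\) holds simultaneously over all \(\Theta(d^2)\) pair directions for \emph{every} \(s_\Sigma\ge0\).
\begin{itemize}
\item The quantity you invoke is \(\|\Sigma_i\|_{\mathrm F}=d\,T_{2s_\Sigma,d}^{1/2}/T_{s_\Sigma,d}\), since \(\|D_{s_\Sigma}\|_{\mathrm F}=T_{2s_\Sigma,d}^{1/2}\). It is \(\Theta(d)\) for \(s_\Sigma>1\) and only \(\Theta(d/\log d)\) at \(s_\Sigma=1\).
\item The sub-exponential parameter \(\|\Sigma_i\|_{\mathrm{op}}/d=1/T_{s_\Sigma,d}\) is not small either. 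For \(s_\Sigma>1\) the top eigendirection \(h_i=O_ie_1\) carries a fixed fraction of the trace.
\item For a unit \(u\) independent of \(O_i\), \(u^\top\Sigma_iu\approx T_{s_\Sigma,d}^{-1}\sum_rr^{-s_\Sigma}g_r^2\). For \(s_\Sigma>1\) this is a nondegenerate random variable, not \(1+o(1)\).
\item Its maximum over one classifier's pair directions sits about \(4c_d\) above its mean, where \(c_d=\log d/T_{s_\Sigma,d}\). This is \(\Theta(1)\) at \(s_\Sigma=1\) and \(\Theta(\log d)\) for \(s_\Sigma>1\).
\end{itemize}
So the errors do not reduce to the isotropic expression with exponent proportional to \(d\kappa_g^2\). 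Each classifier's error is governed by its worst pair in the ratio of squared signal to test-noise energy, and the Row classifier's worst pair can have energy of order \(c_d\). This regime is exactly where the hypothesis \(\log(1/\pi_{\min,d})=o(d/\log d)\) is needed: the Row exponent itself is only of order \(d/\log d\) when \(s_\Sigma>1\), so the hypothesis is not there to absorb union-bound losses. It is also where the polynomial rate comes from; for \(s_\Sigma<1\) the failure probability is \(e^{-d^\chi}\), not \(d^{-c}\).

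\textbf{The missing idea} is a matching lower bound on the competitors' extreme energies, on pairs where their signal is small. The signal gap alone cannot win. If the low-signal pairs of a Spectral or Max classifier had only typical energy, its squared standardized signal would be of order \(d\), far above the Row lower bound of order \(d/\log d\), and no ordering would follow. The paper supplies this in three steps.
\begin{itemize}
\item It bounds the Row worst-case energy by \(1+(4+2b+\varepsilon)c_d\) via Laurent--Massart and a union bound, with failure probability \(O(d^{-b})\).
\item For each competitor it extracts a constant fraction \(\mathcal G_d^\psi\) of ordered pairs with three properties: their directions lie within \(1/\log d\) of contrasts of unit codes \(a_i^\psi\); the codes have coherence \(o(1/\log d)\); and the squared signals are at most \((\vartheta_\psi+\varepsilon)/2\).
\item Write \(h_i=\xi_i/\|\xi_i\|_2\) with \(\xi_i\sim\mathcal N({\mathbf 0},I_d)\). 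A second-moment count of pairs with \(\xi_i^\top a_i^\psi\ge\sqrt{2q\log d}\) and \(\xi_i^\top a_j^\psi\le-\sqrt{2q\log d}\) produces such a pair with energy at least \(4qc_d\), plus \(1-o(1)\) at \(s_\Sigma=1\). This holds except with probability \(O(d^{-a})\) for any \(a<1-q\).
\end{itemize}
The comparison then requires \(\vartheta_\alpha(1+b/2)<q<1-c\) for some \(b>c\), where \(\vartheta_\alpha=\max\{\kappa_{\mathrm{sp}}(\alpha)^2,2/\pi\}\). This is exactly why the exponent must satisfy \(c<2(1-\vartheta_\alpha)/(2+\vartheta_\alpha)\). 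Your per-row alignment constants \(\kappa_g\) do not carry this pairwise code structure, so this step is absent rather than merely abbreviated.

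\textbf{Your anticipated main obstacle rests on a false bound.} Already for \(\Gamma=I_d\), Marchenko--Pastur gives \(\|M_\mu M_\mu^\top-\operatorname{diag}(M_\mu M_\mu^\top)\|_{\mathrm{op}}\approx(2\sqrt\alpha+\alpha)d\), which is not \(o(d)\). Row rigidity needs only entrywise near-orthonormality of the means. It follows from the averaged constraint \((w_i-\bar w)^\top\mu_i\ge1-1/k\) and the row variance identity, as in Proposition~\ref{prop:app-det-row-rigidity}, so that step is the easy one.
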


Here \(c,C\) can be chosen independently of \(s_\mu\) and \(s_\Sigma\), although the sufficiently-large-\(d\) threshold may depend on them. In particular, under independent test-noise orientations, the advantage of row-normalized methods persists for arbitrary test-noise anisotropy \(s_\Sigma\ge0\).

\begin{remark}[Role of directional convergence]
Unlike the accuracy comparisons in Section~\ref{sec:main-results}, which hold uniformly over all classifier families that the normalized iterates approach and therefore do not require directional convergence, Theorem~\ref{thm:aniso-dd-ordering} assumes directional convergence for each optimizer.
Related directional-convergence assumptions are common in the implicit-bias literature \citep{gunasekar2018implicit, nacson2019convergence, chizat2020implicit, baek2026implicit}.
Under clean training, conditional on the training data, the limit of \(W_t^{\mathrm{opt}}/\|W_t^{\mathrm{opt}}\|_{\mathrm F}\) is fixed and independent of the test-noise eigenbases, which enter only at test time. 
Hence, the concentration bounds over the Haar-random test-noise eigenbases apply to the fixed limiting direction.
\end{remark}

\paragraph{Structural intuition: effective signal dimension.}
The condition \(s_\mu<1\) keeps the class-distinguishing signal sufficiently spread across dimensions, so different class means have weak pairwise overlap. This allows the row-norm margin solutions to remain close to the corresponding class-mean directions, while satisfying the multiclass margin constraints. 
If the signal is instead concentrated in only a few directions, some class means can have small pairwise angles, and margin constraints can push the row-norm margin solutions away from the class-signal-preserving geometry.

\begin{remark}[Connection to language-model representations]
We find that the fitted class-mean spectral exponents can be smaller than one (\(s_\mu<1\)) in language-model representations; see Appendix~\ref{app:language-model-represen}. This provides a possible connection between the anisotropic regime covered by Theorem~\ref{thm:aniso-dd-ordering} and representation geometry in language models.
\end{remark}

\subsection{Coordinate-aligned Reversal and Recovery under Haar Rotation}
\label{subsec:coordinate-reversal}

We next show that the advantage of row-normalized methods can reverse when the class-mean and test-noise covariances are diagonal and sufficiently close, but is restored after applying the same independently sampled Haar rotation to the class means and test-noise covariance.

The underlying issue is the interaction between signal and test-noise geometry. With a common test-noise covariance \(\Sigma\), the normal of the Bayes boundary between classes \(i\) and \(j\) is proportional to \(\Sigma^{-1}(\mu_i-\mu_j)\). Thus, if directions that strongly separate the class means also have large test variance, the Bayes boundary can differ substantially from the raw class-mean difference that the row-norm margin solution tends to preserve.
The construction below places this signal--noise alignment along the coordinate axes used by Adam's entrywise normalization, where Adam's tendency to equalize coordinate magnitudes can reduce the influence of such high-variance coordinates.

Let the class-mean covariance be diagonal and let the common test-noise covariance interpolate between isotropic noise and the same covariance profile:
\begin{equation}
\Gamma_d:=\operatorname{diag}(\gamma_{1,d},\ldots,\gamma_{d,d}),
\qquad
\Sigma_{\lambda,d}:=(1-\lambda)I_d+\lambda\Gamma_d,
\qquad \lambda\in[0,1].
\label{eq:coordinate-aligned-covariances}
\end{equation}
Thus, \(\Sigma_{\lambda, d}\) interpolates between isotropic test noise at \(\lambda=0\) and the class-mean covariance at \(\lambda=1\).

\begin{proposition}[Coordinate dependence of the RowGDM--Adam ordering]
\label{prop:main-coordinate-reversal}
Let \(\lambda\in[0,1]\). 
Suppose that Assumptions~\ref{ass:aniso-clean}, \ref{ass:adam-init}, \ref{ass:lr-base}, and~\ref{ass:lr-momentum} hold with \(\Gamma=\Gamma_d\) and \(\Sigma_1=\cdots=\Sigma_k=\Sigma_{\lambda,d}\) as in \eqref{eq:coordinate-aligned-covariances}.
Suppose also that \(\log(1/\pi_{\min,d})=o(d)\), \(0<\inf_{a,d}\gamma_{a,d}\le\sup_{a,d}\gamma_{a,d}<\infty\), and the limits \(\bar\gamma_q:=\lim_{d\to\infty}d^{-1}\sum_{a=1}^d\gamma_{a,d}^q\) exist \(\forall q\in\{1/2,2\}\) and satisfy \(2\bar\gamma_{1/2}^2\bar\gamma_2>\pi\).\footnote{This requires a sufficiently dispersed spectrum. Corollary~\ref{cor:app-coordinate-dispersed-profiles} gives a sufficient condition.}
Then \(\lambda_c:=(\pi/(2\bar\gamma_{1/2}^2)-1)/(\bar\gamma_2-1)\) lies in \((0,1)\), and the following hold.\footnote{The same conclusions hold with RowGDM replaced by RowGD.}
\begin{enumerate}[label=(\roman*),leftmargin=2em,nosep]
    \item \emph{Diagonal covariance.} For every \(\lambda\in[0,1]\setminus\{\lambda_c\}\), there exist \(c_\lambda,C_\lambda>0\) such that, for every sufficiently large \(d\), with probability at least \(1-C_\lambda e^{-c_\lambda d}\) over the class means, for all \(\beta \in (0,1)\),
    \[
        \begin{aligned}
        \liminf_{t\to\infty}\operatorname{Acc}_{\pi,d}\!\left(W_t^{\mathrm{RowGDM}}\right)
        &>
        \limsup_{t\to\infty}\operatorname{Acc}_{\pi,d}\!\left(W_t^{\mathrm{Adam}}\right)
        &&\text{if }\lambda<\lambda_c,\\
        \liminf_{t\to\infty}\operatorname{Acc}_{\pi,d}\!\left(W_t^{\mathrm{Adam}}\right)
        &>
        \limsup_{t\to\infty}\operatorname{Acc}_{\pi,d}\!\left(W_t^{\mathrm{RowGDM}}\right)
        &&\text{if }\lambda>\lambda_c.
        \end{aligned}
    \]
    \item \emph{Recovery under Haar rotation.} Let \(Q\) be Haar-uniformly random on \(\mathsf{SO}(d)\) and independent of the class means, and replace every training and test input \(x\) by \(Qx\). There exist \(c_{\mathrm{rot}},C_{\mathrm{rot}}>0\) such that, for every fixed \(\lambda\in[0,1]\) and every sufficiently large \(d\), with probability at least \(1-C_{\mathrm{rot}}e^{-c_{\mathrm{rot}}d}\) over the class means and \(Q\), the RowGDM-over-Adam ordering holds.
\end{enumerate}
\end{proposition}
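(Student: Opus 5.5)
The plan is to reduce both parts to a comparison of pairwise Gaussian margins at the limiting classifiers. With clean training, the training set is exactly \(\{(\mu_i,i)\}\) and \(k/d\to\alpha\), so the implicit-bias step used for Theorem~\ref{thm:det-optimizer-accuracy-order-clean} applies unchanged. RowGDM approaches row-norm margin solutions and Adam approaches max-norm margin solutions. I would first show that, with probability \(1-Ce^{-cd}\) over \(\mu_i=\Gamma_d^{1/2}g_i\), every row-norm margin solution has rows \(\widehat w_i=\widehat\mu_i+o(1)\), and every max-norm margin solution has rows \(\widehat w_i=\operatorname{sign}(\mu_i)/\sqrt d+o(1)\), uniformly in \(i\).

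The argument for these row structures is the same as in the isotropic case. Pairwise inner products \(\mu_i^\top\mu_j/d\) are \(O(\sqrt{\log k/d})\), and \(\operatorname{sign}(\mu_i)^\top\mu_j\) is similarly small. So the candidates \(W=c\,M_\mu\) and \(W=c\operatorname{sign}(M_\mu)\) are feasible and near-optimal. Strong convexity of the unit balls in the relevant directions (for the row norm) and a sign-saturation argument (for the max norm) then force every margin solution close to them. Bounded \(\gamma_{a,d}\) keeps all these concentration constants uniform.

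Next, compute the pairwise error exponents. For a test point of class \(i\), the margin \((w_i-w_j)^\top X^{\mathrm{te}}\) is Gaussian with mean \((w_i-w_j)^\top\mu_i\) and variance \(r_{\mathrm{te}}^2(w_i-w_j)^\top\Sigma_\lambda(w_i-w_j)\). Since the rows are asymptotically orthogonal, the exponent is governed by \(\rho:=\bigl((w_i-w_j)^\top\mu_i\bigr)^2/\bigl((w_i-w_j)^\top\Sigma_\lambda(w_i-w_j)\bigr)\).

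For RowGDM, the mean term is \(\approx\|\mu_i\|^2\approx d\), and the variance term is \(\approx\mu_i^\top\Sigma_\lambda\mu_i+\mu_j^\top\Sigma_\lambda\mu_j\approx 2d\bigl((1-\lambda)+\lambda\bar\gamma_2\bigr)\). This gives
\[
\rho_{\mathrm{row}}\approx \frac{d}{2\bigl(1+\lambda(\bar\gamma_2-1)\bigr)}.
\]
For Adam, the mean term is \(\operatorname{sign}(\mu_i)^\top\mu_i/\sqrt d\approx\sqrt d\,\sqrt{2/\pi}\,\bar\gamma_{1/2}\), and the variance term is \(\approx 2\operatorname{tr}\Sigma_\lambda/d=2\). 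This gives
\[
\rho_{\mathrm{Adam}}\approx \frac{d\,\bar\gamma_{1/2}^2}{\pi}.
\]
Setting the two exponents equal gives \(1+\lambda(\bar\gamma_2-1)=\pi/(2\bar\gamma_{1/2}^2)\), which is exactly \(\lambda_c\). The condition \(2\bar\gamma_{1/2}^2\bar\gamma_2>\pi\), together with \(\bar\gamma_{1/2}^2\le\bar\gamma_1=1\), places \(\lambda_c\) in \((0,1)\).

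The accuracy comparison then follows from union and Gaussian-tail bounds. The error is sandwiched between \(\pi\)-weighted sums of pairwise tails \(\exp(-\rho(1\pm o(1))/2r_{\mathrm{te}}^2)\); the lower bound uses a single pair plus weak dependence across pairs. The condition \(\log(1/\pi_{\min})=o(d)\) and \(\log k=o(d)\) keep the prefactors sub-exponential. So a strict gap of order \(d\) in the exponent yields a strict ordering whenever \(\lambda\neq\lambda_c\). The limsup/liminf over \(t\) is handled as in the isotropic proof, through uniformity over the margin-solution sets.

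For part (ii), after the rotation the class means are \(\mathcal N(0,Q\Gamma_dQ^\top)\) and the test noise has covariance \(Q\Sigma_\lambda Q^\top\). RowGDM is rotation-equivariant, so \(\rho_{\mathrm{row}}\) is unchanged. For Adam, the signs are now taken in the rotated basis. Conditional on \(Q\), each coordinate \((Q\mu_i)_a\) is Gaussian with variance \(e_a^\top Q\Gamma_dQ^\top e_a=1+O(\sqrt{\log d/d})\) uniformly in \(a\), by Haar concentration. Hence \(\operatorname{sign}(Q\mu_i)^\top Q\mu_i\approx d\sqrt{2/\pi}\) and \(\rho_{\mathrm{Adam}}\approx d/\pi\). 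The variance term \(s^\top Q\Sigma_\lambda Q^\top s/d\) with \(s=\operatorname{sign}(Q\mu_i)\) requires care because \(s\) depends on \(Q\). I would handle it by writing \(Q\mu_i=Q\Gamma_d^{1/2}g_i\) and using the joint Haar–Gaussian invariance to show \(s\) is nearly independent of the eigenbasis \(Q\), giving \(\approx\operatorname{tr}\Sigma_\lambda/d=1\) per row.

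With \(\rho_{\mathrm{Adam}}\approx d/\pi\), the row advantage \(\rho_{\mathrm{row}}>\rho_{\mathrm{Adam}}\) becomes \(\pi>2(1+\lambda(\bar\gamma_2-1))\), which fails for large \(\lambda\). So for \(\lambda>0\) the rotated comparison must also track that the row variance term is \(\mu_i^\top\Sigma_\lambda\mu_i\), which correlates with the signal. In the rotated setting I would recompute with \(\mu_i^\top\Sigma_\lambda\mu_i/d\to(1-\lambda)+\lambda\bar\gamma_2\) and compare against the Adam variance, which also picks up correlated mass through \(s\). I expect this rotated Adam variance computation — the dependence between \(\operatorname{sign}(Q\mu_i)\) and \(Q\Sigma_\lambda Q^\top\) — to be the main obstacle, and the place where the exact exponents must be rederived carefully.
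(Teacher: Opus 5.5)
Your part (i) follows the paper's route: whole-face rigidity, the directed coefficients \(1/(2(1+\lambda(\bar\gamma_2-1)))\) versus \(g_0^2\bar\gamma_{1/2}^2/2\) with \(g_0=\sqrt{2/\pi}\), the Gaussian-tail sandwich, and transfer to trajectories through uniformity over the complete Row and Max faces. The rigidity says that every Row minimizer is row-wise within \(o(1)\) of the means, and every Max minimizer divided by \(t_{\max}\) is within \(o(\sqrt d)\) of \(\operatorname{sign}(\mu_i)\). One small point: the lower half of the sandwich needs only one directed pair, not any weak-dependence argument.

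Part (ii), however, is not established, and the computation you give is wrong. Your Adam signal \(g_0\) after rotation is correct, but the energy is not. You claim \(s=\operatorname{sign}(Q\mu_i)\) is nearly independent of \(Q\), so the sign contrast has test-noise energy \(\approx\operatorname{tr}\Sigma_\lambda/d=1\). This fails for every \(\lambda>0\). The vector \(s\) has cosine \(\approx g_0\) with \(Q\mu_i\), and \(Q\mu_i\) carries the elevated energy \(\mu_i^\top\Sigma_\lambda\mu_i/\|\mu_i\|_2^2\approx1+\lambda(\bar\gamma_2-1)\) under \(Q\Sigma_\lambda Q^\top\). In the diagonal problem your energy was right only because \(T_i[a]^2=1\) for every coordinate, regardless of \(|\mu_i[a]|\). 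As you notice, your value \(\rho_{\mathrm{Adam}}\approx d/\pi\) would make Row lose near \(\lambda=1\), since the hypothesis forces \(\bar\gamma_2>\pi/2\). So the proposal stops exactly where the argument is needed.

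The missing step is to condition only on the action of \(Q\) on \(S:=\operatorname{span}\{\mu_i,\mu_j\}\). This fixes \(b_{ij}:=(T_i-T_j)/\|T_i-T_j\|_2\), and Haar invariance of the unexposed part gives \(Q^\top b_{ij}=q+r\zeta\), with \(q\in S\) and \(\zeta\) uniform on the unit sphere of \(S^\perp\). Work in normalized units \(\|\mu_i\|_2\approx1\) and set \(v_{ij}:=(\mu_i-\mu_j)/\|\mu_i-\mu_j\|_2\). The Haar sign statistics give \(b_{ij}^\top Q\mu_i\approx g_0/\sqrt2\approx-b_{ij}^\top Q\mu_j\). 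Inverting the well-conditioned Gram matrix of \(\mu_i,\mu_j\) then gives \(q\approx g_0v_{ij}\) and \(r^2\approx1-g_0^2\).

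Hence the energy is \(g_0^2a_{ij}+1-g_0^2\), with \(a_{ij}:=v_{ij}^\top\Sigma_\lambda v_{ij}\). The Max coefficient (your \(\rho/d\)) is therefore \((g_0^2/2)/(g_0^2a_{ij}+1-g_0^2)=1/(2a_{ij}+\pi-2)\), strictly below the Row coefficient \(1/(2a_{ij})\) for every \(\lambda\). Since \(a_{ij}\le\bar C:=\sup_{a,d}\gamma_{a,d}\), the gap is at least \((\pi-2)/(2\bar C(2\bar C+\pi-2))\) uniformly in \(\lambda\). This uniformity is what gives \(c_{\mathrm{rot}},C_{\mathrm{rot}}\) independent of \(\lambda\).

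To transfer this from \(b_{ij}\) to every Max minimizer, you also need whole-face Max rigidity in the rotated basis. That comes from Haar sign statistics and small-ball bounds for \(Q\mu_i\), with \(h=g_0\).
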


\paragraph{Implication.}
Theorem~\ref{thm:aniso-dd-ordering} shows that anisotropy alone does not eliminate the advantage of row-normalized methods under independent orientations.
Proposition~\ref{prop:main-coordinate-reversal} identifies a structured failure mode. The RowGDM-over-Adam ordering can reverse when the high-variance class-mean and test-noise directions align with each other and with Adam's normalization coordinates.
However, this reversal is coordinate-specific.
Applying the same Haar rotation to the class means and test-noise covariance preserves the covariance spectra and their relative alignment while randomizing their orientation relative to the coordinate axes.
This restores the RowGDM-over-Adam ordering.
Hence, class-mean--test-noise alignment alone is not sufficient; the reversal additionally relies on its alignment with Adam's coordinate axes.

\begin{figure}[t]
\centering
\begin{minipage}[t]{0.49\linewidth}
\centering
\includegraphics[width=\linewidth]{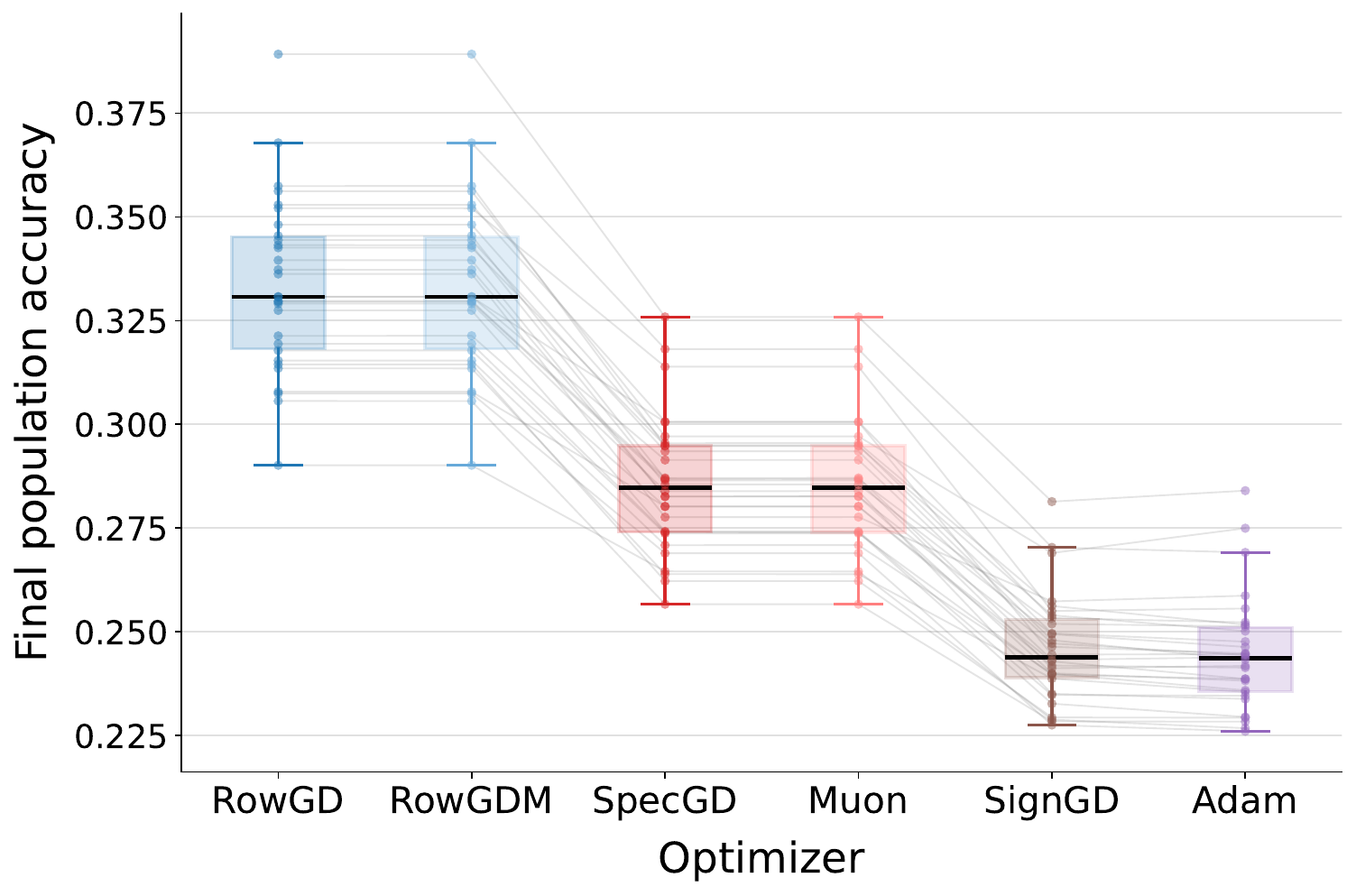}
\end{minipage}
\hfill
\begin{minipage}[t]{0.49\linewidth}
\centering
\includegraphics[width=\linewidth]{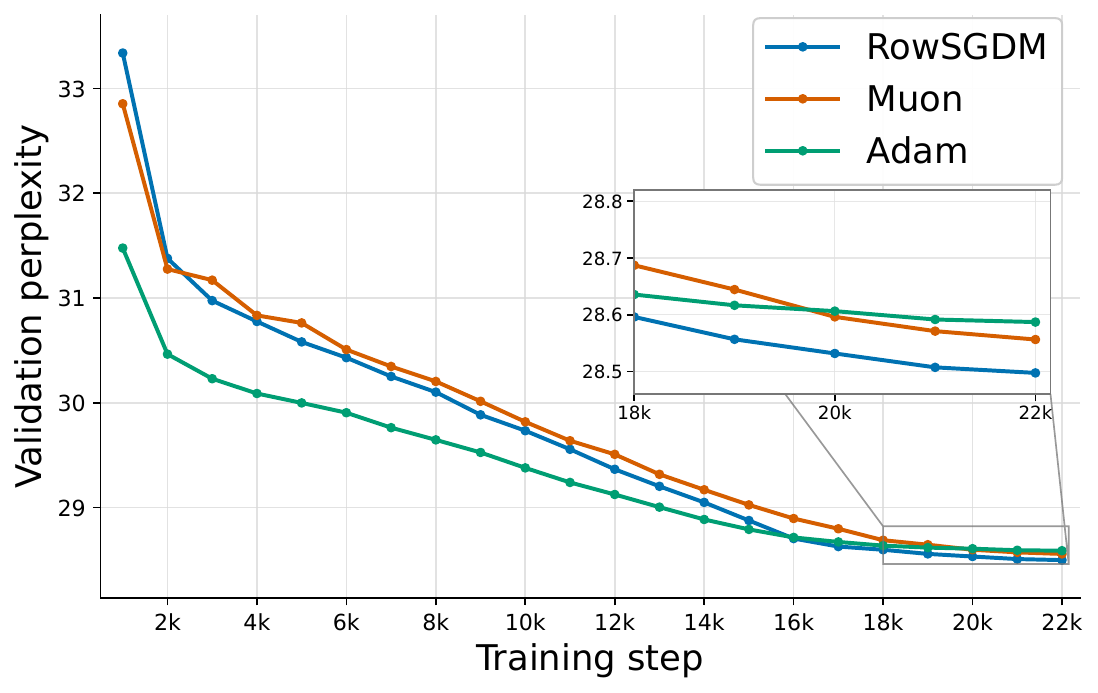}
\end{minipage}
\vspace{-8pt}
\caption{\textbf{Synthetic and language-model experiments.} \textbf{Left:} Final evaluation accuracy of six deterministic optimizers on 30 draws of Gaussian-cloud data. \textbf{Right:} Validation perplexity during last-layer-only LLaMA training with RowSGDM, Muon, and Adam. Lower perplexity is better.}
\label{fig:experiments}
\vspace{-10pt}
\end{figure}

\section{Experiments}

\paragraph{Synthetic experiment.}

On isotropic Gaussian-cloud data with \(k=60\), \(d=20\), RowGD and RowGDM outperform SpecGD, Muon, SignGD, and Adam on all 30 paired seeds (Figure~\ref{fig:experiments}, left), consistent with Theorem~\ref{thm:det-optimizer-accuracy-order-clean}. Additional comparisons and details are given in Appendix~\ref{app:experimental-details}.

\paragraph{Language model experiment.}

We freeze a trained LLaMA-\(60\)M model, zero-initialize its output head, and train only this layer with practical implementations of RowSGDM, Muon, or Adam. Figure~\ref{fig:experiments} (right) shows that RowSGDM attains the lowest final validation perplexity, consistent with the qualitative ordering our theory predicts. Multi-seed results and details are given in Appendix~\ref{app:language-model-experiment-details}.

\section{Conclusion}

\paragraph{Implications.}
We connect optimizer-induced implicit-bias geometry to population accuracy in high-dimensional multiclass classification, beyond prior work on optimizer-selected implicit biases \citep{fan2026implicit, li2026implicit, baek2026implicit}.
For linear classification abstracting the language model's last layer, row-normalized methods achieve strictly higher population accuracy than spectral and coordinate-wise alternatives in both full-batch and random-reshuffling settings by better preserving class-mean geometry.
These results complement empirical evidence of competitive performance by row-normalized methods in language model training \citep{glentis2025memory, deng2026rmnp}.

\paragraph{Limitations and outlook.}
Our analysis is limited to a linear classification model that can be viewed as a last-layer readout, together with a stylized data model with Gaussian class means and within-class noise. Although the data model shares some high-dimensional geometric properties with linguistic collapse \citep{wu2024linguistic}, modern language-model representations are substantially more complex. This motivates extensions to richer representation models and multilayer training.
Our anisotropic independent-orientation analysis assumes directional convergence, while our Adam results assume a zero numerical-stability constant and treat RR-Adam only via AdamProxy.

\subsection*{AI use statement}
In this work, we used generative AI tools to assist with improving English writing and readability, identifying potentially relevant related work that we might have overlooked, and drafting proof sections of the appendix by converting lengthy mathematical proofs and derivations into clean and readable LaTeX.
We additionally used generative AI tools to assist with debugging experimental code and improving the visual presentation of figures and tables.
All AI-assisted outputs were reviewed by the authors, who take responsibility for the final content of this work.

\subsection*{Reproducibility statement}
We state the assumptions and settings for our theoretical results in Sections~\ref{sec:preliminaries} and~\ref{sec:anisotropic-extension}, and provide complete proofs and supporting technical results in Appendices~\ref{app:max-sum-row}--\ref{sec:when2-results}.
For the empirical results, Appendix~\ref{app:additional-experiments} provides the experimental setups and information needed for reproduction, with synthetic experiments in Appendix~\ref{app:experimental-details}, the language-model experiment in Appendix~\ref{app:language-model-experiment-details}, and the representation analysis in Appendix~\ref{app:language-model-represen}.

\bibliographystyle{plainnat}
\bibliography{references}

\clearpage
\appendix

\etocsetlocaltop.toc{part}
\etocsettocdepth{subsection}
\etocsettocstyle{\section*{Appendix}}{}
\localtableofcontents

\clearpage

\section*{Overview of the Appendix}

The appendix is organized around the proof architecture of the paper. Appendix~\ref{app:add-rel} provides additional related work, and Appendix~\ref{app:max-sum-row} develops the optimizer-side margin and implicit-bias machinery for full-batch and random-reshuffling training. Appendices~\ref{app:deterministic-margin}--\ref{app:adamproxy-analysis} combine this machinery with high-dimensional geometry to prove the deterministic, stochastic, and AdamProxy population-accuracy results. Appendix~\ref{app:anisotropic-extension} extends the analysis to anisotropic Gaussian models with independently oriented test-noise covariances, Appendix~\ref{sec:when2-results} identifies a coordinate-specific RowGDM--Adam reversal and its recovery under a common Haar rotation, and Appendix~\ref{app:additional-experiments} provides the experimental details and additional empirical evidence.

\begin{enumerate}[label=(\arabic*), leftmargin=*, itemsep=3pt, topsep=4pt]

\item In Appendix~\ref{app:add-rel}, we provide additional related work on implicit bias in separable classification, adaptive and sign-based optimization, optimizer geometry and population performance, matrix-aware optimization geometry, row normalization and row--spectral hybrids, high-dimensional classification under mixture models, and last-layer geometry and neural collapse.

\item In Appendix~\ref{app:max-sum-row}, we establish the optimizer-side margin-convergence results used throughout the paper. We extend full-batch normalized steepest-descent and momentum arguments to max--sum dominated matrix norms and establish the max-norm maximum-margin characterization of full-batch Adam under our learning-rate assumptions. For random-reshuffling momentum methods, we develop a geometry-adapted analysis through the primal--dual atom radius \(R_\psi(X)\), which applies directly to the Row, Spectral, and Max geometries. Specializing these results to row normalization yields the exact full-batch RowGD and RowGDM direction-set convergence in Proposition~\ref{cor:app-row-full-batch} and the RR-RowSGDM near-margin guarantee in Proposition~\ref{cor:row-sgdm-rr-margin-clean}.

\item In Appendix~\ref{app:deterministic-margin}, we prove the deterministic population-accuracy ordering in Theorem~\ref{thm:det-optimizer-accuracy-order-clean} uniformly over the complete Row, Spectral, and Max maximum-margin direction sets. The proof establishes the all-aspect-ratio geometry of these three families: Row representatives align uniformly with the population mean differences, Max representatives exhibit a nonvanishing clipped-coordinate angular gap, and Spectral representatives are controlled through the reduced polar geometry of the centered empirical mean matrix, including the hard-edge regime at \(\alpha=1\). We then develop a geometry-to-population-error transfer and derive quantitative population-error exponents for the three geometries. The same exponent analysis also yields the full-batch Muon--Adam comparison in Corollary~\ref{cor:app-det-muon-adam-order}.

\item In Appendix~\ref{app:stochastic-margin}, we prove the random-reshuffling stochastic ordering in Proposition~\ref{thm:stoch-optimizer-accuracy-order-clean}. The geometry-adapted momentum result from Appendix~\ref{app:max-sum-row} places RR-RowSGDM, RR-Muon, and RR-Signum in near-maximum-margin direction sets for the Row, Spectral, and Max geometries. We reduce these near-margin directions to near-minimum-norm representatives, prove quantitative stability of the three near-minimizer families, and obtain a dimension-independent momentum threshold under which the deterministic geometric separation persists. The population-error transfer from Appendix~\ref{app:deterministic-margin} then gives the claimed stochastic accuracy ordering.

\item In Appendix~\ref{app:adamproxy-analysis}, we analyze RR-Adam and AdamProxy. We first prove the one-epoch \(\beta_1,\beta_2\to1\) RR-Adam approximation in Proposition~\ref{thm:rr-adamproxy-epoch-high-memory}; this result connects RR-Adam to the deterministic proxy over one epoch and does not assert long-horizon agreement of their trajectories. For AdamProxy itself, we establish positive margin growth and a trajectory-level coordinate cap, use the resulting coordinate saturation to obtain a nonvanishing boundary-direction gap, and combine this gap with the high-momentum RR-RowSGDM geometry and the population-error transfer from Appendix~\ref{app:deterministic-margin} to prove Proposition~\ref{thm:adamproxy-optimizer-accuracy-order-clean} without directional convergence. Separately, Proposition~\ref{thm:adamproxy-margin-clean} characterizes the limiting AdamProxy classifier through a self-consistent semidefinite quadratic margin problem when the normalized proxy direction converges.

\item In Appendix~\ref{app:anisotropic-extension}, we prove the anisotropic accuracy ordering in Theorem~\ref{thm:aniso-dd-ordering} under its directional-convergence assumption. Because training is clean, each optimizer-selected limiting direction depends only on the class means and is independent of the Haar eigenbases of the test-noise covariances. We characterize the selected Row, Spectral, and Max directions, expose the independent test-noise orientations, and compare their standardized Gaussian test signals. This establishes the Row advantage throughout \(0\le s_\mu<1\) and \(s_\Sigma\ge0\), including arbitrarily strong test-noise anisotropy.

\item In Appendix~\ref{sec:when2-results}, we prove the coordinate-specific RowGDM--Adam reversal in Proposition~\ref{prop:main-coordinate-reversal}. Working uniformly over the complete Row and Max maximum-margin solution sets, we derive an explicit transition point \(\lambda_c\) for diagonal class-mean covariance with aligned anisotropic test noise, below which Row dominates and above which the ordering reverses. We then show that applying a common independent Haar rotation to the same class-mean and test-noise geometry restores the Row-over-Max, and hence RowGDM-over-Adam, ordering while preserving the rotation-invariant geometry. We also give a sufficient spectral-dispersion condition under which the transition regime is nonvacuous.

\item In Appendix~\ref{app:additional-experiments}, we provide the full experimental setup and additional empirical evidence. This includes expanded deterministic and random-reshuffling synthetic optimizer comparisons, auxiliary optimizer geometries, robustness studies across data and noise settings, synthetic experiments for the anisotropic independent-orientation and coordinate-aligned predictions, controlled last-layer language-model experiments with RowSGDM, Muon, and Adam, and measurements of class-mean and within-class covariance spectral exponents in language-model representations.

\end{enumerate}

\section{Additional Related Work}
\label{app:add-rel}

\paragraph{Implicit bias in separable classification.}
The modern implicit-bias literature begins with the observation that, on linearly separable data, gradient descent on unregularized logistic or cross-entropy loss diverges in norm while converging in direction to a hard-margin classifier \citep{soudry2018implicit}.
More generally, the geometry of the update rule can determine the norm with respect to which the limiting margin is maximized, as formalized for steepest-descent and mirror-descent methods by \citet{gunasekar2018characterizing}.
For Euclidean methods, this picture has been extended from gradient descent to fixed-step stochastic gradient descent and momentum methods \citep{nacson2019stochastic,wang2022does}, and recent work gives multiclass convergence results for broad families of exponentially tailed losses \citep{ravi2024implicit}.
In homogeneous neural networks, norm-dependent steepest-descent dynamics generally converge to first-order stationary points of the corresponding margin problem rather than necessarily to its global optimum \citep{tsilivis2026flavors}, and a recent extension treats normalized and momentum variants of SignGD, Muon, and Adam \citep{gronich2026implicit}.
The results closest to ours characterize the max-margin directions selected by entrywise and Schatten geometries in multiclass linear models, their approximation under random reshuffling, and the data-dependent direction selected by incremental Adam \citep{fan2026implicit,li2026implicit,baek2026implicit}.
Our analysis complements these directional results by adding the row norm and by comparing the population accuracy of entire optimizer-selected margin faces rather than only identifying a limiting direction.

\paragraph{Adaptive and sign-based optimization.}
Adam combines momentum with coordinate-wise second-moment normalization \citep{kingma2015adam}, whereas signSGD and Signum retain only coordinate-wise gradient signs, with or without momentum \citep{bernstein2018signsgd}.
This relationship is not merely cosmetic: Adam can be decomposed into a sign direction and a variance-dependent magnitude, which helps explain both its robustness and its sensitivity to the coordinate system \citep{balles2018dissecting}; signSGD can also exhibit distinct compute-optimal scaling behavior from SGD \citep{kim2026scaling}.
The asymptotic classifier also depends on algorithmic details that are often suppressed in informal comparisons.
For example, AdaGrad can converge to a margin solution determined by its path-dependent diagonal conditioner \citep{qian2019implicit}, adaptive methods with a positive stability constant can become asymptotically Euclidean in homogeneous models \citep{wang2021implicit}, and Adam without a stability constant selects an \(\ell_\infty\)-max-margin direction in binary linear classification \citep{zhang2024implicit}.
Decoupled weight decay changes the problem again: the implicit regularization of AdamW is related to an \(\ell_\infty\)-constrained objective rather than to the unregularized margin limit \citep{loshchilov2017decoupled,xie2024implicit}.
Accordingly, our full-batch Adam statement concerns the zero-stability-constant, no-weight-decay dynamics used in our theoretical model, while our AdamProxy analysis isolates the additional data dependence created by incremental or mini-batch updates under random reshuffling \citep{baek2026implicit}.

\paragraph{Optimizer geometry and population performance.}
Implicit-bias results do not by themselves imply that one optimizer generalizes better than another.
Constructed separable problems show that Euclidean gradient methods and adaptive coordinate-wise methods can interpolate the same training data yet have sharply different test errors \citep{wilson2017marginal}, and norm-based analyses relate such differences to the interaction between an optimizer's geometry and the geometry of the data \citep{vaswani2020each}.
Experiments on transformers further suggest that the sign-like component of Adam, rather than stochastic noise alone, can account for part of its difference from SGD \citep{kunstner2023noise}.
Our contribution is a distribution-specific, high-probability ordering of row, spectral, and entrywise geometries in a multiclass model, uniform over the corresponding complete margin faces.

\paragraph{Matrix-aware optimization geometry.}
Several optimizer families exploit matrix or tensor structure without implementing the normalized steepest maps studied here.
Shampoo uses Kronecker-factored, mode-wise preconditioners for tensor parameters \citep{gupta2018shampoo}, while SOAP applies Adam in an evolving Shampoo eigenbasis \citep{vyas2025soap}.
The modular-duality framework instead assigns layerwise operator norms from the semantics of a module and derives corresponding duality maps \citep{bernstein2024modular}, and norm-constrained linear minimization oracles provide a related route to architecture-aware updates \citep{pethick2025training}.
Muon can be viewed as the matrix duality map obtained from the polar factor, commonly approximated by Newton--Schulz iterations \citep{jordan2024muon,bernstein2024modular}.
By contrast, our spectral comparator is the exact polar update \(UV^\top\) for a gradient with compact singular value decomposition \(U\Sigma V^\top\); our theorems do not automatically transfer to preconditioned methods such as Shampoo or SOAP, or to finite-iteration approximations of the polar factor.

\paragraph{Row normalization and row--spectral hybrids.}
The row-structured methods summarized in Section~\ref{rel} are algorithmically distinct even when they contain a superficially similar normalization step.
In particular, SinkGD alternates row and column normalization, NorMuon performs neuron-wise adaptation after Muon orthogonalization, RMNP analyzes row-normalized momentum through nonconvex convergence guarantees, and MOGA studies width scaling and hyperparameter transfer for the corresponding operator geometry \citep{scetbon2026gradient,li2025normuon,deng2026rmnp,xu2026width}.
Recent hybrid proposals further modify the polar update by projecting momentum before row alignment, equilibrating rows before orthogonalization, or imposing leverage-aware row uniformity while retaining spectral structure \citep{yuan2026nora,chang2026muoneq,dewulf2026aurora}.
SWAN is also a relevant stateless optimizer that combines column normalization with whitening \citep{ma2024swan}.
These methods strengthen the practical motivation for studying row geometry, but the extra projection, whitening, equilibration, or orthogonalization steps make them different dynamical systems.
Our results should therefore be read as an analysis of exact row-normalized steepest descent and momentum, and as a controlled comparison that isolates the row normalization component rather than as an implicit-bias theorem for every row--spectral hybrid.

\paragraph{High-dimensional classification under mixture models.}
Gaussian and sub-Gaussian mixture models have been used to obtain sharp asymptotic formulas or risk bounds for regularized convex classifiers and binary max-margin classifiers \citep{mignacco2020role,cao2021risk}.
In the multiclass setting, prior work relates the hard-margin multiclass SVM to minimum-norm interpolation and studies benign overfitting under Gaussian designs \citep{wang2021benign}, including regimes in which the number of classes grows with dimension \citep{subramanian2022generalization}.
These works primarily fix an estimator geometry, usually Euclidean, and characterize when its interpolation or maximum-margin bias generalizes.
We instead hold the data model fixed and compare the distinct margin geometries selected by several optimizers in the proportional regime \(k/d\to\alpha\in(0,\infty)\).
The uniform statements over complete margin faces are important in this regime because a norm-specific margin problem need not have a unique optimizer, so identifying one convenient solution would not establish an optimizer-level population comparison.

\paragraph{Last-layer geometry and neural collapse.}
The class-conditional mean geometry in our model is also related to neural collapse, where terminal-phase representations approach within-class concentration, centered class means form a simplex-like frame, and classifier weights align with those means \citep{papyan2020prevalence}.
Generalized neural-collapse formulations extend this viewpoint to settings with many more classes than feature dimensions by replacing the simplex equiangular tight frame with a one-versus-rest margin criterion \citep{jiang2023generalized}.
Related geometric regularities have been reported for the unembedding representations of language models under the name linguistic collapse \citep{wu2024linguistic}.
Our model does not attempt to explain the feature-learning dynamics that create such configurations.
Instead, it conditions on a fixed last-layer representation with random Gaussian class centers and isotropic or anisotropic residual variation, and asks which optimizer-induced norm produces the most accurate off-sample decision regions once that representation has been formed.

\section{Implicit Bias of Row-Normalized Methods: Full-Batch and Random Reshuffling}
\label{app:max-sum-row}

This appendix collects the optimizer-side implicit-bias results used throughout the paper. For full-batch normalized steepest descent, we extend the analysis of \citet{fan2026implicit} from entrywise and Schatten norms to every max--sum dominated matrix norm. 
For random-reshuffling momentum methods, we extend the analysis of \citet{li2026implicit} and use a sharper geometry-adapted analysis based on the primal--dual atom radius \(R_\psi(X)\), which avoids the common entrywise \(\ell_1\) feature bound and applies directly to the Row, Spectral, and Max geometries. We then specialize these general results to row normalization, thereby proving Proposition~\ref{cor:app-row-full-batch} and the RR-RowSGDM margin result used in the stochastic accuracy theorem.

\subsection{Common setup and auxiliary lemmas}

Let \(X=\{(x_n,y_n)\}_{n=1}^N\) be a fixed linearly separable multiclass dataset with \(x_n\in\mathbb R^d\) and \(y_n\in[k]\). A linear classifier is \(W\in\mathbb R^{k\times d}\), and the cross-entropy loss is
\begin{equation}
L(W):=\frac1N\sum_{n=1}^N\ell_n(W),\qquad \ell_n(W):=-\log{\mathbb{S}}(Wx_n)_{y_n}.
\label{eq:app-ms-ce}
\end{equation}
Define
\begin{equation}
\operatorname{mar}_X(W):=\min_{n\in[N]}\min_{c\ne y_n}(w_{y_n}-w_c)^\top x_n,\qquad \gamma_\psi(X):=\max_{\psi(W)\le1}\operatorname{mar}_X(W)
\label{eq:app-ms-margin-gamma}
\end{equation}
for any matrix norm \(\psi\), and define the standard cross-entropy proxy
\begin{equation}
\mathcal G(W):=\frac1N\sum_{n=1}^N\bigl(1-{\mathbb{S}}(Wx_n)_{y_n}\bigr).
\label{eq:app-common-proxy}
\end{equation}
For \(0\le\varepsilon<\gamma_\psi(X)\), define the near-max-margin direction set
\begin{equation}
\mathcal U_{\psi,\varepsilon}^X:=\left\{W\ne{\mathbf 0}:\frac{\operatorname{mar}_X(W)}{\psi(W)}\ge\gamma_\psi(X)-\varepsilon\right\},\qquad \mathcal U_\psi^X:=\mathcal U_{\psi,0}^X.
\label{eq:app-near-max-margin-direction-set}
\end{equation}
For a nonzero matrix \(A\) and a nonempty set \(\mathcal S\subset\mathbb R^{k\times d}\setminus\{{\mathbf 0}\}\), let
\begin{equation}
d_{\cos}(A,\mathcal S):=1-\sup_{V\in\mathcal S}\frac{\langle A,V\rangle}{\|A\|_{\mathrm F}\|V\|_{\mathrm F}}.
\label{eq:app-cosine-discrepancy}
\end{equation}

\begin{lemma}[Effective margin implies direction-set convergence]
\label{lem:effective-margin-direction-convergence}
Suppose \(W_t\ne{\mathbf 0}\) eventually and, for some \(0\le\varepsilon<\gamma_\psi(X)\),
\begin{equation}
\liminf_{t\to\infty}\frac{\operatorname{mar}_X(W_t)}{\psi(W_t)}\ge\gamma_\psi(X)-\varepsilon.
\label{eq:app-effective-margin-convergence}
\end{equation}
Then
\begin{equation}
d_{\cos}(W_t,\mathcal U_{\psi,\varepsilon}^X)\longrightarrow0.
\label{eq:app-effective-direction-convergence}
\end{equation}
\end{lemma}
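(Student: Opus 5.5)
The plan is a compactness argument on the Frobenius unit sphere; the only properties used are scale invariance, continuity of the margin ratio, and closedness of the target set.

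\emph{Reduction to the unit sphere.} Both $\operatorname{mar}_X(\cdot)$ and $\psi(\cdot)$ are positively homogeneous of degree one, and $d_{\cos}(\cdot,\mathcal S)$ depends only on the direction of its first argument. So I may replace $W_t$ by $\bar W_t:=W_t/\|W_t\|_{\mathrm F}$, which lies on the compact unit sphere $\mathbb S:=\{A:\|A\|_{\mathrm F}=1\}$. The set $\mathcal U_{\psi,\varepsilon}^X$ is a cone. Its intersection $K:=\mathcal U_{\psi,\varepsilon}^X\cap\mathbb S$ is closed because:
\begin{itemize}[leftmargin=2em,nosep]
\item $\operatorname{mar}_X$ is a finite minimum of linear functionals, hence continuous;
\item $\psi$ is a norm, so it is continuous and bounded below by a positive constant on $\mathbb S$.
\end{itemize}
Hence the ratio $f(A):=\operatorname{mar}_X(A)/\psi(A)$ is continuous on $\mathbb S$, and $K=\{A\in\mathbb S:f(A)\ge\gamma_\psi(X)-\varepsilon\}$ is compact. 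It is nonempty since $\varepsilon\ge0$ and a maximizer attaining $\gamma_\psi(X)$ exists by compactness.

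\emph{Contradiction step.} On $\mathbb S$, $d_{\cos}(A,\mathcal U_{\psi,\varepsilon}^X)=1-\sup_{V\in K}\langle A,V\rangle=\tfrac12\operatorname{dist}_{\mathrm F}(A,K)^2$, because $\|A-V\|_{\mathrm F}^2=2-2\langle A,V\rangle$ for unit $A,V$. So it suffices to show $\operatorname{dist}(\bar W_t,K)\to0$. Suppose instead that along a subsequence $\operatorname{dist}(\bar W_{t_j},K)\ge\delta>0$. By compactness, pass to a further subsequence with $\bar W_{t_j}\to A^\ast\in\mathbb S$. By continuity of $f$ and the hypothesis \eqref{eq:app-effective-margin-convergence},
\[
f(A^\ast)=\lim_j f(\bar W_{t_j})\ge\liminf_{t\to\infty} f(W_t)\ge\gamma_\psi(X)-\varepsilon.
\]
Thus $A^\ast\in K$, contradicting $\operatorname{dist}(A^\ast,K)\ge\delta$, which holds by continuity of the distance function.

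\emph{Main obstacle.} There is no real difficulty; the one point needing care is the closedness of $K$. The condition $\varepsilon<\gamma_\psi(X)$ makes the threshold positive, which excludes $A=\mathbf 0$, and normalizing onto $\mathbb S$ removes any issue of cone boundaries at the origin. I would also note explicitly that the supremum in $d_{\cos}$ may be taken over $K$ rather than over the full cone, by scale invariance of $V\mapsto\langle A,V\rangle/\|V\|_{\mathrm F}$.
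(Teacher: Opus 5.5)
Your proof is correct and is essentially the paper's argument. Both normalize onto a compact sphere, extract a convergent subsequence from a putative bad subsequence, and use continuity of the margin ratio to place the limit in $\mathcal U^X_{\psi,\varepsilon}$. The only differences are cosmetic: you normalize by the Frobenius norm rather than by $\psi$, and you make the continuity of the cosine discrepancy explicit through the identity $d_{\cos}(A,\mathcal U^X_{\psi,\varepsilon})=\tfrac12\operatorname{dist}_{\mathrm F}(A,K)^2$.
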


\begin{proof}
By scale invariance of \(d_{\cos}\), consider \(W_t/\psi(W_t)\), which lies on the compact \(\psi\)-unit sphere. If the conclusion failed, a subsequence whose cosine discrepancy from \(\mathcal U_{\psi,\varepsilon}^X\) is bounded below by a positive constant would have a further subsequence converging to some \(U_\infty\). Continuity of the margin and~\eqref{eq:app-effective-margin-convergence} give \(\operatorname{mar}_X(U_\infty)\ge\gamma_\psi(X)-\varepsilon\), so \(U_\infty\in\mathcal U_{\psi,\varepsilon}^X\), contradicting continuity of the cosine discrepancy.
\end{proof}

\begin{lemma}[Exact margin implies direction-set convergence]
\label{lem:exact-margin-direction-convergence}
If
\begin{equation}
\frac{\operatorname{mar}_X(W_t)}{\psi(W_t)}\longrightarrow\gamma_\psi(X),
\label{eq:app-exact-margin-convergence}
\end{equation}
then
\begin{equation}
d_{\cos}(W_t,\mathcal U_\psi^X)\longrightarrow0.
\label{eq:app-exact-direction-convergence}
\end{equation}
\end{lemma}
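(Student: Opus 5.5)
This is the special case \(\varepsilon=0\) of Lemma~\ref{lem:effective-margin-direction-convergence}. The plan is to check that lemma's hypotheses from~\eqref{eq:app-exact-margin-convergence} and then read off the conclusion.

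First, the ratio \(\operatorname{mar}_X(W_t)/\psi(W_t)\) is only defined when \(W_t\ne\mathbf 0\). So the hypothesis~\eqref{eq:app-exact-margin-convergence} implicitly asserts that \(W_t\neq\mathbf 0\) for all sufficiently large \(t\), which is the first hypothesis of Lemma~\ref{lem:effective-margin-direction-convergence}.

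Second, we need \(0\le\varepsilon<\gamma_\psi(X)\) with \(\varepsilon=0\), which requires \(\gamma_\psi(X)>0\). This holds because \(X\) is linearly separable: some \(W\) has \(\operatorname{mar}_X(W)>0\). Since \(\operatorname{mar}_X\) is positively homogeneous, rescaling gives \(\operatorname{mar}_X(W/\psi(W))>0\), so the maximum in~\eqref{eq:app-ms-margin-gamma} is positive.

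Third, convergence of the ratio to \(\gamma_\psi(X)\) implies in particular that
\[
\liminf_{t\to\infty}\frac{\operatorname{mar}_X(W_t)}{\psi(W_t)}\ge\gamma_\psi(X)-0.
\]
This is~\eqref{eq:app-effective-margin-convergence}, so the lemma yields \(d_{\cos}(W_t,\mathcal U_{\psi,0}^X)\to0\). Since \(\mathcal U_\psi^X=\mathcal U_{\psi,0}^X\) by definition, this is exactly~\eqref{eq:app-exact-direction-convergence}.

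No step is a genuine obstacle. The only point needing care is confirming \(\gamma_\psi(X)>0\), so that \(\varepsilon=0\) is admissible and \(\mathcal U_\psi^X\) is nonempty. The upper half of the convergence is not needed; only the \(\liminf\) is used.
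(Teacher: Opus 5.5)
Your proposal is correct and matches the paper's argument, which is the one-line specialization of Lemma~\ref{lem:effective-margin-direction-convergence} to \(\varepsilon=0\). Your additional checks are correct details that the paper leaves implicit: that \(W_t\ne\mathbf 0\) eventually, that \(\gamma_\psi(X)>0\) by linear separability so \(\varepsilon=0\) is admissible, and that only the \(\liminf\) is needed.
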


\begin{proof}
Apply Lemma~\ref{lem:effective-margin-direction-convergence} with \(\varepsilon=0\).
\end{proof}

For a learning-rate sequence, write \(S_t:=\sum_{s=0}^{t-1}\eta_s\).

\begin{lemma}[Cumulative remainders for general learning rates]
\label{lem:app-general-lr-remainders}
Suppose Assumption~\ref{ass:lr-base} holds. Then \(S_t\to\infty\), and for every \(q>1\), \(c>0\), and fixed \(t_0\in\mathbb N\),
\begin{equation}
\frac{\sum_{s=0}^{t-1}\eta_s^q}{S_t}\longrightarrow0,\qquad \sum_{s=t_0}^{t-1}\eta_s\exp\!\left(-c\sum_{\tau=t_0}^{s-1}\eta_\tau\right)=O(1).
\label{eq:app-general-lr-remainders}
\end{equation}
\end{lemma}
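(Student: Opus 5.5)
The first claim, \(S_t\to\infty\), is exactly the divergence \(\sum_t\eta_t=\infty\) in Assumption~\ref{ass:lr-base}. The other two claims follow from \(\eta_t\to0\), monotonicity, and divergence.

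For the ratio \(\sum_{s<t}\eta_s^q/S_t\), we use a Cesàro-type splitting. Fix \(\delta>0\). Since \(\eta_s\to0\), choose \(T\) with \(\eta_s^{q-1}\le\delta\) for all \(s\ge T\). Then
\begin{equation}
\sum_{s=0}^{t-1}\eta_s^q\le \sum_{s=0}^{T-1}\eta_s^q+\delta\sum_{s=T}^{t-1}\eta_s\le C_T+\delta S_t .
\end{equation}
Dividing by \(S_t\) and using \(S_t\to\infty\) gives \(\limsup_t \sum_{s<t}\eta_s^q/S_t\le\delta\). Since \(\delta\) is arbitrary, the ratio tends to zero. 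Here \(q>1\) is used only to get \(\eta_s^{q-1}\to0\).

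For the exponential sum, write \(A_s:=\sum_{\tau=t_0}^{s-1}\eta_\tau\), so \(A_{s+1}-A_s=\eta_s\) and \(A_{t_0}=0\). The idea is to compare the sum with the integral \(\int_0^\infty e^{-cu}\,du=1/c\), viewing each term as a left Riemann step of \(e^{-cu}\) over \([A_s,A_{s+1}]\). Since \(e^{-cu}\) is decreasing, a left endpoint overestimates. The fix is to control the ratio between the two endpoints of each step:
\begin{equation}
\eta_s e^{-cA_s}=e^{c\eta_s}\,\eta_s e^{-cA_{s+1}}\le e^{c\eta_s}\int_{A_s}^{A_{s+1}}e^{-cu}\,du\cdot\frac{c\eta_s}{1-e^{-c\eta_s}}\,e^{-c\eta_s}.
\end{equation}
A cleaner way to say this is \(\eta_s e^{-cA_s}=\frac{c\eta_s}{1-e^{-c\eta_s}}\int_{A_s}^{A_{s+1}}e^{-cu}\,du\). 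The prefactor \(x/(1-e^{-x})\) at \(x=c\eta_s\) is at most \(1+x\le1+c\eta_0\), because \(\eta_s\le\eta_0\) by monotonicity. Summing over \(s\) then gives the bound \((1+c\eta_0)/c\), uniformly in \(t\), which is the claimed \(O(1)\).

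There is no real obstacle here. The only points needing care are the elementary inequality \(x/(1-e^{-x})\le 1+x\) for \(x>0\), and the use of boundedness of \(\eta\) (rather than \(\eta\to0\)) to make the bound uniform in \(t\).
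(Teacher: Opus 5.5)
Your proof is correct and follows the paper's argument: the first limit uses the same split at a time \(T\) after which \(\eta_s^{q-1}\) is small, and the exponential sum is handled by the same telescoping of \(e^{-cA_s}-e^{-cA_{s+1}}\). The only difference is that your inequality \(x/(1-e^{-x})\le 1+x\) holds for every \(x>0\). This removes the paper's step of waiting until \(c\eta_s\le 1\) (where it uses \(1-e^{-x}\ge x/2\)), so you get the explicit uniform bound \((1+c\eta_0)/c\) with no finite prefix.
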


\begin{proof}
Fix \(\varepsilon>0\) and choose \(T\) so that \(\eta_s^{q-1}\le\varepsilon\) for all \(s\ge T\). Then
\[
\frac{\sum_{s=0}^{t-1}\eta_s^q}{S_t}\le\frac{\sum_{s=0}^{T-1}\eta_s^q}{S_t}+\varepsilon,
\]
which proves the first limit. For the second assertion, enlarge \(T\ge t_0\) so that \(c\eta_s\le1\) for \(s\ge T\), and set \(A_s:=\sum_{\tau=t_0}^{s-1}\eta_\tau\). Since \(1-e^{-x}\ge x/2\) for \(x\in[0,1]\),
\[
e^{-cA_s}-e^{-cA_{s+1}}\ge\frac c2\eta_s e^{-cA_s}.
\]
The tail therefore telescopes to a bounded quantity, while the finite prefix is harmless.
\end{proof}

\begin{lemma}[Normalization of a cumulative margin bound]
\label{lem:app-general-lr-normalization}
Let \(\mathfrak m:\mathbb R^{k\times d}\to\mathbb R\) be continuous and positively homogeneous, let \(\Gamma_*:=\max_{\psi(W)\le1}\mathfrak m(W)>0\), and suppose \(0<\Gamma\le\Gamma_*\). Assume \(S_t\to\infty\). If
\[
\mathfrak m(W_t)\ge\Gamma S_t-r_t,\qquad \psi(W_t)\le C_0+S_t,\qquad r_t=o(S_t),
\]
then
\begin{equation}
\liminf_{t\to\infty}\frac{\mathfrak m(W_t)}{\psi(W_t)}\ge\Gamma.
\label{eq:app-general-lr-normalization}
\end{equation}
If \(\Gamma=\Gamma_*\), then \(\mathfrak m(W_t)/\psi(W_t)\to\Gamma_*\), and every accumulation point of \(W_t/\psi(W_t)\) maximizes \(\mathfrak m\) over the \(\psi\)-unit ball.
\end{lemma}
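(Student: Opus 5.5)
The plan is to divide the two hypotheses into one another and use homogeneity to pass to the \(\psi\)-unit sphere. Since \(S_t\to\infty\) and \(\psi(W_t)\le C_0+S_t\), we have \(\limsup_t \psi(W_t)/S_t\le 1\).

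For the lower bound \eqref{eq:app-general-lr-normalization}, the first hypothesis gives \(\mathfrak m(W_t)\ge \Gamma S_t-r_t>0\) for all large \(t\), because \(r_t=o(S_t)\) and \(\Gamma>0\). In particular \(W_t\ne\mathbf 0\) eventually, so \(\psi(W_t)>0\). When the numerator is positive, replacing the denominator by the larger quantity \(C_0+S_t\) only decreases the ratio. Hence
\[
\frac{\mathfrak m(W_t)}{\psi(W_t)}\ge\frac{\Gamma S_t-r_t}{C_0+S_t}=\frac{\Gamma-r_t/S_t}{1+C_0/S_t}\longrightarrow\Gamma,
\]
and taking \(\liminf\) proves the claim. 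The only care needed is this sign issue: one must know the numerator is eventually positive before bounding the denominator above.

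For the case \(\Gamma=\Gamma_*\), we need a matching upper bound. By positive homogeneity and the definition of \(\Gamma_*\), every nonzero \(W\) satisfies \(\mathfrak m(W)=\psi(W)\,\mathfrak m(W/\psi(W))\le\Gamma_*\psi(W)\). Therefore \(\limsup_t \mathfrak m(W_t)/\psi(W_t)\le\Gamma_*\). Combined with the \(\liminf\) bound, this gives convergence to \(\Gamma_*\).

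For the accumulation points, set \(U_t:=W_t/\psi(W_t)\). These lie on the \(\psi\)-unit sphere, which is compact in finite dimensions, and \(\mathfrak m(U_t)=\mathfrak m(W_t)/\psi(W_t)\to\Gamma_*\) by homogeneity. If \(U_{t_j}\to U_\infty\), continuity of \(\psi\) gives \(\psi(U_\infty)=1\). Continuity of \(\mathfrak m\) gives \(\mathfrak m(U_\infty)=\Gamma_*\). Thus \(U_\infty\) attains the maximum over the unit ball.

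No step is a serious obstacle. The argument is the same compactness-and-continuity reasoning used in Lemma~\ref{lem:effective-margin-direction-convergence}, together with the elementary limit above.
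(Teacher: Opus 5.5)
Your proposal is correct and follows the paper's own proof essentially step for step. It makes the lower bound \(\Gamma S_t-r_t\) eventually positive before enlarging the denominator to \(C_0+S_t\), derives \(\mathfrak m(W)\le\Gamma_*\psi(W)\) from homogeneity for the matching upper bound, and uses compactness of the \(\psi\)-unit sphere and continuity of \(\mathfrak m\) for the accumulation-point claim.
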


\begin{proof}
Since \(r_t=o(S_t)\) and \(S_t\to\infty\), the lower bound \(\Gamma S_t-r_t\) is positive for all sufficiently large \(t\). In particular, \(W_t\ne0\) eventually. For such \(t\),
\[
\frac{\mathfrak m(W_t)}{\psi(W_t)}\ge\frac{\Gamma S_t-r_t}{C_0+S_t}\longrightarrow\Gamma.
\]
This proves the liminf statement without imposing a sign condition on \(r_t\). When \(\Gamma=\Gamma_*\), positive homogeneity gives \(\mathfrak m(W_t)/\psi(W_t)\le\Gamma_*\), proving convergence. Compactness of the \(\psi\)-unit sphere and continuity of \(\mathfrak m\) give the accumulation-point statement.
\end{proof}

\subsection{Full-batch max--sum dominated norms}

For a matrix \(A\), write
\[
\|A\|_{\max}:=\max_{r,j}|A_{r,j}|,\qquad \|A\|_{\mathrm{sum}}:=\sum_{r,j}|A_{r,j}|.
\]
We call a matrix norm \(\psi\) \emph{max--sum dominated} if
\begin{equation}
\|A\|_{\max}\le\psi(A)\le\|A\|_{\mathrm{sum}}\qquad\text{for every }A.
\label{eq:app-ms-sandwich}
\end{equation}
Let \(R_1(X):=\max_{n\in[N]}\|x_n\|_1<\infty\).

\begin{lemma}[Consequences of max--sum domination]
\label{lem:app-ms-basic}
Let \(\psi\) satisfy~\eqref{eq:app-ms-sandwich}, and let \(\psi^*\) be its dual norm. Then, for every matrix \(G\), every \(h\) with \(\|h\|_1\le R_1(X)\), and every \(A\in\mathbb R^{k\times d}\),
\begin{equation}
\psi^*(G)\le\|G\|_{\mathrm{sum}},\qquad \|Ah\|_\infty\le R_1(X)\psi(A).
\label{eq:app-ms-basic-bounds}
\end{equation}
Moreover, if \(s\in\mathbb R^k\) is a probability vector and \(H(s):=\operatorname{diag}(s)-ss^\top\), then for every \(c\in[k]\),
\begin{equation}
h^\top A^\top H(s)Ah\le4R_1(X)^2(1-s_c)\psi(A)^2.
\label{eq:app-ms-hessian}
\end{equation}
\end{lemma}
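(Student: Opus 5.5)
The three claims follow from the sandwich \eqref{eq:app-ms-sandwich} by duality plus an elementary bound on the softmax Hessian.

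\emph{Dual bound.} The definition gives \(\psi^*(G)=\sup_{\psi(A)\le1}\langle G,A\rangle\). If \(\psi(A)\le1\), the lower half of the sandwich gives \(\|A\|_{\max}\le1\). Hence \(\langle G,A\rangle\le\sum_{r,j}|G_{r,j}||A_{r,j}|\le\|G\|_{\mathrm{sum}}\), which is the first inequality in \eqref{eq:app-ms-basic-bounds}.

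\emph{Sup-norm bound.} Each coordinate satisfies \((Ah)_r=\sum_j A_{r,j}h_j\). So \(|(Ah)_r|\le\|A\|_{\max}\|h\|_1\le\psi(A)R_1(X)\), again using only \(\|A\|_{\max}\le\psi(A)\). The upper half \(\psi\le\|\cdot\|_{\mathrm{sum}}\) is not needed for these two bounds. It only ensures that \(\psi\) is comparable to the entrywise norms used later.

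\emph{Hessian bound.} Set \(v:=Ah\), so that \(\|v\|_\infty\le R_1(X)\psi(A)\) by the previous step. For a probability vector \(s\), the quadratic form is the variance of \(v\) under \(s\):
\[
v^\top H(s)v=\sum_\ell s_\ell v_\ell^2-\Bigl(\sum_\ell s_\ell v_\ell\Bigr)^2=\sum_\ell s_\ell (v_\ell-\bar v)^2,\qquad \bar v:=\sum_\ell s_\ell v_\ell .
\]
A variance is at most the mean squared deviation from any constant, so we may center at \(v_c\) instead of \(\bar v\):
\[
v^\top H(s)v\le\sum_{\ell}s_\ell(v_\ell-v_c)^2=\sum_{\ell\ne c}s_\ell(v_\ell-v_c)^2 .
\]
Each term has \((v_\ell-v_c)^2\le(2\|v\|_\infty)^2=4\|v\|_\infty^2\). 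Since \(\sum_{\ell\ne c}s_\ell=1-s_c\), we get
\[
v^\top H(s)v\le 4(1-s_c)\|v\|_\infty^2\le 4R_1(X)^2(1-s_c)\psi(A)^2,
\]
which is \eqref{eq:app-ms-hessian}.

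The only step needing any care is the recentering at \(v_c\) rather than \(\bar v\). It is what produces the factor \((1-s_c)\), which the later cross-entropy analysis relies on to compare the Hessian with the proxy \(\mathcal G\). Everything else is a direct use of \(\|A\|_{\max}\le\psi(A)\).
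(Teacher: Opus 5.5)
Your proof is correct and follows essentially the same argument as the paper. The dual bound comes from the $\psi$-unit ball lying inside the max-norm unit ball, the sup-norm bound from $\|A\|_{\max}\|h\|_1$, and the Hessian bound from bounding the variance under $s$ by the second moment about $v_c$, which is the ``softmax covariance bound'' the paper invokes. You are also right that only the lower half of the sandwich is used.
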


\begin{proof}
Since the \(\psi\)-unit ball is contained in the entrywise max-norm unit ball,
\[
\psi^*(G)=\max_{\psi(A)\le1}\langle G,A\rangle\le\max_{\|A\|_{\max}\le1}\langle G,A\rangle=\|G\|_{\mathrm{sum}}.
\]
Also, \(\|Ah\|_\infty\le\|A\|_{\max}\|h\|_1\le R_1(X)\psi(A)\). For \(v=Ah\), the softmax covariance bound gives \(v^\top H(s)v\le4(1-s_c)\|v\|_\infty^2\), which proves~\eqref{eq:app-ms-hessian}.
\end{proof}

\begin{theorem}[Full-batch max--sum dominated margin convergence]
\label{thm:app-ms-full-batch}
Let \(\psi\) be max--sum dominated. Under Assumption~\ref{ass:lr-base}, normalized steepest descent
\begin{equation}
\Delta_t\in\operatorname*{arg\,max}_{\psi(\Delta)\le1}\langle\nabla L(W_t),\Delta\rangle,\qquad W_{t+1}=W_t-\eta_t\Delta_t
\label{eq:app-ms-nsd}
\end{equation}
satisfies
\begin{equation}
\frac{\operatorname{mar}_X(W_t)}{\psi(W_t)}\longrightarrow\gamma_\psi(X).
\label{eq:app-ms-nsd-margin}
\end{equation}
Under Assumptions~\ref{ass:lr-base} and~\ref{ass:lr-momentum}, the same conclusion holds for every fixed \(\beta\in(0,1)\) under normalized momentum descent with \(M_0={\mathbf 0}\):
\begin{equation}
M_{t+1}=\beta M_t+(1-\beta)\nabla L(W_t),\qquad
\Delta_t\in\operatorname*{arg\,max}_{\psi(\Delta)\le1}\langle M_{t+1},\Delta\rangle,
\qquad
W_{t+1}=W_t-\eta_t\Delta_t,
\label{eq:app-ms-nmd}
\end{equation}
In either case, every accumulation point of \(W_t/\psi(W_t)\) is a \(\psi\)-maximum-margin solution.
\end{theorem}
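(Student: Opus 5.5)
We follow the standard normalized-steepest-descent template of \citet{fan2026implicit}: produce a cumulative lower bound of the form \(\operatorname{mar}_X(W_t)\ge\gamma_\psi(X)S_t-r_t\) with \(r_t=o(S_t)\). Combined with the trivial bound \(\psi(W_t)\le\psi(W_0)+S_t\), which holds because each update has \(\psi(\Delta_t)\le1\), Lemma~\ref{lem:app-general-lr-normalization} with \(\Gamma=\Gamma_*=\gamma_\psi(X)\) then gives the exact margin convergence and the accumulation-point statement. So the work is to relate the loss decrease to the margin.

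\textbf{Step 1 (descent and duality).} By Taylor expansion along the segment \(W_t-s\eta_t\Delta_t\), the second-order term is \(\frac{1}{N}\sum_n x_n^\top\Delta_t^\top H(s_n)\Delta_t x_n\). Applying \eqref{eq:app-ms-hessian} with \(c=y_n\) and \(\psi(\Delta_t)\le1\) bounds it by \(4R_1(X)^2\mathcal G(W')\) at the intermediate point \(W'\). Since \(\|\Delta_t x_n\|_\infty\le R_1(X)\), the proxy \(\mathcal G\) at the intermediate point is within a factor \(e^{2R_1\eta_t}\) of \(\mathcal G(W_t)\). For the first-order term, use \(\langle\nabla L,\Delta_t\rangle=\psi^*(\nabla L)\). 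Testing against a \(\psi\)-max-margin matrix \(U\) gives \(\langle -\nabla L(W),U\rangle=\frac1N\sum_n\sum_{c}(\mathbb 1_{c=y_n}-\mathbb S_c)(Ux_n)_c\). This is a convex combination of the pairwise differences \((u_{y_n}-u_c)^\top x_n\) with weights \(\mathbb S_c\), summing to \(1-\mathbb S_{y_n}\), so \(\psi^*(\nabla L(W))\ge\gamma_\psi(X)\mathcal G(W)\). Hence
\[
L(W_{t+1})\le L(W_t)-\eta_t\gamma_\psi\mathcal G(W_t)+C\eta_t^2\mathcal G(W_t).
\]

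\textbf{Step 2 (loss to margin).} Work with \(\log L\), or with the standard trick \(\mathcal G\ge L/(1+L)\)-type comparisons; alternatively, as in Fan et al., exploit \(\mathcal G(W)/L(W)\to1\) once \(L\) is small. From \(\log(1-x)\le -x\) we get
\[
\log L(W_t)\le\log L(W_0)-\sum_s\eta_s\gamma_\psi\frac{\mathcal G}{L}(W_s)+C\sum_s\eta_s^2 .
\]
First show \(L\to0\), using \(\sum\eta_t=\infty\), \(\eta_t\to0\), and \(\mathcal G\ge L/N\)-type lower bounds valid for small \(L\). Then \(\mathcal G/L\ge 1-O(L)\), and a summability argument shows \(\sum\eta_s L(W_s)\) contributes \(o(S_t)\). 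Since \(e^{-\operatorname{mar}}/N\le L\), this yields \(\operatorname{mar}_X(W_t)\ge\gamma_\psi S_t-r_t\), with \(r_t\) controlled via the first limit in Lemma~\ref{lem:app-general-lr-remainders} for \(q=2\). For the summability of \(\eta_sL(W_s)\), use the second bound of Lemma~\ref{lem:app-general-lr-remainders}, since \(L\) decays like \(\exp(-c\sum\eta)\) after an initial phase.

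\textbf{Step 3 (momentum).} Decompose \(M_{t+1}=(1-\beta)\sum_{s\le t}\beta^{t-s}\nabla L(W_s)\) and compare with \(\nabla L(W_t)\). Entrywise, \(\nabla L(W_s)\) differs from \(\nabla L(W_t)\) by a relative factor \(\exp(c_1\sum_{\tau=s}^{t-1}\eta_\tau)-1\) times quantities bounded by \(\mathcal G(W_t)\) in \(\|\cdot\|_{\mathrm{sum}}\), using the Lipschitz ratio of softmax weights under a bounded \(\ell_\infty\) change of logits. Together with \(\psi^*\le\|\cdot\|_{\mathrm{sum}}\), Assumption~\ref{ass:lr-momentum} then gives \(\langle\nabla L(W_t),\Delta_t\rangle\ge\psi^*(\nabla L(W_t))-2C\eta_t\mathcal G(W_t)\), up to the geometrically vanishing contribution of the \(\beta^{t}\) initial terms. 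This is the same form as Step 1, so Step 2 goes through unchanged. I expect this step to be the main obstacle: it requires bounding the stale gradients uniformly relative to the current proxy \(\mathcal G(W_t)\) rather than absolutely, and handling the early iterations where \(t<t_0\).
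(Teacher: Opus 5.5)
Your proposal is correct and follows essentially the same route as the paper, which invokes the pre-rate argument of \citet{fan2026implicit} with the duality bound \(\psi^*(\nabla L)\ge\gamma_\psi(X)\mathcal G\), the logit and Hessian controls of Lemma~\ref{lem:app-ms-basic}, and (for momentum) Assumption~\ref{ass:lr-momentum} on the stale-gradient memory. This yields \(\operatorname{mar}_X(W_t)\ge\gamma_\psi(X)S_t-r_t\) with \(r_t=o(S_t)\) by Lemma~\ref{lem:app-general-lr-remainders}, and the proof concludes via Lemma~\ref{lem:app-general-lr-normalization}; your write-up fills in the details the paper delegates, with one harmless correction: the loss-to-margin step should read \(L\ge N^{-1}\log(1+e^{-\operatorname{mar}_X})\), which gives \(\operatorname{mar}_X\ge-\log L-\log(2N)\) once all margins are nonnegative.
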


\begin{proof}
The geometry-dependent steps in the normalized steepest-descent and momentum proofs of \citet[Appendices~D and E]{fan2026implicit} use only norm duality, the lower bound \(\psi^*(\nabla L(W))\ge\gamma_\psi(X)\mathcal G(W)\), the upper bound \(\psi^*(\nabla L(W))\le2R_1(X)\mathcal G(W)\), and the logit and Hessian controls in Lemma~\ref{lem:app-ms-basic}. Substituting these bounds into their pre-rate scalar argument gives
\[
\operatorname{mar}_X(W_t)\ge\gamma_\psi(X)S_t-r_t,\qquad \psi(W_t)\le\psi(W_0)+S_t,
\]
where \(r_t\) is a finite-prefix term plus sums of the forms \(\sum_s\eta_s^2\) and \(\sum_s\eta_s\exp(-c\sum_{\tau<s}\eta_\tau)\); in the momentum case, Assumption~\ref{ass:lr-momentum} controls the additional geometric-memory term. Lemma~\ref{lem:app-general-lr-remainders} gives \(r_t=o(S_t)\), and Lemma~\ref{lem:app-general-lr-normalization} with \(\mathfrak m=\operatorname{mar}_X\) and \(\Gamma=\Gamma_*=\gamma_\psi(X)\) proves the result.
\end{proof}

The full-batch Adam characterization used in Theorem~\ref{thm:det-optimizer-accuracy-order-clean} also extends to the learning-rate assumptions of the present paper.

\begin{theorem}[Full-batch Adam under the present learning-rate assumptions]
\label{thm:app-ms-full-batch-adam}
Consider full-batch Adam without bias correction or a numerical-stability constant, initialized by \(M_0=V_0={\mathbf 0}\):
\begin{align}
M_{t+1}&=\beta_1M_t+(1-\beta_1)\nabla L(W_t),\nonumber\\
V_{t+1}&=\beta_2V_t+(1-\beta_2)\nabla L(W_t)^{\odot2},\nonumber\\
W_{t+1}&=W_t-\eta_tM_{t+1}\oslash V_{t+1}^{\odot1/2}.
\label{eq:app-full-batch-adam}
\end{align}
Assume \(0<\beta_1\le\beta_2<1\), Assumptions~\ref{ass:lr-base} and~\ref{ass:lr-momentum}, and \(\min_{c\in[k],\,j\in[d]}|\nabla L(W_0)[c,j]|^2\ge\omega\) for some \(\omega>0\). Then, with
\[
\gamma_{\max}(X):=\max_{\|W\|_{\max}\le1}\operatorname{mar}_X(W),
\]
one has
\begin{equation}
\frac{\operatorname{mar}_X(W_t)}{\|W_t\|_{\max}}\longrightarrow\gamma_{\max}(X),
\label{eq:app-adam-margin}
\end{equation}
and every accumulation point of \(W_t/\|W_t\|_{\max}\) is a max-norm maximum-margin solution.
\end{theorem}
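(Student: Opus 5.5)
We would follow the scalar-potential argument already used for normalized steepest descent, treating the Adam step as an approximate sign step. We compare \(M_{t+1}\oslash V_{t+1}^{\odot1/2}\) with \(\operatorname{sign}(\nabla L(W_t))\), which is \(\Delta_{\max}(\nabla L(W_t))\). We aim for the cumulative bounds
\[
\operatorname{mar}_X(W_t)\ge\gamma_{\max}(X)S_t-r_t,\qquad \|W_t\|_{\max}\le C_0+C_1S_t,\qquad r_t=o(S_t).
\]
We then conclude as in Theorem~\ref{thm:app-ms-full-batch}, using Lemma~\ref{lem:app-general-lr-remainders} for the remainder and Lemma~\ref{lem:app-general-lr-normalization} with \(\mathfrak m=\operatorname{mar}_X\), \(\psi=\|\cdot\|_{\max}\), \(\Gamma=\Gamma_*=\gamma_{\max}(X)\). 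The accumulation-point claim then follows from Lemma~\ref{lem:exact-margin-direction-convergence}.

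\textbf{Step 1: bounded updates.} Since \(\beta_1\le\beta_2\), Cauchy--Schwarz on the exponential weights gives
\[
|M_{t+1}[c,j]|\le\frac{1-\beta_1}{\sqrt{1-\beta_2}}\,V_{t+1}[c,j]^{1/2}.
\]
Hence every Adam update has max-norm at most a constant \(C_1\). We also need \(V_{t+1}[c,j]>0\) for all \(t\). This holds because \(V_{t+1}\ge\beta_2^{t}(1-\beta_2)\nabla L(W_0)^{\odot2}\ge\beta_2^t(1-\beta_2)\omega\), so the update is well defined. This gives the norm bound. We can sharpen \(C_1\) to \(1+o(1)\) once Step 2 shows the ratio concentrates near a sign.

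\textbf{Step 2: comparing Adam's direction with the sign direction.} We use the descent lemma with the Hessian control \eqref{eq:app-ms-hessian}. The first-order term is \(\langle\nabla L(W_t),M_{t+1}\oslash V_{t+1}^{\odot1/2}\rangle\), and we want to show it is at least \(\|\nabla L(W_t)\|_{\mathrm{sum}}\) up to an error of order \(\eta_t\)-memory times \(\mathcal G(W_t)\). The mechanism is as follows. Over a memory window of length about \(1/(1-\beta_2)\), the iterates move by \(O(\sum_{\tau}\eta_{t-\tau})\) in max-norm. The gradient of cross-entropy therefore changes multiplicatively by a factor of at most \(\exp(c_1\sum_\tau\eta_{t-\tau})\), using the logit bound in Lemma~\ref{lem:app-ms-basic}. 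We write \(M_{t+1}\) and \(V_{t+1}\) as weighted sums of \(\nabla L(W_s)\) and \(\nabla L(W_s)^{\odot2}\), and compare each with \(\nabla L(W_t)\) via these multiplicative factors. The discrepancy is then bounded by
\[
\sum_s\beta^s\Bigl(e^{c_1\sum_{\tau\le s}\eta_{t-\tau}}-1\Bigr)\,|\nabla L(W_t)|,
\]
which Assumption~\ref{ass:lr-momentum} bounds by \(c_2\eta_t|\nabla L(W_t)|\) for \(t\ge t_0\). The geometrically old terms, including those from the transient near \(W_0\), decay like \(\beta_2^t\). Their contribution is absorbed because \(\omega\) keeps \(V\) bounded below relative to these terms. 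Combined with \(\|\nabla L(W)\|_{\mathrm{sum}}\ge\gamma_{\max}(X)\mathcal G(W)\) by duality, this yields
\[
L(W_{t+1})\le L(W_t)-\eta_t\gamma_{\max}(X)\,\mathcal G(W_t)\bigl(1-O(\eta_t)\bigr)+O(\eta_t^2)\,\mathcal G(W_t).
\]

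\textbf{Step 3: from the loss recursion to a margin bound.} We use \(\mathcal G\le L\), and \(\mathcal G\approx L\) once the loss is small. The recursion then gives \(\log(1/L(W_t))\ge\gamma_{\max}(X)S_t-r_t\), and \(\operatorname{mar}_X(W_t)\ge\log(1/(NL(W_t)))\) gives the margin bound. Early phases are handled exactly as in \citet{fan2026implicit}: first show that \(L\) eventually drops below \(\log2/N\), using \(\sum\eta_t=\infty\) together with the sign-alignment inequality.

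\textbf{Main obstacle.} The hardest part is Step 2. The sign mismatch between \(M\) and \(\nabla L(W_t)\) must be controlled \emph{entrywise and relatively}: some gradient entries can be far smaller than others, so an absolute bound is useless. The multiplicative-stability bound on softmax gradients is what makes each entry's error proportional to that entry itself. Our first attempt would be to bound the log-ratio \(\bigl|\log\bigl(\nabla L(W_s)[c,j]/\nabla L(W_t)[c,j]\bigr)\bigr|\) on entries where the sign persists, and to charge entries with sign changes to the \(e^{c_1\sum\eta}-1\) error.
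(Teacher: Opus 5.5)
Your overall skeleton is the paper's: establish $\operatorname{mar}_X(W_t)\ge\gamma_{\max}(X)S_t-r_t$ and $\|W_t\|_{\max}\le C_0+S_t$, then apply Lemma~\ref{lem:app-general-lr-remainders} and Lemma~\ref{lem:app-general-lr-normalization}. The paper does not rederive these two cumulative bounds. It takes them from the pre-rate analysis of \citet[Appendix~G]{fan2026implicit}, with $r_t$ made of a finite prefix, $\sum_s\eta_s^{3/2}$, and an exponentially weighted remainder. Your own derivation of them has two genuine gaps.

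\textbf{First gap: Step~2 rests on a false premise.} Multiplicative stability holds for the per-sample entries $(p_{n,c}(W)-\mathbf 1\{y_n=c\})x_n[j]$. An entry of the full-batch gradient, however, is a signed sum of these over $n$, mixing features of both signs and correct-class with wrong-class terms. It can therefore be tiny, vanish, or change sign while its summands are of order $\mathcal G(W_t)$. What you actually get is $|\nabla L(W_s)[c,j]-\nabla L(W_t)[c,j]|\le(e^{c_1\sum\eta}-1)\,C\max_n|x_n[j]|\,\mathcal G(W_t)$. This error is measured in units of $\mathcal G(W_t)$, not of $|\nabla L(W_t)[c,j]|$. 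The log-ratio bound you propose fails even on entries whose sign persists.

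The standard repair is a threshold split. On entries with $|\nabla L(W_t)[c,j]|\ge\tau_t\mathcal G(W_t)$, where $\tau_t\to0$ slowly enough relative to $\eta_t$, the Adam direction $\Delta_t:=M_{t+1}\oslash V_{t+1}^{\odot1/2}$ equals $\operatorname{sign}(\nabla L(W_t))$ up to a vanishing relative error. On the remaining entries you only have the crude ratio bound, which costs $O(kd\,\tau_t)\mathcal G(W_t)$ in the alignment. The descent coefficient is then $\gamma_{\max}(X)$ minus a fractional power of $\eta_t$, not $\gamma_{\max}(X)(1-O(\eta_t))$. The remainder becomes $\sum_s\eta_s^{q}$ for some $q>1$ (the paper's list has $q=3/2$). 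This is still $o(S_t)$ by Lemma~\ref{lem:app-general-lr-remainders}, so the conclusion survives, but not by the argument you wrote.

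\textbf{Second gap: the norm bound needs coefficient one on $S_t$, and you do not establish it.} Lemma~\ref{lem:app-general-lr-normalization} requires $\|W_t\|_{\max}\le C_0+S_t+o(S_t)$.
\begin{itemize}
\item The correct Cauchy--Schwarz constant in Step~1 is $A(\beta_1,\beta_2)=(1-\beta_1)/\sqrt{(1-\beta_2)(1-\beta_1^2/\beta_2)}$, not $(1-\beta_1)/\sqrt{1-\beta_2}$. For $\beta_1=\beta_2=\beta$ and a constant gradient the ratio tends to $1>\sqrt{1-\beta}$, which contradicts your constant.
\item One checks $A(\beta_1,\beta_2)^2-1=(\beta_2-\beta_1)^2/\bigl((1-\beta_2)(\beta_2-\beta_1^2)\bigr)$. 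So $A>1$ whenever $\beta_1<\beta_2$; for example $A\approx7.3$ at $(0.9,0.999)$.
\item With $\|W_t\|_{\max}\le C_0+AS_t$, the normalization lemma only yields $\liminf\operatorname{mar}_X(W_t)/\|W_t\|_{\max}\ge\gamma_{\max}(X)/A$.
\end{itemize}
Your remark that Step~2 sharpens the coefficient to $1+o(1)$ does not follow. Step~2 lower-bounds the alignment $\langle\nabla L(W_t),\Delta_t\rangle$, which is insensitive to entries where the gradient is small. The max-norm bound instead needs $\max_{c,j}\bigl|\sum_{s<t}\eta_s\Delta_s[c,j]\bigr|\le S_t+o(S_t)$ on every entry. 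This includes entries with $|\nabla L(W_t)[c,j]|=O(\eta_t\mathcal G(W_t))$, where both moving averages are of the same order as the fluctuations and your argument gives nothing better than $A$. Only when $\beta_1=\beta_2$ is the coefficient-one bound immediate, since then $A=1$. Otherwise you need a separate argument for it, which the paper imports from \citet{fan2026implicit} rather than proving.
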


\begin{proof}
The pre-rate part of \citet[Appendix~G]{fan2026implicit} gives
\[
\operatorname{mar}_X(W_t)\ge\gamma_{\max}(X)S_t-r_t,\qquad \|W_t\|_{\max}\le C_0+S_t,
\]
where \(r_t\) consists of a finite prefix, \(\sum_s\eta_s^{3/2}\), and an exponentially weighted remainder. Lemma~\ref{lem:app-general-lr-remainders} gives \(r_t=o(S_t)\), and Lemma~\ref{lem:app-general-lr-normalization} proves~\eqref{eq:app-adam-margin} and the accumulation-point conclusion.
\end{proof}

\subsection{Geometry-adapted random-reshuffling momentum}
\label{subsec:app-ga-rr-margin-input}

The full-batch max--sum argument above is sufficient for deterministic optimization, but using the same common feature radius \(\max_n\|x_n\|_1\) in the stochastic analysis is unnecessarily coarse when comparing different geometries. We therefore work directly with the primal--dual pair \((\psi,\psi^*)\) and define the atom radius
\begin{equation}
R_\psi(X):=\max_{n\in[N]}\max_{c\in[k]}\psi^*(e_cx_n^\top).
\label{eq:app-ga-radius}
\end{equation}
The stochastic results below hold for an arbitrary matrix norm \(\psi\); max--sum domination is not required. Assume \(b\mid N\), set \(m:=N/b\), and for every mini-batch \(B\subset[N]\) of size \(b\) define \(L_B(W):=b^{-1}\sum_{n\in B}\ell_n(W)\).

\begin{lemma}[Geometry-adapted cross-entropy bounds]
\label{lem:app-ga-atomic-package}
For every matrix norm \(\psi\), every sample \(x_n\), every matrix \(A\), and all classes \(a,c\in[k]\),
\begin{equation}
\|Ax_n\|_\infty\le R_\psi(X)\psi(A),\qquad \psi^*((e_a-e_c)x_n^\top)\le2R_\psi(X).
\label{eq:app-ga-logit-atom}
\end{equation}
For every \(W\),
\begin{equation}
\gamma_\psi(X)\mathcal G(W)\le\psi^*(\nabla L(W))\le2R_\psi(X)\mathcal G(W).
\label{eq:app-ga-gradient-proxy}
\end{equation}
For every sample \(n\), every \(W\), and every perturbation \(A\),
\begin{equation}
D^2\ell_n(W)[A,A]\le4R_\psi(X)^2\bigl(1-{\mathbb{S}}(Wx_n)_{y_n}\bigr)\psi(A)^2.
\label{eq:app-ga-hessian}
\end{equation}
Moreover, for all \(U,W\),
\begin{equation}
|L(U)-L(W)|\le2R_\psi(X)\psi(U-W).
\label{eq:app-ga-loss-lipschitz}
\end{equation}
\end{lemma}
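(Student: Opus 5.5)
Every claim in the lemma reduces to two facts. The first is the definition of $R_\psi(X)$ as the largest dual norm of a rank-one atom $e_cx_n^\top$. The second is the identity $(Ax_n)_c=\langle A,e_cx_n^\top\rangle$. Together with the generalized Cauchy--Schwarz inequality $\langle A,G\rangle\le\psi(A)\psi^*(G)$, this identity gives $|(Ax_n)_c|\le\psi(A)\psi^*(e_cx_n^\top)\le R_\psi(X)\psi(A)$. Taking the maximum over $c$ yields the logit bound in \eqref{eq:app-ga-logit-atom}. The second bound in \eqref{eq:app-ga-logit-atom} is the triangle inequality for $\psi^*$ applied to $(e_a-e_c)x_n^\top=e_ax_n^\top-e_cx_n^\top$.

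For \eqref{eq:app-ga-gradient-proxy} I would write the per-sample gradient as $\nabla\ell_n(W)=-(e_{y_n}-s_n)x_n^\top$ with $s_n:=\mathbb S(Wx_n)$. Since $e_{y_n}-s_n=\sum_{c\ne y_n}s_{n,c}(e_{y_n}-e_c)$ and the weights sum to $1-s_{n,y_n}$, the triangle inequality together with \eqref{eq:app-ga-logit-atom} gives $\psi^*(\nabla\ell_n)\le2R_\psi(X)(1-s_{n,y_n})$. Averaging over $n$ gives the upper bound. For the lower bound, take $W^*$ with $\psi(W^*)\le1$ and $\operatorname{mar}_X(W^*)=\gamma_\psi(X)$. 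Then
\[
\psi^*(\nabla L(W))\ge\langle-\nabla L(W),W^*\rangle=\frac1N\sum_n\sum_{c\ne y_n}s_{n,c}(w^*_{y_n}-w^*_c)^\top x_n\ge\gamma_\psi(X)\,\mathcal G(W),
\]
because each bracket is at least $\gamma_\psi(X)$ and $\sum_{c\ne y_n}s_{n,c}=1-s_{n,y_n}$.

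For the Hessian bound \eqref{eq:app-ga-hessian} I would use $D^2\ell_n(W)[A,A]=v^\top H(s_n)v$, where $v:=Ax_n$ and $H(s)=\operatorname{diag}(s)-ss^\top$. The softmax covariance inequality $v^\top H(s)v\le4(1-s_c)\|v\|_\infty^2$ was already used in Lemma~\ref{lem:app-ms-basic}. To verify it, note that $v^\top H(s)v=\operatorname{Var}_{c'\sim s}(v_{c'})\le\mathbb E_{c'\sim s}(v_{c'}-v_c)^2\le(1-s_c)(2\|v\|_\infty)^2$. Substituting $\|v\|_\infty\le R_\psi(X)\psi(A)$ from \eqref{eq:app-ga-logit-atom} finishes this step.

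For the Lipschitz bound \eqref{eq:app-ga-loss-lipschitz} I would apply the mean value theorem along the segment from $W$ to $U$. This gives $|L(U)-L(W)|\le\sup_{V}\psi^*(\nabla L(V))\,\psi(U-W)$. The upper bound in \eqref{eq:app-ga-gradient-proxy} together with $\mathcal G\le1$ then gives the constant $2R_\psi(X)$.

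None of these steps is a real obstacle. The only care needed is in the lower bound, where I must check that a maximizer $W^*$ exists (by compactness of the $\psi$-ball) and that the sign conventions in the gradient are consistent.
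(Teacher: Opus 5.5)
Your proposal is correct and follows the paper's proof essentially step for step: norm duality on the rank-one atoms $e_cx_n^\top$, the decomposition of the per-sample gradient into softmax-weighted atoms, a margin maximizer for the lower bound, and the variance-about-$v_{y_n}$ bound for the Hessian. The only difference is in the Lipschitz step, where you use the mean value theorem along the segment while the paper uses convexity, $L(U)-L(W)\le\langle\nabla L(U),U-W\rangle$, and then exchanges $U$ and $W$; both rely on the same upper gradient bound together with $\mathcal G\le1$.
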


\begin{proof}
For every \(c\in[k]\), norm duality gives
\[
|e_c^\top Ax_n|=|\langle A,e_cx_n^\top\rangle|\le\psi(A)\psi^*(e_cx_n^\top)\le R_\psi(X)\psi(A),
\]
which proves the first bound in~\eqref{eq:app-ga-logit-atom}; the second follows from the triangle inequality. Writing \(p_n(W):={\mathbb{S}}(Wx_n)\), the sample gradient has the decomposition
\[
\nabla\ell_n(W)=\sum_{c\ne y_n}p_{n,c}(W)(e_c-e_{y_n})x_n^\top,
\]
so
\[
\psi^*(\nabla\ell_n(W))\le2R_\psi(X)\bigl(1-p_{n,y_n}(W)\bigr).
\]
Averaging proves the upper bound in~\eqref{eq:app-ga-gradient-proxy}. For the lower bound, choose \(A_\star\) with \(\psi(A_\star)\le1\) and margin \(\gamma_\psi(X)\); then
\[
-\nabla L(W)=\frac1N\sum_{n=1}^N\sum_{c\ne y_n}p_{n,c}(W)(e_{y_n}-e_c)x_n^\top
\]
gives \(\psi^*(\nabla L(W))\ge\langle A_\star,-\nabla L(W)\rangle\ge\gamma_\psi(X)\mathcal G(W)\). For the Hessian bound, set \(v:=Ax_n\) and let \(C\) be categorical with law \(p_n(W)\). Then
\[
D^2\ell_n(W)[A,A]=\operatorname{Var}(v_C)\le\mathbb E[(v_C-v_{y_n})^2]\le4(1-p_{n,y_n}(W))\|v\|_\infty^2,
\]
which proves~\eqref{eq:app-ga-hessian}. Finally, convexity and the upper gradient bound, together with \(\mathcal G\le1\), yield \(L(U)-L(W)\le2R_\psi(X)\psi(U-W)\); exchanging \(U,W\) proves~\eqref{eq:app-ga-loss-lipschitz}.
\end{proof}

\begin{lemma}[Geometry-adapted softmax and gradient stability]
\label{lem:app-ga-stability-package}
For \(U,W\in\mathbb R^{k\times d}\), define \(h:=R_\psi(X)\psi(U-W)\). Then, for every sample \(n\) and class \(c\),
\begin{equation}
e^{-2h}\le\frac{p_{n,c}(U)}{p_{n,c}(W)}\le e^{2h},\qquad e^{-2h}\le\frac{1-p_{n,y_n}(U)}{1-p_{n,y_n}(W)}\le e^{2h}.
\label{eq:app-ga-softmax-ratios}
\end{equation}
Consequently,
\begin{equation}
\mathcal G(U)\le e^{2R_\psi(X)\psi(U-W)}\mathcal G(W),
\label{eq:app-ga-proxy-stability}
\end{equation}
\begin{equation}
\psi^*(\nabla L(U)-\nabla L(W))\le2R_\psi(X)\Bigl(e^{2R_\psi(X)\psi(U-W)}-1\Bigr)\mathcal G(W),
\label{eq:app-ga-full-gradient-stability}
\end{equation}
and, for every mini-batch \(B\),
\begin{equation}
\psi^*(\nabla L_B(U)-\nabla L_B(W))\le2mR_\psi(X)\Bigl(e^{2R_\psi(X)\psi(U-W)}-1\Bigr)\mathcal G(W).
\label{eq:app-ga-batch-gradient-stability}
\end{equation}
\end{lemma}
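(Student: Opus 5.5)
The plan is to reduce everything to the logit-stability bound of Lemma~\ref{lem:app-ga-atomic-package} and then propagate it through the softmax. Set \(\Delta:=U-W\) and, for a sample \(n\), \(v:=\Delta x_n\in\mathbb R^k\). By \eqref{eq:app-ga-logit-atom}, \(\|v\|_\infty\le h\). The softmax ratio can then be written as
\[
\frac{p_{n,c}(U)}{p_{n,c}(W)}=\frac{e^{v_c}\sum_\ell e^{(Wx_n)_\ell}}{\sum_\ell e^{(Wx_n)_\ell+v_\ell}}.
\]
The numerator factor \(e^{v_c}\) lies in \([e^{-h},e^h]\). The denominator ratio \(\sum_\ell e^{(Wx_n)_\ell+v_\ell}/\sum_\ell e^{(Wx_n)_\ell}\) is a weighted average of \(e^{v_\ell}\), so it also lies in \([e^{-h},e^h]\). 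Together these give the first pair in \eqref{eq:app-ga-softmax-ratios}. For the second pair, write \(1-p_{n,y_n}=\sum_{c\ne y_n}p_{n,c}\), which is a sum of terms each perturbed by a factor in \([e^{-2h},e^{2h}]\). This yields the ratio bound directly. Averaging over \(n\) then gives \eqref{eq:app-ga-proxy-stability}.

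For the full-gradient bound, I will use the decomposition from the proof of Lemma~\ref{lem:app-ga-atomic-package}:
\[
\nabla\ell_n(W)=\sum_{c\ne y_n}p_{n,c}(W)(e_c-e_{y_n})x_n^\top .
\]
This gives
\[
\nabla\ell_n(U)-\nabla\ell_n(W)=\sum_{c\ne y_n}\bigl(p_{n,c}(U)-p_{n,c}(W)\bigr)(e_c-e_{y_n})x_n^\top .
\]
Applying the triangle inequality and \(\psi^*((e_c-e_{y_n})x_n^\top)\le2R_\psi(X)\), we bound \(\psi^*\) of this difference by \(2R_\psi(X)\sum_{c\ne y_n}|p_{n,c}(U)-p_{n,c}(W)|\). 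From the ratio bounds, \(|p_{n,c}(U)-p_{n,c}(W)|\le\max(e^{2h}-1,1-e^{-2h})\,p_{n,c}(W)=(e^{2h}-1)p_{n,c}(W)\), since \(1-e^{-x}\le e^x-1\). Summing over \(c\ne y_n\) gives \(2R_\psi(X)(e^{2h}-1)(1-p_{n,y_n}(W))\). Averaging over \(n\) then proves \eqref{eq:app-ga-full-gradient-stability}.

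For the mini-batch bound, the same per-sample estimate averaged over \(n\in B\) gives
\[
2R_\psi(X)(e^{2h}-1)\,b^{-1}\sum_{n\in B}\bigl(1-p_{n,y_n}(W)\bigr).
\]
Since the summands are nonnegative, \(b^{-1}\sum_{n\in B}\le b^{-1}\sum_{n\in[N]}=(N/b)\,\mathcal G(W)=m\,\mathcal G(W)\). This is exactly \eqref{eq:app-ga-batch-gradient-stability}.

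None of the steps is a serious obstacle. The only point requiring care is to apply the per-sample difference bound with absolute values before averaging, so that the factor \(m\) enters only through the nonnegativity comparison \(\sum_{n\in B}\le\sum_{n\in[N]}\). This also means cancellations between samples are never used.
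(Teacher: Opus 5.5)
Your proposal is correct and follows essentially the same route as the paper: you bound the logit perturbation by \(h\), obtain the softmax ratio bounds in \([e^{-2h},e^{2h}]\), convert them into \(|p_{n,c}(U)-p_{n,c}(W)|\le(e^{2h}-1)p_{n,c}(W)\), and then average, enlarging the batch sum to all of \([N]\) to pick up the factor \(m\). The only difference is cosmetic. You expand the gradient difference over the pair atoms \((e_c-e_{y_n})x_n^\top\), each of dual norm at most \(2R_\psi(X)\), whereas the paper expands \((q-p)x_n^\top\) over single-row atoms \(e_cx_n^\top\) and bounds the \(c=y_n\) term separately using the second ratio bound; both give the same constant.
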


\begin{proof}
Set \(\delta_n:=(U-W)x_n\). Lemma~\ref{lem:app-ga-atomic-package} gives \(\|\delta_n\|_\infty\le h\), and the softmax formula gives
\[
\frac{p_{n,c}(U)}{p_{n,c}(W)}=e^{\delta_{n,c}}\frac{\sum_{a=1}^ke^{e_a^\top Wx_n}}{\sum_{a=1}^ke^{e_a^\top Wx_n+\delta_{n,a}}}\in[e^{-2h},e^{2h}].
\]
Summing over \(c\ne y_n\) proves the second ratio and~\eqref{eq:app-ga-proxy-stability}. If \(p:=p_n(W)\) and \(q:=p_n(U)\), the ratio bounds imply
\[
\sum_{c\ne y_n}|q_c-p_c|\le(e^{2h}-1)(1-p_{y_n}),\qquad |q_{y_n}-p_{y_n}|\le(e^{2h}-1)(1-p_{y_n}).
\]
Since \(\nabla\ell_n(U)-\nabla\ell_n(W)=(q-p)x_n^\top\),
\[
\psi^*((q-p)x_n^\top)\le\sum_{c=1}^k|q_c-p_c|\psi^*(e_cx_n^\top)\le2R_\psi(X)(e^{2h}-1)(1-p_{y_n}).
\]
Averaging over all samples proves~\eqref{eq:app-ga-full-gradient-stability}; averaging over \(B\), enlarging the resulting nonnegative sample sum to \([N]\), and using \(N/b=m\) proves~\eqref{eq:app-ga-batch-gradient-stability}.
\end{proof}

\begin{lemma}[Random-reshuffling noise and cancellation]
\label{lem:app-ga-batch-noise}
\label{lem:app-ga-frozen-rr}
For a mini-batch \(B\) of size \(b\), define \(\Xi_B(W):=\nabla L_B(W)-\nabla L(W)\). Then
\begin{equation}
\psi^*(\Xi_B(W))\le2(m-1)R_\psi(X)\mathcal G(W).
\label{eq:app-ga-batch-noise}
\end{equation}
Moreover, let \(\{B_t\}_{t\ge0}\) be any random-reshuffling sequence, so that \(B_{qm},\ldots,B_{qm+m-1}\) partition \([N]\) for every epoch \(q\), and define
\begin{equation}
E_t(W):=(1-\beta)\sum_{\tau=0}^t\beta^\tau\bigl(\nabla L_{B_{t-\tau}}(W)-\nabla L(W)\bigr).
\label{eq:app-ga-frozen-ema}
\end{equation}
Then, deterministically for every reshuffling realization,
\begin{equation}
\psi^*(E_t(W))\le(1-\beta)m(m^2-1)R_\psi(X)\mathcal G(W).
\label{eq:app-ga-frozen-rr-bound}
\end{equation}
\end{lemma}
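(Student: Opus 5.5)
The plan is to express both quantities as signed combinations of the per-sample gradients and control them with the atom bound from Lemma~\ref{lem:app-ga-atomic-package}. Throughout, write \(g_n:=1-p_{n,y_n}(W)\ge0\), so that \(\mathcal G(W)=N^{-1}\sum_n g_n\). The proof of Lemma~\ref{lem:app-ga-atomic-package} gives
\[
\psi^*(\nabla\ell_n(W))\le2R_\psi(X)\,g_n .
\]

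\emph{Single-batch noise.} I would write \(\Xi_B(W)=\sum_{n=1}^N c_n\nabla\ell_n(W)\). The coefficients are \(c_n=1/b-1/N=(m-1)/N\) for \(n\in B\) and \(c_n=-1/N\) for \(n\notin B\). If \(m=1\), then \(B=[N]\) and \(\Xi_B=0\), so the bound is trivial. If \(m\ge2\), then \(|c_n|\le(m-1)/N\) for every \(n\). The triangle inequality for \(\psi^*\) then gives
\[
\psi^*(\Xi_B(W))\le2R_\psi(X)\frac{m-1}{N}\sum_{n}g_n=2(m-1)R_\psi(X)\mathcal G(W),
\]
which is \eqref{eq:app-ga-batch-noise}.

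\emph{EMA cancellation.} Set \(D_j:=\nabla L_{B_j}(W)-\nabla L(W)\), so that \(E_t=(1-\beta)\sum_{j=0}^t\beta^{t-j}D_j\). The key identity is \(\sum_{s=0}^{m-1}D_{qm+s}=0\) for every epoch \(q\). This holds because the batches of epoch \(q\) partition \([N]\) and all have size \(b\), so \(m^{-1}\sum_s\nabla L_{B_{qm+s}}=\nabla L\).

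Write \(t=qm+p\) with \(0\le p\le m-1\). The indices \(0,\dots,t\) split into the complete epochs \(0,\dots,q-1\) and a partial current block \(qm,\dots,t\) of \(p+1\) terms.

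\begin{itemize}
\item \emph{Partial block.} If \(p=m-1\), the block is a full epoch and is handled like the others. Otherwise it has at most \(m-1\) terms, each with weight at most \(1\). By the first part it contributes at most \((1-\beta)(m-1)\cdot2(m-1)R_\psi(X)\mathcal G(W)\).
\item \emph{Complete epoch \(q'\).} Let \(j_{\max}=q'm+m-1\) and \(a:=t-j_{\max}\). Subtracting \(\beta^{a}\sum_j D_j=0\), the epoch contributes \(\sum_{s}(\beta^{a+s}-\beta^{a})D_{j_{\max}-s}\). The weight differences satisfy \(|\beta^{a+s}-\beta^a|\le\beta^a s(1-\beta)\). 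Summing over \(s\le m-1\) and using the first part, the epoch contributes at most \(\beta^a(1-\beta)\tfrac{m(m-1)}2\cdot2(m-1)R_\psi(X)\mathcal G(W)\).
\item \emph{Sum over complete epochs.} The values of \(a\) over the epochs are spaced by \(m\), so \(\sum\beta^a\le(1-\beta^m)^{-1}\le(1-\beta)^{-1}\). Together with the outer factor \((1-\beta)\), the complete epochs contribute at most \((1-\beta)m(m-1)^2R_\psi(X)\mathcal G(W)\).
\end{itemize}

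Adding the two pieces gives a total of at most
\[
(1-\beta)(m-1)\bigl(m^2+m-2\bigr)R_\psi(X)\mathcal G(W)\le(1-\beta)m(m^2-1)R_\psi(X)\mathcal G(W),
\]
which is \eqref{eq:app-ga-frozen-rr-bound}. The argument holds deterministically for every reshuffling realization.

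The main obstacle is the bookkeeping for the complete epochs. One must exploit the exact within-epoch cancellation to gain a factor \((1-\beta)\) per epoch, which offsets the \((1-\beta)^{-1}\) from the geometric sum over epochs. Without this cancellation, the naive bound would be \(O(1)\) rather than \(O(1-\beta)\). The only care needed is to keep the loss from \(1-\beta^m\ge1-\beta\) within the stated constant, and the final comparison \(m^2+m-2\le m^2+m\) shows there is room.
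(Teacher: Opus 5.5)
Your proof is correct and follows the paper's argument. Both use the same coefficient decomposition of \(\Xi_B\) (the paper writes it as \(\frac{m-1}{m}(\nabla L_B-\nabla L_{B^c})\)), the exact within-epoch cancellation with \(1-\beta^s\le s(1-\beta)\), the geometric sum over epochs via \((1-\beta)/(1-\beta^m)\le1\), and a crude bound on the incomplete epoch; your anchoring of each complete epoch at its most recent weight \(\beta^{a}\) with \(a\ge0\) is slightly cleaner than the paper's displayed factor \(\beta^{t-(q+1)m}\), whose exponent can equal \(-1\) for the last complete epoch, and counting at most \(m-1\) incomplete terms gives the sharper intermediate constant \((m-1)^2(m+2)\).
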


\begin{proof}
When \(m\ge2\), let \(B^c=[N]\setminus B\). The finite-population identity
\[
\Xi_B(W)=\frac{m-1}{m}\bigl(\nabla L_B(W)-\nabla L_{B^c}(W)\bigr)
\]
and the single-sample bound in the proof of Lemma~\ref{lem:app-ga-atomic-package} give
\[
\psi^*(\Xi_B(W))\le\frac{m-1}{N}\sum_{n=1}^N\psi^*(\nabla\ell_n(W))\le2(m-1)R_\psi(X)\mathcal G(W),
\]
while \(m=1\) is immediate.

For the second assertion, write \(\Xi_t(W):=\nabla L_{B_t}(W)-\nabla L(W)\). Every complete epoch satisfies \(\sum_{r=0}^{m-1}\Xi_{qm+r}(W)={\mathbf 0}\). Put \(K_\Xi(W):=2(m-1)R_\psi(X)\mathcal G(W)\). For a complete epoch contained in \(\{0,\ldots,t\}\), subtracting the constant coefficient \(\beta^{t-qm}\) and using \(1-\beta^r\le r(1-\beta)\) gives
\[
\psi^*\left(\sum_{r=0}^{m-1}\beta^{t-qm-r}\Xi_{qm+r}(W)\right)\le K_\Xi(W)\beta^{t-(q+1)m}(1-\beta)\frac{m(m-1)}2.
\]
Summing over complete epochs and using \((1-\beta)/(1-\beta^m)\le1\), then adding the at most \(m\) terms in the current incomplete epoch, gives
\[
\psi^*(E_t(W))\le(1-\beta)\frac{m(m+1)}2K_\Xi(W)=(1-\beta)m(m^2-1)R_\psi(X)\mathcal G(W).
\]
\end{proof}

Consider now the moving-weight random-reshuffling momentum process
\begin{equation}
M_{t+1}=\beta M_t+(1-\beta)\nabla L_{B_t}(W_t),\qquad \Delta_t\in\operatorname*{arg\,max}_{\psi(\Delta)\le1}\langle M_{t+1},\Delta\rangle,\qquad W_{t+1}=W_t-\eta_t\Delta_t,
\label{eq:app-ga-momentum-dynamics}
\end{equation}
with \(M_0={\mathbf 0}\). 
Since \(\psi(\Delta_t)\le1\),
\begin{equation}
\psi(W_t-W_{t-\tau})\le\sum_{q=1}^{\tau}\eta_{t-q}.
\label{eq:app-ga-path-length}
\end{equation}

\begin{proposition}[Geometry-adapted random-reshuffling momentum control]
\label{prop:app-ga-momentum-approximation}
\label{prop:app-ga-descent-recurrence}
Assume Assumptions~\ref{ass:lr-base} and~\ref{ass:lr-momentum}. For every fixed finite dataset, fixed matrix norm \(\psi\), and fixed \(\beta\in(0,1)\), there exist \(t_{0,\psi,\beta}<\infty\) and \(c_{\psi,\beta}<\infty\) such that, for all \(t\ge t_{0,\psi,\beta}\),
\begin{align}
\psi^*(M_{t+1}-\nabla L(W_t))\le\Bigl[&2m(1-\beta)R_\psi(X)c_{\psi,\beta}\eta_t+(1-\beta)m(m^2-1)R_\psi(X)\nonumber\\
&+2R_\psi(X)\beta^{t+1}\Bigr]\mathcal G(W_t).
\label{eq:app-ga-momentum-approximation}
\end{align}
Define
\begin{equation}
\rho_{\psi,\beta,b}(X):=\gamma_\psi(X)-2(1-\beta)m(m^2-1)R_\psi(X),
\label{eq:app-ga-effective-margin}
\end{equation}
\begin{equation}
C_{\psi,\beta,b}:=4m(1-\beta)R_\psi(X)c_{\psi,\beta}+2R_\psi(X)^2e^{2R_\psi(X)\eta_0},\qquad D_\psi:=4R_\psi(X).
\label{eq:app-ga-descent-constants}
\end{equation}
Then, for all sufficiently large \(t\),
\begin{equation}
L(W_{t+1})\le L(W_t)-\eta_t\Bigl(\rho_{\psi,\beta,b}(X)-C_{\psi,\beta,b}\eta_t-D_\psi\beta^{t/2}\Bigr)\mathcal G(W_t).
\label{eq:app-ga-descent-recurrence}
\end{equation}
\end{proposition}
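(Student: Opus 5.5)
We split $M_{t+1}-\nabla L(W_t)$ into three pieces by unrolling the recursion with $M_0=\mathbf 0$. Since $M_{t+1}=(1-\beta)\sum_{\tau=0}^t\beta^\tau\nabla L_{B_{t-\tau}}(W_{t-\tau})$, we write
\[
M_{t+1}-\nabla L(W_t)=\underbrace{(1-\beta)\sum_{\tau=0}^t\beta^\tau\bigl(\nabla L_{B_{t-\tau}}(W_{t-\tau})-\nabla L_{B_{t-\tau}}(W_t)\bigr)}_{\text{(I) drift}}+\underbrace{E_t(W_t)}_{\text{(II) reshuffling}}-\underbrace{\beta^{t+1}\nabla L(W_t)}_{\text{(III) initialization}}.
\]
This uses $(1-\beta)\sum_{\tau\le t}\beta^\tau=1-\beta^{t+1}$.

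Term (II) is bounded directly by Lemma~\ref{lem:app-ga-frozen-rr} evaluated at the frozen point $W=W_t$. This gives the middle summand $(1-\beta)m(m^2-1)R_\psi(X)\mathcal G(W_t)$. Term (III) is bounded by the upper bound in \eqref{eq:app-ga-gradient-proxy}, giving $2R_\psi(X)\beta^{t+1}\mathcal G(W_t)$.

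For term (I) we apply \eqref{eq:app-ga-batch-gradient-stability} with $U=W_{t-\tau}$ and $W=W_t$. Combined with the path-length bound \eqref{eq:app-ga-path-length}, this gives
\[
\psi^*(\text{(I)})\le2m(1-\beta)R_\psi(X)\,\mathcal G(W_t)\sum_{\tau=0}^t\beta^\tau\Bigl(e^{2R_\psi(X)\sum_{q=1}^\tau\eta_{t-q}}-1\Bigr).
\]
Assumption~\ref{ass:lr-momentum} with $c_1=2R_\psi(X)$ provides $t_0$ and $c_2=:c_{\psi,\beta}$ bounding the sum by $c_{\psi,\beta}\eta_t$, which yields \eqref{eq:app-ga-momentum-approximation}. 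The key choice is to expand all stability estimates around the current iterate $W_t$, so that only $\mathcal G(W_t)$ appears and no ratio $\mathcal G(W_{t-\tau})/\mathcal G(W_t)$ is needed.

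For the descent recurrence, we use a second-order Taylor expansion of $L$ along $W_t-s\eta_t\Delta_t$, $s\in[0,1]$. The first-order term is $-\eta_t\langle\nabla L(W_t),\Delta_t\rangle$. Writing $\nabla L=M_{t+1}-(M_{t+1}-\nabla L)$, using $\langle M_{t+1},\Delta_t\rangle=\psi^*(M_{t+1})$ and $\psi(\Delta_t)\le1$, and applying the triangle inequality, we get
\[
\langle\nabla L(W_t),\Delta_t\rangle\ge\psi^*(\nabla L(W_t))-2\psi^*(M_{t+1}-\nabla L(W_t)).
\]
Combining this with $\psi^*(\nabla L(W_t))\ge\gamma_\psi(X)\mathcal G(W_t)$ and \eqref{eq:app-ga-momentum-approximation} produces $\rho_{\psi,\beta,b}(X)$ and $4m(1-\beta)R_\psi c_{\psi,\beta}\eta_t$. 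The initialization part contributes $4R_\psi(X)\beta^{t+1}\le D_\psi\beta^{t/2}$.

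For the second-order term, \eqref{eq:app-ga-hessian} bounds the Hessian along the segment by $4R_\psi(X)^2\mathcal G(\cdot)$. Then \eqref{eq:app-ga-proxy-stability} with $\psi(s\eta_t\Delta_t)\le\eta_0$ converts this to $e^{2R_\psi\eta_0}\mathcal G(W_t)$. The factor $\tfrac12\eta_t^2$ gives the contribution $2R_\psi^2e^{2R_\psi\eta_0}\eta_t^2\mathcal G(W_t)$, which matches $C_{\psi,\beta,b}$.

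The main obstacle is the drift term (I). It needs the exponential weight $e^{2R_\psi\sum\eta}$ to be absorbed uniformly over the full history $\tau\le t$, including early times when $\eta$ is large. Assumption~\ref{ass:lr-momentum} is stated in exactly the needed form, so I would check carefully that the index convention $\sum_{\tau=1}^s\eta_{t-\tau}$ matches \eqref{eq:app-ga-path-length}. I would also verify that stability is applied with $W_t$ as the reference point, so that the bound holds for $\psi(U-W)$ arbitrarily large.
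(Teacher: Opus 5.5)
Your proposal is correct and matches the paper's proof essentially step for step. It uses the same three-term decomposition around the frozen point \(W_t\); the batch-stability bound combined with Assumption~\ref{ass:lr-momentum} at \(c_1=2R_\psi(X)\), Lemma~\ref{lem:app-ga-frozen-rr}, and the gradient upper bound; and then the first-order alignment inequality, the Taylor remainder via \eqref{eq:app-ga-hessian} and \eqref{eq:app-ga-proxy-stability}, and \(\beta^{t+1}\le\beta^{t/2}\), yielding exactly \(\rho_{\psi,\beta,b}\), \(C_{\psi,\beta,b}\), and \(D_\psi\).
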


\begin{proof}
Unrolling the momentum recursion and adding and subtracting frozen batch gradients at \(W_t\) gives
\begin{align}
M_{t+1}-\nabla L(W_t)=&(1-\beta)\sum_{\tau=0}^t\beta^\tau\Bigl(\nabla L_{B_{t-\tau}}(W_{t-\tau})-\nabla L_{B_{t-\tau}}(W_t)\Bigr)\nonumber\\
&+(1-\beta)\sum_{\tau=0}^t\beta^\tau\Bigl(\nabla L_{B_{t-\tau}}(W_t)-\nabla L(W_t)\Bigr)-\beta^{t+1}\nabla L(W_t).
\label{eq:app-ga-momentum-decomposition}
\end{align}
Lemma~\ref{lem:app-ga-stability-package} and~\eqref{eq:app-ga-path-length} bound the first term by
\[
2m(1-\beta)R_\psi(X)\mathcal G(W_t)\sum_{\tau=0}^t\beta^\tau\left(\exp\left(2R_\psi(X)\sum_{q=1}^{\tau}\eta_{t-q}\right)-1\right).
\]
Assumption~\ref{ass:lr-momentum}, applied with \(c_1=2R_\psi(X)\), bounds the final sum by \(c_{\psi,\beta}\eta_t\) for all sufficiently large \(t\). Lemma~\ref{lem:app-ga-frozen-rr} controls the second term, and the upper gradient bound in~\eqref{eq:app-ga-gradient-proxy} controls the last, proving~\eqref{eq:app-ga-momentum-approximation}.

For the descent recurrence, the steepest-descent property and norm duality give
\begin{equation}
\langle\nabla L(W_t),\Delta_t\rangle\ge\psi^*(\nabla L(W_t))-2\psi^*(M_{t+1}-\nabla L(W_t)).
\label{eq:app-ga-first-order-alignment}
\end{equation}
The mean-value Taylor formula,~\eqref{eq:app-ga-hessian}, and \(\psi(\Delta_t)\le1\) bound the quadratic remainder in \(L(W_t-\eta_t\Delta_t)\) by
\[
2\eta_t^2R_\psi(X)^2\sup_{0\le s\le1}\mathcal G(W_t-s\eta_t\Delta_t).
\]
Lemma~\ref{lem:app-ga-stability-package} bounds the supremum by \(e^{2R_\psi(X)\eta_0}\mathcal G(W_t)\). Substituting~\eqref{eq:app-ga-gradient-proxy} and~\eqref{eq:app-ga-momentum-approximation} into~\eqref{eq:app-ga-first-order-alignment}, adding the quadratic remainder, and using \(\beta^{t+1}\le\beta^{t/2}\) proves~\eqref{eq:app-ga-descent-recurrence}.
\end{proof}

\begin{corollary}[Geometry-adapted random-reshuffling momentum margin]
\label{cor:app-ga-margin-transfer}
Under Assumptions~\ref{ass:lr-base} and~\ref{ass:lr-momentum}, suppose \(\rho_{\psi,\beta,b}(X)>0\). Then
\begin{equation}
\liminf_{t\to\infty}\frac{\operatorname{mar}_X(W_t)}{\psi(W_t)}\ge\rho_{\psi,\beta,b}(X).
\label{eq:app-ga-margin-conclusion}
\end{equation}
Consequently, with
\begin{equation}
\varepsilon_{\psi,\beta,b}(X):=2(1-\beta)m(m^2-1)R_\psi(X),
\label{eq:app-ga-epsilon}
\end{equation}
one has
\begin{equation}
d_{\cos}\!\left(W_t,\mathcal U_{\psi,\varepsilon_{\psi,\beta,b}(X)}^X\right)\longrightarrow0.
\label{eq:app-ga-direction-conclusion}
\end{equation}
More quantitatively, there exist a finite \(t_2\) and \(C_{\mathrm{ga}}<\infty\) such that
\begin{equation}
\rho_{\psi,\beta,b}(X)-\frac{\operatorname{mar}_X(W_t)}{\psi(W_t)}\le C_{\mathrm{ga}}\frac{\mathcal R_t^{\psi,\beta,b}}{S_t},
\label{eq:app-ga-general-lr-bound}
\end{equation}
where
\begin{align}
\mathcal R_t^{\psi,\beta,b}:=&1+\sum_{s=0}^{t_2-1}\eta_s+C_{\psi,\beta,b}\sum_{s=t_2}^{t-1}\eta_s^2+\frac{D_\psi\eta_0}{1-\sqrt\beta}\nonumber\\
&+\sum_{s=t_2}^{t-1}\eta_s\exp\!\left(-\frac{\rho_{\psi,\beta,b}(X)}4\sum_{\tau=t_2}^{s-1}\eta_\tau\right).
\label{eq:app-ga-remainder}
\end{align}
\end{corollary}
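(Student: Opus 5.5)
The plan is to turn the descent recurrence of Proposition~\ref{prop:app-ga-descent-recurrence} into a cumulative margin bound, following the same template as Theorem~\ref{thm:app-ms-full-batch}, and then invoke Lemma~\ref{lem:app-general-lr-normalization} and Lemma~\ref{lem:effective-margin-direction-convergence}.

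\textbf{Step 1: a positive rate of loss decrease.} Since \(\eta_t\to0\) and \(\beta^{t/2}\to0\), I will choose \(t_2\) such that \(C_{\psi,\beta,b}\eta_t+D_\psi\beta^{t/2}\le\rho/2\) for all \(t\ge t_2\), where \(\rho:=\rho_{\psi,\beta,b}(X)>0\). From~\eqref{eq:app-ga-descent-recurrence} I then get \(L(W_{t+1})\le L(W_t)-\tfrac{\rho}{2}\eta_t\mathcal G(W_t)\). Since \(\mathcal G\ge(1-e^{-L})\gtrsim \min(L,1)\cdot\) const, this should give \(L(W_t)\lesssim\exp(-\tfrac{\rho}{4}\sum_{\tau=t_2}^{t-1}\eta_\tau)\) after a finite burn-in. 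In particular, \(L(W_t)<\log2/N\) eventually. At that point every sample is correctly classified, and \(\mathcal G(W)\) is comparable to \(L(W)\), i.e.\ \(L\le\mathcal G\cdot(1+o(1))\).

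\textbf{Step 2: the cumulative margin.} I will use the standard log-loss bookkeeping, as in \citet{fan2026implicit}. Using \(\operatorname{mar}_X(W)\ge-\log(NL(W))\) together with \(\log L(W_{t+1})\le\log L(W_t)-\eta_t(\rho-C\eta_t-D\beta^{t/2})\mathcal G(W_t)/L(W_t)\), I will bound \(\mathcal G/L\ge1-O(L)\) and sum the increments. This yields
\[
\operatorname{mar}_X(W_t)\ge\rho S_t-C_{\mathrm{ga}}\mathcal R_t^{\psi,\beta,b}.
\]
The terms of \(\mathcal R_t\) arise as follows. The prefix contributes \(1+\sum_{s<t_2}\eta_s\). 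The \(C\eta_s^2\) sum comes from the step-size error. The \(D_\psi\beta^{s/2}\eta_s\) sum is bounded by \(D_\psi\eta_0/(1-\sqrt\beta)\). The \(O(L)\) correction, weighted by \(\eta_s\), gives the exponentially weighted sum using Step~1.

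\textbf{Step 3: normalization.} Since \(\psi(\Delta_t)\le1\), I have \(\psi(W_t)\le\psi(W_0)+S_t\). Lemma~\ref{lem:app-general-lr-remainders} gives \(\mathcal R_t=o(S_t)\). Lemma~\ref{lem:app-general-lr-normalization} with \(\Gamma=\rho\le\gamma_\psi(X)\) then yields~\eqref{eq:app-ga-margin-conclusion}. Dividing the cumulative bound by \(\psi(W_t)\), and absorbing \(\psi(W_0)\) into the constant, gives~\eqref{eq:app-ga-general-lr-bound}. Finally, \(\rho=\gamma_\psi(X)-\varepsilon_{\psi,\beta,b}(X)\) with \(0\le\varepsilon<\gamma_\psi(X)\), so Lemma~\ref{lem:effective-margin-direction-convergence} gives~\eqref{eq:app-ga-direction-conclusion}.

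\textbf{Main obstacle.} I expect Step~2 to be the hard part: converting \(\mathcal G\) into the log-loss decrement without losing the rate. The approach I would try first is to use \(\log(1+u)\ge u-u^2\) on each sample loss, which gives \(L\le\mathcal G+O(\mathcal G^2)\). The relative error is then \(O(L(W_s))\), and this is exponentially small by Step~1.
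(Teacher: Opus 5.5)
Your proposal is correct and follows essentially the same route as the paper. The steps match: the coefficient in the descent recurrence is at least \(\rho/2\) after a finite time, there is a low-loss entrance, the log-loss-to-margin conversion bounds the deficit \(1-\mathcal G(W_s)/L(W_s)\) by \(\exp(-\frac{\rho}{4}\sum_{\tau=t_2}^{s-1}\eta_\tau)\), and the argument closes with Lemmas~\ref{lem:app-general-lr-remainders}, \ref{lem:app-general-lr-normalization}, and~\ref{lem:effective-margin-direction-convergence}.

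There is one small slip to fix. The inequality \(\mathcal G\ge1-e^{-L}\) is false in general; Jensen's inequality gives the reverse. Use \(\mathcal G\ge(1-e^{-L})/N\) for the entrance step, and obtain the exact rate \(\rho/4\) from \(\mathcal G\ge L/2\), which holds once \(L\le1/N\). Likewise, the conversion should read \(\operatorname{mar}_X(W)\ge-\log(2NL(W))\) rather than \(-\log(NL(W))\); the difference is a constant absorbed into \(\mathcal R_t^{\psi,\beta,b}\).
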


\begin{proof}
Write \(\rho:=\rho_{\psi,\beta,b}(X)>0\). Since \(\eta_t\to0\) and \(\beta^{t/2}\to0\), Proposition~\ref{prop:app-ga-descent-recurrence} gives a finite \(t_1\) after which the coefficient of \(\eta_t\mathcal G(W_t)\) is at least \(\rho/2\). The loss Lipschitz bound~\eqref{eq:app-ga-loss-lipschitz}, the divergence \(S_t\to\infty\), and the standard low-loss entrance argument give a finite \(t_2\ge t_1\) after which all empirical margins are nonnegative. The usual cross-entropy loss-to-margin conversion then yields
\[
\operatorname{mar}_X(W_t)\ge\rho\sum_{s=t_2}^{t-1}\eta_s\frac{\mathcal G(W_s)}{L(W_s)}-C_{\psi,\beta,b}\sum_{s=t_2}^{t-1}\eta_s^2-\frac{D_\psi\eta_0}{1-\sqrt\beta}-C_0,
\]
where \(C_0<\infty\) absorbs the finite prefix. The post-entrance descent recurrence and the standard cross-entropy proxy inequalities give
\[
\frac{\mathcal G(W_s)}{L(W_s)}\ge1-\exp\!\left(-\frac{\rho}{4}\sum_{\tau=t_2}^{s-1}\eta_\tau\right).
\]
Hence \(\operatorname{mar}_X(W_t)\ge\rho S_t-C_1\mathcal R_t^{\psi,\beta,b}\) for some \(C_1<\infty\). Lemma~\ref{lem:app-general-lr-remainders} gives \(\mathcal R_t^{\psi,\beta,b}=o(S_t)\), while \(\psi(W_t)\le\psi(W_0)+S_t\) follows directly from the normalized update. Lemma~\ref{lem:app-general-lr-normalization} proves~\eqref{eq:app-ga-general-lr-bound} and~\eqref{eq:app-ga-margin-conclusion}, and Lemma~\ref{lem:effective-margin-direction-convergence} gives~\eqref{eq:app-ga-direction-conclusion}.
\end{proof}

For the three geometries used in the main text, the atom radius is explicit:
\begin{equation}
R_{\mathrm{row}}(X)=\max_n\|x_n\|_2,\qquad R_{\mathrm{sp}}(X)=\max_n\|x_n\|_2,\qquad R_{\max}(X)=\max_n\|x_n\|_1.
\label{eq:app-ga-three-radii}
\end{equation}
Indeed, the duals of the row, spectral, and entrywise max norms are the sum-of-row-Euclidean, nuclear, and entrywise sum norms, respectively, while \(e_cx_n^\top\) has one nonzero row and rank one.

\subsection{Specialization to row normalization}

Define
\begin{equation}
\|A\|_{\mathrm{row}}:=\max_{r\in[k]}\|A_{r,:}\|_2.
\label{eq:app-row-norm}
\end{equation}
For a matrix \(G\), define its row-wise normalization by
\begin{equation}
\bigl(N_{\mathrm{row}}(G)\bigr)_{r,:}:=
\begin{cases}
G_{r,:}/\|G_{r,:}\|_2,&G_{r,:}\ne{\mathbf 0},\\
{\mathbf 0},&G_{r,:}={\mathbf 0}.
\end{cases}
\label{eq:app-row-normalization}
\end{equation}

\begin{proposition}[Row normalization is row-norm steepest descent]
\label{prop:app-row-steepest}
For every \(A,G\in\mathbb R^{k\times d}\),
\begin{equation}
\|A\|_{\max}\le\|A\|_{\mathrm{row}}\le\|A\|_{\mathrm{sum}},
\label{eq:app-row-max-sum}
\end{equation}
and
\begin{equation}
N_{\mathrm{row}}(G)\in\operatorname*{arg\,max}_{\|\Delta\|_{\mathrm{row}}\le1}\langle G,\Delta\rangle,\qquad \max_{\|\Delta\|_{\mathrm{row}}\le1}\langle G,\Delta\rangle=\sum_{r=1}^k\|G_{r,:}\|_2.
\label{eq:app-row-steepest}
\end{equation}
Thus row-wise normalized gradient methods are normalized steepest-descent methods for the row norm.
\end{proposition}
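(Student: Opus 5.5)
The proof splits into three elementary parts: the norm sandwich \eqref{eq:app-row-max-sum}, the value of the linear maximization in \eqref{eq:app-row-steepest}, and membership of \(N_{\mathrm{row}}(G)\) in the argmax. The final sentence then follows by comparing definitions.

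For the sandwich, I write \(A\) by rows \(a_r^\top\). Every entry satisfies \(|A_{r,j}|\le\|a_r\|_2\le\max_{r'}\|a_{r'}\|_2\), which gives \(\|A\|_{\max}\le\|A\|_{\mathrm{row}}\). For the upper bound, \(\|a_r\|_2\le\|a_r\|_1\) for each row, so
\[
\max_r\|a_r\|_2\le\max_r\|a_r\|_1\le\sum_r\|a_r\|_1=\|A\|_{\mathrm{sum}}.
\]

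For the maximization, I decompose \(\langle G,\Delta\rangle=\sum_r\langle G_{r,:},\Delta_{r,:}\rangle\). The constraint \(\|\Delta\|_{\mathrm{row}}\le1\) is exactly the product constraint \(\|\Delta_{r,:}\|_2\le1\) for every \(r\), so the problem separates across rows. By Cauchy--Schwarz, each summand is at most \(\|G_{r,:}\|_2\), hence \(\langle G,\Delta\rangle\le\sum_r\|G_{r,:}\|_2\). I then check that \(N_{\mathrm{row}}(G)\) is feasible and attains this bound:
\begin{itemize}
\item If \(G_{r,:}\ne\mathbf 0\), its row has unit norm and contributes \(\langle G_{r,:},G_{r,:}\rangle/\|G_{r,:}\|_2=\|G_{r,:}\|_2\).
\item If \(G_{r,:}=\mathbf 0\), the row is zero, which is feasible and contributes \(0=\|G_{r,:}\|_2\).
\end{itemize}
This proves both statements in \eqref{eq:app-row-steepest}. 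As a by-product, the dual norm of \(\|\cdot\|_{\mathrm{row}}\) is the sum of row Euclidean norms, consistent with \eqref{eq:app-ga-three-radii}.

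Finally, the RowGD/RowGDM updates in \eqref{eq:unified-updates-clean} use \(\Delta_{\mathrm{row}}=N_{\mathrm{row}}\). By the argmax membership, these updates are instances of \eqref{eq:app-ms-nsd} and \eqref{eq:app-ms-nmd} with \(\psi=\|\cdot\|_{\mathrm{row}}\). The sandwich shows this norm is max--sum dominated, so Theorem~\ref{thm:app-ms-full-batch} applies to it. The only point needing care is the zero-row convention, and the case check above handles it.
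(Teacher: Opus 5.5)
Your proof is correct and follows essentially the same argument as the paper. You bound each entry by its row's Euclidean norm and each row's Euclidean norm by its \(\ell_1\) norm for the sandwich, separate the constraint \(\|\Delta\|_{\mathrm{row}}\le1\) row by row with Cauchy--Schwarz for the value, and show that \(N_{\mathrm{row}}(G)\) attains it; your explicit zero-row check is a detail the paper leaves implicit.
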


\begin{proof}
Every entry is bounded by the Euclidean norm of its row, while \(\max_r\|A_{r,:}\|_2\le\sum_r\|A_{r,:}\|_1=\|A\|_{\mathrm{sum}}\), proving~\eqref{eq:app-row-max-sum}. The constraint \(\|\Delta\|_{\mathrm{row}}\le1\) is equivalent to \(\|\Delta_{r,:}\|_2\le1\) for every row \(r\), so
\[
\max_{\|\Delta\|_{\mathrm{row}}\le1}\langle G,\Delta\rangle=\sum_{r=1}^k\max_{\|u\|_2\le1}\langle G_{r,:},u\rangle=\sum_{r=1}^k\|G_{r,:}\|_2,
\]
and \(N_{\mathrm{row}}(G)\) attains the maximum.
\end{proof}

\begin{proposition}[Row-normalized full-batch margin convergence]
\label{cor:app-row-full-batch}
Consider any fixed linearly separable multiclass dataset with bounded features. Under Assumption~\ref{ass:lr-base}, row-wise normalized GD satisfies
\begin{equation}
\frac{\operatorname{mar}_X(W_t)}{\|W_t\|_{\mathrm{row}}}\longrightarrow\gamma_{\mathrm{row}}(X),\qquad \inf_{V\in\mathcal U_{\mathrm{row}}^X}\left\|\frac{W_t}{\|W_t\|_{\mathrm F}}-\frac{V}{\|V\|_{\mathrm F}}\right\|_{\mathrm F}\longrightarrow0.
\label{eq:app-row-gd-direction}
\end{equation}
If Assumption~\ref{ass:lr-momentum} also holds, the same two conclusions hold for row-wise normalized GD with every fixed momentum parameter \(\beta\in(0,1)\).
\end{proposition}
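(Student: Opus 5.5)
The proof is a direct specialization of Theorem~\ref{thm:app-ms-full-batch} to the row norm, followed by a conversion of the cosine-discrepancy conclusion into the Frobenius-distance form in \eqref{eq:app-row-gd-direction}.

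\emph{Step 1.} By Proposition~\ref{prop:app-row-steepest}, \(\|\cdot\|_{\mathrm{row}}\) is max--sum dominated by \eqref{eq:app-row-max-sum}. The row-normalized update \(N_{\mathrm{row}}(G)\) is a valid selection from \(\arg\max_{\|\Delta\|_{\mathrm{row}}\le1}\langle G,\Delta\rangle\). Hence RowGD is the normalized steepest descent \eqref{eq:app-ms-nsd} with \(\psi=\|\cdot\|_{\mathrm{row}}\). RowGDM with \(M_0={\mathbf 0}\) is the normalized momentum descent \eqref{eq:app-ms-nmd}.

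\emph{Step 2.} Bounded features ensure \(R_1(X)<\infty\). Linear separability gives \(\gamma_{\mathrm{row}}(X)>0\). Theorem~\ref{thm:app-ms-full-batch} then yields \(\operatorname{mar}_X(W_t)/\|W_t\|_{\mathrm{row}}\to\gamma_{\mathrm{row}}(X)\), under Assumption~\ref{ass:lr-base} alone for \(\beta=0\) and under Assumptions~\ref{ass:lr-base} and~\ref{ass:lr-momentum} for each fixed \(\beta\in(0,1)\). This is the first claim.

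\emph{Step 3.} Lemma~\ref{lem:exact-margin-direction-convergence} gives \(d_{\cos}(W_t,\mathcal U^X_{\mathrm{row}})\to0\). For unit Frobenius vectors \(a,v\) we have \(\|a-v\|_{\mathrm F}^2=2(1-\langle a,v\rangle)\). Taking the infimum over \(V\in\mathcal U^X_{\mathrm{row}}\) gives
\[
\inf_{V\in\mathcal U^X_{\mathrm{row}}}\Bigl\|\tfrac{W_t}{\|W_t\|_{\mathrm F}}-\tfrac{V}{\|V\|_{\mathrm F}}\Bigr\|_{\mathrm F}^2=2\,d_{\cos}(W_t,\mathcal U^X_{\mathrm{row}})\to0.
\]
Since \(W_t\neq{\mathbf 0}\) eventually (Lemma~\ref{lem:app-general-lr-normalization}), normalization is well defined for all large \(t\).

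\emph{Main obstacle.} The only real difficulty would lie in verifying that the geometry-dependent inputs to Theorem~\ref{thm:app-ms-full-batch} hold for the row norm. These inputs are the gradient-proxy bounds, the logit control and the Hessian control, and each follows from the sandwich \eqref{eq:app-row-max-sum} via Lemma~\ref{lem:app-ms-basic}. I therefore expect no new estimate to be needed; the remaining care is in checking that momentum initialization and the learning-rate assumptions match those of the theorem.
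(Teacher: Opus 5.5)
Your proposal is correct and matches the paper's proof essentially step for step. Proposition~\ref{prop:app-row-steepest} shows that the row norm is max--sum dominated and that row normalization is an exact steepest-descent selection. Theorem~\ref{thm:app-ms-full-batch} then gives normalized-margin convergence, using Assumption~\ref{ass:lr-momentum} for the momentum variant. Finally, Lemma~\ref{lem:exact-margin-direction-convergence} together with the identity relating the squared Frobenius distance to $2\,d_{\cos}(W_t,\mathcal U_{\mathrm{row}}^X)$ yields the direction-set statement.
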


\begin{proof}
Proposition~\ref{prop:app-row-steepest} shows that the row norm is max--sum dominated and that row normalization is its exact steepest direction. Theorem~\ref{thm:app-ms-full-batch} therefore gives normalized-margin convergence, and Lemma~\ref{lem:exact-margin-direction-convergence} gives \(d_{\cos}(W_t,\mathcal U_{\mathrm{row}}^X)\to0\). Since
\[
\inf_{V\in\mathcal U_{\mathrm{row}}^X}\left\|\frac{W_t}{\|W_t\|_{\mathrm F}}-\frac{V}{\|V\|_{\mathrm F}}\right\|_{\mathrm F}^2=2d_{\cos}(W_t,\mathcal U_{\mathrm{row}}^X),
\]
the Frobenius direction-set convergence follows.
\end{proof}

The stochastic counterpart follows from the geometry-adapted random-reshuffling result above and is stated separately because the approximation gap depends on the batch size and momentum.

\begin{proposition}[Random-reshuffling row-wise normalized SGDM margin convergence]
\label{cor:row-sgdm-rr-margin-clean}
Consider any fixed linearly separable multiclass dataset with bounded features. Run random-reshuffling row-wise normalized SGDM with \(b\mid N\), \(m:=N/b\), and fixed \(\beta\in(0,1)\). Define
\begin{equation}
R_{\mathrm{row}}(X):=\max_{n\in[N]}\|x_n\|_2,\qquad \varepsilon_{\mathrm{row},\beta,b}(X):=2(1-\beta)m(m^2-1)R_{\mathrm{row}}(X).
\label{eq:app-row-sgdm-epsilon}
\end{equation}
Under Assumptions~\ref{ass:lr-base} and~\ref{ass:lr-momentum}, if \(\varepsilon_{\mathrm{row},\beta,b}(X)<\gamma_{\mathrm{row}}(X)\), then
\begin{equation}
\liminf_{t\to\infty}\frac{\operatorname{mar}_X(W_t)}{\|W_t\|_{\mathrm{row}}}\ge\gamma_{\mathrm{row}}(X)-\varepsilon_{\mathrm{row},\beta,b}(X),
\label{eq:row-sgdm-rr-margin-clean}
\end{equation}
and
\begin{equation}
d_{\cos}\!\left(W_t,\mathcal U_{\mathrm{row},\varepsilon_{\mathrm{row},\beta,b}(X)}^X\right)\longrightarrow0.
\label{eq:row-sgdm-rr-direction-clean}
\end{equation}
\end{proposition}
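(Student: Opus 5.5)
This is a direct specialization of Corollary~\ref{cor:app-ga-margin-transfer} to the row norm \(\psi=\|\cdot\|_{\mathrm{row}}\), in the same way that Proposition~\ref{cor:app-row-full-batch} specialized Theorem~\ref{thm:app-ms-full-batch}. The plan is to check three things: the RR-RowSGDM recursion is an instance of the dynamics~\eqref{eq:app-ga-momentum-dynamics}; the atom radius \(R_{\mathrm{row}}(X)\) equals \(\max_n\|x_n\|_2\); and the hypothesis \(\varepsilon_{\mathrm{row},\beta,b}(X)<\gamma_{\mathrm{row}}(X)\) is exactly \(\rho_{\psi,\beta,b}(X)>0\).

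\textbf{Step 1 (dynamics).} By Proposition~\ref{prop:app-row-steepest}, \(N_{\mathrm{row}}(M_{t+1})\in\arg\max_{\|\Delta\|_{\mathrm{row}}\le1}\langle M_{t+1},\Delta\rangle\). Hence the RR-RowSGDM update \(W_{t+1}=W_t-\eta_t N_{\mathrm{row}}(M_{t+1})\), with \(M_{t+1}=\beta M_t+(1-\beta)\nabla L_{B_t}(W_t)\) and \(M_0=\mathbf 0\), is~\eqref{eq:app-ga-momentum-dynamics} with \(\psi=\|\cdot\|_{\mathrm{row}}\). Proposition~\ref{prop:app-ga-momentum-approximation} and Corollary~\ref{cor:app-ga-margin-transfer} hold for an arbitrary matrix norm and every reshuffling realization, so they apply.

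\textbf{Step 2 (radius).} The dual of the row norm is \(\psi^*(G)=\sum_r\|G_{r,:}\|_2\), which is the computation in~\eqref{eq:app-row-steepest}. The atom \(e_cx_n^\top\) has the single nonzero row \(x_n^\top\), so \(\psi^*(e_cx_n^\top)=\|x_n\|_2\). Therefore \(R_{\mathrm{row}}(X)=\max_n\|x_n\|_2\), which is finite because the features are bounded, in agreement with~\eqref{eq:app-ga-three-radii}. Substituting this into~\eqref{eq:app-ga-epsilon} shows that \(\varepsilon_{\psi,\beta,b}(X)\) coincides with \(\varepsilon_{\mathrm{row},\beta,b}(X)\) in~\eqref{eq:app-row-sgdm-epsilon}. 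By~\eqref{eq:app-ga-effective-margin}, \(\rho=\gamma_{\mathrm{row}}(X)-\varepsilon_{\mathrm{row},\beta,b}(X)\), and this is positive by hypothesis. Separability gives \(\gamma_{\mathrm{row}}(X)>0\), so \(\mathcal U^X_{\mathrm{row},\varepsilon}\) is well defined and nonempty.

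\textbf{Step 3 (conclusion).} Corollary~\ref{cor:app-ga-margin-transfer} now gives~\eqref{eq:row-sgdm-rr-margin-clean} via~\eqref{eq:app-ga-margin-conclusion}, and gives~\eqref{eq:row-sgdm-rr-direction-clean} via~\eqref{eq:app-ga-direction-conclusion}, which itself comes from Lemma~\ref{lem:effective-margin-direction-convergence}.

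The only point needing care, and the closest thing to an obstacle, is confirming that nothing in the general corollary needs more than Step 1 provides. It must not require a unique steepest direction: the argument uses only membership of \(\Delta_t\) in the argmax and \(\psi(\Delta_t)\le1\), and \(N_{\mathrm{row}}\) satisfies both. Zero rows of \(M_{t+1}\) are mapped to zero, which still lies in the argmax.
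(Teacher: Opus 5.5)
Your proposal is correct and matches the paper's proof. The paper likewise applies Corollary~\ref{cor:app-ga-margin-transfer} with \(\psi=\|\cdot\|_{\mathrm{row}}\), uses \eqref{eq:app-ga-three-radii} to get \(R_\psi(X)=\max_n\|x_n\|_2\) so that \(\rho_{\psi,\beta,b}(X)=\gamma_{\mathrm{row}}(X)-\varepsilon_{\mathrm{row},\beta,b}(X)>0\), and reads off \eqref{eq:app-ga-margin-conclusion} and \eqref{eq:app-ga-direction-conclusion}. Your additional checks (that \(N_{\mathrm{row}}\) lies in the row-norm argmax even on zero rows, and the explicit dual-norm computation) only spell out what the paper takes from Proposition~\ref{prop:app-row-steepest} and the remark following \eqref{eq:app-ga-three-radii}.
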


\begin{proof}
Apply Corollary~\ref{cor:app-ga-margin-transfer} with \(\psi=\|\cdot\|_{\mathrm{row}}\). Equation~\eqref{eq:app-ga-three-radii} gives \(R_\psi(X)=R_{\mathrm{row}}(X)\), so~\eqref{eq:app-ga-effective-margin} becomes \(\gamma_{\mathrm{row}}(X)-\varepsilon_{\mathrm{row},\beta,b}(X)\), yielding~\eqref{eq:row-sgdm-rr-margin-clean}. Equation~\eqref{eq:row-sgdm-rr-direction-clean} follows from~\eqref{eq:app-ga-direction-conclusion}.
\end{proof}

\section{Proofs for Deterministic Optimizers}
\label{app:deterministic-margin}

This appendix proves Theorem~\ref{thm:det-optimizer-accuracy-order-clean} from optimizer-selected margin geometry to population accuracy. We first establish the all-aspect-ratio Row-versus-Spectral and Row-versus-Max geometry: Row representatives align uniformly with the population mean differences, Max representatives have a nonvanishing clipped-coordinate angular gap, and Spectral representatives are controlled through the reduced polar factor of the centered empirical mean matrix, with hard-edge truncation at \(\alpha=1\). We then convert these geometric statements into ordinary multiclass population-error bounds and prove the optimizer-level accuracy ordering. We also derive deterministic population-error exponents that quantify the separation between the three geometries.

For any matrix norm \(\psi\), define the unit-margin feasible set and the corresponding minimum-norm representative set by
\begin{equation}
\mathcal F_X:=\left\{W\in\mathbb R^{k\times d}:\operatorname{mar}_X(W)\ge1\right\},\qquad \mathcal M_\psi^X:=\operatorname*{arg\,min}_{W\in\mathcal F_X}\psi(W).
\label{eq:app-hard-margin-representatives}
\end{equation}
Whenever \(X\) is linearly separable, \(\mathcal F_X\) is nonempty, the minimum is attained, and we define
\begin{equation}
r_\psi^*(X):=\min_{W\in\mathcal F_X}\psi(W)=\frac{1}{\gamma_\psi(X)},\qquad \mathcal U_\psi^X=\left\{aW:a>0,\ W\in\mathcal M_\psi^X\right\}.
\label{eq:minmax-equivalence-clean}
\end{equation}
For \(q\in\{\mathrm{row},\mathrm{sp},\max\}\), we abbreviate \(\mathcal M_q^X:=\mathcal M_{\psi_q}^X\) and \(r_q^*(X):=r_{\psi_q}^*(X)\).
For every feasible classifier and every ordered pair \(i\ne j\), define
\begin{equation}
u_{ij}^W:=\frac{w_i-w_j}{\|w_i-w_j\|_2},\qquad u_{ij}^{\mu}:=\frac{\mu_i-\mu_j}{\|\mu_i-\mu_j\|_2},\qquad D_{\mathrm{avg\text{-}bdry}}(W):=\frac1{k(k-1)}\sum_{i\ne j}\|u_{ij}^W-u_{ij}^{\mu}\|_2.
\label{eq:app-det-boundary-definitions}
\end{equation}
These quantities are invariant under positive rescaling of \(W\) and under common row shifts.

\begin{lemma}[Exact invariance under common feature rescaling]
\label{lem:app-common-rescaling}
Let \(c>0\), let \(cX:=\{(cx_n,y_n)\}_{n=1}^N\), and let \(\psi\) be any matrix norm. Then
\begin{equation}
\mathcal F_{cX}=c^{-1}\mathcal F_X,\qquad \mathcal M_\psi^{cX}=c^{-1}\mathcal M_\psi^X,\qquad \mathcal U_\psi^{cX}=\mathcal U_\psi^X.
\label{eq:app-common-rescaling}
\end{equation}
The normalized pairwise row directions and \(D_{\mathrm{avg\text{-}bdry}}\) are unchanged. If the test inputs are also multiplied by \(c\), then \(\operatorname{Acc}_{\pi,d}(c^{-1}W;cX^{\mathrm{te}})=\operatorname{Acc}_{\pi,d}(W;X^{\mathrm{te}})\).
\end{lemma}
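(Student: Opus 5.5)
The plan is to reduce every claim to one identity about margins under rescaling. Since each sample of \(cX\) is \(cx_n\), linearity gives
\[
\operatorname{mar}_{cX}(W)=\min_{n}\min_{c'\ne y_n}(w_{y_n}-w_{c'})^\top(cx_n)=c\,\operatorname{mar}_X(W).
\]
Hence \(\operatorname{mar}_{cX}(W)\ge1\) if and only if \(\operatorname{mar}_X(cW)\ge1\). This means \(W\in\mathcal F_{cX}\) exactly when \(cW\in\mathcal F_X\), which is \(\mathcal F_{cX}=c^{-1}\mathcal F_X\).

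For the minimizer sets I will use absolute homogeneity of the norm, \(\psi(c^{-1}V)=c^{-1}\psi(V)\). The map \(V\mapsto c^{-1}V\) is therefore a bijection from \(\mathcal F_X\) onto \(\mathcal F_{cX}\) that scales the objective by the positive constant \(c^{-1}\). It consequently maps minimizers to minimizers, so \(\mathcal M_\psi^{cX}=c^{-1}\mathcal M_\psi^X\) and \(r_\psi^*(cX)=c^{-1}r_\psi^*(X)\). Separability of \(cX\) is equivalent to separability of \(X\), so both sets are simultaneously nonempty. The cone description \(\mathcal U_\psi^{X}=\{aW:a>0,\ W\in\mathcal M_\psi^X\}\) from \eqref{eq:minmax-equivalence-clean} is invariant under replacing \(\mathcal M_\psi^X\) by a positive multiple of itself, which gives \(\mathcal U_\psi^{cX}=\mathcal U_\psi^X\). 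As a cross-check, \(\mathcal U_\psi\) can also be read off directly from its definition: \(\operatorname{mar}_{cX}(W)/\psi(W)=c\,\operatorname{mar}_X(W)/\psi(W)\) and \(\gamma_\psi(cX)=c\gamma_\psi(X)\), so the defining inequality is unchanged.

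The normalized pairwise row directions \(u_{ij}^W\) and \(D_{\mathrm{avg\text{-}bdry}}(W)\) are invariant under positive rescaling of \(W\), as noted right after \eqref{eq:app-det-boundary-definitions}. Because the representatives for \(cX\) are exactly \(c^{-1}\) times those for \(X\), these quantities coincide over the two families. The \(u_{ij}^\mu\) do not depend on the rescaling. If the class means are rescaled too, they are still positively scaled, so \(u_{ij}^\mu\) is unchanged in either reading.

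For accuracy, the event \(w_i^\top X^{\mathrm{te}}>w_j^\top X^{\mathrm{te}}\) for the pair \((c^{-1}W,\,cX^{\mathrm{te}})\) reads \((c^{-1}w_i)^\top(cX^{\mathrm{te}})>(c^{-1}w_j)^\top(cX^{\mathrm{te}})\). This is identical to the corresponding event for \((W,X^{\mathrm{te}})\). The conditional probabilities given \(Y=i\) therefore agree term by term, and the weighted sum in \eqref{eq:population-accuracy-clean} is equal. There is no real obstacle here; the only point needing care is to keep \(c>0\), so that the minimum over \(n,c'\) and all the strict inequalities are preserved under the rescaling.
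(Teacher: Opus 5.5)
Your proposal is correct and follows the paper's proof: the rescaled constraint is rewritten via \(W=c^{-1}V\), positive homogeneity of \(\psi\) gives \(\mathcal F_{cX}=c^{-1}\mathcal F_X\) and \(\mathcal M_\psi^{cX}=c^{-1}\mathcal M_\psi^X\), and the ray identity is read off from \eqref{eq:minmax-equivalence-clean}. The accuracy claim follows from \((c^{-1}w_j)^\top(cx)=w_j^\top x\). Your explicit remarks on the scale invariance of \(u_{ij}^W\) and \(D_{\mathrm{avg\text{-}bdry}}\), and your cross-check via \(\gamma_\psi(cX)=c\,\gamma_\psi(X)\), only spell out what the paper leaves implicit.
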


\begin{proof}
The constraint \((w_{y_n}-w_j)^\top(cx_n)\ge1\) is equivalent, after writing \(W=c^{-1}V\), to \((v_{y_n}-v_j)^\top x_n\ge1\). Positive homogeneity of \(\psi\) proves the identities for the feasible and minimum-norm sets, and the ray identity follows from \eqref{eq:minmax-equivalence-clean}. Finally, \((c^{-1}w_j)^\top(cx)=w_j^\top x\), so all predictions agree.
\end{proof}

Apply Lemma~\ref{lem:app-common-rescaling} with \(c=d^{-1/2}\). Thus, throughout the geometric and accuracy appendices, we work with
\begin{equation}
\widetilde\mu_i:=\frac{\mu_i}{\sqrt d},\qquad \widetilde z_{i,s}:=\frac{z_{i,s}}{\sqrt d},\qquad \widetilde x_{i,s}:=\frac{x_{i,s}}{\sqrt d},
\label{eq:app-normalized-data}
\end{equation}
and suppress tildes. Under this convention, \(\mu_i,z_{i,s}\overset{\mathrm{iid}}{\sim}\mathcal N({\mathbf 0},I_d/d)\) and \(X^{\mathrm{te}}\mid(Y=i)\sim\mathcal N(\mu_i,r_{\mathrm{te}}^2I_d/d)\). This is only a proof normalization: the direction sets, accuracy values, and radius-to-margin ratios used in the optimizer theorems are identical to those in the unnormalized model of Assumption~\ref{ass:gaussian-cloud}.

Let \(G\sim\mathcal N(0,1)\), and define
\begin{equation}
A_{\max}(\tau):=\mathbb E\!\left[|G|\min\{\tau|G|,1\}\right],\qquad B_{\max}(\tau):=\mathbb E\!\left[\min\{\tau^2G^2,1\}\right].
\label{eq:app-det-max-population-functions}
\end{equation}
Let \(\tau_*\in(0,1)\) be the unique solution of \(A_{\max}(\tau_*)=1/\sqrt\pi\), and define
\begin{equation}
C_{\max}:=\bigl(2\pi B_{\max}(\tau_*)\bigr)^{-1/2},\qquad \kappa_{\max}:=\sqrt2\,C_{\max}<1,\qquad \delta_{\max}:=\sqrt{2-2\kappa_{\max}}>0.
\label{eq:delta-max-clean}
\end{equation}
For \(\alpha>0\), let \(\nu_\alpha\) be the mean-one Marchenko--Pastur law with parameter \(1/\alpha\), including its zero atom when \(\alpha<1\):
\begin{align}
\nu_\alpha&=(1-\alpha)_+\delta_0+\frac{\alpha}{2\pi x}\sqrt{(b_\alpha-x)(x-a_\alpha)}\,\mathbf 1_{[a_\alpha,b_\alpha]}(x)\,dx,\nonumber\\
a_\alpha&=(1-\alpha^{-1/2})^2,\qquad b_\alpha=(1+\alpha^{-1/2})^2.
\label{eq:app-det-mp-law-all-aspect}
\end{align}
Set
\begin{equation}
\rho_\alpha:=\alpha\wedge1,\qquad m_{1/2}(\alpha):=\int\sqrt{x}\,d\nu_\alpha(x),\qquad s_\alpha:=\frac{m_{1/2}(\alpha)}{\sqrt\alpha},\qquad \kappa_{\mathrm{sp}}(\alpha):=\frac{m_{1/2}(\alpha)}{\sqrt{\rho_\alpha}},
\label{eq:app-det-all-aspect-scales}
\end{equation}
and
\begin{equation}
\delta_{\mathrm{sp}}(\alpha):=\sqrt{2-2\kappa_{\mathrm{sp}}(\alpha)}>0.
\label{eq:delta-sp-clean}
\end{equation}
Strict positivity follows from strict Cauchy--Schwarz on the positive part of \(\nu_\alpha\): \(m_{1/2}(\alpha)^2<\rho_\alpha\int x\,d\nu_\alpha(x)=\rho_\alpha\). At \(\alpha=1\), \(m_{1/2}(1)=8/(3\pi)\), so \(\delta_{\mathrm{sp}}(1)=\sqrt{2-16/(3\pi)}\).

\begin{theorem}[All-aspect deterministic margin geometry]
\label{thm:deterministic-separation-clean}
Under Assumptions~\ref{ass:gaussian-cloud} and~\ref{ass:asymptotic-regime}, there are deterministic \(\omega_d\downarrow0\) and events \(\mathcal G_d^{\mathrm{det}}\) such that, for every sufficiently large \(d\),
\begin{equation}
\mathbb P(\mathcal G_d^{\mathrm{det}})\ge1-e^{-\sqrt d},
\label{eq:app-det-clean-probability}
\end{equation}
and, on \(\mathcal G_d^{\mathrm{det}}\), \(\mathcal F_X\ne\varnothing\) and
\begin{align}
\sup_{U\in\mathcal U_{\mathrm{row}}^X}\max_{i\ne j}\|u_{ij}^U-u_{ij}^{\mu}\|_2&\le\omega_d,
\label{eq:det-row-uniform-clean}\\
\sup_{U\in\mathcal U_{\mathrm{row}}^X}D_{\mathrm{avg\text{-}bdry}}(U)&\le\omega_d,
\label{eq:det-row-clean}\\
\inf_{U\in\mathcal U_{\mathrm{sp}}^X}D_{\mathrm{avg\text{-}bdry}}(U)&\ge\delta_{\mathrm{sp}}(\alpha)-\omega_d,
\label{eq:det-sp-clean}\\
\inf_{U\in\mathcal U_{\max}^X}D_{\mathrm{avg\text{-}bdry}}(U)&\ge\delta_{\max}-\omega_d.
\label{eq:det-max-clean}
\end{align}
Consequently, for every sufficiently large \(d\) on \(\mathcal G_d^{\mathrm{det}}\),
\begin{equation}
\sup_{U\in\mathcal U_{\mathrm{row}}^X}D_{\mathrm{avg\text{-}bdry}}(U)<\min\left\{\inf_{V\in\mathcal U_{\mathrm{sp}}^X}D_{\mathrm{avg\text{-}bdry}}(V),\inf_{V\in\mathcal U_{\max}^X}D_{\mathrm{avg\text{-}bdry}}(V)\right\}.
\label{eq:det-set-order-clean}
\end{equation}
\end{theorem}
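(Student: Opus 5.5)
The plan is to treat the three geometries separately, working throughout in the normalized coordinates \eqref{eq:app-normalized-data}. We first build a single high-probability event \(\mathcal G_d^{\mathrm{det}}\) on which the following concentration facts hold:
\begin{itemize}[leftmargin=*,nosep]
\item \(\max_i|\|\mu_i\|_2^2-1|\), \(\max_{i\ne j}|\mu_i^\top\mu_j|\), \(\max_{i,j,s}|\mu_j^\top z_{i,s}|\) and \(\max_{i,s}|\|z_{i,s}\|_2^2-1|\) are all \(O(\sqrt{(\log N+\log k)/d})\);
\item \(\max_i|\|\mu_i\|_1/\sqrt d-\sqrt{2/\pi}|=o(1)\);
\item \(\|M_\mu\|_{\mathrm{op}}\le 1+\alpha^{1/2}+o(1)\);
\item the empirical singular-value distribution of \(\sqrt{d/k}\,C_\mu\) (squared) is within \(o(1)\) of \(\nu_\alpha\) in Wasserstein distance, together with a one-sided control at the hard edge when \(\alpha=1\).
\end{itemize}
Each fact follows from Gaussian concentration plus a union bound over at most \(N k^2\) events. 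Since \(\log N=o(d)\) and we only need a deviation of order \(d^{1/4}\) in the exponent, the total failure probability is below \(e^{-\sqrt d}\), and \(\omega_d\) collects all the \(o(1)\) rates. Feasibility \(\mathcal F_X\neq\varnothing\) is witnessed by \(W=cM_\mu\): its margin on \(x_{i,s}\) against \(j\) equals \(c(\|\mu_i\|^2-\mu_i^\top\mu_j+r(\mu_i-\mu_j)^\top z_{i,s})=c(1-o(1))\).

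\textbf{Row geometry.} The witness \(W=cM_\mu\) shows \(r^*_{\mathrm{row}}(X)\le 1+o(1)\). For the matching lower bound, fix a feasible \(W\) and class \(i\), and average the constraints of class \(i\) over its samples and over \(j\ne i\):
\[
w_i^\top\mu_i-\frac{1}{k-1}\sum_{j\ne i}w_j^\top\mu_i\ \ge\ 1-o(1)\,\|W\|_{\mathrm{row}}.
\]
The noise averages out because \(\max_{i,s}\|r z_{i,s}\|\) is bounded and \(n_{\min}\to\infty\); the cross term is \(k^{-1}\mathbf 1^\top W\mu_i\), whose \(\ell_2\)-average over \(i\) is at most \(\|W\|_{\mathrm F}\|M_\mu\|_{\mathrm{op}}/k=O(\|W\|_{\mathrm{row}}/\sqrt k)\). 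This yields \(w_i^\top\mu_i\ge 1-o(1)\) for all but a vanishing fraction of \(i\). To make it uniform in \(i\), we instead use the constraint against the single worst \(j\), namely \(j=\arg\max_j w_j^\top x\), combined with the row-norm bound.

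Now let \(\|W\|_{\mathrm{row}}\le 1+o(1)\). Then \(w_i^\top\mu_i\ge 1-o(1)\) with \(\|w_i\|,\|\mu_i\|\le 1+o(1)\), and curvature of the Euclidean ball forces \(\|w_i-\mu_i\|=o(1)\) uniformly in \(i\). Because \(\|\mu_i-\mu_j\|\approx\sqrt2\) is bounded away from zero, this gives \(\|u^W_{ij}-u^\mu_{ij}\|=o(1)\) uniformly in \((i,j)\), which is \eqref{eq:det-row-uniform-clean}, and \eqref{eq:det-row-clean} follows. \emph{This uniform-in-\(i\) control of the cross term \(\max_j w_j^\top\mu_i\) is the step I expect to be the main obstacle.} I would first try bounding it by \(\|W\mu_i\|_\infty\) and using the fact that near-optimal \(W\) already has every row close to its own mean, arguing by a bootstrap on \(\max_i\|w_i-\mu_i\|\).

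\textbf{Max geometry.} The max-norm problem decouples coordinatewise at leading order. For any \(W\) with \(\|W\|_{\max}\le1\) and unit margin, averaging as above gives \(w_i^\top\mu_i-\overline{w}^\top\mu_i\gtrsim 1\). The optimal-value calculation then shows that near-minimizers have rows whose coordinates behave like \(\operatorname{sign}(\mu_{ia})\min\{\tau|\sqrt d\mu_{ia}|,1\}\) at the level of empirical coordinate distributions. The normalization \(A_{\max}(\tau_*)=1/\sqrt\pi\) comes from the pairwise requirement on \(w_i-w_j\) along \(\mu_i-\mu_j\), with the difference of two independent Gaussians contributing the factor \(\sqrt2\).

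For the lower bound on distortion, rather than characterizing minimizers exactly, I would prove an inequality. Every \(U\in\mathcal M_{\max}^X\) satisfies, for most pairs \((i,j)\),
\[
\langle u_{ij}^U,u_{ij}^\mu\rangle\le \kappa_{\max}+o(1).
\]
This comes from bounding \((w_i-w_j)^\top(\mu_i-\mu_j)\) from above via the clipped profile and \(\|w_i-w_j\|_2\) from below via \(B_{\max}\), i.e.\ a Cauchy--Schwarz with box constraint. Averaging then gives \(D_{\mathrm{avg\text{-}bdry}}\ge\sqrt{2-2\kappa_{\max}}-\omega_d\), which is \eqref{eq:det-max-clean}.

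\textbf{Spectral geometry.} Common row shifts do not change the margin, so we may replace \(W\) by \(PW\). By duality \(\|PW\|_{\mathrm{op}}\ge\langle PW,C_\mu\rangle/\|C_\mu\|_*\), and the unit-margin constraints averaged as before give \(\langle PW,C_\mu\rangle\gtrsim k\). Combining this with the optimal value, which is attained near \(Q_\mu\) rescaled, forces \(PW/\|PW\|_{\mathrm{op}}\) to be close to \(Q_\mu\). More simply, one shows
\[
\frac{\langle PW,C_\mu\rangle}{\|PW\|_{\mathrm F}\|C_\mu\|_{\mathrm F}}\le\frac{\|C_\mu\|_*}{\sqrt{r_d}\,\|C_\mu\|_{\mathrm F}}+o(1)\to\kappa_{\mathrm{sp}}(\alpha),
\]
using the Marchenko--Pastur limit, where \(\|C_\mu\|_*\) gives \(m_{1/2}\) and \(\|C_\mu\|_{\mathrm F}^2\) gives the first moment. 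At \(\alpha=1\) the hard edge is handled by truncating singular values below \(\epsilon\) and letting \(\epsilon\downarrow0\).

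Finally, we convert this global cosine deficit into the averaged pairwise quantity. The average over pairs of \(\langle w_i-w_j,\mu_i-\mu_j\rangle\) equals \(2k\langle PW,C_\mu\rangle/(k-1)\), and the squared pairwise norms average to \(\propto\|PW\|_{\mathrm F}^2\). Jensen's inequality then yields \eqref{eq:det-sp-clean}. The final ordering \eqref{eq:det-set-order-clean} is immediate since \(\delta_{\mathrm{sp}}(\alpha),\delta_{\max}>0\) while \(\omega_d\to0\).
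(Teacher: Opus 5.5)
Your overall architecture is the paper's: witness upper bounds on each optimal norm, class-averaged constraints for the lower bounds, a box-constrained pairwise relaxation for Max, and nuclear/operator duality with the polar factor for Spectral. However, the spectral conversion is wrong as written, and the row uniformity step is left open.

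\textbf{Spectral part.} Your final ``Jensen'' step goes the wrong way. The map $x\mapsto\sqrt{2-2x}$ is concave, so averaging gives $D_{\mathrm{avg\text{-}bdry}}(W)\le\sqrt{2-2\bar c}$, where $\bar c$ is the mean pairwise cosine; this is an upper bound, not the lower bound you need. In addition, the quantity you control, $\langle PW,C_\mu\rangle/(\|PW\|_{\mathrm F}\|C_\mu\|_{\mathrm F})$, is a ratio of pair averages. It does not control the average of pairwise cosines unless the pair norms $\|w_i-w_j\|_2$ concentrate. Even granting $\bar c\le\kappa_{\mathrm{sp}}$, the chord bound $\sqrt{2-2x}\ge1-x$ yields only $1-\kappa_{\mathrm{sp}}(\alpha)$, not $\delta_{\mathrm{sp}}(\alpha)$.

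The paper argues as follows.
\begin{itemize}
\item For $B=PW/\|PW\|_{\mathrm{op}}$ it uses the weighted trace gap $r_d^{-1}\sum_\ell\sigma_\ell(C_m)(1-u_\ell^\top Bv_\ell)=o(1)$, with hard-edge truncation at $\alpha=1$, to get $r_d^{-1}\|B-Q_m\|_{\mathrm F}^2=o(1)$.
\item It transfers this to normalized pair differences through the averaged normalization lemma, which needs a uniform lower bound on $\|q_i-q_j\|_2$.
\item It shows $D_{\mathrm{avg\text{-}bdry}}(Q_m)\to\delta_{\mathrm{sp}}(\alpha)$ directly, by proving that $\|q_i-q_j\|_2$ and $(q_i-q_j)^\top(\mu_i-\mu_j)$ concentrate pair by pair.
\end{itemize}
Those pairwise facts need entrywise control of $S=(CC^\top)^{1/2}$ and $QQ^\top$, uniformly in the indices: diagonals near $\|C\|_*/k$ (resp.\ $r_d/k$) and small off-diagonals. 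So does the step you only assert, namely that the duality lower bound on $r^*_{\mathrm{sp}}$ is attained, i.e.\ that the polar witness has margin $s_\alpha-o(1)$ on every training point. This control comes from the left singular frame being Haar on $\mathbf 1^\perp$ and independent of the singular values. It does not follow from Marchenko--Pastur control of the spectrum, which is all your event contains. Without it, near-optimality does not force $B\approx Q$.

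\textbf{Row part.} You correctly flagged uniformity in $i$, but neither proposed fix works. The worst-$j$ constraint gives $w_i^\top x\ge1+\max_{j\ne i}w_j^\top x$, and the only available lower bound on that maximum is the competitor average, which reintroduces $\bar w^\top x$. Bootstrapping on $\max_i\|w_i-\mu_i\|_2$ is circular. The quantity to control is $\|\bar w\|_2$; once it is small, $|\bar w^\top m_i|\le\|\bar w\|_2\|m_i\|_2$ is uniform in $i$. The paper gets $\|\bar w\|_2=o(1)$ by combining three facts:
\begin{itemize}
\item the per-class bound $\|w_i-\bar w\|_2\ge(1-1/k)/\|m_i\|_2=1-o(1)$;
\item the variance identity $k^{-1}\sum_i\|w_i-\bar w\|_2^2=k^{-1}\sum_i\|w_i\|_2^2-\|\bar w\|_2^2\le(r^*_{\mathrm{row}})^2-\|\bar w\|_2^2$;
\item the witness bound $r^*_{\mathrm{row}}\le1+o(1)$.
\end{itemize}
Your own RMS bound would also suffice. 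Chebyshev leaves only $O(t^{-2})$ indices with $|\bar w^\top m_i|>t$, so $\|\bar w-\bar m\|_2\le k^{-1}\sum_i\|w_i-m_i\|_2=o(1)$.

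\textbf{Max part and probability bound.}
\begin{itemize}
\item Minimizer rows are not clipped profiles. They are asymptotically $t_{\max}\operatorname{sign}(\mu_i)$, with pairwise cosine $\sqrt{2/\pi}<\kappa_{\max}$. The clipped vector only solves the pairwise relaxation.
\item The bound must come from that joint problem: rescale $v_{ij}$ down to projection exactly $\sqrt2-o(1)$, which keeps it in the box, and compare with the minimal energy. Separate bounds on projection and norm do not give $\kappa_{\max}$.
\item This requires the cap $2r^*_{\max}\le(\sqrt{2\pi}+o(1))/\sqrt d$ from the sign witness, hence control of $|\operatorname{sign}(m_j)^\top m_i|$ and $|\operatorname{sign}(m_j)^\top e_{i,s}|$.
\item It also requires $H_u\to A_{\max}$ and $J_u\to B_{\max}$ uniformly in $\tau\in[0,1]$ over all $u^m_{ij}$. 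Neither this nor the previous item is in your event.
\item Thresholds of order $\sqrt{\log(Nk)/d}$ give only polynomially small failure. To reach $e^{-\sqrt d}$ you need thresholds like $(\log N/d)^{1/4}\vee d^{-1/4}$.
\end{itemize}
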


\subsection{Reduction from Direction Sets to Minimum-norm Representatives}
\label{app:det-reduction}

For every feasible matrix, all pairwise row differences are nonzero. Indeed, if $w_i=w_j$ for some $i\ne j$, then the constraint $(w_i-w_j)^\top x_{i,s}\ge1$ fails. Thus $D_{\mathrm{avg\text{-}bdry}}$ is well defined on $\mathcal F_X$ and on all three minimum-norm solution sets.

\begin{lemma}[Scale-invariant reduction]
\label{lem:app-det-ray-reduction}
For every $\psi\in\{\psi_{\mathrm{row}},\psi_{\mathrm{sp}},\psi_{\max}\}$,
\begin{align}
\sup_{U\in\mathcal U_\psi^X}D_{\mathrm{avg\text{-}bdry}}(U)
&=
\sup_{W\in\mathcal M_\psi^X}D_{\mathrm{avg\text{-}bdry}}(W),
\label{eq:app-det-ray-sup}\\
\inf_{U\in\mathcal U_\psi^X}D_{\mathrm{avg\text{-}bdry}}(U)
&=
\inf_{W\in\mathcal M_\psi^X}D_{\mathrm{avg\text{-}bdry}}(W).
\label{eq:app-det-ray-inf}
\end{align}
\end{lemma}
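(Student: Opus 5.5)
The plan is to combine the ray identity in \eqref{eq:minmax-equivalence-clean} with the scale invariance of $D_{\mathrm{avg\text{-}bdry}}$. Since $\mathcal F_X\ne\varnothing$ on the relevant event (linear separability), \eqref{eq:minmax-equivalence-clean} gives
\[
\mathcal U_\psi^X=\{aW:a>0,\ W\in\mathcal M_\psi^X\}.
\]
Hence the image of $\mathcal U_\psi^X$ under $D_{\mathrm{avg\text{-}bdry}}$ should coincide with the image of $\mathcal M_\psi^X$. If so, both the supremum identity and the infimum identity follow at once, because they are suprema and infima over the same set of real numbers.

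The steps, in order:

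First, I will confirm that $D_{\mathrm{avg\text{-}bdry}}$ is well defined on all of $\mathcal U_\psi^X$. By the remark preceding the lemma, every $W\in\mathcal F_X\supseteq\mathcal M_\psi^X$ has $w_i\ne w_j$ for all $i\ne j$. For $a>0$, the matrix $aW$ has pairwise row differences $a(w_i-w_j)\ne\mathbf 0$, so the same holds on the whole ray.

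Second, I will use homogeneity. For $a>0$,
\[
u_{ij}^{aW}=\frac{a(w_i-w_j)}{a\|w_i-w_j\|_2}=u_{ij}^W,
\]
so $D_{\mathrm{avg\text{-}bdry}}(aW)=D_{\mathrm{avg\text{-}bdry}}(W)$.

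Third, I will conclude by double inclusion of value sets. The inclusion $\mathcal M_\psi^X\subseteq\mathcal U_\psi^X$ holds by taking $a=1$. Conversely, each $U\in\mathcal U_\psi^X$ can be written as $U=aW$ with $W\in\mathcal M_\psi^X$, so $D_{\mathrm{avg\text{-}bdry}}(U)=D_{\mathrm{avg\text{-}bdry}}(W)$. Therefore
\[
\{D_{\mathrm{avg\text{-}bdry}}(U):U\in\mathcal U_\psi^X\}=\{D_{\mathrm{avg\text{-}bdry}}(W):W\in\mathcal M_\psi^X\},
\]
and the sup and inf identities follow.

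The only point needing care is the ray representation itself, which is where I expect any real obstacle to lie. If one does not simply take \eqref{eq:minmax-equivalence-clean} as given, it must be checked directly. Let $V\ne\mathbf 0$ satisfy $\operatorname{mar}_X(V)/\psi(V)=\gamma_\psi(X)>0$. Set $W:=V/\operatorname{mar}_X(V)$. Then $\operatorname{mar}_X(W)=1$, so $W\in\mathcal F_X$, and
\[
\psi(W)=\frac{1}{\gamma_\psi(X)}=r_\psi^*(X),
\]
so $W\in\mathcal M_\psi^X$.

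In the other direction, let $W\in\mathcal M_\psi^X$. Then $\operatorname{mar}_X(W)\ge1$ and $\psi(W)=1/\gamma_\psi(X)$. This gives $\operatorname{mar}_X(W)/\psi(W)\ge\gamma_\psi(X)$, and by the definition of $\gamma_\psi(X)$ as a maximum this is an equality. Hence every positive multiple of $W$ lies in $\mathcal U_\psi^X$.

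Both directions use only positive homogeneity of $\operatorname{mar}_X$ and of $\psi$, so the argument is uniform over $\psi\in\{\psi_{\mathrm{row}},\psi_{\mathrm{sp}},\psi_{\max}\}$.
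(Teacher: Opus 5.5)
Your proposal is correct and matches the paper's proof, which simply invokes the ray identity $\mathcal U_\psi^X=\{aW:a>0,\ W\in\mathcal M_\psi^X\}$ and the positive scale invariance of $D_{\mathrm{avg\text{-}bdry}}$ to conclude that the two value sets coincide. Your extra checks go beyond what the paper writes, namely the direct verification of the ray identity and the nonvanishing of pairwise row differences along each ray, and both are sound.
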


\begin{proof}
By \eqref{eq:minmax-equivalence-clean}, $\mathcal U_\psi^X=\{aW:a>0,\ W\in\mathcal M_\psi^X\}$. The conclusion follows from the positive scale invariance of $D_{\mathrm{avg\text{-}bdry}}$.
\end{proof}

We will therefore work with the minimum-norm representatives \(\mathcal M_{\mathrm{row}}^X\), \(\mathcal M_{\mathrm{sp}}^X\), and \(\mathcal M_{\max}^X\), and invoke Lemma~\ref{lem:app-det-ray-reduction} only at the end. The max-norm constants are those in \eqref{eq:delta-max-clean}.

\subsection{A Quantitative Common Gaussian-cloud Event}
\label{app:det-common-event}

For each class, define
\begin{equation}
\bar z_i:=\frac1{n_i}\sum_{s=1}^{n_i}z_{i,s},\qquad m_i:=\frac1{n_i}\sum_{s=1}^{n_i}x_{i,s}=\mu_i+r\bar z_i,\qquad e_{i,s}:=x_{i,s}-m_i=r(z_{i,s}-\bar z_i).
\label{eq:app-det-mean-residual}
\end{equation}
Then \(x_{i,s}=m_i+e_{i,s}\), \(\sum_s e_{i,s}={\mathbf 0}\), and \(m_i\) is independent of the centered residual family \((e_{i,1},\ldots,e_{i,n_i})\).
Moreover,
\begin{equation}
\operatorname{Cov}(e_{i,s})=r^2\left(1-\frac1{n_i}\right)\frac{I_d}{d}.
\label{eq:app-det-residual-covariance}
\end{equation}
For \(i\ne j\), write \(u_{ij}^m:=(m_i-m_j)/\|m_i-m_j\|_2\).
For \(u\in\mathbb S^{d-1}\), set
\begin{equation}
H_u(\tau):=\frac1d\sum_{\ell=1}^d\sqrt d\,|u_\ell|\min\{\tau\sqrt d\,|u_\ell|,1\},\qquad J_u(\tau):=\frac1d\sum_{\ell=1}^d\min\{\tau^2d u_\ell^2,1\}.
\label{eq:app-det-HJ}
\end{equation}

\begin{proposition}[Stretched-exponential common input]
\label{prop:app-det-common-event}
Under Assumptions~\ref{ass:gaussian-cloud} and~\ref{ass:asymptotic-regime}, there are deterministic \(\varepsilon_d\downarrow0\), universal \(C<\infty\), a constant \(c=c(r_0)>0\), and events \(\mathcal E_d^{\mathrm{det}}\) such that
\begin{equation}
\mathbb P\!\left((\mathcal E_d^{\mathrm{det}})^c\right)\le C\exp\!\left[-c\sqrt{d\log(2N(d)(d+1))}\right]
\label{eq:app-det-common-event-probability}
\end{equation}
and the following statements hold simultaneously on \(\mathcal E_d^{\mathrm{det}}\):
\begin{enumerate}[label=(H\arabic*)]
\item \label{item:app-det-H1} \(\max_i\|m_i-\mu_i\|_2\le\varepsilon_d\).
\item \label{item:app-det-H2} \(\max_i|\|m_i\|_2-1|\le\varepsilon_d\) and \(\max_{i\ne j}|m_i^\top m_j|\le\varepsilon_d\).
\item \label{item:app-det-H3} \(\max_{i,j,s}|m_j^\top e_{i,s}|\le\varepsilon_d\).
\item \label{item:app-det-H4} \(\max_{i\ne j}\|u_{ij}^m-u_{ij}^{\mu}\|_2\le\varepsilon_d\).
\item \label{item:app-det-H5} The max-norm optimum is finite and satisfies
\begin{equation}
2\sqrt d\,r_{\max}^*(X)\le\sqrt{2\pi}+\varepsilon_d.
\label{eq:app-det-max-optimum-upper}
\end{equation}
\item \label{item:app-det-H6} Uniformly over ordered pairs,
\begin{equation}
\sup_{0\le\tau\le1}|H_{u_{ij}^m}(\tau)-A_{\max}(\tau)|+\sup_{0\le\tau\le1}|J_{u_{ij}^m}(\tau)-B_{\max}(\tau)|\le\varepsilon_d.
\label{eq:app-det-HJ-uniform}
\end{equation}
\end{enumerate}
\end{proposition}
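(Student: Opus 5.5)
The plan is to prove each of \ref{item:app-det-H1}--\ref{item:app-det-H6} as a union bound of Gaussian or sub-exponential tail estimates, all at a common deviation level. Set \(L_d:=\log(2N(d)(d+1))\), so that \(L_d=o(d)\) by Assumption~\ref{ass:asymptotic-regime}, and take
\[
t_d:=(L_d/d)^{1/4}\to0 .
\]
A scalar statistic with a tail of the form \(\exp(-c\,d\,t^2)\) then fails at level \(t_d\) with probability at most \(\exp(-c\sqrt{dL_d})\). Each union bound runs over at most \(N k\), \(k^2\), or \(k^2\cdot\mathrm{poly}(d)\) indices, all of cardinality \(e^{O(L_d)}\), and \(e^{O(L_d)}\) is negligible against \(\exp(-c\sqrt{dL_d})\) because \(L_d\ll\sqrt{dL_d}\). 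Finally, \(\varepsilon_d\) will be a fixed multiple of \(t_d+n_{\min}^{-1/2}\), depending only on \(r_0\), and \(\varepsilon_d\downarrow0\) because \(n_{\min}\to\infty\).

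The structural facts come first. In the normalized coordinates of \eqref{eq:app-normalized-data}, \(m_i=\mu_i+r\bar z_i\) is exactly \(\mathcal N(0,(1+r^2/n_i)I_d/d)\). The family \((m_i)_i\) is independent, and \(m_i\) is independent of the residual family \((e_{i,s})_s\), since the sample mean and the residuals of a Gaussian sample are independent.

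\ref{item:app-det-H1}: \(\|r\bar z_i\|_2\) is \(r/\sqrt{n_i}\) times a normalized chi variable, which gives the bound \(r_0(n_{\min}^{-1/2}+t_d)\).

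\ref{item:app-det-H2}: Laurent--Massart gives \(\|m_i\|_2^2\approx1+r^2/n_i\). For \(i\ne j\), condition on \(m_j\); then \(m_i^\top m_j\) is Gaussian with variance \(O(1/d)\).

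\ref{item:app-det-H3}: Condition on \(m_j\), which is independent of \(e_{i,s}\) for every \(j\), including \(j=i\). Then \(m_j^\top e_{i,s}\sim\mathcal N(0,\sigma^2)\) with \(\sigma^2\le r_0^2\|m_j\|_2^2/d\), and a union bound over the \(Nk\) triples finishes.

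\ref{item:app-det-H4}: This is deterministic given \ref{item:app-det-H1}--\ref{item:app-det-H2}, because \(\|m_i-m_j\|_2\) and \(\|\mu_i-\mu_j\|_2\) are both \(\sqrt2+o(1)\).

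For \ref{item:app-det-H5}, I exhibit a feasible classifier \(W\) whose rows are \(w_i=a\,\operatorname{sign}(m_i)\). The score gap on the training point \(x_{i,s}=m_i+e_{i,s}\) is
\[
a\bigl[\|m_i\|_1-\operatorname{sign}(m_j)^\top m_i+(\operatorname{sign}(m_i)-\operatorname{sign}(m_j))^\top e_{i,s}\bigr].
\]
The terms behave as follows.
\begin{itemize}[leftmargin=*,nosep]
\item The sum \(\|m_i\|_1\) is a Lipschitz function of a Gaussian vector. It concentrates at \(\sqrt{2d/\pi}\,(1+o(1))\) with fluctuation \(O(t_d\sqrt d)\).
\item Conditionally on the other factor, which is independent, the cross term \(\operatorname{sign}(m_j)^\top m_i\) is Gaussian with variance \(O(1)\).
\item The noise term \(\operatorname{sign}(\cdot)^\top e_{i,s}\) is also Gaussian with variance \(O(1)\), again by conditioning on the independent factor.
\end{itemize}
Hence the margin is at least \(a\sqrt d(\sqrt{2/\pi}-O(t_d))\). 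Choosing \(a\) to make it equal to \(1\) gives \(r^*_{\max}(X)\le\|W\|_{\max}=a\). Since \(2\sqrt d\,a\le2\sqrt{\pi/2}+\varepsilon_d=\sqrt{2\pi}+\varepsilon_d\), the bound \eqref{eq:app-det-max-optimum-upper} follows, and feasibility of \(W\) gives finiteness of the optimum.

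For \ref{item:app-det-H6}, note that \(m_i-m_j\) is an isotropic Gaussian vector, so \(u^m_{ij}=g/\|g\|_2\) with \(g\sim\mathcal N(0,I_d)\). Then \(\sqrt d\,u_\ell=g_\ell\,\lambda\), where the scaling \(\lambda=\sqrt d/\|g\|_2=1+O(t_d)\).

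For fixed \(\tau\), compare the two functionals with their unscaled versions \(d^{-1}\sum_\ell|g_\ell|\min\{\tau|g_\ell|,1\}\) and \(d^{-1}\sum_\ell\min\{\tau^2g_\ell^2,1\}\). The first has sub-Gaussian summands bounded by \(|g_\ell|\), so Bernstein applies; the second has summands in \([0,1]\), so Hoeffding applies. Both concentrate at level \(t_d\) around \(A_{\max}(\tau)\) and \(B_{\max}(\tau)\).

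Replacing \(g_\ell\) by \(\lambda g_\ell\) changes each summand by at most \(O(|\lambda-1|)(g_\ell^2+|g_\ell|)\) in absolute value. On the event \(d^{-1}\sum_\ell g_\ell^2\le2\), this costs \(O(t_d)\).

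Uniformity in \(\tau\in[0,1]\) uses a grid of mesh \(t_d\), which has at most \(O(d)\) points after absorbing constants. On the same event \(d^{-1}\sum g_\ell^2\le2\), the maps \(\tau\mapsto H_u(\tau)\) and \(\tau\mapsto J_u(\tau)\) are Lipschitz with constant at most \(2\). The maps \(A_{\max}\) and \(B_{\max}\) are Lipschitz with constant at most \(2\) as well. Union-bounding over the \(k^2\) pairs and the \(O(d)\) grid points stays within the \(e^{O(L_d)}\) budget; this is why the factor \(d+1\) appears inside \(L_d\).

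I expect \ref{item:app-det-H6} to be the main obstacle. It needs uniformity simultaneously over \(\tau\), over \(k^2\sim\alpha^2d^2\) pairs, and over the random normalization \(\lambda\), and all of it must be controlled at the stretched-exponential rate. If the Lipschitz/grid argument gets tight near \(\tau=0\), the fallback is to use that both sides of \eqref{eq:app-det-HJ-uniform} are monotone in \(\tau\) and apply the standard Glivenko--Cantelli bracketing argument on the grid. The remaining step, \ref{item:app-det-H5}, is routine once the sign construction is fixed; the only care needed is that the cross terms are \(O(1)\) while the main term \(\|m_i\|_1\) is of order \(\sqrt d\).
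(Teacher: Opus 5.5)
Your proposal is correct and takes essentially the same route as the paper. Both arguments use Gaussian tails at the common level \((L_d/d)^{1/4}\) with union bounds over \(e^{O(L_d)}\) indices, the scaled sign classifier \(w_i=a\operatorname{sign}(m_i)\) as the feasible witness for (H5), and, for (H6), fixed-\(\tau\) concentration for the uniform direction \(u^m_{ij}\) combined with a mesh of \([0,1]\) and Lipschitz continuity in \(\tau\). Your explicit handling of the random normalization \(\lambda=\sqrt d/\|g\|_2\) spells out a step the paper leaves implicit. The bracketing fallback is unnecessary: since \(u\) is a unit vector, \(H_u\) and \(J_u\) are Lipschitz in \(\tau\) deterministically, with constants \(1\) and \(2\).
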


\begin{proof}
Put \(L_d:=\log(2N(d)(d+1))\) and \(a_d:=(L_d/d)^{1/4}\).
The assumptions \(\log N=o(d)\) and \(n_{\min}\to\infty\) imply \(a_d\to0\).
All normalized Gaussian vectors below obey the shell bound
\begin{equation}
\mathbb P\!\left(\left|\frac{\|g\|_2}{\sqrt d}-1\right|>t\right)\le2e^{-cdt^2},\qquad 0<t<1,
\label{eq:app-det-gaussian-shell}
\end{equation}
and every centered Gaussian scalar with variance at most \(C/d\) obeys \(\mathbb P(|Z|>t)\le2e^{-cdt^2/C}\).

Since \(\bar z_i\overset{\mathrm d}=g_i/\sqrt{n_i d}\), the event \(\max_i\|\bar z_i\|_2\le2/\sqrt{n_{\min}}\) fails with probability at most \(2N e^{-cd}\).
Thus
\begin{equation}
\max_i\|m_i-\mu_i\|_2\le\frac{2r_0}{\sqrt{n_{\min}}}=o(1)
\label{eq:app-det-mean-concentration}
\end{equation}
on an event whose failure probability is smaller than the right-hand side of \eqref{eq:app-det-common-event-probability}.
Applying \eqref{eq:app-det-gaussian-shell} to \(\mu_i\sim\mathcal N({\mathbf 0},I_d/d)\), with threshold \(Ca_d\), and taking a union bound proves \(\max_i|\|\mu_i\|_2-1|\le Ca_d\).
Conditionally on \(\mu_i\), the variable \(\mu_i^\top\mu_j\) is centered Gaussian with variance \(\|\mu_i\|_2^2/d\).
Another union bound over at most \(N^2\) ordered pairs gives \(\max_{i\ne j}|\mu_i^\top\mu_j|\le Ca_d\).
Combining these estimates with \eqref{eq:app-det-mean-concentration} proves (H1) and (H2).

Conditionally on all empirical means, \(m_j^\top e_{i,s}\) is centered Gaussian with variance at most \(r_0^2\|m_j\|_2^2/d\).
On the event \(\max_j\|m_j\|_2\le2\), a union bound over at most \(N^2\) triples gives \(\max_{i,j,s}|m_j^\top e_{i,s}|\le Ca_d\) with the required failure probability.
This proves (H3).
Items (H1)--(H2) imply \(\inf_{i\ne j}\|m_i-m_j\|_2\ge1\) and \(\inf_{i\ne j}\|\mu_i-\mu_j\|_2\ge1\) for all sufficiently large \(d\).
The normalization map is Lipschitz on vectors with norm at least one, so (H1) gives (H4).

For (H5), let \(s_i:=\operatorname{sign}(m_i)\in\{-1,1\}^d\).
Because each \(m_i\) is isotropic Gaussian, \(s_i\) is a uniform sign vector, and it is independent of every residual in its own class.
Bernstein's inequality for \(\|m_i\|_1\), Gaussian tails for \(s_j^\top m_i\) when \(i\ne j\), conditional Gaussian tails for \(s_j^\top e_{i,s}\), and a union bound give
\begin{equation}
\min_i\|m_i\|_1\ge\left(\sqrt{\frac2\pi}-o(1)\right)\sqrt d,\qquad \max_{i\ne j}|s_j^\top m_i|+\max_{i,j,s}|s_j^\top e_{i,s}|=o(\sqrt d)
\label{eq:app-det-sign-inputs}
\end{equation}
outside an event bounded as in \eqref{eq:app-det-common-event-probability}.
The matrix with rows \(w_i=t_ds_i\), where \(t_d\) is the reciprocal of the minimum bracket in
\begin{equation}
(s_i-s_j)^\top x_{i,s}\ge\|m_i\|_1-|s_j^\top m_i|-2\max_{\ell}|s_\ell^\top e_{i,s}|,
\label{eq:app-det-sign-margin}
\end{equation}
is feasible.
Equation \eqref{eq:app-det-sign-inputs} gives \(t_d\le(\sqrt{\pi/2}+o(1))/\sqrt d\), which is exactly (H5).

For (H6), each difference \(m_i-m_j\) is a centered isotropic Gaussian vector, even when \(n_i\ne n_j\), so \(u_{ij}^m\) is exactly uniform on \(\mathbb S^{d-1}\).
For one uniform direction, the two coordinate empirical processes in \eqref{eq:app-det-HJ} concentrate about \(A_{\max}\) and \(B_{\max}\) at rate \(e^{-cd\eta^2}\) for any fixed \(\tau\).
On the event \(d^{-1}\sum_\ell d u_\ell^2=1\), both processes are uniformly Lipschitz in \(\tau\).
A mesh of \([0,1]\) with spacing \(a_d\), followed by a union bound over the mesh and at most \(N^2\) ordered pairs, yields \eqref{eq:app-det-HJ-uniform} with error \(Ca_d\) and failure probability bounded by \eqref{eq:app-det-common-event-probability}.
Taking \(\varepsilon_d\) to dominate \(Ca_d\), \(2r_0/\sqrt{n_{\min}}\), and the deterministic vanishing envelopes in \eqref{eq:app-det-sign-inputs} completes the proof.
\end{proof}

\begin{lemma}[Normalization stability]
\label{lem:app-det-normalization}
Let \(x,y\in\mathbb R^d\) with \(y\ne{\mathbf 0}\).
If \(\|x-y\|_2\le\|y\|_2/2\), then
\begin{equation}
\left\|\frac{x}{\|x\|_2}-\frac{y}{\|y\|_2}\right\|_2\le\frac{4\|x-y\|_2}{\|y\|_2}.
\label{eq:app-det-normalization-bound}
\end{equation}
\end{lemma}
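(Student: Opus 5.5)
The plan is a direct triangle-inequality argument, with no appeal to earlier results. Write the difference of normalized vectors as
\[
\frac{x}{\|x\|_2}-\frac{y}{\|y\|_2}=\frac{x-y}{\|x\|_2}+y\left(\frac1{\|x\|_2}-\frac1{\|y\|_2}\right).
\]
This splits the error into a term driven by \(x-y\) itself and a term driven by the mismatch of the norms.

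The first step is a lower bound on \(\|x\|_2\). From the hypothesis \(\|x-y\|_2\le\|y\|_2/2\) and the reverse triangle inequality, \(\|x\|_2\ge\|y\|_2-\|x-y\|_2\ge\|y\|_2/2>0\). In particular \(x\ne{\mathbf 0}\), so the left-hand side of \eqref{eq:app-det-normalization-bound} is well defined. This gives \(\|x-y\|_2/\|x\|_2\le 2\|x-y\|_2/\|y\|_2\).

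The second step bounds the norm-mismatch term. Its norm equals \(\|y\|_2\cdot|\|y\|_2-\|x\|_2|/(\|x\|_2\|y\|_2)=|\|y\|_2-\|x\|_2|/\|x\|_2\). By the reverse triangle inequality this is at most \(\|x-y\|_2/\|x\|_2\), and by the first step that is at most \(2\|x-y\|_2/\|y\|_2\).

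Adding the two bounds gives the constant \(4\), which is exactly the claimed inequality. There is no genuine obstacle here; the only point needing care is using the hypothesis to guarantee \(\|x\|_2\ge\|y\|_2/2\), which controls both denominators.
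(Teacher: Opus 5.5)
Your proposal is correct and matches the paper's proof: the paper likewise uses the reverse triangle inequality to get \(\|x\|_2\ge\|y\|_2/2\), then adds and subtracts \(y/\|x\|_2\) and applies the reverse triangle inequality again to bound the norm-mismatch term. Each of the two resulting terms is at most \(2\|x-y\|_2/\|y\|_2\), which gives the constant \(4\).
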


\begin{proof}
The reverse triangle inequality gives \(\|x\|_2\ge\|y\|_2/2\).
Adding and subtracting \(y/\|x\|_2\) and applying the reverse triangle inequality once more gives the displayed bound.
\end{proof}

\subsection{Row-norm Representatives Align with the Population Mean Differences}
\label{app:det-row}

The row-norm part is deterministic once the empirical means and residuals satisfy the small-error conditions in Proposition~\ref{prop:app-det-common-event}.

\begin{proposition}[Deterministic row-norm rigidity]
\label{prop:app-det-row-rigidity}
Fix deterministic samples $x_{i,s}=m_i+e_{i,s}$ with $\sum_s e_{i,s}={\mathbf 0}$ for every $i$. Suppose that, for some $0<h<1/20$,
\begin{equation}
\max_i\bigl|\|m_i\|_2-1\bigr|\le h,
\qquad
\max_{i\ne j}|m_i^\top m_j|\le h,
\qquad
\max_{i,j,s}|m_j^\top e_{i,s}|\le h.
\label{eq:app-det-row-rigidity-assumption}
\end{equation}
Then $\mathcal F_X$ is nonempty, and every $W\in\mathcal M_{\mathrm{row}}^X$ satisfies
\begin{equation}
\max_i\|w_i-m_i\|_2
\le
C\left(h+\frac1k\right)^{1/4}
\label{eq:app-det-row-rigidity-conclusion}
\end{equation}
for a universal constant $C$.
\end{proposition}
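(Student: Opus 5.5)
Every matrix in $\mathcal M_{\mathrm{row}}^X$ is squeezed between two estimates: a cheap upper bound on the optimal row norm, obtained from an explicit feasible point, and a lower bound that every feasible matrix must obey, obtained by averaging the margin constraints. We then run a strong-convexity (parallelogram) argument on the Euclidean balls that make up the row-norm unit ball.

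\textbf{Upper bound via an explicit feasible point.} Take $W^\circ$ with rows $w_i^\circ:=m_i/(1-4h)$. For a sample $x_{i,s}=m_i+e_{i,s}$ and any $j\ne i$,
\[
(m_i-m_j)^\top x_{i,s}\ \ge\ \|m_i\|_2^2-|m_i^\top m_j|-|m_i^\top e_{i,s}|-|m_j^\top e_{i,s}|\ \ge\ (1-h)^2-3h\ \ge\ 1-5h .
\]
Choosing the scaling $1/(1-5h)$ instead of $1/(1-4h)$ makes $W^\circ$ feasible, which in particular shows $\mathcal F_X\neq\varnothing$. Its row norm is at most $(1+h)/(1-5h)\le 1+Ch$. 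Hence every $W\in\mathcal M_{\mathrm{row}}^X$ satisfies $\|w_i\|_2\le 1+Ch$ for all $i$.

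\textbf{Lower bound by averaging the constraints.} Fix $W$ feasible and a class $i$. Averaging the constraint $(w_i-w_j)^\top x_{i,s}\ge1$ over $s$ kills the residuals, since $\sum_s e_{i,s}=\mathbf 0$, and gives $(w_i-w_j)^\top m_i\ge1$ for every $j\ne i$. Averaging further over $j\ne i$ gives
\[
w_i^\top m_i-\bar w_{-i}^\top m_i\ \ge\ 1,\qquad \bar w_{-i}:=\tfrac1{k-1}\textstyle\sum_{j\ne i}w_j .
\]
The averaged term is then controlled by summing over $i$. Using the near-orthonormality of the $m$'s, one gets $\sum_i \bar w_{-i}^\top m_i\approx\frac1{k-1}\bigl(\langle \sum_j w_j,\sum_i m_i\rangle-\sum_i w_i^\top m_i\bigr)$. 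The first inner product is at most $\|\sum_j w_j\|_2\,\|\sum_i m_i\|_2\le k(1+Ch)\cdot\sqrt{k(1+h)+k^2h}$, since the Gram matrix of the $m_i$ has off-diagonal entries at most $h$. Dividing by $k(k-1)$, the per-class average of $\bar w_{-i}^\top m_i$ is bounded below by $-O(\sqrt h+k^{-1/2})$. Consequently
\[
\frac1k\sum_i w_i^\top m_i\ \ge\ 1-C\bigl(\sqrt h+k^{-1/2}\bigr).
\]

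\textbf{Upgrading the average to a uniform bound.} This is the main obstacle, and I expect it to produce the fourth-root loss. The tempting shortcut fails: combining $\|w_i\|_2\le1+Ch$, $\|m_i\|_2\le1+h$ with the averaged inner product gives $\frac1k\sum_i\|w_i-m_i\|_2^2=O(\sqrt h+k^{-1/2})$, which is only an average. For a uniform bound I would first try to control $\bar w_{-i}^\top m_i$ class by class. Since $|w_j^\top m_i|\le 1+Ch$ individually, this needs a separate handle on the mean vector $\sum_j w_j$. The natural handle is the row-shift and reflection structure of the minimizers: because row $i$ only enters via differences, I can replace $W$ by the minimizer closest to the simplex-centered configuration, or argue directly that $\sum_j w_j$ must have small projection on every $m_i$, otherwise shrinking that component in all rows reduces the row norm while preserving feasibility.

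Concretely, the plan is to show $|(\sum_j w_j)^\top m_i|\le Ck(h+1/k)^{1/2}$ for every $i$, using the Gram bound together with the per-$i$ inequality $(w_i-w_j)^\top m_i\ge1$ summed over $j$. This gives $w_i^\top m_i\ge1-C(h+1/k)^{1/2}$ uniformly in $i$. Then
\[
\|w_i-m_i\|_2^2=\|w_i\|_2^2+\|m_i\|_2^2-2w_i^\top m_i\le C\bigl(h+(h+1/k)^{1/2}\bigr),
\]
and taking square roots yields the claimed $C(h+1/k)^{1/4}$.
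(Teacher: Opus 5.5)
Your feasible witness and your class-averaged inequality are correct. Note that $w_i^\top m_i-\bar w_{-i}^\top m_i\ge1$ is equivalent to $(w_i-\bar w)^\top m_i\ge1-1/k$ with $\bar w:=k^{-1}\sum_r w_r$.

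The gap is the step you yourself call the main obstacle. You never obtain a lower bound on $\bar w^\top m_i$ for each individual $i$, and none of your suggested routes provides one:
\begin{enumerate}
\item Passing to a particular minimizer is not permitted. The claim concerns every $W\in\mathcal M_{\mathrm{row}}^X$, and it is later used uniformly over the whole face.
\item Invariance of feasibility under common row shifts only shows that $v={\mathbf 0}$ minimizes $v\mapsto\max_i\|w_i-v\|_2$. That is, the origin is the center of the smallest ball enclosing the rows. This does not control the arithmetic mean $\bar w$. So the assertion that a large component of $\sum_j w_j$ along some $m_i$ could be shrunk to reduce the row norm is unjustified.
\item Summing $(w_i-w_j)^\top m_i\ge1$ over $j$ gives $\bar w^\top m_i\le w_i^\top m_i-(1-1/k)\le C(h+1/k)$. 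This is an upper bound, the wrong direction. The Gram bound on the $m_i$ only controls the averaged quantity $\bar w^\top\bar m$, not individual classes.
\end{enumerate}

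The idea you are missing is the paper's variance argument. The centered constraint and $\|m_i\|_2\le1+h$ give $\|w_i-\bar w\|_2\ge L:=(1-1/k)/(1+h)$ for \emph{every} $i$. Every row of a minimizer has norm at most $R^*\le1+Ch$. Hence
\[
L^2\le\frac1k\sum_{i=1}^k\|w_i-\bar w\|_2^2=\frac1k\sum_{i=1}^k\|w_i\|_2^2-\|\bar w\|_2^2\le(R^*)^2-\|\bar w\|_2^2 ,
\]
so $\|\bar w\|_2^2\le C(h+1/k)$. It follows that $w_i^\top m_i\ge1-1/k-(1+h)\|\bar w\|_2\ge1-C(h+1/k)^{1/2}$ uniformly in $i$, and your final display then finishes the proof.

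Your averaged bound can also be salvaged, but not at the stated rate without extra work. Let $\bar m:=k^{-1}\sum_i m_i$, so $\|\bar m\|_2\le C(h+1/k)^{1/2}$. Jensen gives $\|\bar w-\bar m\|_2^2\le k^{-1}\sum_i\|w_i-m_i\|_2^2$. On its own this yields $\|\bar w\|_2\le C(h+1/k)^{1/4}$ and only the exponent $1/8$ in the conclusion. To reach $1/4$ you would have to feed $\|\bar w\|_2$ back into the average deficit $1/k+\|\bar w\|_2\|\bar m\|_2$. You would then solve the self-consistent inequality $x\le C\delta^{1/2}+C\sqrt{x}\,\delta^{1/4}$ for $x=\|\bar w\|_2$, with $\delta:=h+1/k$.
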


\begin{proof}
For $i\ne j$ and $s\le n_i$,
\begin{align}
(m_i-m_j)^\top x_{i,s}
&=\|m_i\|_2^2-m_j^\top m_i+(m_i-m_j)^\top e_{i,s}
\notag\\
&\ge (1-h)^2-h-2h
\ge 1-5h.
\label{eq:app-det-row-witness-margin}
\end{align}
Hence the matrix with rows $\widetilde w_i=m_i/(1-5h)$ is feasible. If $R^*:=\max_i\|w_i\|_2$ is the optimal row radius, then
\begin{equation}
R^*\le\frac{1+h}{1-5h}\le1+Ch.
\label{eq:app-det-row-radius-upper}
\end{equation}

Let $\bar w:=k^{-1}\sum_{r=1}^k w_r$. For any feasible $W$, averaging the constraints over $s=1,\ldots,n_i$ and summing over $j\ne i$ gives
\begin{equation}
\sum_{j\ne i}(w_i-w_j)^\top m_i\ge k-1.
\label{eq:app-det-row-summed-constraints}
\end{equation}
Since $\sum_{j\ne i}w_j=k\bar w-w_i$, this is equivalent to
\begin{equation}
(w_i-\bar w)^\top m_i\ge1-\frac1k.
\label{eq:app-det-row-centering-constraint}
\end{equation}
Consequently,
\begin{equation}
\|w_i-\bar w\|_2
\ge
L:=\frac{1-1/k}{1+h}.
\label{eq:app-det-row-centered-lower}
\end{equation}
For a row-norm minimizer, the variance identity yields
\begin{align}
L^2
&\le
\frac1k\sum_{i=1}^k\|w_i-\bar w\|_2^2
=
\frac1k\sum_{i=1}^k\|w_i\|_2^2-\|\bar w\|_2^2
\le
(R^*)^2-\|\bar w\|_2^2.
\label{eq:app-det-row-variance-identity}
\end{align}
Combining this with \eqref{eq:app-det-row-radius-upper} gives
\begin{equation}
\|\bar w\|_2^2\le C\left(h+\frac1k\right).
\label{eq:app-det-row-common-shift-small}
\end{equation}
Using \eqref{eq:app-det-row-centering-constraint} once more,
\begin{equation}
w_i^\top m_i
\ge
1-\frac1k+\bar w^\top m_i
\ge
1-C\left(h+\frac1k\right)^{1/2}.
\label{eq:app-det-row-alignment-lower}
\end{equation}
Together with $\|w_i\|_2\le1+Ch$ and $\|m_i\|_2\le1+h$, this implies
\begin{align}
\|w_i-m_i\|_2^2
&=\|w_i\|_2^2+\|m_i\|_2^2-2w_i^\top m_i
\le
C\left(h+\frac1k\right)^{1/2}.
\label{eq:app-det-row-distance-square}
\end{align}
Taking square roots proves \eqref{eq:app-det-row-rigidity-conclusion}.
\end{proof}

\begin{proposition}[Uniform row-norm alignment]
\label{prop:app-det-row-boundary}
Under \eqref{eq:gaussian-cloud}--\eqref{eq:asymptotic-regime-clean},
\begin{equation}
\sup_{W\in\mathcal M_{\mathrm{row}}^X}
D_{\mathrm{avg\text{-}bdry}}(W)
=o_{\mathbb P}(1).
\label{eq:app-det-row-boundary-convergence}
\end{equation}
In fact, the stronger uniform pairwise statement
\begin{equation}
\sup_{W\in\mathcal M_{\mathrm{row}}^X}
\max_{i\ne j}
\|u_{ij}^W-u_{ij}^{\mu}\|_2
=o_{\mathbb P}(1)
\label{eq:app-det-row-uniform-pair}
\end{equation}
holds.
\end{proposition}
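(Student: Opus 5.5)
The plan is to combine the high-probability event \(\mathcal E_d^{\mathrm{det}}\) of Proposition~\ref{prop:app-det-common-event} with the deterministic rigidity bound of Proposition~\ref{prop:app-det-row-rigidity}, and then convert row-wise closeness to closeness of pairwise difference directions.

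\textbf{Step 1: row-wise rigidity.} Work on \(\mathcal E_d^{\mathrm{det}}\) in the normalized coordinates \eqref{eq:app-normalized-data}. Items \ref{item:app-det-H2} and \ref{item:app-det-H3} give the hypotheses \eqref{eq:app-det-row-rigidity-assumption} with \(h=\varepsilon_d\), which is below \(1/20\) for large \(d\). Proposition~\ref{prop:app-det-row-rigidity} then yields, simultaneously for every \(W\in\mathcal M_{\mathrm{row}}^X\),
\[
\max_i\|w_i-m_i\|_2\le C(\varepsilon_d+1/k)^{1/4}=:\xi_d .
\]
Since \(k/d\to\alpha>0\), we have \(k\to\infty\), so \(\xi_d\to0\) deterministically.

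\textbf{Step 2: pass to pairwise directions.} For \(i\ne j\), the triangle inequality gives \(\|(w_i-w_j)-(m_i-m_j)\|_2\le2\xi_d\). By \ref{item:app-det-H2}, \(\|m_i-m_j\|_2^2\ge 2(1-\varepsilon_d)^2-2\varepsilon_d\), so \(\|m_i-m_j\|_2\ge1\) for large \(d\). Once \(2\xi_d\le1/2\), Lemma~\ref{lem:app-det-normalization} gives \(\|u_{ij}^W-u_{ij}^m\|_2\le8\xi_d\). Combining this with \ref{item:app-det-H4}, \(\|u_{ij}^m-u_{ij}^\mu\|_2\le\varepsilon_d\), yields
\[
\sup_{W\in\mathcal M_{\mathrm{row}}^X}\max_{i\ne j}\|u_{ij}^W-u_{ij}^\mu\|_2\le8\xi_d+\varepsilon_d\to0 .
\]
This is \eqref{eq:app-det-row-uniform-pair}. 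Averaging over pairs bounds \(D_{\mathrm{avg\text{-}bdry}}\) by the same quantity, which proves \eqref{eq:app-det-row-boundary-convergence}.

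\textbf{Step 3: probability and unnormalized scale.} By \eqref{eq:app-det-common-event-probability}, \(\mathbb P((\mathcal E_d^{\mathrm{det}})^c)\to0\), since \(d\log(2N(d+1))\ge d\to\infty\). Hence the uniform bound holds with probability tending to one, i.e.\ it is \(o_{\mathbb P}(1)\). By Lemma~\ref{lem:app-common-rescaling} the direction quantities are unchanged by the \(d^{-1/2}\) rescaling, so the result transfers to the original model.

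\textbf{Main obstacle.} The only delicate point is uniformity over the possibly non-unique set \(\mathcal M_{\mathrm{row}}^X\). This is handled because Proposition~\ref{prop:app-det-row-rigidity} is stated for every minimizer, using only feasibility and optimality of the row radius. A secondary check is that the \(1/k\) term vanishes, which relies on \(\alpha>0\).
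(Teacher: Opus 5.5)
Your proposal is correct and follows essentially the paper's argument: work on \(\mathcal E_d^{\mathrm{det}}\), apply Proposition~\ref{prop:app-det-row-rigidity} with \(h=\varepsilon_d\) (using \(k\to\infty\) to kill the \(1/k\) term), and then use Lemma~\ref{lem:app-det-normalization} to pass from rows to pairwise directions, uniformly over all row-norm minimizers. The only cosmetic difference is the order of the triangle inequalities: you normalize against \(m_i-m_j\) and then invoke \ref{item:app-det-H4}, whereas the paper first moves each row from \(m_i\) to \(\mu_i\) via \ref{item:app-det-H1} and then normalizes directly against \(\mu_i-\mu_j\).
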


\begin{proof}
Work on $\mathcal E_d^{\mathrm{det}}$ from Proposition~\ref{prop:app-det-common-event}. Apply Proposition~\ref{prop:app-det-row-rigidity} with $h=\varepsilon_d$. Since $k/d\to\alpha\in(0,\infty)$, we have $k\to\infty$, and therefore, uniformly over all row-norm minimizers,
\begin{equation}
\max_i\|w_i-m_i\|_2=o(1).
\label{eq:app-det-row-to-mean}
\end{equation}
Combining this with \ref{item:app-det-H1} gives
\begin{equation}
\Delta_d^{\mathrm{row}}
:=
\sup_{W\in\mathcal M_{\mathrm{row}}^X}
\max_i\|w_i-\mu_i\|_2
=o(1)
\label{eq:app-det-row-to-center}
\end{equation}
on $\mathcal E_d^{\mathrm{det}}$. Moreover, $\inf_{i\ne j}\|\mu_i-\mu_j\|_2$ is bounded below by a positive constant on the same event. Hence
\begin{equation}
\|(w_i-w_j)-(\mu_i-\mu_j)\|_2
\le2\Delta_d^{\mathrm{row}}
\label{eq:app-det-row-pair-perturbation}
\end{equation}
uniformly over $i\ne j$ and $W\in\mathcal M_{\mathrm{row}}^X$. Applying Lemma~\ref{lem:app-det-normalization} proves \eqref{eq:app-det-row-uniform-pair}, and averaging over ordered pairs gives \eqref{eq:app-det-row-boundary-convergence}.
\end{proof}

\subsection{Max-norm Representatives Have a Positive Angle Gap}
\label{app:det-max}

The max-norm proof has two ingredients. First, optimality imposes a coordinate cap of order $d^{-1/2}$ on every pairwise row difference. Second, feasibility forces every such difference to have a large projection onto the empirical mean-pair direction. The clipped minimum-energy problem below shows that these two requirements are incompatible with near-perfect angular alignment.

\subsubsection{The Clipped Minimum-energy Problem}

\begin{proposition}[Exact clipped problem]
\label{prop:app-det-clipped-exact}
Fix $u\in\mathbb S^{d-1}$ and $a,b>0$. Define
\begin{equation}
\mathfrak m_d(u;a,b)^2
:=
\min\left\{
\|x\|_2^2:
x^\top u\ge b,
\ \|x\|_\infty\le\frac{a}{\sqrt d}
\right\}.
\label{eq:app-det-clipped-problem}
\end{equation}
Let $z_\ell:=\sqrt d\,|u_\ell|$. If
\begin{equation}
0<b<a\frac1d\sum_{\ell=1}^d z_\ell,
\label{eq:app-det-clipped-feasibility}
\end{equation}
then the minimizer is unique and has coordinates
\begin{equation}
x_\ell
=
\frac{a}{\sqrt d}\operatorname{sign}(u_\ell)\min\{\tau z_\ell,1\},
\qquad
\ell=1,\ldots,d,
\label{eq:app-det-clipped-solution}
\end{equation}
where $\operatorname{sign}(0):=0$ and $\tau>0$ is the unique solution of
\begin{equation}
H_u(\tau)=\frac ba.
\label{eq:app-det-clipped-tau}
\end{equation}
Moreover,
\begin{equation}
\mathfrak m_d(u;a,b)^2=a^2J_u(\tau).
\label{eq:app-det-clipped-value}
\end{equation}
\end{proposition}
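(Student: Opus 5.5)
The problem is a strictly convex quadratic minimized over a nonempty compact polytope, so I will solve it through its KKT conditions. It separates across coordinates apart from the single linear constraint \(x^\top u\ge b\).

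\textbf{Step 1: existence, uniqueness, and an active constraint.} The feasible set is a compact convex polytope. It is nonempty under \eqref{eq:app-det-clipped-feasibility}, since the point \(x_\ell=(a/\sqrt d)\operatorname{sign}(u_\ell)\) gives
\[
x^\top u=\frac ad\sum_\ell z_\ell>b .
\]
Because \(\|x\|_2^2\) is strictly convex, a minimizer exists and is unique. Since \(b>0\), the point \(x=0\) is infeasible. So the constraint \(x^\top u\ge b\) must be active: otherwise we could shrink the minimizer toward \(0\), stay feasible (the box constraint is preserved by shrinking), and strictly decrease the norm. Slater's condition holds by taking a slight shrinkage of the sign point above, so the KKT conditions are necessary and sufficient.

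\textbf{Step 2: coordinatewise solution for fixed multiplier.} The Lagrangian is \(\|x\|_2^2-2\lambda(x^\top u-b)\) with \(\lambda\ge0\), subject to the box constraints. It decouples across coordinates. For fixed \(\lambda\), each \(x_\ell\) minimizes \(x_\ell^2-2\lambda u_\ell x_\ell\) over \([-a/\sqrt d,a/\sqrt d]\), which gives the projection
\[
x_\ell=\operatorname{clip}\bigl(\lambda u_\ell,\pm a/\sqrt d\bigr)=\frac{a}{\sqrt d}\operatorname{sign}(u_\ell)\min\{\tau z_\ell,1\},\qquad \tau:=\frac{\lambda}{a}.
\]
This is exactly \eqref{eq:app-det-clipped-solution}.

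\textbf{Step 3: pinning down \(\tau\).} Substituting the coordinates into \(x^\top u=b\) gives
\[
x^\top u=\frac{a}{\sqrt d}\sum_\ell |u_\ell|\min\{\tau z_\ell,1\}=a\,H_u(\tau),
\]
so \(H_u(\tau)=b/a\). To see this has a unique solution, I will check three properties of \(H_u\):
\begin{itemize}
\item it is continuous and nondecreasing;
\item \(H_u(0)=0\) and \(H_u(\tau)\uparrow d^{-1}\sum_\ell z_\ell\) as \(\tau\to\infty\), which exceeds \(b/a\) by \eqref{eq:app-det-clipped-feasibility};
\item it is strictly increasing on the interval where some coordinate with \(z_\ell>0\) is unclipped, i.e.\ \(\tau<1/\min_{z_\ell>0}z_\ell\).
\end{itemize}
The value \(b/a\) lies strictly below the supremum, so the solution lies in this strictly increasing region and is therefore unique.

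\textbf{Step 4: the optimal value.} Squaring the coordinates gives
\[
\|x\|_2^2=\frac{a^2}{d}\sum_\ell\min\{\tau^2z_\ell^2,1\}=a^2J_u(\tau),
\]
using \(\operatorname{sign}(u_\ell)^2=1\) whenever \(z_\ell>0\). Coordinates with \(z_\ell=0\) contribute \(0\) to both sides.

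\textbf{Expected obstacle.} The only delicate point is Step 3: uniqueness of \(\tau\) and the treatment of the edge cases \(z_\ell=0\) and the flat region of \(H_u\). The strict feasibility inequality is what excludes the flat part, so I will verify that carefully.
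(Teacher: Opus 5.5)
Your proof is correct and follows essentially the paper's route: existence and uniqueness from compactness and strict convexity, an active projection constraint, KKT under Slater with a coordinatewise clipped Lagrangian minimizer, and uniqueness of \(\tau\) from strict monotonicity of \(H_u\) below its saturation value. The only difference is that the paper first proves sign alignment of the minimizer and reduces to variables \(y\in[0,1]^d\) before applying KKT. You instead apply KKT directly on the signed box, where clipping \(\lambda u_\ell\) produces the sign pattern automatically; note also that \(\tau>0\) follows because \(H_u(0)=0<b/a\).
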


\begin{proof}
We first verify that the feasible set is nonempty. Define the coordinate-saturated vector $\widehat x\in\mathbb R^d$ by
\begin{equation}
\widehat x_\ell:=\frac{a}{\sqrt d}\operatorname{sign}(u_\ell).
\label{eq:app-det-clipped-saturated-vector}
\end{equation}
Its projection onto $u$ is
\begin{equation}
\widehat x^\top u
=
\frac{a}{\sqrt d}\sum_{\ell=1}^d|u_\ell|
=
\frac ad\sum_{\ell=1}^d z_\ell
>
b,
\label{eq:app-det-clipped-max-projection}
\end{equation}
where the strict inequality follows from \eqref{eq:app-det-clipped-feasibility}. Choose \(c\) such that \(b/(\widehat x^\top u)<c<1\). Then \(c\widehat x\) satisfies the projection constraint strictly and lies strictly inside the coordinate box. The feasible set is thus nonempty.
It is also closed and bounded, so a minimizer exists. Since the objective $x\mapsto\|x\|_2^2$ is strictly convex and the feasible set is convex, this minimizer is unique.

Let $x^\star$ denote the unique minimizer. We next show that every nonzero coordinate of $x^\star$ has the same sign as the corresponding coordinate of $u$. Given any feasible $x$, define $x^\sharp$ by
\begin{equation}
x^\sharp_\ell:=
\begin{cases}
\operatorname{sign}(u_\ell)|x_\ell|,&u_\ell\ne0,\\
0,&u_\ell=0.
\end{cases}
\label{eq:app-det-clipped-sign-alignment}
\end{equation}
Then $\|x^\sharp\|_\infty\le\|x\|_\infty$, $\|x^\sharp\|_2\le\|x\|_2$, and
\begin{equation}
(x^\sharp)^\top u
=
\sum_{\ell=1}^d|x_\ell||u_\ell|
\ge
\sum_{\ell=1}^d x_\ell u_\ell
=
x^\top u.
\label{eq:app-det-clipped-sign-projection}
\end{equation}
Applying this construction to $x^\star$ shows that $x^\sharp$ is feasible and has objective value no larger than that of $x^\star$. By optimality and uniqueness, $x^\star=x^\sharp$. Hence $x^\star_\ell$ has the sign of $u_\ell$ when $u_\ell\ne0$, and $x^\star_\ell=0$ when $u_\ell=0$.

The projection constraint must be active at the minimizer. Indeed, if $(x^\star)^\top u>b$, then
\begin{equation}
c:=\frac{b}{(x^\star)^\top u}
\end{equation}
satisfies $0<c<1$. The vector $cx^\star$ obeys $\|cx^\star\|_\infty\le a/\sqrt d$ and $(cx^\star)^\top u=b$, while
\begin{equation}
\|cx^\star\|_2^2=c^2\|x^\star\|_2^2<\|x^\star\|_2^2,
\end{equation}
contradicting the optimality of $x^\star$. Therefore,
\begin{equation}
(x^\star)^\top u=b.
\label{eq:app-det-clipped-active-projection}
\end{equation}

Because of the sign alignment, write
\begin{equation}
y_\ell:=\frac{\sqrt d}{a}|x^\star_\ell|,
\qquad
0\le y_\ell\le1.
\label{eq:app-det-clipped-y-definition}
\end{equation}
Using $|u_\ell|=z_\ell/\sqrt d$, the active projection constraint becomes
\begin{equation}
b
=
\sum_{\ell=1}^d|x^\star_\ell||u_\ell|
=
\frac ad\sum_{\ell=1}^d z_\ell y_\ell,
\end{equation}
or equivalently,
\begin{equation}
\sum_{\ell=1}^d z_\ell y_\ell=\frac{db}{a}.
\label{eq:app-det-clipped-reduced-constraint}
\end{equation}
Moreover,
\begin{equation}
\|x^\star\|_2^2
=
\frac{a^2}{d}\sum_{\ell=1}^d y_\ell^2.
\label{eq:app-det-clipped-reduced-objective}
\end{equation}
Therefore, finding $x^\star$ is equivalent to solving the convex optimization problem
\begin{equation}
\min_{0\le y_\ell\le1}
\frac12\sum_{\ell=1}^d y_\ell^2
\qquad
\text{subject to}
\qquad
\sum_{\ell=1}^d z_\ell y_\ell\ge\frac{db}{a}.
\label{eq:app-det-clipped-reduced-problem}
\end{equation}
The strict inequality in \eqref{eq:app-det-clipped-feasibility} implies that the right-hand side of the constraint is strictly smaller than $\sum_\ell z_\ell$, so the reduced problem has a strictly feasible point. Hence the KKT conditions are necessary and sufficient.

Let $\tau\ge0$ be the multiplier associated with the constraint in \eqref{eq:app-det-clipped-reduced-problem}. For fixed $\tau$, the Lagrangian minimization over the box $[0,1]^d$ separates across coordinates:
\begin{equation}
y_\ell
\in
\operatorname*{arg\,min}_{0\le y\le1}
\left\{
\frac12y^2-\tau z_\ell y
\right\}.
\label{eq:app-det-clipped-coordinate-kkt}
\end{equation}
The unconstrained minimizer of the scalar objective is $y=\tau z_\ell$. Projecting this value onto $[0,1]$ gives
\begin{equation}
y_\ell=\min\{\tau z_\ell,1\}.
\label{eq:app-det-clipped-y-solution}
\end{equation}
The multiplier must satisfy $\tau>0$. If $\tau=0$, then \eqref{eq:app-det-clipped-coordinate-kkt} gives $y_\ell=0$ for every $\ell$, which violates the constraint because $b>0$. Since $\tau>0$, complementary slackness implies that the projection constraint is active:
\begin{equation}
\sum_{\ell=1}^d z_\ell\min\{\tau z_\ell,1\}
=
\frac{db}{a}.
\label{eq:app-det-clipped-multiplier-equation}
\end{equation}
After dividing by $d$, this is exactly
\begin{equation}
H_u(\tau)
=
\frac1d\sum_{\ell=1}^d z_\ell\min\{\tau z_\ell,1\}
=
\frac ba.
\end{equation}

It remains to verify that this equation has a unique solution. The function $H_u$ is continuous and nondecreasing on $[0,\infty)$, with
\begin{equation}
H_u(0)=0,
\qquad
\lim_{\tau\to\infty}H_u(\tau)
=
\frac1d\sum_{\ell=1}^d z_\ell.
\label{eq:app-det-clipped-H-endpoints}
\end{equation}
The condition \eqref{eq:app-det-clipped-feasibility} places $b/a$ strictly between these two values, so existence follows from continuity. To prove uniqueness, take $0\le\tau_1<\tau_2$ and suppose that
\begin{equation}
H_u(\tau_1)<\frac1d\sum_{\ell=1}^d z_\ell.
\end{equation}
Then at least one coordinate with $z_\ell>0$ is not saturated at $\tau_1$, meaning $\tau_1z_\ell<1$. For that coordinate,
\begin{equation}
\min\{\tau_2z_\ell,1\}>\min\{\tau_1z_\ell,1\},
\end{equation}
while every other coordinate contributes a nondecreasing term. Thus $H_u(\tau_2)>H_u(\tau_1)$. Hence $H_u$ is strictly increasing before reaching its limiting value, and the equation $H_u(\tau)=b/a$ has a unique solution.

Substituting \eqref{eq:app-det-clipped-y-solution} into the sign-aligned representation of $x^\star$ gives
\begin{equation}
x^\star_\ell
=
\frac{a}{\sqrt d}\operatorname{sign}(u_\ell)\min\{\tau z_\ell,1\},
\end{equation}
which proves \eqref{eq:app-det-clipped-solution}. Finally,
\begin{align}
\mathfrak m_d(u;a,b)^2
&=
\|x^\star\|_2^2 \notag\\
&=
\frac{a^2}{d}\sum_{\ell=1}^d\min\{\tau z_\ell,1\}^2 \notag\\
&=
a^2\frac1d\sum_{\ell=1}^d\min\{\tau^2z_\ell^2,1\} \notag\\
&=
a^2J_u(\tau),
\end{align}
which proves \eqref{eq:app-det-clipped-value}.
\end{proof}

\begin{lemma}[Uniform asymptotics of the clipped problem]
\label{lem:app-det-clipped-asymptotic}
There exist constants $\delta_0,C_0>0$ such that the following holds. Suppose $u\in\mathbb S^{d-1}$ satisfies
\begin{equation}
\sup_{0\le\tau\le1}|H_u(\tau)-A_{\max}(\tau)|+\sup_{0\le\tau\le1}|J_u(\tau)-B_{\max}(\tau)|\le\delta
\label{eq:app-det-clipped-HJ-close}
\end{equation}
with $0<\delta\le\delta_0$, and suppose
\begin{equation}
|a-\sqrt{2\pi}|+|b-\sqrt2|\le\delta.
\label{eq:app-det-clipped-ab-close}
\end{equation}
Then \eqref{eq:app-det-clipped-feasibility} holds, the solution in Proposition~\ref{prop:app-det-clipped-exact} satisfies $\tau\in(0,1)$, and
\begin{align}
\frac1{\mathfrak m_d(u;a,b)}
&\le C_{\max}+C_0\delta,
\label{eq:app-det-clipped-reciprocal}\\
\frac b{\mathfrak m_d(u;a,b)}
&\le\kappa_{\max}+C_0\delta.
\label{eq:app-det-clipped-cosine}
\end{align}
\end{lemma}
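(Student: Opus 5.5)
The plan is to compare the exact clipped solution of Proposition~\ref{prop:app-det-clipped-exact} with its population counterpart. At the reference values $a=\sqrt{2\pi}$ and $b=\sqrt2$, the ratio is $b/a=1/\sqrt\pi$. The population multiplier is $\tau_*$, defined by $A_{\max}(\tau_*)=1/\sqrt\pi$, and the population value is $a^2B_{\max}(\tau_*)=2\pi B_{\max}(\tau_*)$. This gives exactly $1/\mathfrak m=C_{\max}$ and $b/\mathfrak m=\sqrt2\,C_{\max}=\kappa_{\max}$. The lemma is then a Lipschitz perturbation statement around this point. The only delicate part is inverting the equation $H_u(\tau)=b/a$ stably.

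\emph{Step 1 (scalar facts about $A_{\max},B_{\max}$).} Both functions are continuous and nondecreasing on $[0,1]$. The derivative of $A_{\max}$ is $A_{\max}'(\tau)=\mathbb E[G^2\mathbf 1\{\tau|G|<1\}]$, which is bounded below on $[0,1]$ by $c_A:=\mathbb E[G^2\mathbf 1\{|G|<1\}]>0$. The derivative of $B_{\max}$ is $B_{\max}'(\tau)=2\tau\mathbb E[G^2\mathbf 1\{\tau|G|<1\}]\le2$. We also need a margin at $\tau=1$: $A_{\max}(1)=\mathbb E[G^2;|G|<1]+\mathbb E[|G|;|G|\ge1]=P(|G|<1)\approx0.683$, which is strictly larger than $1/\sqrt\pi\approx0.564$. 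Hence $\tau_*\in(0,1)$, and there is a fixed gap $g:=A_{\max}(1)-1/\sqrt\pi>0$. Finally, $B_{\max}(\tau_*)>0$.

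\emph{Step 2 (feasibility and $\tau\in(0,1)$).} From \eqref{eq:app-det-clipped-ab-close}, $|b/a-1/\sqrt\pi|\le C\delta$ once $\delta_0$ is small. From \eqref{eq:app-det-clipped-HJ-close}, $H_u(1)\ge A_{\max}(1)-\delta\ge1/\sqrt\pi+g-\delta>b/a$ when $\delta_0$ is small relative to $g$. Because $\frac1d\sum_\ell z_\ell\ge H_u(1)$, condition \eqref{eq:app-det-clipped-feasibility} holds, and Proposition~\ref{prop:app-det-clipped-exact} applies. Since $H_u(0)=0<b/a<H_u(1)$ and $H_u$ is strictly increasing before saturation, the unique solution satisfies $\tau\in(0,1)$.

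\emph{Step 3 (locating $\tau$; main obstacle).} This is where I expect the real work, because $H_u$ is only uniformly close to $A_{\max}$ and has no derivative control of its own. The stability must therefore come from $A_{\max}$. Evaluating \eqref{eq:app-det-clipped-HJ-close} at the solution $\tau\in(0,1)$ gives
\[
|A_{\max}(\tau)-A_{\max}(\tau_*)|\le|A_{\max}(\tau)-H_u(\tau)|+|b/a-1/\sqrt\pi|\le C\delta.
\]
The derivative lower bound $c_A$ from Step 1 then gives $|\tau-\tau_*|\le C\delta/c_A$. Next, $J_u(\tau)\ge B_{\max}(\tau)-\delta\ge B_{\max}(\tau_*)-C'\delta$. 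By \eqref{eq:app-det-clipped-value}, $\mathfrak m_d^2=a^2J_u(\tau)\ge(2\pi-C\delta)(B_{\max}(\tau_*)-C'\delta)$, which stays bounded away from zero for small $\delta_0$.

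\emph{Step 4 (conclusion).} Taking reciprocal square roots, which is Lipschitz near the positive value $2\pi B_{\max}(\tau_*)$, gives $1/\mathfrak m_d\le(2\pi B_{\max}(\tau_*))^{-1/2}+C_0\delta=C_{\max}+C_0\delta$, which is \eqref{eq:app-det-clipped-reciprocal}. Multiplying by $b\le\sqrt2+\delta$ gives \eqref{eq:app-det-clipped-cosine} after enlarging $C_0$. All constants depend only on the fixed scalar functions, so they are universal.
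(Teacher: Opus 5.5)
Your proposal is correct and follows essentially the same route as the paper's proof. Both arguments bound $|b/a-1/\sqrt\pi|\le C\delta$, obtain feasibility and $\tau\in(0,1)$ from the fixed gap $A_{\max}(1)-1/\sqrt\pi>0$, localize $|\tau-\tau_*|\le C\delta$ via the derivative lower bound $\mathbb E[G^2\mathbf 1\{|G|\le 1\}]$ for $A_{\max}$, and conclude with the Lipschitz bound on $B_{\max}$ and the identity $\mathfrak m_d^2=a^2J_u(\tau)$. Your explicit evaluation $A_{\max}(1)=\mathbb P(|G|<1)$ is a small bonus, since the paper takes $\tau_*<1$ from its definition rather than verifying it.
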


\begin{proof}
Throughout the proof, $C>0$ denotes a constant independent of $d,u,a,b$, and $\delta$, and its value may change from line to line. Recall that $\tau_*>0$ is defined by
\begin{equation}
A_{\max}(\tau_*)=\frac1{\sqrt\pi}.
\label{eq:app-det-tau-star-recall}
\end{equation}
We first compare the required normalized projection $b/a$ with this reference value. By taking $\delta_0$ small enough, \eqref{eq:app-det-clipped-ab-close} ensures that $a\ge\sqrt{2\pi}/2$. Hence
\begin{align}
\left|\frac ba-\frac1{\sqrt\pi}\right|
&=
\left|\frac ba-\frac{\sqrt2}{\sqrt{2\pi}}\right| \notag\\
&\le
\frac{|b-\sqrt2|}{a}
+
\sqrt2\left|\frac1a-\frac1{\sqrt{2\pi}}\right| \notag\\
&=
\frac{|b-\sqrt2|}{a}
+
\frac{\sqrt2\,|a-\sqrt{2\pi}|}{a\sqrt{2\pi}}
\le C\delta.
\label{eq:app-det-clipped-ratio-close}
\end{align}
Thus the right-hand side of the equation $H_u(\tau)=b/a$ is within $O(\delta)$ of $A_{\max}(\tau_*)$.

We next show that the clipped problem is feasible and that its multiplier satisfies $\tau<1$. Since $\tau_*<1$ and $A_{\max}$ is strictly increasing, the constant
\begin{equation}
g_*:=A_{\max}(1)-A_{\max}(\tau_*)=A_{\max}(1)-\frac1{\sqrt\pi}
\label{eq:app-det-clipped-positive-gap}
\end{equation}
is strictly positive. By \eqref{eq:app-det-clipped-HJ-close},
\begin{equation}
H_u(1)\ge A_{\max}(1)-\delta=\frac1{\sqrt\pi}+g_*-\delta.
\label{eq:app-det-clipped-H-one-lower}
\end{equation}
On the other hand, \eqref{eq:app-det-clipped-ratio-close} gives
\begin{equation}
\frac ba\le\frac1{\sqrt\pi}+C\delta.
\label{eq:app-det-clipped-ratio-upper}
\end{equation}
After decreasing $\delta_0$ if necessary, these two bounds imply
\begin{equation}
0<\frac ba<H_u(1).
\label{eq:app-det-clipped-ratio-below-H-one}
\end{equation}
Because
\begin{equation}
H_u(1)=\frac1d\sum_{\ell=1}^d z_\ell\min\{z_\ell,1\}\le\frac1d\sum_{\ell=1}^d z_\ell,
\end{equation}
we obtain
\begin{equation}
0<b<a\frac1d\sum_{\ell=1}^d z_\ell.
\end{equation}
This is exactly \eqref{eq:app-det-clipped-feasibility}. Moreover, $H_u(0)=0$ and $H_u(1)>b/a$, so Proposition~\ref{prop:app-det-clipped-exact} gives a unique solution $\tau\in(0,1)$ of
\begin{equation}
H_u(\tau)=\frac ba.
\label{eq:app-det-clipped-solution-equation}
\end{equation}

We now show that this solution is close to $\tau_*$. Using \eqref{eq:app-det-clipped-solution-equation}, \eqref{eq:app-det-clipped-HJ-close}, \eqref{eq:app-det-clipped-ratio-close}, and \eqref{eq:app-det-tau-star-recall}, we obtain
\begin{align}
|A_{\max}(\tau)-A_{\max}(\tau_*)|
&\le
|A_{\max}(\tau)-H_u(\tau)|
+
\left|H_u(\tau)-A_{\max}(\tau_*)\right| \notag\\
&=
|A_{\max}(\tau)-H_u(\tau)|
+
\left|\frac ba-\frac1{\sqrt\pi}\right|
\le C\delta.
\label{eq:app-det-clipped-A-close}
\end{align}
For $s\in(0,1]$, differentiation under the expectation gives
\begin{equation}
A_{\max}'(s)=\mathbb E\left[G^2\mathbf{1}_{\{s|G|\le1\}}\right].
\label{eq:app-det-A-derivative}
\end{equation}
Since $s\le1$, the event $\{|G|\le1\}$ is contained in $\{s|G|\le1\}$. Therefore
\begin{equation}
A_{\max}'(s)\ge c_A,
\qquad
c_A:=\mathbb E\left[G^2\mathbf{1}_{\{|G|\le1\}}\right]>0.
\label{eq:app-det-A-derivative-lower}
\end{equation}
The mean value theorem and \eqref{eq:app-det-clipped-A-close} now give
\begin{equation}
c_A|\tau-\tau_*|
\le
|A_{\max}(\tau)-A_{\max}(\tau_*)|
\le C\delta,
\end{equation}
and hence
\begin{equation}
|\tau-\tau_*|\le C\delta.
\label{eq:app-det-clipped-tau-close}
\end{equation}

We next lower-bound $J_u(\tau)$. The first assumption of the lemma gives
\begin{equation}
J_u(\tau)\ge B_{\max}(\tau)-\delta.
\label{eq:app-det-clipped-J-first-lower}
\end{equation}
The function $B_{\max}$ is Lipschitz on $[0,1]$. Indeed, for $s,t\in[0,1]$,
\begin{align}
|B_{\max}(s)-B_{\max}(t)|
&\le
\mathbb E\left[\left|\min\{s^2G^2,1\}-\min\{t^2G^2,1\}\right|\right] \notag\\
&\le
|s^2-t^2|\mathbb E[G^2]
\le
2|s-t|.
\label{eq:app-det-B-Lipschitz}
\end{align}
Combining this bound with \eqref{eq:app-det-clipped-tau-close} gives
\begin{equation}
B_{\max}(\tau)\ge B_{\max}(\tau_*)-C\delta.
\end{equation}
Therefore,
\begin{equation}
J_u(\tau)\ge B_{\max}(\tau_*)-C\delta.
\label{eq:app-det-clipped-J-lower}
\end{equation}

The assumption on $a$ also gives
\begin{align}
a^2
&\ge
\left(\sqrt{2\pi}-\delta\right)^2
\ge
2\pi-C\delta.
\label{eq:app-det-clipped-a-square-lower}
\end{align}
Using the exact identity from Proposition~\ref{prop:app-det-clipped-exact},
\begin{equation}
\mathfrak m_d(u;a,b)^2=a^2J_u(\tau),
\end{equation}
together with \eqref{eq:app-det-clipped-J-lower} and \eqref{eq:app-det-clipped-a-square-lower}, we obtain
\begin{align}
\mathfrak m_d(u;a,b)^2
&\ge
(2\pi-C\delta)\left(B_{\max}(\tau_*)-C\delta\right) \notag\\
&\ge
2\pi B_{\max}(\tau_*)-C\delta.
\label{eq:app-det-clipped-energy-lower}
\end{align}

Set
\begin{equation}
M_*:=2\pi B_{\max}(\tau_*)>0.
\end{equation}
By taking $\delta_0$ smaller if necessary, \eqref{eq:app-det-clipped-energy-lower} implies
\begin{equation}
\mathfrak m_d(u;a,b)^2\ge M_*-C\delta\ge\frac{M_*}{2}.
\end{equation}
Consequently,
\begin{align}
\frac1{\mathfrak m_d(u;a,b)}
&\le
\frac1{\sqrt{M_*-C\delta}} \notag\\
&\le
\frac1{\sqrt{M_*}}+C\delta \notag\\
&=
C_{\max}+C\delta,
\label{eq:app-det-clipped-reciprocal-proof}
\end{align}
where the last equality follows from
\begin{equation}
C_{\max}=\left(2\pi B_{\max}(\tau_*)\right)^{-1/2}.
\end{equation}
After enlarging the constant, this proves \eqref{eq:app-det-clipped-reciprocal}.

Finally, \eqref{eq:app-det-clipped-ab-close} gives $b\le\sqrt2+\delta$. Therefore,
\begin{align}
\frac b{\mathfrak m_d(u;a,b)}
&\le
(\sqrt2+\delta)(C_{\max}+C_0\delta) \notag\\
&\le
\sqrt2\,C_{\max}+C\delta \notag\\
&=
\kappa_{\max}+C\delta,
\end{align}
because $\kappa_{\max}=\sqrt2\,C_{\max}$. Enlarging $C_0$ once more proves \eqref{eq:app-det-clipped-cosine}.
\end{proof}

\subsubsection{Application to Max-norm Minimizers}

\begin{proposition}[Max-norm boundary separation]
\label{prop:app-det-max-boundary}
Under \eqref{eq:gaussian-cloud}--\eqref{eq:asymptotic-regime-clean},
\begin{equation}
\inf_{W\in\mathcal M_{\max}^X}
D_{\mathrm{avg\text{-}bdry}}(W)
\ge\delta_{\max}-o_{\mathbb P}(1).
\label{eq:app-det-max-boundary-separation}
\end{equation}
In fact, on the same high-probability event, every ordered pair of every max-norm minimizer has boundary-normal distance at least $\delta_{\max}-o(1)$.
\end{proposition}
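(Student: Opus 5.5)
The plan is to work throughout on the event \(\mathcal E_d^{\mathrm{det}}\) of Proposition~\ref{prop:app-det-common-event} and use the normalized data convention \eqref{eq:app-normalized-data}. Fix \(W\in\mathcal M_{\max}^X\) and an ordered pair \(i\ne j\), and set \(x:=w_i-w_j\) and \(u:=u_{ij}^m\). The argument reduces the boundary-normal distance of this pair to the clipped minimum-energy problem of Proposition~\ref{prop:app-det-clipped-exact}. Everything will be uniform in \(W\) and in the pair, so the infimum over \(\mathcal M_{\max}^X\) and the average over ordered pairs come for free.

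\emph{Step 1 (coordinate cap).} Since \(\|W\|_{\max}=r_{\max}^*(X)\), every entry of \(x\) is bounded by \(2r_{\max}^*(X)\). By \ref{item:app-det-H5} this gives \(\|x\|_\infty\le a/\sqrt d\) with \(a:=\sqrt{2\pi}+\varepsilon_d\).

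\emph{Step 2 (projection).} Averaging the class-\(i\) constraints \((w_i-w_j)^\top x_{i,s}\ge1\) over \(s\) uses \(\sum_s e_{i,s}=\mathbf 0\) and gives \(x^\top m_i\ge1\). The class-\(j\) constraints likewise give \(-x^\top m_j\ge1\). Adding, \(x^\top(m_i-m_j)\ge2\), hence \(p:=x^\top u\ge b:=2/\|m_i-m_j\|_2\). By \ref{item:app-det-H2}, \(\|m_i-m_j\|_2=\sqrt2+O(\varepsilon_d)\), so \(|b-\sqrt2|\le C\varepsilon_d\). By \ref{item:app-det-H6}, \(u\) satisfies \eqref{eq:app-det-clipped-HJ-close} with \(\delta=C\varepsilon_d\). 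Lemma~\ref{lem:app-det-clipped-asymptotic} therefore applies for large \(d\) and yields \(\|x\|_2\ge\mathfrak m_d(u;a,b)\).

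\emph{Step 3 (cosine bound).} The goal is \(\cos\angle(x,u)\le\kappa_{\max}+o(1)\). Proposition~\ref{prop:app-det-clipped-exact} gives, for any projection level \(p'\) in the feasible range,
\[
\frac{p'}{\mathfrak m_d(u;a,p')}=\frac{H_u(\tau(p'))}{\sqrt{J_u(\tau(p'))}} .
\]
This quantity is nondecreasing in \(p'\). Since \(\|x\|_2\ge\mathfrak m_d(u;a,p)\), it follows that \(\cos\angle(x,u)\le p/\mathfrak m_d(u;a,p)\). If \(p\le b+o(1)\), then \eqref{eq:app-det-clipped-cosine} and the uniform Lipschitz control of \(H_u,J_u\) near \(\tau_*\) (from \ref{item:app-det-H6} and \eqref{eq:app-det-A-derivative-lower}) give \(\cos\le\kappa_{\max}+C\varepsilon_d\). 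Then
\[
\|u_{ij}^W-u\|_2^2=2-2\cos\angle(x,u)\ge\delta_{\max}^2-o(1),
\]
and \ref{item:app-det-H4} transfers this from \(u_{ij}^m\) to \(u_{ij}^\mu\).

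\emph{Main obstacle.} The hardest step is ruling out a projection \(p\) substantially larger than \(b\). Such \(p\) is not excluded by the box alone: the sign vector reaches \(p\approx2\) with a larger cosine. My first attempt would exploit global optimality of \(W\). Shrinking the components of a pair difference along \(u\) seems not directly available, because rows are shared across pairs. Instead I would use the energy identity
\[
\frac1{k^2}\sum_{i,j}\|w_i-w_j\|_2^2\le 2d\,r_{\max}^{*2}\le\pi+o(1),
\]
together with the per-pair lower bound \(\|x\|_2^2\ge a^2J_u(\tau(p))\). This should show that for all but a vanishing fraction of pairs \(p\le b+o(1)\), which suffices for the averaged statement \eqref{eq:app-det-max-boundary-separation}. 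For the stronger every-pair claim, I would try to argue directly that an active max-norm minimizer can be modified row-wise, without raising \(\|W\|_{\max}\), to lower any pair projection exceeding \(b\) by a fixed amount. This reduction is the step I am least sure of.
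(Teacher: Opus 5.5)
Your Steps~1 and~2 coincide with the paper's proof, and reducing to the clipped problem is the right move. Step~3, however, rests on a reversed monotonicity, and as a result you leave the decisive step open. The map $p'\mapsto p'/\mathfrak m_d(u;a,p')$ is \emph{nonincreasing}, not nondecreasing. If $y$ is feasible for \eqref{eq:app-det-clipped-problem} at level $p'$ and $0<c\le1$, then $cy$ is feasible at level $cp'$, because shrinking preserves the coordinate box. Hence $\mathfrak m_d(u;a,cp')\le c\,\mathfrak m_d(u;a,p')$. In the Gaussian limit the ratio is $A_{\max}(\tau)/\sqrt{B_{\max}(\tau)}$, which runs from $1$ as $\tau\downarrow0$ down to $\sqrt{2/\pi}$ as $\tau\to\infty$. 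Your sign-vector heuristic is inverted for the same reason. The saturated vector $(a/\sqrt d)\operatorname{sign}(u)$ has cosine $\|u\|_1/\sqrt d\approx\sqrt{2/\pi}\approx0.80$, which is \emph{smaller} than $\kappa_{\max}\approx0.98$. So a projection $p$ exceeding $b$ only helps, and the ``main obstacle'' does not exist.

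The missing step is a one-line rescaling, which is exactly what the paper does. Put $y:=(b/p)\,x$ with $p=x^\top u\ge b$. Then $y^\top u=b$ and $\|y\|_\infty\le\|x\|_\infty\le a/\sqrt d$, so $y$ is feasible at level exactly $b$. By scale invariance of the angle, together with \ref{item:app-det-H6} and Lemma~\ref{lem:app-det-clipped-asymptotic},
\[
\cos\angle(x,u)=\frac{b}{\|y\|_2}\le\frac{b}{\mathfrak m_d(u;a,b)}\le\kappa_{\max}+o(1).
\]
This bound is deterministic on $\mathcal E_d^{\mathrm{det}}$ and uniform over every ordered pair and every minimizer. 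It therefore gives the every-pair claim directly, using no optimality of $W$ beyond the coordinate cap from \ref{item:app-det-H5}.

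Your proposed workaround would not have closed the gap. The average energy budget is $\pi+o(1)$. The level-$b$ clipped energy is only $a^2J_u(\tau)\approx2\pi B_{\max}(\tau_*)=2/\kappa_{\max}^2\approx2.06$, strictly less than the budget. So the budget leaves room for projections well above $b$ and cannot force $p\le b+o(1)$, even for most pairs. Moreover, an averaging argument could at best yield the averaged statement. The every-pair claim would then rest on the unspecified row-wise modification.
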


\begin{proof}
Work on $\mathcal E_d^{\mathrm{det}}$ from Proposition~\ref{prop:app-det-common-event}. Fix $W\in\mathcal M_{\max}^X$ and set $v_{ij}:=w_i-w_j$. By \ref{item:app-det-H5},
\begin{equation}
\|v_{ij}\|_\infty
\le2r_{\max}^*(X)
\le\frac{a_d}{\sqrt d},
\qquad
a_d:=\sqrt{2\pi}+\varepsilon_d.
\label{eq:app-det-max-pair-cap}
\end{equation}
Averaging the constraints $(w_i-w_j)^\top x_{i,s}\ge1$ over class $i$ gives $v_{ij}^\top m_i\ge1$. Applying the reverse constraints for class $j$ gives $v_{ij}^\top m_j\le-1$. Therefore
\begin{equation}
v_{ij}^\top(m_i-m_j)\ge2.
\label{eq:app-det-max-pair-projection}
\end{equation}
By \ref{item:app-det-H2}, uniformly over $i\ne j$,
\begin{equation}
\|m_i-m_j\|_2^2
=
\|m_i\|_2^2+\|m_j\|_2^2-2m_i^\top m_j
\le2+o(1).
\label{eq:app-det-mean-pair-norm-upper}
\end{equation}
Hence
\begin{equation}
v_{ij}^\top u_{ij}^m
\ge
\frac2{\|m_i-m_j\|_2}
\ge b_d,
\qquad
b_d:=\sqrt2-o(1),
\label{eq:app-det-max-unit-projection}
\end{equation}
where the error is deterministic on $\mathcal E_d^{\mathrm{det}}$ and uniform over all pairs.

Fix $i\ne j$, and abbreviate $u=u_{ij}^m$, $v=v_{ij}$, $a=a_d$, and $b=b_d$. Since $v^\top u\ge b>0$, define
\begin{equation}
y:=\frac b{v^\top u}v.
\label{eq:app-det-max-rescaled-pair}
\end{equation}
Then $y^\top u=b$, and because $b/(v^\top u)\le1$, \eqref{eq:app-det-max-pair-cap} gives $\|y\|_\infty\le a/\sqrt d$. Thus $y$ is feasible for \eqref{eq:app-det-clipped-problem}, and
\begin{equation}
\mathfrak m_d(u;a,b)
\le\|y\|_2
=
\frac{b\|v\|_2}{v^\top u}.
\label{eq:app-det-max-clipped-comparison}
\end{equation}
Equivalently,
\begin{equation}
\frac{v^\top u}{\|v\|_2}
\le
\frac b{\mathfrak m_d(u;a,b)}.
\label{eq:app-det-max-cosine-upper}
\end{equation}
By \ref{item:app-det-H6} and Lemma~\ref{lem:app-det-clipped-asymptotic}, the right-hand side is at most $\kappa_{\max}+o(1)$, uniformly over all ordered pairs. Since $\kappa_{\max}<1$,
\begin{equation}
\left\|
\frac{v_{ij}}{\|v_{ij}\|_2}-u_{ij}^m
\right\|_2
=
\sqrt{2-2\frac{v_{ij}^\top u_{ij}^m}{\|v_{ij}\|_2}}
\ge
\delta_{\max}-o(1).
\label{eq:app-det-max-away-from-empirical}
\end{equation}
Finally, \ref{item:app-det-H4} and the reverse triangle inequality imply
\begin{equation}
\left\|
\frac{v_{ij}}{\|v_{ij}\|_2}-u_{ij}^{\mu}
\right\|_2
\ge
\delta_{\max}-o(1),
\label{eq:app-det-max-away-from-bayes}
\end{equation}
uniformly over all pairs and all max-norm minimizers. Averaging proves \eqref{eq:app-det-max-boundary-separation}.
\end{proof}

\subsection{Spectral Polar Geometry}
\label{app:det-spectral}

Let
\begin{equation}
P:=I_k-\frac1k\one\one^\top,\qquad M_m:=\begin{bmatrix}m_1^\top\\ \vdots\\ m_k^\top\end{bmatrix},\qquad C_m:=PM_m,\qquad r_d:=\min\{k-1,d\}.
\label{eq:app-det-centered-mean-matrix}
\end{equation}
On the full reduced-rank event, write
\begin{equation}
C_m=U_m\Sigma_mV_m^\top,\qquad U_m\in\mathbb R^{k\times r_d},\quad V_m\in\mathbb R^{d\times r_d},
\label{eq:app-det-reduced-svd}
\end{equation}
and define the reduced polar factor and its positive-semidefinite companion by
\begin{equation}
Q_m:=U_mV_m^\top=C_m(C_m^\top C_m)^{\dagger1/2},\qquad S_m:=Q_mC_m^\top=(C_mC_m^\top)^{1/2}.
\label{eq:app-det-reduced-polar}
\end{equation}
Write \(q_i^\top\) for the \(i\)-th row of \(Q_m\).

\begin{lemma}[Quantitative Gaussian hard-edge count at the square aspect ratio]
\label{lem:app-det-gaussian-critical-count}
Let $p=p(d)$ satisfy $p/d\to1$, let $H_d\in\mathbb R^{p\times d}$ have independent $\mathcal N(0,1)$ entries, and set $A_d:=H_d/\sqrt d$ and $n_d:=\min\{p,d\}$. There are universal constants $T,c,C>0$ such that, for all sufficiently large $d$ and every $2/n_d\le u\le T$, an event $\mathcal H_d(u)$ satisfies
\[
\mathbb P\bigl(\mathcal H_d(u)^c\bigr)\le C\bigl(1+\log(T/u)\bigr)e^{-cdu^2},
\]
and, on this event,
\begin{equation}
\frac1{n_d}\#\{\ell:\sigma_\ell(A_d)\le t\}\le8t,\qquad u\le t\le T.
\label{eq:app-det-gaussian-critical-count}
\end{equation}
Moreover, $A_d$ has rank $n_d$ almost surely.
\end{lemma}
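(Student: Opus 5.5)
The count bound \eqref{eq:app-det-gaussian-critical-count} will be obtained from a deterministic comparison with the Marchenko--Pastur law at a finite grid of thresholds, together with Gaussian concentration of smooth linear spectral statistics. First, the rank statement is standard: the set of \(p\times d\) matrices of rank below \(n_d\) is a proper algebraic variety and thus Lebesgue-null. For the count itself, we work with the empirical singular value distribution of \(A_d\). Since \(p/d\to1\), the limiting law of the squared singular values is the mean-one Marchenko--Pastur law \(\nu_1\) on \([0,4]\), whose density is \(\frac{1}{2\pi x}\sqrt{x(4-x)}\). Pushed forward to singular values \(\sigma=\sqrt x\), this gives the quarter-circle density \(\frac1\pi\sqrt{4-\sigma^2}\le 2/\pi\) on \([0,2]\). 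Hence the limiting fraction of singular values in \([0,t]\) is at most \(2t/\pi<t\), which leaves a constant-factor margin below the target \(8t\).

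For the concentration step, fix \(t\in[u,T]\) with \(T\) a small universal constant, say \(T\le1/2\). Let \(f_t\) be a Lipschitz bump equal to \(1\) on \([0,t]\), vanishing on \([2t,\infty)\), and linear in between, so \(\|f_t\|_{\mathrm{Lip}}=1/t\). We then have
\[
\#\{\ell:\sigma_\ell\le t\}\le\sum_\ell f_t(\sigma_\ell(A_d)).
\]
By Hoffman--Wielandt, the map \(A\mapsto\sum_\ell f_t(\sigma_\ell(A))\) is \(\sqrt{n_d}/t\)-Lipschitz in Frobenius norm. Because \(A_d\) has entries with variance \(1/d\), Gaussian concentration gives that the normalized statistic \(n_d^{-1}\sum_\ell f_t(\sigma_\ell)\) deviates from its mean by more than \(s\) with probability at most \(2\exp(-c\,d\,n_d t^2s^2)\). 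Taking \(s=t\) yields failure probability \(e^{-c d^2 t^4}\), which is much smaller than \(e^{-cdu^2}\) once \(t\ge u\ge 2/n_d\) (indeed \(d^2t^4\ge d t^2\cdot dt^2\gtrsim dt^2\)).

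For the mean, we need a non-asymptotic bound \(\mathbb E\, n_d^{-1}\sum f_t(\sigma_\ell)\le Ct\) valid uniformly down to \(t\sim 1/n_d\), with \(C\) below \(7\). This is the \textbf{main obstacle}, because global MP convergence gives only \(o(1)\) errors, not errors of size \(O(t)\) at the hard edge. I would handle it by one of two routes.
\begin{itemize}
\item Use known local laws or small-ball estimates for the smallest singular values, i.e.\ Edelman/Szarek-type bounds or the Rudelson--Vershynin estimate \(\mathbb E\#\{\sigma_\ell\le t\}\lesssim n_d t+1\) for Gaussian matrices, valid at all scales.
\item More self-containedly, bound the Stieltjes transform of \(A_dA_d^\top\) at spectral parameter \(z=t^2+it^2\) via Schur complements. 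Each diagonal resolvent entry is controlled by the distance of a Gaussian row to the span of the others, and that distance has an explicit chi-distribution.
\end{itemize}
I would use the Rudelson--Vershynin-type counting bound \(\#\{\ell:\sigma_\ell\le t\}\le C(n_dt+ 1)\) in expectation. Since \(t\ge2/n_d\), the additive \(+1\) is absorbed, and the sharp constant follows from comparison with the \(2t/\pi\) density for moderate \(t\).

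Finally, we discretize \(t\) on a dyadic grid \(t_q=T2^{-q}\) down to \(u\), which uses about \(1+\log_2(T/u)\) points. This produces the stated \(\log(T/u)\) prefactor in the union bound. Between grid points we use monotonicity: for \(t\in[t_{q+1},t_q]\), the count at \(t\) is at most the count at \(t_q\), which is at most \(4t_q\), which is at most \(8t\). This is why the constant \(8\) is a doubling of a per-point constant \(4\). The per-point constant \(4\) must therefore hold with probability \(1-Ce^{-cdt_q^2}\), and \(\mathcal H_d(u)\) is defined as the intersection of these grid events.
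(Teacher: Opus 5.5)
\textbf{The gap is the hard-edge mean bound.} You flag it as the main obstacle, and it is where the whole lemma lives, but the proposal does not close it. Your $f_t$ is supported on $[0,2t]$ and you concentrate with deviation $s=t$. So you need a non-asymptotic bound $\mathbb E\,\#\{\ell:\sigma_\ell(A_d)\le 2t\}\le 3n_dt$, uniformly for $t$ from about $2/n_d$ up to $T$. That is what makes the per-grid-point constant $4$, and doubling then gives $8$. Your stated requirement ``$C$ below $7$'' is inconsistent with this: it gives a per-point constant below $8$ and a final constant below $16$.

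Citing a Rudelson--Vershynin or Szarek-type count $\mathbb E\,\#\{\sigma_\ell\le t\}\le C(n_dt+1)$ supplies only an unspecified universal $C$. Your argument would then prove $\frac1{n_d}\#\{\ell:\sigma_\ell(A_d)\le t\}\le C't$ for some uncontrolled $C'$, not the stated $8t$. The patch that ``the sharp constant follows from comparison with the $2t/\pi$ density for moderate $t$'' does not reach the scales that matter. As you note yourself, global Marchenko--Pastur convergence gives $o(1)$ errors rather than $O(t)$ errors, so it says nothing once $t\to0$ with $d$, and the lemma requires $t$ down to $2/n_d$. The Stieltjes-transform route is only named. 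Carrying it to spectral parameters with imaginary part of order $n_d^{-2}$, with an explicit constant, is a substantial task in itself.

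A smaller slip: $d^2t^4\gtrsim dt^2$ needs $dt^2\gtrsim1$, which fails near $t=2/n_d$. This is harmless only because the target probability bound is vacuous when $du^2\lesssim1$ and $C$ is large.

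\textbf{The missing idea.} The paper avoids any expectation bound, Marchenko--Pastur comparison, or linear-statistic concentration. It uses a deterministic interlacing reduction to the smallest singular value of a tall Gaussian block.
\begin{enumerate}
\item Fix an $n_d\times n_d$ coordinate submatrix $B$ of $H_d$. Then $H_d^\top H_d\succeq B^\top B$ (or $H_dH_d^\top\succeq BB^\top$ when $p<d$), so $\sigma_j(H_d)\ge\sigma_j(B)$.
\item For $\ell=\lceil 3n_dt\rceil$, let $F$ consist of $n_d-\ell$ fixed columns of $B$. Since $BB^\top\succeq FF^\top$, the top $n_d-\ell$ singular values of $B$ are at least $\sigma_{\min}(F)$.
\item Gordon's bound for this tall Gaussian block (Vershynin, Cor.~5.35) gives $\sigma_{\min}(F)\ge\sqrt{n_d}-\sqrt{n_d-\ell}-s$ with probability at least $1-2e^{-s^2/2}$.
\item Using $\sqrt{n_d}-\sqrt{n_d-\ell}\ge\ell/(2\sqrt{n_d})\ge\frac32t\sqrt{n_d}$ and $s=t\sqrt{n_d}/4$, this exceeds $\frac54t\sqrt{n_d}>t\sqrt d$.
\item Hence at most $\ell\le3n_dt+1\le4n_dt$ singular values of $A_d$ lie below $t$, except with probability $2e^{-n_dt^2/32}$. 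This uses $t\ge1/n_d$, and $T=1/16$ so that $\ell<n_d$.
\end{enumerate}

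This gives the explicit per-threshold constant $4$ directly in high probability. Your dyadic grid, taken down to the first point in $[u/2,u]$, plus monotonicity then finishes exactly as you describe. The counting bound you wanted to cite is itself proved by this submatrix argument, so applying it directly is both shorter and yields the constant.
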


\begin{proof}
Write $n:=n_d$ and take $T=1/16$. Since $n/d\to1$, for all sufficiently large $d$ we have $n/d\ge9/10$. Let $B$ be a fixed $n\times n$ coordinate submatrix of $H_d$, obtained by retaining a fixed set of $n$ rows when $p\ge d$ and a fixed set of $n$ columns when $p<d$. If $p\ge d$, then $H_d^\top H_d=B^\top B+R^\top R\succeq B^\top B$ for the matrix $R$ of deleted rows, while if $p<d$, then $H_dH_d^\top=BB^\top+RR^\top\succeq BB^\top$ for the matrix $R$ of deleted columns. Hence, by monotonicity of eigenvalues under the positive-semidefinite order,
\[
\sigma_j(H_d)\ge\sigma_j(B),\qquad 1\le j\le n.
\]

Fix $t\in[1/n,T]$ and put $\ell:=\lceil3nt\rceil$. For all sufficiently large $n$, the choice $T=1/16$ ensures $\ell<n$. Let $F$ be the $n\times(n-\ell)$ matrix obtained by retaining a fixed set of $n-\ell$ columns of $B$. Apply \citet[Corollary~5.35]{vershynin2012nonasymptotic} to $F$ with row dimension $N=n$, column dimension $n-\ell$, and deviation parameter $s:=t\sqrt n/4$. Then, except with probability at most
\[
2e^{-s^2/2}=2e^{-nt^2/32},
\]
we have
\[
\sigma_{\min}(F)\ge\sqrt n-\sqrt{n-\ell}-s=\sqrt n-\sqrt{n-\ell}-\frac{t\sqrt n}{4}.
\]
Using
\[
\sqrt n-\sqrt{n-\ell}=\frac{\ell}{\sqrt n+\sqrt{n-\ell}}\ge\frac{\ell}{2\sqrt n}\ge\frac32t\sqrt n,
\]
we obtain
\[
\sigma_{\min}(F)\ge\frac54t\sqrt n>t\sqrt d,
\]
where the last inequality holds for all sufficiently large $d$ because $n/d\to1$.

Writing $B=[F\ \ R_F]$, we have $BB^\top=FF^\top+R_FR_F^\top\succeq FF^\top$. Therefore the largest $n-\ell$ eigenvalues of $BB^\top$ are at least the $n-\ell$ positive eigenvalues of $FF^\top$, and hence
\[
\sigma_j(B)>t\sqrt d,\qquad 1\le j\le n-\ell.
\]
Together with $\sigma_j(H_d)\ge\sigma_j(B)$, this gives
\[
\frac1n\#\{j:\sigma_j(A_d)\le t\}\le\frac{\ell}{n}\le3t+\frac1n\le4t.
\]

Now fix $u\in[2/n,T]$ and take the dyadic grid $t_j=2^{-j}T$ down to the first grid point lying in $[u/2,u]$. Every such grid point is at least $u/2\ge1/n$, and the number of grid points is at most $C(1+\log(T/u))$. Applying the preceding fixed-threshold estimate at every grid point and taking a union bound, the probability that any of these estimates fails is at most
\[
C\bigl(1+\log(T/u)\bigr)e^{-cdu^2},
\]
because the smallest grid point is at least $u/2$ and $n/d\to1$. On the complementary event, for every $t\in[u,T]$ choose a dyadic grid point $t_j\in[t,2t]$; monotonicity of the counting function gives
\[
\frac1n\#\{\ell:\sigma_\ell(A_d)\le t\}\le\frac1n\#\{\ell:\sigma_\ell(A_d)\le t_j\}\le4t_j\le8t.
\]
Finally, $A_d$ has rank $n$ almost surely because the rank-deficient matrices form a Lebesgue-null algebraic set.
\end{proof}

\begin{proposition}[All-aspect spectral input]
\label{prop:app-det-spectral-input}
There are deterministic \(\eta_d\downarrow0\), constants \(c,C>0\), and events \(\mathcal E_d^{\mathrm{sp}}\) such that
\begin{equation}
\mathbb P\!\left((\mathcal E_d^{\mathrm{sp}})^c\right)\le C\exp\!\left[-c\sqrt{d\log(2N(d)(d+1))}\right]
\label{eq:app-det-spectral-event-probability}
\end{equation}
and the following statements hold on \(\mathcal E_d^{\mathrm{sp}}\):
\begin{enumerate}[label=(S\arabic*)]
\item \label{item:app-det-S1} \(C_m\) has rank \(r_d\), \(PQ_m=Q_m\), \(\|Q_m\|_{\mathrm{op}}=1\), and
\begin{equation}
\left|\frac{\|C_m\|_*}{k}-s_\alpha\right|\le\eta_d,\qquad \inf_{i\ne j}\|q_i-q_j\|_2\ge c_\alpha>0.
\label{eq:app-det-spectral-S1}
\end{equation}
\item \label{item:app-det-S2} Uniformly over \(i\ne j\),
\begin{equation}
\max_i\|m_i-\mu_i\|_2+\|\bar m\|_2\le\eta_d,\qquad \bar m:=\frac1k\sum_i m_i,
\label{eq:app-det-spectral-mean-inputs}
\end{equation}
\begin{equation}
\max_i|S_m[i,i]-s_\alpha|+\max_{i\ne j}|S_m[i,j]|\le\eta_d,\qquad \max_{i\ne j,s}|(q_i-q_j)^\top e_{i,s}|\le\eta_d.
\label{eq:app-det-spectral-entry-inputs}
\end{equation}
\item \label{item:app-det-S3} The reduced polar margin
\begin{equation}
\rho_X:=\min_{i\ne j,s}(q_i-q_j)^\top x_{i,s}
\label{eq:app-det-polar-margin-definition}
\end{equation}
satisfies
\begin{equation}
|\rho_X-s_\alpha|\le C\eta_d,\qquad \rho_X>0.
\label{eq:app-det-polar-feasible-margin}
\end{equation}
\item \label{item:app-det-S4} The reduced polar geometry satisfies
\begin{equation}
\left|D_{\mathrm{avg\text{-}bdry}}(Q_m)-\delta_{\mathrm{sp}}(\alpha)\right|\le\eta_d.
\label{eq:app-det-polar-uniform-gap}
\end{equation}
\item \label{item:app-det-S5} If \(\alpha\ne1\), all positive singular values of \(C_m\) lie in a deterministic compact interval bounded away from zero:
\begin{equation}
c_\alpha\le\sigma_{r_d}(C_m)\le\sigma_1(C_m)\le C_\alpha.
\label{eq:app-det-C0-extreme-singular-values}
\end{equation}
If \(\alpha=1\), there are \(t_0,C_0>0\) such that
\begin{equation}
\frac1{r_d}\#\{\ell:\sigma_\ell(C_m)\le t\}\le C_0(t+\eta_d),\qquad \eta_d\le t\le t_0.
\label{eq:app-det-critical-hard-count}
\end{equation}
\end{enumerate}
\end{proposition}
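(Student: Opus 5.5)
The plan is to build \(\mathcal E_d^{\mathrm{sp}}\) as an intersection of a few high-probability events, each handled separately, with the random-matrix input reduced to Gaussian facts.

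\textbf{Reduction to a Gaussian matrix.} Since \(m_i=\mu_i+r\bar z_i\) and the \(\bar z_i\) are independent, \(M_m\) has independent rows \(\mathcal N(0,(1+r^2/n_i)I_d/d)\). The row-variance perturbation is \(O(1/n_{\min})\), so up to a vanishing relative error \(M_m=H/\sqrt d\) with \(H\) a \(k\times d\) standard Gaussian matrix. Then \(C_m=PM_m\). Taking an orthonormal basis of \(\operatorname{range}P\) shows that \(C_m\) equals \(U_0\widetilde H/\sqrt d\), where \(\widetilde H\) is a \((k-1)\times d\) Gaussian matrix and \(U_0\) is an isometry. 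Therefore the nonzero singular values of \(C_m\) are those of \(\widetilde H/\sqrt d\), up to the \(O(n_{\min}^{-1/2})\) row-scale distortion.

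\textbf{Rank, operator norm, and singular-value control (S1, S5).} Full rank \(r_d\) holds almost surely. The identities \(PQ_m=Q_m\) and \(\|Q_m\|_{\mathrm{op}}=1\) are immediate from the SVD. For \(\alpha\ne1\), S5 follows from Vershynin's extreme singular value bounds: \(\sigma\in[\,|1-\sqrt{k/d}|-t,\;1+\sqrt{k/d}+t\,]\) fails with probability \(e^{-cdt^2}\), with scale factors adjusted to \(\sqrt{k/d}\to\sqrt\alpha\). For \(\alpha=1\), I apply Lemma~\ref{lem:app-det-gaussian-critical-count} with \(u=\eta_d\), where \(\eta_d\) is chosen so that \(d\eta_d^2\gg\sqrt{d\log N}\); a row-scaling perturbation costs only an additive \(\eta_d\).

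The nuclear-norm statement \(\|C_m\|_*/k\to s_\alpha\) comes from Marchenko--Pastur convergence of the empirical spectral distribution of \(C_m^\top C_m\) or \(C_mC_m^\top\), with normalization matched so that \(\int\sqrt x\,d\nu_\alpha=m_{1/2}\). Concentration at stretched-exponential rate uses that \(\|C\|_*\) is \(\sqrt{r_d}\)-Lipschitz in Frobenius norm, which gives Gaussian concentration \(e^{-ck^2t^2/k}=e^{-ckt^2}\). The expectation converges by the MP law, with the hard edge handled by uniform integrability of \(\sqrt x\).

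\textbf{Entries of \(S_m\) and of \(Q_m\) (S2).} Here I use rotational invariance. The law of \(C_m\) is invariant under \(C\mapsto OC\) for orthogonal \(O\) commuting with \(P\) (permutations in particular, and the full orthogonal group on \(\mathbf1^\perp\)). Hence \(S_m=(C_mC_m^\top)^{1/2}\) is conjugation-invariant on \(\mathbf1^\perp\), so its diagonal entries all have mean \(\operatorname{tr}S_m/(k-1)\approx\|C_m\|_*/k\). Its off-diagonal entries have small mean as well, of order \(1/k\) from the projection \(P\).

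Concentration of the individual entries \(S_m[i,j]=e_i^\top S_me_j\) follows from Haar-vector concentration on the sphere of \(\mathbf1^\perp\), conditional on the spectrum, giving tails \(e^{-ck t^2}\); a union bound over \(k^2\) pairs suffices. The residual terms \((q_i-q_j)^\top e_{i,s}\) are conditionally Gaussian given \((m_i)\), because \(e_{i,s}\) is independent of the means. Their variance is at most \(r_0^2\|q_i-q_j\|^2/d\le 4r_0^2/d\), so a union bound over \(N k\) triples gives the required stretched-exponential rate. The bound \(\|\bar m\|_2\) is a Gaussian norm of order \(1/\sqrt k\). The lower bound on \(\|q_i-q_j\|^2=S\)-type quantity follows from \(Q_mQ_m^\top=\) a projection with diagonal near \(\rho_\alpha\) and off-diagonal near \(0\), giving \(\|q_i-q_j\|^2\approx2\rho_\alpha\).

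\textbf{Margin and geometry (S3, S4).} We have \((q_i-q_j)^\top x_{i,s}=(q_i-q_j)^\top m_i+(q_i-q_j)^\top e_{i,s}\). Writing \(m_i=C_m^\top e_i+\bar m\) and using \(Q_m^\top\) annihilating-free structure, the first term is \(q_i^\top m_i-q_j^\top m_i=S_m[i,i]-S_m[j,i]+(q_i-q_j)^\top\bar m\). This equals \(s_\alpha+O(\eta_d)\), which gives S3.

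For S4, the cosine between \(q_i-q_j\) and \(m_i-m_j\) is
\[
\frac{S_m[i,i]+S_m[j,j]-2S_m[i,j]}{\|q_i-q_j\|\,\|m_i-m_j\|}\approx\frac{2s_\alpha}{\sqrt{2\rho_\alpha}\sqrt2}=\frac{s_\alpha}{\sqrt{\rho_\alpha}}.
\]
I still need to reconcile this with \(\kappa_{\mathrm{sp}}=m_{1/2}/\sqrt{\rho_\alpha}\), i.e.\ to check the scaling convention \(s_\alpha\) versus \(m_{1/2}\) through \(\|m_i\|\approx1\). Then \((H4)\)-type closeness of \(u^m\) to \(u^\mu\) plus Lemma~\ref{lem:app-det-normalization} give the stated form uniformly over pairs, and hence for the average.

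\textbf{Main obstacle.} The main obstacle is the entrywise concentration of \(S_m\) and \(Q_mQ_m^\top\) at the hard edge \(\alpha=1\). There, \(Q_m=C_m(C_m^\top C_m)^{\dagger1/2}\) is not Lipschitz in \(C_m\), so direct Gaussian concentration fails. I would avoid this by conditioning on the spectrum and using only Haar invariance of the singular vectors, which needs no Lipschitz property.
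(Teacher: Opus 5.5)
Your overall architecture matches the paper's: a Gaussian comparator, Marchenko--Pastur plus Gaussian concentration for \(\|C_m\|_*/k\), Haar concentration of \(S\) conditional on the spectrum, conditionally Gaussian residuals, and the identity \((q_i-q_j)^\top m_i=S_m[i,i]-S_m[j,i]+(q_i-q_j)^\top\bar m\). However, two steps fail as written.

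First, the constant in (S4) is wrong. \(Q_mQ_m^\top\) is a rank-\(r_d\) projector, so its diagonal averages \(r_d/k\to\rho_\alpha/\alpha=\min\{1,1/\alpha\}\), not \(\rho_\alpha\). For \(\alpha<1\) one even has \(Q_mQ_m^\top=P\) exactly, so every \(\|q_i-q_j\|_2=\sqrt2\). With \(\|q_i-q_j\|_2^2\approx2\rho_\alpha/\alpha\), the cosine is \(2s_\alpha/(\sqrt{2\rho_\alpha/\alpha}\,\sqrt2)=\sqrt\alpha\,s_\alpha/\sqrt{\rho_\alpha}=m_{1/2}(\alpha)/\sqrt{\rho_\alpha}=\kappa_{\mathrm{sp}}(\alpha)\). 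The mismatch you flagged therefore has nothing to do with \(\|m_i\|_2\approx1\); the estimate \(\|m_i-m_j\|_2\approx\sqrt2\) is correct. Your value \(s_\alpha/\sqrt{\rho_\alpha}\) equals \(\kappa_{\mathrm{sp}}(\alpha)/\sqrt\alpha\), which grows like \(\alpha^{-1/2}\) as \(\alpha\to0\) and so cannot be a cosine.

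Second, and more substantively, you assert that the law of \(C_m\) is invariant under orthogonal maps of \(\mathbf 1^\perp\), even under permutations. This is false when class sizes differ, since \(C_m=PD_dG/\sqrt d\) with \(D_d=\operatorname{diag}(\sqrt{1+r^2/n_i})\) not a multiple of the identity. Only the comparator \(C_0=PG/\sqrt d\) has a Haar left singular frame in \(\mathbf 1^\perp\). Conditioning on the spectrum therefore controls \(S_0\) and \(Q_0Q_0^\top\), and you must still pass to \(C_m\), an \(o(1)\) operator-norm perturbation. That transfer is exactly where hard-edge instability of singular vectors enters, so your resolution of the ``main obstacle'' never reaches the matrix that actually appears. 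The missing ingredients are as follows.
\begin{itemize}
\item For \(S\), use the gap-free square-root bound \(\|S_m-S_0\|_{\mathrm{op}}\le\|C_mC_m^\top-C_0C_0^\top\|_{\mathrm{op}}^{1/2}\), which holds at every \(\alpha\).
\item For the pair norms with \(\alpha<1\), use exactness \(Q_mQ_m^\top=P\).
\item For \(\alpha>1\), use the projector perturbation bound \(\|Q_mQ_m^\top-Q_0Q_0^\top\|_{\mathrm{op}}\le C_\alpha\|C_m-C_0\|_{\mathrm{op}}\), which relies on the positive lower singular edge.
\item At \(\alpha=1\), settle for an averaged statement obtained from \(\frac1{k(k-1)}\sum_{i\ne j}\|q_i-q_j\|_2^2=\frac{2r_d}{k-1}\) and \(\|q_i-q_j\|_2\le\sqrt2\). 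This suffices because (S4) is itself an average.
\end{itemize}
For the same reason, the uniform lower bound \(\inf_{i\ne j}\|q_i-q_j\|_2\ge c_\alpha\) in (S1) should come from the margin, not from uniform projector concentration, which you do not have for \(C_m\) at \(\alpha=1\). Specifically, combine \((q_i-q_j)^\top x_{i,s}=s_\alpha+O(\eta_d)\) with \(\max_{i,s}\|x_{i,s}\|_2=O(1)\) and Cauchy--Schwarz.
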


\begin{proof}
Because \(m_i=\mu_i+r\bar z_i\) is Gaussian with covariance \((1+r^2/n_i)I_d/d\), the rows are independent and admit the exact representation
\begin{equation}
M_m=D_dG/\sqrt d,\qquad D_d:=\operatorname{diag}\!\left(\sqrt{1+r^2/n_1},\ldots,\sqrt{1+r^2/n_k}\right),
\label{eq:app-det-gaussian-mean-matrix}
\end{equation}
where \(G\) has independent standard Gaussian entries.
Since \(\|D_d-I_k\|_{\mathrm{op}}\le Cr_0^2/n_{\min}=o(1)\) and \(\|G/\sqrt d\|_{\mathrm{op}}=O(1)\) with exponentially high probability, the comparator \(C_0:=PG/\sqrt d\) obeys
\begin{equation}
\|C_m-C_0\|_{\mathrm{op}}\le\zeta_d,\qquad \zeta_d\downarrow0.
\label{eq:app-det-Weyl-transfer}
\end{equation}
After a fixed orthogonal change of basis on \(\mathbb R^k\), the positive singular values of \(C_0\) have the same law as those of a \((k-1)\times d\) standard Gaussian matrix divided by \(\sqrt d\).
Marchenko--Pastur convergence therefore gives \(\|C_0\|_*/k\to s_\alpha\).
The normalized nuclear norm is \(O(d^{-1})\)-Lipschitz in the Gaussian entries, so Gaussian concentration and the deterministic expectation bias place this convergence under a deterministic envelope with failure probability bounded as in \eqref{eq:app-det-spectral-event-probability}.
The rank assertion is almost sure, and Weyl's inequality yields \eqref{eq:app-det-C0-extreme-singular-values} when \(\alpha\ne1\).

At $\alpha=1$, put $L_d:=\log(2N(d)(d+1))$ and $u_d:=(L_d/d)^{1/4}$. Then $u_d\to0$, $u_d\ge2/r_d$ eventually, and Lemma~\ref{lem:app-det-gaussian-critical-count} gives failure at most $C\exp(-c\sqrt{dL_d})$ after absorbing its logarithmic prefactor. Weyl's inequality and \eqref{eq:app-det-Weyl-transfer} transfer the count from $C_0$ to $C_m$ by replacing $t$ with $t+\zeta_d$. Enlarge the common envelope so that $\eta_d\ge\max\{u_d,\zeta_d\}$ and take $t_0<T/2$; then \eqref{eq:app-det-critical-hard-count} follows with precisely the probability scale in \eqref{eq:app-det-spectral-event-probability}.

Let \(S_0:=(C_0C_0^\top)^{1/2}\).
Conditionally on its singular values, the left singular frame of \(C_0\) is Haar in \(\one^\perp\) and independent of those singular values.
Haar concentration for the quadratic forms defining \(S_0[i,j]\), followed by a union bound over \(k^2\) entries, gives
\begin{equation}
\max_i\left|S_0[i,i]-\frac{\|C_0\|_*}{k}\right|+\max_{i\ne j}|S_0[i,j]|\le\chi_d,\qquad \chi_d\downarrow0.
\label{eq:app-det-S0-entry-concentration}
\end{equation}
The gap-free square-root inequality for positive semidefinite matrices gives
\begin{equation}
\|S_m-S_0\|_{\mathrm{op}}\le\|C_mC_m^\top-C_0C_0^\top\|_{\mathrm{op}}^{1/2}\le C\zeta_d^{1/2}.
\label{eq:app-det-gap-free-square-root}
\end{equation}
This proves the entrywise bounds in (S2) without dividing by the smallest singular value.
The empirical-mean bound is part of Proposition~\ref{prop:app-det-common-event}, and the Gaussian average \(k^{-1}\sum_i m_i\) gives \(\|\bar m\|_2=o(1)\) under the same probability envelope.

Conditionally on the empirical means, \(Q_m\) is deterministic and the residuals are centered Gaussian.
Since \(\|q_i-q_j\|_2\le\sqrt2\), Gaussian tails and a union bound over at most \(N^2\) indices give the residual bound in (S2).
Using that row \(i\) of \(C_m\) is \((m_i-\bar m)^\top\), we obtain
\begin{equation}
(q_i-q_j)^\top m_i=S_m[i,i]-S_m[j,i]+(q_i-q_j)^\top\bar m=s_\alpha+O(\eta_d).
\label{eq:app-det-polar-class-margin}
\end{equation}
Adding the residual bound proves (S3).
The same identity and \(\max_{i,s}\|x_{i,s}\|_2=O(1)\) give the uniform lower bound on \(\|q_i-q_j\|_2\).

It remains to identify the angular constant.
The centering identities \(\one^\top Q_m={\mathbf 0}\) and \(\|Q_m\|_{\mathrm F}^2=r_d\) give
\begin{equation}
\frac1{k(k-1)}\sum_{i\ne j}\|q_i-q_j\|_2^2=\frac{2r_d}{k-1}\longrightarrow\frac{2\rho_\alpha}{\alpha}.
\label{eq:app-det-polar-pair-average}
\end{equation}

If \(\alpha<1\), then \(r_d=k-1\) and \(Q_mQ_m^\top=P\), so every pair norm is exactly \(\sqrt2\). For \(\alpha>1\), let \(Q_0\) be the reduced polar factor of \(C_0\) and put \(\Pi_m:=Q_mQ_m^\top\) and \(\Pi_0:=Q_0Q_0^\top\). The left singular frame of \(C_0\), but not in general that of \(C_m=PD_dG/\sqrt d\), is Haar in \(\one^\perp\) when the class counts differ. The positive singular-value lower bounds and \eqref{eq:app-det-Weyl-transfer} give
\[
\|(I-\Pi_0)\Pi_m\|_{\mathrm{op}}\le\|C_m-C_0\|_{\mathrm{op}}\|C_m^\dagger\|_{\mathrm{op}}\le C_\alpha\zeta_d,
\]
and the analogous estimate with \(0\) and \(m\) exchanged. Consequently, \(\|\Pi_m-\Pi_0\|_{\mathrm{op}}\le C_\alpha\zeta_d\). Haar-projector concentration applies to \(\Pi_0\) and yields \((e_i-e_j)^\top\Pi_0(e_i-e_j)=2/\alpha+o(1)\) uniformly over pairs on the same probability scale. Since each such quadratic form changes by at most \(2\|\Pi_m-\Pi_0\|_{\mathrm{op}}\), we obtain \(\|q_i-q_j\|_2=\sqrt{2/\alpha}+o(1)\) uniformly. At \(\alpha=1\), no perturbation of the polar factor at the hard edge is needed for this step. The deterministic bounds \(\|q_i-q_j\|_2\le\sqrt2\) and \eqref{eq:app-det-polar-pair-average} give
\[
\frac1{k(k-1)}\sum_{i\ne j}\left|\|q_i-q_j\|_2-\sqrt2\right|
\le\frac1{\sqrt2}\left(2-\frac{2r_d}{k-1}\right)\longrightarrow0.
\]
Thus the required pair-norm convergence holds in averaged \(L^1\) at \(\alpha=1\).

Furthermore,
\begin{equation}
(q_i-q_j)^\top(\mu_i-\mu_j)=S_m[i,i]+S_m[j,j]-S_m[i,j]-S_m[j,i]+o(1)=2s_\alpha+o(1),
\label{eq:app-det-polar-cross-inner}
\end{equation}
while \(\|\mu_i-\mu_j\|_2=\sqrt2+o(1)\) uniformly.
Consequently, the pairwise cosine converges in averaged \(L^1\) to
\begin{equation}
\frac{2s_\alpha}{\sqrt{2\rho_\alpha/\alpha}\sqrt2}=\frac{m_{1/2}(\alpha)}{\sqrt{\rho_\alpha}}=\kappa_{\mathrm{sp}}(\alpha).
\label{eq:app-det-polar-cosine}
\end{equation}
Uniform continuity of \(x\mapsto\sqrt{2-2x}\) on \([-1,1]\) proves (S4).
Collecting the deterministic envelopes proves the proposition.
\end{proof}

\begin{lemma}[Weighted trace gap and hard-edge truncation]
\label{lem:app-det-trace-truncation}
Let \(C=U\Sigma V^\top\) have rank \(r\), let \(Q=UV^\top\), and let \(B\) satisfy \(\|B\|_{\mathrm{op}}\le1\) and \(\operatorname{rank}(B)\le r\).
Put \(g_\ell:=u_\ell^\top Bv_\ell\) and \(a_\ell:=1-g_\ell\in[0,2]\).
Then
\begin{align}
\|B-Q\|_{\mathrm F}^2&\le2\sum_{\ell=1}^r a_\ell,
\label{eq:app-det-Frob-vs-trace}\\
\frac1r\sum_{\ell=1}^r a_\ell&\le2\frac{\#\{\ell:\sigma_\ell\le\tau\}}r+\frac1{\tau r}\sum_{\ell=1}^r\sigma_\ell a_\ell,\qquad \tau>0.
\label{eq:app-det-truncation}
\end{align}
\end{lemma}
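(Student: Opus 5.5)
The plan is to treat the two inequalities separately, both by elementary expansions in the singular basis of \(C\). Write \(C=U\Sigma V^\top\) with \(U\in\mathbb R^{k\times r}\) and \(V\in\mathbb R^{d\times r}\) having orthonormal columns \(u_\ell,v_\ell\). Then \(Q=UV^\top\) and \(\|Q\|_{\mathrm F}^2=r\).

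\textbf{The Frobenius bound \eqref{eq:app-det-Frob-vs-trace}.} Expand
\[
\|B-Q\|_{\mathrm F}^2=\|B\|_{\mathrm F}^2+r-2\langle B,Q\rangle .
\]
The cross term is \(\langle B,Q\rangle=\sum_\ell u_\ell^\top Bv_\ell=\sum_\ell g_\ell\). Since \(\operatorname{rank}(B)\le r\) and \(\|B\|_{\mathrm{op}}\le1\), \(B\) has at most \(r\) nonzero singular values, each at most one. Hence \(\|B\|_{\mathrm F}^2\le r\). Combining these gives
\[
\|B-Q\|_{\mathrm F}^2\le 2r-2\sum_\ell g_\ell=2\sum_\ell a_\ell .
\]
The range \(a_\ell\in[0,2]\) follows from \(|g_\ell|\le\|u_\ell\|_2\|B\|_{\mathrm{op}}\|v_\ell\|_2\le1\). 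This is the only place the rank hypothesis enters, so this step is routine.

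\textbf{The truncation bound \eqref{eq:app-det-truncation}.} Split the indices into the small set \(\{\ell:\sigma_\ell\le\tau\}\) and the large set \(\{\ell:\sigma_\ell>\tau\}\).
\begin{itemize}
\item On the small set, use \(a_\ell\le2\). This contributes at most \(2\,\#\{\ell:\sigma_\ell\le\tau\}\).
\item On the large set, use \(a_\ell\ge0\) and \(\sigma_\ell/\tau>1\), so \(a_\ell\le\sigma_\ell a_\ell/\tau\).
\end{itemize}
Summing the large-set bound over \(\ell\) and enlarging it to all indices is allowed, because every term \(\sigma_\ell a_\ell\) is nonnegative. Dividing by \(r\) gives \eqref{eq:app-det-truncation}.

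The only point needing care is the nonnegativity \(a_\ell\ge0\), which again comes from \(\|B\|_{\mathrm{op}}\le1\). I do not expect a real obstacle here. The substance of the lemma lies in how it will be used later: the weighted sum \(\sum_\ell\sigma_\ell a_\ell=\|C\|_*-\langle B,C\rangle\) will be controlled by near-optimality of spectral minimizers, and the hard-edge count \eqref{eq:app-det-critical-hard-count} will absorb the small singular values at \(\alpha=1\).
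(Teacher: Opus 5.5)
Your proposal is correct and follows essentially the same route as the paper: expanding \(\|B-Q\|_{\mathrm F}^2\) with \(\|B\|_{\mathrm F}^2\le r\) from the rank and operator-norm bounds gives the first inequality. Splitting indices at \(\sigma_\ell\le\tau\), using \(a_\ell\le2\) on the small set and \(a_\ell\le\sigma_\ell a_\ell/\tau\) on the large set, gives the truncation bound.
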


\begin{proof}
The rank and operator-norm bounds give \(\|B\|_{\mathrm F}^2\le r\), so expanding \(\|B-Q\|_{\mathrm F}^2\) proves the first inequality.
For the second inequality, use \(a_\ell\le2\) when \(\sigma_\ell\le\tau\) and \(a_\ell\le\sigma_\ell a_\ell/\tau\) otherwise.
\end{proof}

\begin{lemma}[Average normalization transfer]
\label{lem:app-det-average-normalization}
Let \(B,Q\in\mathbb R^{k\times d}\) have rows \(b_i,q_i\), suppose \(b_i-b_j\ne{\mathbf 0}\) for \(i\ne j\), and assume \(\inf_{i\ne j}\|q_i-q_j\|_2\ge c_0>0\).
If
\begin{equation}
E_2:=\frac1{k(k-1)}\sum_{i\ne j}\|(b_i-b_j)-(q_i-q_j)\|_2^2,
\label{eq:app-det-average-normalization-error}
\end{equation}
then
\begin{equation}
\frac1{k(k-1)}\sum_{i\ne j}\left\|\frac{b_i-b_j}{\|b_i-b_j\|_2}-\frac{q_i-q_j}{\|q_i-q_j\|_2}\right\|_2\le\frac4{c_0}E_2^{1/2}+\frac8{c_0^2}E_2.
\label{eq:app-det-average-normalization-bound}
\end{equation}
\end{lemma}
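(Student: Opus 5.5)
The plan is to prove the bound pairwise and then average, splitting the index pairs according to the size of the perturbation. Fix an ordered pair \(i\ne j\). Write \(x:=b_i-b_j\), \(y:=q_i-q_j\), and \(\epsilon_{ij}:=\|x-y\|_2\), so that \(E_2\) is the average of \(\epsilon_{ij}^2\). The target bound has the form \(\frac4{c_0}E_2^{1/2}+\frac8{c_0^2}E_2\), and this is the shape one gets by estimating each pair through Lemma~\ref{lem:app-det-normalization} when the perturbation is small and by a crude bound otherwise.

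\emph{Case 1: \(\epsilon_{ij}\le\|y\|_2/2\).} Lemma~\ref{lem:app-det-normalization} applies, since \(y\ne{\mathbf 0}\) and \(x\ne{\mathbf 0}\) by hypothesis. It gives
\[
\left\|\frac{x}{\|x\|_2}-\frac{y}{\|y\|_2}\right\|_2\le\frac{4\epsilon_{ij}}{\|y\|_2}\le\frac{4\epsilon_{ij}}{c_0}.
\]

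\emph{Case 2: \(\epsilon_{ij}>\|y\|_2/2\).} The difference of two unit vectors has norm at most \(2\). In this case \(\epsilon_{ij}>c_0/2\), hence \(1<4\epsilon_{ij}^2/c_0^2\), and so the pair contributes at most
\[
2\le\frac{8\epsilon_{ij}^2}{c_0^2}.
\]

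Combining the cases, every pair satisfies
\[
\left\|\frac{x}{\|x\|_2}-\frac{y}{\|y\|_2}\right\|_2\le\frac4{c_0}\epsilon_{ij}+\frac8{c_0^2}\epsilon_{ij}^2,
\]
because both terms on the right are nonnegative, so whichever case applies, its bound is dominated by the sum. Averaging over the \(k(k-1)\) ordered pairs, the quadratic term averages exactly to \(\frac8{c_0^2}E_2\). For the linear term, Cauchy--Schwarz (Jensen) gives
\[
\frac1{k(k-1)}\sum_{i\ne j}\epsilon_{ij}\le\Bigl(\frac1{k(k-1)}\sum_{i\ne j}\epsilon_{ij}^2\Bigr)^{1/2}=E_2^{1/2}.
\]
Together these yield \eqref{eq:app-det-average-normalization-bound}.

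There is no real obstacle. The only points to check are that Case 2 uses the trivial bound \(2\) on the difference of unit vectors, and that the constant \(8\) follows from \(\epsilon_{ij}>c_0/2\). The hypothesis \(\|q_i-q_j\|_2\ge c_0\) is what makes both thresholds uniform across pairs.
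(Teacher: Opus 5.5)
Your proof is correct and follows essentially the paper's argument. You apply Lemma~\ref{lem:app-det-normalization} on pairs with a small perturbation, and on the remaining pairs you use the trivial bound \(2\) together with \(\mathbf 1_{\{\epsilon_{ij}>c_0/2\}}\le 4\epsilon_{ij}^2/c_0^2\); averaging and Cauchy--Schwarz then finish. The only cosmetic difference is that you split at \(\|q_i-q_j\|_2/2\) rather than at \(c_0/2\), which is equally valid because \(\|q_i-q_j\|_2\ge c_0\).
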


\begin{proof}
For a pairwise difference error \(e_{ij}\le c_0/2\), the normalization map gives an error at most \(4e_{ij}/c_0\).
For the remaining pairs, use the trivial bound two and \(\mathbf1_{\{e_{ij}>c_0/2\}}\le4e_{ij}^2/c_0^2\), then average and apply Cauchy--Schwarz.
\end{proof}

\begin{proposition}[Exact and near spectral minimizers]
\label{prop:app-det-spectral-rigidity}
On the events of Proposition~\ref{prop:app-det-spectral-input}, let \(W\in\mathcal F_X\) satisfy
\begin{equation}
\|W\|_{\mathrm{op}}\le(1+\xi)r_{\mathrm{sp}}^*(X),\qquad 0\le\xi\le1.
\label{eq:app-det-near-spectral-assumption}
\end{equation}
Then
\begin{equation}
D_{\mathrm{avg\text{-}bdry}}(W)\ge\delta_{\mathrm{sp}}(\alpha)-\begin{cases}C_\alpha(\eta_d+\xi)^{1/2},&\alpha\ne1,\\ C(\eta_d+\xi)^{1/4},&\alpha=1.\end{cases}
\label{eq:app-det-near-spectral-bound}
\end{equation}
In particular, every exact minimizer satisfies
\begin{equation}
\frac1{r_d}\left\|\frac{PW}{\|PW\|_{\mathrm{op}}}-Q_m\right\|_{\mathrm F}^2\le\begin{cases}C_\alpha\eta_d,&\alpha\ne1,\\ C\sqrt{\eta_d},&\alpha=1,\end{cases}
\label{eq:app-det-spectral-frobenius-rigidity}
\end{equation}
uniformly over \(W\in\mathcal M_{\mathrm{sp}}^X\).
\end{proposition}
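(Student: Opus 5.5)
Throughout, replace \(W\) by \(B:=PW/\|PW\|_{\mathrm{op}}\)-type objects and compare with the reduced polar factor \(Q_m\) via a trace (duality) argument. Common row shifts do not change the constraints, so \(PW\) is feasible. Also \(\|PW\|_{\mathrm{op}}\le\|W\|_{\mathrm{op}}\), and \(D_{\mathrm{avg\text{-}bdry}}\) is invariant under common shifts. Hence we may assume \(PW=W\). From \ref{item:app-det-S3}, \(Q_m/\rho_X\) is feasible, so \(r_{\mathrm{sp}}^*(X)\le1/\rho_X\le(s_\alpha-C\eta_d)^{-1}\).

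\emph{Lower bound on the nuclear pairing.} Average the constraints \((w_i-w_j)^\top x_{i,s}\ge1\) over \(s\), then sum over \(j\ne i\). As in \eqref{eq:app-det-row-centering-constraint}, this gives \((w_i-\bar w)^\top m_i\ge1-1/k\). Since \(PW=W\) we have \(\bar w=0\), and summing over \(i\) gives
\[\langle W,C_m\rangle=\langle W,M_m\rangle\ge k-1.\]
Set \(B:=W/\|W\|_{\mathrm{op}}\), so \(\|B\|_{\mathrm{op}}=1\). Then
\[\langle B,C_m\rangle\ge(k-1)/((1+\xi)r_{\mathrm{sp}}^*)\ge(k-1)(s_\alpha-C\eta_d)/(1+\xi).\]
On the other hand, \(\langle B,C_m\rangle=\sum_\ell\sigma_\ell g_\ell\) and \(\|C_m\|_*=\sum_\ell\sigma_\ell\). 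By \ref{item:app-det-S1}, \(\|C_m\|_*/k\) lies within \(\eta_d\) of \(s_\alpha\), so
\[\frac1k\sum_\ell\sigma_\ell a_\ell\le C(\eta_d+\xi).\]
The rank condition of Lemma~\ref{lem:app-det-trace-truncation} can be arranged: project \(B\) onto the row space of \(C_m\) from the right, and onto \(\one^\perp\) from the left. This does not raise the operator norm or change the pairing.

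\emph{Truncation.} For \(\alpha\ne1\), \ref{item:app-det-S5} bounds \(\sigma_\ell\ge c_\alpha\). Taking \(\tau=c_\alpha\) in \eqref{eq:app-det-truncation} gives \(r_d^{-1}\sum a_\ell\le C_\alpha(\eta_d+\xi)\). For \(\alpha=1\), use \eqref{eq:app-det-critical-hard-count} and optimize \(\tau\) to get a bound of order \(\tau+\eta_d+(\eta_d+\xi)/\tau\), which is \(C(\eta_d+\xi)^{1/2}\) with \(\tau=(\eta_d+\xi)^{1/2}\). By \eqref{eq:app-det-Frob-vs-trace}, \(r_d^{-1}\|B-Q_m\|_{\mathrm F}^2\) obeys the same bounds. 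With \(\xi=0\), this is exactly \eqref{eq:app-det-spectral-frobenius-rigidity}.

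\emph{Boundary directions.} The centering identities \(\one^\top B=\one^\top Q_m=0\) give
\[\frac1{k(k-1)}\sum_{i\ne j}\|(b_i-b_j)-(q_i-q_j)\|^2=\frac{2k}{k-1}\cdot\frac1k\|B-Q_m\|_{\mathrm F}^2.\]
Since \(r_d/k\) is bounded, \(E_2\le C\,r_d^{-1}\|B-Q_m\|_{\mathrm F}^2\). Lemma~\ref{lem:app-det-average-normalization}, with \(c_0=c_\alpha\) from \ref{item:app-det-S1}, then bounds the averaged direction discrepancy by \(C E_2^{1/2}\). This is \((\eta_d+\xi)^{1/2}\) for \(\alpha\ne1\) and \((\eta_d+\xi)^{1/4}\) for \(\alpha=1\). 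Pairwise directions of \(B\) equal those of \(W\), and the triangle inequality, applied termwise and then averaged, gives \(D_{\mathrm{avg\text{-}bdry}}(W)\ge D_{\mathrm{avg\text{-}bdry}}(Q_m)-(\text{that error})\). Finally, \ref{item:app-det-S4} gives \(D_{\mathrm{avg\text{-}bdry}}(Q_m)\ge\delta_{\mathrm{sp}}(\alpha)-\eta_d\), which proves \eqref{eq:app-det-near-spectral-bound}.

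The main obstacle is the hard-edge case \(\alpha=1\), where small singular values prevent converting the weighted trace gap into a Frobenius bound. The truncation lemma together with the count \eqref{eq:app-det-critical-hard-count} is what handles this, at the price of the weaker exponent. A secondary care point is verifying \(\operatorname{rank}(B)\le r_d\) after the projections, so that \(\|B\|_{\mathrm F}^2\le r_d\).
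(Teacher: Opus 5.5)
Your proposal is correct and follows essentially the paper's route: the polar witness \(Q_m/\rho_X\) bounds \(r_{\mathrm{sp}}^*(X)\), the class-averaged exposure inequality \(\langle PW,C_m\rangle\ge k-1\) gives the weighted trace gap, Lemma~\ref{lem:app-det-trace-truncation} converts this into a Frobenius bound (with \(\tau=(\eta_d+\xi)^{1/2}\) and the hard-edge count at \(\alpha=1\)), and the centered pair identity, Lemma~\ref{lem:app-det-average-normalization}, and \ref{item:app-det-S4} finish the argument. One cleanup is needed: you need no projection for the rank condition, since \(\one^\top B=0\) already gives \(\operatorname{rank}(B)\le\min\{k-1,d\}=r_d\), whereas projecting \(B\) on the right onto the row space of \(C_m\) (a nontrivial projection when \(k-1<d\)) would replace \(B\) by a different matrix and break the link between the Frobenius bound and the pairwise directions of \(W\).
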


\begin{proof}
The polar witness \(Q_m/\rho_X\) is feasible, so
\begin{equation}
r_{\mathrm{sp}}^*(X)\le\rho_X^{-1}.
\label{eq:app-det-spectral-optimum-upper}
\end{equation}
For every feasible \(W\), summing the class-averaged constraints over ordered pairs gives
\begin{equation}
\langle PW,C_m\rangle\ge k-1.
\label{eq:app-det-spectral-dual-lower}
\end{equation}
Set \(A:=PW\), \(a:=\|A\|_{\mathrm{op}}>0\), and \(B:=A/a\).
Centering is an operator-norm contraction and preserves all row differences, while \(\operatorname{rank}(B)\le r_d\).
Equations \eqref{eq:app-det-spectral-optimum-upper}, \eqref{eq:app-det-spectral-dual-lower}, and Proposition~\ref{prop:app-det-spectral-input} give
\begin{equation}
\frac1{r_d}\sum_{\ell=1}^{r_d}\sigma_\ell(C_m)\left(1-u_\ell^\top Bv_\ell\right)\le C(\eta_d+\xi).
\label{eq:app-det-spectral-polar-inner-product}
\end{equation}
If \(\alpha\ne1\), the positive edge in \eqref{eq:app-det-C0-extreme-singular-values} and Lemma~\ref{lem:app-det-trace-truncation} yield \(r_d^{-1}\|B-Q_m\|_{\mathrm F}^2\le C_\alpha(\eta_d+\xi)\).
If \(\alpha=1\), set \(q:=\eta_d+\xi\) and use \(\tau=\sqrt q\) in \eqref{eq:app-det-truncation} together with \eqref{eq:app-det-critical-hard-count}; the result is
\begin{equation}
\frac1{r_d}\|B-Q_m\|_{\mathrm F}^2\le Cq^{1/2}.
\label{eq:app-det-critical-spectral-frobenius}
\end{equation}
The case of non-small \(q\) follows by increasing \(C\) and using the trivial bound.
Taking \(\xi=0\) proves \eqref{eq:app-det-spectral-frobenius-rigidity}.

Because both \(B\) and \(Q_m\) are centered,
\begin{equation}
\frac1{k(k-1)}\sum_{i\ne j}\|(b_i-b_j)-(q_i-q_j)\|_2^2=\frac{2}{k-1}\|B-Q_m\|_{\mathrm F}^2.
\label{eq:app-det-pairwise-Frobenius}
\end{equation}
Feasibility ensures \(b_i-b_j\ne{\mathbf 0}\), and Proposition~\ref{prop:app-det-spectral-input} supplies a uniform lower bound on the polar pair norms.
Lemma~\ref{lem:app-det-average-normalization}, followed by the reverse triangle inequality and \eqref{eq:app-det-polar-uniform-gap}, proves \eqref{eq:app-det-near-spectral-bound}.
\end{proof}

\begin{proposition}[Spectral-norm boundary separation]
\label{prop:app-det-spectral-boundary}
Under Assumptions~\ref{ass:gaussian-cloud} and~\ref{ass:asymptotic-regime},
\begin{equation}
\inf_{W\in\mathcal M_{\mathrm{sp}}^X}D_{\mathrm{avg\text{-}bdry}}(W)\ge\delta_{\mathrm{sp}}(\alpha)-o_{\mathbb P}(1).
\label{eq:app-det-spectral-boundary-separation}
\end{equation}
The bound holds on the events in Proposition~\ref{prop:app-det-spectral-input} with the deterministic error in \eqref{eq:app-det-near-spectral-bound} at \(\xi=0\).
\end{proposition}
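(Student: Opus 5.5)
The bound follows directly from Proposition~\ref{prop:app-det-spectral-rigidity} applied at \(\xi=0\), restricted to the high-probability event of Proposition~\ref{prop:app-det-spectral-input}. I will carry this out in three steps.

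\textbf{Step 1: the good event.} Work on \(\mathcal E_d^{\mathrm{sp}}\). By \eqref{eq:app-det-spectral-event-probability}, its failure probability is at most \(C\exp[-c\sqrt{d\log(2N(d)(d+1))}]\), which tends to zero. On this event the reduced polar margin \(\rho_X\) is positive by \ref{item:app-det-S3}, so the polar witness \(Q_m/\rho_X\) lies in \(\mathcal F_X\). Hence \(\mathcal F_X\neq\varnothing\), the minimum defining \(r_{\mathrm{sp}}^*(X)\) is attained, and \(\mathcal M_{\mathrm{sp}}^X\) is nonempty. If instead the minimizer set were empty, the infimum would be \(+\infty\) and the claim vacuous.

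\textbf{Step 2: apply the rigidity estimate.} Every \(W\in\mathcal M_{\mathrm{sp}}^X\) is feasible and satisfies \(\|W\|_{\mathrm{op}}=r_{\mathrm{sp}}^*(X)\). This is exactly hypothesis \eqref{eq:app-det-near-spectral-assumption} with \(\xi=0\). Proposition~\ref{prop:app-det-spectral-rigidity} then gives
\[
D_{\mathrm{avg\text{-}bdry}}(W)\ge\delta_{\mathrm{sp}}(\alpha)-C_\alpha\eta_d^{1/2}\quad(\alpha\ne1),
\qquad
D_{\mathrm{avg\text{-}bdry}}(W)\ge\delta_{\mathrm{sp}}(\alpha)-C\eta_d^{1/4}\quad(\alpha=1).
\]
The constants here depend only on \(\alpha\), not on the particular minimizer \(W\). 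Taking the infimum over \(\mathcal M_{\mathrm{sp}}^X\) therefore preserves the bound, and we obtain a deterministic error \(\omega_d^{\mathrm{sp}}:=C_\alpha\eta_d^{1/2}\) or \(C\eta_d^{1/4}\). This error tends to zero because \(\eta_d\downarrow0\).

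\textbf{Step 3: conversion to \(o_{\mathbb P}(1)\).} For any \(\epsilon>0\), once \(d\) is large enough that \(\omega_d^{\mathrm{sp}}<\epsilon\), we have
\[
\mathbb P\!\left(\inf_{W\in\mathcal M_{\mathrm{sp}}^X} D_{\mathrm{avg\text{-}bdry}}(W)<\delta_{\mathrm{sp}}(\alpha)-\epsilon\right)\le\mathbb P\bigl((\mathcal E_d^{\mathrm{sp}})^c\bigr)\to0.
\]
This proves \eqref{eq:app-det-spectral-boundary-separation}, and it does so with the stated deterministic error on the stated events.

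The main obstacle is confirming that Proposition~\ref{prop:app-det-spectral-rigidity} holds uniformly over all exact minimizers, including at the hard edge \(\alpha=1\). At \(\alpha=1\) the uniformity rests on the counting bound \eqref{eq:app-det-critical-hard-count} together with the choice \(\tau=\sqrt{\eta_d}\) in the truncation step. Both are already built into that proposition, so no new estimate is needed; the work is only to check that every constant there is independent of \(W\).
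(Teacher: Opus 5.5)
Your proposal is correct and matches the paper's proof: the paper likewise applies Proposition~\ref{prop:app-det-spectral-rigidity} at \(\xi=0\) on the events of Proposition~\ref{prop:app-det-spectral-input} and takes the infimum over the exact minimizer set. Your nonemptiness check via the polar witness and the explicit conversion to \(o_{\mathbb P}(1)\) are routine details that the paper leaves implicit.
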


\begin{proof}
Apply Proposition~\ref{prop:app-det-spectral-rigidity} with \(\xi=0\) and take the infimum over the exact minimizer set.
\end{proof}

\paragraph{Proof of the deterministic geometry theorem}
\label{app:det-main-proof}

\begin{proof}[Proof of Theorem~\ref{thm:deterministic-separation-clean}]
Let \(\mathcal G_d^{\mathrm{det}}\) be the intersection of \(\mathcal E_d^{\mathrm{det}}\) from Proposition~\ref{prop:app-det-common-event} and \(\mathcal E_d^{\mathrm{sp}}\) from Proposition~\ref{prop:app-det-spectral-input}.
Both complements satisfy a bound of the form
\begin{equation}
C\exp\!\left[-c\sqrt{d\log(2N(d)(d+1))}\right].
\label{eq:app-det-intersection-tail}
\end{equation}
Since \(\log(2N(d)(d+1))\to\infty\), the quantity in \eqref{eq:app-det-intersection-tail} is at most \(e^{-\sqrt d}\) for every sufficiently large \(d\), after increasing the deterministic threshold.
The sign construction in Proposition~\ref{prop:app-det-common-event} gives \(\mathcal F_X\ne\varnothing\) on this event, and compactness gives nonempty minimum-norm sets.

The proof of Proposition~\ref{prop:app-det-row-boundary} is deterministic on \(\mathcal E_d^{\mathrm{det}}\) and gives the maximum-pair statement \eqref{eq:det-row-uniform-clean} under a deterministic vanishing envelope.
Lemma~\ref{lem:app-det-ray-reduction} then gives \eqref{eq:det-row-clean}.
Proposition~\ref{prop:app-det-max-boundary} and Lemma~\ref{lem:app-det-ray-reduction} give \eqref{eq:det-max-clean}.
Proposition~\ref{prop:app-det-spectral-boundary} gives \eqref{eq:det-sp-clean} for every fixed \(\alpha>0\), using the reduced polar factor when \(\alpha<1\) and the hard-edge truncation when \(\alpha=1\).
Choose one deterministic \(\omega_d\downarrow0\) dominating all three errors.
Because \(\delta_{\mathrm{sp}}(\alpha)>0\) and \(\delta_{\max}>0\), \eqref{eq:det-set-order-clean} follows for every sufficiently large \(d\).
\end{proof}

\subsection{From Direction-set Geometry to Population Accuracy}
\label{app:det-population-transfer}

We now convert the geometric separation above into ordinary multiclass population accuracy. The lemmas in this subsection are stated for general classifier families so that the same transfer can be reused in the stochastic and AdamProxy analyses.

For every classifier with nonzero pairwise row differences, define
\begin{equation}
\mathcal E_{\pi,d}(W):=1-\operatorname{Acc}_{\pi,d}(W),\qquad p_{i\to j}(W):=\mathbb P\!\left(w_j^\top X^{\mathrm{te}}\ge w_i^\top X^{\mathrm{te}}\mid Y=i\right).
\label{eq:app-acc-error-directed}
\end{equation}
Under the normalized convention in \eqref{eq:app-normalized-data}, set \(\sigma_d:=r_{\mathrm{te}}/\sqrt d\), and for \(i\ne j\) write
\begin{equation}
d_{ij}(W):=\|u_{ij}^W-u_{ij}^{\mu}\|_2,\qquad s_{ij}:=\|\mu_i-\mu_j\|_2,\qquad m_{ij}:=\frac{\mu_i+\mu_j}{2}.
\label{eq:app-acc-pair-geometry}
\end{equation}

\begin{lemma}[Pairwise Gaussian error bounds]
\label{lem:app-acc-pairwise-gaussian}
For every \(i<j\), let \(e_{ij}(W):=\{p_{i\to j}(W)+p_{j\to i}(W)\}/2\). Then
\begin{equation}
\max_{i<j}\{\pi_{i,d}p_{i\to j}(W)+\pi_{j,d}p_{j\to i}(W)\}\le\mathcal E_{\pi,d}(W)\le\sum_{i<j}\{\pi_{i,d}p_{i\to j}(W)+\pi_{j,d}p_{j\to i}(W)\},
\label{eq:app-acc-weighted-sandwich}
\end{equation}
and hence \(\mathcal E_{\pi,d}(W)\ge2\pi_{\min,d}\max_{i<j}e_{ij}(W)\). Moreover, with \(c_{ij}(W):=1-d_{ij}(W)^2/2\),
\begin{equation}
e_{ij}(W)=\frac12\left[\overline\Phi\!\left(\frac{s_{ij}c_{ij}(W)/2+\langle u_{ij}^W,m_{ij}\rangle}{\sigma_d}\right)+\overline\Phi\!\left(\frac{s_{ij}c_{ij}(W)/2-\langle u_{ij}^W,m_{ij}\rangle}{\sigma_d}\right)\right]\ge\overline\Phi\!\left(\frac{s_{ij}(c_{ij}(W))_+}{2\sigma_d}\right),
\label{eq:app-acc-pairwise-midpoint-free}
\end{equation}
where \(\overline\Phi(a)=\mathbb P(Z\ge a)\) for \(Z\sim\mathcal N(0,1)\).
\end{lemma}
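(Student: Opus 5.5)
The proof splits into the error sandwich, the exact Gaussian computation of each pairwise error, and the final averaging lower bound.

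\textbf{The sandwich.} Fix \(W\) with nonzero pairwise row differences. Let \(\mathrm{Err}_i\) be the event that class \(i\) is not the strict argmax. Conditionally on \(Y=i\),
\[
\mathrm{Err}_i=\bigcup_{j\ne i}\{w_j^\top X^{\mathrm{te}}\ge w_i^\top X^{\mathrm{te}}\}.
\]
A union bound gives \(\mathbb P(\mathrm{Err}_i\mid Y=i)\le\sum_{j\ne i}p_{i\to j}(W)\). Monotonicity gives \(\mathbb P(\mathrm{Err}_i\mid Y=i)\ge p_{i\to j}(W)\) for every \(j\ne i\). Multiplying by \(\pi_{i,d}\) and summing over \(i\):

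\begin{itemize}
\item For the upper bound, regroup the double sum over ordered pairs into unordered pairs \(i<j\).
\item For the lower bound, fix \(i<j\) and keep only the two summands for classes \(i\) and \(j\), using \(p_{i\to j}\) and \(p_{j\to i}\) respectively. All other summands are nonnegative.
\end{itemize}

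The corollary follows from \(\pi_{i,d},\pi_{j,d}\ge\pi_{\min,d}\), which gives \(\pi_{i,d}p_{i\to j}+\pi_{j,d}p_{j\to i}\ge 2\pi_{\min,d}e_{ij}\).

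\textbf{Pairwise errors.} Use the normalized convention \eqref{eq:app-normalized-data}, so \(X^{\mathrm{te}}\mid(Y=i)\sim\mathcal N(\mu_i,\sigma_d^2I_d)\). Write \(w_i-w_j=\|w_i-w_j\|_2\,u\) with \(u=u_{ij}^W\). Then \(p_{i\to j}=\mathbb P(u^\top X^{\mathrm{te}}\le0\mid Y=i)\). Since \(u^\top X^{\mathrm{te}}\sim\mathcal N(u^\top\mu_i,\sigma_d^2)\), this equals \(\overline\Phi(u^\top\mu_i/\sigma_d)\). Symmetrically, \(p_{j\to i}=\overline\Phi(-u^\top\mu_j/\sigma_d)\).

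Decompose \(\mu_i=m_{ij}+(\mu_i-\mu_j)/2\) and \(\mu_j=m_{ij}-(\mu_i-\mu_j)/2\). This gives
\[
u^\top\mu_i=\tfrac{s_{ij}}2\langle u,u_{ij}^\mu\rangle+\langle u,m_{ij}\rangle,\qquad -u^\top\mu_j=\tfrac{s_{ij}}2\langle u,u_{ij}^\mu\rangle-\langle u,m_{ij}\rangle .
\]
For unit vectors, \(\langle u,u_{ij}^\mu\rangle=1-\|u-u_{ij}^\mu\|_2^2/2=c_{ij}(W)\). Substituting yields the displayed identity for \(e_{ij}(W)\).

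\textbf{The lower bound.} Set \(a=s_{ij}c_{ij}/(2\sigma_d)\) and \(b=\langle u,m_{ij}\rangle/\sigma_d\). We need \(\tfrac12[\overline\Phi(a+b)+\overline\Phi(a-b)]\ge\overline\Phi(a_+)\), and there are two cases.

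\begin{itemize}
\item \emph{Case \(a\ge0\).} \(\overline\Phi\) is convex on \([0,\infty)\) but not globally, so Jensen cannot be applied directly. Instead, use evenness of \(g(b):=\overline\Phi(a+b)+\overline\Phi(a-b)\). Its derivative is \(g'(b)=\phi(a-b)-\phi(a+b)\), which is \(\ge0\) for \(b\ge0\) because \(|a-b|\le a+b\) when \(a,b\ge0\). So \(g\) is minimized at \(b=0\), giving \(g\ge2\overline\Phi(a)\).
\item \emph{Case \(a<0\).} The target is \(\overline\Phi(0)=\tfrac12\). By the same derivative argument with \(|a+b|\le|a-b|\) for \(b\ge0\), \(g\) is nonincreasing in \(|b|\). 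Its infimum is the limit \(\lim_{|b|\to\infty}g(b)=1\), so \(g\ge1\).
\end{itemize}

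This sign bookkeeping for \(a<0\) is the only step needing care; the rest is direct computation.
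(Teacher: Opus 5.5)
Your proposal is correct and follows essentially the same route as the paper. The paper uses the union bound and a single-pair restriction for the sandwich, the midpoint decomposition \(\mu_i=m_{ij}+(s_{ij}/2)u_{ij}^{\mu}\), \(\mu_j=m_{ij}-(s_{ij}/2)u_{ij}^{\mu}\) for the exact formula, and the even-function minimization for the final bound, but it only asserts the last step; your derivative comparison of \(\phi(a-b)\) with \(\phi(a+b)\) supplies exactly that justification in both sign cases.
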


\begin{proof}
Conditionally on \(Y=i\), the multiclass error event is the union of the events \(\{w_j^\top X^{\mathrm{te}}\ge w_i^\top X^{\mathrm{te}}\}\), up to Gaussian-null ties. The union bound gives the upper half of \eqref{eq:app-acc-weighted-sandwich}, while any fixed unordered pair gives the lower half. For \(u=u_{ij}^W\), \(v=u_{ij}^{\mu}\), \(m=m_{ij}\), and \(s=s_{ij}\), the identities \(\mu_i=m+(s/2)v\) and \(\mu_j=m-(s/2)v\) give
\[
p_{i\to j}(W)=\overline\Phi\!\left(\frac{u^\top\mu_i}{\sigma_d}\right),\qquad p_{j\to i}(W)=\overline\Phi\!\left(\frac{-u^\top\mu_j}{\sigma_d}\right),
\]
which proves the equality in \eqref{eq:app-acc-pairwise-midpoint-free}. For \(q\ge0\), the even function \(z\mapsto[\overline\Phi(q+z)+\overline\Phi(q-z)]/2\) is minimized at zero, while for \(q<0\) the same average is at least \(1/2\). Since \(u^\top v=1-d_{ij}(W)^2/2\), the final inequality follows.
\end{proof}

Define
\begin{equation}
\tau_d:=\max_i\left|\|\mu_i\|_2^2-1\right|,\qquad \rho_d:=\max_{i\ne j}|\mu_i^\top\mu_j|.
\label{eq:app-acc-radial-coherence}
\end{equation}

\begin{lemma}[Quantitative Gaussian radial event]
\label{lem:app-acc-gaussian-radial-event}
There are deterministic \(\eta_d\downarrow0\) and events \(\mathcal R_d\), depending only on the class means, such that, for every sufficiently large \(d\),
\begin{equation}
\mathbb P(\mathcal R_d^c)\le\frac14e^{-\sqrt d},\qquad \tau_d+\rho_d\le\eta_d\quad\text{on }\mathcal R_d.
\label{eq:app-acc-radial-event-probability}
\end{equation}
\end{lemma}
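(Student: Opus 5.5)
The plan is to prove the lemma by direct Gaussian concentration for the class means alone, in the normalized convention \(\mu_i\sim\mathcal N({\mathbf 0},I_d/d)\), and then to union bound over at most \(k^2\) pairs. Since \(k/d\to\alpha\), the union-bound prefactor is only polynomial in \(d\). I will therefore choose a deterministic threshold \(\eta_d\) that decays to zero slowly enough that a tail of the form \(e^{-cd\eta_d^2}\) beats both \(e^{-\sqrt d}\) and the polynomial prefactor. The concrete choice is \(\eta_d:=d^{-1/8}\), so that \(d\eta_d^2=d^{3/4}\gg\sqrt d+\log d\).

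\textbf{Radial part.} Write \(d\|\mu_i\|_2^2\sim\chi^2_d\). The Laurent--Massart (or Bernstein) bound gives \(\mathbb P(|\|\mu_i\|_2^2-1|>t)\le2e^{-cdt^2}\) for \(0<t\le1\). Taking \(t=\eta_d/2\) and a union bound over \(i\le k\), the failure probability is at most \(2k\,e^{-cd^{3/4}/4}\).

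\textbf{Coherence part.} I first intersect with the radial event, on which \(\|\mu_i\|_2\le2\). Conditionally on \(\mu_i\), the inner product \(\mu_i^\top\mu_j\) for \(j\ne i\) is centered Gaussian with variance \(\|\mu_i\|_2^2/d\le4/d\). This gives \(\mathbb P(|\mu_i^\top\mu_j|>\eta_d/2)\le2e^{-d\eta_d^2/32}\), and a union bound over at most \(k^2\) ordered pairs bounds the failure probability by \(2k^2e^{-d^{3/4}/32}\).

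\textbf{Combining.} Let \(\mathcal R_d\) be the intersection of the two events; it depends only on the class means. On \(\mathcal R_d\) we have \(\tau_d+\rho_d\le\eta_d\). The total failure probability is at most \(C(k+k^2)e^{-cd^{3/4}}\). Because \(k=O(d)\), this is \(\le\frac14e^{-\sqrt d}\) for all sufficiently large \(d\), which is the claimed bound.

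The only mild obstacle is the bookkeeping of the rate. One must check that the chosen \(\eta_d\) makes \(d\eta_d^2\) grow faster than \(\sqrt d+\log k\), which any \(\eta_d\) with \(d^{-1/4}=o(\eta_d)\) achieves. No condition on \(N\) or on \(\pi\) is needed, since only the \(k\) means enter.
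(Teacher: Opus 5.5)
Your proposal is correct and follows essentially the same route as the paper: chi-square tails for \(d\|\mu_i\|_2^2\), conditional Gaussian tails for \(\mu_i^\top\mu_j\) given \(\mu_i\), and a union bound over \(O(d^2)\) indices. The only difference is cosmetic. The paper takes the sharper envelope \(\eta_d=C_\alpha d^{-1/4}\) and chooses the constant large enough that \(e^{-cC_\alpha^2\sqrt d}\) beats \(e^{-\sqrt d}\) together with the polynomial prefactor. Your slower \(\eta_d=d^{-1/8}\) leaves room to spare, so no constant tuning is needed.
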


\begin{proof}
Under the normalized convention, \(d\|\mu_i\|_2^2\) is chi-square with \(d\) degrees of freedom, while conditionally on \(\mu_i\), the inner product \(\mu_i^\top\mu_j\) is Gaussian with variance \(\|\mu_i\|_2^2/d\). Taking a sufficiently large constant \(C_\alpha\) and \(\eta_d=C_\alpha d^{-1/4}\), standard chi-square and Gaussian tails followed by a union bound over \(O(d^2)\) indices give \eqref{eq:app-acc-radial-event-probability}.
\end{proof}
For nonempty classifier families \(\mathcal R_d^{\mathrm{clf}}\) and \(\mathcal C_d^{\mathrm{clf}}\), put
\begin{equation}
h_d:=\sup_{W\in\mathcal R_d^{\mathrm{clf}}}\max_{i\ne j}d_{ij}(W),\qquad \Delta_d:=\inf_{W\in\mathcal C_d^{\mathrm{clf}}}D_{\mathrm{avg\text{-}bdry}}(W),
\label{eq:app-acc-family-geometry}
\end{equation}
and, whenever \(\tau_d+\rho_d<1\), define
\begin{align}
A_d&:=\frac{1-\tau_d-\rho_d}{\sqrt{2(1+\tau_d+\rho_d)}}-h_d\sqrt{1+\tau_d},\label{eq:app-acc-row-amplitude}\\
B_d&:=\sqrt{\frac{1+\tau_d+\rho_d}{2}}\left(1-\frac{\Delta_d^2}{2}\right)_+.
\label{eq:app-acc-competitor-amplitude}
\end{align}

\begin{proposition}[Finite-dimensional geometry-to-error transfer]
\label{prop:app-acc-finite-bounds}
If \(A_d>0\), then
\begin{equation}
\sup_{W\in\mathcal R_d^{\mathrm{clf}}}\mathcal E_{\pi,d}(W)\le(k-1)\overline\Phi\!\left(\frac{A_d}{\sigma_d}\right),\qquad \inf_{W\in\mathcal C_d^{\mathrm{clf}}}\mathcal E_{\pi,d}(W)\ge2\pi_{\min,d}\overline\Phi\!\left(\frac{B_d}{\sigma_d}\right).
\label{eq:app-acc-finite-bounds}
\end{equation}
\end{proposition}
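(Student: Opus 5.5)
Both bounds follow from Lemma~\ref{lem:app-acc-pairwise-gaussian} once the Gaussian thresholds are controlled deterministically by \(\tau_d,\rho_d,h_d,\Delta_d\). We use the normalized convention, so \(\sigma_d=r_{\mathrm{te}}/\sqrt d\) and \(p_{i\to j}(W)=\overline\Phi(u^\top\mu_i/\sigma_d)\) with \(u=u_{ij}^W\).

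\textbf{Upper bound for \(\mathcal R_d^{\mathrm{clf}}\).} Start from the union bound in \eqref{eq:app-acc-weighted-sandwich}, written per class as \(\mathcal E_{\pi,d}(W)\le\sum_i\pi_{i,d}\sum_{j\ne i}p_{i\to j}(W)\). It therefore suffices to show \(u_{ij}^{W\top}\mu_i\ge A_d\) for every ordered pair, since then the right side is at most \((k-1)\overline\Phi(A_d/\sigma_d)\) because \(\sum_i\pi_{i,d}=1\). Decompose \(u^\top\mu_i=(u_{ij}^{\mu})^\top\mu_i+(u-u_{ij}^\mu)^\top\mu_i\). For the first term,
\[
(\mu_i-\mu_j)^\top\mu_i\ge 1-\tau_d-\rho_d,\qquad s_{ij}^2\le 2(1+\tau_d+\rho_d),
\]
which gives \((u_{ij}^{\mu})^\top\mu_i\ge(1-\tau_d-\rho_d)/\sqrt{2(1+\tau_d+\rho_d)}\). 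For the second term, Cauchy--Schwarz gives \(|(u-u^\mu_{ij})^\top\mu_i|\le d_{ij}(W)\|\mu_i\|_2\le h_d\sqrt{1+\tau_d}\). Together these yield exactly \(A_d\). The hypothesis \(A_d>0\) is not needed for the inequality itself; it only ensures the bound is nontrivial.

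\textbf{Lower bound for \(\mathcal C_d^{\mathrm{clf}}\).} Use \(\mathcal E_{\pi,d}(W)\ge2\pi_{\min,d}\max_{i<j}e_{ij}(W)\) together with the midpoint-free bound \(e_{ij}\ge\overline\Phi\big(s_{ij}(c_{ij})_+/(2\sigma_d)\big)\). Since \(\Delta_d\) is an infimum of averages over ordered pairs and \(d_{ij}=d_{ji}\), every \(W\in\mathcal C_d^{\mathrm{clf}}\) has some unordered pair with \(d_{ij}(W)\ge D_{\mathrm{avg\text{-}bdry}}(W)\ge\Delta_d\). For that pair \((c_{ij})_+\le(1-\Delta_d^2/2)_+\) and \(s_{ij}/2\le\sqrt{(1+\tau_d+\rho_d)/2}\). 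Because \(\overline\Phi\) is decreasing and the argument is nonnegative, this gives \(e_{ij}(W)\ge\overline\Phi(B_d/\sigma_d)\). Taking the infimum over the family finishes the proof.

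The only delicate point is the max-versus-average step: the bound must come from a single pair attaining at least the average deviation, and one must check that the positive-part monotonicity is applied correctly. Everything else is bookkeeping of the constants \(\tau_d\) and \(\rho_d\).
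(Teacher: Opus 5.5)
Your proof is correct and follows essentially the same route as the paper's. The radial bounds on \((u_{ij}^{\mu})^\top\mu_i\) plus the \(h_d\)-perturbation make every directed margin at least \(A_d\) for the union bound, and a pair with \(d_{ij}(W)\ge D_{\mathrm{avg\text{-}bdry}}(W)\ge\Delta_d\), fed into the midpoint-free bound of Lemma~\ref{lem:app-acc-pairwise-gaussian}, gives the lower bound. Your side remark is also right: \(A_d>0\) is not needed for either inequality, only \(\tau_d+\rho_d<1\).
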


\begin{proof}
For \(v=u_{ij}^{\mu}\), radial control gives
\[
v^\top\mu_i\ge\frac{1-\tau_d-\rho_d}{\sqrt{2(1+\tau_d+\rho_d)}},\qquad -v^\top\mu_j\ge\frac{1-\tau_d-\rho_d}{\sqrt{2(1+\tau_d+\rho_d)}}.
\]
If \(W\in\mathcal R_d^{\mathrm{clf}}\), then \(\|u_{ij}^W-v\|_2\le h_d\), so both directed population margins are at least \(A_d\). The upper bound follows from the union bound in Lemma~\ref{lem:app-acc-pairwise-gaussian}. If \(W\in\mathcal C_d^{\mathrm{clf}}\), some pair has \(d_{ij}(W)\ge\Delta_d\), while \(s_{ij}/2\le\sqrt{(1+\tau_d+\rho_d)/2}\). The midpoint-free lower bound in Lemma~\ref{lem:app-acc-pairwise-gaussian} then gives the second inequality.
\end{proof}

Define
\begin{equation}
\mathsf a(h):=\frac1{\sqrt2}-h,\qquad \mathsf b(\delta):=\frac1{\sqrt2}\left(1-\frac{\delta^2}{2}\right)_+.
\label{eq:app-acc-limiting-amplitudes}
\end{equation}

\begin{theorem}[Geometry-to-accuracy transfer]
\label{thm:app-acc-transfer}
Suppose \(\tau_d+\rho_d\to0\), \(\limsup_d h_d\le\overline h\), and \(\liminf_d\Delta_d\ge\underline\delta>0\). If \(\mathsf a(\overline h)>\mathsf b(\underline\delta)\), then under Assumption~\ref{ass:asymptotic-regime},
\begin{equation}
\frac{\sup_{W\in\mathcal R_d^{\mathrm{clf}}}\mathcal E_{\pi,d}(W)}{\inf_{W\in\mathcal C_d^{\mathrm{clf}}}\mathcal E_{\pi,d}(W)}\longrightarrow0,
\label{eq:app-acc-transfer-ratio}
\end{equation}
and therefore, for every sufficiently large \(d\),
\begin{equation}
\inf_{W\in\mathcal R_d^{\mathrm{clf}}}\operatorname{Acc}_{\pi,d}(W)>\sup_{W\in\mathcal C_d^{\mathrm{clf}}}\operatorname{Acc}_{\pi,d}(W).
\label{eq:app-acc-transfer-order}
\end{equation}
\end{theorem}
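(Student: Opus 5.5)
The plan is to start from the finite-dimensional bounds of Proposition~\ref{prop:app-acc-finite-bounds} and show that the ratio of the two bounds vanishes. Since \(\tau_d+\rho_d\to0\), \(\limsup_d h_d\le\overline h\) and \(\liminf_d\Delta_d\ge\underline\delta\), the amplitudes satisfy
\[
\liminf_d A_d\ge\mathsf a(\overline h)
\qquad\text{and}\qquad
\limsup_d B_d\le\mathsf b(\underline\delta).
\]
Two monotonicity facts give this. First, \(\sqrt{1+\tau_d}\to1\) and the leading fraction in \(A_d\) tends to \(1/\sqrt2\). Second, \(\delta\mapsto(1-\delta^2/2)_+\) is nonincreasing on \([0,\infty)\), so \(\Delta_d\ge\underline\delta-o(1)\) yields \(B_d\le\mathsf b(\underline\delta)+o(1)\). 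Fix a gap \(g:=(\mathsf a(\overline h)-\mathsf b(\underline\delta))/3>0\). For \(d\) large we then have \(A_d\ge\mathsf b(\underline\delta)+2g>0\) and \(0\le B_d\le\mathsf b(\underline\delta)+g\). Hence the hypothesis \(A_d>0\) holds, and writing \(a_d:=A_d\), \(b_d:=B_d\), we get \(a_d-b_d\ge g\) with both bounded above by \(1/\sqrt2+o(1)\).

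Proposition~\ref{prop:app-acc-finite-bounds} then bounds the ratio in \eqref{eq:app-acc-transfer-ratio} by
\[
\frac{(k-1)\,\overline\Phi(a_d/\sigma_d)}{2\pi_{\min,d}\,\overline\Phi(b_d/\sigma_d)}.
\]
Recall \(\sigma_d=r_{\mathrm{te}}/\sqrt d\le r_{\mathrm{te},0}/\sqrt d\to0\). By the Mills-ratio bounds \(\overline\Phi(x)\le e^{-x^2/2}\) and \(\overline\Phi(x)\ge\frac{x}{1+x^2}\phi(x)\) for \(x\ge0\) (and \(\overline\Phi(x)\ge1/2\) when \(b_d\) is \(0\)), the ratio \(\overline\Phi(a_d/\sigma_d)/\overline\Phi(b_d/\sigma_d)\) is at most
\[
C\,(1+b_d/\sigma_d)\exp\!\bigl(-(a_d^2-b_d^2)/(2\sigma_d^2)\bigr).
\]
Since \(a_d^2-b_d^2\ge g(a_d+b_d)\ge g^2\), the exponent is at most \(-c\,d\) with \(c=g^2/(2r_{\mathrm{te},0}^2)>0\), while the prefactor is \(O(\sqrt d)\).

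It remains to absorb the combinatorial factors. Here \(k-1=O(d)\) by \(k/d\to\alpha\), and \(1/\pi_{\min,d}=e^{o(d)}\) by Assumption~\ref{ass:asymptotic-regime}. The full ratio is therefore at most \(\exp(-cd+o(d)+O(\log d))\to0\), which proves \eqref{eq:app-acc-transfer-ratio}.

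For \eqref{eq:app-acc-transfer-order}, a vanishing ratio means \(\sup_{\mathcal R}\mathcal E_{\pi,d}<\inf_{\mathcal C}\mathcal E_{\pi,d}\) for large \(d\), provided the infimum is positive. It is, because \(\inf_{\mathcal C}\mathcal E_{\pi,d}\ge2\pi_{\min,d}\overline\Phi(B_d/\sigma_d)>0\). Taking complements, \(\operatorname{Acc}=1-\mathcal E\), gives the strict inequality.

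The main obstacle is the uniformity of the exponent comparison. The constant in the exponent must be bounded away from zero even though \(r_{\mathrm{te}}\) may vary with \(d\); this is why the argument uses \(r_{\mathrm{te},0}\) and the lower bound \(a_d^2-b_d^2\ge g^2\), not a ratio of amplitudes. One must also handle the degenerate case \(b_d=0\) separately in the Mills-ratio lower bound.
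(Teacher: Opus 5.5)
This is correct and is essentially the paper's proof, which also feeds Proposition~\ref{prop:app-acc-finite-bounds} into Gaussian tail bounds and absorbs \(k-1\) and \(\pi_{\min,d}^{-1}\) via Assumption~\ref{ass:asymptotic-regime}. The one difference is that the paper freezes constants \(\mathsf b(\underline\delta)<b<a<\mathsf a(\overline h)\) and uses monotonicity of \(\overline\Phi\), so \(b/\sigma_d\to\infty\) and no small-argument case arises.

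One slip to fix: \(r_{\mathrm{te}}\) has no lower bound, so your prefactor \(1+b_d/\sigma_d\) need not be \(O(\sqrt d)\). Also, the lower Mills bound degenerates for every small \(b_d/\sigma_d\), not only at \(b_d=0\). Both issues disappear if you use \(\overline\Phi(x)\le e^{-(x^2-y^2)/2}\,\overline\Phi(y)\) for \(x\ge y\ge0\), which bounds the tail ratio directly by \(\exp\bigl(-g^2d/(2r_{\mathrm{te},0}^2)\bigr)\) with no prefactor.
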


\begin{proof}
Choose constants \(0<b<a\) such that \(\mathsf b(\underline\delta)<b<a<\mathsf a(\overline h)\). Proposition~\ref{prop:app-acc-finite-bounds} gives, for all sufficiently large \(d\),
\[
\frac{\sup_{\mathcal R_d^{\mathrm{clf}}}\mathcal E_{\pi,d}}{\inf_{\mathcal C_d^{\mathrm{clf}}}\mathcal E_{\pi,d}}\le\frac{k-1}{2\pi_{\min,d}}\frac{\overline\Phi(a/\sigma_d)}{\overline\Phi(b/\sigma_d)}.
\]
The standard Mills bounds imply that the right-hand side is at most \(Ck\pi_{\min,d}^{-1}e^{-cd}\) for constants \(C,c>0\), because \(r_{\mathrm{te}}\le r_{\mathrm{te},0}\). Assumption~\ref{ass:asymptotic-regime} makes this quantity vanish.
\end{proof}

\begin{lemma}[Population-error continuity along a direction set]
\label{lem:app-acc-trajectory-continuity}
Fix \(d\), a linearly separable training set \(X\) containing at least one sample from every class in \([k]\), a matrix norm \(\psi\), and \(0\le\varepsilon<\gamma_\psi(X)\). If \(d_{\cos}(W_t,\mathcal U_{\psi,\varepsilon}^X)\to0\), then
\begin{equation}
\limsup_{t\to\infty}\mathcal E_{\pi,d}(W_t)\le\sup_{U\in\mathcal U_{\psi,\varepsilon}^X}\mathcal E_{\pi,d}(U),\qquad \liminf_{t\to\infty}\mathcal E_{\pi,d}(W_t)\ge\inf_{U\in\mathcal U_{\psi,\varepsilon}^X}\mathcal E_{\pi,d}(U).
\label{eq:app-acc-error-continuity}
\end{equation}
\end{lemma}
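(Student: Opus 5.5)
The plan is a compactness argument on the Frobenius unit sphere, combined with continuity of \(\mathcal E_{\pi,d}\) in the classifier direction. First, \(\mathcal E_{\pi,d}\) is invariant under positive rescaling of \(W\), so it is a function of \(W/\|W\|_{\mathrm F}\). The same holds for \(\mathcal U_{\psi,\varepsilon}^X\), which is a cone. We therefore work with \(\widehat W_t:=W_t/\|W_t\|_{\mathrm F}\). Note that \(W_t\neq\mathbf 0\) eventually, since \(d_{\cos}\) is only defined for nonzero arguments. Let \(K:=\{V/\|V\|_{\mathrm F}:V\in\mathcal U_{\psi,\varepsilon}^X\}\).

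\textbf{Step 1: \(K\) is compact and its points have distinct rows.} \(K\) is closed, because the constraint \(\operatorname{mar}_X(V)\ge(\gamma_\psi(X)-\varepsilon)\psi(V)\) is continuous and homogeneous and is preserved under limits on the unit sphere. It is also bounded, hence compact. Every \(V\in K\) satisfies \(\operatorname{mar}_X(V)\ge(\gamma_\psi-\varepsilon)\psi(V)>0\). Since every class has at least one sample, for each pair \(i\neq j\) some sample \(x\) of class \(i\) gives \((v_i-v_j)^\top x>0\), so \(v_i\neq v_j\). By compactness, \(\min_{V\in K}\min_{i\ne j}\|v_i-v_j\|_2\ge c_0>0\).

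\textbf{Step 2: \(d_{\cos}\to0\) means the normalized iterates approach \(K\).} The supremum in \(d_{\cos}\) is attained on \(K\) by compactness, so \(d_{\cos}(W_t,\mathcal U)\to0\) gives \(\operatorname{dist}_{\mathrm F}(\widehat W_t,K)^2=2d_{\cos}(W_t,\mathcal U)\to0\). Hence every subsequence of \(\widehat W_t\) has a further subsequence converging to some \(V_\infty\in K\).

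\textbf{Step 3: continuity of \(\mathcal E_{\pi,d}\) near \(K\).} I will show that \(\mathcal E_{\pi,d}\) is continuous at each \(V\in K\), on the set of matrices with pairwise distinct rows. Conditionally on \(Y=i\), the test point \(X^{\mathrm{te}}\) is nondegenerate Gaussian. For fixed \(i\), the error event is \(\bigcup_{j\ne i}\{(w_j-w_i)^\top X^{\mathrm{te}}\ge0\}\). When \(w_j\to v_j\) with \(v_i\ne v_j\), the indicator of each event converges almost surely, because the limiting boundary hyperplane \(\{(v_j-v_i)^\top x=0\}\) has Gaussian measure zero. Dominated convergence then gives convergence of each conditional error probability. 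Summing with the weights \(\pi_{i,d}\) over finitely many \(i\) (fixed \(d\), \(k\)) gives continuity.

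\textbf{Step 4: conclude.} Take a subsequence realizing \(\limsup_t\mathcal E_{\pi,d}(W_t)\). Extract a further subsequence with \(\widehat W_t\to V_\infty\in K\). Steps 1 and 3 give \(\mathcal E_{\pi,d}(W_t)\to\mathcal E_{\pi,d}(V_\infty)\le\sup_{\mathcal U}\mathcal E_{\pi,d}\). The \(\liminf\) bound follows symmetrically, with \(\mathcal E_{\pi,d}(V_\infty)\ge\inf_{\mathcal U}\mathcal E_{\pi,d}\).

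The main obstacle is Step 3, specifically making the ties and degeneracy issue rigorous. The fix is exactly the positive gap \(c_0\) from Step 1, which keeps all limiting pairwise differences nonzero so that the boundary hyperplanes are Gaussian-null. No uniformity in \(d\) is needed, since \(d\) is fixed throughout.
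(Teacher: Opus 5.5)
Your argument is correct and follows essentially the same route as the paper: scale invariance of the error, compactness of the normalized section of \(\mathcal U_{\psi,\varepsilon}^X\), strictly positive margin forcing distinct rows (using that every class has a sample), and Gaussian-null tie hyperplanes giving continuity. The only difference is cosmetic. You use subsequence extraction with continuity at each limit point, whereas the paper invokes uniform continuity on a compact neighborhood. In your version the uniform gap \(c_0\) is not actually needed; distinctness of the rows of the limit point \(V_\infty\) alone suffices.
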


\begin{proof}
The norm-normalized section of \(\mathcal U_{\psi,\varepsilon}^X\) is compact, and its strictly positive empirical margin keeps every pairwise row difference uniformly away from zero. Population error is invariant under positive scaling and continuous on this section because the Gaussian test law gives zero mass to score-tie hyperplanes. Uniform continuity on a compact neighborhood proves the claim.
\end{proof}

\begin{lemma}[Boundary metric under direction-set convergence]
\label{lem:app-det-metric-transfer}
Under the fixed-data assumptions of Lemma~\ref{lem:app-acc-trajectory-continuity}, if \(d_{\cos}(W_t,\mathcal U_{\psi,\varepsilon}^X)\to0\) with \(0\le\varepsilon<\gamma_\psi(X)\), then
\begin{equation}
\limsup_{t\to\infty}D_{\mathrm{avg\text{-}bdry}}(W_t)\le\sup_{U\in\mathcal U_{\psi,\varepsilon}^X}D_{\mathrm{avg\text{-}bdry}}(U),\qquad \liminf_{t\to\infty}D_{\mathrm{avg\text{-}bdry}}(W_t)\ge\inf_{U\in\mathcal U_{\psi,\varepsilon}^X}D_{\mathrm{avg\text{-}bdry}}(U).
\label{eq:app-det-metric-transfer}
\end{equation}
\end{lemma}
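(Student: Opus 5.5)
The argument mirrors Lemma~\ref{lem:app-acc-trajectory-continuity}: a compactness argument on a normalized section of \(\mathcal U_{\psi,\varepsilon}^X\), combined with continuity of \(D_{\mathrm{avg\text{-}bdry}}\) at points whose pairwise row differences are nonzero. Set \(\mathcal K:=\{U\in\mathcal U_{\psi,\varepsilon}^X:\|U\|_{\mathrm F}=1\}\). Since \(\operatorname{mar}_X\) and \(\psi\) are continuous and positively homogeneous, \(\mathcal K\) is the intersection of the Frobenius unit sphere with the closed cone \(\{U:\operatorname{mar}_X(U)\ge(\gamma_\psi(X)-\varepsilon)\psi(U)\}\), so it is compact. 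It is also nonempty, because a max-margin direction exists.

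\emph{Step 1 (uniform separation of row differences).} For \(U\in\mathcal K\) we have \(\operatorname{mar}_X(U)\ge(\gamma_\psi(X)-\varepsilon)\psi(U)\ge c_0>0\), where \(c_0:=(\gamma_\psi(X)-\varepsilon)\min_{\|V\|_{\mathrm F}=1}\psi(V)\). Fix \(i\ne j\). Because every class has at least one sample, some \(x_n\) has label \(i\), and then \((u_i-u_j)^\top x_n\ge c_0\). Hence \(\|u_i-u_j\|_2\ge c_0/R\) with \(R:=\max_n\|x_n\|_2\). The same bound holds with a smaller constant, say \(c_0/(2R)\), on the Frobenius neighborhood \(\mathcal K_\delta:=\{V:\operatorname{dist}_{\mathrm F}(V,\mathcal K)\le\delta\}\) once \(\delta\le c_0/(4R)\). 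This is because \(\|(v_i-v_j)-(u_i-u_j)\|_2\le2\|V-U\|_{\mathrm F}\); shrinking \(\delta\) further absorbs the constant.

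\emph{Step 2 (continuity).} On \(\mathcal K_\delta\), each map \(V\mapsto u_{ij}^V\) is Lipschitz by Lemma~\ref{lem:app-det-normalization}, and \(D_{\mathrm{avg\text{-}bdry}}\) is a finite average of such maps. So \(D_{\mathrm{avg\text{-}bdry}}\) is uniformly continuous on the compact set \(\mathcal K_\delta\).

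\emph{Step 3 (transfer along the trajectory).} Let \(\widehat W_t:=W_t/\|W_t\|_{\mathrm F}\). From \(d_{\cos}(W_t,\mathcal U_{\psi,\varepsilon}^X)\to0\) and the identity \(\inf_{U\in\mathcal K}\|\widehat W_t-U\|_{\mathrm F}^2=2d_{\cos}(W_t,\mathcal U_{\psi,\varepsilon}^X)\) (the supremum is attained by compactness), we can choose \(U_t\in\mathcal K\) with \(\|\widehat W_t-U_t\|_{\mathrm F}\to0\). Eventually \(\widehat W_t\in\mathcal K_\delta\), so its row differences are nonzero and \(D_{\mathrm{avg\text{-}bdry}}(W_t)\) is well defined. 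Scale invariance gives \(D_{\mathrm{avg\text{-}bdry}}(W_t)=D_{\mathrm{avg\text{-}bdry}}(\widehat W_t)\), and uniform continuity then gives
\[
|D_{\mathrm{avg\text{-}bdry}}(W_t)-D_{\mathrm{avg\text{-}bdry}}(U_t)|\to0 .
\]
Since \(U_t\in\mathcal U_{\psi,\varepsilon}^X\), the value \(D_{\mathrm{avg\text{-}bdry}}(U_t)\) lies between the infimum and the supremum over that set. Taking \(\limsup\) and \(\liminf\) yields both inequalities.

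The only delicate point is Step 1: every pair of rows must stay uniformly separated, including pairs where class \(j\) never appears as a competitor for a constrained sample. The requirement that every class has a sample is exactly what rules this out.
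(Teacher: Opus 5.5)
Your proposal is correct and is essentially the paper's argument. The paper proves this lemma by invoking the same compactness argument as Lemma~\ref{lem:app-acc-trajectory-continuity}: a compact normalized section, a positive margin keeping all pairwise row differences uniformly away from zero, and scale invariance plus uniform continuity of $D_{\mathrm{avg\text{-}bdry}}$ on a compact neighborhood. Your Steps~1--3 make those details explicit.

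One small nit: your closing remark misdescribes the role of the every-class-has-a-sample hypothesis. Under $\operatorname{mar}_X$ every class is already a competitor for every sample; what is actually needed is that class $i$ (or $j$) occurs as a label, which is exactly how your Step~1 uses it.
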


\begin{proof}
The same compactness argument as in Lemma~\ref{lem:app-acc-trajectory-continuity} applies because \(D_{\mathrm{avg\text{-}bdry}}\) is scale invariant and continuous whenever all pairwise row differences are nonzero.
\end{proof}

\subsection{Proof of the Deterministic Optimizer Accuracy Ordering}
\label{app:det-optimizer-order}

\begin{proof}[Proof of Theorem~\ref{thm:det-optimizer-accuracy-order-clean}]
Work on the event \(\mathcal G_d^{\mathrm{det}}\) from Theorem~\ref{thm:deterministic-separation-clean}. Proposition~\ref{prop:app-det-common-event}, items~\ref{item:app-det-H1}--\ref{item:app-det-H2} imply \(\tau_d+\rho_d=o(1)\) on this event. Take \(\mathcal R_d^{\mathrm{clf}}=\mathcal U_{\mathrm{row}}^X\). Equation~\eqref{eq:det-row-uniform-clean} gives \(h_d=o(1)\). For \(\psi\in\{\mathrm{sp},\max\}\), take \(\mathcal C_d^{\mathrm{clf}}=\mathcal U_\psi^X\). Equations~\eqref{eq:det-sp-clean} and~\eqref{eq:det-max-clean} give \(\liminf_d\Delta_d\ge\delta_\psi>0\), where \(\delta_{\mathrm{sp}}=\delta_{\mathrm{sp}}(\alpha)\) and \(\delta_{\max}\) is defined in \eqref{eq:delta-max-clean}. Since
\begin{equation}
\mathsf a(0)=\frac1{\sqrt2}>\frac1{\sqrt2}\left(1-\frac{\delta_\psi^2}{2}\right)_+=\mathsf b(\delta_\psi),
\label{eq:app-acc-deterministic-amplitude-gap}
\end{equation}
Theorem~\ref{thm:app-acc-transfer} gives, for every sufficiently large \(d\) on \(\mathcal G_d^{\mathrm{det}}\),
\begin{equation}
\sup_{U\in\mathcal U_{\mathrm{row}}^X}\mathcal E_{\pi,d}(U)<\min_{\psi\in\{\mathrm{sp},\max\}}\inf_{V\in\mathcal U_\psi^X}\mathcal E_{\pi,d}(V).
\label{eq:app-acc-deterministic-set-error-order}
\end{equation}
The full-batch implicit-bias results in Appendix~\ref{app:max-sum-row}, together with Lemma~\ref{lem:exact-margin-direction-convergence}, give direction-set convergence to \(\mathcal U_{\mathrm{row}}^X\) for RowGD and RowGDM, to \(\mathcal U_{\mathrm{sp}}^X\) for Spectral-GD and exact-SVD Muon, and to \(\mathcal U_{\max}^X\) for SignGD, Signum, and full-batch Adam under the stated Adam assumptions. Lemma~\ref{lem:app-acc-trajectory-continuity} transfers \eqref{eq:app-acc-deterministic-set-error-order} to all seven trajectories. Since \(\mathbb P(\mathcal G_d^{\mathrm{det}})\ge1-e^{-\sqrt d}\), replacing error by one minus accuracy proves \eqref{eq:det-optimizer-accuracy-order-clean} with the claimed probability.
\end{proof}

\subsection{Proof of the Limiting Classifier Geometry}
\label{app:limiting-geometry-proof}

We prove Proposition~\ref{prop:limiting-geometry} under the normalized convention~\eqref{eq:app-normalized-data}, under which \(\mu_i,z_{i,s}\overset{\mathrm{iid}}{\sim}\mathcal N({\mathbf 0},I_d/d)\). Since all statements below are invariant under the common rescaling used in~\eqref{eq:app-normalized-data}, they transfer directly to the original normalization.

Let \(M_\mu\in\mathbb R^{k\times d}\) have rows \(\mu_i^\top\), let \(C_\mu:=PM_\mu\), and write a compact SVD \(C_\mu=U_\mu\Sigma_\mu V_\mu^\top\). Define \(Q_\mu:=U_\mu V_\mu^\top\), and write \(q_i^\top\) for its \(i\)-th row. Recall from~\eqref{eq:app-det-all-aspect-scales} that
\[
\kappa_{\mathrm{sp}}(\alpha)=\frac{m_{1/2}(\alpha)}{\sqrt{\alpha\wedge1}}\in(0,1).
\]
In particular, when \(\alpha>1\), \(\kappa_{\mathrm{sp}}(\alpha)=m_{1/2}(\alpha)\).

\begin{lemma}[Empirical-to-population polar transfer]
\label{lem:app-population-polar-transfer}
Under Assumptions~\ref{ass:gaussian-cloud} and~\ref{ass:asymptotic-regime},
\begin{equation}
\frac1{r_d}\|Q_m-Q_\mu\|_{\mathrm F}^2\xrightarrow{\mathbb P}0.
\label{eq:app-population-polar-transfer}
\end{equation}
\end{lemma}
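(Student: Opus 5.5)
The plan is to avoid any polar-factor perturbation theory that divides by the smallest singular value. Instead I will reuse the weighted trace-gap argument from the proof of Proposition~\ref{prop:app-det-spectral-rigidity}, which treats all aspect ratios, including the hard edge \(\alpha=1\), in one pass. Throughout, work on the event \(\mathcal E_d^{\mathrm{sp}}\) of Proposition~\ref{prop:app-det-spectral-input}. On that event \(C_m\) has rank \(r_d\) with compact SVD \(C_m=U_m\Sigma_mV_m^\top\), and Lemma~\ref{lem:app-det-trace-truncation} will be applied with \(C=C_m\), \(Q=Q_m\), and \(B=Q_\mu\). This choice is admissible: \(\|Q_\mu\|_{\mathrm{op}}\le1\), and \(\operatorname{rank}(Q_\mu)\le\operatorname{rank}(C_\mu)\le r_d\) because \(P\) has rank \(k-1\).

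\emph{Step 1: the perturbation is small in Frobenius norm on the scale \(r_d\).} Since \(M_m-M_\mu=r\bar Z\), where \(\bar Z\) has rows \(\bar z_i^\top\), and \(P\) is a contraction,
\[
\|C_m-C_\mu\|_{\mathrm F}^2\le r_0^2\sum_{i=1}^k\|\bar z_i\|_2^2 .
\]
By (H1)/\eqref{eq:app-det-mean-concentration} the right-hand side is at most \(4r_0^2k/n_{\min}\). Since \(k/r_d\to\max\{1,\alpha\}\) and \(n_{\min}\to\infty\), this gives \(r_d^{-1}\|C_m-C_\mu\|_{\mathrm F}^2=o(1)\) on the event.

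\emph{Step 2: trace gap.} Write \(g_\ell:=u_\ell^\top Q_\mu v_\ell\) and \(a_\ell:=1-g_\ell\). Then
\[
\sum_\ell\sigma_\ell(C_m)a_\ell=\|C_m\|_*-\langle Q_\mu,C_m\rangle .
\]
Use \(\langle Q_\mu,C_\mu\rangle=\|C_\mu\|_*\) together with the nuclear-norm triangle inequality and duality (\(\|Q_\mu\|_{\mathrm{op}}\le1\)). This yields
\[
\sum_\ell\sigma_\ell(C_m)a_\ell\le\|C_m\|_*-\|C_\mu\|_*+\|C_m-C_\mu\|_*\le2\|C_m-C_\mu\|_*\le2\sqrt{2r_d}\,\|C_m-C_\mu\|_{\mathrm F}.
\]
The last step holds because \(C_m-C_\mu\) has rank at most \(2r_d\). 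Combined with Step 1, this gives \(r_d^{-1}\sum_\ell\sigma_\ell a_\ell\le\epsilon_d\) for some deterministic \(\epsilon_d\downarrow0\).

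\emph{Step 3: truncation.} Apply \eqref{eq:app-det-truncation} with a threshold \(\tau_d\downarrow0\), chosen so that \(\tau_d\ge\eta_d\) and \(\epsilon_d/\tau_d\to0\), for example \(\tau_d=\max\{\eta_d,\sqrt{\epsilon_d}\}\). The small-singular-value count is controlled by (S5):
\begin{itemize}
\item if \(\alpha\ne1\), the count is zero once \(\tau_d<c_\alpha\);
\item if \(\alpha=1\), \eqref{eq:app-det-critical-hard-count} bounds it by \(C_0(\tau_d+\eta_d)r_d\).
\end{itemize}
In both cases \(r_d^{-1}\sum_\ell a_\ell\to0\), and \eqref{eq:app-det-Frob-vs-trace} then gives \(r_d^{-1}\|Q_\mu-Q_m\|_{\mathrm F}^2\to0\) on \(\mathcal E_d^{\mathrm{sp}}\). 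Since \(\mathbb P(\mathcal E_d^{\mathrm{sp}})\to1\), the conclusion holds in probability.

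The main obstacle is the hard edge at \(\alpha=1\). There \(\sigma_{r_d}(C_m)\to0\), so standard polar-factor bounds of Li type, with \((\sigma_{\min}(C_m)+\sigma_{\min}(C_\mu))^{-1}\) in front, blow up. The trace-gap-plus-counting route above is designed to circumvent exactly this, and it only needs the density-type count \eqref{eq:app-det-critical-hard-count} for \(C_m\), which is already available. A secondary point to check is that \(Q_\mu\) need not be in general position relative to \(C_m\). This does not matter, because Lemma~\ref{lem:app-det-trace-truncation} requires only the operator-norm and rank bounds on \(B\).
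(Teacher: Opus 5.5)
Your proposal is correct and is essentially the paper's proof. You bound the nuclear-norm deficit \(\|C_m\|_*-\langle Q_\mu,C_m\rangle\), apply Lemma~\ref{lem:app-det-trace-truncation} with \(B=Q_\mu\), and split into \(\alpha\ne1\) and \(\alpha=1\) using (S5) with \(\tau_d=\max\{\eta_d,\sqrt{\cdot}\}\). The only difference is that you control \(C_m-C_\mu\) in Frobenius norm through the per-row bounds on \(\bar z_i\), whereas the paper uses the operator norm \(\delta_d=\|C_m-C_\mu\|_{\mathrm{op}}\); this makes no difference to the argument.
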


\begin{proof}
By~\eqref{eq:app-det-mean-residual}, the \(i\)-th row of \(M_m-M_\mu\) is \(r\bar z_i^\top\), and the rows are independent centered Gaussian vectors with covariance \(r^2I_d/(n_i d)\). Hence \(M_m-M_\mu=D_d^{\mathrm e}H/\sqrt d\) for a standard Gaussian matrix \(H\) and a diagonal matrix satisfying \(\|D_d^{\mathrm e}\|_{\mathrm{op}}\le r_0/\sqrt{n_{\min}}\). Since \(k/d\to\alpha\) and \(n_{\min}\to\infty\),
\begin{equation}
\delta_d:=\|C_m-C_\mu\|_{\mathrm{op}}\le\|M_m-M_\mu\|_{\mathrm{op}}=o_{\mathbb P}(1).
\label{eq:app-population-polar-op-transfer}
\end{equation}

On the full reduced-rank event, \(\|Q_\mu\|_{\mathrm{op}}=1\), \(\operatorname{rank}(Q_\mu)=r_d\), and \(\langle Q_\mu,C_\mu\rangle=\|C_\mu\|_*\). Therefore
\begin{align}
0
&\le \|C_m\|_*-\langle Q_\mu,C_m\rangle \notag\\
&\le \bigl|\|C_m\|_*-\|C_\mu\|_*\bigr|+\bigl|\langle Q_\mu,C_m-C_\mu\rangle\bigr|
\le 2r_d\delta_d.
\label{eq:app-population-polar-deficit}
\end{align}
Write \(C_m=U_m\Sigma_mV_m^\top\), and set \(a_\ell:=1-u_{m,\ell}^\top Q_\mu v_{m,\ell}\). Then~\eqref{eq:app-population-polar-deficit} gives
\begin{equation}
\frac1{r_d}\sum_{\ell=1}^{r_d}\sigma_\ell(C_m)a_\ell\le2\delta_d.
\label{eq:app-population-polar-weighted-deficit}
\end{equation}

If \(\alpha\neq1\), Proposition~\ref{prop:app-det-spectral-input}, item~\ref{item:app-det-S5}, bounds every positive singular value of \(C_m\) away from zero, so Lemma~\ref{lem:app-det-trace-truncation} and~\eqref{eq:app-population-polar-weighted-deficit} yield \(r_d^{-1}\|Q_m-Q_\mu\|_{\mathrm F}^2=o_{\mathbb P}(1)\).

Suppose \(\alpha=1\). Choose a deterministic sequence \(\bar\delta_d\downarrow0\) such that \(\delta_d\le\bar\delta_d\) with probability tending to one, and set \(\tau_d:=\max\{\eta_d,\sqrt{\bar\delta_d}\}\), where \(\eta_d\) is the envelope in Proposition~\ref{prop:app-det-spectral-input}. For all sufficiently large \(d\), \(\tau_d\le t_0\), and item~\ref{item:app-det-S5} together with Lemma~\ref{lem:app-det-trace-truncation} gives
\[
\frac1{r_d}\|Q_m-Q_\mu\|_{\mathrm F}^2
\le 4C_0(\tau_d+\eta_d)+\frac{4\delta_d}{\tau_d}
=o_{\mathbb P}(1).
\]
This proves~\eqref{eq:app-population-polar-transfer}.
\end{proof}

\begin{proof}[Proof of Proposition~\ref{prop:limiting-geometry}]
We first establish the claimed geometry uniformly over the corresponding exact maximum-margin faces.

For the Row geometry, Proposition~\ref{prop:app-det-row-rigidity} and Proposition~\ref{prop:app-det-common-event}, item~\ref{item:app-det-H1}, give
\begin{equation}
\sup_{W\in\mathcal M_{\mathrm{row}}^X}\max_i\|w_i-\mu_i\|_2=o_{\mathbb P}(1).
\label{eq:app-limiting-row-population-rigidity}
\end{equation}
Since \(\min_i\|\mu_i\|_2=1-o_{\mathbb P}(1)\), Lemma~\ref{lem:app-det-normalization} yields
\begin{equation}
\sup_{W\in\mathcal M_{\mathrm{row}}^X}\max_i\|\widehat w_i-\widehat\mu_i\|_2=o_{\mathbb P}(1).
\label{eq:app-limiting-row-direction}
\end{equation}

For the Max geometry, put \(g:=\sqrt{2/\pi}\), let \(T_i:=\operatorname{sign}(\mu_i)\), and write \(t_{\max}:=r_{\max}^*(X)\). For \(W\in\mathcal M_{\max}^X\), set \(A:=W/t_{\max}\). The sharp radius estimate~\eqref{eq:app-det-sharp-radii} and the global sign rigidity~\eqref{eq:app-det-sharp-global-sign} imply
\begin{equation}
t_{\max}=\frac{1+o_{\mathbb P}(1)}{g\sqrt d},\qquad
\sup_{W\in\mathcal M_{\max}^X}\frac{\|A-T\|_{\mathrm F}^2}{kd}=o_{\mathbb P}(1),
\label{eq:app-limiting-max-global}
\end{equation}
where \(T\) has rows \(T_i^\top\). Let \(\bar A:=k^{-1}\sum_iA_i\) and \(\bar T:=k^{-1}\sum_iT_i\). Jensen's inequality,~\eqref{eq:app-limiting-max-global}, and \(\mathbb E\|\bar T\|_2^2=d/k=O(1)\) give \(\sup_W\|\bar A\|_2/\sqrt d=o_{\mathbb P}(1)\). The class-averaged feasibility constraint~\eqref{eq:app-det-row-centering-constraint} then gives, uniformly over \(i\) and \(W\in\mathcal M_{\max}^X\),
\[
A_i^\top\mu_i\ge g\sqrt d-o_{\mathbb P}(\sqrt d).
\]
Together with \(\max_i|\|\mu_i\|_1/\sqrt d-g|=o_{\mathbb P}(1)\) from~\eqref{eq:app-det-sharp-gaussian-inputs}, this implies \(D_i:=\|\mu_i\|_1-A_i^\top\mu_i=o_{\mathbb P}(\sqrt d)\) uniformly. Substituting this bound into~\eqref{eq:app-det-sharp-row-split} and using the uniform Gaussian small-ball estimate~\eqref{eq:app-det-sharp-small-ball} yields
\begin{equation}
\sup_{W\in\mathcal M_{\max}^X}\max_i\frac1d\|A_i-T_i\|_2^2=o_{\mathbb P}(1).
\label{eq:app-limiting-max-row-rigidity}
\end{equation}
Since \(\|T_i\|_2=\sqrt d\), normalization stability gives
\begin{equation}
\sup_{W\in\mathcal M_{\max}^X}\max_i\left\|\widehat w_i-\frac{\operatorname{sign}(\mu_i)}{\sqrt d}\right\|_2=o_{\mathbb P}(1).
\label{eq:app-limiting-max-direction}
\end{equation}

For the Spectral geometry, let \(B:=PW/\|PW\|_{\mathrm{op}}\). Combining~\eqref{eq:app-det-spectral-frobenius-rigidity} with Lemma~\ref{lem:app-population-polar-transfer} gives
\begin{equation}
\sup_{W\in\mathcal M_{\mathrm{sp}}^X}\frac1{r_d}\|B-Q_\mu\|_{\mathrm F}^2=o_{\mathbb P}(1).
\label{eq:app-limiting-spectral-population-rigidity}
\end{equation}
This proves the population-mean version of the spectral geometry claimed in the proposition.

It remains to identify the mean angles when \(\alpha>1\). For Row,~\eqref{eq:app-limiting-row-direction} immediately gives a limiting angle of zero. For Max,~\eqref{eq:app-limiting-max-direction}, \(\max_i|\|\mu_i\|_2-1|=o_{\mathbb P}(1)\), and~\eqref{eq:app-det-sharp-gaussian-inputs} give
\begin{equation}
\sup_{W\in\mathcal M_{\max}^X}\max_i\left|
\frac{w_i^\top\mu_i}{\|w_i\|_2\|\mu_i\|_2}
-\sqrt{\frac2\pi}
\right|=o_{\mathbb P}(1).
\label{eq:app-limiting-max-angle-cosine}
\end{equation}

For Spectral, \(\alpha>1\) implies \(r_d=d\). Since \(C_\mu\) is a centered Gaussian matrix, the left singular frame \(U_\mu\) is Haar in \(\mathbf1^\perp\) and independent of its singular values. Haar-projector concentration and the Marchenko--Pastur limit therefore give
\begin{equation}
\max_i\left|\|q_i\|_2^2-\frac1\alpha\right|=o_{\mathbb P}(1),\qquad
\max_i\left|q_i^\top\mu_i-s_\alpha\right|=o_{\mathbb P}(1).
\label{eq:app-limiting-population-polar-row}
\end{equation}
Indeed, if \(S_\mu:=Q_\mu C_\mu^\top=U_\mu\Sigma_\mu U_\mu^\top\), conditional Haar concentration gives \(\max_i|S_\mu[i,i]-\|C_\mu\|_*/k|=o_{\mathbb P}(1)\), while \(\|C_\mu\|_*/k\to s_\alpha\) and \(q_i^\top(\mu_i-\bar\mu)=S_\mu[i,i]\) with \(\|\bar\mu\|_2=o_{\mathbb P}(1)\). Since \(s_\alpha=m_{1/2}(\alpha)/\sqrt\alpha\) and \(\kappa_{\mathrm{sp}}(\alpha)=m_{1/2}(\alpha)\) for \(\alpha>1\),~\eqref{eq:app-limiting-population-polar-row} yields
\begin{equation}
\max_i\left|
\frac{q_i^\top\mu_i}{\|q_i\|_2\|\mu_i\|_2}
-\kappa_{\mathrm{sp}}(\alpha)
\right|=o_{\mathbb P}(1).
\label{eq:app-limiting-population-polar-cosine}
\end{equation}

We now transfer~\eqref{eq:app-limiting-population-polar-cosine} from \(Q_\mu\) to arbitrary exact Spectral minimizers. Fix \(W\in\mathcal M_{\mathrm{sp}}^X\), set \(a:=\|W\|_{\mathrm{op}}\), and let \(\bar w:=k^{-1}\sum_iw_i\) and \(h:=\bar w/a\). Centering preserves every margin and cannot increase the operator norm, so exact optimality gives \(\|PW\|_{\mathrm{op}}=\|W\|_{\mathrm{op}}=a\). Hence, with \(B:=PW/a\),
\begin{equation}
\frac Wa=B+\mathbf1h^\top,\qquad \mathbf1^\top B={\mathbf 0}^\top.
\label{eq:app-limiting-spectral-shift}
\end{equation}
Because \(\|Q_\mu\|_{\mathrm F}^2=d\),~\eqref{eq:app-limiting-spectral-population-rigidity} gives \(\|B\|_{\mathrm F}^2=d-o_{\mathbb P}(d)\) uniformly over \(\mathcal M_{\mathrm{sp}}^X\). The two terms in~\eqref{eq:app-limiting-spectral-shift} are Frobenius-orthogonal, while \(\|W/a\|_{\mathrm{op}}=1\) implies \(\|W/a\|_{\mathrm F}^2\le d\). Therefore
\[
d\ge\left\|\frac Wa\right\|_{\mathrm F}^2=\|B\|_{\mathrm F}^2+k\|h\|_2^2,
\]
and hence
\begin{equation}
\sup_{W\in\mathcal M_{\mathrm{sp}}^X}\|h\|_2=o_{\mathbb P}(1).
\label{eq:app-limiting-spectral-shift-small}
\end{equation}
Consequently,
\begin{equation}
\sup_{W\in\mathcal M_{\mathrm{sp}}^X}\frac1k\sum_{i=1}^k\left\|\frac{w_i}{a}-q_i\right\|_2^2
=
\sup_{W\in\mathcal M_{\mathrm{sp}}^X}\left\{\frac1k\|B-Q_\mu\|_{\mathrm F}^2+\|h\|_2^2\right\}
=o_{\mathbb P}(1).
\label{eq:app-limiting-spectral-row-average}
\end{equation}
Moreover, every spectral row is nonzero with probability tending to one: if \(w_i={\mathbf 0}\), then~\eqref{eq:app-det-row-centering-constraint} gives \(-ah^\top m_i\ge1-1/k\), contradicting \(a=O_{\mathbb P}(1)\),~\eqref{eq:app-limiting-spectral-shift-small}, and \(\max_i\|m_i\|_2=O_{\mathbb P}(1)\). Since~\eqref{eq:app-limiting-population-polar-row} bounds \(\min_i\|q_i\|_2\) away from zero,~\eqref{eq:app-limiting-spectral-row-average} and normalization stability imply that, for every fixed \(\varepsilon>0\),
\begin{equation}
\sup_{W\in\mathcal M_{\mathrm{sp}}^X}\frac1k\#\left\{i:
\left|
\arccos\!\left(\frac{w_i^\top\mu_i}{\|w_i\|_2\|\mu_i\|_2}\right)
-\arccos\kappa_{\mathrm{sp}}(\alpha)
\right|>\varepsilon
\right\}
\xrightarrow{\mathbb P}0.
\label{eq:app-limiting-spectral-angle-fraction}
\end{equation}
Because every angle lies in \([0,\pi]\),~\eqref{eq:app-limiting-spectral-angle-fraction} implies
\begin{equation}
\sup_{W\in\mathcal M_{\mathrm{sp}}^X}\left|
\frac1k\sum_{i=1}^k
\arccos\!\left(\frac{w_i^\top\mu_i}{\|w_i\|_2\|\mu_i\|_2}\right)
-\arccos\kappa_{\mathrm{sp}}(\alpha)
\right|
\xrightarrow{\mathbb P}0.
\label{eq:app-limiting-spectral-mean-angle}
\end{equation}

Finally,~\eqref{eq:minmax-equivalence-clean} identifies each maximum-margin direction set with the positive rays through the corresponding exact minimum-norm face. The full-batch implicit-bias results used in Theorem~\ref{thm:det-optimizer-accuracy-order-clean}, together with Lemma~\ref{lem:exact-margin-direction-convergence}, give direction-set convergence to \(\mathcal U_{\mathrm{row}}^X\) for RowGD and RowGDM, to \(\mathcal U_{\mathrm{sp}}^X\) for SpecGD and Muon, and to \(\mathcal U_{\max}^X\) for SignGD, Signum, and Adam. At each fixed \(d\), the normalized row directions, the centered spectral representative \(PW/\|PW\|_{\mathrm{op}}\), and the mean class--row angle are continuous on the corresponding compact direction sets on the high-probability events established above. Taking \(t\to\infty\) first and then \(d\to\infty\) therefore transfers~\eqref{eq:app-limiting-row-direction},~\eqref{eq:app-limiting-max-direction},~\eqref{eq:app-limiting-spectral-population-rigidity}, and the three angle limits to the optimizer trajectories, proving Proposition~\ref{prop:limiting-geometry}.
\end{proof}

\subsection{Quantitative Population-error Exponents for Deterministic Geometries}
\label{app:det-quantitative-errors}

Because all three accuracies may converge to one, a strict accuracy ordering can still correspond to an exponentially large separation in population error. To quantify this separation, define the directed-pair exponent and normalized log error by
\begin{equation}
\mathsf E_d(W):=\min_{i\ne j}\bigl[(u_{ij}^W)^\top\mu_i\bigr]_+^2,\qquad \mathsf F_d(W):=-\frac{2r_{\mathrm{te}}^2}{d}\log\mathcal E_{\pi,d}(W).
\label{eq:app-det-error-exponents}
\end{equation}
Both are invariant under positive rescaling and common row shifts.

\begin{lemma}[Directed-pair exponent controls multiclass error]
\label{lem:app-det-exponent-reduction}
For every classifier with nonzero pairwise row differences, set \(I_d(W):=d\mathsf E_d(W)/(2r_{\mathrm{te}}^2)\) and \(C_0:=3/2+\frac12\log(2\pi)\). Then
\begin{equation}
\max\{0,I_d(W)-\log(k-1)\}\le-\log\mathcal E_{\pi,d}(W)\le I_d(W)+\log(1/\pi_{\min,d})+\log(1+\sqrt{2I_d(W)})+C_0.
\label{eq:app-det-exponent-reduction}
\end{equation}
Consequently, uniformly over any classifier family on which \(\mathsf E_d(W)=O(1)\), one has \(\mathsf F_d(W)=\mathsf E_d(W)+o(1)\) under Assumption~\ref{ass:asymptotic-regime}.
\end{lemma}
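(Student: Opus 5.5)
The proof works pair by pair and compares each directed pairwise error with a Gaussian tail, in the normalized convention \(\sigma_d=r_{\mathrm{te}}/\sqrt d\). For a directed pair \(i\to j\), Lemma~\ref{lem:app-acc-pairwise-gaussian} gives exactly
\[
p_{i\to j}(W)=\overline\Phi\bigl((u_{ij}^W)^\top\mu_i/\sigma_d\bigr).
\]
Writing \(a_{ij}:=(u_{ij}^W)^\top\mu_i\), one has \(\mathsf E_d(W)=\min_{i\ne j}(a_{ij})_+^2\), and therefore
\[
I_d(W)=\min_{i\ne j}\frac{(a_{ij})_+^2}{2\sigma_d^2}.
\]
The two sides of \eqref{eq:app-det-exponent-reduction} then reduce to standard Gaussian tail estimates combined with the sandwich \eqref{eq:app-acc-weighted-sandwich}.

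\emph{Upper bound on the error (left inequality).} Conditionally on \(Y=i\), the union bound gives \(\mathcal E_{\pi,d}\le\sum_i\pi_{i,d}\sum_{j\ne i}p_{i\to j}\). If some \(a_{ij}\le0\), then \(I_d=0\) and the bound reads \(0\le-\log\mathcal E\), which is trivial since \(\mathcal E\le1\). Otherwise every \(a_{ij}>0\), and the Chernoff bound \(\overline\Phi(x)\le e^{-x^2/2}\) gives \(p_{i\to j}\le e^{-I_d}\) for every directed pair. Hence \(\mathcal E\le(k-1)e^{-I_d}\), and combining with \(\mathcal E\le1\) yields the left inequality.

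\emph{Lower bound on the error (right inequality).} Pick a minimizing directed pair \((i,j)\), so that \((a_{ij})_+^2/(2\sigma_d^2)=I_d\). The lower half of \eqref{eq:app-acc-weighted-sandwich} gives \(\mathcal E\ge\pi_{\min,d}\,p_{i\to j}=\pi_{\min,d}\,\overline\Phi(a_{ij}/\sigma_d)\). If \(a_{ij}\le0\), this probability is at least \(1/2\), and the bound follows because \(\log 2\le C_0\). If \(a_{ij}>0\), set \(x=\sqrt{2I_d}\) and use the Mills lower bound
\[
\overline\Phi(x)\ge\frac{e^{-x^2/2}}{\sqrt{2\pi}}\cdot\frac{x}{1+x^2}.
\]
For \(x\ge1\) the factor \(x/(1+x^2)\) is at least \(1/(2x)\), which is at least \(1/(2(1+x))\). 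For \(x\le1\), instead use \(\overline\Phi(x)\ge\overline\Phi(1)\), which is at least \(e^{-1}/(\sqrt{2\pi}\cdot2)\), so a bound of the form \(e^{-x^2/2-C_0}/(1+x)\) holds in both regimes. Taking logarithms gives the right inequality with \(C_0=3/2+\tfrac12\log(2\pi)\) (\(\log2\le1\) absorbs the constants). I expect the only fiddly step to be fitting this constant exactly, which needs a little care near \(x\to0\).

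\emph{Consequence.} Multiply \eqref{eq:app-det-exponent-reduction} by \(2r_{\mathrm{te}}^2/d\), so that \(2r_{\mathrm{te}}^2I_d/d=\mathsf E_d\). The correction terms become
\[
\frac{2r_{\mathrm{te}}^2}{d}\bigl[\log(k-1)+\log(1/\pi_{\min,d})+\log(1+\sqrt{2I_d})+C_0\bigr].
\]
Each vanishes uniformly when \(\mathsf E_d=O(1)\), by Assumption~\ref{ass:asymptotic-regime}:
\begin{itemize}
\item \(\log k=O(\log d)\);
\item \(\log(1/\pi_{\min,d})=o(d)\);
\item \(r_{\mathrm{te}}\le r_{\mathrm{te},0}\);
\item \(I_d=O(d/r_{\mathrm{te}}^2)\), so that \(\tfrac{r_{\mathrm{te}}^2}{d}\log(1+\sqrt{2I_d})\le\tfrac{r_{\mathrm{te}}^2}{d}\log\bigl(1+C\sqrt d/r_{\mathrm{te}}\bigr)\to0\). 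For small \(r_{\mathrm{te}}\), this uses that \(t^2\log(1/t)\to0\).
\end{itemize}
The left side also carries a \(\max\{0,\cdot\}\), which is harmless since \(\mathsf E_d\ge0\). Together these give \(\mathsf F_d=\mathsf E_d+o(1)\) uniformly over the family.
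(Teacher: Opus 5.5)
Your proposal is correct and is essentially the paper's proof: the union bound with the Chernoff tail gives the left inequality, a single minimizing directed pair with a Mills-type lower tail gives the right inequality, and rescaling by \(2r_{\mathrm{te}}^2/d\) gives the consequence. The one slip is numerical and easy to fix: for \(0<x\le1\), your bound \(\overline\Phi(1)\ge e^{-1}/(2\sqrt{2\pi})\approx0.073\) falls below \(e^{-C_0}\approx0.089\), so it does not give the stated \(C_0\) as \(x\to0\); use the Mills bound at \(x=1\) instead, \(\overline\Phi(1)\ge e^{-1/2}/(2\sqrt{2\pi})\), and then \(\tfrac12+\log2\le\tfrac32\) yields exactly \(C_0\).
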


\begin{proof}
For a directed pair, the exact error is \(\overline\Phi(\sqrt d\,(u_{ij}^W)^\top\mu_i/r_{\mathrm{te}})\). Let \(t=\sqrt{2I_d(W)}\). The Gaussian tail bounds
\[
\frac{e^{-C_0}}{1+t}e^{-t^2/2}\le\overline\Phi(t)\le\frac12e^{-t^2/2}
\]
show that one directed pair contributes at least \(\pi_{\min,d}e^{-C_0-I_d(W)}/(1+\sqrt{2I_d(W)})\) to the multiclass error, while a classwise union bound gives \(\mathcal E_{\pi,d}(W)\le(k-1)e^{-I_d(W)}\) when all directed population margins are nonnegative; if one is negative then \(I_d(W)=0\), and the same logarithmic lower bound is trivial. This proves \eqref{eq:app-det-exponent-reduction}. Multiplying by \(2r_{\mathrm{te}}^2/d\) and using \(k=O(d)\), \(\log(1/\pi_{\min,d})=o(d)\), and \(r_{\mathrm{te}}\le r_{\mathrm{te},0}\) proves the uniform \(o(1)\) reduction.
\end{proof}

Let \(g:=\sqrt{2/\pi}\). The following sharpening is needed only for the quantitative statement; the qualitative accuracy theorem above already follows from the angular gaps in Theorem~\ref{thm:deterministic-separation-clean}.

\begin{proposition}[Sharp directed signals on the Max and Spectral faces]
\label{prop:app-det-sharp-directed-signals}
Under Assumptions~\ref{ass:gaussian-cloud} and~\ref{ass:asymptotic-regime}, in probability,
\begin{equation}
r_{\max}^*(X)=\frac{1+o(1)}{g\sqrt d},\qquad r_{\mathrm{sp}}^*(X)=\frac{1+o(1)}{s_\alpha}.
\label{eq:app-det-sharp-radii}
\end{equation}
Moreover, uniformly over the complete exact faces,
\begin{align}
\sup_{W\in\mathcal M_{\max}^X}\max_{i\ne j}\left|(u_{ij}^W)^\top\mu_i-\frac1{\sqrt\pi}\right|&=o_{\mathbb P}(1),
\label{eq:app-det-max-sharp-signal}\\
\inf_{W\in\mathcal M_{\mathrm{sp}}^X}\min_{i\ne j}(u_{ij}^W)^\top\mu_i&\ge\frac{s_\alpha}{\sqrt2}-o_{\mathbb P}(1).
\label{eq:app-det-spectral-sharp-signal}
\end{align}
\end{proposition}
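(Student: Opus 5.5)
The plan is to prove the two radius estimates first, by matching a feasible witness against a summed-constraint lower bound. Then I read off the directed signals from the near-tightness of those summed constraints. Throughout I work on the intersection of the events of Propositions~\ref{prop:app-det-common-event} and~\ref{prop:app-det-spectral-input}, under the normalized convention~\eqref{eq:app-normalized-data}.

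\emph{Radii.} For the max norm, the upper bound is exactly \ref{item:app-det-H5}, since \(\sqrt{2\pi}/2=1/g\).

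For the lower bound, take \(W\in\mathcal M_{\max}^X\) with \(t:=\|W\|_{\max}=r_{\max}^*(X)\). Summing the centering constraint~\eqref{eq:app-det-row-centering-constraint} over \(i\) gives
\[
k-1\le\sum_i(w_i-\bar w)^\top m_i=\sum_iw_i^\top(m_i-\bar m)\le t\sum_i\|m_i-\bar m\|_1 .
\]
By Bernstein's inequality, \(\|m_i\|_1=(g+o(1))\sqrt d\) uniformly in \(i\). By~\eqref{eq:app-det-spectral-mean-inputs}, \(\|\bar m\|_1\le\sqrt d\,\|\bar m\|_2=o(\sqrt d)\). Together these give \(t\ge(1-o(1))/(g\sqrt d)\).

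For the spectral norm, the upper bound is the polar witness \(r_{\mathrm{sp}}^*\le\rho_X^{-1}\), combined with \ref{item:app-det-S3}. For the lower bound, I use~\eqref{eq:app-det-spectral-dual-lower} and trace duality:
\[
k-1\le\langle PW,C_m\rangle\le\|W\|_{\mathrm{op}}\|C_m\|_* .
\]
Then \ref{item:app-det-S1} gives \(\|C_m\|_*=k(s_\alpha+o(1))\), which yields \(r_{\mathrm{sp}}^*\ge(1-o(1))/s_\alpha\).

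\emph{Max face.} Set \(A:=W/t\), so \(\|A\|_{\max}\le1\), and let \(T_i:=\operatorname{sign}(m_i)\). The main obstacle is showing that each row \(A_i\) is close to \(T_i\) uniformly in \(i\), and not only on average. I would proceed in three steps.

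First, the global step. The summed chain above is tight up to \(o(kd^{1/2})\), so the average deficit \(D_i:=\|m_i\|_1-A_i^\top m_i\ge0\) is \(o(\sqrt d)\). This gives \(\|A-T\|_{\mathrm F}^2=o(kd)\).

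Second, control of the common shift. Jensen's inequality gives \(\|\bar A-\bar T\|_2=o(\sqrt d)\). Since the signs \(T_i\) are i.i.d. uniform, \(\|\bar T\|_2=O(\sqrt{d/k})=O(1)\). Hence \(|\bar A^\top m_i|=o(\sqrt d)\) uniformly in \(i\).

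Third, the per-row step. Feeding this into the per-class constraint~\eqref{eq:app-det-row-centering-constraint}, divided by \(t\), gives \(A_i^\top m_i\ge g\sqrt d-o(\sqrt d)\), i.e.\ \(D_i=o(\sqrt d)\) uniformly. Now write
\[
D_i=\sum_\ell|m_{i\ell}|\bigl(1-T_{i\ell}A_{i\ell}\bigr),
\]
with every summand nonnegative. I would prove a uniform Gaussian small-ball bound: at most \(C\epsilon d\) coordinates have \(|m_{i\ell}|\le\epsilon/\sqrt d\), failing with stretched-exponential probability so that a union bound over classes is affordable. This bound gives
\[
\|A_i-T_i\|_2^2\le2\sum_\ell\bigl(1-T_{i\ell}A_{i\ell}\bigr)\le C\epsilon d+2\sqrt d\,D_i/\epsilon .
\]
Letting \(\epsilon\to0\) slowly yields \(\|A_i-T_i\|_2^2=o(d)\) uniformly in \(i\).

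Finally, \(\|T_i-T_j\|_2=(\sqrt2+o(1))\sqrt d\) and \(|T_j^\top\mu_i|=o(\sqrt d)\) hold uniformly by~\eqref{eq:app-det-sign-inputs}. Also \(T_i^\top\mu_i=\|m_i\|_1+o(\sqrt d)\) by \ref{item:app-det-H1}. Normalization stability then gives \((u_{ij}^W)^\top\mu_i\to g/\sqrt2=1/\sqrt\pi\) uniformly. The intermediate bounds produced here are exactly the displays later cited as~\eqref{eq:app-det-sharp-row-split}, \eqref{eq:app-det-sharp-small-ball}, \eqref{eq:app-det-sharp-global-sign}, and~\eqref{eq:app-det-sharp-gaussian-inputs}.

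\emph{Spectral face.} This case is easy because only a lower bound is needed. Let \(a:=\|W\|_{\mathrm{op}}\) and \(B:=PW/a\). Row differences are shift invariant, so \(b_i-b_j=(w_i-w_j)/a\). Averaging the constraints of class \(i\) gives
\[
(b_i-b_j)^\top m_i\ge1/a\ge s_\alpha(1-o(1)),
\]
where the last step uses the radius estimate. Since \(\|B\|_{\mathrm{op}}\le1\), we have \(\|b_i-b_j\|_2=\|B^\top(e_i-e_j)\|_2\le\sqrt2\). Hence \((u_{ij}^W)^\top m_i\ge s_\alpha/\sqrt2-o(1)\). Replacing \(m_i\) by \(\mu_i\) costs at most \(\max_i\|m_i-\mu_i\|_2=o(1)\) by \ref{item:app-det-H1}, uniformly over the face and all ordered pairs.
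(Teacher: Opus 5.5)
Your proposal is correct and follows essentially the paper's route: witness-plus-duality for both radii; for the Max face, a global Frobenius sign rigidity, then Jensen control of the average row using \(\|\bar T\|_2=O(1)\), then a per-class deficiency bound and a per-row small-ball split; and for the Spectral face, the class-averaged constraint together with \(\|w_i-w_j\|_2\le\sqrt2\,\|W\|_{\mathrm{op}}\). The only differences are minor and do not affect the statement. First, you work with \(m_i\) and \(\operatorname{sign}(m_i)\) directly, disposing of \(k\bar A^\top\bar m\) crudely, rather than passing through \(C_\mu\) and \(\operatorname{sign}(C_\mu)\) to \(\operatorname{sign}(\mu_i)\); your intermediate bounds are therefore the \(m_i\)-analogues of the displays cited later, not literally those displays. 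Second, \(\|T_i-T_j\|_2=(\sqrt2+o(1))\sqrt d\) is not contained in \eqref{eq:app-det-sign-inputs} and needs its own one-line Hoeffding bound on \(T_i^\top T_j\).
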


\begin{proof}
Let \(M_\mu\) have rows \(\mu_i^\top\), let \(C_\mu:=PM_\mu\), and let \(T:=\operatorname{sign}(M_\mu)\). Standard Gaussian concentration, uniformly over the \(k=\Theta(d)\) classes, gives
\begin{equation}
\frac{\|C_\mu\|_{\mathrm{sum}}}{k\sqrt d}=g+o_{\mathbb P}(1),\qquad \max_i\left|\frac{\|\mu_i\|_1}{\sqrt d}-g\right|=o_{\mathbb P}(1),\qquad \max_{i\ne j}\frac{|T_j^\top\mu_i|}{\sqrt d}=o_{\mathbb P}(1),
\label{eq:app-det-sharp-gaussian-inputs}
\end{equation}
and, for every fixed \(a>0\),
\begin{equation}
\frac1{kd}\#\{(i,\ell):\sqrt d\,|C_\mu[i,\ell]|\le a\}\xrightarrow{\mathbb P}\mathbb P(|G|\le a),\qquad \max_i\frac1d\#\{\ell:\sqrt d\,|\mu_i[\ell]|\le a\}\le\mathbb P(|G|\le a)+o_{\mathbb P}(1).
\label{eq:app-det-sharp-small-ball}
\end{equation}
Also \(\|C_m-C_\mu\|_{\mathrm{sum}}=o_{\mathbb P}(k\sqrt d)\) by Proposition~\ref{prop:app-det-common-event}. Every feasible \(W\) satisfies the averaged exposure inequality \(\langle W,C_m\rangle\ge k-1\). Max--sum duality and \eqref{eq:app-det-sharp-gaussian-inputs} therefore give \(r_{\max}^*(X)\ge(1+o_{\mathbb P}(1))/(g\sqrt d)\), while Proposition~\ref{prop:app-det-common-event}, item~\ref{item:app-det-H5} gives the matching upper bound. Similarly, operator--nuclear duality and Proposition~\ref{prop:app-det-spectral-input}, item~\ref{item:app-det-S1} give \(r_{\mathrm{sp}}^*(X)\ge(1+o_{\mathbb P}(1))/s_\alpha\), while the polar witness \(Q_m/\rho_X\) and Proposition~\ref{prop:app-det-spectral-input}, item~\ref{item:app-det-S3} give the matching upper bound. This proves \eqref{eq:app-det-sharp-radii}.

Fix \(W\in\mathcal M_{\max}^X\) and set \(A:=W/r_{\max}^*(X)\), so \(\|A\|_{\max}=1\). The exposure inequality and the exact Max radius imply
\begin{equation}
\langle A,C_\mu\rangle\ge\|C_\mu\|_{\mathrm{sum}}-o_{\mathbb P}(k\sqrt d)
\label{eq:app-det-sharp-max-exposure}
\end{equation}
uniformly over the complete Max face. Put \(S:=\operatorname{sign}(C_\mu)\) and \(\Delta_{i\ell}:=1-A[i,\ell]S[i,\ell]\in[0,2]\). Since \((A[i,\ell]-S[i,\ell])^2=\Delta_{i\ell}^2\le2\Delta_{i\ell}\), for every fixed \(a>0\),
\begin{equation}
\frac{\|A-S\|_{\mathrm F}^2}{kd}\le\frac{2\sqrt d}{akd}\left(\|C_\mu\|_{\mathrm{sum}}-\langle A,C_\mu\rangle\right)+\frac4{kd}\#\{(i,\ell):\sqrt d\,|C_\mu[i,\ell]|\le a\}.
\label{eq:app-det-sharp-global-split}
\end{equation}
First sending \(d\to\infty\) and then \(a\downarrow0\) gives \(\sup_W\|A-S\|_{\mathrm F}^2/(kd)=o_{\mathbb P}(1)\). Since \(C_\mu[i,\ell]=\mu_i[\ell]-\bar\mu[\ell]\), where \(\bar\mu=k^{-1}\sum_i\mu_i\) and \(\|\bar\mu\|_2=o_{\mathbb P}(1)\), the same small-ball argument transfers \(S\) to \(T\), yielding
\begin{equation}
\sup_{W\in\mathcal M_{\max}^X}\frac{\|A-T\|_{\mathrm F}^2}{kd}=o_{\mathbb P}(1).
\label{eq:app-det-sharp-global-sign}
\end{equation}
Let \(\bar A=k^{-1}\sum_iA_i\) and \(\bar T=k^{-1}\sum_iT_i\). Jensen's inequality and \eqref{eq:app-det-sharp-global-sign}, together with \(\mathbb E\|\bar T\|_2^2=d/k=O(1)\), imply \(\sup_W\|\bar A\|_2/\sqrt d=o_{\mathbb P}(1)\). Averaging the class-\(i\) constraints over its samples and over all competitors gives
\[
(A_i-\bar A)^\top m_i\ge\frac{k-1}{k\,r_{\max}^*(X)}=g\sqrt d-o_{\mathbb P}(\sqrt d).
\]
Proposition~\ref{prop:app-det-common-event}, item~\ref{item:app-det-H1}, the bound \(\|A_i-\bar A\|_2\le2\sqrt d\), and the preceding control of \(\bar A\) therefore imply \(A_i^\top\mu_i\ge g\sqrt d-o_{\mathbb P}(\sqrt d)\) uniformly over \(i\) and the complete face. The nonnegative deficiency \(D_i:=\|\mu_i\|_1-A_i^\top\mu_i\) is thus \(o_{\mathbb P}(\sqrt d)\) uniformly. For every fixed \(a>0\),
\begin{equation}
\frac1d\|A_i-T_i\|_2^2\le\frac{2D_i}{a\sqrt d}+\frac4d\#\{\ell:\sqrt d\,|\mu_i[\ell]|\le a\}.
\label{eq:app-det-sharp-row-split}
\end{equation}
Using \eqref{eq:app-det-sharp-small-ball} and then sending \(a\downarrow0\) proves \(\sup_W\max_i\|A_i-T_i\|_2/\sqrt d=o_{\mathbb P}(1)\). Finally, uniformly over \(i\ne j\), \((T_i-T_j)^\top\mu_i=g\sqrt d+o_{\mathbb P}(\sqrt d)\) and \(\|T_i-T_j\|_2=\sqrt{2d}+o_{\mathbb P}(\sqrt d)\), which proves \eqref{eq:app-det-max-sharp-signal}.

For Spectral, class-average feasibility gives \((w_i-w_j)^\top m_i\ge1\), while \(\|w_i-w_j\|_2\le\sqrt2\|W\|_{\mathrm{op}}=\sqrt2\,r_{\mathrm{sp}}^*(X)\). Proposition~\ref{prop:app-det-common-event}, item~\ref{item:app-det-H1} and \eqref{eq:app-det-sharp-radii} therefore give \((u_{ij}^W)^\top\mu_i\ge s_\alpha/\sqrt2-o_{\mathbb P}(1)\) uniformly, proving \eqref{eq:app-det-spectral-sharp-signal}.
\end{proof}

\begin{proposition}[Deterministic population-error exponents]
\label{prop:app-det-error-exponents}
Under Assumptions~\ref{ass:gaussian-cloud} and~\ref{ass:asymptotic-regime}, uniformly over the indicated complete maximum-margin direction sets,
\begin{align}
\sup_{R\in\mathcal U_{\mathrm{row}}^X}\left|\mathsf F_d(R)-\frac12\right|&=o_{\mathbb P}(1),
\label{eq:app-det-row-exponent}\\
\sup_{M\in\mathcal U_{\max}^X}\left|\mathsf F_d(M)-\frac1\pi\right|&=o_{\mathbb P}(1),
\label{eq:app-det-max-exponent}\\
\frac{s_\alpha^2}{2}-o_{\mathbb P}(1)\le\mathsf F_d(S)&\le\frac{\kappa_{\mathrm{sp}}(\alpha)^2}{2}+o_{\mathbb P}(1),\qquad S\in\mathcal U_{\mathrm{sp}}^X.
\label{eq:app-det-spectral-exponent}
\end{align}
Consequently, for \(0<\alpha\le1\), the spectral interval collapses to \(\kappa_{\mathrm{sp}}(\alpha)^2/2\); at \(\alpha=1\), this value is \(32/(9\pi^2)\). Uniformly over \(R\in\mathcal U_{\mathrm{row}}^X\), \(M\in\mathcal U_{\max}^X\), and \(S\in\mathcal U_{\mathrm{sp}}^X\),
\begin{equation}
\log\frac{\mathcal E_{\pi,d}(M)}{\mathcal E_{\pi,d}(R)}=\left(\frac{\pi-2}{4\pi}+o_{\mathbb P}(1)\right)\frac d{r_{\mathrm{te}}^2},
\label{eq:app-det-row-max-error-gain}
\end{equation}
\begin{equation}
\frac{1-\kappa_{\mathrm{sp}}(\alpha)^2}{2}-o_{\mathbb P}(1)\le\frac{2r_{\mathrm{te}}^2}{d}\log\frac{\mathcal E_{\pi,d}(S)}{\mathcal E_{\pi,d}(R)}\le\frac{1-s_\alpha^2}{2}+o_{\mathbb P}(1).
\label{eq:app-det-row-spectral-error-gain}
\end{equation}
The Row--Spectral gain is exact when \(\alpha\le1\).
\end{proposition}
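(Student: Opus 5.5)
By Lemma~\ref{lem:app-det-exponent-reduction}, it suffices to compute the directed-pair exponent \(\mathsf E_d(W)=\min_{i\ne j}[(u_{ij}^W)^\top\mu_i]_+^2\) uniformly over each face. That lemma gives \(\mathsf F_d=\mathsf E_d+o(1)\) uniformly on any family where \(\mathsf E_d=O(1)\). Each directed signal is at most \(\|\mu_i\|_2=1+o_{\mathbb P}(1)\), so this boundedness is automatic. Since \(\mathsf E_d\) and \(\mathsf F_d\) are invariant under positive rescaling, Lemma~\ref{lem:app-det-ray-reduction} lets me work with the exact minimum-norm sets \(\mathcal M_\psi^X\) in place of \(\mathcal U_\psi^X\).

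For Row, I use \eqref{eq:det-row-uniform-clean}: \(u_{ij}^W=u_{ij}^\mu+o(1)\) uniformly. Hence \((u_{ij}^W)^\top\mu_i=(u_{ij}^\mu)^\top\mu_i+o(1)\). Item~\ref{item:app-det-H2} (equivalently, Lemma~\ref{lem:app-acc-gaussian-radial-event}) gives \((\mu_i-\mu_j)^\top\mu_i=1+o(1)\) and \(\|\mu_i-\mu_j\|_2=\sqrt2+o(1)\). The signal is therefore \(1/\sqrt2+o(1)\) uniformly, which yields \(\mathsf E_d\to1/2\) and proves \eqref{eq:app-det-row-exponent}.

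For Max, \eqref{eq:app-det-max-sharp-signal} gives signals \(1/\sqrt\pi+o(1)\) uniformly, so \(\mathsf E_d\to1/\pi\).

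For Spectral, the lower bound \(\mathsf F_d\ge s_\alpha^2/2-o(1)\) is immediate from \eqref{eq:app-det-spectral-sharp-signal}. The upper bound is the main obstacle, because \(\mathsf E_d\) is a minimum over pairs, while Proposition~\ref{prop:app-det-spectral-rigidity} only controls pairwise directions on average. I will use the average-to-minimum step: the minimum is at most the average of \((u_{ij}^W)^\top\mu_i\) over ordered pairs. I write this average as \(\frac12\) times the average of \((u_{ij}^W)^\top(\mu_i-\mu_j)\), using the symmetry \(u_{ji}=-u_{ij}\). Combining \eqref{eq:app-det-near-spectral-bound} at \(\xi=0\) with \eqref{eq:app-det-polar-cosine} and the closeness of \(u_{ij}^\mu\) to \(u_{ij}^m\) from item~\ref{item:app-det-H4}, the averaged cosine between \(u_{ij}^W\) and \(u_{ij}^\mu\) is \(\kappa_{\mathrm{sp}}(\alpha)+o(1)\) in averaged \(L^1\). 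More precisely, I redo \eqref{eq:app-det-polar-cosine} with \(B\) in place of \(Q_m\) via Lemma~\ref{lem:app-det-average-normalization}, because cosines are Lipschitz in the normalized directions. Since \(\|\mu_i-\mu_j\|_2=\sqrt2+o(1)\), the average signal is at most \(\kappa_{\mathrm{sp}}(\alpha)/\sqrt2+o(1)\), so \(\mathsf E_d\le\kappa_{\mathrm{sp}}(\alpha)^2/2+o(1)\), which gives \eqref{eq:app-det-spectral-exponent}. For \(\alpha\le1\), \(\rho_\alpha=\alpha\) gives \(s_\alpha=m_{1/2}/\sqrt\alpha=\kappa_{\mathrm{sp}}(\alpha)\), so the interval collapses. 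At \(\alpha=1\), \(m_{1/2}(1)=8/(3\pi)\), which gives \(32/(9\pi^2)\).

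Finally, I convert the exponents into error ratios. By definition \(\log(\mathcal E(M)/\mathcal E(R))=\frac{d}{2r_{\mathrm{te}}^2}(\mathsf F_d(R)-\mathsf F_d(M))\), and by the above \(\mathsf F_d(R)-\mathsf F_d(M)=\frac12-\frac1\pi+o(1)=\frac{\pi-2}{2\pi}+o(1)\) uniformly over \(R\) and \(M\). This gives \eqref{eq:app-det-row-max-error-gain} with constant \(\frac{\pi-2}{4\pi}\). For the spectral comparison, \(\frac{2r_{\mathrm{te}}^2}{d}\log(\mathcal E(S)/\mathcal E(R))=\mathsf F_d(R)-\mathsf F_d(S)\). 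Combining \(\mathsf F_d(R)=\frac12+o(1)\) with the two-sided spectral bounds yields \eqref{eq:app-det-row-spectral-error-gain}, and for \(\alpha\le1\) the two bounds coincide.
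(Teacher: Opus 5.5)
Your proof is correct and follows essentially the paper's route: reduce to \(\mathsf E_d\) via Lemma~\ref{lem:app-det-exponent-reduction}, read off the Row, Max, and lower Spectral signals from \eqref{eq:det-row-uniform-clean}, \eqref{eq:app-det-max-sharp-signal}, and \eqref{eq:app-det-spectral-sharp-signal}, and then subtract exponents. The only variation is the spectral upper bound. The paper selects one pair with \(d_{ij}(S)\ge\delta_{\mathrm{sp}}(\alpha)-o(1)\) and notes that one of its two directed signals is at most \(\kappa_{\mathrm{sp}}(\alpha)/\sqrt2+o(1)\), whereas you bound the minimum by the average over all ordered pairs. Your average-cosine bound also follows directly from \eqref{eq:det-sp-clean} by Jensen's inequality, since \(\frac{1}{k(k-1)}\sum_{i\ne j}\sqrt{2-2\cos_{ij}}\le\sqrt{2-2\,\overline{\cos}}\), so redoing the polar transfer through Lemma~\ref{lem:app-det-average-normalization} is unnecessary.
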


\begin{proof}
For Row, Proposition~\ref{prop:app-det-row-boundary} and the Gaussian mean shell imply \((u_{ij}^R)^\top\mu_i=1/\sqrt2+o_{\mathbb P}(1)\) uniformly over all directed pairs and the complete Row face. Proposition~\ref{prop:app-det-sharp-directed-signals} gives the corresponding uniform Max signal \(1/\sqrt\pi+o_{\mathbb P}(1)\). Hence \(\mathsf E_d(R)=1/2+o_{\mathbb P}(1)\) and \(\mathsf E_d(M)=1/\pi+o_{\mathbb P}(1)\) uniformly.

For Spectral, Proposition~\ref{prop:app-det-sharp-directed-signals} gives \(\mathsf E_d(S)\ge s_\alpha^2/2-o_{\mathbb P}(1)\) uniformly. Theorem~\ref{thm:deterministic-separation-clean} implies that every spectral direction has an ordered pair with \(d_{ij}(S)\ge\delta_{\mathrm{sp}}(\alpha)-o_{\mathbb P}(1)\), and hence \((u_{ij}^S)^\top u_{ij}^{\mu}\le\kappa_{\mathrm{sp}}(\alpha)+o_{\mathbb P}(1)\). For that unordered pair,
\[
(u_{ij}^S)^\top\mu_i+(u_{ji}^S)^\top\mu_j=(u_{ij}^S)^\top(\mu_i-\mu_j)\le\sqrt2\,\kappa_{\mathrm{sp}}(\alpha)+o_{\mathbb P}(1).
\]
Both directed signals are positive by the preceding lower bound, so at least one is at most \(\kappa_{\mathrm{sp}}(\alpha)/\sqrt2+o_{\mathbb P}(1)\). This proves the upper endpoint for \(\mathsf E_d(S)\). Lemma~\ref{lem:app-det-exponent-reduction} transfers all three statements from \(\mathsf E_d\) to \(\mathsf F_d\). Finally, \(s_\alpha=\kappa_{\mathrm{sp}}(\alpha)\) for \(\alpha\le1\), while subtraction of the exponents gives \eqref{eq:app-det-row-max-error-gain}--\eqref{eq:app-det-row-spectral-error-gain}.
\end{proof}

\begin{proposition}[Population-error exponents of deterministic optimizers]
\label{cor:app-det-optimizer-error-exponents}
Under the assumptions and optimizer conventions of Theorem~\ref{thm:det-optimizer-accuracy-order-clean}, the deterministic full-batch optimizer trajectories satisfy, in probability,
\begin{align}
\limsup_{t\to\infty}\left|\mathsf F_d\!\left(W_t^{\mathrm{RowGDM}}\right)-\frac12\right|&=o_{\mathbb P}(1),
\label{eq:app-rowgdm-error-exponent}\\
\limsup_{t\to\infty}\left|\mathsf F_d\!\left(W_t^{\mathrm{Adam}}\right)-\frac1\pi\right|&=o_{\mathbb P}(1).
\label{eq:app-adam-error-exponent}
\end{align}
Exact-SVD Muon satisfies
\begin{equation}
\frac{s_\alpha^2}{2}-o_{\mathbb P}(1)\le\liminf_{t\to\infty}\mathsf F_d\!\left(W_t^{\mathrm{Muon}}\right)\le\limsup_{t\to\infty}\mathsf F_d\!\left(W_t^{\mathrm{Muon}}\right)\le\frac{\kappa_{\mathrm{sp}}(\alpha)^2}{2}+o_{\mathbb P}(1).
\label{eq:app-muon-error-exponent}
\end{equation}
Define the Row--Spectral exponent-gap constant
\begin{equation}
\Delta_{\rm exp}(\alpha):=\frac{1-\kappa_{\mathrm{sp}}(\alpha)^2}{4}=\frac14\left(1-\frac{m_{1/2}(\alpha)^2}{\alpha\wedge1}\right)>0,
\qquad
m_{1/2}(\alpha):=\frac{\alpha}{2\pi}\int_{a_\alpha}^{b_\alpha}\frac{\sqrt{(b_\alpha-x)(x-a_\alpha)}}{\sqrt{x}}\,dx,
\label{eq:app-det-c-alpha}
\end{equation}
where
\begin{equation}
a_\alpha=(1-\alpha^{-1/2})^2,\qquad b_\alpha=(1+\alpha^{-1/2})^2.
\end{equation}
Equivalently,
\begin{equation}
\Delta_{\rm exp}(\alpha)=\frac14\left[1-\frac{\alpha^2}{4\pi^2(\alpha\wedge1)}\left(\int_{a_\alpha}^{b_\alpha}\frac{\sqrt{(b_\alpha-x)(x-a_\alpha)}}{\sqrt{x}}\,dx\right)^2\right].
\label{eq:app-det-c-alpha-expanded}
\end{equation}
For example,
\begin{equation}
\Delta_{\rm exp}(1)=\frac14\left(1-\frac{64}{9\pi^2}\right)\approx0.06987,\qquad \Delta_{\rm exp}(2)\approx0.03247,\qquad \Delta_{\rm exp}(3)\approx0.02133.
\label{eq:app-det-c-alpha-examples}
\end{equation}
Consequently,
\begin{equation}
\limsup_{t\to\infty}\left|\frac{r_{\mathrm{te}}^2}{d}\log\frac{\mathcal E_{\pi,d}(W_t^{\mathrm{Adam}})}{\mathcal E_{\pi,d}(W_t^{\mathrm{RowGDM}})}-\frac{\pi-2}{4\pi}\right|=o_{\mathbb P}(1),
\label{eq:app-rowgdm-adam-error-gain}
\end{equation}
and
\begin{align}
\Delta_{\rm exp}(\alpha)-o_{\mathbb P}(1)
&\le\liminf_{t\to\infty}\frac{r_{\mathrm{te}}^2}{d}\log\frac{\mathcal E_{\pi,d}(W_t^{\mathrm{Muon}})}{\mathcal E_{\pi,d}(W_t^{\mathrm{RowGDM}})}
\nonumber\\
&\le\limsup_{t\to\infty}\frac{r_{\mathrm{te}}^2}{d}\log\frac{\mathcal E_{\pi,d}(W_t^{\mathrm{Muon}})}{\mathcal E_{\pi,d}(W_t^{\mathrm{RowGDM}})}
\le\frac{1-s_\alpha^2}{4}+o_{\mathbb P}(1).
\label{eq:app-rowgdm-muon-error-gain}
\end{align}
The same conclusions hold with RowGDM replaced by RowGD, with Adam replaced by SignGD or Signum, and with Muon replaced by Spectral-GD. When \(0<\alpha\le1\), the two Row--Muon bounds coincide. Here, \(o_{\mathbb P}(1)\) refers to the limit \(d\to\infty\), after taking \(t\to\infty\) at each fixed \(d\).
\end{proposition}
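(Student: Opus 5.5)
The plan is to transfer the set-level exponents of Proposition~\ref{prop:app-det-error-exponents} to the optimizer trajectories. We use the direction-set convergence from Appendix~\ref{app:max-sum-row}, together with a continuity argument at fixed \(d\) in the style of Lemma~\ref{lem:app-acc-trajectory-continuity}. We work on the event \(\mathcal G_d^{\mathrm{det}}\) of Theorem~\ref{thm:deterministic-separation-clean}, intersected with the high-probability events on which the uniform bounds of Proposition~\ref{prop:app-det-error-exponents} hold with deterministic envelopes.

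First, at fixed \(d\), we invoke the convergence results. Proposition~\ref{cor:app-row-full-batch} covers RowGD and RowGDM. Theorem~\ref{thm:app-ms-full-batch} covers SpecGD, Muon, SignGD and Signum, and Theorem~\ref{thm:app-ms-full-batch-adam} covers Adam, using Assumption~\ref{ass:adam-init} to get \(\omega>0\). Combined with Lemma~\ref{lem:exact-margin-direction-convergence}, these give \(d_{\cos}(W_t^{\mathrm{opt}},\mathcal U_\psi^X)\to0\) for the matching \(\psi\).

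Second, we show that \(\mathsf F_d\) is continuous and scale invariant on the compact normalized section of \(\mathcal U_\psi^X\). This holds because \(\mathcal E_{\pi,d}\) is continuous there, as in Lemma~\ref{lem:app-acc-trajectory-continuity}, and strictly positive and below one (Gaussian test law), so \(-\log\) is continuous. The compactness argument of Lemma~\ref{lem:app-acc-trajectory-continuity} then gives
\[
\inf_{\mathcal U_\psi^X}\mathsf F_d\le\liminf_t\mathsf F_d(W_t)\le\limsup_t\mathsf F_d(W_t)\le\sup_{\mathcal U_\psi^X}\mathsf F_d.
\]
Plugging in \eqref{eq:app-det-row-exponent}, \eqref{eq:app-det-max-exponent} and \eqref{eq:app-det-spectral-exponent} yields \eqref{eq:app-rowgdm-error-exponent}, \eqref{eq:app-adam-error-exponent} and \eqref{eq:app-muon-error-exponent}.

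Third, we derive the ratio statements by subtraction. Since \(\frac{r_{\mathrm{te}}^2}{d}\log(\mathcal E^{A}/\mathcal E^{B})=\tfrac12(\mathsf F_d(W^B)-\mathsf F_d(W^A))\), the Adam gap equals \(\tfrac12(\tfrac12-\tfrac1\pi)=\tfrac{\pi-2}{4\pi}\). For the Muon gap, the row exponent \(\tfrac12\) pinned to \(o_{\mathbb P}(1)\) along the whole tail lets \(\limsup\) and \(\liminf\) of differences split. The lower bound is \(\tfrac12(\tfrac12-\kappa_{\mathrm{sp}}^2/2)=\Delta_{\rm exp}(\alpha)\), and the upper bound is \((1-s_\alpha^2)/4\).

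Finally, we verify \(\Delta_{\rm exp}(\alpha)>0\) and the explicit formulas. Positivity is the strict Cauchy--Schwarz bound \(\kappa_{\mathrm{sp}}<1\) noted after \eqref{eq:delta-sp-clean}. For \(m_{1/2}(\alpha)\), the zero atom of \(\nu_\alpha\) contributes nothing to \(\int\sqrt x\,d\nu_\alpha\), which gives \eqref{eq:app-det-c-alpha} and \eqref{eq:app-det-c-alpha-expanded}. At \(\alpha=1\), \(m_{1/2}=8/(3\pi)\) gives the stated value, and the other values are numerical evaluations. Coincidence of the bounds for \(\alpha\le1\) follows from \(s_\alpha=\kappa_{\mathrm{sp}}(\alpha)\) there. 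The variants RowGD, SignGD/Signum and SpecGD follow verbatim, since they converge to the same direction sets.

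The main obstacle is the order of limits and uniformity. The \(o_{\mathbb P}(1)\) errors are stated uniformly over the complete faces, and they must dominate every accumulation point of the trajectory at each fixed \(d\). Continuity of \(\mathsf F_d\) (not only of \(\mathcal E\)) on a neighborhood of the face is what makes this work. We will check that the error stays bounded away from \(0\) on that compact neighborhood, which holds since the test law has full support.
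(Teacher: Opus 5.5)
Your proposal is correct and follows the paper's proof. The paper likewise combines the full-batch direction-set convergence results with the compactness/continuity transfer of Lemma~\ref{lem:app-acc-trajectory-continuity} to carry the face-uniform bounds of Proposition~\ref{prop:app-det-error-exponents} over to the trajectories. It then obtains the ratio statements from the identity \(\frac{r_{\mathrm{te}}^2}{d}\log\bigl(\mathcal E_{\pi,d}(W^{(2)})/\mathcal E_{\pi,d}(W^{(1)})\bigr)=\frac12\bigl(\mathsf F_d(W^{(1)})-\mathsf F_d(W^{(2)})\bigr)\) and the explicit formula from the Marchenko--Pastur form of \(m_{1/2}(\alpha)\). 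Your separate continuity check for \(-\log\) is harmless but not needed: \(\mathsf F_d\) is a decreasing function of \(\mathcal E_{\pi,d}\), so the error bounds of that lemma transfer directly.
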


\begin{proof}
The full-batch margin-convergence results used in Theorem~\ref{thm:det-optimizer-accuracy-order-clean} imply that the normalized trajectories approach the corresponding row-, entrywise max-, and spectral-norm maximum-margin direction sets. Since Proposition~\ref{prop:app-det-error-exponents} holds uniformly over each complete direction set, Lemma~\ref{lem:app-acc-trajectory-continuity} transfers its exponent bounds to the long-time optimizer trajectories. This gives \eqref{eq:app-rowgdm-error-exponent}--\eqref{eq:app-muon-error-exponent}. The definition \eqref{eq:app-det-c-alpha} follows from \(\kappa_{\mathrm{sp}}(\alpha)=m_{1/2}(\alpha)/\sqrt{\alpha\wedge1}\), and the integral form follows from the Marchenko--Pastur density. The log-error ratio statements follow from
\[
\frac{r_{\mathrm{te}}^2}{d}\log\frac{\mathcal E_{\pi,d}(W^{(2)})}{\mathcal E_{\pi,d}(W^{(1)})}=\frac12\left(\mathsf F_d(W^{(1)})-\mathsf F_d(W^{(2)})\right).
\]
No uniqueness of the optimizer-selected limiting direction is required.
\end{proof}

\begin{corollary}[Full-batch Muon versus Adam]
\label{cor:app-det-muon-adam-order}
Under the assumptions and optimizer conventions of Theorem~\ref{thm:det-optimizer-accuracy-order-clean}, suppose \(0<\alpha<\alpha_\star\), where \(\alpha_\star\approx1.225\) is the aspect-ratio threshold defined by \(s_{\alpha_\star}^2=2/\pi\). Then, with
\[
c_{\mathrm{MA}}(\alpha):=\frac14\left(s_\alpha^2-\frac2\pi\right)>0,
\]
the deterministic full-batch trajectories satisfy
\begin{equation}
\limsup_{t\to\infty}\frac{r_{\mathrm{te}}^2}{d}\log\frac{\mathcal E_{\pi,d}(W_t^{\mathrm{Muon}})}{\mathcal E_{\pi,d}(W_t^{\mathrm{Adam}})}\le -c_{\mathrm{MA}}(\alpha)+o_{\mathbb P}(1).
\label{eq:app-det-muon-adam-order}
\end{equation}
Thus exact-SVD Muon has asymptotically smaller population error, and hence higher population accuracy, than full-batch Adam with probability tending to one. The same conclusion holds with Muon replaced by Spectral-GD and Adam replaced by SignGD or Signum.
\end{corollary}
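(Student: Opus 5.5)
The plan is to combine the exponent bounds of Proposition~\ref{cor:app-det-optimizer-error-exponents} for Muon and Adam with the identity
\[
\frac{r_{\mathrm{te}}^2}{d}\log\frac{\mathcal E_{\pi,d}(W^{(2)})}{\mathcal E_{\pi,d}(W^{(1)})}=\frac12\bigl(\mathsf F_d(W^{(1)})-\mathsf F_d(W^{(2)})\bigr)
\]
used at the end of that proof, taking \(W^{(1)}=W_t^{\mathrm{Adam}}\) and \(W^{(2)}=W_t^{\mathrm{Muon}}\). Here \(W_t^{\mathrm{Muon}}\) is exact-SVD Muon, as throughout the paper.

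\textbf{Step 1 (bound the ratio).} The lower endpoint in \eqref{eq:app-muon-error-exponent} gives \(\liminf_t\mathsf F_d(W_t^{\mathrm{Muon}})\ge s_\alpha^2/2-o_{\mathbb P}(1)\). Equation \eqref{eq:app-adam-error-exponent} gives \(\limsup_t|\mathsf F_d(W_t^{\mathrm{Adam}})-1/\pi|=o_{\mathbb P}(1)\). Both hold on events of probability tending to one, which we intersect. Then
\[
\limsup_{t\to\infty}\tfrac12\bigl(\mathsf F_d(W_t^{\mathrm{Adam}})-\mathsf F_d(W_t^{\mathrm{Muon}})\bigr)\le \tfrac12\Bigl(\tfrac1\pi-\tfrac{s_\alpha^2}{2}\Bigr)+o_{\mathbb P}(1)=-\tfrac14\Bigl(s_\alpha^2-\tfrac2\pi\Bigr)+o_{\mathbb P}(1).
\]
This is exactly \eqref{eq:app-det-muon-adam-order}. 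The same computation applies with Spectral-GD in place of Muon and with SignGD or Signum in place of Adam, since Proposition~\ref{cor:app-det-optimizer-error-exponents} covers those optimizers too.

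\textbf{Step 2 (positivity of \(c_{\mathrm{MA}}\)).} We need \(s_\alpha^2>2/\pi\) for every \(\alpha<\alpha_\star\). Recall \(s_\alpha=m_{1/2}(\alpha)/\sqrt\alpha\). Substituting \(x=y/\alpha\) rewrites this as \(s_\alpha=\int\sqrt{y}\,d\tilde\nu_\alpha(y)/\alpha\), where \(\tilde\nu_\alpha\) is the law of the eigenvalues of \(GG^\top/d\) for a \(k\times d\) Gaussian matrix \(G\). In this form \(s_\alpha\) is the limit of \(\|C\|_*/k\) with \(C=G/\sqrt d\), i.e.\ the average singular value of a \(k\times d\) Gaussian matrix normalized by \(\sqrt d\). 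It remains to show this average is strictly decreasing in \(\alpha=k/d\). Adding rows at fixed \(d\) spreads the fixed budget \(\|C\|_F^2/k\approx1\) over at most \(d\) nonzero singular values, which should lower the mean singular value.

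To check this, I would use the explicit Marchenko--Pastur formula \(m_{1/2}(\alpha)=\frac{\alpha}{2\pi}\int_{a_\alpha}^{b_\alpha}\sqrt{(b_\alpha-x)(x-a_\alpha)}/\sqrt{x}\,dx\) from \eqref{eq:app-det-c-alpha}, which has an elliptic-integral closed form, and differentiate in \(\alpha\). If that becomes messy, a fallback is enough for the statement as posed: \(\alpha_\star\) is defined as the threshold with \(s_{\alpha_\star}^2=2/\pi\), and one only needs \(s_\alpha^2>2/\pi\) below it. This follows from continuity together with \(s_\alpha\to1\) as \(\alpha\to0\) (the full-rank regime with singular values concentrating at \(1\)) and uniqueness of the crossing, which monotonicity supplies. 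For the numerics, \(s_1=8/(3\pi)\approx0.849\) gives \(s_1^2\approx0.72>2/\pi\approx0.637\), consistent with \(\alpha_\star\approx1.225>1\).

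\textbf{Step 3 (accuracy conclusion).} A negative limsup of the normalized log-ratio means \(\mathcal E^{\mathrm{Muon}}<\mathcal E^{\mathrm{Adam}}\) eventually in \(t\), with probability tending to one. Since accuracy equals one minus error, Muon has higher accuracy. The main obstacle is the monotonicity of \(s_\alpha\) in Step 2. Step 1 is a direct citation of Proposition~\ref{cor:app-det-optimizer-error-exponents}.
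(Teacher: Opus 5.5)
Your Step 1 is the paper's proof: it takes the Muon lower exponent $s_\alpha^2/2$ and the Adam exponent $1/\pi$ from Proposition~\ref{cor:app-det-optimizer-error-exponents} and converts them with $\frac{r_{\mathrm{te}}^2}{d}\log(\mathcal E_2/\mathcal E_1)=\frac12(\mathsf F_d(W_1)-\mathsf F_d(W_2))$, so the proposal is correct. The paper does not attempt your Step 2 and treats $c_{\mathrm{MA}}(\alpha)>0$ as part of the definition of $\alpha_\star$; if you want to justify it anyway, you do not need monotonicity of $s_\alpha$, because continuity of $s_\alpha$ and $s_\alpha\to1$ as $\alpha\downarrow0$ already give $s_\alpha^2>2/\pi$ for every $\alpha$ below the smallest root of $s_\alpha^2=2/\pi$.
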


\begin{proof}
Proposition~\ref{cor:app-det-optimizer-error-exponents} gives \(\liminf_{t\to\infty}\mathsf F_d(W_t^{\mathrm{Muon}})\ge s_\alpha^2/2-o_{\mathbb P}(1)\) and \(\mathsf F_d(W_t^{\mathrm{Adam}})=1/\pi+o_{\mathbb P}(1)\) at long times. Using \(\frac{r_{\mathrm{te}}^2}{d}\log(\mathcal E_2/\mathcal E_1)=\frac12(\mathsf F_d(W_1)-\mathsf F_d(W_2))\) gives \eqref{eq:app-det-muon-adam-order}.
\end{proof}

\begin{remark}[Relation to last-layer practice]
\label{rem:app-det-muon-adam-practice}
In practical last-layer training, Adam is typically preferred to Muon, whereas Corollary~\ref{cor:app-det-muon-adam-order} predicts the opposite ordering for \(\alpha<\alpha_\star\). A likely source of this mismatch is that our result compares only asymptotic convergence directions and therefore does not capture convergence speed or finite-time trajectories. The comparison is also idealized, using deterministic full-batch training, exact-SVD Muon, and Adam without a numerical-stability constant. For \(\alpha\ge\alpha_\star\), the ordering between Adam and Muon remains open.
\end{remark}

\section{Proofs for Stochastic Optimizers}
\label{app:stochastic-margin}

This appendix proves Proposition~\ref{thm:stoch-optimizer-accuracy-order-clean}. The random-reshuffling momentum analysis in Corollary~\ref{cor:app-ga-margin-transfer} places RR-RowSGDM, RR-Muon, and RR-Signum in near-maximum-margin direction sets for the row, spectral, and entrywise max norms, respectively. We show that, for sufficiently large momentum, these near-margin sets retain enough of the deterministic geometric separation to yield a strict population-accuracy ordering. Throughout this appendix, training time tends to infinity at each fixed dimension before the limit \(d\to\infty\).

For each \(d\), write \(m_d:=N(d)/b_d\). Recall the geometry-adapted atom radius \(R_\psi(X)\) from~\eqref{eq:app-ga-radius}, and for \(\psi\in\{\psi_{\mathrm{row}},\psi_{\mathrm{sp}},\psi_{\max}\}\) define
\begin{equation}
\varepsilon_{\psi,\beta,b}(X):=2(1-\beta)m_d(m_d^2-1)R_\psi(X),\qquad \mathcal U_{\psi,\beta,b}^X:=\mathcal U_{\psi,\varepsilon_{\psi,\beta,b}(X)}^X.
\label{eq:app-stoch-deficit-definition}
\end{equation}

\subsection{Near-margin directions and near-minimum-norm representatives}
\label{app:stoch-near-representatives}

\begin{lemma}[Near-margin reduction]
\label{lem:app-stoch-near-reduction}
Fix a linearly separable dataset \(X\), a matrix norm \(\psi\), and \(0\le\varepsilon<\gamma_\psi(X)\). For every \(U\in\mathcal U_{\psi,\varepsilon}^X\), define
\begin{equation}
V_U:=\frac{U}{\operatorname{mar}_X(U)},\qquad \xi_\psi(\varepsilon;X):=\frac{\varepsilon}{\gamma_\psi(X)-\varepsilon}.
\label{eq:app-stoch-near-representative}
\end{equation}
Then \(V_U\in\mathcal F_X\),
\begin{equation}
\psi(V_U)\le\left(1+\xi_\psi(\varepsilon;X)\right)r_\psi^*(X),
\label{eq:app-stoch-near-norm-bound}
\end{equation}
and \(V_U\) has the same normalized pairwise row differences as \(U\).
\end{lemma}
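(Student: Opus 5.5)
The argument is a direct rescaling computation that uses only the definitions of \(\mathcal U_{\psi,\varepsilon}^X\), \(\mathcal F_X\), and \(r_\psi^*(X)=1/\gamma_\psi(X)\) from \eqref{eq:minmax-equivalence-clean}. Fix \(U\in\mathcal U_{\psi,\varepsilon}^X\). By definition \(U\ne{\mathbf 0}\) and \(\operatorname{mar}_X(U)\ge(\gamma_\psi(X)-\varepsilon)\psi(U)\). Since \(\varepsilon<\gamma_\psi(X)\) and \(\psi(U)>0\), we get \(\operatorname{mar}_X(U)>0\), so \(V_U\) is well defined and is a positive multiple of \(U\).

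Feasibility follows from positive homogeneity of the margin. Because \(\operatorname{mar}_X\) is a minimum of linear functionals of \(W\), we have \(\operatorname{mar}_X(cW)=c\operatorname{mar}_X(W)\) for \(c>0\). Hence \(\operatorname{mar}_X(V_U)=1\), that is, \(V_U\in\mathcal F_X\).

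For the norm bound, combine homogeneity of \(\psi\) with the margin inequality:
\[
\psi(V_U)=\frac{\psi(U)}{\operatorname{mar}_X(U)}\le\frac{1}{\gamma_\psi(X)-\varepsilon}=\frac{1}{\gamma_\psi(X)}\cdot\frac{\gamma_\psi(X)}{\gamma_\psi(X)-\varepsilon}.
\]
The final fraction equals \(1+\varepsilon/(\gamma_\psi(X)-\varepsilon)=1+\xi_\psi(\varepsilon;X)\), and \(1/\gamma_\psi(X)=r_\psi^*(X)\) by \eqref{eq:minmax-equivalence-clean}. This gives \eqref{eq:app-stoch-near-norm-bound}.

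Finally, all pairwise row differences of \(V_U\) equal those of \(U\) scaled by the same positive factor \(1/\operatorname{mar}_X(U)\). They are nonzero because \(V_U\) is feasible, as noted in Appendix~\ref{app:det-reduction}, so the normalized directions \(u_{ij}^{V_U}=u_{ij}^U\) coincide.

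No step is a genuine obstacle. The only point needing care is confirming \(\operatorname{mar}_X(U)>0\) before dividing, and this is exactly where the hypothesis \(\varepsilon<\gamma_\psi(X)\) enters.
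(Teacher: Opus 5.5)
Your proof is correct and follows the paper's argument. The paper first passes to \(\widehat U=U/\psi(U)\) and divides by \(\operatorname{mar}_X(\widehat U)\ge\gamma_\psi(X)-\varepsilon>0\), which is the same positive rescaling you perform directly on \(U\); it then identifies \(1/(\gamma_\psi(X)-\varepsilon)\) with \((1+\xi_\psi(\varepsilon;X))\,r_\psi^*(X)\) exactly as you do.
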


\begin{proof}
Set \(\widehat U:=U/\psi(U)\). By definition of \(\mathcal U_{\psi,\varepsilon}^X\), \(\operatorname{mar}_X(\widehat U)\ge\gamma_\psi(X)-\varepsilon>0\). Hence \(V_U=\widehat U/\operatorname{mar}_X(\widehat U)\) is feasible and
\[
\psi(V_U)=\frac1{\operatorname{mar}_X(\widehat U)}\le\frac1{\gamma_\psi(X)-\varepsilon}=\frac{r_\psi^*(X)}{1-\varepsilon/\gamma_\psi(X)}=\left(1+\xi_\psi(\varepsilon;X)\right)r_\psi^*(X).
\]
The normalized pairwise row differences are unchanged under positive scaling.
\end{proof}

\subsection{Quantitative stability of near minimizers}
\label{app:stoch-near-stability}

\begin{lemma}[Combined Gaussian-cloud event]
\label{lem:app-stoch-combined-input}
There are deterministic \(\omega_d\downarrow0\), constants \(C,c>0\), and events \(\mathcal E_d^{\mathrm{near}}\) such that
\begin{equation}
\mathbb P\!\left((\mathcal E_d^{\mathrm{near}})^c\right)\le C\exp\!\left[-c\sqrt{d\log(2N(d)(d+1))}\right],
\label{eq:app-stoch-near-event-probability}
\end{equation}
all conclusions of Propositions~\ref{prop:app-det-common-event} and~\ref{prop:app-det-spectral-input} hold with their vanishing errors bounded by \(\omega_d\), and
\begin{equation}
\max_{i,s}\|x_{i,s}\|_2\le1+2r_0+\omega_d,\qquad \max_{i,s}\|x_{i,s}\|_1\le(1+2r_0+\omega_d)\sqrt d.
\label{eq:app-stoch-data-radius-bounds}
\end{equation}
We may additionally take \(\omega_d\ge1/k\).
\end{lemma}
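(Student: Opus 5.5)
The event will be built as an intersection. Set \(\mathcal E_d^{\mathrm{near}}:=\mathcal E_d^{\mathrm{det}}\cap\mathcal E_d^{\mathrm{sp}}\cap\mathcal B_d\), where \(\mathcal B_d\) is a new event carrying the data-radius bounds \eqref{eq:app-stoch-data-radius-bounds}. Propositions~\ref{prop:app-det-common-event} and~\ref{prop:app-det-spectral-input} already supply the first two events, with failure probabilities bounded by \(C\exp[-c\sqrt{d\log(2N(d)(d+1))}]\). So the work is to bound the failure of \(\mathcal B_d\) at the same scale and then merge the error envelopes. For the envelopes, take \(\omega_d:=\max\{\varepsilon_d,\eta_d,C\eta_d,1/k,\,\text{the envelope for }\mathcal B_d\}\), where \(C\eta_d\) accounts for the constant in (S3). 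This \(\omega_d\) is deterministic and decreases to zero because \(k\to\infty\). It dominates every vanishing error in (H1)--(H6) and (S1)--(S5), and by construction \(\omega_d\ge1/k\). The constants in (S5) and \(c_\alpha\) in (S1) are not vanishing errors and stay as they are.

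For \(\mathcal B_d\), use \(L_d:=\log(2N(d)(d+1))\) and \(a_d:=(L_d/d)^{1/4}\to0\), as in Proposition~\ref{prop:app-det-common-event}. In the normalized convention, \(x_{i,s}=\mu_i+r z_{i,s}\) with \(\mu_i,z_{i,s}\sim\mathcal N(0,I_d/d)\).

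\begin{itemize}
\item \textbf{Euclidean bound.} Apply the shell bound \eqref{eq:app-det-gaussian-shell} with threshold \(a_d\) to every \(\mu_i\) and \(z_{i,s}\), and take a union bound over at most \(2N\) vectors. This gives \(\max_i\|\mu_i\|_2\le1+a_d\) and \(\max_{i,s}\|z_{i,s}\|_2\le1+a_d\), so \(\|x_{i,s}\|_2\le(1+a_d)(1+r_0)\le1+2r_0+\omega_d\) once \(\omega_d\ge a_d(1+r_0)\). (The bound with \(1+r_0\) is stronger than needed.) The failure probability is at most \(4Ne^{-cd a_d^2}=4N\exp(-c\sqrt{dL_d})\).
\item \textbf{Absorbing the prefactor.} The union-bound factor \(N\) is harmless: \(\log N\le L_d=o(\sqrt{dL_d})\), since \(L_d=o(d)\). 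So \(4N\exp(-c\sqrt{dL_d})\) is absorbed by halving \(c\).
\item \textbf{\(\ell_1\) bound.} Use \(\|x\|_1\le\sqrt d\,\|x\|_2\). This turns the Euclidean bound directly into \(\|x_{i,s}\|_1\le(1+2r_0+\omega_d)\sqrt d\), with no additional event.
\end{itemize}

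The only step needing care is matching probability scales. Each Gaussian tail must be taken at a threshold of order \(a_d\), so that the exponent \(d a_d^2=\sqrt{dL_d}\) beats the union-bound factor \(\exp(L_d)\). The condition \(\log N=o(d)\) guarantees this. A final union bound over the three events then gives \eqref{eq:app-stoch-near-event-probability}, after enlarging \(C\).
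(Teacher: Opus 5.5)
Your proposal is correct and matches the paper's proof: intersect \(\mathcal E_d^{\mathrm{det}}\cap\mathcal E_d^{\mathrm{sp}}\) with a norm event for the noise vectors, pass to \(\ell_1\) via \(\|x\|_1\le\sqrt d\,\|x\|_2\), and take one deterministic envelope dominating all vanishing errors and \(1/k\). The only difference is that the paper uses the cruder event \(\max_{i,s}\|z_{i,s}\|_2\le2\), which fails with probability \(2Ne^{-cd}\), and reuses the mean shell already in (H1)--(H2), so absorbing the union-bound prefactor is trivial; your threshold-\(a_d\) shell bound needs the step \(\log N\le L_d=o(\sqrt{dL_d})\), which you supply correctly.
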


\begin{proof}
Intersect the events in Propositions~\ref{prop:app-det-common-event} and~\ref{prop:app-det-spectral-input} with the event \(\max_{i,s}\|z_{i,s}\|_2\le2\). The latter fails with probability at most \(2N e^{-cd}\), which is smaller than the right-hand side of~\eqref{eq:app-stoch-near-event-probability}. Under the normalized convention~\eqref{eq:app-normalized-data}, the mean shell gives \(\max_i\|\mu_i\|_2\le1+\omega_d\), so \(\|x_{i,s}\|_2\le1+\omega_d+2r_0\), and the \(\ell_1\) bound follows from \(\|x\|_1\le\sqrt d\,\|x\|_2\). Enlarge one deterministic envelope to dominate all finitely many errors and \(1/k\).
\end{proof}

\begin{proposition}[Near-minimizer stability]
\label{prop:app-stoch-near-stability}
There are constants \(C_{\mathrm{row}},C_{\max}>0\) and \(C_{\mathrm{sp}}(\alpha)>0\) such that, on \(\mathcal E_d^{\mathrm{near}}\), for every \(0\le\xi\le1\),
\begin{align}
W\in\mathcal F_X,\ \|W\|_{\mathrm{row}}\le(1+\xi)r_{\mathrm{row}}^*(X)&\Longrightarrow \max_{i\ne j}\|u_{ij}^W-u_{ij}^{\mu}\|_2\le C_{\mathrm{row}}(\omega_d+\xi)^{1/4},
\label{eq:app-stoch-row-near-stability}\\
W\in\mathcal F_X,\ \|W\|_{\mathrm{op}}\le(1+\xi)r_{\mathrm{sp}}^*(X)\Longrightarrow& D_{\mathrm{avg\text{-}bdry}}(W)\ge\delta_{\mathrm{sp}}(\alpha)-C_{\mathrm{sp}}(\alpha)(\omega_d+\xi)^{1/4},
\label{eq:app-stoch-sp-near-stability}\\
W\in\mathcal F_X,\
\|W\|_{\max}\le(1+\xi)r_{\max}^*(X)&\Longrightarrow D_{\mathrm{avg\text{-}bdry}}(W)\ge\delta_{\max}-C_{\max}(\omega_d+\xi).
\label{eq:app-stoch-max-near-stability}
\end{align}
\end{proposition}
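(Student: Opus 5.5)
Each of the three implications will be obtained by rerunning the corresponding exact-minimizer argument from Appendix~\ref{app:deterministic-margin} with the optimal radius replaced by \((1+\xi)\) times it. On \(\mathcal E_d^{\mathrm{near}}\) every Gaussian-cloud input (H1)--(H6) and (S1)--(S5) holds with error at most \(\omega_d\), and \(\omega_d\ge1/k\).

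\emph{Row.} We repeat the proof of Proposition~\ref{prop:app-det-row-rigidity}, with the radius bound \eqref{eq:app-det-row-radius-upper} replaced by \(\max_i\|w_i\|_2\le(1+\xi)(1+C\omega_d)\le1+C(\omega_d+\xi)\). The centering constraint \eqref{eq:app-det-row-centering-constraint} uses only feasibility, so it holds unchanged. The variance identity then gives \(\|\bar w\|_2^2\le C(\omega_d+\xi)\), and the alignment step gives \(w_i^\top m_i\ge1-C(\omega_d+\xi)^{1/2}\). Expanding the square as in \eqref{eq:app-det-row-distance-square} yields \(\max_i\|w_i-m_i\|_2\le C(\omega_d+\xi)^{1/4}\). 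Combining this with (H1) and the lower bound \(\|\mu_i-\mu_j\|_2\ge1\), Lemma~\ref{lem:app-det-normalization} gives \eqref{eq:app-stoch-row-near-stability}. The factor \(4\) in that lemma needs the perturbation to be at most half the norm; when \((\omega_d+\xi)\) is not small, we instead enlarge \(C_{\mathrm{row}}\) and use the trivial bound \(2\).

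\emph{Spectral.} Proposition~\ref{prop:app-det-spectral-rigidity} already treats near minimizers with \(\xi\in[0,1]\). It gives error \(C_\alpha(\omega_d+\xi)^{1/2}\) for \(\alpha\ne1\) and \(C(\omega_d+\xi)^{1/4}\) at \(\alpha=1\). Since \(\omega_d+\xi\le2\), the \(1/2\)-power bound is at most a constant times the \(1/4\)-power bound. Taking the maximum of the constants over the two cases gives \eqref{eq:app-stoch-sp-near-stability}.

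\emph{Max.} This is the main obstacle, because the target rate is linear in \(\omega_d+\xi\) rather than a fractional power. We follow Proposition~\ref{prop:app-det-max-boundary}. The pairwise cap becomes \(\|v_{ij}\|_\infty\le2(1+\xi)r^*_{\max}\le a/\sqrt d\) with \(a:=(1+\xi)(\sqrt{2\pi}+\omega_d)\). The projection bound \(v_{ij}^\top u^m_{ij}\ge b:=2/\|m_i-m_j\|_2\ge\sqrt2-C\omega_d\) follows from feasibility alone. The rescaling comparison \eqref{eq:app-det-max-cosine-upper} then gives cosine at most \(b/\mathfrak m_d(u;a,b)\). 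When \(\xi+\omega_d\le\delta_0/C\), we have \(|a-\sqrt{2\pi}|+|b-\sqrt2|\le C(\omega_d+\xi)\), and Lemma~\ref{lem:app-det-clipped-asymptotic} with \(\delta=C(\omega_d+\xi)\), combined with (H6), gives cosine at most \(\kappa_{\max}+C(\omega_d+\xi)\). Since \(\kappa_{\max}<1\), the map \(x\mapsto\sqrt{2-2x}\) is Lipschitz on a neighborhood of \(\kappa_{\max}\), so this yields \(d_{ij}\ge\delta_{\max}-C(\omega_d+\xi)\) relative to \(u^m_{ij}\). Then (H4) moves the reference direction to \(u^\mu_{ij}\) at cost \(\omega_d\), and averaging over pairs gives \eqref{eq:app-stoch-max-near-stability}.

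The remaining case \(\xi+\omega_d>\delta_0/C\) is where I expect difficulty. Since \(\xi\le1\), the bound \(a\le 2(\sqrt{2\pi}+\omega_d)\) keeps the problem bounded, so a cruder linear bound can be made vacuous there by choosing \(C_{\max}\ge C\delta_{\max}/\delta_0\). Then the right-hand side of \eqref{eq:app-stoch-max-near-stability} is nonpositive, and the inequality holds trivially because \(D_{\mathrm{avg\text{-}bdry}}\ge0\).
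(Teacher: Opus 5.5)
Your proposal is correct and follows the paper's proof essentially step for step. You rerun the row rigidity argument with the radius inflated by \((1+\xi)\), quote Proposition~\ref{prop:app-det-spectral-rigidity} and absorb the \(q^{1/2}\) rate into \(q^{1/4}\), and run the clipped minimum-energy comparison with cap \(a=(1+\xi)(\sqrt{2\pi}+\omega_d)\) through Lemma~\ref{lem:app-det-clipped-asymptotic}. Your explicit handling of the regime where \(\omega_d+\xi\) is not small fills in details the paper leaves implicit: the trivial bound \(2\) in the row normalization step, and making the max bound vacuous via \(D_{\mathrm{avg\text{-}bdry}}\ge0\).
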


\begin{proof}
For the row norm, put \(q:=\omega_d+\xi\). The empirical-mean witness in Proposition~\ref{prop:app-det-row-rigidity} gives \(r_{\mathrm{row}}^*(X)\le1+C\omega_d\), so \(\max_i\|w_i\|_2\le1+Cq\). Averaging feasibility within class \(i\) and then over all competing classes gives \((w_i-\bar w)^\top m_i\ge1-1/k\). The row variance identity then gives \(\|\bar w\|_2^2\le Cq\), and substituting this bound back into the averaged margin inequality yields \(\max_i\|w_i-m_i\|_2\le Cq^{1/4}\). The mean perturbation in Proposition~\ref{prop:app-det-common-event} and Lemma~\ref{lem:app-det-normalization} prove~\eqref{eq:app-stoch-row-near-stability}.

For the spectral norm, Proposition~\ref{prop:app-det-spectral-rigidity} gives an error \(C_\alpha q^{1/2}\) when \(\alpha\ne1\) and \(Cq^{1/4}\) when \(\alpha=1\). Since \(q^{1/2}\le q^{1/4}\) for \(q\le1\), enlarging the constant proves~\eqref{eq:app-stoch-sp-near-stability} uniformly over all aspect ratios.

For the max norm, fix \(i\ne j\), set \(v:=w_i-w_j\), and put \(u:=u_{ij}^m\). Near-optimality and Proposition~\ref{prop:app-det-common-event}(H5) give
\begin{equation}
\|v\|_\infty\le\frac{\sqrt{2\pi}+Cq}{\sqrt d}.
\label{eq:app-stoch-max-near-cap}
\end{equation}
The two class-averaged feasibility constraints imply \(v^\top(m_i-m_j)\ge2\), and hence \(v^\top u\ge\sqrt2-Cq\). Applying the clipped minimum-energy characterization in Proposition~\ref{prop:app-det-clipped-exact}, the uniform empirical laws in Proposition~\ref{prop:app-det-common-event}(H6), and Lemma~\ref{lem:app-det-clipped-asymptotic} gives
\begin{equation}
\frac{v^\top u}{\|v\|_2}\le\kappa_{\max}+Cq.
\label{eq:app-stoch-max-near-cosine}
\end{equation}
The cosine-distance identity and Proposition~\ref{prop:app-det-common-event}(H4) prove~\eqref{eq:app-stoch-max-near-stability}.
\end{proof}

\subsection{A dimension-independent momentum threshold}
\label{app:stoch-threshold}

\begin{proposition}[Uniform radius-to-margin ratios]
\label{prop:app-stoch-ratio-bounds}
On \(\mathcal E_d^{\mathrm{near}}\), for every sufficiently large \(d\),
\begin{equation}
\frac{R_{\mathrm{row}}(X)}{\gamma_{\mathrm{row}}(X)}\le K_{\mathrm{row}},\qquad \frac{R_{\mathrm{sp}}(X)}{\gamma_{\mathrm{sp}}(X)}\le K_{\mathrm{sp}},\qquad \frac{R_{\max}(X)}{\gamma_{\max}(X)}\le K_{\max},
\label{eq:app-stoch-ratio-bounds}
\end{equation}
where one may take
\begin{equation}
K_{\mathrm{row}}:=4(1+2r_0),\qquad K_{\mathrm{sp}}:=\frac{8(1+2r_0)}{s_\alpha},\qquad K_{\max}:=(\sqrt{2\pi}+1)(1+2r_0).
\label{eq:app-stoch-ratio-constants}
\end{equation}
\end{proposition}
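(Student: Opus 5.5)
Each ratio $R_\psi(X)/\gamma_\psi(X)$ splits into an upper bound on the atom radius $R_\psi(X)$ and a lower bound on the margin $\gamma_\psi(X)=1/r_\psi^*(X)$. The lower bound comes from exhibiting an explicit feasible witness, since any $W\in\mathcal F_X$ gives $r_\psi^*(X)\le\psi(W)$.

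\textbf{Radii.} On $\mathcal E_d^{\mathrm{near}}$, \eqref{eq:app-ga-three-radii} together with \eqref{eq:app-stoch-data-radius-bounds} gives
\[
R_{\mathrm{row}}(X)=R_{\mathrm{sp}}(X)\le 1+2r_0+\omega_d\le 2(1+2r_0)
\]
for large $d$. It also gives $R_{\max}(X)\le(1+2r_0+\omega_d)\sqrt d$.

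\textbf{Row margin.} The empirical-mean witness in Proposition~\ref{prop:app-det-row-rigidity} with $h=\omega_d$ (from \eqref{eq:app-det-row-witness-margin}, \eqref{eq:app-det-row-radius-upper}) gives $r_{\mathrm{row}}^*(X)\le(1+\omega_d)/(1-5\omega_d)\le 2$ for large $d$. Hence $\gamma_{\mathrm{row}}\ge 1/2$ and $R_{\mathrm{row}}/\gamma_{\mathrm{row}}\le 4(1+2r_0)=K_{\mathrm{row}}$.

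\textbf{Spectral margin.} The polar witness $Q_m/\rho_X$ is feasible with $\|Q_m\|_{\mathrm{op}}=1$ by \ref{item:app-det-S1}. Item \ref{item:app-det-S3} gives $\rho_X\ge s_\alpha-C\omega_d\ge s_\alpha/2$ for large $d$. Thus $\gamma_{\mathrm{sp}}\ge\rho_X\ge s_\alpha/2$, and $R_{\mathrm{sp}}/\gamma_{\mathrm{sp}}\le 2(1+2r_0)\cdot 2/s_\alpha\le K_{\mathrm{sp}}$. The stated constant $8(1+2r_0)/s_\alpha$ leaves slack.

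\textbf{Max margin.} Item \ref{item:app-det-H5} gives $r_{\max}^*(X)\le(\sqrt{2\pi}+\omega_d)/(2\sqrt d)$, so $\gamma_{\max}\ge 2\sqrt d/(\sqrt{2\pi}+\omega_d)$. Therefore
\[
R_{\max}/\gamma_{\max}\le \tfrac12(1+2r_0+\omega_d)(\sqrt{2\pi}+\omega_d)\le(\sqrt{2\pi}+1)(1+2r_0)
\]
once $\omega_d\le 1$.

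The only point needing care is the bookkeeping. Every vanishing error, including the $C\omega_d$ in \ref{item:app-det-S3}, must be absorbed by the factor-of-two slack in the constants for all sufficiently large $d$. This is uniform because $\omega_d$ is deterministic and the $\sqrt d$ scalings cancel exactly in the max case. There is no real obstacle: the whole argument is a combination of witnesses already constructed earlier.
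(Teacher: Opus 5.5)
Your proposal is correct and takes essentially the same approach as the paper. Both write each ratio as \(R_\psi(X)\,r_\psi^*(X)\), bound the radii using the data-radius estimates on \(\mathcal E_d^{\mathrm{near}}\), and bound \(r_\psi^*(X)\) with the empirical-mean witness, the reduced-polar witness \(Q_m/\rho_X\), and the sign witness from (H5); the only difference is that your \(\rho_X\ge s_\alpha/2\) gives \(r_{\mathrm{sp}}^*(X)\le 2/s_\alpha\) where the paper uses \(4/s_\alpha\), which is why your spectral constant has slack.
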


\begin{proof}
Each ratio equals \(R_\psi(X)r_\psi^*(X)\). For every sufficiently large \(d\), Proposition~\ref{prop:app-det-row-rigidity} gives \(r_{\mathrm{row}}^*(X)\le2\), the reduced-polar witness~\eqref{eq:app-det-spectral-optimum-upper} and \(\rho_X\to s_\alpha>0\) give \(r_{\mathrm{sp}}^*(X)\le4/s_\alpha\), and Proposition~\ref{prop:app-det-common-event}(H5) gives \(r_{\max}^*(X)\le(\sqrt{2\pi}+1)/(2\sqrt d)\). Equation~\eqref{eq:app-stoch-data-radius-bounds} and \(\omega_d\to0\) imply, for sufficiently large \(d\),
\[
R_{\mathrm{row}}(X)=R_{\mathrm{sp}}(X)\le2(1+2r_0),\qquad R_{\max}(X)\le2(1+2r_0)\sqrt d.
\]
Combining these estimates proves~\eqref{eq:app-stoch-ratio-bounds}.
\end{proof}

Let
\begin{equation}
\delta_*:=\min\{\delta_{\mathrm{sp}}(\alpha),\delta_{\max}\}>0.
\label{eq:app-stoch-delta-star}
\end{equation}
Choose \(h_*>0\) sufficiently small that
\begin{equation}
\frac1{\sqrt2}-h_*>\frac1{\sqrt2}\left(1-\frac{(\delta_*/2)^2}{2}\right)_+.
\label{eq:app-stoch-accuracy-amplitude-budget}
\end{equation}
Choose \(b_*\in(0,1]\) sufficiently small that
\begin{equation}
C_{\mathrm{row}}b_*^{1/4}\le h_*,\qquad C_{\mathrm{sp}}(\alpha)b_*^{1/4}\le\frac{\delta_*}{8},\qquad C_{\max}b_*\le\frac{\delta_*}{8}.
\label{eq:app-stoch-stability-budget}
\end{equation}
Set
\begin{equation}
\theta_*:=\frac{b_*/2}{1+b_*/2},\qquad A_0:=m_0(m_0^2-1),\qquad K_*:=\max\{K_{\mathrm{row}},K_{\mathrm{sp}},K_{\max}\},
\label{eq:app-stoch-threshold-constants}
\end{equation}
and define
\begin{equation}
\beta_0:=\begin{cases}
\dfrac12,&A_0=0,\\[4pt]
\displaystyle\max\left\{\frac12,1-\frac{\theta_*}{2A_0K_*}\right\},&A_0>0.
\end{cases}
\label{eq:app-stoch-beta-zero}
\end{equation}
All constants in~\eqref{eq:app-stoch-delta-star}--\eqref{eq:app-stoch-beta-zero} depend only on \((\alpha,r_0,m_0)\).

\begin{proposition}[Uniform stochastic near-margin geometry]
\label{prop:app-stoch-near-geometry}
\label{thm:stochastic-separation-clean}
Under the data and batching assumptions of Proposition~\ref{thm:stoch-optimizer-accuracy-order-clean}, fix \(\beta\in[\beta_0,1)\). Then, for every sufficiently large \(d\), on \(\mathcal E_d^{\mathrm{near}}\), all three effective margins \(\gamma_\psi(X)-\varepsilon_{\psi,\beta,b}(X)\) are positive and
\begin{align}
\sup_{U\in\mathcal U_{\mathrm{row},\beta,b}^X}\max_{i\ne j}\|u_{ij}^U-u_{ij}^{\mu}\|_2&\le h_*,
\label{eq:app-stoch-row-set-upper}\\
\inf_{V\in\mathcal U_{\mathrm{sp},\beta,b}^X}D_{\mathrm{avg\text{-}bdry}}(V)&\ge\frac{\delta_*}{2},
\label{eq:app-stoch-sp-set-lower}\\
\inf_{V\in\mathcal U_{\max,\beta,b}^X}D_{\mathrm{avg\text{-}bdry}}(V)&\ge\frac{\delta_*}{2}.
\label{eq:app-stoch-max-set-lower}
\end{align}
Moreover, after increasing the deterministic dimension threshold if necessary,
\begin{equation}
\mathbb P(\mathcal E_d^{\mathrm{near}})\ge1-e^{-\sqrt d}.
\label{eq:app-stoch-clean-probability}
\end{equation}
\end{proposition}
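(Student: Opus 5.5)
We will work throughout on \(\mathcal E_d^{\mathrm{near}}\) from Lemma~\ref{lem:app-stoch-combined-input} and chain three earlier facts. Proposition~\ref{prop:app-stoch-ratio-bounds} controls the radius-to-margin ratios. Lemma~\ref{lem:app-stoch-near-reduction} turns near-margin directions into near-minimum-norm feasible representatives. Proposition~\ref{prop:app-stoch-near-stability} gives the geometry of those representatives.

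\emph{Step 1: relative deficit.} For each \(\psi\in\{\mathrm{row},\mathrm{sp},\max\}\) and \(\beta\ge\beta_0\), bound the relative deficit using \(m_d\le m_0\), so that \(m_d(m_d^2-1)\le A_0\):
\[
\frac{\varepsilon_{\psi,\beta,b}(X)}{\gamma_\psi(X)}=2(1-\beta)m_d(m_d^2-1)\frac{R_\psi(X)}{\gamma_\psi(X)}\le2(1-\beta_0)A_0K_*\le\theta_*.
\]
When \(A_0>0\), the last inequality follows from the definition of \(\beta_0\) in \eqref{eq:app-stoch-beta-zero}. When \(A_0=0\) we have \(m_d=1\), so the deficit is zero. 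Since \(\theta_*<1\), all three effective margins \(\gamma_\psi-\varepsilon_{\psi,\beta,b}\) are positive. Moreover, \(\xi_\psi=\varepsilon/(\gamma-\varepsilon)\) is increasing in \(\varepsilon/\gamma\), so
\[
\xi_\psi\le\frac{\theta_*}{1-\theta_*}=\frac{b_*}{2}.
\]

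\emph{Step 2: transfer to near-minimizers.} Given \(U\) in one of the three near-margin sets, Lemma~\ref{lem:app-stoch-near-reduction} produces \(V_U\in\mathcal F_X\) with \(\psi(V_U)\le(1+\xi)r_\psi^*(X)\), where \(\xi\le b_*/2\le1\), and with the same normalized pairwise differences as \(U\). Hence \(u_{ij}^U=u_{ij}^{V_U}\) and \(D_{\mathrm{avg\text{-}bdry}}(U)=D_{\mathrm{avg\text{-}bdry}}(V_U)\).

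Choose \(d\) large enough that \(\omega_d\le b_*/2\), so \(\omega_d+\xi\le b_*\). Proposition~\ref{prop:app-stoch-near-stability} together with the budget \eqref{eq:app-stoch-stability-budget} then gives three bounds. For Row, \(\max_{i\ne j}\|u_{ij}^U-u_{ij}^\mu\|_2\le C_{\mathrm{row}}b_*^{1/4}\le h_*\). For Spectral, \(D_{\mathrm{avg\text{-}bdry}}\ge\delta_{\mathrm{sp}}(\alpha)-\delta_*/8\ge\delta_*/2\). For Max, \(D_{\mathrm{avg\text{-}bdry}}\ge\delta_{\max}-\delta_*/8\ge\delta_*/2\). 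Taking the supremum or infimum over each set yields \eqref{eq:app-stoch-row-set-upper}--\eqref{eq:app-stoch-max-set-lower}.

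\emph{Step 3: probability.} The probability bound \eqref{eq:app-stoch-clean-probability} follows from \eqref{eq:app-stoch-near-event-probability} exactly as in the proof of Theorem~\ref{thm:deterministic-separation-clean}. Since \(\log(2N(d)(d+1))\to\infty\), the stretched-exponential tail is eventually below \(e^{-\sqrt d}\).

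The main point needing care is Step~1. The threshold must be uniform in \(d\), which is why the ratios \(R_\psi/\gamma_\psi\) have to be bounded by constants depending only on \((\alpha,r_0)\). For the max norm this relies on the \(\sqrt d\) growth of \(R_{\max}\) cancelling the \(1/\sqrt d\) scaling of \(r_{\max}^*\); Proposition~\ref{prop:app-stoch-ratio-bounds} already supplies exactly this. The remaining subtlety is ensuring \(\xi\le1\), so that Proposition~\ref{prop:app-stoch-near-stability} applies; this holds because \(b_*\le1\).
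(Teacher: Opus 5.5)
Your proposal is correct and follows essentially the same route as the paper's proof. You bound the relative deficit by \(\theta_*\) using the ratio bounds and \(m_d\le m_0\), convert it to relative norm slack at most \(b_*/2\) via Lemma~\ref{lem:app-stoch-near-reduction}, apply Proposition~\ref{prop:app-stoch-near-stability} once \(\omega_d\le b_*/2\), and absorb the stretched-exponential tail into \(e^{-\sqrt d}\).
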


\begin{proof}
For each geometry, define the relative stochastic deficit
\begin{equation}
\Delta_{\psi,d}(\beta):=\frac{\varepsilon_{\psi,\beta,b}(X)}{\gamma_\psi(X)}=2(1-\beta)m_d(m_d^2-1)\frac{R_\psi(X)}{\gamma_\psi(X)}.
\label{eq:app-stoch-relative-deficit}
\end{equation}
Since \(m_d\le m_0\), Proposition~\ref{prop:app-stoch-ratio-bounds} gives
\[
\Delta_{\psi,d}(\beta)\le2(1-\beta)A_0K_*\le\theta_*<1.
\]
Thus every effective margin is positive. By the definition of \(\theta_*\),
\begin{equation}
\frac{\Delta_{\psi,d}(\beta)}{1-\Delta_{\psi,d}(\beta)}\le\frac{b_*}{2}.
\label{eq:app-stoch-near-slack-bound}
\end{equation}
For every \(U\in\mathcal U_{\psi,\beta,b}^X\), Lemma~\ref{lem:app-stoch-near-reduction} therefore gives a feasible representative with identical normalized pairwise row differences and relative norm slack at most \(b_*/2\). For every sufficiently large \(d\), \(\omega_d\le b_*/2\), so Proposition~\ref{prop:app-stoch-near-stability} gives
\[
\sup_{U\in\mathcal U_{\mathrm{row},\beta,b}^X}\max_{i\ne j}\|u_{ij}^U-u_{ij}^{\mu}\|_2\le C_{\mathrm{row}}b_*^{1/4}\le h_*,
\]
\[
\inf_{V\in\mathcal U_{\mathrm{sp},\beta,b}^X}D_{\mathrm{avg\text{-}bdry}}(V)\ge\delta_{\mathrm{sp}}(\alpha)-C_{\mathrm{sp}}(\alpha)b_*^{1/4}\ge\delta_*-\frac{\delta_*}{8}\ge\frac{\delta_*}{2},
\]
and
\[
\inf_{V\in\mathcal U_{\max,\beta,b}^X}D_{\mathrm{avg\text{-}bdry}}(V)\ge\delta_{\max}-C_{\max}b_*\ge\delta_*-\frac{\delta_*}{8}\ge\frac{\delta_*}{2}.
\]
This proves~\eqref{eq:app-stoch-row-set-upper}--\eqref{eq:app-stoch-max-set-lower}. Finally, \(\log(2N(d)(d+1))\to\infty\), so the failure probability in~\eqref{eq:app-stoch-near-event-probability} is at most \(e^{-\sqrt d}\) for every sufficiently large \(d\).
\end{proof}

\subsection{Proof of the stochastic optimizer accuracy ordering}
\label{app:stoch-main-proof}

\begin{proof}[Proof of Proposition~\ref{thm:stoch-optimizer-accuracy-order-clean}]
Fix \(\beta\in[\beta_0,1)\), and let \(\mathcal R_d\) be the Gaussian radial event from Lemma~\ref{lem:app-acc-gaussian-radial-event}. Set
\[
\mathcal H_d:=\mathcal E_d^{\mathrm{near}}\cap\mathcal R_d.
\]
By~\eqref{eq:app-stoch-near-event-probability} and~\eqref{eq:app-acc-radial-event-probability}, after increasing the deterministic dimension threshold,
\begin{equation}
\mathbb P(\mathcal H_d)\ge1-e^{-\sqrt d}.
\label{eq:app-stoch-accuracy-event}
\end{equation}

Work on \(\mathcal H_d\). Define the Row family
\[
\mathcal R_d^{\mathrm{clf}}:=\mathcal U_{\mathrm{row},\beta,b}^X
\]
and, for \(\psi\in\{\mathrm{sp},\max\}\), the competitor family
\[
\mathcal C_{\psi,d}^{\mathrm{clf}}:=\mathcal U_{\psi,\beta,b}^X.
\]
Proposition~\ref{prop:app-stoch-near-geometry} gives
\begin{equation}
\sup_{U\in\mathcal R_d^{\mathrm{clf}}}\max_{i\ne j}\|u_{ij}^U-u_{ij}^{\mu}\|_2\le h_*,
\qquad
\inf_{V\in\mathcal C_{\psi,d}^{\mathrm{clf}}}D_{\mathrm{avg\text{-}bdry}}(V)\ge\frac{\delta_*}{2}.
\label{eq:app-acc-stochastic-geometry-input}
\end{equation}
The choice of \(h_*\) in~\eqref{eq:app-stoch-accuracy-amplitude-budget} gives the strict limiting amplitude gap
\begin{equation}
\mathsf a(h_*)>\mathsf b(\delta_*/2),
\label{eq:app-acc-stochastic-amplitude-gap}
\end{equation}
where \(\mathsf a\) and \(\mathsf b\) are defined in~\eqref{eq:app-acc-limiting-amplitudes}.

Choose fixed constants \(0<b<a\) such that
\begin{equation}
\mathsf b(\delta_*/2)<b<a<\mathsf a(h_*).
\label{eq:app-stoch-fixed-amplitude-gap}
\end{equation}
On \(\mathcal R_d\), the radial deviation and coherence in~\eqref{eq:app-acc-radial-coherence} converge to zero under a deterministic envelope. Therefore, for every sufficiently large \(d\), the finite-dimensional amplitudes in~\eqref{eq:app-acc-row-amplitude}--\eqref{eq:app-acc-competitor-amplitude} satisfy
\[
A_d\ge a,\qquad B_d\le b
\]
for the Row family and each of the two competitor families. Proposition~\ref{prop:app-acc-finite-bounds} consequently gives
\begin{equation}
\sup_{U\in\mathcal R_d^{\mathrm{clf}}}\mathcal E_{\pi,d}(U)\le(k-1)\overline\Phi\!\left(\frac{a}{\sigma_d}\right),
\qquad
\inf_{V\in\mathcal C_{\psi,d}^{\mathrm{clf}}}\mathcal E_{\pi,d}(V)\ge2\pi_{\min,d}\overline\Phi\!\left(\frac{b}{\sigma_d}\right).
\label{eq:app-stoch-family-error-bounds}
\end{equation}
For \(x\ge y\ge0\), the Gaussian upper tail satisfies \(\overline\Phi(x)/\overline\Phi(y)\le\exp(-(x^2-y^2)/2)\). Hence \(k=O(d)\), \(r_{\mathrm{te}}\le r_{\mathrm{te},0}\), and \(\log(1/\pi_{\min,d})=o(d)\) imply
\begin{equation}
\frac{\sup_{U\in\mathcal R_d^{\mathrm{clf}}}\mathcal E_{\pi,d}(U)}{\inf_{V\in\mathcal C_{\psi,d}^{\mathrm{clf}}}\mathcal E_{\pi,d}(V)}
\le
\frac{Ck}{\pi_{\min,d}}e^{-cd}
\longrightarrow0.
\label{eq:app-stoch-family-error-ratio}
\end{equation}
Hence, for every sufficiently large \(d\) on \(\mathcal H_d\),
\begin{equation}
\sup_{U\in\mathcal U_{\mathrm{row},\beta,b}^X}\mathcal E_{\pi,d}(U)
<
\min_{\psi\in\{\mathrm{sp},\max\}}
\inf_{V\in\mathcal U_{\psi,\beta,b}^X}\mathcal E_{\pi,d}(V).
\label{eq:app-stoch-set-error-order}
\end{equation}

It remains to transfer the set ordering to the optimizer trajectories. Proposition~\ref{prop:app-stoch-near-geometry} ensures that all effective margins are positive. Corollary~\ref{cor:app-ga-margin-transfer} therefore gives
\begin{align}
d_{\cos}\!\left(W_t^{\mathrm{RR\text{-}RowSGDM}},\mathcal U_{\mathrm{row},\beta,b}^X\right)&\longrightarrow0,\nonumber\\
d_{\cos}\!\left(W_t^{\mathrm{RR\text{-}Muon}},\mathcal U_{\mathrm{sp},\beta,b}^X\right)&\longrightarrow0,\nonumber\\
d_{\cos}\!\left(W_t^{\mathrm{RR\text{-}Signum}},\mathcal U_{\max,\beta,b}^X\right)&\longrightarrow0.
\label{eq:app-stoch-trajectory-near-set}
\end{align}
Applying Lemma~\ref{lem:app-acc-trajectory-continuity} to~\eqref{eq:app-stoch-trajectory-near-set} and then using~\eqref{eq:app-stoch-set-error-order} gives, for each \(\mathrm{opt}\in\{\mathrm{RR\text{-}Muon},\mathrm{RR\text{-}Signum}\}\),
\begin{align}
\limsup_{t\to\infty}\mathcal E_{\pi,d}\!\left(W_t^{\mathrm{RR\text{-}RowSGDM}}\right)
&\le
\sup_{U\in\mathcal U_{\mathrm{row},\beta,b}^X}\mathcal E_{\pi,d}(U)\nonumber\\
&<
\inf_{V\in\mathcal U_{\psi_{\mathrm{opt}},\beta,b}^X}\mathcal E_{\pi,d}(V)\nonumber\\
&\le
\liminf_{t\to\infty}\mathcal E_{\pi,d}\!\left(W_t^{\mathrm{opt}}\right),
\label{eq:app-stoch-trajectory-error-order}
\end{align}
where \(\psi_{\mathrm{Muon}}=\psi_{\mathrm{sp}}\) and \(\psi_{\mathrm{Signum}}=\psi_{\max}\). Since \(\operatorname{Acc}_{\pi,d}=1-\mathcal E_{\pi,d}\),
\[
\liminf_{t\to\infty}\operatorname{Acc}_{\pi,d}\!\left(W_t^{\mathrm{RR\text{-}RowSGDM}}\right)
>
\max_{\mathrm{opt}\in\{\mathrm{RR\text{-}Muon},\mathrm{RR\text{-}Signum}\}}
\limsup_{t\to\infty}\operatorname{Acc}_{\pi,d}\!\left(W_t^{\mathrm{opt}}\right).
\]
Since \(H_d\) and the dimension threshold above are independent of \(\beta\in[\beta_0,1)\), and the random-reshuffling margin bounds hold deterministically for every reshuffling realization, the conclusion holds simultaneously for all such \(\beta\) and every realization of the reshuffling.
Together with~\eqref{eq:app-stoch-accuracy-event}, this proves~\eqref{eq:stoch-optimizer-accuracy-order-clean}.
\end{proof}

\section{Analysis of AdamProxy for Random-reshuffling Mini-batch Adam}
\label{app:adamproxy-analysis}

This appendix proves Proposition~\ref{thm:rr-adamproxy-epoch-high-memory}, Proposition~\ref{thm:adamproxy-optimizer-accuracy-order-clean}, and Proposition~\ref{thm:adamproxy-margin-clean}. We first show that $\beta_1, \beta_2 \to 1$ random-reshuffling Adam is approximated over one epoch by the deterministic AdamProxy direction. We then prove a trajectory-level coordinate bound for AdamProxy and use it to establish a coordinate-induced angular gap, which, together with the near-row geometry from Appendix~\ref{app:stochastic-margin} and the geometry-to-accuracy transfer in Appendix~\ref{app:deterministic-margin}, yields the RR-RowSGDM--AdamProxy population-accuracy ordering. Separately, we characterize the limiting AdamProxy classifier through a self-consistent semidefinite quadratic margin problem when the normalized proxy direction converges.

\subsection{One-epoch \texorpdfstring{$\beta_1, \beta_2 \to 1$}{beta1, beta2 to 1} RR Adam approximation}
\label{app:rr-adamproxy-one-epoch-proof}

Throughout this subsection, probability is taken only over the independent random reshufflings while the training data are fixed. We assume \(1\le b<N\), use the gradient convention of the main text, namely \(g_n(W)=-(e_{y_n}-p_n(W))x_n^\top\), and use the standard Adam initialization \(M_0=V_0={\mathbf 0}\). For a uniformly random \(b\)-subset \(B\subset[N]\), recall \(G_B(W)=b^{-1}\sum_{n\in B}g_n(W)\), and set
\begin{equation}
S(W):=\sum_{n=1}^Ng_n(W),\qquad Q(W):=\sum_{n=1}^Ng_n(W)^{\odot2}.
\label{eq:app-adamproxy-SQ-gradient}
\end{equation}

\begin{lemma}[Finite-population batch moments and relative scale]
\label{lem:app-rr-adamproxy-batch-package}
Assume \(1\le b<N\). For \(z=(z_1,\ldots,z_N)\in\mathbb R^N\), define \(T_Bz:=b^{-1}\sum_{n\in B}z_n\) and \(\sigma(z)^2:=\mathbb E_B[(T_Bz)^2]\). Then
\begin{equation}
\mathbb E_B[T_Bz]=\frac1N\sum_{n=1}^Nz_n,
\label{eq:app-rr-adamproxy-batch-first-moment}
\end{equation}
\begin{equation}
\sigma(z)^2=\frac{b-1}{bN(N-1)}\left(\sum_{n=1}^Nz_n\right)^2+\frac{N-b}{bN(N-1)}\sum_{n=1}^Nz_n^2.
\label{eq:app-rr-adamproxy-batch-second-moment}
\end{equation}
Consequently, with \(\mu_1(W):=\mathbb E_B[G_B(W)]\) and \(\mu_2(W):=\mathbb E_B[G_B(W)^{\odot2}]\),
\begin{equation}
\mu_1(W)=\frac1N S(W),\qquad \mu_2(W)=\frac{(b-1)S(W)^{\odot2}+(N-b)Q(W)}{bN(N-1)},
\label{eq:app-rr-adamproxy-batch-moments-matrix}
\end{equation}
and therefore
\begin{equation}
\mu_1(W)\oslash\mu_2(W)^{\odot1/2}=\Phi_{N,b}(W).
\label{eq:app-rr-adamproxy-batch-law-direction}
\end{equation}
Moreover, because \(1\le b<N\), \(\sigma\) is a norm and
\begin{equation}
\underline c_{N,b}\|z\|_2\le\sigma(z)\le\frac1{\sqrt b}\|z\|_2,\qquad \underline c_{N,b}:=\sqrt{\frac{N-b}{bN(N-1)}},
\label{eq:app-rr-adamproxy-sigma-equivalence}
\end{equation}
while every \(b\)-subset satisfies
\begin{equation}
|T_Bz|\le K_{N,b}\sigma(z),\qquad \frac1b\sum_{n\in B}|z_n|\le K_{N,b}\sigma(z),\qquad K_{N,b}:=\sqrt{\frac{N(N-1)}{N-b}}.
\label{eq:app-rr-adamproxy-relative-batch-bound}
\end{equation}
For a matrix coordinate \(\rho=(r,a)\), define \(z_n^\rho(W):=g_n(W)[\rho]\) and \(\sigma_\rho(W):=\sigma(z^\rho(W))\). Under the nonzero-coordinate assumption in Proposition~\ref{thm:rr-adamproxy-epoch-high-memory}, \(\sigma_\rho(W)>0\) for every finite \(W\) and every \(\rho\).
\end{lemma}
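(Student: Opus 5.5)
The plan is to treat the lemma as elementary finite-population sampling without replacement. Everything follows from the first and second inclusion probabilities of a uniform \(b\)-subset: \(\mathbb P(n\in B)=b/N\), and \(\mathbb P(n,n'\in B)=b(b-1)/(N(N-1))\) for \(n\ne n'\).

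For the first moment, \(\mathbb E_B[T_Bz]=b^{-1}\sum_n z_n\,\mathbb P(n\in B)=N^{-1}\sum_n z_n\). For the second moment, I will expand
\[
(T_Bz)^2=b^{-2}\Bigl(\sum_{n\in B}z_n^2+\sum_{n\ne n'\in B}z_nz_{n'}\Bigr)
\]
and take expectations, which gives
\[
b^{-2}\Bigl[\tfrac bN\sum_n z_n^2+\tfrac{b(b-1)}{N(N-1)}\Bigl(\bigl(\textstyle\sum_n z_n\bigr)^2-\sum_n z_n^2\Bigr)\Bigr].
\]
Collecting the \(\sum_n z_n^2\) coefficient, \(\frac1{bN}-\frac{b-1}{bN(N-1)}=\frac{N-b}{bN(N-1)}\), yields the displayed formula for \(\sigma(z)^2\). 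Applying both moment formulas coordinatewise with \(z_n=g_n(W)[\rho]\) gives \(\mu_1=S/N\) and the stated \(\mu_2\). Taking the ratio, \(\mu_1\oslash\mu_2^{\odot1/2}=N^{-1}S\oslash\bigl((b-1)S^{\odot2}+(N-b)Q\bigr)^{\odot1/2}\cdot\sqrt{bN(N-1)}\). Since \(\sqrt{bN(N-1)}/N=\sqrt{b(N-1)/N}\), this matches \eqref{eq:adamproxy-direction-explicit-clean}.

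For the norm equivalence, both terms of \(\sigma(z)^2\) are nonnegative because \(b\ge1\) and \(N>b\). Dropping the first term gives the lower bound with \(\underline c_{N,b}\), so \(\sigma\) is positive definite. It is the square root of a positive semidefinite quadratic form, so it is a norm. For the upper bound, Cauchy--Schwarz gives \((\sum_n z_n)^2\le N\|z\|_2^2\), hence
\[
\sigma^2\le\frac{(b-1)N+(N-b)}{bN(N-1)}\|z\|_2^2=\frac{b(N-1)}{bN(N-1)}\|z\|_2^2=\frac{\|z\|_2^2}{N}\le\frac{\|z\|_2^2}{b}.
\]

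For the relative bound, every \(b\)-subset satisfies \(b^{-1}\sum_{n\in B}|z_n|\le b^{-1}\sqrt b\,\|z\|_2\le\|z\|_2\). Combined with \(\|z\|_2\le\sigma(z)/\underline c_{N,b}\), this gives \(b^{-1}\sum_{n\in B}|z_n|\le\sigma(z)/\underline c_{N,b}=\sqrt{bN(N-1)/(N-b)}\,\sigma(z)\). That constant exceeds \(K_{N,b}\) by a factor \(\sqrt b\), so I need to sharpen it. The sharper route is to bound \(\sum_{n\in B}|z_n|\le\sqrt b\,\|z\|_2\) and then use \(\|z\|_2\le\sigma/\underline c_{N,b}\), which gives \(b^{-1}\sqrt b\cdot\sqrt{bN(N-1)/(N-b)}\,\sigma=\sqrt{N(N-1)/(N-b)}\,\sigma=K_{N,b}\sigma\), exactly as claimed. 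The bound on \(|T_Bz|\) then follows by the triangle inequality.

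For positivity of \(\sigma_\rho(W)\), take \(\rho=(r,a)\). Then \(g_n(W)[\rho]=-(\mathbf 1\{y_n=r\}-p_{n,r}(W))x_n[a]\). Since softmax probabilities lie strictly in \((0,1)\) at finite \(W\), the factor \(\mathbf 1\{y_n=r\}-p_{n,r}\) is nonzero for every \(n\). By assumption \(x_n[a]\ne0\), so \(z^\rho\ne0\), and the lower norm bound gives \(\sigma_\rho(W)>0\). I expect no real obstacle; the only point needing care is matching the exact constant \(K_{N,b}\), which the Cauchy--Schwarz-over-\(B\) route above achieves.
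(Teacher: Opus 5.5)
Your proposal is correct and follows essentially the paper's route: inclusion probabilities \(b/N\) and \(b(b-1)/(N(N-1))\) give the moment formulas, applying them coordinatewise gives \(\mu_1,\mu_2\), dropping the first term of \(\sigma(z)^2\) gives the lower bound, and the chain \(b^{-1}\sum_{n\in B}|z_n|\le b^{-1/2}\|z\|_2\le b^{-1/2}\sigma(z)/\underline c_{N,b}=K_{N,b}\sigma(z)\) gives the relative bound. The only cosmetic difference is that you get the upper bound on \(\sigma\) by applying Cauchy--Schwarz to the closed form, which even yields the sharper \(\sigma(z)^2\le\|z\|_2^2/N\), whereas the paper takes expectations of the pointwise bound \(|T_Bz|\le b^{-1/2}\|z\|_2\).
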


\begin{proof}
Let \(I_n:=\mathbf 1_{\{n\in B\}}\). Since \(B\) is a uniform \(b\)-subset, \(\mathbb E I_n=b/N\) and \(\mathbb E[I_nI_{n'}]=b(b-1)/(N(N-1))\) for \(n\ne n'\). Expanding \(\mathbb E[(T_Bz)^2]\) gives \eqref{eq:app-rr-adamproxy-batch-second-moment}, while \eqref{eq:app-rr-adamproxy-batch-first-moment} is immediate.
Applying the two identities coordinate-wise to \(g_n(W)\) gives \eqref{eq:app-rr-adamproxy-batch-moments-matrix}, and substituting these expressions into \(\mu_1(W)\oslash\mu_2(W)^{\odot1/2}\) gives \eqref{eq:app-rr-adamproxy-batch-law-direction} by \eqref{eq:adamproxy-direction-explicit-clean}. The second term in \eqref{eq:app-rr-adamproxy-batch-second-moment} gives the lower bound in \eqref{eq:app-rr-adamproxy-sigma-equivalence}, while \(|T_Bz|\le b^{-1}\sum_{n\in B}|z_n|\le b^{-1/2}\|z\|_2\) gives the upper bound and \eqref{eq:app-rr-adamproxy-relative-batch-bound}.
Finally, at finite \(W\), every softmax probability is strictly positive. If \(r=y_n\), then \(g_n(W)[r,a]=-(1-p_{n,y_n}(W))x_n[a]\ne0\), while if \(r\ne y_n\), then \(g_n(W)[r,a]=p_{n,r}(W)x_n[a]\ne0\). Hence \(z^\rho(W)\ne0\), so \(\sigma_\rho(W)>0\).
\end{proof}

\begin{lemma}[Adam-step and signal stability]
\label{lem:app-rr-adamproxy-step-stability}
Let \(0<\beta_1\le\beta_2<1\) and define
\begin{equation}
A(\beta_1,\beta_2)^2:=\frac{(1-\beta_1)^2}{(1-\beta_2)(1-\beta_1^2/\beta_2)}.
\label{eq:app-rr-adamproxy-ratio-constant}
\end{equation}
Then \(\beta_1^2<\beta_2\), and for \(U_t:=M_{t+1}\oslash V_{t+1}^{\odot1/2}\),
\begin{equation}
\|U_t\|_{\max}\le A(\beta_1,\beta_2),\qquad \|W_{t+1}-W_t\|_{\mathrm F}\le\sqrt{kd}\,A(\beta_1,\beta_2)\eta_t.
\label{eq:app-rr-adamproxy-step-bound}
\end{equation}
There are finite data-dependent constants \(L,C\) such that, for all \(U,W\), every batch \(B\), and every coordinate \(\rho\),
\begin{equation}
\frac{|G_B(U)[\rho]-G_B(W)[\rho]|}{\sigma_\rho(W)}\le C\left(e^{L\|U-W\|_{\mathrm F}}-1\right),
\label{eq:app-rr-adamproxy-batch-stability}
\end{equation}
\begin{equation}
\frac{|G_B(U)[\rho]^2-G_B(W)[\rho]^2|}{\sigma_\rho(W)^2}\le C\left(e^{2L\|U-W\|_{\mathrm F}}-1\right).
\label{eq:app-rr-adamproxy-square-stability}
\end{equation}
\end{lemma}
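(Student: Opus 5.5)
The lemma has three parts: the bound on the Adam step, and two stability estimates for batch gradients measured in the coordinatewise batch scale \(\sigma_\rho(W)\). I will handle them separately.

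\emph{The Adam step.} First, \(\beta_1\le\beta_2<1\) gives \(\beta_1^2\le\beta_1\beta_2<\beta_2\). Next, unroll \(M_{t+1}=(1-\beta_1)\sum_{s=0}^t\beta_1^{t-s}G_s\) and \(V_{t+1}=(1-\beta_2)\sum_{s=0}^t\beta_2^{t-s}G_s^{\odot2}\) coordinatewise. Write \(\beta_1^{t-s}=(\beta_1/\sqrt{\beta_2})^{t-s}\cdot\beta_2^{(t-s)/2}\) and apply Cauchy--Schwarz to get
\[
|M_{t+1}[\rho]|\le(1-\beta_1)\Bigl(\sum_{s}(\beta_1^2/\beta_2)^{t-s}\Bigr)^{1/2}\Bigl(\sum_s\beta_2^{t-s}G_s[\rho]^2\Bigr)^{1/2}.
\]
Dividing by \(V_{t+1}[\rho]^{1/2}\) and bounding the geometric sum by \((1-\beta_1^2/\beta_2)^{-1}\) gives \(|U_t[\rho]|\le A(\beta_1,\beta_2)\). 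Zero coordinates are covered by the convention \(0/0=0\). The Frobenius step bound follows from \(\|U_t\|_{\mathrm F}\le\sqrt{kd}\,\|U_t\|_{\max}\).

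\emph{Softmax ratio.} The stability estimates rest on the ratio bounds of Lemma~\ref{lem:app-ga-stability-package}. Take the Frobenius norm and radius \(R:=\max_n\|x_n\|_2\), and set \(L:=2R\), so that \(h:=R\|U-W\|_{\mathrm F}\). Then \(p_{n,c}(U)/p_{n,c}(W)\in[e^{-2h},e^{2h}]\), and the same bracket holds for \(1-p_{n,y_n}\). Each per-sample coordinate is \(g_n(W)[r,a]=\pm q_{n,r}(W)x_n[a]\), where \(q=p_{n,r}\) or \(q=1-p_{n,y_n}\) is one of these controlled scalars. Hence
\[
|g_n(U)[\rho]-g_n(W)[\rho]|\le(e^{2h}-1)\,|g_n(W)[\rho]|.
\]

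\emph{Batch estimate \eqref{eq:app-rr-adamproxy-batch-stability}.} Average the per-sample bound over \(B\), obtaining
\[
|G_B(U)[\rho]-G_B(W)[\rho]|\le(e^{L\|U-W\|_{\mathrm F}}-1)\,b^{-1}\sum_{n\in B}|z_n^\rho(W)|.
\]
The second inequality in \eqref{eq:app-rr-adamproxy-relative-batch-bound} bounds the last factor by \(K_{N,b}\sigma_\rho(W)\). This proves \eqref{eq:app-rr-adamproxy-batch-stability} with \(C=K_{N,b}\).

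\emph{Square estimate \eqref{eq:app-rr-adamproxy-square-stability}.} Factor \(x^2-y^2=(x-y)(x-y+2y)\) with \(x=G_B(U)[\rho]\) and \(y=G_B(W)[\rho]\). The first inequality in \eqref{eq:app-rr-adamproxy-relative-batch-bound} gives \(|y|\le K_{N,b}\sigma_\rho(W)\). With \(\epsilon:=e^{L\|U-W\|_{\mathrm F}}-1\), this yields
\[
|x^2-y^2|\le K_{N,b}^2\sigma_\rho(W)^2(\epsilon^2+2\epsilon).
\]
Since \(\epsilon^2+2\epsilon=e^{2L\|U-W\|_{\mathrm F}}-1\), the estimate holds with \(C=K_{N,b}^2\). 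Enlarging \(C\) to \(\max\{K_{N,b},K_{N,b}^2\}\) covers both estimates.

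The main point needing care is the third step: the control must be \emph{relative to \(\sigma_\rho(W)\)} rather than to a global gradient size. This is why the per-sample coordinate multiplicative bound must be exact and not just an additive one. The issue is the case \(r\ne y_n\): there the coordinate factor is \(p_{n,r}\), which is individually controlled, so the argument avoids going through the proxy \(\mathcal G\).
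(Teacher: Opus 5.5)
Your proposal is correct and follows essentially the same route as the paper: Cauchy--Schwarz on the unrolled moment recursions gives the step bound, and multiplicative softmax-ratio control gives \(|g_n(U)[\rho]-g_n(W)[\rho]|\le(e^{L\|U-W\|_{\mathrm F}}-1)|g_n(W)[\rho]|\). The paper then applies the relative batch bounds \eqref{eq:app-rr-adamproxy-relative-batch-bound} and a difference-of-squares estimate, exactly as you do; the only difference is that you take the ratio control from Lemma~\ref{lem:app-ga-stability-package} with \(\psi=\|\cdot\|_{\mathrm F}\) instead of rederiving it, which also yields valid explicit constants \(L=2\max_n\|x_n\|_2\) and \(C=\max\{K_{N,b},K_{N,b}^2\}\).
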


\begin{proof}
Fix one coordinate and write \(Y_u:=G_{B_u}(W_u)[\rho]\). Zero initialization gives \(M_{t+1}=(1-\beta_1)\sum_{j=0}^t\beta_1^jY_{t-j}\) and \(V_{t+1}=(1-\beta_2)\sum_{j=0}^t\beta_2^jY_{t-j}^2\). If \(V_{t+1}=0\), then \(M_{t+1}=0\); otherwise Cauchy--Schwarz gives \(M_{t+1}^2\le A(\beta_1,\beta_2)^2V_{t+1}\), proving \eqref{eq:app-rr-adamproxy-step-bound}. For the stability bounds, let \(\Delta:=U-W\). The logit perturbations satisfy \(|e_c^\top\Delta x_n|\le\|\Delta\|_{\mathrm F}\|x_n\|_2\), so every softmax probability ratio, and also the ratio of \(1-p_{n,y_n}\), lies between \(e^{-L\|\Delta\|_{\mathrm F}}\) and \(e^{L\|\Delta\|_{\mathrm F}}\) after enlarging \(L\). Each coordinate of \(g_n\) is a fixed feature coordinate multiplied by one of these quantities, giving \(|g_n(U)[\rho]-g_n(W)[\rho]|\le(e^{L\|\Delta\|_{\mathrm F}}-1)|g_n(W)[\rho]|\). Summing over \(n\in B\) and applying \eqref{eq:app-rr-adamproxy-relative-batch-bound} proves \eqref{eq:app-rr-adamproxy-batch-stability}; the identity \(|a^2-b^2|\le|a-b|(|a|+|b|)\) then gives \eqref{eq:app-rr-adamproxy-square-stability}.
\end{proof}

For the next lemma only, extend the reshuffling process to all integer epochs by taking independent uniform ordered partitions for every epoch index in \(\mathbb Z\). For \(z\ne{\mathbf 0}\), define \(Y_t(z):=T_{B_t}z/\sigma(z)\).

\begin{lemma}[Uniform concentration of frozen exponential averages]
\label{lem:app-rr-adamproxy-ema-concentration}
For every \(\delta>0\), there exists \(\omega_\delta:(0,1)\to[0,1]\) with \(\omega_\delta(\beta)\to0\) as \(\beta\uparrow1\) such that, for every epoch start \(\tau=rm\),
\begin{equation}
\mathbb P\left(\max_{0\le s<m}\sup_{z\ne{\mathbf 0}}\left|(1-\beta)\sum_{j=0}^\infty\beta^j\left(Y_{\tau+s-j}(z)-\mathbb EY_0(z)\right)\right|>\delta\right)\le\omega_\delta(\beta).
\label{eq:app-rr-adamproxy-ema-first}
\end{equation}
The same statement holds with \(Y_t(z)^2\) in place of \(Y_t(z)\) and \(\mathbb E[Y_0(z)^2]=1\). For fixed \(N,b\), one may take \(\omega_\delta(\beta)\le C_\delta(1-\beta)\).
\end{lemma}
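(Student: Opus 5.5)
The plan is to remove the supremum over \(z\) by linearity, which reduces the statement to a bound on a fixed finite-dimensional random vector (first moment) or matrix (second moment). For \(t\in\mathbb Z\), set \(\xi_t:=b^{-1}\mathbf 1_{B_t}-N^{-1}\mathbf 1\in\mathbb R^N\). Then \(T_{B_t}z-\mathbb E\,T_{B_0}z=\xi_t^\top z\). Since \(\mathbb EY_0(z)=\mathbb E[T_{B_0}z]/\sigma(z)\), the quantity inside \eqref{eq:app-rr-adamproxy-ema-first} equals \(a_{\tau+s}^\top z/\sigma(z)\), where
\[
a_{t}:=(1-\beta)\sum_{j\ge0}\beta^j\xi_{t-j}.
\]
The lower bound in \eqref{eq:app-rr-adamproxy-sigma-equivalence} then gives
\[
\sup_{z\ne{\mathbf 0}}\frac{|a_t^\top z|}{\sigma(z)}\le\frac{\|a_t\|_2}{\underline c_{N,b}}.
\]
So it suffices to show \(\max_{s<m}\|a_{\tau+s}\|_2\le C(1-\beta)\) with \(C=C(N,b)\). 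The series converges absolutely because \(\|\xi_t\|_2\le K:=b^{-1/2}+N^{-1/2}\).

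For the first moment, this bound holds deterministically, by the same epoch-cancellation argument as in Lemma~\ref{lem:app-ga-frozen-rr}. Within every complete epoch, \(\sum_{r=0}^{m-1}\xi_{qm+r}={\mathbf 0}\), because the batches partition \([N]\). Subtracting the common coefficient \(\beta^{t-qm}\) therefore bounds each complete-epoch block by \((1-\beta)\beta^{\,t-(q+1)m}\tfrac{m(m-1)}{2}K\). Summing these blocks over \(q\) and using \((1-\beta)/(1-\beta^m)\le1\) gives \(O(m^2K)\), and the outer factor \(1-\beta\) makes the total \(O((1-\beta)m^2K)\). The at most \(m\) terms of the current incomplete epoch contribute at most \((1-\beta)mK\). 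Hence \(\|a_t\|_2\le C(1-\beta)\) for every \(t\) and every realization. The event in \eqref{eq:app-rr-adamproxy-ema-first} is therefore empty once \(C(1-\beta)/\underline c_{N,b}<\delta\). Defining \(\omega_\delta(\beta):=\min\{1,\,C(1-\beta)/(\underline c_{N,b}\delta)\}\) satisfies all the requirements.

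For the second moment there is no deterministic cancellation, because \(\sum_r(T_{B_{qm+r}}z)^2\) depends on the partition. I will write \(Y_t(z)^2-1=z^\top\Xi_tz/\sigma(z)^2\), where \(\Xi_t:=b^{-2}\mathbf 1_{B_t}\mathbf 1_{B_t}^\top-\mathbb E[b^{-2}\mathbf 1_{B}\mathbf 1_{B}^\top]\), using \(\mathbb E[Y_0(z)^2]=1\) by definition of \(\sigma\). Then \(\sup_z|z^\top D z|/\sigma(z)^2\le\|D\|_{\mathrm{op}}/\underline c_{N,b}^2\), so it suffices to control \(D_t:=(1-\beta)\sum_j\beta^j\Xi_{t-j}\) in Frobenius norm. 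Replacing each within-epoch weight by the weight at its epoch start costs a deterministic \(O((1-\beta)m^2)\), by the same computation as before. This leaves \((1-\beta)\sum_q\beta^{c_q}E_q\) with \(E_q:=\sum_{r<m}\Xi_{qm+r}\), plus an incomplete-epoch remainder of size \(O((1-\beta)m)\). The matrices \(E_q\) are independent across epochs, because reshuffles are independent, and mean zero, because each \(B_t\) is marginally a uniform \(b\)-subset; they are also uniformly bounded. Hence the second moment of the main term is at most \(C(1-\beta)^2\sum_q\beta^{2qm}\le C(1-\beta)^2/(1-\beta^{2m})\le C(1-\beta)\). A union bound over the \(m\) shifts \(s\), combined with Chebyshev's inequality, gives failure probability at most \(C(1-\beta)/\delta^2\) once \(\delta/2\) absorbs the deterministic corrections. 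This again yields \(\omega_\delta(\beta)\le C_\delta(1-\beta)\). The variance step is the main obstacle. The points to check there are the independence and mean-zero property of the epoch blocks, including for the bi-infinite extension, and that the weight-freezing error is uniform in \(s\) and \(\tau\); neither is affected by the choice of \(\tau\), since the extended process is stationary in the epoch index.
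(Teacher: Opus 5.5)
Your proof is correct, but it takes a different route from the paper on two points.

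\emph{The paper's route.} It treats \(Y_t\) and \(Y_t^2\) identically. It fixes \(z\) on the \(\sigma\)-unit sphere and groups lags by epoch. Independence and zero mean of the epoch blocks bound the variance by \(C(1-\beta)^2\sum_q\beta^{2qm}\le C'(1-\beta)\), and Chebyshev gives the rate for that fixed \(z\). Uniformity in \(z\) and \(s\) then comes from a finite \(\delta\)-net, using that \(z\mapsto T_Bz\) is Lipschitz in the \(\sigma\)-norm, together with a union bound.

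\emph{Your route.} You remove the supremum over \(z\) exactly. You write the centered averages as \(a_t^\top z/\sigma(z)\) and \(z^\top D_tz/\sigma(z)^2\). The norm equivalence \(\sigma(z)\ge\underline c_{N,b}\|z\|_2\) then reduces everything to bounds on \(\|a_t\|_2\) and \(\|D_t\|_{\mathrm{op}}\). For the first moment, the partition identity \(\sum_r\xi_{qm+r}=\mathbf 0\) makes the bound deterministic, by exactly the mechanism of Lemma~\ref{lem:app-ga-frozen-rr}. So that event is empty once \(\beta\) is close to one, rather than merely improbable. Only the second moment needs randomness, and there your variance and Chebyshev step is the paper's. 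Your weight-freezing step is harmless but dispensable, since the weighted epoch blocks are already independent across epochs and centered.

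\emph{Comparison.} Your approach avoids the net entirely. Its constants depend on \(N,b\) only through \(\underline c_{N,b}\), \(m\), and \(\|\xi_t\|_2\), not through a covering number of an \(N\)-dimensional sphere. It also yields a stronger, deterministic first-moment statement. The paper's argument is shorter and handles both moments with one template, without exploiting the linear and quadratic structure in \(z\).
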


\begin{proof}
It is enough to work on the compact \(\sigma\)-unit sphere. Group the lags in the exponential average by reshuffling epoch. Contributions from distinct epochs are independent, every epoch block has mean zero, and Lemma~\ref{lem:app-rr-adamproxy-batch-package} uniformly bounds \(|Y_t(z)|\). Hence the variance of the centered exponential average is at most \(C(1-\beta)^2\sum_{q\ge0}\beta^{2qm}\le C'(1-\beta)\). Chebyshev's inequality gives the stated rate for each fixed \(z\). The maps \(z\mapsto T_Bz\) are uniformly Lipschitz in the \(\sigma\)-norm, so a finite \(\delta\)-net and a union bound make the estimate uniform over \(z\) and over \(s=0,\ldots,m-1\). The argument for \(Y_t(z)^2\) is identical because \(Y_t(z)\) is uniformly bounded.
\end{proof}

For fixed \(W\), define the two-sided frozen memories
\begin{equation}
\overline M_t^{\beta_1}(W):=(1-\beta_1)\sum_{j=0}^\infty\beta_1^jG_{B_{t-j}}(W),\qquad \overline V_t^{\beta_2}(W):=(1-\beta_2)\sum_{j=0}^\infty\beta_2^jG_{B_{t-j}}(W)^{\odot2}.
\label{eq:app-rr-adamproxy-frozen-memories}
\end{equation}
Their expectations are \(\mu_1(W)\) and \(\mu_2(W)\), respectively.

\begin{lemma}[Freezing one epoch]
\label{lem:app-rr-adamproxy-freezing}
Fix \(0<\beta_1\le\beta_2<1\). Under the assumptions of Proposition~\ref{thm:rr-adamproxy-epoch-high-memory}, for every coordinate \(\rho\), uniformly over \(0\le s<m\),
\begin{equation}
\frac{|M_{\tau_r+s+1}[\rho]-\overline M_{\tau_r+s}^{\beta_1}(W_{\tau_r})[\rho]|}{\sigma_\rho(W_{\tau_r})}\xrightarrow[r\to\infty]{\mathbb P}0,
\label{eq:app-rr-adamproxy-freezing-first}
\end{equation}
\begin{equation}
\frac{|V_{\tau_r+s+1}[\rho]-\overline V_{\tau_r+s}^{\beta_2}(W_{\tau_r})[\rho]|}{\sigma_\rho(W_{\tau_r})^2}\xrightarrow[r\to\infty]{\mathbb P}0.
\label{eq:app-rr-adamproxy-freezing-second}
\end{equation}
The convergence is uniform over the finitely many coordinates.
\end{lemma}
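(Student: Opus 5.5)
The plan is to unroll both moment recursions explicitly and compare them term by term with the frozen two-sided memories. Zero initialization gives
\[
M_{t+1}=(1-\beta_1)\sum_{j=0}^{t}\beta_1^jG_{B_{t-j}}(W_{t-j}),
\]
and \(\overline M_t^{\beta_1}(W)\) is the same sum with every \(W_{t-j}\) replaced by the frozen \(W\) and the lag range extended to \(j\in\{0,1,2,\ldots\}\) using the extended reshuffling. Fix \(t=\tau_r+s\) with \(0\le s<m\) and put \(W=W_{\tau_r}\). The normalized difference then splits into two terms:
\begin{equation}
\frac{M_{t+1}[\rho]-\overline M_t^{\beta_1}(W)[\rho]}{\sigma_\rho(W)}
=(1-\beta_1)\sum_{j=0}^{t}\beta_1^j\frac{G_{B_{t-j}}(W_{t-j})[\rho]-G_{B_{t-j}}(W)[\rho]}{\sigma_\rho(W)}
-(1-\beta_1)\sum_{j>t}\beta_1^j\frac{G_{B_{t-j}}(W)[\rho]}{\sigma_\rho(W)}.
\end{equation}
The second term is a \emph{pre-history tail}. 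The relative batch bound \eqref{eq:app-rr-adamproxy-relative-batch-bound} bounds each normalized summand by \(K_{N,b}\), so this term is at most \(K_{N,b}\beta_1^{t+1}\). That tends to zero deterministically as \(r\to\infty\), uniformly in \(W\), \(s\), and \(\rho\).

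The first term is a \emph{drift term}, and I would control it with the batch stability estimate \eqref{eq:app-rr-adamproxy-batch-stability}. It bounds the term by
\[
C(1-\beta_1)\sum_{j=0}^t\beta_1^j\bigl(e^{L\|W_{t-j}-W_{\tau_r}\|_{\mathrm F}}-1\bigr).
\]
The step bound \eqref{eq:app-rr-adamproxy-step-bound} gives
\[
\|W_{t-j}-W_{\tau_r}\|_{\mathrm F}\le\sqrt{kd}\,A(\beta_1,\beta_2)\sum_{q}\eta_q,
\]
where the sum runs over the indices between \(\tau_r\) and \(t-j\). There are two cases.
\begin{itemize}
\item For lags \(j\le s\), this window lies inside the current epoch, so its length is at most \(m\) steps and the bound is at most \(m\,\eta_{\tau_r-m}\sqrt{kd}A\), which tends to \(0\).
\item For lags \(j>s\), the window reaches back \(j-s\) steps. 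Here I would invoke Assumption~\ref{ass:lr-momentum} with \(\beta=\beta_1\) and \(c_1=L\sqrt{kd}A\). It gives exactly \(\sum_{u}\beta_1^{u}(\exp(c_1\sum_{\tau=1}^u\eta_{t-\tau})-1)\le c_2\eta_t\). The shift from the reference \(t\) to \(\tau_r\) costs only the at most \(m\) in-epoch steps, which is absorbed by the factor \(e^{c_1m\eta_{\tau_r}}\).
\end{itemize}
Hence the drift term is \(O(\eta_{\tau_r})+O(m\eta_{\tau_r-m})\to0\) deterministically. This proves \eqref{eq:app-rr-adamproxy-freezing-first}, in fact almost surely and not only in probability.

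The second-moment statement \eqref{eq:app-rr-adamproxy-freezing-second} follows by the same decomposition, with \(\beta_2\) in place of \(\beta_1\). The square-stability bound \eqref{eq:app-rr-adamproxy-square-stability} replaces the batch stability bound, and \(K_{N,b}^2\) bounds the tail. The constant \(2L\) is simply fed into Assumption~\ref{ass:lr-momentum} as \(c_1\). Uniformity over \(s<m\) and over the finitely many coordinates \(\rho\) is automatic because every bound above is independent of them.

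The main obstacle is the normalization by \(\sigma_\rho(W_{\tau_r})\), which may shrink toward zero as the loss vanishes along the trajectory. The argument stays valid because every estimate is relative: \eqref{eq:app-rr-adamproxy-batch-stability} is stated as a ratio to \(\sigma_\rho(W)\) with data-dependent constants that do not depend on \(W\). I would double-check two points. First, the constant \(C\) in Lemma~\ref{lem:app-rr-adamproxy-step-stability} must be genuinely uniform in \(W\). This holds because each \(g_n[\rho]\) changes multiplicatively, and \(\sigma\) dominates the per-batch \(\ell_1\) sums with the factor \(K_{N,b}\). Second, the learning-rate comparison must hold for long lags. The exponential growth in \(j\) has to be summable against \(\beta_1^j\), and Assumption~\ref{ass:lr-momentum} is designed to guarantee exactly this summability.
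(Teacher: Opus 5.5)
Your proposal is correct and follows the paper's proof essentially step for step. It uses the same split into a negative-time tail bounded by a constant times \(\beta_1^{t+1}\) via the relative batch bound, in-epoch drift controlled by the step bound and monotonicity of \(\eta_t\), long-lag drift controlled by Assumption~\ref{ass:lr-momentum}, and the identical second-moment argument with the square-stability bound and \(\beta_2\). The only difference is cosmetic: you apply Assumption~\ref{ass:lr-momentum} at reference time \(t\), whereas the paper reindexes with \(q=j-s\) and applies it at \(\tau_r\). No ``shift cost'' actually arises in your version, since the window \(\{t-j,\ldots,\tau_r-1\}\) is contained in \(\{t-j,\ldots,t-1\}\) and the exponential term is monotone in the window sum.
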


\begin{proof}
Fix \(t=\tau_r+s\). The first-moment difference consists of the missing negative-time tail and movement errors obtained by replacing \(W_{t-j}\) by \(W_{\tau_r}\). Lemma~\ref{lem:app-rr-adamproxy-batch-package} bounds the normalized missing tail by \(C\beta_1^{t+1}\). For \(0\le j\le s\), Lemma~\ref{lem:app-rr-adamproxy-step-stability} and monotonicity of \(\eta_t\) give \(\|W_{t-j}-W_{\tau_r}\|_{\mathrm F}\le C_{\beta_1,\beta_2}m\eta_{\tau_r}\to0\). For \(j>s\), writing \(q=j-s\) gives \(\|W_{\tau_r-q}-W_{\tau_r}\|_{\mathrm F}\le C_{\beta_1,\beta_2}\sum_{u=\tau_r-q}^{\tau_r-1}\eta_u\). Therefore \eqref{eq:app-rr-adamproxy-batch-stability} bounds the normalized past-lag contribution by
\[
C(1-\beta_1)\sum_{q=1}^{\tau_r}\beta_1^{q+s}\left(\exp\left(C\sum_{u=\tau_r-q}^{\tau_r-1}\eta_u\right)-1\right),
\]
which tends to zero by Assumption~\ref{ass:lr-momentum}. This proves \eqref{eq:app-rr-adamproxy-freezing-first}. The second-moment argument is identical using \eqref{eq:app-rr-adamproxy-square-stability} and \(\beta_2\), proving \eqref{eq:app-rr-adamproxy-freezing-second}.
\end{proof}

\begin{lemma}[Continuity of the Adam ratio]
\label{lem:app-rr-adamproxy-ratio-continuity}
There is a universal \(C_{\mathrm{rat}}<\infty\) such that, whenever \(\sigma>0\), \(|a_0|\le\sigma\), \(|a-a_0|\le\delta\sigma\), and \(|v-\sigma^2|\le\delta\sigma^2\) with \(0<\delta\le1/2\),
\begin{equation}
\left|\frac a{\sqrt v}-\frac{a_0}{\sigma}\right|\le C_{\mathrm{rat}}\delta.
\label{eq:app-rr-adamproxy-ratio-continuity}
\end{equation}
\end{lemma}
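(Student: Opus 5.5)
The plan is to compare both quantities through a common factorization, separating the error in the numerator from the error in the denominator. Write
\[
\frac a{\sqrt v}-\frac{a_0}{\sigma}=\frac{a-a_0}{\sqrt v}+a_0\left(\frac1{\sqrt v}-\frac1\sigma\right),
\]
and bound each term separately. No earlier result of the paper is needed; this is an elementary scalar estimate.

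The first step is a lower bound on \(v\). Since \(|v-\sigma^2|\le\delta\sigma^2\) and \(\delta\le1/2\), we have \(v\ge\sigma^2/2>0\), so \(\sqrt v\ge\sigma/\sqrt2\) and the ratio \(a/\sqrt v\) is well defined. With this, the first term is at most \(\delta\sigma/(\sigma/\sqrt2)=\sqrt2\,\delta\).

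The second term uses \(|a_0|\le\sigma\) together with the identity
\[
\left|\frac1{\sqrt v}-\frac1\sigma\right|=\frac{|\sigma^2-v|}{\sigma\sqrt v(\sigma+\sqrt v)}.
\]
Bounding the denominator below by \(\sigma\cdot(\sigma/\sqrt2)\cdot\sigma\) and the numerator by \(\delta\sigma^2\) gives \(|1/\sqrt v-1/\sigma|\le\sqrt2\,\delta/\sigma\). Multiplying by \(|a_0|\le\sigma\), the second term is at most \(\sqrt2\,\delta\).

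Adding the two bounds gives the claim with \(C_{\mathrm{rat}}=2\sqrt2\). There is no genuine obstacle here. The only point needing care is the first step, the positivity of \(v\), which is where the hypothesis \(\delta\le1/2\) is used.
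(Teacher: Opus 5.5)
Your proposal is correct and follows essentially the same route as the paper: the same split into $(a-a_0)/\sqrt v$ plus $a_0(1/\sqrt v-1/\sigma)$, and the same use of $v\ge\sigma^2/2$ to bound the first term by $\sqrt2\,\delta$. The only difference is the second term, where the paper writes $v=\sigma^2(1+e)$ and uses that $x\mapsto(1+x)^{-1/2}$ is Lipschitz on $[-1/2,1/2]$, while your difference-of-square-roots identity gives the explicit constant $C_{\mathrm{rat}}=2\sqrt2$.
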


\begin{proof}
Since \(v\ge\sigma^2/2\),
\[
\left|\frac a{\sqrt v}-\frac{a_0}{\sigma}\right|\le\frac{|a-a_0|}{\sqrt v}+|a_0|\left|\frac1{\sqrt v}-\frac1\sigma\right|\le\sqrt2\,\delta+\left|(1+e)^{-1/2}-1\right|,
\]
where \(v=\sigma^2(1+e)\) and \(|e|\le\delta\). The map \(x\mapsto(1+x)^{-1/2}\) is Lipschitz on \([-1/2,1/2]\).
\end{proof}

\begin{proof}[Proof of Proposition~\ref{thm:rr-adamproxy-epoch-high-memory}]
Let \(f_{N,b}(W):=\mu_1(W)\oslash\mu_2(W)^{\odot1/2}=\Phi_{N,b}(W)\) by Lemma~\ref{lem:app-rr-adamproxy-batch-package}. Because the Adam update uses the gradient convention,
\begin{equation}
\frac{W_{\tau_r+m}-W_{\tau_r}}{\gamma_r}=-\sum_{s=0}^{m-1}\frac{\eta_{\tau_r+s}}{\gamma_r}U_{\tau_r+s},\qquad \sum_{s=0}^{m-1}\frac{\eta_{\tau_r+s}}{\gamma_r}=1.
\label{eq:app-rr-adamproxy-epoch-displacement}
\end{equation}
Fix \(0<\delta\le1/4\). Lemma~\ref{lem:app-rr-adamproxy-ema-concentration} gives, with probability at least \(1-\omega_\delta(\beta_1)-\omega_\delta(\beta_2)\), relative \(\delta\)-control of both frozen memories around \(\mu_1(W_{\tau_r})\) and \(\mu_2(W_{\tau_r})\), simultaneously over the epoch and all coordinates. Lemma~\ref{lem:app-rr-adamproxy-freezing} transfers these bounds to the actual memories with \(2\delta\) in place of \(\delta\), up to an event whose probability vanishes as \(r\to\infty\). Jensen's inequality gives \(|\mu_1(W)[\rho]|\le\sqrt{\mu_2(W)[\rho]}=\sigma_\rho(W)\), so Lemma~\ref{lem:app-rr-adamproxy-ratio-continuity} yields
\begin{equation}
\max_{0\le s<m}\|U_{\tau_r+s}-\Phi_{N,b}(W_{\tau_r})\|_{\max}\le C\delta
\label{eq:app-rr-adamproxy-stepwise-proxy-bound}
\end{equation}
with the same asymptotic probability. Combining \eqref{eq:app-rr-adamproxy-epoch-displacement} and \eqref{eq:app-rr-adamproxy-stepwise-proxy-bound} gives
\begin{equation}
\left\|\frac{W_{\tau_r+m}-W_{\tau_r}}{\gamma_r}+\Phi_{N,b}(W_{\tau_r})\right\|_{\max}\le C\delta.
\label{eq:app-rr-adamproxy-epoch-proxy-bound}
\end{equation}
Given \(\varepsilon>0\), choose \(\delta\) so that \(C\delta<\varepsilon\). Taking \(r\to\infty\) first yields a limsup probability bounded by \(\omega_\delta(\beta_1)+\omega_\delta(\beta_2)\), and then taking \((\beta_1,\beta_2)\to(1,1)\) through \(0<\beta_1\le\beta_2<1\) proves \eqref{eq:rr-adamproxy-one-epoch-limit-clean}.
\end{proof}

\subsection{RR-RowSGDM versus AdamProxy population accuracy}
\label{app:adamproxy-row-accuracy-proof}

We prove Proposition~\ref{thm:adamproxy-optimizer-accuracy-order-clean} by bounding the coordinates of the margin-normalized AdamProxy iterates and comparing the resulting classifier family with the near-row-margin family.
All training-time limits are taken at fixed dimension before the limit \(d\to\infty\).

\paragraph{A deterministic trajectory bound.}
Fix a finite linearly separable dataset \(X=\{(x_n,y_n)\}_{n=1}^N\), with \(x_n\in\mathbb R^d\) and \(y_n\in[k]\).
Write \(p_n(W)=\mathbb S(Wx_n)\), \(\ell_n(W)=-\log p_{n,y_n}(W)\), and \(L(W)=N^{-1}\sum_{n=1}^N\ell_n(W)\).
For \(\mathcal Q:=\{(n,c):n\in[N],\ c\ne y_n\}\), set \(A_{n,c}:=(e_{y_n}-e_c)x_n^\top\), so that
\[
\operatorname{mar}_X(W)=\min_{(n,c)\in\mathcal Q}\langle W,A_{n,c}\rangle,\qquad \gamma_{\max}(X)=\max_{\|V\|_{\max}\le1}\operatorname{mar}_X(V)>0.
\]
Let \(R_1(X):=\max_n\|x_n\|_1\) and \(r_{\max}^*(X)=\gamma_{\max}(X)^{-1}\).
For \(\kappa\ge0\), define
\begin{align}
h_n(W)&:=(e_{y_n}-p_n(W))x_n^\top=\sum_{c\ne y_n}p_{n,c}(W)A_{n,c},\nonumber\\
\widetilde S(W)&:=\sum_{n=1}^Nh_n(W),\qquad \widetilde Q(W):=\sum_{n=1}^Nh_n(W)^{\odot2},\nonumber\\
\Psi_\kappa(W)&:=\widetilde S(W)\oslash\left(\widetilde S(W)^{\odot2}+\kappa\widetilde Q(W)\right)^{\odot1/2}.
\label{eq:app-adamproxy-trajectory-map}
\end{align}
All powers and divisions are entrywise, with \(0/0:=0\).

\begin{proposition}[Positive margin growth and a trajectory-level coordinate bound]
\label{prop:app-adamproxy-coordinate-cap}
Let \(\kappa\ge0\), let \(W_0\in\mathbb R^{k\times d}\) be finite, and consider \(W_{t+1}=W_t+a_t\Psi_\kappa(W_t)\), where \(a_t>0\), \(a_t\to0\), and \(\sum_{t=0}^\infty a_t=\infty\).
Define
\[
T_t:=\sum_{s=0}^{t-1}a_s,\qquad K_\kappa(X):=2(1+\sqrt\kappa)R_1(X),\qquad \rho_\kappa(X):=\frac{\gamma_{\max}(X)^2}{K_\kappa(X)}>0.
\]
Then \(L(W_t)\to0\), \(\operatorname{mar}_X(W_t)\to\infty\), and
\begin{align}
\liminf_{t\to\infty}\frac{\operatorname{mar}_X(W_t)}{T_t}&\ge\rho_\kappa(X),
\label{eq:app-adamproxy-linear-margin-growth}\\
\liminf_{t\to\infty}\frac{\operatorname{mar}_X(W_t)}{\|W_t\|_{\max}}&\ge\rho_\kappa(X),
\label{eq:app-adamproxy-positive-normalized-margin}\\
\limsup_{t\to\infty}\left\|\frac{W_t}{\operatorname{mar}_X(W_t)}\right\|_{\max}&\le K_\kappa(X)r_{\max}^*(X)^2.
\label{eq:app-adamproxy-coordinate-cap}
\end{align}
The margin-normalized matrices in \eqref{eq:app-adamproxy-coordinate-cap} are well defined and have minimum training margin one for all sufficiently large \(t\).
\end{proposition}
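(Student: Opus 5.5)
The plan is to rerun the normalized-steepest-descent margin argument of Appendix~\ref{app:max-sum-row} with the max norm as reference geometry, replacing the exact steepest direction by \(\Psi_\kappa\). Two elementary facts about \(\Psi_\kappa\) do the work. First, every entry satisfies \(|\Psi_\kappa(W)[\rho]|\le1\), so \(\|\Psi_\kappa(W)\|_{\max}\le1\). This gives the path bound
\[
\|W_t\|_{\max}\le\|W_0\|_{\max}+T_t .
\]
Second, there is a quantitative alignment inequality
\begin{equation}
\Bigl\langle\tfrac1N\widetilde S(W),\Psi_\kappa(W)\Bigr\rangle\ge\rho_\kappa(X)\,\mathcal G(W).
\end{equation}
Since \(-\nabla L(W)=\widetilde S(W)/N\), this says the proxy step decreases the loss at a rate comparable to \(\mathcal G\).

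To prove the alignment inequality, write \(D_\rho:=(\widetilde S_\rho^2+\kappa\widetilde Q_\rho)^{1/2}\), so that \(\langle\widetilde S,\Psi_\kappa\rangle=\sum_\rho\widetilde S_\rho^2/D_\rho\). The steps are as follows.
\begin{enumerate}
\item Bound each denominator: \(D_\rho\le|\widetilde S_\rho|+\sqrt\kappa\,\widetilde Q_\rho^{1/2}\le(1+\sqrt\kappa)\sum_n|h_n[\rho]|\).
\item Sum over coordinates. Since \(\sum_r|(e_{y_n}-p_n)_r|=2(1-p_{n,y_n})\), we get \(\|h_n\|_{\mathrm{sum}}\le2(1-p_{n,y_n})\|x_n\|_1\), hence \(\sum_\rho D_\rho\le K_\kappa(X)\,N\mathcal G(W)\).
\item Apply Cauchy--Schwarz: \(\|\widetilde S\|_{\mathrm{sum}}^2\le\bigl(\sum_\rho\widetilde S_\rho^2/D_\rho\bigr)\bigl(\sum_\rho D_\rho\bigr)\).
\item Use max--sum duality with a max-norm max-margin witness, exactly as in the lower bound of Lemma~\ref{lem:app-ga-atomic-package}: \(\|\widetilde S\|_{\mathrm{sum}}\ge\gamma_{\max}(X)N\mathcal G(W)\).
\end{enumerate}
Combining steps 2--4 gives \(\langle\widetilde S,\Psi_\kappa\rangle\ge\gamma_{\max}^2N\mathcal G/K_\kappa\), which is the alignment inequality.

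Next comes the descent recurrence. Apply the Hessian bound of Lemma~\ref{lem:app-ms-basic} (or Lemma~\ref{lem:app-ga-atomic-package} with \(\psi=\|\cdot\|_{\max}\)) along the segment, and use the softmax stability of Lemma~\ref{lem:app-ga-stability-package} to compare \(\mathcal G\) on the segment with \(\mathcal G(W_t)\). This gives
\[
L(W_{t+1})\le L(W_t)-a_t\bigl(\rho_\kappa-Ca_t\bigr)\mathcal G(W_t).
\]
Because \(a_t\to0\) and \(\sum_t a_t=\infty\), the coefficient is eventually at least \(\rho_\kappa/2\). The loss is then monotone and its decrements sum to a finite quantity, so \(\sum_t a_t\mathcal G(W_t)<\infty\), which forces \(\liminf_t\mathcal G(W_t)=0\). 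Monotonicity then upgrades this to \(L(W_t)\to0\). In particular all margins eventually become positive, and once \(L(W_t)<\log2/N\) we get \(\operatorname{mar}_X(W_t)\to\infty\).

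The step I expect to be the main obstacle is converting loss decrease into linear margin growth with the sharp constant \(\rho_\kappa\). I will follow the loss-to-margin conversion in the proof of Corollary~\ref{cor:app-ga-margin-transfer}: after the entrance time \(t_2\), use \(\operatorname{mar}_X(W_t)\ge-\log(NL(W_t))\) and the recurrence in the form \(\log L(W_{s+1})\le\log L(W_s)-a_s(\rho_\kappa-Ca_s)\mathcal G(W_s)/L(W_s)\), together with \(\mathcal G/L\ge1-\exp(-c\sum_\tau a_\tau)\). This should yield
\[
\operatorname{mar}_X(W_t)\ge\rho_\kappa T_t-r_t,\qquad r_t=O\Bigl(1+\sum_s a_s^2+\sum_s a_s e^{-c\sum_{\tau<s}a_\tau}\Bigr),
\]
and Lemma~\ref{lem:app-general-lr-remainders} gives \(r_t=o(T_t)\). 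This proves \eqref{eq:app-adamproxy-linear-margin-growth}.

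The remaining claims follow from the path bound. Dividing, \(\operatorname{mar}_X(W_t)/\|W_t\|_{\max}\ge(\rho_\kappa T_t-r_t)/(\|W_0\|_{\max}+T_t)\), which by Lemma~\ref{lem:app-general-lr-normalization} (applied with \(\Gamma=\rho_\kappa\le\gamma_{\max}\)) gives \eqref{eq:app-adamproxy-positive-normalized-margin}. Finally, \(\|W_t/\operatorname{mar}_X(W_t)\|_{\max}\le(\|W_0\|_{\max}+T_t)/(\rho_\kappa T_t-r_t)\to1/\rho_\kappa=K_\kappa(X)\,r_{\max}^*(X)^2\), which is \eqref{eq:app-adamproxy-coordinate-cap}. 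Well-definedness of the margin-normalized matrices, and their unit minimum margin, follow from \(\operatorname{mar}_X(W_t)>0\) eventually.
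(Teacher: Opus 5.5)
Your proposal is correct and follows essentially the same route as the paper. The shared steps are the Cauchy--Schwarz alignment bound $-\langle\nabla L,\Psi_\kappa\rangle\ge\rho_\kappa(X)\mathcal G$ (via sum-norm control of the denominator and max--sum duality), the Taylor/softmax-stability descent recurrence, $L(W_t)\to0$, accumulation of $-\log L$ into linear margin growth, and the path bound $\|W_t\|_{\max}\le\|W_0\|_{\max}+T_t$. The differences are cosmetic: you keep the unnormalized $\widetilde S$ instead of the simplex weights $\theta(W)$, and you derive an explicit $o(T_t)$ remainder instead of using $\mathcal G/L\to1$ with Ces\`aro averaging. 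One small correction: your bound $\operatorname{mar}_X(W_t)\ge-\log(NL(W_t))$ should read $\operatorname{mar}_X(W_t)\ge-\log(2NL(W_t))$, a constant shift that does not affect any of the limits.
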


\begin{proof}
For finite \(W\), define
\[
\mathcal G(W):=\frac1N\sum_{n=1}^N(1-p_{n,y_n}(W)),\qquad R(W):=\sum_{(n,c)\in\mathcal Q}p_{n,c}(W)=N\mathcal G(W)>0,
\]
and let \(\theta_{n,c}(W):=p_{n,c}(W)/R(W)\).
Then \(\theta(W)\) belongs to the probability simplex \(\Delta(\mathcal Q)\).
For any \(\theta\in\Delta(\mathcal Q)\), write
\begin{align*}
Z_n(\theta)&:=\sum_{c\ne y_n}\theta_{n,c}A_{n,c},\qquad \mathsf S(\theta):=\sum_{n=1}^NZ_n(\theta),\\
\mathsf D_\kappa(\theta)&:=\left(\mathsf S(\theta)^{\odot2}+\kappa\sum_{n=1}^NZ_n(\theta)^{\odot2}\right)^{\odot1/2}.
\end{align*}
Since \(h_n(W)=R(W)Z_n(\theta(W))\), the common factor \(R(W)\) cancels in the proxy ratio, giving
\begin{equation}
\Psi_\kappa(W)=\mathsf S(\theta(W))\oslash\mathsf D_\kappa(\theta(W)),\qquad \|\Psi_\kappa(W)\|_{\max}\le1.
\label{eq:app-adamproxy-trajectory-step-bound}
\end{equation}
The coordinate bound follows from \(\mathsf D_\kappa(\theta)[\rho]\ge|\mathsf S(\theta)[\rho]|\), including zero denominators under the stated convention.
The entrywise inequalities \(|\mathsf S(\theta)|\le\sum_n|Z_n(\theta)|\) and \((\sum_nZ_n(\theta)^{\odot2})^{\odot1/2}\le\sum_n|Z_n(\theta)|\) imply
\begin{equation}
\|\mathsf D_\kappa(\theta)\|_{\mathrm{sum}}\le(1+\sqrt\kappa)\sum_n\|Z_n(\theta)\|_{\mathrm{sum}}\le2(1+\sqrt\kappa)R_1(X)=K_\kappa(X).
\label{eq:app-adamproxy-trajectory-metric-mass}
\end{equation}
Here the last inequality uses \(\|A_{n,c}\|_{\mathrm{sum}}=2\|x_n\|_1\) and \(\sum_{(n,c)\in\mathcal Q}\theta_{n,c}=1\).
Let \(V_*\) attain \(\gamma_{\max}(X)\) on the compact max-norm unit ball.
Max--sum duality and the margin constraints for \(V_*\) give
\[
\|\mathsf S(\theta)\|_{\mathrm{sum}}\ge\langle V_*,\mathsf S(\theta)\rangle=\sum_{(n,c)\in\mathcal Q}\theta_{n,c}\langle V_*,A_{n,c}\rangle\ge\gamma_{\max}(X).
\]
If \(\mathsf D_\kappa(\theta)[\rho]=0\), then \(\mathsf S(\theta)[\rho]=0\).
Weighted Cauchy--Schwarz on the remaining coordinates therefore yields
\begin{equation}
\mathcal J_\kappa(\theta):=\sum_{\rho:\mathsf D_\kappa(\theta)[\rho]>0}\frac{\mathsf S(\theta)[\rho]^2}{\mathsf D_\kappa(\theta)[\rho]}\ge\frac{\|\mathsf S(\theta)\|_{\mathrm{sum}}^2}{\|\mathsf D_\kappa(\theta)\|_{\mathrm{sum}}}\ge\rho_\kappa(X).
\label{eq:app-adamproxy-uniform-dissipation}
\end{equation}
Since \(\nabla L(W)=-\mathcal G(W)\mathsf S(\theta(W))\), it follows that
\begin{equation}
-\langle\nabla L(W),\Psi_\kappa(W)\rangle=\mathcal G(W)\mathcal J_\kappa(\theta(W))\ge\rho_\kappa(X)\mathcal G(W).
\label{eq:app-adamproxy-uniform-descent}
\end{equation}

To control the discrete-time remainder, fix \(\|\Delta\|_{\max}\le1\).
Then \(\|\Delta x_n\|_\infty\le R_1(X)\), and the softmax Hessian satisfies
\[
D^2\ell_n(V)[\Delta,\Delta]=\operatorname{Var}_{C\sim p_n(V)}\bigl((\Delta x_n)_C\bigr)\le4R_1(X)^2\bigl(1-p_{n,y_n}(V)\bigr).
\]
Indeed, the variance is at most the second moment about \((\Delta x_n)_{y_n}\), and every logit difference has magnitude at most \(2R_1(X)\).
The softmax formula also gives \(\mathcal G(W+sa\Delta)\le e^{2R_1(X)a}\mathcal G(W)\) for \(a\ge0\) and \(s\in[0,1]\).
Because \(a_t\to0\), the quantity \(a_{\max}:=\sup_ta_t\) is finite.
Taylor's formula, \eqref{eq:app-adamproxy-trajectory-step-bound}, and \eqref{eq:app-adamproxy-uniform-descent} consequently imply
\begin{equation}
L(W_{t+1})\le L(W_t)-a_t\bigl(\rho_\kappa(X)-C_Xa_t\bigr)\mathcal G(W_t),\qquad C_X:=2R_1(X)^2e^{2R_1(X)a_{\max}}.
\label{eq:app-adamproxy-trajectory-loss-descent}
\end{equation}
For all sufficiently large \(t\), the coefficient in parentheses is at least \(\rho_\kappa(X)/2\), so \(L(W_t)\) is eventually nonincreasing and has a nonnegative limit.
If this limit were \(L_\infty>0\), then at each sufficiently large \(t\) some sample would satisfy \(\ell_n(W_t)\ge L(W_t)\ge L_\infty\), and hence \(\mathcal G(W_t)\ge(1-e^{-L_\infty})/N>0\).
Summing \eqref{eq:app-adamproxy-trajectory-loss-descent} would contradict \(L\ge0\) and \(\sum_ta_t=\infty\).
Thus \(L(W_t)\to0\).
Since \(0\le\ell_n(W_t)\le NL(W_t)\), every sample loss tends to zero, and
\[
\frac{\mathcal G(W_t)}{L(W_t)}=\frac{\sum_n(1-e^{-\ell_n(W_t)})}{\sum_n\ell_n(W_t)}\longrightarrow1.
\]
Moreover,
\[
\ell_n(W)=\log\left(1+\sum_{c\ne y_n}e^{-\langle W,A_{n,c}\rangle}\right)
\]
shows that every training margin tends to infinity.
For all sufficiently large \(t\), dividing \eqref{eq:app-adamproxy-trajectory-loss-descent} by \(L(W_t)\) and applying \(-\log(1-z)\ge z\) gives
\[
-\log L(W_{t+1})+\log L(W_t)\ge a_t\bigl(\rho_\kappa(X)-C_Xa_t\bigr)\frac{\mathcal G(W_t)}{L(W_t)}.
\]
The factor multiplying \(a_t\) converges to \(\rho_\kappa(X)\), so summation and weighted Ces\`aro averaging yield
\[
\liminf_{t\to\infty}\frac{-\log L(W_t)}{T_t}\ge\rho_\kappa(X).
\]
Write \(m_t:=\operatorname{mar}_X(W_t)\).
Once \(m_t\ge0\), a sample attaining the minimum margin gives
\[
L(W_t)\ge\frac1N\log(1+e^{-m_t})\ge\frac{e^{-m_t}}{2N},\qquad m_t\ge-\log L(W_t)-\log(2N).
\]
This proves \eqref{eq:app-adamproxy-linear-margin-growth}.
Finally, \eqref{eq:app-adamproxy-trajectory-step-bound} gives \(\|W_t\|_{\max}\le\|W_0\|_{\max}+T_t\).
Combining this bound with \eqref{eq:app-adamproxy-linear-margin-growth} proves \eqref{eq:app-adamproxy-positive-normalized-margin} and \eqref{eq:app-adamproxy-coordinate-cap}.
\end{proof}

For \(2\le b\le N\), the explicit AdamProxy formula gives
\[
-\Phi_{N,b}(W)=c_{N,b}^{\mathrm{AP}}\Psi_{\kappa_{N,b}}(W),\qquad c_{N,b}^{\mathrm{AP}}:=\sqrt{\frac{b(N-1)}{N(b-1)}},\qquad \kappa_{N,b}:=\frac{N-b}{b-1}.
\]
Thus \eqref{eq:adamproxy-dynamics-clean} has the recursion in Proposition~\ref{prop:app-adamproxy-coordinate-cap} with \(a_t=c_{N,b}^{\mathrm{AP}}\gamma_t\) and \(\gamma_t=\sum_{s=0}^{m-1}\eta_{mt+s}\).
At each fixed dimension, Assumption~\ref{ass:lr-base} implies \(a_t\to0\) and \(\sum_ta_t=\infty\).
The full-batch case \(b=N\) corresponds to \(\kappa_{N,b}=0\), \(c_{N,b}^{\mathrm{AP}}=1\), and \(\Psi_0=\operatorname{sign}(\widetilde S)\).

\paragraph{Gaussian-cloud normalization and coordinate caps.}
We use the normalized convention \eqref{eq:app-normalized-data}, under which \(\mu_i,z_{i,s}\sim\mathcal N({\mathbf 0},I_d/d)\) and \(X^{\mathrm{te}}\mid(Y=i)\sim\mathcal N(\mu_i,r_{\mathrm{te}}^2I_d/d)\).
This convention also preserves the relevant trajectory estimates.
Indeed, if \(X^{\mathrm{raw}}=\sqrt d\,X\) and \(V=\sqrt d\,W\), then every normalized-data sample gradient at \(V\) equals \(d^{-1/2}\) times the corresponding raw-data gradient at \(W\), so \(\Psi_{\kappa,X}(V)=\Psi_{\kappa,X^{\mathrm{raw}}}(W)\).
Consequently, \(V_t=\sqrt d\,W_t\) follows the normalized-data recursion with step sizes \(\sqrt d\,a_t\) and identical predictions.
For RR-RowSGDM, scaling its momentum by \(d^{-1/2}\) gives the same row-normalized update direction, so its transformed parameter trajectory uses step sizes \(\sqrt d\,\eta_t\).
Multiplication by this fixed positive constant preserves Assumptions~\ref{ass:lr-base} and~\ref{ass:lr-momentum} at each fixed \(d\).
We suppress the rescaling in the notation below.

Under the assumptions of Proposition~\ref{thm:adamproxy-optimizer-accuracy-order-clean}, \(b_d\ge N(d)/m_0\to\infty\), so \(b_d\ge2\) for all sufficiently large \(d\).
Moreover,
\begin{equation}
0\le\kappa_{N,b_d}=\frac{b_d(m_d-1)}{b_d-1}\le2(m_0-1)=:\overline\kappa.
\label{eq:app-adamproxy-kappa-bound}
\end{equation}
Set
\begin{equation}
R_*:=2(1+2r_0),\qquad C_{\mathrm{AP}}:=\pi R_*(1+\sqrt{\overline\kappa}),\qquad a_{\mathrm{AP}}:=\max\{3,2C_{\mathrm{AP}}+1\},
\label{eq:app-adamproxy-AP-cap-constants}
\end{equation}
and define the capped feasible family
\begin{equation}
\mathcal C_d:=\left\{V\in\mathcal F_X:\|V\|_{\max}\le\frac{C_{\mathrm{AP}}+1/2}{\sqrt d}\right\}.
\label{eq:app-adamproxy-capped-family}
\end{equation}

For the functions \(A_{\max}\) and \(B_{\max}\) in \eqref{eq:app-det-max-population-functions}, and for \(a\ge3\), let \(\tau_a\in(0,1)\) be the unique solution of
\begin{equation}
A_{\max}(\tau_a)=\frac{\sqrt2}{a}.
\label{eq:app-adamproxy-tau-a}
\end{equation}
Such a solution exists because \(A_{\max}(0)=0\), \(A_{\max}\) is strictly increasing, and \(A_{\max}(1)>1/\sqrt\pi>\sqrt2/a\).
Define
\begin{equation}
C_{\mathrm{clip}}(a):=\frac{\sqrt2}{a\sqrt{B_{\max}(\tau_a)}},\qquad \delta_{\mathrm{clip}}(a):=\sqrt{2-2C_{\mathrm{clip}}(a)},\qquad \delta_{\mathrm{AP}}:=\delta_{\mathrm{clip}}(a_{\mathrm{AP}}).
\label{eq:app-adamproxy-clipped-constants}
\end{equation}
Strict Cauchy--Schwarz gives \(A_{\max}(\tau)^2<B_{\max}(\tau)\) for every finite \(\tau>0\), so \(0<C_{\mathrm{clip}}(a)<1\) and \(\delta_{\mathrm{AP}}>0\).

\begin{lemma}[Uniform clipped-Gaussian angular barrier]
\label{lem:app-adamproxy-clipped-barrier}
Let \(u_{d,\xi}\in\mathbb S^{d-1}\) be indexed by a nonempty finite set \(\mathcal I_d\), and suppose the empirical functions in \eqref{eq:app-det-HJ} satisfy
\[
\sup_{\xi\in\mathcal I_d}\sup_{\tau\in[0,1]}\left(\left|H_{u_{d,\xi}}(\tau)-A_{\max}(\tau)\right|+\left|J_{u_{d,\xi}}(\tau)-B_{\max}(\tau)\right|\right)\longrightarrow0.
\]
Fix \(a\ge3\), let \(a_d\to a\) and \(\upsilon_d\to\sqrt2\) be positive deterministic sequences, and define
\[
\mathcal V_{d,\xi}:=\left\{v\in\mathbb R^d:\|v\|_\infty\le\frac{a_d}{\sqrt d},\quad v^\top u_{d,\xi}\ge\upsilon_d\right\}.
\]
For all sufficiently large \(d\), these sets are nonempty and
\begin{equation}
\inf_{\xi\in\mathcal I_d}\inf_{v\in\mathcal V_{d,\xi}}\left\|\frac{v}{\|v\|_2}-u_{d,\xi}\right\|_2\ge\delta_{\mathrm{clip}}(a)-o_d(1).
\label{eq:app-adamproxy-clipped-conclusion}
\end{equation}
The error depends only on the uniform empirical errors and the deviations of \(a_d\) and \(\upsilon_d\) from their limits.
\end{lemma}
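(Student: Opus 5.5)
The plan is to reduce the statement to the exact clipped minimum-energy problem of Proposition~\ref{prop:app-det-clipped-exact}, exactly as in the proof of Proposition~\ref{prop:app-det-max-boundary}, but with general cap \(a_d\) and projection level \(\upsilon_d\) in place of \(\sqrt{2\pi}\) and \(\sqrt2\). Fix \(\xi\in\mathcal I_d\), write \(u=u_{d,\xi}\), and take any \(v\in\mathcal V_{d,\xi}\). The rescaling \(y:=\upsilon_d v/(v^\top u)\) satisfies \(y^\top u=\upsilon_d\) and \(\|y\|_\infty\le a_d/\sqrt d\), because \(\upsilon_d/(v^\top u)\le1\). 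Hence
\[
\frac{v^\top u}{\|v\|_2}\le\frac{\upsilon_d}{\mathfrak m_d(u;a_d,\upsilon_d)}.
\]
The cosine-distance identity \(\|\widehat v-u\|_2=\sqrt{2-2\,v^\top u/\|v\|_2}\) then reduces \eqref{eq:app-adamproxy-clipped-conclusion} to a uniform upper bound \(\upsilon_d/\mathfrak m_d(u;a_d,\upsilon_d)\le C_{\mathrm{clip}}(a)+o_d(1)\).

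Next, I would verify feasibility and locate the multiplier, repeating the proof of Lemma~\ref{lem:app-det-clipped-asymptotic} with reference point \(\tau_a\) in place of \(\tau_*\). The ratio satisfies \(\upsilon_d/a_d\to\sqrt2/a\). Since \(a\ge3\), we have \(\sqrt2/a\le\sqrt2/3<1/\sqrt\pi<A_{\max}(1)\), which leaves a fixed gap \(A_{\max}(1)-\sqrt2/a>0\). Uniform closeness of \(H_u\) to \(A_{\max}\) therefore gives \(\upsilon_d/a_d<H_u(1)\le d^{-1}\sum_\ell z_\ell\) for large \(d\), uniformly in \(\xi\). This establishes feasibility condition \eqref{eq:app-det-clipped-feasibility}, so \(\mathcal V_{d,\xi}\) is nonempty (the clipped minimizer lies in it), and it gives a unique multiplier \(\tau\in(0,1)\) with \(H_u(\tau)=\upsilon_d/a_d\). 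The derivative bound \(A_{\max}'\ge c_A>0\) on \([0,1]\) gives \(|\tau-\tau_a|\le C\epsilon_d\), where \(\epsilon_d\) collects the uniform empirical error and the deviations \(|a_d-a|\), \(|\upsilon_d-\sqrt2|\).

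Finally, combine the Lipschitz bound for \(B_{\max}\) with the uniform \(J\)-closeness to get \(J_u(\tau)\ge B_{\max}(\tau_a)-C\epsilon_d\). The exact value \eqref{eq:app-det-clipped-value} then gives \(\mathfrak m_d^2=a_d^2J_u(\tau)\ge a^2B_{\max}(\tau_a)-C\epsilon_d\). Since \(B_{\max}(\tau_a)>0\), this yields
\[
\frac{\upsilon_d}{\mathfrak m_d}\le\frac{\sqrt2}{a\sqrt{B_{\max}(\tau_a)}}+C\epsilon_d=C_{\mathrm{clip}}(a)+C\epsilon_d.
\]
Uniform continuity of \(x\mapsto\sqrt{2-2x}\) finishes the proof, and every constant depends only on \(a\) and the error envelopes. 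The main obstacle is uniformity in \(\xi\), but this is handled because each step uses only the \(\sup_\xi\) closeness of \(H\) and \(J\). The only genuinely new check is that the \(a\)-dependent reference equation has a solution below the saturation level \(\tau=1\), and that is where \(a\ge3\) is used.
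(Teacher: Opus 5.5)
Your proposal is correct and follows the paper's proof. Like the paper, you rescale each \(v\in\mathcal V_{d,\xi}\) onto the level \(v^\top u=\upsilon_d\) and invoke Proposition~\ref{prop:app-det-clipped-exact}, which applies because \(\upsilon_d/a_d\to\sqrt2/a<A_{\max}(1)\); you then use the uniform closeness of \(H_u,J_u\) to \(A_{\max},B_{\max}\) to get \(\tau\to\tau_a\) and \(\upsilon_d/\mathfrak m_d\le C_{\mathrm{clip}}(a)+o_d(1)\) uniformly in \(\xi\). The only difference is that you make the rates explicit via \(A_{\max}'\ge c_A\) and the Lipschitz bound on \(B_{\max}\) from Lemma~\ref{lem:app-det-clipped-asymptotic}, whereas the paper simply cites strict monotonicity of \(A_{\max}\) and uniform convergence.
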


\begin{proof}
Since \(\upsilon_d/a_d\to\sqrt2/a<A_{\max}(1)\), uniform convergence gives \(0<\upsilon_d/a_d<H_{u_{d,\xi}}(1)\) for every \(\xi\) and all sufficiently large \(d\).
The feasibility condition in Proposition~\ref{prop:app-det-clipped-exact} therefore holds, and that proposition gives
\[
\mathfrak m_{d,\xi}:=\min_{v\in\mathcal V_{d,\xi}}\|v\|_2=a_d\sqrt{J_{u_{d,\xi}}(\widehat\tau_{d,\xi})},\qquad H_{u_{d,\xi}}(\widehat\tau_{d,\xi})=\frac{\upsilon_d}{a_d},
\]
with \(\widehat\tau_{d,\xi}\in(0,1)\).
Uniform convergence and strict monotonicity of \(A_{\max}\) imply
\[
\sup_{\xi\in\mathcal I_d}|\widehat\tau_{d,\xi}-\tau_a|\longrightarrow0,\qquad \sup_{\xi\in\mathcal I_d}\left|\mathfrak m_{d,\xi}-a\sqrt{B_{\max}(\tau_a)}\right|\longrightarrow0.
\]
For every \(v\in\mathcal V_{d,\xi}\), the vector \(y=(\upsilon_d/(v^\top u_{d,\xi}))v\) also belongs to \(\mathcal V_{d,\xi}\).
Hence
\[
\frac{v^\top u_{d,\xi}}{\|v\|_2}\le\frac{\upsilon_d}{\mathfrak m_{d,\xi}}=C_{\mathrm{clip}}(a)+o_d(1),
\]
where the error is independent of \(v\) and uniform over \(\xi\).
The identity \(\|v/\|v\|_2-u\|_2^2=2-2v^\top u/\|v\|_2\) proves \eqref{eq:app-adamproxy-clipped-conclusion}.
\end{proof}

\begin{proposition}[Capped AdamProxy trajectories and boundary separation]
\label{prop:app-adamproxy-boundary-separation}
For every sufficiently large \(d\), on the event \(\mathcal E_d^{\mathrm{near}}\) from Lemma~\ref{lem:app-stoch-combined-input}, the AdamProxy trajectory satisfies
\begin{equation}
\frac{W_t^{\mathrm{AP}}}{\operatorname{mar}_X(W_t^{\mathrm{AP}})}\in\mathcal C_d\qquad\text{for all sufficiently large }t,
\label{eq:app-adamproxy-eventual-capped-membership}
\end{equation}
and
\begin{equation}
\inf_{V\in\mathcal C_d}\min_{i\ne j}\|u_{ij}^V-u_{ij}^{\mu}\|_2\ge\delta_{\mathrm{AP}}-o_d(1).
\label{eq:app-adamproxy-boundary-separation}
\end{equation}
The error is deterministic on \(\mathcal E_d^{\mathrm{near}}\) and uniform over \(V\in\mathcal C_d\).
The time threshold in \eqref{eq:app-adamproxy-eventual-capped-membership} may depend on the dimension, dataset, initialization, and learning-rate schedule.
\end{proposition}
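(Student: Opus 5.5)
The plan has two parts: \eqref{eq:app-adamproxy-eventual-capped-membership} follows from the trajectory cap in Proposition~\ref{prop:app-adamproxy-coordinate-cap}, and \eqref{eq:app-adamproxy-boundary-separation} follows from the clipped barrier in Lemma~\ref{lem:app-adamproxy-clipped-barrier}, applied pairwise.

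\textbf{Step 1: the capped membership.} Under the normalized convention, the AdamProxy recursion is the recursion of Proposition~\ref{prop:app-adamproxy-coordinate-cap} with \(\kappa=\kappa_{N,b_d}\le\overline\kappa\) and \(a_t=c^{\mathrm{AP}}_{N,b}\gamma_t\). That proposition gives
\[
\limsup_t\|W_t/\operatorname{mar}_X(W_t)\|_{\max}\le K_\kappa(X)\,r_{\max}^*(X)^2 .
\]
On \(\mathcal E_d^{\mathrm{near}}\), equation~\eqref{eq:app-stoch-data-radius-bounds} gives \(R_1(X)\le(1+2r_0+\omega_d)\sqrt d\). Item~(H5) gives \(r^*_{\max}(X)\le(\sqrt{2\pi}+\omega_d)/(2\sqrt d)\), hence \(r_{\max}^*(X)^2\le(\pi/2+o(1))/d\). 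Together these yield
\[
K_\kappa r_{\max}^{*2}\le 2(1+\sqrt{\overline\kappa})(1+2r_0)\sqrt d\,\frac{\pi/2+o(1)}{d}=\frac{C_{\mathrm{AP}}/2+o(1)}{\sqrt d}.
\]
This is \(\le (C_{\mathrm{AP}}+1/2)/\sqrt d\) for large \(d\), with room to spare. Because the bound is a strict limsup bound, the margin-normalized iterate eventually lies strictly inside the cap. It also has minimum margin one, so it lies in \(\mathcal F_X\), which proves \eqref{eq:app-adamproxy-eventual-capped-membership}. I would check that the constant \(C_{\mathrm{AP}}\) absorbs the factors exactly; if it does not, a factor of \(2\) slack is clearly available.

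\textbf{Step 2: the pairwise barrier.} Fix \(V\in\mathcal C_d\) and an ordered pair \(i\ne j\), and set \(v=v_i-v_j\). The cap gives \(\|v\|_\infty\le(2C_{\mathrm{AP}}+1)/\sqrt d\le a_{\mathrm{AP}}/\sqrt d\). Averaging the class-\(i\) and class-\(j\) feasibility constraints, exactly as in Proposition~\ref{prop:app-det-max-boundary}, gives \(v^\top(m_i-m_j)\ge2\). Item~(H2) gives \(\|m_i-m_j\|_2\le\sqrt2+o(1)\), so \(v^\top u_{ij}^m\ge\upsilon_d\) with \(\upsilon_d\to\sqrt2\). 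Thus \(v\in\mathcal V_{d,(i,j)}\) with \(u_{d,(i,j)}=u^m_{ij}\), constant \(a_d\equiv a_{\mathrm{AP}}\ge3\), and index set \(\mathcal I_d\) equal to the ordered pairs. Item~(H6) supplies the uniform \(H,J\) convergence required by Lemma~\ref{lem:app-adamproxy-clipped-barrier}, which then gives
\[
\|u^V_{ij}-u^m_{ij}\|_2\ge\delta_{\mathrm{clip}}(a_{\mathrm{AP}})-o(1),
\]
uniformly over \(V\) and the pair. Item~(H4) and the reverse triangle inequality replace \(u^m_{ij}\) by \(u^\mu_{ij}\) at cost \(\varepsilon_d\), which proves \eqref{eq:app-adamproxy-boundary-separation}. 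All errors are deterministic envelopes on \(\mathcal E_d^{\mathrm{near}}\).

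\textbf{Main obstacle.} The delicate point is Step 1's constant bookkeeping together with the fact that the cap is used at the doubled level \(a_{\mathrm{AP}}\) for pair differences. The key requirement is that \(a_{\mathrm{AP}}\) is a fixed, dimension-free constant. That is exactly what \(\kappa_{N,b_d}\le\overline\kappa\) (from bounded \(m_d\)) and the \(\sqrt d\) scalings of \(R_1\) and \(r^*_{\max}\) guarantee. Everything else reduces to previously established lemmas.
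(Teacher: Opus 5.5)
Your proposal is correct and matches the paper's proof step for step. Eventual membership in \(\mathcal C_d\) comes from the trajectory cap of Proposition~\ref{prop:app-adamproxy-coordinate-cap} combined with (H5) and the data-radius bounds. The boundary separation comes from pairwise feasibility averaging, Lemma~\ref{lem:app-adamproxy-clipped-barrier} with \(a_d\equiv a_{\mathrm{AP}}\) and (H6), and finally (H4). The one cosmetic difference is that you use the sharper \(\ell_1\) radius \((1+2r_0+\omega_d)\sqrt d\) in place of \(R_*\sqrt d\), so your limsup bound is \(C_{\mathrm{AP}}/2\) rather than the paper's \(C_{\mathrm{AP}}\); either value sits below the cap \(C_{\mathrm{AP}}+1/2\).
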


\begin{proof}
The event guarantees separability, and Proposition~\ref{prop:app-det-common-event}(H5) and \eqref{eq:app-stoch-data-radius-bounds} give
\[
r_{\max}^*(X)\le\frac{\sqrt{2\pi}+o_d(1)}{2\sqrt d},\qquad \max_n\|x_n\|_2\le R_*,\qquad R_1(X)\le\sqrt d\,R_*.
\]
Applying Proposition~\ref{prop:app-adamproxy-coordinate-cap} and \eqref{eq:app-adamproxy-kappa-bound}, we obtain
\begin{align*}
\limsup_{t\to\infty}\sqrt d\left\|\frac{W_t^{\mathrm{AP}}}{\operatorname{mar}_X(W_t^{\mathrm{AP}})}\right\|_{\max}&\le2(1+\sqrt{\overline\kappa})dR_*r_{\max}^*(X)^2\\
&\le C_{\mathrm{AP}}+o_d(1).
\end{align*}
For all sufficiently large \(d\), the right-hand side is less than \(C_{\mathrm{AP}}+1/4\).
The definition of the time limsup and the unit margin of the normalized iterates then give \eqref{eq:app-adamproxy-eventual-capped-membership}.
In particular, \(\mathcal C_d\) is nonempty.

Fix \(V\in\mathcal C_d\) and an ordered pair \(i\ne j\), and write \(v:=v_i-v_j\) and \(u:=u_{ij}^m\).
The coordinate cap gives
\[
\|v\|_\infty\le\frac{2C_{\mathrm{AP}}+1}{\sqrt d}\le\frac{a_{\mathrm{AP}}}{\sqrt d}.
\]
Averaging feasibility over the samples from classes \(i\) and \(j\) gives \(v^\top m_i\ge1\) and \(v^\top m_j\le-1\), and hence
\[
v^\top u\ge\frac{2}{\|m_i-m_j\|_2}\ge\sqrt2-o_d(1).
\]
The last error is deterministic and uniform over all pairs and all \(V\in\mathcal C_d\), by Proposition~\ref{prop:app-det-common-event}(H2).
Choose a positive deterministic sequence \(\upsilon_d\to\sqrt2\) below these projection lower bounds.
Proposition~\ref{prop:app-det-common-event}(H6) supplies the empirical estimates in Lemma~\ref{lem:app-adamproxy-clipped-barrier}, which applies with \(a_d=a_{\mathrm{AP}}\) and \(\mathcal I_d=\{(i,j):i\ne j\}\).
Thus \(\|u_{ij}^V-u_{ij}^m\|_2\ge\delta_{\mathrm{AP}}-o_d(1)\) uniformly over all pairs and all \(V\in\mathcal C_d\).
Finally, Proposition~\ref{prop:app-det-common-event}(H4) and the reverse triangle inequality prove \eqref{eq:app-adamproxy-boundary-separation}.
\end{proof}

\paragraph{High-momentum row geometry.}
Let \(C_{\mathrm{row}}\) be the constant in Proposition~\ref{prop:app-stoch-near-stability}, and let \(K_{\mathrm{row}}\) be the row constant in Proposition~\ref{prop:app-stoch-ratio-bounds}.
Choose \(h_{\mathrm{AP}}:=\delta_{\mathrm{AP}}^2/(32\sqrt2)\), so that
\begin{equation}
\mathsf a(h_{\mathrm{AP}})>\mathsf b(\delta_{\mathrm{AP}}/2),
\label{eq:app-adamproxy-amplitude-gap}
\end{equation}
where \(\mathsf a\) and \(\mathsf b\) are defined in \eqref{eq:app-acc-limiting-amplitudes}.
Set
\begin{equation}
b_{\mathrm{AP}}:=\min\left\{1,\left(\frac{h_{\mathrm{AP}}}{C_{\mathrm{row}}}\right)^4\right\},\qquad \vartheta_{\mathrm{AP}}:=\frac{b_{\mathrm{AP}}/2}{1+b_{\mathrm{AP}}/2},\qquad A_0:=m_0(m_0^2-1),
\label{eq:app-adamproxy-row-budget}
\end{equation}
and
\begin{equation}
\beta_{\mathrm{AP}}:=\begin{cases}\dfrac12,&A_0=0,\\[3pt]\displaystyle\max\left\{\dfrac12,1-\frac{\vartheta_{\mathrm{AP}}}{2A_0K_{\mathrm{row}}}\right\},&A_0>0.\end{cases}
\label{eq:app-adamproxy-beta-threshold}
\end{equation}
These constants depend only on \((\alpha,r_0,m_0)\), and \(\beta_{\mathrm{AP}}\in(0,1)\).

\begin{lemma}[High-momentum Row near-set control]
\label{lem:app-adamproxy-row-control}
For every fixed \(\beta\in[\beta_{\mathrm{AP}},1)\), on \(\mathcal E_d^{\mathrm{near}}\) and for every sufficiently large \(d\), the RR-RowSGDM effective margin is positive and
\begin{equation}
\sup_{U\in\mathcal U_{\mathrm{row},\beta,b_d}^X}\max_{i\ne j}\|u_{ij}^U-u_{ij}^{\mu}\|_2\le h_{\mathrm{AP}}.
\label{eq:app-adamproxy-row-control}
\end{equation}
\end{lemma}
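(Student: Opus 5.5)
The argument mirrors the proof of Proposition~\ref{prop:app-stoch-near-geometry}, restricted to the row geometry and with the budget $b_{\mathrm{AP}}$ in place of $b_*$. It has three steps: bound the relative stochastic deficit by a $d$-independent constant, convert near-margin directions into near-minimizers with bounded norm slack, and then apply the row near-minimizer stability estimate.

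\emph{Step 1: the deficit.} Work on $\mathcal E_d^{\mathrm{near}}$ and take $d$ large enough that Proposition~\ref{prop:app-stoch-ratio-bounds} applies. Define the relative deficit
\[
\Delta_d(\beta):=\frac{\varepsilon_{\mathrm{row},\beta,b_d}(X)}{\gamma_{\mathrm{row}}(X)}=2(1-\beta)\,m_d(m_d^2-1)\,\frac{R_{\mathrm{row}}(X)}{\gamma_{\mathrm{row}}(X)} .
\]
By Assumption~\ref{ass:bounded-minibatches}, $m_d\le m_0$, so $m_d(m_d^2-1)\le A_0$. By Proposition~\ref{prop:app-stoch-ratio-bounds}, $R_{\mathrm{row}}/\gamma_{\mathrm{row}}\le K_{\mathrm{row}}$. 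Hence $\Delta_d(\beta)\le 2(1-\beta)A_0K_{\mathrm{row}}$. If $A_0=0$ (i.e.\ $m_0=1$), then $\Delta_d=0$ trivially. Otherwise $\beta\ge\beta_{\mathrm{AP}}$ and \eqref{eq:app-adamproxy-beta-threshold} give $\Delta_d(\beta)\le\vartheta_{\mathrm{AP}}<1$. So the effective margin $\gamma_{\mathrm{row}}(X)-\varepsilon_{\mathrm{row},\beta,b_d}(X)$ is positive. Since $x\mapsto x/(1-x)$ is increasing, the definition of $\vartheta_{\mathrm{AP}}$ also yields
\[
\xi:=\frac{\Delta_d}{1-\Delta_d}\le\frac{b_{\mathrm{AP}}}{2}.
\]

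\emph{Step 2: reduction to near-minimizers.} Take any $U\in\mathcal U_{\mathrm{row},\beta,b_d}^X$. Lemma~\ref{lem:app-stoch-near-reduction} gives $V_U=U/\operatorname{mar}_X(U)\in\mathcal F_X$ with
\[
\|V_U\|_{\mathrm{row}}\le(1+\xi)\,r_{\mathrm{row}}^*(X),
\]
and $V_U$ has the same normalized pairwise differences as $U$. Note that $\xi_\psi(\varepsilon;X)$ equals exactly $\Delta_d/(1-\Delta_d)$.

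\emph{Step 3: stability.} Enlarge the dimension threshold so that $\omega_d\le b_{\mathrm{AP}}/2$. Then $\omega_d+\xi\le b_{\mathrm{AP}}\le1$, and \eqref{eq:app-stoch-row-near-stability} gives
\[
\max_{i\ne j}\|u_{ij}^{U}-u_{ij}^\mu\|_2\le C_{\mathrm{row}}\,b_{\mathrm{AP}}^{1/4}\le h_{\mathrm{AP}},
\]
where the last inequality is the choice of $b_{\mathrm{AP}}$ in \eqref{eq:app-adamproxy-row-budget}. This bound is uniform over $U$, which proves \eqref{eq:app-adamproxy-row-control}.

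The only delicate point is Step 1. It requires that the dimension threshold, which comes from Proposition~\ref{prop:app-stoch-ratio-bounds} and from $\omega_d\downarrow0$, does not depend on $\beta$, and that the ratio constant $K_{\mathrm{row}}$ is uniform in $d$. Both facts are already established earlier, so the rest of the argument is bookkeeping of constants.
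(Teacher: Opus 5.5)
Your proposal is correct and matches the paper's proof essentially step for step. You bound the relative deficit by $\vartheta_{\mathrm{AP}}$ via Proposition~\ref{prop:app-stoch-ratio-bounds} and the choice of $\beta_{\mathrm{AP}}$, pass to a near-minimizer with slack at most $b_{\mathrm{AP}}/2$ via Lemma~\ref{lem:app-stoch-near-reduction}, and conclude from \eqref{eq:app-stoch-row-near-stability} once $\omega_d\le b_{\mathrm{AP}}/2$.
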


\begin{proof}
The relative margin deficit is
\[
\Delta_{\mathrm{row},d}:=\frac{\varepsilon_{\mathrm{row},\beta,b_d}(X)}{\gamma_{\mathrm{row}}(X)}=2(1-\beta)m_d(m_d^2-1)\frac{R_{\mathrm{row}}(X)}{\gamma_{\mathrm{row}}(X)}.
\]
Proposition~\ref{prop:app-stoch-ratio-bounds} and \eqref{eq:app-adamproxy-beta-threshold} give \(\Delta_{\mathrm{row},d}\le2(1-\beta)A_0K_{\mathrm{row}}\le\vartheta_{\mathrm{AP}}<1\), including \(\Delta_{\mathrm{row},d}=0\) when \(m_d=1\).
Thus the effective margin is positive, and
\[
\frac{\Delta_{\mathrm{row},d}}{1-\Delta_{\mathrm{row},d}}\le\frac{b_{\mathrm{AP}}}{2}.
\]
Lemma~\ref{lem:app-stoch-near-reduction} maps every \(U\in\mathcal U_{\mathrm{row},\beta,b_d}^X\) to a unit-margin feasible representative with the same pairwise row directions and relative row-norm slack at most \(b_{\mathrm{AP}}/2\).
For all sufficiently large \(d\), \(\omega_d\le b_{\mathrm{AP}}/2\), so Proposition~\ref{prop:app-stoch-near-stability} gives the uniform bound \(C_{\mathrm{row}}b_{\mathrm{AP}}^{1/4}\le h_{\mathrm{AP}}\).
\end{proof}

\begin{proof}[Proof of Proposition~\ref{thm:adamproxy-optimizer-accuracy-order-clean}]
Set \(\beta_1:=\beta_{\mathrm{AP}}\), fix \(\beta\in[\beta_1,1)\), and let \(\mathcal R_d\) be the Gaussian radial event from Lemma~\ref{lem:app-acc-gaussian-radial-event}.
Define \(\mathcal H_d:=\mathcal E_d^{\mathrm{near}}\cap\mathcal R_d\).
By \eqref{eq:app-stoch-near-event-probability} and \eqref{eq:app-acc-radial-event-probability}, for every sufficiently large \(d\),
\[
\mathbb P(\mathcal H_d)\ge1-e^{-\sqrt d}.
\]
Work on \(\mathcal H_d\), and take
\[
\mathcal R_d^{\mathrm{clf}}:=\mathcal U_{\mathrm{row},\beta,b_d}^X,\qquad \mathcal C_d^{\mathrm{clf}}:=\mathcal C_d.
\]
Lemma~\ref{lem:app-adamproxy-row-control} and Proposition~\ref{prop:app-adamproxy-boundary-separation} give, for all sufficiently large \(d\),
\[
\sup_{U\in\mathcal R_d^{\mathrm{clf}}}\max_{i\ne j}\|u_{ij}^U-u_{ij}^{\mu}\|_2\le h_{\mathrm{AP}},\qquad \inf_{V\in\mathcal C_d}D_{\mathrm{avg\text{-}bdry}}(V)\ge\frac{\delta_{\mathrm{AP}}}{2}.
\]
The radial deviations tend to zero on \(\mathcal R_d\) under a deterministic envelope, and the limiting amplitude condition is \eqref{eq:app-adamproxy-amplitude-gap}.
Choose fixed constants \(0<b_*<a_*\) with \(\mathsf b(\delta_{\mathrm{AP}}/2)<b_*<a_*<\mathsf a(h_{\mathrm{AP}})\).
Proposition~\ref{prop:app-acc-finite-bounds} gives, for \(\sigma_d=r_{\mathrm{te}}/\sqrt d\) and all sufficiently large \(d\),
\begin{align*}
\sup_{U\in\mathcal U_{\mathrm{row},\beta,b_d}^X}\mathcal E_{\pi,d}(U)&\le(k-1)\overline\Phi(a_*/\sigma_d),\\
\inf_{V\in\mathcal C_d}\mathcal E_{\pi,d}(V)&\ge2\pi_{\min,d}\overline\Phi(b_*/\sigma_d).
\end{align*}
The Gaussian upper-tail inequality \(\overline\Phi(x)/\overline\Phi(y)\le e^{-(x^2-y^2)/2}\) for \(x\ge y\ge0\) therefore yields
\[
\frac{\sup_{U\in\mathcal U_{\mathrm{row},\beta,b_d}^X}\mathcal E_{\pi,d}(U)}{\inf_{V\in\mathcal C_d}\mathcal E_{\pi,d}(V)}\le\frac{k-1}{2\pi_{\min,d}}\exp\left(-\frac{(a_*^2-b_*^2)d}{2r_{\mathrm{te},0}^2}\right)\longrightarrow0,
\]
since \(k=O(d)\), \(\log(1/\pi_{\min,d})=o(d)\), and \(r_{\mathrm{te},0}:=\sup_dr_{\mathrm{te}}(d)<\infty\).
Consequently, for every sufficiently large \(d\) on \(\mathcal H_d\),
\begin{equation}
\sup_{U\in\mathcal U_{\mathrm{row},\beta,b_d}^X}\mathcal E_{\pi,d}(U)<\inf_{V\in\mathcal C_d}\mathcal E_{\pi,d}(V).
\label{eq:app-adamproxy-set-error-order}
\end{equation}
The positive Row effective margin and Corollary~\ref{cor:app-ga-margin-transfer} give
\[
d_{\cos}\!\left(W_t^{\mathrm{RR\text{-}RowSGDM}},\mathcal U_{\mathrm{row},\beta,b_d}^X\right)\longrightarrow0.
\]
Lemma~\ref{lem:app-acc-trajectory-continuity} hence implies
\[
\limsup_{t\to\infty}\mathcal E_{\pi,d}\!\left(W_t^{\mathrm{RR\text{-}RowSGDM}}\right)\le\sup_{U\in\mathcal U_{\mathrm{row},\beta,b_d}^X}\mathcal E_{\pi,d}(U).
\]
For AdamProxy, \eqref{eq:app-adamproxy-eventual-capped-membership} and positive scale invariance of population error give
\[
\liminf_{t\to\infty}\mathcal E_{\pi,d}(W_t^{\mathrm{AP}})\ge\inf_{V\in\mathcal C_d}\mathcal E_{\pi,d}(V).
\]
Combining these two bounds with \eqref{eq:app-adamproxy-set-error-order} yields
\[
\limsup_{t\to\infty}\mathcal E_{\pi,d}\!\left(W_t^{\mathrm{RR\text{-}RowSGDM}}\right)<\liminf_{t\to\infty}\mathcal E_{\pi,d}(W_t^{\mathrm{AP}}).
\]
Since \(\operatorname{Acc}_{\pi,d}=1-\mathcal E_{\pi,d}\), this is the claimed accuracy ordering.
The Row momentum estimates hold for every reshuffling realization, and the AdamProxy estimates are deterministic conditional on the training data, so no additional exceptional event is required.
\end{proof}

\subsection{AdamProxy margin characterization}
\label{app:adamproxy-margin-proof}

AdamProxy is not steepest descent in a fixed norm, so we characterize its limit through a data-dependent quadratic margin problem. Let \(\mathcal Q:=\{(n,c):n\in[N],\,c\neq y_n\}\), let \(\Delta(\mathcal Q)\) be the probability simplex, and for \(\theta\in\Delta(\mathcal Q)\) define
\begin{align}
Z_n(\theta)&:=\sum_{c\neq y_n}\theta_{n,c}(e_{y_n}-e_c)x_n^\top,\nonumber\\
M_{N,b}(\theta)&:=\operatorname{diag}\!\left(\operatorname{vec}\!\left[\left(\frac{N}{b(N-1)}\left((b-1)\left(\sum_{n=1}^N Z_n(\theta)\right)^{\odot2}+(N-b)\sum_{n=1}^N Z_n(\theta)^{\odot2}\right)\right)^{\odot1/2}\right]\right).
\label{eq:adamproxy-metric-clean}
\end{align}
Extending the parametric margin formulation of \citet{baek2026implicit}, let
\begin{equation}
\mathcal P_{N,b}(\theta):=\operatorname*{arg\,min}_{W}\frac12\operatorname{vec}(W)^\top M_{N,b}(\theta)\operatorname{vec}(W)\quad\text{s.t.}\quad (w_{y_n}-w_c)^\top x_n\ge1,\ \ (n,c)\in\mathcal Q.
\label{eq:adamproxy-primal-clean}
\end{equation}

\begin{proposition}[AdamProxy margin characterization]
\label{thm:adamproxy-margin-clean}
Under Assumption~\ref{ass:lr-base}, let \(W_t\) follow~\eqref{eq:adamproxy-dynamics-clean} with \(1\le b\le N\). Assume linear separability, \(x_n[a]\neq0\) for all \(n,a\), and \(\Phi_{N,b}(W_t)/\|\Phi_{N,b}(W_t)\|_{\mathrm F}\to\overline\Phi\). Then \(W_t/\|W_t\|_{\mathrm F}\to\overline W:=-\overline\Phi\) and \(\gamma_{\mathrm{AP}}:=\operatorname{mar}_X(\overline W)>0\). For \(p^\star:=\overline W/\gamma_{\mathrm{AP}}\), there exist \(\theta^\star\in\Delta(\mathcal Q)\) and a nonzero dual optimizer \(\lambda^\star\ge0\) of the corresponding problem such that
\[
p^\star\in\mathcal P_{N,b}(\theta^\star),\qquad \theta^\star=\lambda^\star/\|\lambda^\star\|_1.
\]
\end{proposition}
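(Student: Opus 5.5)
The argument follows the self-consistent margin approach of \citet{baek2026implicit}, adapted to the multiclass random-reshuffling direction \(\Phi_{N,b}\).

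\textbf{Step 1: direction and margin of the limit.} Proposition~\ref{prop:app-adamproxy-coordinate-cap} applies to~\eqref{eq:adamproxy-dynamics-clean}. For \(b\ge2\) we have \(-\Phi_{N,b}=c^{\mathrm{AP}}_{N,b}\Psi_{\kappa_{N,b}}\), and for \(b=1\) the factor \((b-1)\) vanishes, so \(-\Phi_{N,b}\) is a positive multiple of \(\widetilde S\oslash\widetilde Q^{\odot1/2}\); the same proof goes through. It gives \(\operatorname{mar}_X(W_t)\to\infty\), linear margin growth in \(T_t\), and \(\|W_t\|_{\max}\le\|W_0\|_{\max}+T_t\).

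Since \(\|\Phi_{N,b}\|_{\max}\) is bounded and the direction of \(-\Phi_{N,b}(W_t)\) converges, \(W_t=W_0-\sum_s\gamma_s\Phi_{N,b}(W_s)\) is a divergent weighted sum of vectors whose directions converge. A Stolz–Cesàro argument on \(W_t/\sum_s\gamma_s\|\Phi_{N,b}(W_s)\|_{\mathrm F}\) then gives \(W_t/\|W_t\|_{\mathrm F}\to-\overline\Phi=\overline W\). For this we need \(\|\Phi_{N,b}(W_s)\|_{\mathrm F}\) bounded below. The dissipation bound~\eqref{eq:app-adamproxy-uniform-dissipation} gives \(\sum_\rho\mathsf S^2/\mathsf D\ge\rho_\kappa\), and since \(|\mathsf S|/\mathsf D\le1\), this yields \(\|\Psi\|_{\mathrm{sum}}\ge\rho_\kappa/\max|\mathsf S|\), which is bounded below.

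Positivity of the margin follows from~\eqref{eq:app-adamproxy-positive-normalized-margin} and continuity of \(\operatorname{mar}_X\) along the normalized sequence. Together these give \(\gamma_{\mathrm{AP}}\ge\rho_\kappa\|\overline W\|_{\max}>0\).

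\textbf{Step 2: limiting weights and the stationarity form.} Write \(\theta_t:=\theta(W_t)\in\Delta(\mathcal Q)\). By compactness, pass to a subsequence with \(\theta_t\to\theta^\star\). Because every margin diverges like \(\|W_t\|\) times its normalized value, the standard exponential-tail argument shows that \(\theta^\star\) is supported on the pairs \((n,c)\) that are active for \(\overline W\), i.e.\ those with \(\langle\overline W,A_{n,c}\rangle=\gamma_{\mathrm{AP}}\). Indeed, for a non-active pair the ratio \(p_{n,c}/R\) decays like \(e^{-\|W_t\|\cdot\text{gap}}\).

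Next, \(\Phi_{N,b}(W)\) depends on \(W\) only through \(\theta(W)\), by the cancellation in~\eqref{eq:app-adamproxy-trajectory-step-bound} and the explicit formula~\eqref{eq:adamproxy-direction-explicit-clean}. Up to a positive constant,
\[
-\Phi_{N,b}(W)\propto M_{N,b}(\theta)^{-1}\operatorname{vec}\Bigl(\textstyle\sum_{(n,c)}\theta_{n,c}A_{n,c}\Bigr),
\]
with \(M_{N,b}\) as in~\eqref{eq:adamproxy-metric-clean}. The hypothesis \(x_n[a]\ne0\) guarantees \(M_{N,b}(\theta)\) has strictly positive diagonal. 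Entries of \(\mathsf S\) may cancel, but the \(\sum Z_n^{\odot2}\) term is positive when \(b<N\). For \(b=N\) one must argue that the diagonal stays positive at \(\theta^\star\) on coordinates where \(\overline W\ne0\); zero coordinates can be handled with the \(0/0\) convention by restricting the quadratic form to the support.

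Continuity of \(\theta\mapsto M_{N,b}(\theta)^{-1}\operatorname{vec}\mathsf S(\theta)\), on the set where the diagonal is positive, passes the identity to the limit:
\[
M_{N,b}(\theta^\star)\operatorname{vec}(\overline W)=c\sum_{(n,c')}\theta^\star_{n,c'}\operatorname{vec}(A_{n,c'}),\qquad c>0 .
\]

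\textbf{Step 3: KKT conditions.} Put \(p^\star=\overline W/\gamma_{\mathrm{AP}}\) and \(\lambda^\star:=(c/\gamma_{\mathrm{AP}})\theta^\star\ge0\), which is nonzero. Then:
\begin{itemize}
\item[(a)] Stationarity holds: \(M_{N,b}(\theta^\star)\operatorname{vec}(p^\star)=\sum\lambda^\star_{n,c}\operatorname{vec}(A_{n,c})\).
\item[(b)] Primal feasibility holds, since \(\operatorname{mar}_X(p^\star)=1\).
\item[(c)] Complementary slackness holds, because \(\theta^\star\) is supported on active pairs.
\end{itemize}
The problem~\eqref{eq:adamproxy-primal-clean} is a convex quadratic program with linear constraints. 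Its objective is convex since \(M_{N,b}\succeq0\) (semidefinite is enough for the KKT sufficiency direction). Hence \(p^\star\in\mathcal P_{N,b}(\theta^\star)\), \(\lambda^\star\) is a dual optimizer, and \(\theta^\star=\lambda^\star/\|\lambda^\star\|_1\).

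\textbf{Main obstacle.} The hardest part is Step 2: justifying continuity of the normalized proxy map at \(\theta^\star\) when some diagonal entries of \(M_{N,b}(\theta^\star)\) vanish, which can happen in the full-batch case or through cancellations in \(\mathsf S\). The first attempt will be to use the direction-convergence hypothesis directly, transferring the limit identity coordinate-wise as \(\mathsf S(\theta_t)=\mathsf D(\theta_t)\odot(-\Phi_t)\), which has a well-defined limit \(\mathsf S(\theta^\star)=\mathsf D(\theta^\star)\odot\overline W'\). This form avoids inverting \(M\) and is exactly stationarity with \(M\) on the left.
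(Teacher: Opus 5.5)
Your proposal is correct and tracks the paper's proof closely. It uses Stolz--Ces\`aro on $W_t/\sum_s\gamma_s\|\Phi_{N,b}(W_s)\|_{\mathrm F}$, and compactness of $\Delta(\mathcal Q)$ to extract $\theta^\star$ along a subsequence on which $\|\Phi_{N,b}(W_t)\|_{\mathrm F}\to\nu\in(0,\infty)$. It then passes the exact identity $\mathsf S(\theta(W))=\mathsf D_{N,b}(\theta(W))\odot\Psi_{N,b}(W)$ to the limit coordinatewise, giving stationarity with $c=\nu^{-1}$. Finally it places the support on active pairs and closes with a KKT certificate that needs only $M_{N,b}(\theta^\star)\succeq0$.

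The one real difference is how you get $\gamma_{\mathrm{AP}}>0$. You import it from \eqref{eq:app-adamproxy-positive-normalized-margin} of Proposition~\ref{prop:app-adamproxy-coordinate-cap}, which, as you note, must be re-run at $b=1$ with $\mathsf D=\sqrt N\,\mathsf Q^{\odot1/2}$; after that the support argument is the simple exponential-tail one. The paper instead derives positivity from the limiting identity itself. Since $\mathsf S(\theta^\star)\ne\mathbf 0$, it has $\langle\overline W,\mathsf S(\theta^\star)\rangle=\nu\sum_\rho\mathsf D_{N,b}(\theta^\star)[\rho]\,\overline W[\rho]^2>0$. A log-sum-exp argument with the shifted exponents $\langle\overline W,A_{n,c}\rangle-\min\{0,\min_{c'}\langle\overline W,A_{n,c'}\rangle\}$ then shows that a nonpositive sample margin would put all of $\theta^\star$ on atoms with $\langle\overline W,A_q\rangle\le0$, which contradicts that inequality. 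Your route is shorter given the earlier trajectory bound; the paper's keeps the characterization self-contained. Your dissipation-based lower bound on $\|\Psi_{N,b}\|$, versus the paper's separator bound, is an equally valid minor variation.

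One claim in your Step 2 is false. The condition $x_n[a]\neq0$ makes the diagonal of $M_{N,b}(\theta)$ positive for $b<N$ only when $\theta$ has full support, i.e.\ at finite $W$. At the boundary point $\theta^\star$, take any coordinate $(r,a)$ whose class $r$ occurs in no pair of $\operatorname{supp}(\theta^\star)$, either as $y_n$ or as $c$. Then $\mathsf Q(\theta^\star)[r,a]=\mathsf S(\theta^\star)[r,a]=0$, so $M_{N,b}(\theta^\star)$ can be singular for every $b$, not only $b=N$, and not because of cancellations in $\mathsf S$. This breaks nothing, because your ``main obstacle'' argument never inverts $M_{N,b}$ and yields $\mathsf S(\theta^\star)=\nu\,\mathsf D_{N,b}(\theta^\star)\odot\overline W$ on all coordinates at once. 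Make that the main argument rather than a fallback.
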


Thus, the AdamProxy margin direction is characterized by \(\mathcal P_{N,b}(\theta^\star)\) at a self-consistent parameter \(\theta^\star\), which coincides with the normalized dual weights of the same variational problem.

\paragraph{Proof setup.}
The remainder of this subsection proves Proposition~\ref{thm:adamproxy-margin-clean}. We first rewrite the AdamProxy dynamics in terms of descent signals and normalized wrong-class coefficients, then characterize the trajectory-selected limiting coefficients, and finally verify the primal--dual optimality conditions of the quadratic margin problem.

Define the descent signals
\begin{equation}
h_n(W):=-g_n(W)=(e_{y_n}-p_n(W))x_n^\top,
\label{eq:app-adamproxy-descent-signal}
\end{equation}
and write \(A_{n,c}:=(e_{y_n}-e_c)x_n^\top\) for \((n,c)\in\mathcal Q\). Then \(h_n(W)=\sum_{c\ne y_n}p_{n,c}(W)A_{n,c}\). Set
\begin{equation}
\widetilde S(W):=\sum_{n=1}^Nh_n(W)=-S(W),\qquad \widetilde Q(W):=\sum_{n=1}^Nh_n(W)^{\odot2}=Q(W).
\label{eq:app-adamproxy-descent-SQ}
\end{equation}

For \(\theta\in\Delta(\mathcal Q)\), write
\begin{equation}
\begin{aligned}
\mathsf S(\theta)&:=\sum_{n=1}^NZ_n(\theta),\qquad \mathsf Q(\theta):=\sum_{n=1}^NZ_n(\theta)^{\odot2}, \\ \mathsf D_{N,b}(\theta)&:=\left(\frac{N}{b(N-1)}\left((b-1)\mathsf S(\theta)^{\odot2}+(N-b)\mathsf Q(\theta)\right)\right)^{\odot1/2}.
\label{eq:app-adamproxy-SQ-theta}
\end{aligned}
\end{equation}
Thus \(M_{N,b}(\theta)=\operatorname{diag}(\operatorname{vec}\mathsf D_{N,b}(\theta))\). Define
\begin{equation}
\Psi_{N,b}(W):=\widetilde S(W)\oslash\left(\frac{N}{b(N-1)}\left((b-1)\widetilde S(W)^{\odot2}+(N-b)\widetilde Q(W)\right)\right)^{\odot1/2},
\label{eq:app-adamproxy-descent-proxy-relation}
\end{equation}
with \(0/0:=0\) entrywise. The explicit proxy formula in the main text gives \(\Psi_{N,b}(W)=-\Phi_{N,b}(W)\), so the AdamProxy dynamics can be written as \(W_{t+1}^{\mathrm{AP}}=W_t^{\mathrm{AP}}+\gamma_t\Psi_{N,b}(W_t^{\mathrm{AP}})\).

\begin{lemma}[Coefficient representation and trajectory-selected limit]
\label{lem:app-adamproxy-selected-limit}
For finite \(W\), define
\begin{equation}
R(W):=\sum_{(n,c)\in\mathcal Q}p_{n,c}(W)>0,\qquad \theta_{n,c}(W):=\frac{p_{n,c}(W)}{R(W)}.
\label{eq:app-adamproxy-theta}
\end{equation}
Then
\begin{equation}
h_n(W)=R(W)Z_n(\theta(W)),\qquad \widetilde S(W)=R(W)\mathsf S(\theta(W)),\qquad \widetilde Q(W)=R(W)^2\mathsf Q(\theta(W)),
\label{eq:app-adamproxy-coefficient-factorization}
\end{equation}
\begin{equation}
\Psi_{N,b}(W)=\mathsf S(\theta(W))\oslash\mathsf D_{N,b}(\theta(W)),\qquad \|\Psi_{N,b}(W)\|_{\max}\le1.
\label{eq:app-adamproxy-coefficient-proxy}
\end{equation}
Under the assumptions of Proposition~\ref{thm:adamproxy-margin-clean},
\begin{equation}
\frac{W_t^{\mathrm{AP}}}{\|W_t^{\mathrm{AP}}\|_{\mathrm F}}\longrightarrow\overline W:=-\overline\Phi,
\label{eq:app-adamproxy-Wbar-sign}
\end{equation}
and there exist a subsequence \(t_j\to\infty\), \(\theta^\star\in\Delta(\mathcal Q)\), and \(\nu\in(0,\infty)\) such that
\begin{equation}
\theta(W_{t_j}^{\mathrm{AP}})\to\theta^\star,\qquad \|\Psi_{N,b}(W_{t_j}^{\mathrm{AP}})\|_{\mathrm F}\to\nu,\qquad \Psi_{N,b}(W_{t_j}^{\mathrm{AP}})\to\nu\overline W,
\label{eq:app-adamproxy-selected-subsequence}
\end{equation}
with \(\mathsf S(\theta^\star)\ne{\mathbf 0}\) and \(\nu\|\overline W\|_{\max}\le1\).
\end{lemma}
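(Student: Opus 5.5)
The algebraic identities come first. Since $h_n(W)=\sum_{c\ne y_n}p_{n,c}(W)A_{n,c}$ and $\theta_{n,c}(W)=p_{n,c}(W)/R(W)$, we get $h_n(W)=R(W)Z_n(\theta(W))$. Summing over $n$ gives the identity for $\widetilde S$. Squaring entrywise and summing gives $\widetilde Q(W)=R(W)^2\mathsf Q(\theta(W))$. Substituting into \eqref{eq:app-adamproxy-descent-proxy-relation}, the common factor $R(W)>0$ cancels between numerator and denominator, including the zero-denominator convention $0/0:=0$, and this yields $\Psi_{N,b}(W)=\mathsf S(\theta(W))\oslash\mathsf D_{N,b}(\theta(W))$. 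For the bound $\|\Psi_{N,b}\|_{\max}\le1$, observe entrywise that
\[
\frac{N}{b(N-1)}\bigl((b-1)\mathsf S^2+(N-b)\mathsf Q\bigr)\ \ge\ \frac{N(b-1)+(N-b)}{b(N-1)}\,\mathsf S^2=\mathsf S^2,
\]
where we used $\mathsf Q\ge\mathsf S^2/N$ by Cauchy--Schwarz. This is the same computation as in \eqref{eq:app-adamproxy-trajectory-step-bound}.

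Next comes the direction \eqref{eq:app-adamproxy-Wbar-sign}. By Proposition~\ref{prop:app-adamproxy-coordinate-cap} (via the rewriting $-\Phi_{N,b}=c^{\mathrm{AP}}_{N,b}\Psi_{\kappa_{N,b}}$, with the case $b=1$ handled the same way), the margin grows at least like $\rho T_t$. Hence $\|W_t\|_{\mathrm F}\to\infty$, and moreover $\|W_t\|_{\mathrm F}\ge cT_t$ because margins are bounded by a constant times the Frobenius norm. Write $W_t=W_0+\sum_{s<t}\gamma_s\Psi_{N,b}(W_s)$. The hypothesis $\Phi_{N,b}(W_t)/\|\Phi_{N,b}(W_t)\|_{\mathrm F}\to\overline\Phi$ says the step directions converge to $-\overline\Phi$. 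The step norms satisfy $\|\Psi_{N,b}(W_s)\|_{\mathrm F}\le\sqrt{kd}$, and the sum of the steps has norm of order $T_t$. A Stolz--Ces\`aro argument on the increments then shows that $W_t/\|W_t\|_{\mathrm F}$ converges to the limit direction of the steps. The one point to check is that the steps do not cancel: the sum of the step norms must be comparable to the norm of the sum. I plan to handle this by pairing with a fixed max-margin witness $V_*$. Equation \eqref{eq:app-adamproxy-uniform-dissipation}-type bounds give $\langle V_*,\Psi\rangle\ge c>0$ uniformly, so the projections accumulate linearly and no cancellation can occur. This gives $\overline W=-\overline\Phi$.

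Finally, the subsequence. The simplex $\Delta(\mathcal Q)$ is compact, so some subsequence $\theta(W_{t_j})\to\theta^\star$. The norms $\|\Psi_{N,b}(W_{t_j})\|_{\mathrm F}$ lie in $[0,\sqrt{kd}]$, so we may pass to a further subsequence along which they converge to some $\nu$. To show $\nu>0$, pair with $V_*$:
\[
\langle V_*,\Psi_{N,b}\rangle=\sum_\rho V_*[\rho]\,\mathsf S[\rho]/\mathsf D[\rho].
\]
Rather than controlling this sum directly, I will use $-\langle\nabla L,\Psi\rangle\ge\rho\,\mathcal G$ from \eqref{eq:app-adamproxy-uniform-descent}. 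This gives $\langle\mathsf S(\theta),\Psi\rangle=\mathcal J(\theta)\ge\rho_\kappa>0$, and since $\|\mathsf S(\theta)\|_{\mathrm F}$ is bounded, it follows that $\|\Psi\|_{\mathrm F}$ is bounded below uniformly. Hence $\nu>0$. Because $\Psi$ is a positive multiple of $-\Phi$, and the normalized $\Phi$ converges, we get $\Psi_{N,b}(W_{t_j})\to\nu\overline W$. The bound $\nu\|\overline W\|_{\max}\le1$ follows from $\|\Psi\|_{\max}\le1$ by passing to the limit. Lastly, $\mathsf S(\theta^\star)\ne0$ holds because $\|\mathsf S(\theta)\|_{\mathrm{sum}}\ge\gamma_{\max}(X)$ for every $\theta$ in the simplex.

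The main obstacle is the direction-convergence step, specifically the non-cancellation argument. It relies on the uniform positive pairing of the steps with a fixed witness, which I take from the dissipation bound, so that the Ces\`aro limit of the step directions transfers to the iterates.
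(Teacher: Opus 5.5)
Your proof is correct in substance. However, the step you single out as the main obstacle is resolved by a claim that is false, and that claim is not needed. The dissipation bound \eqref{eq:app-adamproxy-uniform-dissipation} controls $\langle\mathsf S(\theta),\Psi\rangle$ for the \emph{moving} vector $\mathsf S(\theta(W))$. It gives nothing for a fixed witness $V_*$, and dividing coordinatewise by $\mathsf D[\rho]$ can make $\langle V_*,\Psi\rangle$ negative even though $\langle V_*,\mathsf S\rangle\ge\gamma_{\max}(X)$. For instance, take $k=2$ and three class-$1$ samples $(3,-1,-1)$, $(-1,3,-1)$, $(-1,-1,3)$. The margin constraints force the max-norm witness to have rows $(1,1,1)$ and $(-1,-1,-1)$. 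At the vertex $\theta$ concentrated on the first sample, $\Psi=\sqrt{b/N}\,\operatorname{sign}(\mathsf S)$, so $\langle V_*,\Psi\rangle=-2\sqrt{b/N}<0$, and by continuity the sign persists for nearby $\theta(W)$.

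There is also no cancellation issue to begin with. Put $\sigma_t:=\sum_{s<t}\gamma_s\|\Psi_{N,b}(W_s)\|_{\mathrm F}$. Your own uniform lower bound $\|\Psi_{N,b}\|_{\mathrm F}\ge c>0$ from the last paragraph (or the margin growth you cite) gives $\sigma_t\to\infty$. Stolz--Ces\`aro with this normalizer then yields $(W_t-W_0)/\sigma_t\to\overline W$, a unit vector. Hence $\|W_t\|_{\mathrm F}/\sigma_t\to1$ and $W_t/\|W_t\|_{\mathrm F}\to\overline W$, so you can simply delete the witness-pairing argument.

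With that change your proof is the paper's, except for how the lower bound on $\|\Psi_{N,b}\|_{\mathrm F}$ is obtained. The paper pairs a fixed separator $V$ with $\mathsf S$ rather than with $\Psi$, using $\mathsf S=\mathsf D_{N,b}\odot\Psi_{N,b}$. This gives $\sigma_{\mathrm{sep}}\le\langle V,\mathsf S(\theta)\rangle\le\|V\|_{\mathrm F}\max_\theta\|\mathsf D_{N,b}(\theta)\|_{\max}\|\Psi_{N,b}\|_{\mathrm F}$. It then runs the same weighted Ces\`aro average and never invokes Proposition~\ref{prop:app-adamproxy-coordinate-cap}.

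Your route through $\mathcal J\ge\rho_\kappa$ is also valid. But both it and your margin-growth citation go through the rewriting $-\Phi_{N,b}=c_{N,b}^{\mathrm{AP}}\Psi_{\kappa_{N,b}}$, which requires $b\ge2$, so $b=1$ needs its own (easy) check. The paper's elementary bound covers all $1\le b\le N$ at once.

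The remaining pieces match the paper: the factorization, the Cauchy--Schwarz bound $\mathsf D_{N,b}\ge|\mathsf S|$, the compactness extraction, and $\mathsf S(\theta^\star)\ne\mathbf 0$ via $\|\mathsf S(\theta)\|_{\mathrm{sum}}\ge\gamma_{\max}(X)$.
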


\begin{proof}
The factorization follows directly from \(h_n(W)=\sum_{c\ne y_n}p_{n,c}(W)A_{n,c}\). For every coordinate \(\rho\), Cauchy--Schwarz gives \(\mathsf S(\theta)[\rho]^2\le N\mathsf Q(\theta)[\rho]\), and therefore
\[
\mathsf D_{N,b}(\theta)[\rho]^2-\mathsf S(\theta)[\rho]^2=\frac{N-b}{b(N-1)}\left(N\mathsf Q(\theta)[\rho]-\mathsf S(\theta)[\rho]^2\right)\ge0.
\]
Hence \(\mathsf D_{N,b}(\theta)[\rho]\ge|\mathsf S(\theta)[\rho]|\), and if \(\mathsf D_{N,b}(\theta)[\rho]=0\), then \(\mathsf S(\theta)[\rho]=0\). With the convention \(0/0:=0\), this proves \eqref{eq:app-adamproxy-coefficient-proxy} and the coordinate bound. By linear separability, choose \(V\) and \(\sigma_{\mathrm{sep}}>0\) with \(\langle V,A_q\rangle\ge\sigma_{\mathrm{sep}}\) for every \(q\). Since \(\mathsf D_{N,b}\) is continuous on the compact simplex, \(D_{\max}:=\max_\theta\|\mathsf D_{N,b}(\theta)\|_{\max}<\infty\), and
\[
\sigma_{\mathrm{sep}}\le\langle V,\mathsf S(\theta(W))\rangle\le\|V\|_{\mathrm F}D_{\max}\|\Psi_{N,b}(W)\|_{\mathrm F}.
\]
Thus \(\|\Psi_{N,b}(W_t)\|_{\mathrm F}\) is bounded above and below by positive constants. Since \(\Psi_{N,b}(W)=-\Phi_{N,b}(W)\), the normalized descent-proxy directions satisfy \(\Psi_{N,b}(W_t)/\|\Psi_{N,b}(W_t)\|_{\mathrm F}\to-\overline\Phi\). Assumption~\ref{ass:lr-base} implies that the epochwise schedule \(\gamma_t\) is positive, tends to zero, and satisfies \(\sum_t\gamma_t=\infty\). With \(T_t:=\sum_{s=0}^{t-1}\gamma_s\|\Psi_{N,b}(W_s)\|_{\mathrm F}\), one has \(T_t\to\infty\) and
\[
\frac{W_t}{T_t}=\frac{W_0}{T_t}+\frac1{T_t}\sum_{s=0}^{t-1}\gamma_s\|\Psi_{N,b}(W_s)\|_{\mathrm F}\frac{\Psi_{N,b}(W_s)}{\|\Psi_{N,b}(W_s)\|_{\mathrm F}}\longrightarrow-\overline\Phi.
\]
Hence \(\|W_t\|_{\mathrm F}/T_t\to1\) and \(W_t/\|W_t\|_{\mathrm F}\to-\overline\Phi\), proving \eqref{eq:app-adamproxy-Wbar-sign}. Compactness of \(\Delta(\mathcal Q)\) and the two-sided bound on \(\|\Psi_{N,b}(W_t)\|_{\mathrm F}\) give a subsequence satisfying \eqref{eq:app-adamproxy-selected-subsequence}. The separator inequality passes to the limit and gives \(\mathsf S(\theta^\star)\ne{\mathbf 0}\), while \(\|\Psi_{N,b}(W_{t_j})\|_{\max}\le1\) yields \(\nu\|\overline W\|_{\max}\le1\).
\end{proof}

\begin{proposition}[Selected stationarity and positive margin]
\label{prop:app-adamproxy-stationarity-margin}
For the limit in Lemma~\ref{lem:app-adamproxy-selected-limit}, there exists \(\alpha=\nu^{-1}>0\) such that
\begin{equation}
M_{N,b}(\theta^\star)\operatorname{vec}(\overline W)=\alpha\operatorname{vec}(\mathsf S(\theta^\star))=\alpha\sum_{q\in\mathcal Q}\theta_q^\star\operatorname{vec}(A_q).
\label{eq:app-adamproxy-selected-stationarity}
\end{equation}
Moreover,
\begin{equation}
\gamma_{\mathrm{AP}}:=\min_{q\in\mathcal Q}\langle\overline W,A_q\rangle>0,
\qquad
\operatorname{supp}(\theta^\star)\subseteq\operatorname*{arg\,min}_{q\in\mathcal Q}\langle\overline W,A_q\rangle.
\label{eq:app-adamproxy-positive-margin-support}
\end{equation}
\end{proposition}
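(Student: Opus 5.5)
The stationarity identity follows by passing to the limit along the subsequence of Lemma~\ref{lem:app-adamproxy-selected-limit} in the coordinate-wise relation between \(\Psi_{N,b}\), \(\mathsf S\) and \(\mathsf D_{N,b}\). For every finite \(W\), \eqref{eq:app-adamproxy-coefficient-proxy} gives, coordinate by coordinate,
\[
\mathsf D_{N,b}(\theta(W))[\rho]\,\Psi_{N,b}(W)[\rho]=\mathsf S(\theta(W))[\rho].
\]
When \(\mathsf D_{N,b}(\theta(W))[\rho]=0\), the convention \(0/0:=0\) and the implication \(\mathsf D=0\Rightarrow\mathsf S=0\) from the proof of that lemma give \(0=0\). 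Along \(t_j\), \(\theta(W_{t_j})\to\theta^\star\), and \(\mathsf D_{N,b}\) and \(\mathsf S\) are continuous on the simplex. Since \(\Psi_{N,b}(W_{t_j})\to\nu\overline W\), we obtain \(\nu\,\mathsf D_{N,b}(\theta^\star)\odot\overline W=\mathsf S(\theta^\star)\). Dividing by \(\nu>0\) and vectorizing yields \eqref{eq:app-adamproxy-selected-stationarity} with \(\alpha=\nu^{-1}\). The second equality there is simply the definition of \(\mathsf S(\theta)=\sum_q\theta_qA_q\).

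For positivity of the margin, Proposition~\ref{prop:app-adamproxy-coordinate-cap} applies. Indeed, \(\Psi_{N,b}\) is a positive multiple of \(\Psi_{\kappa_{N,b}}\) (when \(b=1\), \(\Psi_{N,1}\) is itself a positive constant times \(\widetilde S\oslash\widetilde Q^{\odot1/2}\), a limiting case of that proposition's argument). That proposition gives
\[
\liminf_t \operatorname{mar}_X(W_t)/\|W_t\|_{\max}\ge\rho_\kappa(X)>0.
\]
By homogeneity and continuity of \(\operatorname{mar}_X\) together with the directional convergence \eqref{eq:app-adamproxy-Wbar-sign}, we get \(\operatorname{mar}_X(\overline W)\ge\rho_\kappa(X)\|\overline W\|_{\max}>0\), using \(\overline W\neq0\).

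For the support claim, write \(W_t=\|W_t\|_{\mathrm F}(\overline W+o(1))\), so that
\[
\langle W_t,A_q\rangle=\|W_t\|_{\mathrm F}\bigl(\langle\overline W,A_q\rangle+o(1)\bigr).
\]
The coefficients satisfy \(\theta_{n,c}(W)\propto p_{n,c}(W)=p_{n,y_n}(W)e^{-\langle W,A_{n,c}\rangle}\), and \(p_{n,y_n}\to1\) because the loss tends to zero. Hence, for a non-minimizing \(q\) and a minimizing \(q'\),
\[
\frac{\theta_q}{\theta_{q'}}\le C\exp\Bigl(-\|W_t\|_{\mathrm F}\bigl(\langle\overline W,A_q\rangle-\gamma_{\mathrm{AP}}-o(1)\bigr)\Bigr)\to0,
\]
since \(\|W_t\|_{\mathrm F}\to\infty\) (the margins diverge). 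Because \(\theta_{q'}\le1\), it follows that \(\theta^\star_q=0\).

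The main obstacle is the \(o(1)\) error in the exponent: it is multiplied by \(\|W_t\|_{\mathrm F}\to\infty\). The argument only needs that a fixed positive gap \(\langle\overline W,A_q\rangle-\gamma_{\mathrm{AP}}>0\) eventually dominates the \(o(1)\) term, which is exactly what directional convergence provides. No rate is required, so this step should close.
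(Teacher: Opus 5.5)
Your proof is correct. The stationarity step matches the paper's; passing to the limit in the product identity $\mathsf D_{N,b}(\theta(W))\odot\Psi_{N,b}(W)=\mathsf S(\theta(W))$ is just a tidier form of its coordinatewise case split. The genuine difference is how you get positivity of the margin.

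The paper never invokes Proposition~\ref{prop:app-adamproxy-coordinate-cap}. Its argument runs as follows:
\begin{enumerate}
\item From the stationarity identity it reads off $\langle\overline W,\mathsf S(\theta^\star)\rangle=\nu\sum_\rho\mathsf D_{N,b}(\theta^\star)[\rho]\,\overline W[\rho]^2>0$, which is positive because $\mathsf S(\theta^\star)\neq0$.
\item It localizes the support of $\theta^\star$ using the shifted log-sum-exp exponents $h_{n,c}=\langle\overline W,A_{n,c}\rangle-\min\{0,\mu_n\}$. These are valid without knowing the signs of the per-sample margins $\mu_n$.
\item If some $\mu_n\le0$, the support would lie on atoms with $\langle\overline W,A_q\rangle\le0$. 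That contradicts $\sum_q\theta^\star_q\langle\overline W,A_q\rangle>0$.
\end{enumerate}
So positivity and the support claim come out together from the stationarity identity and directional convergence alone, uniformly over $1\le b\le N$.

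Your route instead imports the trajectory-level descent and dissipation analysis to get $\operatorname{mar}_X(\overline W)\ge\rho_\kappa(X)\|\overline W\|_{\max}>0$ first. This gives you an explicit quantitative lower bound on the limiting margin. It also lets you run the simpler positive-margin support localization, using $p_{n,y_n}\to1$. The cost is dependence on that heavier argument and the need to extend it to $b=1$.

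That extension does hold, but you should state it directly rather than as a vague limiting case. The proof of Proposition~\ref{prop:app-adamproxy-coordinate-cap} uses only three properties of the denominator:
\begin{itemize}
\item $1$-homogeneity in $(Z_n)$, so that $R(W)$ cancels;
\item $\mathsf D\ge|\mathsf S|$ coordinatewise;
\item a uniform bound on $\|\mathsf D\|_{\mathrm{sum}}$.
\end{itemize}
For $\mathsf D_{N,1}(\theta)=(N\mathsf Q(\theta))^{\odot1/2}$ all three hold. The second follows from Cauchy--Schwarz, and the third holds with $\|\mathsf D_{N,1}(\theta)\|_{\mathrm{sum}}\le2\sqrt N\,R_1(X)$. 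The same conclusions then follow with $K=2\sqrt N\,R_1(X)$ in place of $K_\kappa(X)$.
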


\begin{proof}
Let \(\mathsf S^\star:=\mathsf S(\theta^\star)\) and \(\mathsf D^\star:=\mathsf D_{N,b}(\theta^\star)\). If \(\mathsf D^\star[\rho]>0\), continuity and \eqref{eq:app-adamproxy-selected-subsequence} give \(\mathsf S^\star[\rho]/\mathsf D^\star[\rho]=\nu\overline W[\rho]\). If \(\mathsf D^\star[\rho]=0\), then the inequality \(\mathsf D_{N,b}(\theta)[\rho]\ge|\mathsf S(\theta)[\rho]|\) proved above gives \(\mathsf S^\star[\rho]=0\). Hence \eqref{eq:app-adamproxy-selected-stationarity} holds on every coordinate. It also gives
\begin{equation}
\langle\overline W,\mathsf S(\theta^\star)\rangle=\nu\sum_\rho\mathsf D^\star[\rho]\overline W[\rho]^2>0.
\label{eq:app-adamproxy-positive-average}
\end{equation}
To identify the support, write \(\overline m_{n,c}:=\langle\overline W,A_{n,c}\rangle\), \(\mu_n:=\min_{c\ne y_n}\overline m_{n,c}\), and \(h_{n,c}:=\overline m_{n,c}-\min\{0,\mu_n\}\). Since \(\|W_t\|_{\mathrm F}\to\infty\) and \(W_t/\|W_t\|_{\mathrm F}\to\overline W\), finite log-sum-exp asymptotics give
\[
-\frac1{\|W_t\|_{\mathrm F}}\log p_{n,c}(W_t)\longrightarrow h_{n,c}.
\]
Therefore every subsequential limit \(\theta^\star\) is supported on the atoms minimizing \(h_{n,c}\). If some \(\mu_n\le0\), then the minimum of \(h\) is zero and every minimizing atom has \(\overline m_{n,c}\le0\), contradicting \eqref{eq:app-adamproxy-positive-average}. Thus every \(\mu_n>0\), so \(h_{n,c}=\overline m_{n,c}\), proving both claims in \eqref{eq:app-adamproxy-positive-margin-support}.
\end{proof}

For fixed \(\theta\in\Delta(\mathcal Q)\), let \(a_q:=\operatorname{vec}(A_q)\) and define
\begin{equation}
\mathcal L_\theta(u,\lambda):=\frac12u^\top M_{N,b}(\theta)u+\sum_{q\in\mathcal Q}\lambda_q(1-a_q^\top u),\qquad g_\theta(\lambda):=\inf_{u\in\mathbb R^{kd}}\mathcal L_\theta(u,\lambda),\qquad \lambda\ge0.
\label{eq:app-adamproxy-lagrangian}
\end{equation}

\begin{lemma}[Semidefinite primal-dual certificate]
\label{lem:app-adamproxy-psd-kkt}
Fix \(\theta\in\Delta(\mathcal Q)\). Suppose \(U\) is feasible for \eqref{eq:adamproxy-primal-clean} and \(\lambda\ge0\) satisfies
\begin{equation}
M_{N,b}(\theta)\operatorname{vec}(U)=\sum_{q\in\mathcal Q}\lambda_q\operatorname{vec}(A_q),\qquad \lambda_q(\langle U,A_q\rangle-1)=0\quad(q\in\mathcal Q).
\label{eq:app-adamproxy-psd-kkt}
\end{equation}
Then \(U\in\mathcal P_{N,b}(\theta)\), and \(\lambda\) is a dual optimizer. No invertibility of \(M_{N,b}(\theta)\) is required.
\end{lemma}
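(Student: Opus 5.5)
The plan is the standard convex-duality sufficiency argument, written so that it never needs \(M_{N,b}(\theta)\) to be invertible. Abbreviate \(M:=M_{N,b}(\theta)\), \(u:=\operatorname{vec}(U)\), and \(a_q:=\operatorname{vec}(A_q)\). The matrix \(M\) is diagonal with nonnegative entries, so it is positive semidefinite. Hence the objective \(f(v):=\tfrac12 v^\top Mv\) is convex, and the constraints \(a_q^\top v\ge1\) are affine.

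\textbf{Primal optimality.} Take any feasible \(v\). By convexity, \(f(v)\ge f(u)+(Mu)^\top(v-u)\); this is exact because \(f(v)-f(u)-(Mu)^\top(v-u)=\tfrac12(v-u)^\top M(v-u)\ge0\). Substituting the stationarity condition \(Mu=\sum_q\lambda_q a_q\) gives
\[
(Mu)^\top(v-u)=\sum_q\lambda_q\bigl(a_q^\top v-a_q^\top u\bigr).
\]
Complementary slackness says \(a_q^\top u=1\) whenever \(\lambda_q>0\). Feasibility of \(v\) gives \(a_q^\top v\ge1\). So each term of the sum is nonnegative, hence \(f(v)\ge f(u)\). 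Therefore \(U\in\mathcal P_{N,b}(\theta)\).

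\textbf{Dual optimality.} For fixed \(\lambda\), the map \(u'\mapsto\mathcal L_\theta(u',\lambda)\) is a convex quadratic. Its gradient \(Mu'-\sum_q\lambda_q a_q\) vanishes at \(u'=u\) by the stationarity condition. A convex differentiable function attains its infimum at any stationary point, so \(g_\theta(\lambda)=\mathcal L_\theta(u,\lambda)\); this holds even though the minimizer need not be unique when \(M\) is singular. Complementary slackness removes the penalty term, giving \(g_\theta(\lambda)=f(u)\). Weak duality gives \(g_\theta(\lambda')\le f(v)\) for every \(\lambda'\ge0\) and every feasible \(v\). Hence \(\lambda\) maximizes \(g_\theta\), and the duality gap is zero.

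The only subtle point is the possible singularity of \(M\): \(g_\theta(\lambda)\) may equal \(-\infty\) for other \(\lambda\), and the primal minimizer need not be unique. Both arguments above rely only on positive semidefiniteness and first-order stationarity, never on \(M^{-1}\). So I expect no real obstacle.
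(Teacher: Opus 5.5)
Your proposal is correct and matches the paper's argument: for primal optimality, expand \(\tfrac12 v^\top Mv-\tfrac12 u^\top Mu\) exactly as \(\tfrac12(v-u)^\top M(v-u)\) plus the stationarity term, and sign that term using complementarity and feasibility. For dual optimality, use that \(u\) is a global minimizer of the convex Lagrangian, then complementarity and weak duality; both steps use only \(M\succeq0\) and never \(M^{-1}\).
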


\begin{proof}
Write \(u:=\operatorname{vec}(U)\), \(M:=M_{N,b}(\theta)\), and let \(V\) be any feasible matrix with \(v:=\operatorname{vec}(V)\). Since \(M\succeq0\),
\[
\frac12v^\top Mv-\frac12u^\top Mu=\frac12(v-u)^\top M(v-u)+\sum_q\lambda_q\langle V-U,A_q\rangle\ge0,
\]
where feasibility and complementarity are used in the last step. Thus \(U\) is primal optimal. Stationarity makes \(u\) a global minimizer of the Lagrangian for this \(\lambda\), and complementarity makes the dual value equal the primal value, so \(\lambda\) is dual optimal by weak duality.
\end{proof}

\begin{proof}[Proof of Proposition~\ref{thm:adamproxy-margin-clean}]
Set \(p^\star:=\overline W/\gamma_{\mathrm{AP}}\), where \(\overline W=-\overline\Phi\) and \(\gamma_{\mathrm{AP}}>0\) are given by Lemma~\ref{lem:app-adamproxy-selected-limit} and Proposition~\ref{prop:app-adamproxy-stationarity-margin}. Let \(\alpha\) be the scalar in \eqref{eq:app-adamproxy-selected-stationarity} and define
\begin{equation}
\lambda_q^\star:=\frac{\alpha}{\gamma_{\mathrm{AP}}}\theta_q^\star.
\label{eq:app-adamproxy-dual-candidate}
\end{equation}
Then \(p^\star\) is feasible because \(\langle p^\star,A_q\rangle\ge1\), and dividing \eqref{eq:app-adamproxy-selected-stationarity} by \(\gamma_{\mathrm{AP}}\) gives the stationarity equation in Lemma~\ref{lem:app-adamproxy-psd-kkt}. If \(\lambda_q^\star>0\), then \(\theta_q^\star>0\), so \eqref{eq:app-adamproxy-positive-margin-support} gives \(\langle\overline W,A_q\rangle=\gamma_{\mathrm{AP}}\) and hence complementarity. Lemma~\ref{lem:app-adamproxy-psd-kkt} therefore yields \(p^\star\in\mathcal P_{N,b}(\theta^\star)\) and dual optimality of \(\lambda^\star\). Finally,
\[
\|\lambda^\star\|_1=\frac{\alpha}{\gamma_{\mathrm{AP}}}\sum_q\theta_q^\star=\frac{\alpha}{\gamma_{\mathrm{AP}}}>0,
\]
so \(\theta^\star=\lambda^\star/\|\lambda^\star\|_1\). This proves the proposition.
\end{proof}

\section{Extension to Anisotropic Gaussian with Independent Orientations}
\label{app:anisotropic-extension}

This appendix proves Theorem~\ref{thm:aniso-dd-ordering}. The proof uses clean training in an essential way: the optimizer-selected limiting directions depend only on the class means and are therefore independent of the Haar eigenbases of the test-noise covariances. We first characterize the geometry of these selected Row, Spectral, and Max directions and then compare their Gaussian test errors after exposing the independent test-noise orientations.

\paragraph{Normalization and Gaussian error transfer.}
Apply Lemma~\ref{lem:app-common-rescaling} with the common factor \(d^{-1/2}\), and suppress the rescaling in the notation. Thus throughout this appendix
\begin{equation}
\mu_i=\frac{1}{\sqrt d}\Gamma^{1/2}g_i,\qquad g_i\overset{\mathrm{iid}}{\sim}\mathcal N({\mathbf 0},I_d),\qquad X^{\mathrm{te}}\mid(Y=i)\sim\mathcal N\!\left(\mu_i,\frac{r_{\mathrm{te}}^2}{d}\Sigma_i\right),
\label{eq:app-aniso-normalized-model}
\end{equation}
and the clean training set is \(X=\{(\mu_i,i)\}_{i=1}^k\). This rescaling changes neither classifier predictions nor margin-direction sets. Set \(r_{\mathrm{te},0}:=\sup_d r_{\mathrm{te}}(d)<\infty\). For any classifier with \(w_i\ne w_j\), define
\begin{equation}
u_{ij}^W:=\frac{w_i-w_j}{\|w_i-w_j\|_2},\qquad Z_{i\to j}(W):=\frac{\sqrt d\,(u_{ij}^W)^\top\mu_i}{\sqrt{(u_{ij}^W)^\top\Sigma_i u_{ij}^W}}.
\label{eq:app-aniso-standardized-signal}
\end{equation}

\begin{lemma}[Gaussian error transfer]
\label{lem:app-aniso-error-comparison}
Let \(\mathcal R_d\) and \(\mathcal C_d\) be nonempty classifier families. Suppose every \(W\in\mathcal R_d\) satisfies \(Z_{i\to j}(W)\ge A_d>0\) for all \(i\ne j\), while every \(V\in\mathcal C_d\) has a directed pair \(i\ne j\) satisfying \(Z_{i\to j}(V)\le B_d\), where \(0\le B_d<A_d\). Then
\begin{equation}
\frac{\sup_{W\in\mathcal R_d}\operatorname{Err}_{\pi,d}(W)}{\inf_{V\in\mathcal C_d}\operatorname{Err}_{\pi,d}(V)}
\le
\frac{k-1}{\pi_{\min,d}}
\exp\!\left(-\frac{A_d^2-B_d^2}{2r_{\mathrm{te},0}^2}\right).
\label{eq:app-aniso-error-ratio}
\end{equation}
Consequently, if \(A_d^2\to\infty\), \(\limsup_d B_d^2/A_d^2<1\), and \(\log((k-1)/\pi_{\min,d})=o(A_d^2)\), then every classifier in \(\mathcal R_d\) has strictly larger population accuracy than every classifier in \(\mathcal C_d\) for all sufficiently large \(d\).
\end{lemma}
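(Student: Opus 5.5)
The plan is to mirror the isotropic pairwise sandwich of Lemma~\ref{lem:app-acc-pairwise-gaussian}, now with class-dependent test covariances. The multiclass error will be bounded above by a union bound over directed pairs and below by a single directed pair. Each directed-pair error is an exact one-dimensional Gaussian tail expressed through $Z_{i\to j}$.

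\textbf{Step 1: exact directed-pair error.} Conditionally on $Y=i$, the score difference $(w_i-w_j)^\top X^{\mathrm{te}}$ is Gaussian. Its mean is $\|w_i-w_j\|_2(u_{ij}^W)^\top\mu_i$, and its variance is $\|w_i-w_j\|_2^2\,(r_{\mathrm{te}}^2/d)(u_{ij}^W)^\top\Sigma_iu_{ij}^W$. Since $\Sigma_i\succ0$, this variance is positive, so ties have probability zero and
\[
p_{i\to j}(W)=\overline\Phi\!\left(\frac{Z_{i\to j}(W)}{r_{\mathrm{te}}}\right).
\]
Here $\operatorname{Err}_{\pi,d}$ is the same quantity as $\mathcal E_{\pi,d}$, namely $1-\operatorname{Acc}_{\pi,d}$.

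\textbf{Step 2: upper bound on $\mathcal R_d$.} Take $W\in\mathcal R_d$. The union bound over $j\ne i$, followed by averaging over $Y$ with weights $\pi_{i,d}$ summing to one, gives
\[
\operatorname{Err}_{\pi,d}(W)\le(k-1)\overline\Phi(A_d/r_{\mathrm{te}}).
\]
This uses only that $\overline\Phi$ is decreasing and $Z_{i\to j}(W)\ge A_d$ for every directed pair.

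\textbf{Step 3: lower bound on $\mathcal C_d$.} Take $V\in\mathcal C_d$ and its bad directed pair with $Z_{i\to j}(V)\le B_d$. The error event of class $i$ contains the event that $j$ beats $i$, so
\[
\operatorname{Err}_{\pi,d}(V)\ge\pi_{i,d}\,p_{i\to j}(V)\ge\pi_{\min,d}\,\overline\Phi(B_d/r_{\mathrm{te}}).
\]

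\textbf{Step 4: combining the bounds.} Dividing, we need
\[
\frac{\overline\Phi(x)}{\overline\Phi(y)}\le e^{-(x^2-y^2)/2}\qquad\text{for }x\ge y\ge0.
\]
This holds because $\overline\Phi(t)e^{t^2/2}$ is nonincreasing on $[0,\infty)$: its derivative is $e^{t^2/2}\bigl(t\overline\Phi(t)-\varphi(t)\bigr)\le0$ by the Mills-ratio bound $t\overline\Phi(t)\le\varphi(t)$. Applying it with $x=A_d/r_{\mathrm{te}}$ and $y=B_d/r_{\mathrm{te}}$ gives the exponent $-(A_d^2-B_d^2)/(2r_{\mathrm{te}}^2)$. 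Since $A_d^2-B_d^2>0$ and $r_{\mathrm{te}}\le r_{\mathrm{te},0}$, this exponent is at most $-(A_d^2-B_d^2)/(2r_{\mathrm{te},0}^2)$, which yields \eqref{eq:app-aniso-error-ratio}. The supremum and infimum are handled by uniformity: every bound above holds for each member of its family.

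\textbf{Step 5: the consequence.} Suppose $\limsup_d B_d^2/A_d^2\le1-2c$. Then for large $d$ we have $A_d^2-B_d^2\ge cA_d^2$. Combined with $\log((k-1)/\pi_{\min,d})=o(A_d^2)$, the right-hand side of \eqref{eq:app-aniso-error-ratio} is at most
\[
\exp\bigl(o(A_d^2)-cA_d^2/(2r_{\mathrm{te},0}^2)\bigr)\to0,
\]
where $A_d^2\to\infty$ forces the exponent to $-\infty$. So the ratio is eventually below $1$. A ratio below $1$ gives $\sup_{\mathcal R_d}\operatorname{Err}<\inf_{\mathcal C_d}\operatorname{Err}$ because the infimum is positive, being at least $\pi_{\min,d}\overline\Phi(B_d/r_{\mathrm{te}})>0$. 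Taking complements gives the strict accuracy ordering.

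The only mildly delicate point is the tail-ratio inequality in Step 4 and remembering that $r_{\mathrm{te}}$, not $r_{\mathrm{te},0}$, appears in the exact errors; the rest is routine.
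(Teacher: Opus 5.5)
Your proposal is correct and follows the paper's proof step for step: the exact directed-pair tail \(\overline\Phi(Z_{i\to j}/r_{\mathrm{te}})\), a union bound over the \(k-1\) competitors for \(\mathcal R_d\), a single bad directed pair for \(\mathcal C_d\), and the Gaussian tail-ratio inequality. Your Mills-ratio monotonicity argument for that inequality, and your explicit replacement of \(r_{\mathrm{te}}\) by \(r_{\mathrm{te},0}\) using \(A_d^2-B_d^2>0\), fill in details the paper leaves implicit; neither changes the route.
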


\begin{proof}
For a directed pair \(i\ne j\),
\[
\mathbb P\!\left(w_j^\top X^{\mathrm{te}}\ge w_i^\top X^{\mathrm{te}}\mid Y=i\right)
=
\overline\Phi\!\left(\frac{Z_{i\to j}(W)}{r_{\mathrm{te}}}\right),
\]
where \(\overline\Phi(x)=\mathbb P(G\ge x)\) for \(G\sim\mathcal N(0,1)\). A union bound over the \(k-1\) competitors gives \(\operatorname{Err}_{\pi,d}(W)\le(k-1)\overline\Phi(A_d/r_{\mathrm{te}})\) for every \(W\in\mathcal R_d\). For \(V\in\mathcal C_d\), the directed pair supplied by the assumption is contained in the multiclass error event, so \(\operatorname{Err}_{\pi,d}(V)\ge\pi_{\min,d}\overline\Phi(B_d/r_{\mathrm{te}})\). Since \(\overline\Phi(x)/\overline\Phi(y)\le\exp(-(x^2-y^2)/2)\) for \(x\ge y\ge0\), \eqref{eq:app-aniso-error-ratio} follows. The final assertion is immediate.
\end{proof}

For \(\theta\ge0\), write
\begin{equation}
T_{\theta,d}:=\sum_{a=1}^d a^{-\theta}.
\label{eq:app-aniso-power-sum}
\end{equation}
For every fixed \(\theta\ge0\),
\begin{equation}
T_{\theta,d}\asymp
\begin{cases}
d^{1-\theta},&0\le\theta<1,\\
\log d,&\theta=1,\\
1,&\theta>1.
\end{cases}
\label{eq:app-aniso-power-sum-asymptotics}
\end{equation}
In particular, for fixed \(0\le s_\mu<1\), the trace-normalized mean covariance obeys
\begin{equation}
c_{s_\mu}I_d\preceq\Gamma,\qquad \|\Gamma\|_{\mathrm{op}}\le C_{s_\mu}d^{s_\mu},\qquad \operatorname{tr}(\Gamma^2)\le C_{s_\mu}(d+d^{2s_\mu})\log d.
\label{eq:app-aniso-mean-moments}
\end{equation}

\paragraph{Optimizer-selected directions.}
Let \(\mathcal F_{\mathrm{cl}}\) be the sigma-field generated by the class means together with the prescribed optimizer initializations and tie-breaking variables. Because training uses only \(\{(\mu_i,i)\}_{i=1}^k\), \(\mathcal F_{\mathrm{cl}}\) is independent of \(O_1,\ldots,O_k\).

\begin{lemma}[Residual-exogenous optimizer limits]
\label{lem:app-aniso-clean-optimizer-limits}
On the probability-one event on which the directional limits assumed in Theorem~\ref{thm:aniso-dd-ordering} exist, define
\[
U_{\mathrm{opt}}:=\lim_{t\to\infty}\frac{W_t^{\mathrm{opt}}}{\|W_t^{\mathrm{opt}}\|_{\mathrm F}},\qquad \mathrm{opt}\in\mathcal O_{\mathrm{row}}\cup\mathcal O_{\mathrm{comp}}.
\]
Every \(U_{\mathrm{opt}}\) is \(\mathcal F_{\mathrm{cl}}\)-measurable and hence independent of \(O_1,\ldots,O_k\). Moreover,
\[
U_{\mathrm{opt}}\in
\begin{cases}
\mathcal U_{\mathrm{row}}^X,&\mathrm{opt}\in\mathcal O_{\mathrm{row}},\\
\mathcal U_{\mathrm{sp}}^X,&\mathrm{opt}\in\{\mathrm{SpecGD},\mathrm{Muon}\},\\
\mathcal U_{\max}^X,&\mathrm{opt}\in\{\mathrm{SignGD},\mathrm{Signum},\mathrm{Adam}\},
\end{cases}
\]
and
\begin{equation}
\operatorname{Acc}_{\pi,d}\!\left(W_t^{\mathrm{opt}}\right)\longrightarrow\operatorname{Acc}_{\pi,d}\!\left(U_{\mathrm{opt}}\right).
\label{eq:app-aniso-clean-accuracy-limit}
\end{equation}
\end{lemma}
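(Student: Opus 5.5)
We will prove the three assertions of Lemma~\ref{lem:app-aniso-clean-optimizer-limits} in order: measurability, membership in the margin direction sets, and convergence of accuracy.

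\emph{Measurability.} Every optimizer in \(\mathcal O_{\mathrm{row}}\cup\mathcal O_{\mathrm{comp}}\) is a deterministic recursion. Its inputs are the clean training pairs \(\{(\mu_i,i)\}\), the fixed initialization, the fixed learning-rate schedule, and the prescribed tie-breaking rule in \(\Delta_g\) or in the Adam ratio. Hence each iterate \(W_t^{\mathrm{opt}}\) is a Borel function of these data, so it is \(\mathcal F_{\mathrm{cl}}\)-measurable. On the event where the directional limit exists, \(U_{\mathrm{opt}}\) is a pointwise limit of measurable maps and is therefore \(\mathcal F_{\mathrm{cl}}\)-measurable. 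Independence from \(O_1,\ldots,O_k\) then follows from the model assumption that the class means and initializations are independent of the test-noise eigenbases.

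\emph{Membership.} This requires that the clean training set \(X=\{(\mu_i,i)\}\) be linearly separable and that Assumption~\ref{ass:adam-init} hold. Separability holds almost surely, since \(k\) generic points in general position can be separated by the one-vs-rest constraints; for instance, \(\mu_i\) are linearly independent when \(k\le d\), and generic positions give separability in the multiclass linear sense when \(k>d\). If needed, this can be confined to a high-probability event absorbed into the \(1-Cd^{-c}\) bound.

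With separability in hand, apply the results of Appendix~\ref{app:max-sum-row}. Proposition~\ref{cor:app-row-full-batch} covers RowGD and RowGDM. Theorem~\ref{thm:app-ms-full-batch}, applied to the spectral and max norms (both max--sum dominated), covers SpecGD, Muon, SignGD and Signum. Theorem~\ref{thm:app-ms-full-batch-adam} covers Adam. Each gives normalized-margin convergence, so Lemma~\ref{lem:exact-margin-direction-convergence} yields \(d_{\cos}(W_t,\mathcal U_\psi^X)\to0\).

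The direction sets \(\mathcal U_\psi^X\) are not Frobenius-closed because they omit zero, but their intersection with the Frobenius unit sphere is closed. It is the normalized image of the compact face \(\mathcal M_\psi^X\), and no element of that face is zero because every element has positive margin. Since \(W_t/\|W_t\|_{\mathrm F}\to U_{\mathrm{opt}}\) and the cosine discrepancy to this closed normalized set tends to zero, the limit \(U_{\mathrm{opt}}\) lies in the set, which is the claimed membership.

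\emph{Accuracy convergence.} The limit \(U_{\mathrm{opt}}\) has all pairwise row differences nonzero, by positive margin on the clean training points. The test law conditional on each class is a nondegenerate Gaussian, since \(\Sigma_i\succ0\). Each conditional accuracy \(\mathbb P(w_i^\top X>w_j^\top X\ \forall j)\) is therefore continuous at any \(W\) whose pairwise differences are nonzero, because the tie hyperplanes carry zero mass; continuity follows by dominated convergence of indicators. Accuracy is also scale invariant. Applying this continuity along \(W_t/\|W_t\|_{\mathrm F}\to U_{\mathrm{opt}}\) gives the displayed accuracy limit.

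The main obstacle is the separability step for \(k>d\), together with confirming that the Adam initialization condition holds almost surely on the clean data. For separability when \(k>d\), I would first try a witness argument in the spirit of Proposition~\ref{prop:app-det-row-rigidity}, based on \(w_i=\Gamma^{-1}\mu_i\) or \(\mu_i\) together with coherence bounds under \(s_\mu<1\); a bare genericity argument is the fallback.
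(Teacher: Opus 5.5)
Your proposal is correct and follows the paper's proof. Induction gives \(\mathcal F_{\mathrm{cl}}\)-measurability, the full-batch implicit-bias results of Appendix~\ref{app:max-sum-row} with Lemma~\ref{lem:exact-margin-direction-convergence} give membership, dominated convergence over the tie-free Gaussian test laws gives the accuracy limit, and your closedness argument for the normalized face supplies a step the paper leaves implicit. The separability step you flag as the main obstacle is immediate for every \(k\) once \(d\ge2\): the rows \(w_i=\mu_i/\|\mu_i\|_2\) give \((w_i-w_j)^\top\mu_i=\|\mu_i\|_2\bigl(1-\cos\angle(\mu_i,\mu_j)\bigr)>0\) whenever no two means are positively collinear, which holds almost surely, and Assumption~\ref{ass:adam-init} is a hypothesis of Theorem~\ref{thm:aniso-dd-ordering}, so you do not need to verify it.
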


\begin{proof}
The empirical loss, every full-batch gradient, and every optimizer state are \(\mathcal F_{\mathrm{cl}}\)-measurable, so induction gives the same measurability for every iterate and hence for every normalized directional limit. The full-batch implicit-bias results used in Theorem~\ref{thm:det-optimizer-accuracy-order-clean}, together with Lemma~\ref{lem:exact-margin-direction-convergence}, place the three optimizer groups in the corresponding maximum-margin direction sets. Finally, population accuracy is invariant under positive rescaling, each margin direction has nonzero pairwise row differences, and the positive-definite Gaussian test laws assign zero probability to score-tie hyperplanes. Dominated convergence therefore gives \eqref{eq:app-aniso-clean-accuracy-limit}.
\end{proof}

\paragraph{Mean geometry of the selected directions.}
Let \(M^\mu\in\mathbb R^{k\times d}\) have \(i\)-th row \(\mu_i^\top\), let \(P:=I_k-k^{-1}\mathbf 1_k\mathbf 1_k^\top\), and set \(C^\mu:=PM^\mu\). Recall \(\kappa_{\mathrm{sp}}(\alpha)<1\) from \eqref{eq:app-det-all-aspect-scales}, and define
\begin{equation}
\vartheta_{\mathrm{sp}}(\alpha):=\kappa_{\mathrm{sp}}(\alpha)^2,\qquad \vartheta_{\max}:=\frac{2}{\pi},\qquad \vartheta_\alpha:=\max\{\vartheta_{\mathrm{sp}}(\alpha),\vartheta_{\max}\}<1.
\label{eq:app-aniso-clean-theta}
\end{equation}

\begin{lemma}[Selected mean geometry]
\label{lem:app-aniso-clean-quantified-mean}
Fix \(0\le s_\mu<1\) and any finite collection of \(\mathcal F_{\mathrm{cl}}\)-measurable Row, Spectral, and Max selectors. For every fixed \(0<\xi<1-s_\mu\), with probability at least \(1-e^{-d^\xi}\) for all sufficiently large \(d\), the following hold.
\begin{enumerate}[label=(\roman*),leftmargin=2em]
\item Every selected Row direction satisfies
\begin{equation}
\min_{i\ne j}(u_{ij}^{U_{\mathrm{row}}})^\top\mu_i\ge\frac{1}{\sqrt2}-o(1).
\label{eq:app-aniso-clean-row-signal}
\end{equation}
\item For every selected \(U_\psi\), \(\psi\in\{\mathrm{sp},\max\}\), and every fixed \(\varepsilon>0\), there exist unit vectors \(a_1^\psi,\ldots,a_k^\psi\), a directed edge set \(\mathcal G_d^\psi\) with \(|\mathcal G_d^\psi|\ge c_\varepsilon k(k-1)\), and
\[
p_{ij}^\psi:=\frac{a_i^\psi-a_j^\psi}{\|a_i^\psi-a_j^\psi\|_2}
\]
such that
\begin{align}
\max_{i\ne j}|(a_i^\psi)^\top a_j^\psi|&=o(1/\log d),\label{eq:app-aniso-clean-code-coherence}\\
\max_{(i,j)\in\mathcal G_d^\psi}\|u_{ij}^{U_\psi}-p_{ij}^\psi\|_2&\le\frac{1}{\log d},\label{eq:app-aniso-clean-code-transfer}\\
\max_{(i,j)\in\mathcal G_d^\psi}\bigl[(u_{ij}^{U_\psi})^\top\mu_i\bigr]_+^2&\le\frac{\vartheta_\psi+\varepsilon}{2}.
\label{eq:app-aniso-clean-code-signal}
\end{align}
\end{enumerate}
The event can be chosen simultaneously for the entire finite selector collection.
\end{lemma}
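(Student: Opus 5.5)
The plan is to work on a single high-probability event, measurable with respect to the class means, on which the Gram matrix of \(\{\mu_i\}\) and a few coordinatewise statistics concentrate. I would then reuse the deterministic rigidity arguments of Appendix~\ref{app:deterministic-margin}, with clean training (residuals \(e_{i,s}={\mathbf 0}\), \(m_i=\mu_i\)). Since the selectors are \(\mathcal F_{\mathrm{cl}}\)-measurable and the conclusions are uniform over the complete margin faces, the event does not depend on which selector is used, which gives simultaneity.

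\emph{Step 1: the common event.} Write \(\mu_i=\Gamma^{1/2}g_i/\sqrt d\) with \(\operatorname{tr}\Gamma=d\). Hanson--Wright, together with \eqref{eq:app-aniso-mean-moments}, gives
\[
\bigl|\|\mu_i\|_2^2-1\bigr|+|\mu_i^\top\mu_j|\lesssim \frac{\sqrt{\operatorname{tr}\Gamma^2}\,\sqrt t+\|\Gamma\|_{\mathrm{op}}t}{d}
\]
with failure probability \(e^{-t}\). Choosing \(t=d^{\xi}\) with \(\xi<1-s_\mu\) makes this \(d^{-c}\), hence \(o(1/\log d)\), and a union bound over \(k^2\) pairs is still \(\le e^{-d^\xi}\). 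Because \(U_\mu\) is Haar and \(s_\mu<1\), the eigenvectors are delocalized, so \(\max_\ell|\Gamma_{\ell\ell}-1|=o(1)\) with the same probability scale. Consequently, conditionally on \(U_\mu\), each coordinate of \(\sqrt d\,\mu_i\) is Gaussian with variance \(1+o(1)\), and differences \(\mu_i-\mu_j\) have nearly isotropic coordinate marginals. From this I would rederive the analogues of (H5) and (H6), namely the max-radius bound and the uniform clipped laws \(H_u,J_u\to A_{\max},B_{\max}\), exactly as in Proposition~\ref{prop:app-det-common-event}, for a fraction \(1-o(1)\) of the pairs. A fraction suffices, which is why the lemma only asks for \(|\mathcal G_d^\psi|\ge c_\varepsilon k(k-1)\).

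\emph{Step 2: Row.} Proposition~\ref{prop:app-det-row-rigidity}, applied with \(h=o(1/\log d)\) and zero residuals, gives \(\max_i\|w_i-\mu_i\|_2\le C(h+1/k)^{1/4}\) uniformly over \(\mathcal M_{\mathrm{row}}^X\). Normalization stability then yields \(u_{ij}^{U}=u_{ij}^\mu+o(1)\). Finally, \((u_{ij}^\mu)^\top\mu_i=(\|\mu_i\|^2-\mu_i^\top\mu_j)/\|\mu_i-\mu_j\|=1/\sqrt2+o(1)\), which proves \eqref{eq:app-aniso-clean-row-signal}.

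\emph{Step 3: Max.} I would take the codes \(a_i^{\max}:=\operatorname{sign}(\mu_i)/\sqrt d\). Their coherence \(a_i^\top a_j\) has mean \(\tfrac{2}{\pi}\arcsin(\text{corr})\), which is \(o(1/\log d)\) by Step~1, and its fluctuations are \(O(d^{-1/2}\mathrm{polylog})\) by Lipschitz concentration in the Gaussian \(g_i\). Repeating the sharp global-sign argument of Proposition~\ref{prop:app-det-sharp-directed-signals}, with the small-ball counts now supplied by Step~1, gives \(\|A_i-T_i\|_2^2/d=o(1)\) for all but a vanishing fraction of the rows. On the pairs \((i,j)\) where both rows are good, \(u_{ij}^U\) is \(1/\log d\)-close to \(p_{ij}\); if the rate is too slow, I would shrink \(\mathcal G_d^{\max}\) by Markov's inequality. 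The signal on such a pair is \((T_i-T_j)^\top\mu_i/\|T_i-T_j\|\approx\|\mu_i\|_1/\sqrt{2d}\to 1/\sqrt\pi\), which is \eqref{eq:app-aniso-clean-code-signal} with \(\vartheta_{\max}=2/\pi\).

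\emph{Step 4: Spectral.} I would take \(a_i^{\mathrm{sp}}\) to be the normalized rows \(q_i/\|q_i\|\) of the polar factor \(Q_\mu\) of \(C^\mu\). The trace-gap/truncation argument of Proposition~\ref{prop:app-det-spectral-rigidity} (Lemma~\ref{lem:app-det-trace-truncation}) gives \(r_d^{-1}\|B-Q_\mu\|_{\mathrm F}^2=o(1)\). Lemma~\ref{lem:app-det-average-normalization} and Markov's inequality then produce the large good edge set with the \(1/\log d\) transfer. Since the left singular frame of \(C^\mu\) is Haar on \(\mathbf1^\perp\) independently of \(\Gamma\)'s basis, projector concentration gives both the coherence bound and \(S_\mu[i,i]\approx\|C^\mu\|_*/k\). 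The signal on a good pair is therefore governed by the ratio \(\|C^\mu\|_*/\sqrt{r_d\,\|C^\mu\|_{\mathrm F}^2}\).

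The main obstacle I expect is bounding this ratio by \(\kappa_{\mathrm{sp}}(\alpha)+\varepsilon\) when \(\Gamma\) is anisotropic. The limiting singular law of \(C^\mu\) is now the free multiplicative convolution of Marchenko--Pastur with the spectrum of \(\Gamma\), not \(\nu_\alpha\). My first attempt would be a convex-order (majorization) argument: the squared singular spread of \(G\Gamma^{1/2}\) dominates that of \(G\) in convex order, so \(m_{1/2}/\sqrt{m_1}\) can only decrease relative to the isotropic case. If that fails, I would fall back to a direct upper bound on \(m_{1/2}\) obtained via subordination for the \(\sqrt{x}\) moment.
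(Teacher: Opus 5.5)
Your architecture matches the paper's. You use one event measurable with respect to the class means, handle Row via Proposition~\ref{prop:app-det-row-rigidity} with zero residuals, transfer whole faces to sign and polar codes, and apply Markov to extract a positive fraction of good edges. The genuine gap is the step you flag yourself. Without the bound
\[
\frac{\|C^\mu\|_*}{\sqrt{r_d}\,\|C^\mu\|_{\mathrm F}}\le\kappa_{\mathrm{sp}}(\alpha)+\eta
\]
you do not obtain \(\vartheta_{\mathrm{sp}}=\kappa_{\mathrm{sp}}(\alpha)^2\) in \eqref{eq:app-aniso-clean-code-signal}, so part (ii) for Spectral selectors is unproved; you only offer a tentative convex-order claim and a subordination fallback.

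The convex-order claim is true for the expected spectral law, and it has a short proof with no free probability, which is the paper's argument. For fixed \(d\), the map \(f(A):=\mathbb E_G\operatorname{tr}[(GAG^\top/d)^{1/2}]\) is concave on \(A\succeq0\), being a concave trace function composed with a linear map. It also satisfies \(f(OAO^\top)=f(A)\) because \(GO\overset{\mathrm d}{=}G\). Averaging over Haar \(O\) and applying Jensen gives \(f(\Gamma)\le f(\mathbb E_OO\Gamma O^\top)=f(I_d)\) when \(\operatorname{tr}\Gamma=d\).

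Since \(\mathbb E\|C^\mu\|_*=f(\Gamma)\) with \(G\) of size \((k-1)\times d\), the expected nuclear norm is at most its isotropic value, which is \((s_\alpha+o(1))k\) by Marchenko--Pastur. Next, \(\|C^\mu\|_*\) is Lipschitz in the Gaussian entries with constant \(\sqrt{r_d}\,\|\Gamma\|_{\mathrm{op}}^{1/2}/\sqrt d\lesssim d^{s_\mu/2}\), so it concentrates at a scale far below \(k\). Finally, your shell gives \(\|C^\mu\|_{\mathrm F}^2=k(1+o(1))\). Together these close the ratio bound with the required confidence, so the subordination route is unnecessary.

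Three further points need repair, none fatal.

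\emph{Rates for Markov.} Markov turns an averaged squared direction error \(\epsilon_d\) into the pointwise bound \(1/\log d\) on a positive fraction of edges only if \(\epsilon_d\log^2 d\) stays bounded away from one. An unquantified \(o(1)\) (your Step~4) does not suffice, and shrinking \(\mathcal G_d^\psi\) cannot compensate for a slow rate (your Step~3). You must push your polynomial shell rate through the trace-gap deficit, through a hard-edge truncation at \(t=\sqrt{\text{deficit}}\) when \(\alpha=1\), and through the deficiency/small-ball split for Max. The paper obtains averaged errors \(L_d^{-8}\) and \(L_d^{-16}\) this way.

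\emph{Inputs you do not re-establish for the anisotropic model.} Proposition~\ref{prop:app-det-spectral-rigidity} and Lemma~\ref{lem:app-det-trace-truncation} need control of the small singular values of \(C^\mu\). The polar-witness margin also needs the off-diagonal entries of \(Q_\mu C^{\mu\top}\) to vanish, not just the diagonal ones. The singular-value control follows from \(\Gamma\succeq c_{s_\mu}I_d\), a second place where \(s_\mu<1\) matters. This gives \(C^\mu C^{\mu\top}\succeq c_{s_\mu}PGG^\top P/d\), which transfers the isotropic edge bound when \(\alpha\ne1\) and the hard-edge count of Lemma~\ref{lem:app-det-gaussian-critical-count} when \(\alpha=1\).

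\emph{Your Max route.} Conditioning on \(U_\mu\), using \(\max_\ell|\Gamma_{\ell\ell}-1|=o(1)\), and concentrating coordinate statistics of the correlated Gaussian vectors \(\mu_i\) is a legitimate alternative to the paper's approach. The paper instead conditions on the pre-rotation vectors and concentrates smoothed sign statistics over the Haar rotation. In your version, sign statistics are not Lipschitz, so you need a smoothing step. The Lipschitz constant in \(g_i\) also carries \(\|\Gamma\|_{\mathrm{op}}^{1/2}\), so fluctuations are of order \(d^{-(1-s_\mu)/2}\) up to smoothing losses, with failure \(\exp(-cd^{1-s_\mu}/\mathrm{polylog}\,d)\), rather than \(d^{-1/2}\) times polylog. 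This still suffices for \(o(1/\log d)\) coherence and failure probability \(e^{-d^\xi}\). Note that conditionally on \(U_\mu\), \(\mathbb E[T_i^\top T_j]=0\) exactly by independence, so no arcsine term arises in your framing.
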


\begin{proof}
Write \(L_d:=\log d\). Weighted Gaussian quadratic and bilinear concentration using \eqref{eq:app-aniso-mean-moments}, followed by a union bound over \(O(k^2)\) quantities, gives
\begin{equation}
\max_i\bigl|\|\mu_i\|_2^2-1\bigr|+\max_{i\ne j}|\mu_i^\top\mu_j|\le L_d^{-64}
\label{eq:app-aniso-clean-shell}
\end{equation}
except with probability at most \(C\exp\{-cd^{1-s_\mu}/L_d^C\}\). This failure probability is at most \(e^{-d^\xi}\) for every fixed \(\xi<1-s_\mu\) and sufficiently large \(d\).

For Row, apply Proposition~\ref{prop:app-det-row-rigidity} to the clean samples with \(m_i=\mu_i\) and \(e_{i,s}={\mathbf 0}\). Equation~\eqref{eq:app-aniso-clean-shell} gives \(\max_i\|w_i-\mu_i\|_2\le CL_d^{-16}\), uniformly over the exact Row face. Since \(\|\mu_i-\mu_j\|_2=\sqrt2+o(1)\) uniformly over pairs, normalization stability yields \(u_{ij}^W=(\mu_i-\mu_j)/\|\mu_i-\mu_j\|_2+O(L_d^{-16})\), and \eqref{eq:app-aniso-clean-row-signal} follows.

For Spectral, write the reduced singular value decomposition
\[
C^\mu=U\Lambda V^\top,\qquad Q:=UV^\top,\qquad r:=\min\{k-1,d\},
\]
and let \(q_i^\top\) denote the \(i\)-th row of \(Q\). After identifying \(\mathbf 1_k^\perp\) with \(\mathbb R^{k-1}\), the nonzero part of \(C^\mu\) has the same law as \(G\Gamma^{1/2}/\sqrt d\), where \(G\in\mathbb R^{(k-1)\times d}\) has i.i.d.\ standard Gaussian entries. Its left singular frame is therefore Haar in \(\mathbf 1_k^\perp\) and independent of the singular values. Haar projector concentration gives, on the same probability scale,
\begin{equation}
\max_i\left|\|q_i\|_2^2-\frac{r}{k}\right|+\max_{i\ne j}|q_i^\top q_j|\le L_d^{-16}.
\label{eq:app-aniso-clean-polar-code}
\end{equation}
Moreover, since \(QC^{\mu\top}=U\Lambda U^\top\), conditional Haar concentration and the bound \(\|\bar\mu\|_2=o(1)\), where \(\bar\mu=k^{-1}\sum_i\mu_i\), give
\begin{equation}
\max_{i\ne j}\left|(q_i-q_j)^\top\mu_i-\frac{\|C^\mu\|_*}{k-1}\right|\le L_d^{-32}.
\label{eq:app-aniso-clean-polar-flatness}
\end{equation}

The nuclear-to-Frobenius ratio satisfies
\begin{equation}
\frac{\|C^\mu\|_*}{\sqrt r\,\|C^\mu\|_{\mathrm F}}\le\kappa_{\mathrm{sp}}(\alpha)+\eta
\label{eq:app-aniso-clean-polar-ratio}
\end{equation}
for every fixed \(\eta>0\), with the same stretched-exponential confidence. Indeed, for fixed \(d\), the map
\[
A\mapsto\mathbb E_G\operatorname{tr}\!\left[\left(\frac{GAG^\top}{d}\right)^{1/2}\right]
\]
is concave and invariant under orthogonal conjugation, hence Schur-concave in the eigenvalues of \(A\). Among positive semidefinite \(A\) with \(\operatorname{tr}A=d\), it is maximized by \(I_d\). Gaussian concentration and the Marchenko--Pastur limit therefore bound the numerator by its isotropic limit, while \eqref{eq:app-aniso-clean-shell} gives \(\|C^\mu\|_{\mathrm F}/\sqrt k=1+o(1)\).

It remains to transfer the polar geometry to every exact Spectral margin direction. Since \(s_\mu<1\), \eqref{eq:app-aniso-power-sum-asymptotics} gives \(\Gamma\succeq c_{s_\mu}I_d\). Hence the positive singular values of \(C^\mu\) dominate \(\sqrt{c_{s_\mu}}\) times those of the corresponding standard Gaussian matrix. If \(\alpha\ne1\), the smallest positive singular value is therefore bounded below by a constant with probability \(1-e^{-cd}\). If \(\alpha=1\), the standard Gaussian hard-edge estimate and a dyadic union bound give
\begin{equation}
\frac1r\#\{\ell:\lambda_\ell(C^\mu)\le t\}\le Ct,\qquad L_d^{-8}\le t\le1.
\label{eq:app-aniso-clean-hard-edge}
\end{equation}
with probability at least \(1-Ce^{-cd/L_d^{16}}\).

Let \(W\in\mathcal U_{\mathrm{sp}}^X\), put \(B:=PW/\|PW\|_{\mathrm{op}}\), and note that \(\|PW\|_{\mathrm{op}}=\|W\|_{\mathrm{op}}\) because centering preserves all margins. The polar factor \(Q\) has clean margin \(\|C^\mu\|_*/(k-1)+O(L_d^{-32})\) by \eqref{eq:app-aniso-clean-polar-flatness}. Summing the normalized margin constraints for \(B\) therefore gives
\[
\langle B,C^\mu\rangle\ge\|C^\mu\|_*-kL_d^{-32}.
\]
Writing \(d_\ell:=1-u_\ell^\top Bv_\ell\in[0,2]\), where \(u_\ell,v_\ell\) are the singular vectors of \(C^\mu\), yields
\[
\frac1r\sum_{\ell=1}^r\lambda_\ell(C^\mu)d_\ell\le CL_d^{-32}.
\]
The positive spectral edge for \(\alpha\ne1\), and \eqref{eq:app-aniso-clean-hard-edge} with a split at \(t=L_d^{-8}\) for \(\alpha=1\), imply
\[
\frac1r\|B-Q\|_{\mathrm F}^2\le CL_d^{-8}.
\]
The centered pair identity then gives
\begin{equation}
\sup_{W\in\mathcal U_{\mathrm{sp}}^X}\frac1{k(k-1)}\sum_{i\ne j}\left\|u_{ij}^W-\frac{q_i-q_j}{\|q_i-q_j\|_2}\right\|_2^2\le CL_d^{-8}.
\label{eq:app-aniso-clean-spectral-transfer}
\end{equation}

Define
\[
a_i^{\mathrm{sp}}:=\frac{q_i}{\|q_i\|_2},\qquad p_{ij}^{\mathrm{sp}}:=\frac{a_i^{\mathrm{sp}}-a_j^{\mathrm{sp}}}{\|a_i^{\mathrm{sp}}-a_j^{\mathrm{sp}}\|_2}.
\]
Equation~\eqref{eq:app-aniso-clean-polar-code} gives \(\max_{i\ne j}|(a_i^{\mathrm{sp}})^\top a_j^{\mathrm{sp}}|\le CL_d^{-16}\) and shows that \(p_{ij}^{\mathrm{sp}}\) differs from \((q_i-q_j)/\|q_i-q_j\|_2\) by \(O(L_d^{-16})\). Combining \eqref{eq:app-aniso-clean-polar-flatness}, \eqref{eq:app-aniso-clean-polar-ratio}, and \(\|C^\mu\|_{\mathrm F}/\sqrt k=1+o(1)\) gives
\begin{equation}
\max_{i\ne j}\bigl[(p_{ij}^{\mathrm{sp}})^\top\mu_i\bigr]_+\le\frac{\kappa_{\mathrm{sp}}(\alpha)+2\eta}{\sqrt2}.
\label{eq:app-aniso-clean-spectral-canonical-signal}
\end{equation}
Markov's inequality applied to \eqref{eq:app-aniso-clean-spectral-transfer} shows that all but \(o(k^2)\) ordered pairs satisfy \(\|u_{ij}^{U_{\mathrm{sp}}}-p_{ij}^{\mathrm{sp}}\|_2\le L_d^{-1}\). Choosing \(\eta>0\) sufficiently small in terms of the prescribed \(\varepsilon\) proves part~(ii) for Spectral selectors.

For Max, use the common Haar eigenbasis of the mean covariance. Write
\[
\mu_i=U_\mu\widetilde\mu_i,\qquad \widetilde\mu_i:=\frac{D_{s_\mu}^{1/2}g_i}{\sqrt{T_{s_\mu,d}}},\qquad \widetilde c_i:=\widetilde\mu_i-\frac1k\sum_{\ell=1}^k\widetilde\mu_\ell.
\]
Then the \(i\)-th row of \(C^\mu\) is \((U_\mu\widetilde c_i)^\top\). Weighted Gaussian concentration gives
\begin{equation}
\max_i\bigl|\|\widetilde c_i\|_2^2-1\bigr|+\max_{i\ne j}|\widetilde c_i^\top\widetilde c_j|+\max_i\bigl|\widetilde c_i^\top\widetilde\mu_i-1\bigr|+\max_{i\ne j}|\widetilde c_j^\top\widetilde\mu_i|\le L_d^{-64}
\label{eq:app-aniso-clean-prehaar-shell}
\end{equation}
outside an event of probability \(C\exp\{-cd^{1-s_\mu}/L_d^C\}\).

Put \(S^\mu:=\operatorname{sign}(C^\mu)\). Conditional on the vectors in \eqref{eq:app-aniso-clean-prehaar-shell}, standard smoothed-sign Haar concentration, using the identities
\[
\mathbb E[\operatorname{sign}(q^\top x)\operatorname{sign}(q^\top y)]=\frac{2}{\pi}\arcsin\!\left(\frac{x^\top y}{\|x\|_2\|y\|_2}\right)
\]
and
\[
\sqrt d\,\mathbb E[\operatorname{sign}(q^\top x)\,q^\top v]=\sqrt d\,\mathbb E|q_1|\frac{x^\top v}{\|x\|_2}=\left(\sqrt{\frac{2}{\pi}}+o(1)\right)\frac{x^\top v}{\|x\|_2},
\]
gives
\begin{align}
\left|\frac{\|C^\mu\|_{\mathrm{sum}}}{k\sqrt d}-\sqrt{\frac2\pi}\right|&\le CL_d^{-32},\label{eq:app-aniso-clean-sign-l1}\\
\max_{i\ne j}\left|\frac{(S_i^\mu-S_j^\mu)^\top\mu_i}{\sqrt d}-\sqrt{\frac2\pi}\right|&\le CL_d^{-32},\label{eq:app-aniso-clean-sign-signal}\\
\max_{i\ne j}\left|\frac{(S_i^\mu)^\top S_j^\mu}{d}\right|&\le CL_d^{-8},\label{eq:app-aniso-clean-sign-coherence}\\
\frac1{kd}\#\{(i,a):\sqrt d\,|C^\mu[i,a]|\le L_d^{-16}\}&\le CL_d^{-16}.
\label{eq:app-aniso-clean-sign-small}
\end{align}
For example, these bounds follow by replacing the sign function by a piecewise-linear approximation at scale \(L_d^{-128}\), applying concentration on \(\mathsf O(d)\), and then controlling the smoothing error by the spherical small-ball probability. The resulting failure probability is \(C\exp\{-cd/L_d^C\}\).

Let \(W\in\mathcal U_{\max}^X\) and \(B:=W/\|W\|_{\max}\). The sign matrix is a feasible unit-Max direction with margin at least \((\sqrt{2/\pi}-CL_d^{-32})\sqrt d\), so \(\gamma_{\max}(X)\) is at least this value. Summing the normalized margin constraints for \(B\) and using \eqref{eq:app-aniso-clean-sign-l1} gives
\[
\frac{\|C^\mu\|_{\mathrm{sum}}-\langle B,C^\mu\rangle}{k\sqrt d}\le CL_d^{-32}.
\]
Since \(|B[i,a]|\le1\), splitting the coordinates at \(\sqrt d\,|C^\mu[i,a]|=L_d^{-16}\) and using \eqref{eq:app-aniso-clean-sign-small} yields
\begin{equation}
\sup_{W\in\mathcal U_{\max}^X}\frac1{kd}\left\|\frac{W}{\|W\|_{\max}}-S^\mu\right\|_{\mathrm F}^2\le CL_d^{-16}.
\label{eq:app-aniso-clean-max-transfer-matrix}
\end{equation}
Equation~\eqref{eq:app-aniso-clean-sign-coherence} implies \(\|S_i^\mu-S_j^\mu\|_2^2=2d(1+O(L_d^{-8}))\). The centered pair identity and \eqref{eq:app-aniso-clean-max-transfer-matrix} therefore give
\begin{equation}
\sup_{W\in\mathcal U_{\max}^X}\frac1{k(k-1)}\sum_{i\ne j}\left\|u_{ij}^W-\frac{S_i^\mu-S_j^\mu}{\|S_i^\mu-S_j^\mu\|_2}\right\|_2^2\le CL_d^{-16}.
\label{eq:app-aniso-clean-max-transfer-pair}
\end{equation}

Define
\[
a_i^{\max}:=\frac{S_i^\mu}{\sqrt d},\qquad p_{ij}^{\max}:=\frac{S_i^\mu-S_j^\mu}{\|S_i^\mu-S_j^\mu\|_2}.
\]
Equations~\eqref{eq:app-aniso-clean-sign-signal}--\eqref{eq:app-aniso-clean-sign-coherence} give
\[
\max_{i\ne j}|(a_i^{\max})^\top a_j^{\max}|\le CL_d^{-8},\qquad \max_{i\ne j}\left|\bigl[(p_{ij}^{\max})^\top\mu_i\bigr]^2-\frac1\pi\right|\le CL_d^{-8}.
\]
Markov's inequality applied to \eqref{eq:app-aniso-clean-max-transfer-pair} leaves all but \(o(k^2)\) pairs with direction error at most \(L_d^{-1}\). Since \(1/\pi=\vartheta_{\max}/2\), part~(ii) follows for Max selectors. The preceding bounds are uniform over the complete exact Spectral and Max faces, so the same event works for every member of the fixed finite selector collection.
\end{proof}

\paragraph{Test-noise geometry.}
The preceding lemma depends only on the class means. We now expose the independent test-noise bases.

\begin{lemma}[Test-noise energies for selected directions]
\label{lem:app-aniso-clean-sharp-common}
Condition on an event from Lemma~\ref{lem:app-aniso-clean-quantified-mean}, and consider the corresponding finite collection of selected Row, Spectral, and Max directions.
\begin{enumerate}[label=(\roman*),leftmargin=2em]
\item If \(0\le s_\Sigma<1\), then for every fixed \(0<\chi<1-s_\Sigma\), with conditional probability at least \(1-e^{-d^\chi}\) for all sufficiently large \(d\),
\begin{equation}
\max_{\mathrm{selected}\ U}\max_{i\ne j}\left|(u_{ij}^U)^\top\Sigma_i u_{ij}^U-1\right|=o(1).
\label{eq:app-aniso-clean-low-residual-energy}
\end{equation}
\item Suppose \(s_\Sigma\ge1\) and put
\begin{equation}
c_d:=\frac{\log d}{T_{s_\Sigma,d}}.
\label{eq:app-aniso-clean-cd}
\end{equation}
For every fixed \(b>0\) and \(\varepsilon>0\), all selected Row directions satisfy
\begin{equation}
\max_{i\ne j}(u_{ij}^{U_{\mathrm{row}}})^\top\Sigma_i u_{ij}^{U_{\mathrm{row}}}\le1+(4+2b+\varepsilon)c_d
\label{eq:app-aniso-clean-row-sharp-energy}
\end{equation}
except with conditional probability at most \(C_{b,\varepsilon}d^{-b}\). For every fixed \(q\in(0,1)\), each selected competitor \(U_\psi\), \(\psi\in\{\mathrm{sp},\max\}\), has an edge \((i,j)\in\mathcal G_d^\psi\) such that
\begin{equation}
(u_{ij}^{U_\psi})^\top\Sigma_i u_{ij}^{U_\psi}\ge4qc_d+\mathbf 1_{\{s_\Sigma=1\}}(1-o(1))
\label{eq:app-aniso-clean-competitor-extreme-energy}
\end{equation}
except with conditional probability at most \(C_{q,a}d^{-a}\) for every fixed \(0<a<1-q\).
\end{enumerate}
\end{lemma}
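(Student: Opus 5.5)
The plan is to condition on \(\mathcal F_{\mathrm{cl}}\) together with the event of Lemma~\ref{lem:app-aniso-clean-quantified-mean}. After this conditioning every selected direction \(u:=u_{ij}^U\) is a fixed unit vector, while \(O_i\) is still Haar and independent of it. Hence \(v:=O_i^\top u\) is uniform on \(\mathbb S^{d-1}\). The energy then has the representation
\[
u^\top\Sigma_iu=\frac{d}{T_{s_\Sigma,d}}\sum_{a=1}^d a^{-s_\Sigma}v_a^2
\overset{\mathrm d}=\frac{d}{T_{s_\Sigma,d}}\frac{\sum_a a^{-s_\Sigma}G_a^2}{\sum_aG_a^2},
\]
with \(G_a\) i.i.d.\ standard Gaussian. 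Note that \(O_i\) is shared by every directed pair with source \(i\), so no independence across pairs is available in general. Every statement therefore reduces either to a union bound over \(O(k^2)\) pairs (for upper bounds) or to the choice of a good pair (for lower bounds).

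For (i), with \(s_\Sigma<1\), the weights \(w_a:=d\,a^{-s_\Sigma}/T_{s_\Sigma,d}\) have mean one and \(\max_a w_a\asymp d^{s_\Sigma}\). Hanson--Wright for \(\sum_a w_aG_a^2/d\), combined with the shell bound for \(\sum_aG_a^2/d\), gives a deviation \(t\) with failure probability at most \(2\exp(-c\min\{t^2 d^2/\|w\|_2^2,\;td/\|w\|_\infty\})\). Here \(\|w\|_2^2\lesssim d^{1+s_\Sigma}\) up to logarithms, so the failure probability is \(\exp(-c\,t\,d^{1-s_\Sigma})\) up to logarithms. Taking \(t\to0\) slowly and union bounding over the \(O(k^2)\) pairs and the finitely many selectors gives failure \(e^{-d^\chi}\) for every \(\chi<1-s_\Sigma\).

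For the Row upper bound in (ii), with \(s_\Sigma\ge1\), write the energy as \(\frac{d}{T}\sum_a a^{-s}v_a^2\). Split it into the top coordinate \(a=1\) and the rest. Since \(\sum_{a\ge2}a^{-s}\) is at most \(T\), the tail contributes at most about \(1\) after concentration. The leading coordinate gives \((d/T)v_1^2\), and \(dv_1^2\) is approximately \(\chi^2_1\). We need \(\max_{i\ne j} dv_1^2\le(4+2b+\varepsilon)\log d\). A Gaussian tail together with a union bound over the \(k^2\asymp d^2\) pairs gives exactly the constant \(2\cdot2+2b\): the threshold \(2(2+b)\log d\) makes the union bound \(d^2\cdot d^{-(2+b)}=d^{-b}\). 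The remaining coordinates \(a\ge2\) with weights \(a^{-s}\) are treated the same way through a generalized chi-square tail, in which the top weight dominates. When \(s_\Sigma=1\) the tail term is \(1+o(1)\), which matches the displayed form.

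For the competitor lower bound, use Lemma~\ref{lem:app-aniso-clean-quantified-mean}(ii). The edge set \(\mathcal G_d^\psi\) contains a constant fraction of pairs, so we can extract about \(c\,k\) distinct sources \(i\), each with at least one good edge. For distinct sources, the \(O_i\) are independent. The events \(\{d\,(O_i^\top u)_1^2\ge 4q\log d\}\) each have probability about \(d^{-2q}\)\ldots{} but this rate is not enough with only \(\asymp d\) independent trials. This is the expected main obstacle, and the intended way around it is to exploit many pairs per source. For a fixed \(i\), the vectors \(p_{ij}^\psi\) over \(j\) are nearly orthogonal modulo \(a_i\), by the coherence bound \eqref{eq:app-aniso-clean-code-coherence}. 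So the \(\approx k\) values \((O_i^\top p_{ij})_1\) behave like coordinates of a nearly orthonormal frame, and \(\max_j (O_i^\top a_j)_1^2\) is of order \(2\log d/d\). Combining this with the independent maximum over sources, the maximum over roughly \(d^2\) effectively independent Gaussians reaches \(4q\log d\) for every \(q<1\), and the failure probability is \(d^{-a}\) for every \(a<1-q\).

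To make this rigorous, the first attempt would be a second-moment (Paley--Zygmund) count of exceedances. Pairwise correlations would be controlled by the \(o(1/\log d)\) coherence, and a bivariate Gaussian comparison would handle the nearly orthogonal frame. The transfer from \(p_{ij}\) to \(u_{ij}\) uses the \(1/\log d\) perturbation in \eqref{eq:app-aniso-clean-code-transfer}, which changes \(\sqrt d\,(O_i^\top u)_1\) by only \(O(\sqrt{\log d}/\log d)\). Finally, the \(\mathbf 1_{\{s_\Sigma=1\}}\) term comes from the tail sum over \(a\ge2\), which is nonnegative in general and equals \(1-o(1)\) by concentration when \(s_\Sigma=1\).
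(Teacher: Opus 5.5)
Your treatment of (i) is fine, and the Row bound can be repaired (see the end). The competitor lower bound, however, has a genuine gap at exactly the step you flag.

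Write \(h_i=O_ie_1\) and \(X_\ell:=\sqrt d\,h_i^\top a_\ell^\psi\), so that \(\sqrt d\,h_i^\top p_{ij}^\psi\approx(X_i-X_j)/\sqrt2\). The \(o(1/\log d)\) coherence makes the code projections \(X_\ell\) nearly independent. It does not decorrelate the pair statistics, however. Two edges \((i,j),(i,\ell)\) with a common source have correlation \(1/2\) through the shared \(X_i\).

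Consequently, a second-moment count of exceedances \(N=\#\{(i,j)\in\mathcal G_d^\psi:\ d(h_i^\top p_{ij}^\psi)^2\ge4q\log d\}\) fails. Here \(\mathbb EN=d^{2-2q+o(1)}\). Each of the \(\Theta(d^3)\) same-source pairs of edges exceeds jointly with probability \(\exp(-t^2/(1+\tfrac12))=d^{-8q/3+o(1)}\) at \(t^2=4q\log d\). Hence \(\operatorname{Var}(N)/(\mathbb EN)^2\ge d^{4q/3-1+o(1)}\), which diverges for \(q>3/4\): exceedances cluster at sources whose \(X_i\) is atypically large. This regime cannot be avoided. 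The downstream proposition needs \(q>\vartheta_\alpha(1+b/2)\), and, for instance, \(\kappa_{\mathrm{sp}}(2)^2\approx0.87\).

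The missing idea is to count a balanced two-sided event, so that only correlations among code projections enter. Take \(h_i=\xi_i/\|\xi_i\|_2\), \(t_d=\sqrt{2q_+\log d}\) and \(q<q_+<1-a\). The paper counts \(N_d=\sum_{(i,j)\in\mathcal G_d^\psi}\mathbf 1\{\xi_i^\top a_i^\psi\ge t_d,\ \xi_i^\top a_j^\psi\le-t_d\}\).
\begin{itemize}
\item A single edge has probability \(d^{-2q_++o(1)}\).
\item A same-source overlap forces three nearly independent exceedances, so it has probability \(d^{-3q_++o(1)}\). This is where coherence enters, via two- and three-dimensional Gaussian comparison.
\item Distinct sources are independent.
\end{itemize}
Hence \(\operatorname{Var}(N_d)\le\mathbb EN_d+d^{3-3q_++o(1)}\), and Chebyshev gives \(\mathbb P(N_d=0)\le d^{-(1-q_+)+o(1)}=o(d^{-a})\). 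Any counted edge has \(d(h_i^\top p_{ij}^\psi)^2\ge(4q_+-o(1))\log d\). A two-stage version of your heuristic would also work: a second moment over the targets \(j\) of each fixed source, followed by independence across the \(\Theta(k)\) sources.

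Three smaller points.
\begin{itemize}
\item The edge is selected using \(h_i\), so the transfer from \(p_{ij}^\psi\) to \(u_{ij}^{U_\psi}\) must be a bound uniform over all edges.
\item For the \(s_\Sigma=1\) remainder you need that, given \((\xi_i)\) and the class-side data, the rest of the eigenframe of \(O_i\) is Haar on \(h_i^\perp\). Only then does the tail sum still concentrate at the selected edge.
\item In the Row bound, when \(s_\Sigma\ge1\) the remainder \(\frac d{T_{s_\Sigma,d}}\sum_{a\ge2}a^{-s_\Sigma}v_a^2\) is not \(1+o(1)\) uniformly over the \(\Theta(d^2)\) pairs; the \(a=2\) term alone reaches order \(c_d\). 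Bounding it separately therefore inflates the constant \(4+2b\). Instead, apply Laurent--Massart to the whole weighted sum. Its linear term \(2\|w\|_\infty t\), with \(\|w\|_\infty=1\) and \(t=(2+b)\log d\), gives exactly \((4+2b)\log d\).
\end{itemize}
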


\begin{proof}
For a unit vector \(u\) independent of \(O_i\), write \(O_i^\top u=g/\|g\|_2\). The quadratic form is
\[
u^\top\Sigma_i u=\frac{d}{\|g\|_2^2T_{s_\Sigma,d}}\sum_{r=1}^d r^{-s_\Sigma}g_r^2.
\]
When \(s_\Sigma<1\), weighted chi-square concentration gives deviations \(o(1)\) with failure at most \(e^{-d^\chi}\) for every \(\chi<1-s_\Sigma\). A union bound over the finite number of selectors and their \(O(k^2)=O(d^2)\) pair directions proves \eqref{eq:app-aniso-clean-low-residual-energy}.

Now suppose \(s_\Sigma\ge1\). Laurent--Massart's inequality gives, for every \(t>0\),
\[
\frac1{T_{s_\Sigma,d}}\sum_{r=1}^dr^{-s_\Sigma}g_r^2\le1+\frac{2\sqrt{T_{2s_\Sigma,d}t}+2t}{T_{s_\Sigma,d}}
\]
except with probability \(e^{-t}\), while \(\|g\|_2^2/d=1+o(1)\) exponentially fast. Taking \(t=(2+b)\log d\) and union-bounding over \(O(d^2)\) selected Row pair directions gives \eqref{eq:app-aniso-clean-row-sharp-energy}; the square-root term is \(o(c_d)\).

For the competitor lower bound, fix \(q\in(0,1)\) and \(0<a<1-q\), and choose \(q_+\in(q,1-a)\). Write \(L_d:=\log d\), let \(h_i:=O_ie_1\), and represent \(h_i=\xi_i/\|\xi_i\|_2\) using independent \(\xi_i\sim\mathcal N({\mathbf 0},I_d)\). Fix one selected competitor and its code \((a_i^\psi)\) and edge set \(\mathcal G_d^\psi\) from Lemma~\ref{lem:app-aniso-clean-quantified-mean}. Put \(t_d:=\sqrt{2q_+L_d}\) and define
\[
N_d:=\sum_{(i,j)\in\mathcal G_d^\psi}\mathbf 1\{\xi_i^\top a_i^\psi\ge t_d,\ \xi_i^\top a_j^\psi\le-t_d\}.
\]
The coherence bound \(\max_{i\ne j}|(a_i^\psi)^\top a_j^\psi|=o(1/L_d)\) implies, uniformly over the relevant distinct indices,
\[
\mathbb P(\xi_i^\top a_i^\psi\ge t_d,\xi_i^\top a_j^\psi\le-t_d)=d^{-2q_++o(1)},\qquad \mathbb P(\xi_i^\top a_i^\psi\ge t_d,\xi_i^\top a_j^\psi\le-t_d,\xi_i^\top a_\ell^\psi\le-t_d)=d^{-3q_++o(1)}.
\]
Indeed, the two- and three-dimensional covariance matrices differ from the identity by \(o(1/L_d)\), and Gaussian density comparison on the relevant orthants changes the logarithmic tail asymptotics by \(o(L_d)\). Edge indicators with different first indices are independent. Since \(|\mathcal G_d^\psi|=\Theta(d^2)\),
\[
\mathbb E N_d=d^{2-2q_++o(1)},\qquad \operatorname{Var}(N_d)\le\mathbb E N_d+d^{3-3q_++o(1)}.
\]
Chebyshev's inequality gives
\begin{equation}
\mathbb P(N_d=0)\le d^{-(1-q_+)+o(1)}=o(d^{-a}).
\label{eq:app-aniso-clean-extreme-failure}
\end{equation}
On \(\{N_d>0\}\), choose a counted edge using a fixed ordering of \(\mathcal G_d^\psi\). This choice depends only on the class-side information and \((\xi_i)_{i=1}^k\), not on the remaining residual eigenframes. Uniform Gaussian norm concentration and \(\|a_i^\psi-a_j^\psi\|_2=\sqrt2+o(1)\) give
\[
h_i^\top p_{ij}^\psi\ge\frac{2t_d}{(1+o(1))\sqrt{2d}}.
\]
For any fixed \(A>0\), spherical concentration and a union bound give, except with probability \(O(d^{-A})\),
\[
\max_{(i,j)\in\mathcal G_d^\psi}\sqrt d\,\left|h_i^\top(u_{ij}^{U_\psi}-p_{ij}^\psi)\right|\le C_A L_d^{-1/2},\qquad \max_{i\ne j}d\left|h_i^\top u_{ij}^{U_\psi}\right|^2\le C_A L_d.
\]
Here the first bound uses \(\|u_{ij}^{U_\psi}-p_{ij}^\psi\|_2\le L_d^{-1}\), and all directions in these bounds are fixed after conditioning on the class-side information. The selected edge therefore satisfies
\[
d\bigl(h_i^\top u_{ij}^{U_\psi}\bigr)^2\ge(4q_+-o(1))L_d\ge4qL_d
\]
for all sufficiently large \(d\). Since the top eigenvalue of \(\Sigma_i\) is \(d/T_{s_\Sigma,d}\), its contribution is at least \(4qc_d\).

Suppose \(s_\Sigma=1\). Conditional on all top eigenvectors, their auxiliary Gaussian radii, and the selected edge, the remaining eigenframe of \(O_i\) is Haar on \(h_i^\perp\). Write \(a_{ij}:=h_i^\top u_{ij}^{U_\psi}\). The remaining contribution has the conditional distribution
\[
\frac{d(1-a_{ij}^2)}{T_{1,d}}\frac{\sum_{r=2}^d r^{-1}g_r^2}{\sum_{r=2}^d g_r^2},\qquad g_2,\ldots,g_d\overset{\mathrm{iid}}{\sim}\mathcal N(0,1).
\]
The preceding upper bound gives \(a_{ij}^2=O(L_d/d)\). The weighted chi-square lower-tail inequality, with tail parameter \(L_d^{3/2}\), gives
\[
\sum_{r=2}^d r^{-1}g_r^2\ge T_{1,d}-1-O(L_d^{3/4})
\]
except with probability \(\exp(-L_d^{3/2})\), since \(\sum_{r=2}^d r^{-2}=O(1)\). Also, \(\sum_{r=2}^d g_r^2=(d-1)(1+O(L_d^{-2}))\) except with probability \(C\exp(-cd/L_d^4)\). Thus the remaining contribution is at least \(1-O(L_d^{-1/4})=1-o(1)\) with superpolynomial confidence. Choosing \(A>a\), integrating these conditional bounds, and using \eqref{eq:app-aniso-clean-extreme-failure} proves \eqref{eq:app-aniso-clean-competitor-extreme-energy} with failure at most \(C_{q,a}d^{-a}\). A finite union handles all selected competitors.
\end{proof}

\paragraph{Selected-direction accuracy ordering.}

\begin{proposition}[Selected-direction ordering]
\label{prop:app-aniso-clean-quantified}
Fix \(0\le s_\mu<1\), \(s_\Sigma\ge0\), and any finite collection of \(\mathcal F_{\mathrm{cl}}\)-measurable Row, Spectral, and Max selectors. Assume
\[
\log\frac{k-1}{\pi_{\min,d}}=o(d/\log d).
\]
Then the following hold.
\begin{enumerate}[label=(\roman*),leftmargin=2em]
\item If \(0\le s_\Sigma<1\), then for every fixed \(0<\chi<\min\{1-s_\mu,1-s_\Sigma\}\), every selected Row classifier has strictly larger population accuracy than every selected Spectral or Max classifier with probability at least \(1-e^{-d^\chi}\) for every sufficiently large \(d\).
\item If \(s_\Sigma\ge1\), then for every fixed
\begin{equation}
0<\zeta<\frac{2(1-\vartheta_\alpha)}{2+\vartheta_\alpha},
\label{eq:app-aniso-clean-zeta-range}
\end{equation}
the same strict ordering holds with probability at least \(1-C_\zeta d^{-\zeta}\) for every sufficiently large \(d\).
\end{enumerate}
\end{proposition}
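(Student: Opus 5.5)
The plan is to combine the class-mean geometry of Lemma~\ref{lem:app-aniso-clean-quantified-mean} with the test-noise energies of Lemma~\ref{lem:app-aniso-clean-sharp-common}. The resulting standardized signals \(Z_{i\to j}\) from \eqref{eq:app-aniso-standardized-signal} are then fed into the Gaussian error transfer, Lemma~\ref{lem:app-aniso-error-comparison}. Since the selectors are \(\mathcal F_{\mathrm{cl}}\)-measurable, we first work on the class-mean event of Lemma~\ref{lem:app-aniso-clean-quantified-mean}, which has failure probability at most \(e^{-d^\xi}\). Conditionally on that event, we expose \(O_1,\ldots,O_k\), so the test-noise bounds apply with the selected directions held fixed. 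For the Row family we set \(A_d^2:=\min_{i\ne j}Z_{i\to j}(U_{\mathrm{row}})^2\). For each competitor \(U_\psi\), Lemma~\ref{lem:app-aniso-clean-sharp-common} supplies an edge \((i,j)\in\mathcal G_d^\psi\), and we bound \(Z_{i\to j}(U_\psi)^2\le B_d^2\) on that edge using \eqref{eq:app-aniso-clean-code-signal}.

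\emph{Case (i), \(s_\Sigma<1\).} By \eqref{eq:app-aniso-clean-low-residual-energy}, every denominator in \eqref{eq:app-aniso-standardized-signal} equals \(1+o(1)\). Then \eqref{eq:app-aniso-clean-row-signal} gives \(A_d^2\ge d(1/2-o(1))\). On any edge in \(\mathcal G_d^\psi\) (the set is nonempty because \(|\mathcal G_d^\psi|\ge c_\varepsilon k(k-1)\)), \eqref{eq:app-aniso-clean-code-signal} gives \(B_d^2\le d(\vartheta_\psi+\varepsilon)/2+o(d)\). We choose \(\varepsilon\) so that \(\vartheta_\alpha+\varepsilon<1\). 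This yields \(\limsup_d B_d^2/A_d^2<1\), with \(A_d^2\asymp d\). The hypothesis \(\log((k-1)/\pi_{\min,d})=o(d/\log d)\) is more than enough for Lemma~\ref{lem:app-aniso-error-comparison}. The two failure probabilities add, and \(\chi<\min\{\xi,\cdot\}\) absorbs them into \(e^{-d^\chi}\) after slightly decreasing the exponent.

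\emph{Case (ii), \(s_\Sigma\ge1\).} Here \(c_d=\log d/T_{s_\Sigma,d}\), which may diverge (for \(s_\Sigma>1\) it is of order \(\log d\)). By \eqref{eq:app-aniso-clean-row-sharp-energy}, \(A_d^2\ge d(1/2-o(1))/(1+(4+2b+\varepsilon)c_d)\). On the competitor edge from \eqref{eq:app-aniso-clean-competitor-extreme-energy}, the denominator is at least \(4qc_d+\mathbf 1_{\{s_\Sigma=1\}}(1-o(1))\), so \(B_d^2\le d(\vartheta_\psi+\varepsilon)/\bigl(2(4qc_d+\mathbf 1_{\{s_\Sigma=1\}})\bigr)\). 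We therefore need
\[
\limsup_d \frac{(\vartheta_\alpha+\varepsilon)\bigl(1+(4+2b+\varepsilon)c_d\bigr)}{4qc_d+\mathbf 1_{\{s_\Sigma=1\}}(1-o(1))}<1 .
\]
The main obstacle is matching this condition to the rate \(d^{-\zeta}\). When \(s_\Sigma>1\), \(c_d\to\infty\) and the requirement becomes \(\vartheta_\alpha(4+2b)<4q\). When \(s_\Sigma=1\), \(c_d\to1\) and the requirement is \(\vartheta_\alpha(5+2b)<4q+1\), which is weaker. The failure probability is \(d^{-b}\) from the Row bound plus \(d^{-a}\) from the competitor bound, with \(a<1-q\). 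Setting \(\zeta=\min\{a,b\}\) and taking \(b=a\) close to \(1-q\), the binding constraint \(\vartheta_\alpha(4+2\zeta)<4(1-\zeta)\) rearranges to \(\zeta<2(1-\vartheta_\alpha)/(2+\vartheta_\alpha)\), which is exactly \eqref{eq:app-aniso-clean-zeta-range}. So for a given \(\zeta\) in that range we fix \(q<1-\zeta\) and \(\varepsilon\) small enough that the strict inequality persists.

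It remains to check the growth condition \(\log((k-1)/\pi_{\min,d})=o(A_d^2)\). In the worst case \(A_d^2\gtrsim d/c_d\gtrsim d/\log d\), and the hypothesis \(\log((k-1)/\pi_{\min,d})=o(d/\log d)\) is precisely what makes this hold. Lemma~\ref{lem:app-aniso-error-comparison} then gives the strict ordering. A finite union bound over the selectors, together with the class-mean event (whose stretched-exponential failure probability is \(o(d^{-\zeta})\)), gives the probability \(1-C_\zeta d^{-\zeta}\).
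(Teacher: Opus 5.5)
Your proposal is correct and follows the paper's proof essentially step for step. In case (i), both arguments combine the uniform $1+o(1)$ noise energies with the Row signal $1/\sqrt2$ and the edge-wise competitor bound $(\vartheta_\psi+\varepsilon)/2$. In case (ii), both reduce the ratio condition to $\vartheta_\alpha(4+2b)<4q$ (the $s_\Sigma=1$ condition $\vartheta_\alpha(5+2b)<1+4q$ follows from it because $\vartheta_\alpha<1$) and balance the $d^{-b}$ Row failure against the $d^{-a}$, $a<1-q$, competitor failure to get exactly $\zeta<2(1-\vartheta_\alpha)/(2+\vartheta_\alpha)$. The only cosmetic difference is that the paper takes $b$ slightly above $\zeta$ and $a\in(\zeta,1-q)$ where you take $a=b=\zeta$; this is immaterial, since the lemmas allow any fixed $b>0$ and $a<1-q$.
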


\begin{proof}
Suppose first that \(s_\Sigma<1\). Fix \(0<\chi<\min\{1-s_\mu,1-s_\Sigma\}\), choose \(\chi_+\in(\chi,\min\{1-s_\mu,1-s_\Sigma\})\), and fix \(0<\varepsilon_0<1-\vartheta_\alpha\). Applying Lemma~\ref{lem:app-aniso-clean-quantified-mean} and Lemma~\ref{lem:app-aniso-clean-sharp-common}(i) with confidence exponent \(\chi_+\) implies that every selected Row direction has
\[
Z_{i\to j}(U_{\mathrm{row}})^2\ge(1/2-o(1))d
\]
uniformly over all directed pairs, whereas every selected competitor \(U_\psi\) has a directed pair with
\[
[Z_{i\to j}(U_\psi)]_+^2\le\frac{\vartheta_\psi+\varepsilon_0}{2}\,d+o(d).
\]
Since \(\vartheta_\psi+\varepsilon_0<1\), the competitor-to-Row squared-signal ratio has limsup strictly below one. The Row coefficient is of order \(d\), and \(\log((k-1)/\pi_{\min,d})=o(d/\log d)=o(d)\), so Lemma~\ref{lem:app-aniso-error-comparison} gives the strict accuracy ordering. The two mean and residual events have joint failure at most \(e^{-d^\chi}\) after increasing the dimension threshold.

Now suppose \(s_\Sigma\ge1\). Fix \(\zeta\) satisfying \eqref{eq:app-aniso-clean-zeta-range}, choose \(b>\zeta\) sufficiently close to \(\zeta\), and then choose \(q\) so that
\begin{equation}
\vartheta_\alpha\left(1+\frac b2\right)<q<1-\zeta.
\label{eq:app-aniso-clean-bq-choice}
\end{equation}
Such a choice is possible because of \eqref{eq:app-aniso-clean-zeta-range}. Choose fixed \(\varepsilon_R,\varepsilon_M>0\) sufficiently small that
\[
(\vartheta_\alpha+\varepsilon_M)(4+2b+\varepsilon_R)<4q,\qquad (\vartheta_\alpha+\varepsilon_M)(5+2b+\varepsilon_R)<1+4q.
\]
The first inequality holds with zero slacks by \eqref{eq:app-aniso-clean-bq-choice}; the second follows from the same condition and \(\vartheta_\alpha<1\). Apply Lemma~\ref{lem:app-aniso-clean-quantified-mean} with signal slack \(\varepsilon_M\) and Lemma~\ref{lem:app-aniso-clean-sharp-common} with Row-energy slack \(\varepsilon_R\). They supply squared comparison amplitudes satisfying
\begin{align}
A_{R,d}^2&\ge\frac{(1/2-o(1))d}{1+(4+2b+\varepsilon_R)c_d},\nonumber\\
B_{\psi,d}^2&\le\frac{((\vartheta_\psi+\varepsilon_M)/2)d}{4qc_d+\mathbf 1_{\{s_\Sigma=1\}}(1-o(1))}.
\label{eq:app-aniso-clean-high-coefficients}
\end{align}
If \(s_\Sigma>1\), then \(c_d\to\infty\), and
\[
\limsup_d\frac{B_{\psi,d}^2}{A_{R,d}^2}\le\frac{(\vartheta_\psi+\varepsilon_M)(4+2b+\varepsilon_R)}{4q}<1.
\]
If \(s_\Sigma=1\), then \(c_d\to1\), and
\[
\limsup_d\frac{B_{\psi,d}^2}{A_{R,d}^2}\le\frac{(\vartheta_\psi+\varepsilon_M)(5+2b+\varepsilon_R)}{1+4q}<1.
\]
Moreover, \(A_{R,d}^2\gtrsim d/\log d\), so the prior assumption is sufficient for Lemma~\ref{lem:app-aniso-error-comparison}. The Row residual-energy failure is \(O(d^{-b})\), while the competitor-extreme failure is \(O(d^{-a})\) for every \(a<1-q\). Choose \(a\in(\zeta,1-q)\). The mean-geometry failure is stretched-exponential, so intersecting the finitely many selected-direction events gives total failure at most \(C_\zeta d^{-\zeta}\).
\end{proof}

\begin{proof}[Proof of Theorem~\ref{thm:aniso-dd-ordering}]
Lemma~\ref{lem:app-aniso-clean-optimizer-limits} turns the seven optimizer limits into a finite collection of \(\mathcal F_{\mathrm{cl}}\)-measurable Row, Spectral, and Max selectors and gives convergence of their population accuracies. Choose any fixed
\begin{equation}
0<c<\frac{2(1-\vartheta_\alpha)}{2+\vartheta_\alpha}.
\label{eq:app-aniso-clean-theorem-c}
\end{equation}
If \(0\le s_\Sigma<1\), choose any fixed \(0<\chi<\min\{1-s_\mu,1-s_\Sigma\}\). Proposition~\ref{prop:app-aniso-clean-quantified}(i) gives failure probability at most \(e^{-d^\chi}\), which is at most \(d^{-c}\) for every sufficiently large \(d\). If \(s_\Sigma\ge1\), Proposition~\ref{prop:app-aniso-clean-quantified}(ii), applied with \(\zeta=c\), gives failure probability at most \(C_cd^{-c}\). After enlarging a constant \(C\), both cases therefore hold with probability at least \(1-Cd^{-c}\). The exponent \(c\) and the constant \(C\) can be chosen independently of \(s_\mu\) and \(s_\Sigma\); only the sufficiently-large-\(d\) threshold depends on these exponents. On this event, every selected Row limit has strictly larger population accuracy than every selected Spectral or Max limit. Equation~\eqref{eq:app-aniso-clean-accuracy-limit} then gives
\[
\min_{\mathrm{opt}\in\mathcal O_{\mathrm{row}}}\liminf_{t\to\infty}\operatorname{Acc}_{\pi,d}\!\left(W_t^{\mathrm{opt}}\right)
>
\max_{\mathrm{opt}\in\mathcal O_{\mathrm{comp}}}\limsup_{t\to\infty}\operatorname{Acc}_{\pi,d}\!\left(W_t^{\mathrm{opt}}\right),
\]
which is \eqref{eq:aniso-det-optimizer-ordering-clean-data}.
\end{proof}

\section{Coordinate-Specific RowGDM--Adam Reversal}
\label{sec:when2-results}
\label{sec:when2-proofs}

This section proves the coordinate-aligned transition and the recovery under a common Haar rotation in Proposition~\ref{prop:main-coordinate-reversal}.
The comparisons hold uniformly over the complete Row and Max maximum-margin solution sets.

\paragraph{Setup.}
We rescale all features by \(d^{-1/2}\), which preserves the maximum-margin direction sets and population accuracies by Lemma~\ref{lem:app-common-rescaling}.
Let \(k/d\to\alpha\in(0,\infty)\), and suppose
\begin{equation}
\Gamma_d=\operatorname{diag}(\gamma_{1,d},\ldots,\gamma_{d,d}),\qquad \frac1d\operatorname{tr}\Gamma_d=1,\qquad 0<c_\Gamma\le\gamma_{a,d}\le C_\Gamma<\infty.
\label{eq:when2-diet-profile}
\end{equation}
The independent class means and clean training set are
\begin{equation}
\mu_i=\frac1{\sqrt d}\Gamma_d^{1/2}g_i,\qquad g_i\overset{\mathrm{iid}}{\sim}\mathcal N({\mathbf 0},I_d),\qquad \mathcal X=\{(\mu_i,i):i\in[k]\}.
\label{eq:when2-diet-means}
\end{equation}
Write \(M\in\mathbb R^{k\times d}\) for the matrix with rows \(\mu_i^\top\), and define
\[
\mathcal F(M):=\{W:(w_i-w_j)^\top\mu_i\ge1\text{ for every }i\ne j\}.
\]
The two minimum-norm solution sets are
\begin{equation}
\mathcal M_{\mathrm{row}}(M):=\operatorname*{arg\,min}_{W\in\mathcal F(M)}\|W\|_{\mathrm{row}},\qquad \mathcal M_{\mathrm{max}}(M):=\operatorname*{arg\,min}_{W\in\mathcal F(M)}\|W\|_{\max}.
\label{eq:when2-diet-faces}
\end{equation}
For \(\Sigma\succ0\) and a classifier with positive directed mean margins, set
\begin{equation}
L_{i\to j}(W;\Sigma):=\frac{((w_i-w_j)^\top\mu_i)^2}{(w_i-w_j)^\top\Sigma(w_i-w_j)},\qquad E_d(W;\Sigma):=\min_{i\ne j}L_{i\to j}(W;\Sigma).
\label{eq:when2-diet-coefficient}
\end{equation}
The test distribution is
\begin{equation}
\mathbb P(Y=i)=\pi_{i,d},\qquad X^{\mathrm{te}}\mid(Y=i)\sim\mathcal N\!\left(\mu_i,\frac{r_{\mathrm{te}}^2}{d}\Sigma\right),
\label{eq:when2-diet-test}
\end{equation}
where \(0<r_{\mathrm{te}}\le r_{\mathrm{te},0}<\infty\), \(\pi_{\min,d}:=\min_i\pi_{i,d}>0\), and
\begin{equation}
\log(1/\pi_{\min,d})=o(d).
\label{eq:when2-diet-prior}
\end{equation}
Write \(\operatorname{Err}_{\pi,d}=1-\operatorname{Acc}_{\pi,d}\) and \(g_0:=\sqrt{2/\pi}\).
Constants denoted by \(c,C\) may change between displays; their permitted dependence is specified in each statement.

\begin{lemma}[Gaussian coefficient comparison]
\label{lem:when2-diet-error-transfer}
Every classifier with positive directed mean margins satisfies
\begin{equation}
\pi_{\min,d}\,\overline\Phi\!\left(\frac{\sqrt{dE_d(W;\Sigma)}}{r_{\mathrm{te}}}\right)\le\operatorname{Err}_{\pi,d}(W)\le(k-1)\,\overline\Phi\!\left(\frac{\sqrt{dE_d(W;\Sigma)}}{r_{\mathrm{te}}}\right),
\label{eq:when2-diet-tail}
\end{equation}
where \(\overline\Phi(x)=\mathbb P(Z\ge x)\) for \(Z\sim\mathcal N(0,1)\).
If nonempty classifier families \(\mathcal R,\mathcal C\) have positive directed mean margins and satisfy \(E_d(R;\Sigma)\ge A>B\ge E_d(V;\Sigma)>0\) for every \(R\in\mathcal R\) and \(V\in\mathcal C\), then
\begin{equation}
\frac{\sup_{R\in\mathcal R}\operatorname{Err}_{\pi,d}(R)}{\inf_{V\in\mathcal C}\operatorname{Err}_{\pi,d}(V)}\le\frac{k-1}{\pi_{\min,d}}\exp\!\left(-\frac{d(A-B)}{2r_{\mathrm{te},0}^2}\right).
\label{eq:when2-quant-error-ratio}
\end{equation}
In particular, a fixed positive coefficient gap gives strict uniform accuracy ordering for all sufficiently large \(d\), without any additional probabilistic loss.
\end{lemma}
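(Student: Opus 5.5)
The plan follows the template of Lemma~\ref{lem:app-aniso-error-comparison}: compute each directed pairwise error exactly, sandwich the multiclass error between the worst directed pair and a union bound, and take a ratio of Gaussian tails.

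\emph{Step 1: directed pairwise errors.} Fix \(W\) with positive directed mean margins and \(i\ne j\). Conditionally on \(Y=i\), write \(v:=w_i-w_j\) and \(X^{\mathrm{te}}=\mu_i+(r_{\mathrm{te}}/\sqrt d)\Sigma^{1/2}G\). The score difference \(v^\top X^{\mathrm{te}}\) is then Gaussian with mean \(v^\top\mu_i>0\) and variance \(r_{\mathrm{te}}^2v^\top\Sigma v/d\). Hence
\[
\mathbb P\bigl(w_j^\top X^{\mathrm{te}}\ge w_i^\top X^{\mathrm{te}}\mid Y=i\bigr)=\overline\Phi\!\left(\frac{\sqrt{d\,L_{i\to j}(W;\Sigma)}}{r_{\mathrm{te}}}\right),
\]
where we used that the margin is positive, so the square root recovers \(v^\top\mu_i\) exactly.

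\emph{Step 2: the sandwich \eqref{eq:when2-diet-tail}.} For the lower bound, take the directed pair attaining \(E_d(W;\Sigma)\). Its error event is contained in the multiclass error event given \(Y=i\), which has prior weight at least \(\pi_{\min,d}\). Ties have probability zero because \(\Sigma\succ0\) and \(v\ne{\mathbf 0}\). For the upper bound, apply a union bound over the \(k-1\) competitors for each class. Each directed term is at most \(\overline\Phi(\sqrt{dE_d}/r_{\mathrm{te}})\) because \(\overline\Phi\) is monotone and \(L_{i\to j}\ge E_d\). Averaging over the prior then gives the factor \(k-1\).

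\emph{Step 3: the ratio bound \eqref{eq:when2-quant-error-ratio}.} The upper half of Step 2 gives \(\sup_{\mathcal R}\operatorname{Err}\le(k-1)\overline\Phi(\sqrt{dA}/r_{\mathrm{te}})\), and the lower half gives \(\inf_{\mathcal C}\operatorname{Err}\ge\pi_{\min,d}\overline\Phi(\sqrt{dB}/r_{\mathrm{te}})\); both use monotonicity of \(\overline\Phi\). We then invoke the tail-ratio inequality \(\overline\Phi(x)/\overline\Phi(y)\le e^{-(x^2-y^2)/2}\) for \(x\ge y\ge0\), already used in Lemma~\ref{lem:app-aniso-error-comparison}. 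With \(x=\sqrt{dA}/r_{\mathrm{te}}\) and \(y=\sqrt{dB}/r_{\mathrm{te}}\) this yields
\[
\exp\bigl(-d(A-B)/(2r_{\mathrm{te}}^2)\bigr)\le\exp\bigl(-d(A-B)/(2r_{\mathrm{te},0}^2)\bigr),
\]
since \(A>B\) and \(r_{\mathrm{te}}\le r_{\mathrm{te},0}\).

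\emph{Step 4: the final assertion.} Suppose the gap \(A-B\) is a fixed positive constant. Then \(k=O(d)\) together with \eqref{eq:when2-diet-prior} makes the right-hand side of \eqref{eq:when2-quant-error-ratio} equal to \(\exp(-cd+o(d))\to0\). The ratio is therefore eventually below one, which gives the strict ordering uniformly over the two families. This step is purely deterministic given the coefficient bounds, so it incurs no additional probabilistic loss.

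No step is a real obstacle. The only point needing care is the tail-ratio inequality: it follows because \(\overline\Phi(y+s)e^{ys+s^2/2}\) is nonincreasing in \(s\ge0\), which can be seen by the substitution \(z\mapsto z+s\) in the Gaussian integral.
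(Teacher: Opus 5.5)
Your proposal is correct and follows the paper's proof essentially step for step. Both arguments use the exact directed-pair error $\overline\Phi(\sqrt{dL_{i\to j}(W;\Sigma)}/r_{\mathrm{te}})$, a classwise union bound together with a single minimizing pair for the sandwich, and the Gaussian tail-ratio inequality from the shift substitution in the tail integral; your monotonicity formulation is the same computation. The final assertion then follows from $k=\Theta(d)$ and $\log(1/\pi_{\min,d})=o(d)$.
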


\begin{proof}
The directed pair error equals \(\overline\Phi(\sqrt{dL_{i\to j}(W;\Sigma)}/r_{\mathrm{te}})\).
A classwise union bound gives the upper bound in \eqref{eq:when2-diet-tail}, and a pair attaining \(E_d\) gives the lower bound.
For \(x\ge y\ge0\), changing variables in the Gaussian tail integral gives
\[
\overline\Phi(x)=\int_y^\infty\frac{e^{-(u+x-y)^2/2}}{\sqrt{2\pi}}\,du\le e^{-(x^2-y^2)/2}\overline\Phi(y).
\]
Applying this inequality with \(x=\sqrt{dA}/r_{\mathrm{te}}\) and \(y=\sqrt{dB}/r_{\mathrm{te}}\) proves \eqref{eq:when2-quant-error-ratio}.
The final assertion follows from \(k=\Theta(d)\) and \eqref{eq:when2-diet-prior}.
\end{proof}

\begin{lemma}[Deterministic whole-face stability]
\label{lem:when2-quant-deterministic-rigidity}
Fix \(h_*>0\) and \(K<\infty\).
There exist \(\varepsilon_0,C>0\), depending only on \(h_*,K\), with the following property.
Let \(0<\varepsilon\le\varepsilon_0\), \(k^{-1}\le\varepsilon\), and \(h\in[h_*,1]\).
Suppose deterministic means have no zero coordinates, and, with \(\bar\mu=k^{-1}\sum_i\mu_i\) and \(T_i=\operatorname{sign}(\mu_i)\), satisfy
\begin{align}
\max_i|\|\mu_i\|_2^2-1|&\le\varepsilon,& \max_{i\ne j}|\mu_i^\top\mu_j|&\le\varepsilon,& \|\bar\mu\|_2&\le\varepsilon,\nonumber\\
\max_i\left|\frac{\|\mu_i\|_1}{\sqrt d}-h\right|&\le\varepsilon,& \max_{i\ne j}\frac{|T_j^\top\mu_i|}{\sqrt d}&\le\varepsilon,& \max_{i\ne j}\frac{|T_i^\top T_j|}{d}&\le\varepsilon.
\label{eq:when2-quant-rigidity-input}
\end{align}
Assume also that, for \(s\in\{\varepsilon^{1/2},\varepsilon^{1/8}\}\),
\begin{equation}
\max_i\frac1d\#\{a:\sqrt d\,|\mu_i[a]|\le s\}\le Ks+\varepsilon.
\label{eq:when2-quant-small-ball-input}
\end{equation}
Then both solution sets are nonempty, and, writing \(t_{\max}:=\min_{W\in\mathcal F(M)}\|W\|_{\max}\),
\begin{align}
\sup_{W\in\mathcal M_{\mathrm{row}}(M)}\max_i\|w_i-\mu_i\|_2&\le C\varepsilon^{1/4},\label{eq:when2-quant-row-stability}\\
\left|\frac1{t_{\max}\sqrt d}-h\right|+\sup_{W\in\mathcal M_{\mathrm{max}}(M)}\max_i\frac{\|w_i/t_{\max}-T_i\|_2}{\sqrt d}&\le C\varepsilon^{1/16}.
\label{eq:when2-quant-max-stability}
\end{align}
\end{lemma}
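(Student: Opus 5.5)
The argument has two independent parts, one per norm. Both follow the clean-data specialization of arguments already in the paper: Proposition~\ref{prop:app-det-row-rigidity} for the row part, and the Max-face sign rigidity from the proof of Proposition~\ref{prop:app-det-sharp-directed-signals} for the max part. Every probabilistic input is replaced by the deterministic hypotheses \eqref{eq:when2-quant-rigidity-input}--\eqref{eq:when2-quant-small-ball-input}.

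\emph{Nonemptiness and the row part.} The witness with rows \(\mu_i/(1-3\varepsilon)\) is feasible, since \((\mu_i-\mu_j)^\top\mu_i\ge1-2\varepsilon\). Hence \(\mathcal F(M)\neq\varnothing\), and by compactness both minimum-norm sets are nonempty. For \eqref{eq:when2-quant-row-stability} I would apply Proposition~\ref{prop:app-det-row-rigidity} directly with \(m_i=\mu_i\), \(e_{i,s}={\mathbf 0}\), and \(h\) replaced by \(C\varepsilon\). The shell bound \(|\|\mu_i\|_2-1|\le\varepsilon\) follows from the squared version. Using \(k^{-1}\le\varepsilon\), the conclusion \(C(\varepsilon+1/k)^{1/4}\le C\varepsilon^{1/4}\) is immediate for \(\varepsilon_0<1/20\).

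\emph{Max radius.} The upper bound on \(t_{\max}\) comes from the sign witness \(W=tT\). For it, \((T_i-T_j)^\top\mu_i\ge\|\mu_i\|_1-|T_j^\top\mu_i|\ge(h-2\varepsilon)\sqrt d\). This gives \(t_{\max}\le1/((h-2\varepsilon)\sqrt d)\), i.e.\ \(1/(t_{\max}\sqrt d)\ge h-2\varepsilon\).

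For the lower bound, average the constraints over \(j\neq i\) as in \eqref{eq:app-det-row-centering-constraint}. This gives \((w_i-\bar w)^\top\mu_i\ge1-1/k\). With \(A=W/t_{\max}\), Hölder gives \(|A_i^\top\mu_i|\le\|\mu_i\|_1\le(h+\varepsilon)\sqrt d\). We also have \(|\bar A^\top\mu_i|\le\|\bar A\|_\infty\|\mu_i\|_1\), but this bound is useless as it stands.

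I would therefore first bound \(\bar A\) by averaging the inequality over \(i\):
\[
\tfrac1k\sum_i A_i^\top\mu_i-\bar A^\top\bar\mu\ge\frac{1-1/k}{t_{\max}}.
\]
The term \(|\bar A^\top\bar\mu|\le\|\bar A\|_2\|\bar\mu\|_2\le\sqrt d\,\varepsilon\) is small, because \(\|\bar A\|_2\le\sqrt d\). This yields \(1/(t_{\max}\sqrt d)\le h+2\varepsilon+1/k\le h+C\varepsilon\), so the radius claim holds with error \(O(\varepsilon)\).

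\emph{Sign rigidity (the main obstacle).} First I would obtain global closeness. The averaged inequality gives \(\sum_i D_i\le Ck\varepsilon\sqrt d\), where \(D_i:=\|\mu_i\|_1-A_i^\top\mu_i\ge0\). Then \eqref{eq:app-det-sharp-row-split} at \(s=\varepsilon^{1/2}\), together with \eqref{eq:when2-quant-small-ball-input}, gives
\[
\frac1{kd}\|A-T\|_{\mathrm F}^2\le C\varepsilon^{1/2}.
\]
The difficulty is passing from this average to a per-row bound, which requires controlling the shift term \(\bar A^\top\mu_i\) for every \(i\). Here the coherence hypothesis \(|T_i^\top T_j|/d\le\varepsilon\) enters: it gives \(\|\bar T\|_2^2\le d(1/k+\varepsilon)\le 2d\varepsilon\). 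Jensen then gives \(\|\bar A\|_2\le\|\bar T\|_2+\sqrt d\,(C\varepsilon^{1/2})^{1/2}\le C\sqrt d\,\varepsilon^{1/4}\).

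Feeding this back into the per-row centered constraint, Cauchy–Schwarz gives \(|\bar A^\top\mu_i|\le C\sqrt d\,\varepsilon^{1/4}\). Together with the radius estimate this yields \(A_i^\top\mu_i\ge(h-C\varepsilon^{1/4})\sqrt d\), i.e.\ \(D_i\le C\varepsilon^{1/4}\sqrt d\) uniformly in \(i\). Applying \eqref{eq:app-det-sharp-row-split} again, now at \(s=\varepsilon^{1/8}\), gives
\[
\frac1d\|A_i-T_i\|_2^2\le C\frac{\varepsilon^{1/4}}{\varepsilon^{1/8}}+C\varepsilon^{1/8}\le C\varepsilon^{1/8}.
\]
Taking square roots gives the \(\varepsilon^{1/16}\) rate. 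The step I expect to need the most care is bookkeeping the exponents so that both small-ball scales in the hypothesis are exactly the ones used.
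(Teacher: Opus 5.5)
Your proposal is correct and follows essentially the same route as the paper's proof. The paper re-derives the row bound inline through the averaged constraint $(w_i-\bar w)^\top\mu_i\ge1-1/k$ and the row variance identity, which is the argument of Proposition~\ref{prop:app-det-row-rigidity} that you invoke. Its Max argument runs exactly as yours: sign witness, averaged constraints giving an $O(\varepsilon)$ average deficiency, the split at $s=\varepsilon^{1/2}$, coherence plus Jensen giving $\|\bar A\|_2\le C\sqrt d\,\varepsilon^{1/4}$, and then the per-row split at $s=\varepsilon^{1/8}$. The only blemish is cosmetic: the averaged inequality yields $1/(t_{\max}\sqrt d)\le(h+2\varepsilon)/(1-1/k)$ rather than $h+2\varepsilon+1/k$, which is still $h+O(\varepsilon)$ because $k^{-1}\le\varepsilon$ and $h\le1$.
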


\begin{proof}
The matrix with rows \(\mu_i/(1-2\varepsilon)\) is feasible, so the optimal row radius \(R_*\) is at most \(\sqrt{1+\varepsilon}/(1-2\varepsilon)\le1+C\varepsilon\).
For any feasible \(W\), averaging the class-\(i\) constraints over competitors gives
\begin{equation}
(w_i-\bar w)^\top\mu_i\ge1-\frac1k,\qquad \bar w:=\frac1k\sum_iw_i.
\label{eq:when2-quant-averaged-constraint}
\end{equation}
For a Row minimizer, Cauchy--Schwarz and the row variance identity therefore imply
\[
\frac{(1-1/k)^2}{1+\varepsilon}\le\frac1k\sum_i\|w_i-\bar w\|_2^2\le R_*^2-\|\bar w\|_2^2.
\]
Thus \(\|\bar w\|_2\le C\varepsilon^{1/2}\), and \eqref{eq:when2-quant-averaged-constraint} gives \(w_i^\top\mu_i\ge1-C\varepsilon^{1/2}\) uniformly in \(i\).
Expanding \(\|w_i-\mu_i\|_2^2\) proves \eqref{eq:when2-quant-row-stability}.

The sign matrix divided by \(\sqrt d(h-2\varepsilon)\) is feasible, so \(t_{\max}\le[\sqrt d(h-2\varepsilon)]^{-1}\).
Fix a Max minimizer and set \(H=W/t_{\max}\), \(\bar H=k^{-1}\sum_iH_i\), and
\[
D_i:=\frac{\|\mu_i\|_1-H_i^\top\mu_i}{\sqrt d}\ge0.
\]
Since \(\|H\|_{\max}=1\), averaging \eqref{eq:when2-quant-averaged-constraint} over \(i\) yields
\begin{align*}
\frac1k\sum_iD_i&\le h+\varepsilon-\frac{1-1/k}{t_{\max}\sqrt d}-\frac{\bar H^\top\bar\mu}{\sqrt d}\\
&\le h+\varepsilon-(1-1/k)(h-2\varepsilon)+\|\bar\mu\|_2\le C\varepsilon.
\end{align*}
For every \(s>0\), the coordinate bound on \(H\) gives
\begin{equation}
\frac{\|H_i-T_i\|_2^2}{d}\le\frac{2D_i}{s}+\frac4d\#\{a:\sqrt d\,|\mu_i[a]|\le s\}.
\label{eq:when2-diet-threshold-split}
\end{equation}
Indeed, outside the indicated small-coordinate set, use \((H_i[a]-T_i[a])^2\le2(1-H_i[a]T_i[a])\), and use the bound four on that set.
Taking \(s=\varepsilon^{1/2}\) and averaging gives \(\|H-T\|_{\mathrm F}^2/(kd)\le C\varepsilon^{1/2}\).
Moreover, \(\|k^{-1}\sum_iT_i\|_2^2/d\le k^{-1}+\varepsilon\le2\varepsilon\), so Jensen's inequality gives \(\|\bar H\|_2/\sqrt d\le C\varepsilon^{1/4}\).
Returning to the individual constraints now yields
\[
D_i\le h+\varepsilon-(1-1/k)(h-2\varepsilon)+\frac{\|\bar H\|_2\|\mu_i\|_2}{\sqrt d}\le C\varepsilon^{1/4}.
\]
Taking \(s=\varepsilon^{1/8}\) in \eqref{eq:when2-diet-threshold-split} proves the claimed uniform row error for \(H\).
Finally, averaging the constraints and using \(H_i^\top\mu_i\le\|\mu_i\|_1\) gives
\[
h-2\varepsilon\le\frac1{t_{\max}\sqrt d}\le\frac{h+2\varepsilon}{1-1/k}=h+O(\varepsilon).
\]
This proves \eqref{eq:when2-quant-max-stability}.
The feasible witnesses and compact norm sublevel sets ensure existence of both minima.
\end{proof}

\begin{lemma}[Quantitative whole-face geometry in aligned coordinates]
\label{lem:when2-diet-aligned-geometry}
Define
\begin{equation}
\bar\gamma_{1/2,d}:=\frac1d\sum_a\sqrt{\gamma_{a,d}},\qquad h_d:=g_0\bar\gamma_{1/2,d},\qquad \bar\gamma_{2,d}:=\frac1d\sum_a\gamma_{a,d}^2.
\label{eq:when2-diet-hd}
\end{equation}
There exist \(\varepsilon_0,c,C>0\), depending only on \(c_\Gamma,C_\Gamma\), such that, whenever \(0<\varepsilon\le\varepsilon_0\) and \(k\ge C\varepsilon^{-2}\), an event \(\mathcal A_d(\varepsilon)\) satisfies
\begin{equation}
\mathbb P(\mathcal A_d(\varepsilon)^c)\le Ck^2e^{-cd\varepsilon^2}.
\label{eq:when2-quant-aligned-probability}
\end{equation}
On this event, both solution sets are nonempty and
\begin{align}
\sup_{W\in\mathcal M_{\mathrm{row}}(M)}\max_i\|w_i-\mu_i\|_2&\le C\varepsilon^{1/4},\label{eq:when2-diet-row-rigidity}\\
\sup_{W\in\mathcal M_{\mathrm{max}}(M)}\max_i\frac{\|w_i/t_{\max}-T_i\|_2}{\sqrt d}&\le C\varepsilon^{1/16},\qquad T_i:=\operatorname{sign}(\mu_i).
\label{eq:when2-diet-max-rigidity}
\end{align}
Also, uniformly over \(i\ne j\),
\begin{align}
\bigl|\|\mu_i\|_2^2-1\bigr|+|\mu_i^\top\mu_j|&\le C\varepsilon,\nonumber\\
\left|\frac{(T_i-T_j)^\top\mu_i}{\sqrt d}-h_d\right|+\left|\frac{\|T_i-T_j\|_2^2}{2d}-1\right|&\le C\varepsilon.
\label{eq:when2-diet-aligned-laws}
\end{align}
For \(\Sigma_\lambda:=(1-\lambda)I_d+\lambda\Gamma_d\) and \(A_d(\lambda):=1+\lambda(\bar\gamma_{2,d}-1)\), the same event satisfies
\begin{align}
\sup_{\lambda\in[0,1]}\max_{i\ne j}\left|(\mu_i-\mu_j)^\top\Sigma_\lambda(\mu_i-\mu_j)-2A_d(\lambda)\right|&\le C\varepsilon,\nonumber\\
\sup_{\lambda\in[0,1]}\max_{i\ne j}\left|\frac{(T_i-T_j)^\top\Sigma_\lambda(T_i-T_j)}{2d}-1\right|&\le C\varepsilon.
\label{eq:when2-quant-aligned-energies}
\end{align}
\end{lemma}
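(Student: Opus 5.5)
The plan is to reduce everything to Lemma~\ref{lem:when2-quant-deterministic-rigidity}. We build \(\mathcal A_d(\varepsilon)\) as an intersection of concentration events for the independent Gaussian means \(\mu_i=d^{-1/2}\Gamma_d^{1/2}g_i\), and each event is checked coordinatewise. The diagonal covariance makes coordinates independent, and every \(\gamma_{a,d}\) lies in \([c_\Gamma,C_\Gamma]\).

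\textbf{Scalar inputs.} First, \(\|\mu_i\|_2^2=d^{-1}\sum_a\gamma_{a,d}g_{i,a}^2\) is a weighted chi-square with mean \(1\) and bounded weights. Bernstein's inequality therefore gives deviation \(\varepsilon\) with failure at most \(2e^{-cd\varepsilon^2}\).

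Second, \(\mu_i^\top\mu_j=d^{-1}\sum_a\gamma_ag_{i,a}g_{j,a}\) is a sum of independent subexponential terms, so it also concentrates at scale \(e^{-cd\varepsilon^2}\).

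Third, \(T_i=\operatorname{sign}(g_i)\) is a uniform sign vector independent of \(|g_i|\). This gives the \(\ell_1\) law \(\|\mu_i\|_1/\sqrt d=d^{-1}\sum_a\sqrt{\gamma_a}|g_{i,a}|\to g_0\bar\gamma_{1/2,d}=h_d\) by Hoeffding/Bernstein. The cross terms \(T_j^\top\mu_i/\sqrt d\) and \(T_i^\top T_j/d\) are sums of independent bounded or subgaussian mean-zero terms for \(i\ne j\).

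Fourth, for the mean \(\bar\mu\), note that \(\sqrt k\,\bar\mu\) has the law of a single mean, so \(\|\bar\mu\|_2^2\le 2/k\le\varepsilon\) once \(k\ge C\varepsilon^{-2}\).

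Finally, the small-ball counts \(d^{-1}\#\{a:\sqrt d|\mu_i[a]|\le s\}\) are binomial with success probability \(\mathbb P(|G|\le s/\sqrt{\gamma_a})\le Ks\), where \(K\) depends on \(c_\Gamma\). Hoeffding's inequality at the two thresholds \(s\in\{\varepsilon^{1/2},\varepsilon^{1/8}\}\) gives \eqref{eq:when2-quant-small-ball-input}.

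A union bound over \(O(k^2)\) indices yields \eqref{eq:when2-quant-aligned-probability}. Zero coordinates occur with probability zero, so the hypotheses \eqref{eq:when2-quant-rigidity-input} hold on this event with \(h=h_d\in[g_0\sqrt{c_\Gamma},1]\). The upper bound \(h_d\le1\) follows from Jensen's inequality, since \(\bar\gamma_{1/2,d}\le1\) and \(g_0<1\). Taking \(h_*=g_0\sqrt{c_\Gamma}\), Lemma~\ref{lem:when2-quant-deterministic-rigidity} gives \eqref{eq:when2-diet-row-rigidity} and \eqref{eq:when2-diet-max-rigidity} at once.

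\textbf{Pair laws.} The first line of \eqref{eq:when2-diet-aligned-laws} repeats the scalar inputs. For the second line, expand \((T_i-T_j)^\top\mu_i=T_i^\top\mu_i-T_j^\top\mu_i=\|\mu_i\|_1-T_j^\top\mu_i\). Similarly, \(\|T_i-T_j\|_2^2=2d-2T_i^\top T_j\).

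\textbf{Energies.} Write \((\mu_i-\mu_j)^\top\Sigma_\lambda(\mu_i-\mu_j)=(1-\lambda)\|\mu_i-\mu_j\|_2^2+\lambda(\mu_i-\mu_j)^\top\Gamma_d(\mu_i-\mu_j)\). The form is affine in \(\lambda\), so uniformity over \(\lambda\in[0,1]\) reduces to the endpoints \(\lambda\in\{0,1\}\). At \(\lambda=1\), \(\mu_i^\top\Gamma_d\mu_i=d^{-1}\sum_a\gamma_a^2g_{i,a}^2\) concentrates at \(\bar\gamma_{2,d}\), and the cross terms vanish at the same scale. This yields \(2A_d(\lambda)\).

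For the sign energies, \(T_i^\top\Sigma_\lambda T_i=\operatorname{tr}\Sigma_\lambda=d\) exactly, because \(T_i\) has \(\pm1\) entries and \(d^{-1}\operatorname{tr}\Gamma_d=1\). The cross term \(T_i^\top\Gamma_dT_j/d\) is a bounded mean-zero sum, so it concentrates like the other cross terms.

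\textbf{Main obstacle.} The only delicate point is keeping every tail at the rate \(e^{-cd\varepsilon^2}\) with constants that depend only on \(c_\Gamma\) and \(C_\Gamma\). The products \(g_{i,a}g_{j,a}\) are subexponential, not subgaussian, so Bernstein's inequality is needed. For \(\varepsilon\le\varepsilon_0\) it is in its subgaussian regime, which gives the stated rate.
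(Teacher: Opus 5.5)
Your proposal is correct and follows essentially the same route as the paper. You verify the hypotheses of Lemma~\ref{lem:when2-quant-deterministic-rigidity} with coordinatewise Bernstein/Hoeffding and Gaussian small-ball bounds plus a union bound over $O(k^2)$ events. You then reduce the $\lambda$-uniform energy bounds to $\lambda\in\{0,1\}$ by affinity. Your exact identity $T_i^\top\Sigma_\lambda T_i=\operatorname{tr}\Sigma_\lambda=d$, combined with concentration of the cross term, is a slightly cleaner version of the paper's concentration of the $\{0,4\}$-valued weighted sum.

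The only slip is cosmetic. The deterministic lemma needs $\|\bar\mu\|_2\le\varepsilon$, not $\|\bar\mu\|_2^2\le\varepsilon$. Your bound $\|\bar\mu\|_2^2\le 2/k\le 2\varepsilon^2/C$, which holds on the shell event for $\sqrt k\,\bar\mu$ with failure probability $e^{-cd}$, already gives this once $C\ge2$.
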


\begin{proof}
We verify the inputs of Lemma~\ref{lem:when2-quant-deterministic-rigidity} at precision \(\varepsilon\).
The independent-coordinate Gaussian quadratic-form and Bernstein bounds used below follow, for example, from the concentration tools in \citet{vershynin2012nonasymptotic}.
For the quadratic terms, the identity \(\mathbb E e^{u(Z^2-1)}=e^{-u}(1-2u)^{-1/2}\) for \(|u|<1/2\), together with the bounded spectral weights, gives deviation probability at most \(2e^{-cd\varepsilon^2}\) for each normalized quadratic form.
Polarization gives the corresponding bilinear bounds.
The variables \(\sqrt{\gamma_{a,d}}|g_i[a]|\) have uniformly bounded exponential moments, so Bernstein's inequality gives
\[
\mathbb P\!\left(\left|\frac{\|\mu_i\|_1}{\sqrt d}-h_d\right|>\varepsilon\right)\le2e^{-cd\varepsilon^2}.
\]
For \(j\ne i\), conditioning on \(T_j\) gives \(T_j^\top\mu_i/\sqrt d\sim\mathcal N(0,1/d)\).
The products \(T_i[a]T_j[a]\) are independent Rademacher variables across \(a\), so their averages satisfy the same exponential bound.
For either prescribed threshold \(s\), Gaussian small-ball probabilities and Hoeffding's inequality give
\[
\mathbb P\!\left(\frac1d\#\{a:\sqrt d\,|\mu_i[a]|\le s\}>Ks+\varepsilon\right)\le e^{-2d\varepsilon^2},
\]
where \(K\) depends only on \(c_\Gamma\).
Also, \(\bar\mu\) has covariance \(\Gamma_d/(kd)\); when \(k\ge C\varepsilon^{-2}\), the event \(\|\bar\mu\|_2>\varepsilon\) has probability at most \(Ce^{-cd}\).
All coordinates are nonzero almost surely, and \(g_0\sqrt{c_\Gamma}\le h_d\le g_0<1\).
A union bound over these \(O(k^2)\) events verifies the deterministic lemma's hypotheses and proves \eqref{eq:when2-diet-row-rigidity}--\eqref{eq:when2-diet-aligned-laws}.

For \eqref{eq:when2-quant-aligned-energies}, it suffices to treat \(\lambda=0,1\), since the quadratic forms and their asserted centers are affine in \(\lambda\).
The Gaussian difference \(\mu_i-\mu_j\) has covariance \(2\Gamma_d/d\), giving mean quadratic form \(2d^{-1}\operatorname{tr}(\Gamma_d\Sigma_\lambda)=2A_d(\lambda)\).
The same quadratic-form bound applies at both endpoints.
For the sign contrasts, \((T_i[a]-T_j[a])^2\) are independent across coordinates, lie in \(\{0,4\}\), and have mean two.
Their bounded weighted sums therefore concentrate about \(2\operatorname{tr}\Sigma_\lambda=2d\) with failure at most \(2e^{-cd\varepsilon^2}\) at each endpoint.
Adding these events proves \eqref{eq:when2-quant-aligned-probability} and \eqref{eq:when2-quant-aligned-energies}.
\end{proof}

\begin{proposition}[An exact Row--Max transition under aligned test anisotropy]
\label{prop:when2-transition}
Suppose
\begin{equation}
\bar\gamma_{1/2,d}\to\bar\gamma_{1/2},\qquad \bar\gamma_{2,d}\to\bar\gamma_2,
\label{eq:when2-diet-profile-limits}
\end{equation}
and set
\begin{equation}
h:=g_0\bar\gamma_{1/2},\qquad h^2\bar\gamma_2>1.
\label{eq:when2-diet-h}
\end{equation}
For
\begin{equation}
\Sigma_\lambda:=(1-\lambda)I_d+\lambda\Gamma_d,\qquad \lambda\in[0,1],
\label{eq:when2-diet-sigma-t}
\end{equation}
there are constants \(\varepsilon_0,C>0\), depending only on the spectral bounds, such that, whenever \(0<\varepsilon\le\varepsilon_0\) and \(k\ge C\varepsilon^{-2}\), the event \(\mathcal A_d(\varepsilon)\) in Lemma~\ref{lem:when2-diet-aligned-geometry} satisfies
\begin{align}
\sup_{\lambda\in[0,1]}\sup_{W\in\mathcal M_{\mathrm{row}}(M)}\left|E_d(W;\Sigma_\lambda)-\frac1{2A_d(\lambda)}\right|&\le C\varepsilon^{1/4},\nonumber\\
\sup_{\lambda\in[0,1]}\sup_{W\in\mathcal M_{\mathrm{max}}(M)}\left|E_d(W;\Sigma_\lambda)-\frac{h_d^2}{2}\right|&\le C\varepsilon^{1/16}.
\label{eq:when2-diet-transition-coefficients}
\end{align}
The limiting coefficients cross uniquely at
\begin{equation}
\lambda_c:=\frac{h^{-2}-1}{\bar\gamma_2-1}\in(0,1).
\label{eq:when2-diet-tc}
\end{equation}
For every fixed \(\lambda\ne\lambda_c\), there exist \(c_\lambda,C_\lambda>0\), independent of \(d\), such that, for every sufficiently large \(d\),
\begin{align}
\lambda<\lambda_c\quad&\Longrightarrow\quad\mathbb P\!\left\{\inf_{W\in\mathcal M_{\mathrm{row}}}\operatorname{Acc}_{\pi,d}(W)>\sup_{W\in\mathcal M_{\mathrm{max}}}\operatorname{Acc}_{\pi,d}(W)\right\}\ge1-C_\lambda e^{-c_\lambda d},\label{eq:when2-diet-row-win}\\
\lambda>\lambda_c\quad&\Longrightarrow\quad\mathbb P\!\left\{\inf_{W\in\mathcal M_{\mathrm{max}}}\operatorname{Acc}_{\pi,d}(W)>\sup_{W\in\mathcal M_{\mathrm{row}}}\operatorname{Acc}_{\pi,d}(W)\right\}\ge1-C_\lambda e^{-c_\lambda d}.
\label{eq:when2-diet-max-win}
\end{align}
At \(\lambda=\lambda_c\), only equality of the limiting coefficients is asserted.
\end{proposition}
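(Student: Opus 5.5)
The proof splits into three steps. First, work on the event \(\mathcal A_d(\varepsilon)\) of Lemma~\ref{lem:when2-diet-aligned-geometry} and compute the pairwise coefficients \(L_{i\to j}\) for every member of each face. Second, compare the limiting coefficients to locate \(\lambda_c\). Third, convert a fixed coefficient gap into an accuracy ordering with Lemma~\ref{lem:when2-diet-error-transfer}.

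\emph{Row face.} For \(W\in\mathcal M_{\mathrm{row}}(M)\), \eqref{eq:when2-diet-row-rigidity} gives \(\|(w_i-w_j)-(\mu_i-\mu_j)\|_2\le C\varepsilon^{1/4}\) uniformly. Both the numerator \(((w_i-w_j)^\top\mu_i)^2\) and the denominator \((w_i-w_j)^\top\Sigma_\lambda(w_i-w_j)\) are Lipschitz in \(w_i-w_j\) on bounded sets, because \(\|\Sigma_\lambda\|_{\mathrm{op}}\le C_\Gamma\). So they can be replaced by their values at \(\mu_i-\mu_j\), with error \(C\varepsilon^{1/4}\). By \eqref{eq:when2-diet-aligned-laws}, \((\mu_i-\mu_j)^\top\mu_i=1+O(\varepsilon)\). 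By \eqref{eq:when2-quant-aligned-energies}, the denominator is \(2A_d(\lambda)+O(\varepsilon)\), uniformly in \(\lambda\). Since \(A_d(\lambda)\ge\min\{1,\bar\gamma_{2,d}\}\ge c>0\), the denominator stays bounded below, and the ratio is \(1/(2A_d(\lambda))+O(\varepsilon^{1/4})\). Every pair has this value, so it is also the minimum \(E_d\).

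\emph{Max face.} Rescale \(W\) to \(H=W/t_{\max}\); this leaves \(L_{i\to j}\) unchanged. Estimate \eqref{eq:when2-diet-max-rigidity} gives \(\|(H_i-H_j)-(T_i-T_j)\|_2\le C\varepsilon^{1/16}\sqrt d\). Both the numerator and the denominator are homogeneous of degree two, so divide both by \(d\). By \eqref{eq:when2-diet-aligned-laws}, \((T_i-T_j)^\top\mu_i/\sqrt d=h_d+O(\varepsilon)\). By \eqref{eq:when2-quant-aligned-energies}, the normalized denominator is \(2+O(\varepsilon)\). With the same Lipschitz argument this gives \(L_{i\to j}=h_d^2/2+O(\varepsilon^{1/16})\), uniformly over pairs, over \(\lambda\), and over the face. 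Positivity of the directed mean margins follows because both limits are bounded away from zero.

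\emph{Crossing point.} The limiting Row coefficient is \(f(\lambda)=1/(2(1+\lambda(\bar\gamma_2-1)))\) and the limiting Max coefficient is \(h^2/2\). First check \(\bar\gamma_2\ge1\) (Jensen, since \(\bar\gamma_1=1\)) and \(h<1\) (since \(\bar\gamma_{1/2}\le1\) and \(g_0<1\)). The assumption \(h^2\bar\gamma_2>1\) then forces \(\bar\gamma_2>1\), so \(f\) is strictly decreasing from \(1/2>h^2/2\) down to \(1/(2\bar\gamma_2)<h^2/2\). Hence there is a unique crossing, at \(\lambda_c=(h^{-2}-1)/(\bar\gamma_2-1)\in(0,1)\).

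\emph{Accuracy ordering.} Fix \(\lambda\ne\lambda_c\) and set the gap \(g:=|f(\lambda)-h^2/2|>0\). Use the convergence of \(h_d\) and \(A_d(\lambda)\) for large \(d\), and pick \(\varepsilon\) small enough, depending only on \(g\) and the spectral bounds, that \(C\varepsilon^{1/16}\) plus the drift of \(h_d\) and \(A_d\) is below \(g/4\). Then one face has \(E_d\ge A\), the other has \(E_d\le B\), and \(A-B\ge g/2\). Lemma~\ref{lem:when2-diet-error-transfer} gives an error ratio at most \((k-1)\pi_{\min,d}^{-1}e^{-dg/(4r_{\mathrm{te},0}^2)}\to0\), by \eqref{eq:when2-diet-prior} and \(k=\Theta(d)\). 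This yields strict uniform ordering in the appropriate direction.

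\emph{Probability bound.} With \(\varepsilon\) fixed, \(k\ge C\varepsilon^{-2}\) holds eventually, and \eqref{eq:when2-quant-aligned-probability} gives failure probability at most \(Ck^2e^{-cd\varepsilon^2}\le C_\lambda e^{-c_\lambda d}\), after absorbing the polynomial factor \(k^2\).

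The main care point is keeping the fixed \(\varepsilon\) independent of \(d\). All errors are uniform over the whole faces, so no directional uniqueness is needed, and the exponential rate survives.
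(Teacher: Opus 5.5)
Your proposal is correct and follows essentially the same route as the paper: it compares both complete faces on \(\mathcal A_d(\varepsilon)\) using the row and sign rigidity estimates and the aligned signal and energy laws, and locates \(\lambda_c\) from \(h<1\) and \(h^2\bar\gamma_2>1\). It then fixes \(\varepsilon\) against the limiting gap, applies Lemma~\ref{lem:when2-diet-error-transfer}, and absorbs \(k^2\) into the exponential bound, exactly as in the paper.
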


\begin{proof}
The spectral bounds give \(c_\Gamma I_d\preceq\Sigma_\lambda\preceq C_\Gamma I_d\) for every \(\lambda\in[0,1]\), since trace normalization implies \(c_\Gamma\le1\le C_\Gamma\).
For Row, \eqref{eq:when2-diet-row-rigidity} allows us to replace \(w_i-w_j\) by \(\mu_i-\mu_j\) with Euclidean error \(C\varepsilon^{1/4}\).
The latter vector has directed signal \(1+O(\varepsilon)\) and test-noise quadratic form \(2A_d(\lambda)+O(\varepsilon)\), uniformly over pairs and \(\lambda\).
The denominators are bounded away from zero, proving the first estimate in \eqref{eq:when2-diet-transition-coefficients}.
For Max, scale invariance allows us to use the contrast \((H_i-H_j)/\sqrt d\), where \(H=W/t_{\max}\).
By \eqref{eq:when2-diet-max-rigidity}, this contrast differs from \((T_i-T_j)/\sqrt d\) by at most \(C\varepsilon^{1/16}\).
The canonical sign contrast has directed signal \(h_d+O(\varepsilon)\) and test-noise quadratic form \(2+O(\varepsilon)\), proving the second estimate.
These arguments also prove the same estimates for every individual \(L_{i\to j}\).

Since \(\bar\gamma_{1/2}\le1\), we have \(0<h<1\); the condition \(h^2\bar\gamma_2>1\) implies \(\bar\gamma_2>1\) and \(0<\lambda_c<1\).
Fix \(\lambda\ne\lambda_c\), and let
\[
\Delta_\lambda:=\left|\frac1{2[1+\lambda(\bar\gamma_2-1)]}-\frac{h^2}{2}\right|>0.
\]
Choose a fixed \(\varepsilon>0\) small enough that the sum of the two errors in \eqref{eq:when2-diet-transition-coefficients} is at most \(\Delta_\lambda/4\).
By \eqref{eq:when2-diet-profile-limits}, the finite-dimensional coefficient gap has the asserted sign and magnitude at least \(3\Delta_\lambda/4\) for all sufficiently large \(d\).
Thus the complete-face gap is at least \(\Delta_\lambda/2\) on \(\mathcal A_d(\varepsilon)\).
Lemma~\ref{lem:when2-diet-error-transfer} proves the corresponding strict accuracy ordering.
Finally, \(k=\Theta(d)\) allows the factor \(k^2\) in \eqref{eq:when2-quant-aligned-probability} to be absorbed into \(C_\lambda e^{-c_\lambda d}\).
No convergence rate for either spectral summary is required; their convergence affects only the sufficiently-large-d threshold.
\end{proof}

The Haar-rotation statements below hold equally for Haar measure on \(O(d)\) and on \(\mathrm{SO}(d)\). Indeed, for the fixed orthonormal frames used below, both measures induce the same uniform Stiefel distribution, and, for sufficiently large \(d\), conditional on a fixed two-frame, the unexposed direction is uniform on the orthogonal sphere in either case.

\begin{lemma}[Quantitative sign statistics under a Haar rotation]
\label{lem:when2-quant-haar-sign}
Let deterministic vectors \(x_1,\ldots,x_k\in\mathbb R^d\) satisfy \(1/2\le\|x_i\|_2\le2\), let \(R\) be Haar on \(O(d)\) or \(\mathrm{SO}(d)\), and put \(T_i=\operatorname{sign}(Rx_i)\).
There are universal \(\varepsilon_0,c,C>0\) such that, for \(0<\varepsilon\le\varepsilon_0\) and any two prescribed thresholds \(s_1,s_2\in(0,1]\), an event of probability at least \(1-Ck^2e^{-cd\varepsilon^6}\) satisfies
\begin{align}
\max_{i,j}\left|\frac{T_i^\top Rx_j}{\sqrt d}-g_0\frac{x_i^\top x_j}{\|x_i\|_2}\right|&\le C\varepsilon,\nonumber\\
\max_{i,j}\left|\frac{T_i^\top T_j}{d}-\frac2\pi\arcsin\!\left(\frac{x_i^\top x_j}{\|x_i\|_2\|x_j\|_2}\right)\right|&\le C\varepsilon,\nonumber\\
\max_i\frac1d\#\{a:\sqrt d\,|(Rx_i)[a]|\le s_\ell\}&\le Cs_\ell+C\varepsilon^2,\qquad \ell=1,2.
\label{eq:when2-quant-haar-statistics}
\end{align}
\end{lemma}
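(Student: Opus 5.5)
The plan is to reduce each statistic to a Lipschitz (or smoothed-Lipschitz) function on the orthogonal group, center it at its Haar expectation, and finish with a union bound over the $O(k^2)$ indices.

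\emph{Expectations.} For fixed $i,j$, the vector $Rx_i/\|x_i\|_2$ is uniform on $\mathbb S^{d-1}$. Decompose $x_j = (x_i^\top x_j/\|x_i\|_2^2)\,x_i + x_j^\perp$. For uniform $q$, the exact identity $\mathbb E[\operatorname{sign}(q^\top e)\,q^\top v]=\mathbb E|q_1|\,e^\top v$ together with $\sqrt d\,\mathbb E|q_1|=g_0+O(1/d)$ gives the expectation of the first statistic. The second uses the orthant (Sheppard) identity $\mathbb E[\operatorname{sign}(q^\top x)\operatorname{sign}(q^\top y)]=\frac2\pi\arcsin(\cos\angle(x,y))$, which holds exactly for spherical $q$. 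For the third, each coordinate $\sqrt d\,(Rx_i)[a]$ is approximately $\mathcal N(0,\|x_i\|_2^2)$ with bounded density, since $\|x_i\|_2\ge1/2$. Hence the expected fraction of coordinates with $\sqrt d\,|(Rx_i)[a]|\le s$ is at most $Cs+O(1/d)$.

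\emph{Concentration.} The obstacle is that $\operatorname{sign}$ is discontinuous, so the statistics are not Lipschitz in $R$. Following the device already used in Lemma~\ref{lem:app-aniso-clean-quantified-mean}, replace $\operatorname{sign}(t)$ by the piecewise-linear $\phi_\delta(t)=\max\{-1,\min\{1,t\sqrt d/\delta\}\}$ at scale $\delta=\varepsilon^2$. The smoothed statistics are Lipschitz in $R$ with constant $O(\sqrt d/\delta)$ in Frobenius norm: for example, $\frac1d\sum_a\phi_\delta((Rx_i)[a])\phi_\delta((Rx_j)[a])$ has gradient bounded by $\frac{\sqrt d}{d\delta}\|x\|_2$. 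Gaussian-type concentration on $O(d)$ and $\mathrm{SO}(d)$ (Gromov--Milman, with constant $cd$) then gives deviation probability $\exp(-cd(\varepsilon\delta)^2)=\exp(-cd\varepsilon^6)$.

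\emph{Smoothing error.} The error from smoothing is controlled by the fraction of coordinates with $\sqrt d\,|(Rx_i)[a]|\le\delta$. This fraction is itself a small-ball count, bounded by $C\delta+C\varepsilon^2=O(\varepsilon^2)$ via the third item applied at threshold $\delta$. Each such coordinate perturbs the first statistic by at most $\sqrt d\,|(Rx_j)[a]|/\sqrt d$ on average. By Cauchy--Schwarz against $\|Rx_j\|_2\le2$, this error is $O(\varepsilon)$, and for the second statistic it is $O(\varepsilon^2)$.

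\emph{Small-ball count.} The count is handled by sandwiching the indicator $\mathbf 1\{|t|\le s\}$ above by the Lipschitz bump $\min\{1,(2s-|t|)_+/s\}$. That bump's mean is at most $C s$, and concentration at slope $1/s\ge1$ with deviation $\varepsilon^2$ gives failure probability $e^{-cd\varepsilon^4 s^2}$. This degrades for small $s$, so I would instead take the bump at width $\max\{s,\varepsilon^2\}$, whose mean is at most $C(s+\varepsilon^2)$ and whose failure probability is at most $e^{-cd\varepsilon^8}$. If this exponent is too weak, the fallback is to write $Rx_i\overset d=\|x_i\|_2\,g/\|g\|_2$ with $g$ Gaussian for each fixed $i$, and apply Hoeffding to the independent coordinates, after norm concentration of $\|g\|_2$. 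That gives the rate $e^{-cd\varepsilon^4}$ directly, since only single-vector marginals are needed for the third item.

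\emph{Conclusion.} A union bound over the $k^2$ pairs, the two thresholds, and the auxiliary $\delta$-threshold yields the stated event with probability at least $1-Ck^2e^{-cd\varepsilon^6}$. The main difficulty is making the smoothing and concentration budgets compatible with the claimed $\varepsilon^6$ exponent, as described in the concentration and smoothing-error steps.
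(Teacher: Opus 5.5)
Your proposal is correct and takes a genuinely different route from the paper. The paper never smooths the sign function and never uses concentration on the group. For each pair it takes an orthonormal basis \(F\) of \(\operatorname{span}\{x_i,x_j\}\) and couples the exposed frame with a Gaussian one via \(\sqrt d\,RF\overset{\mathrm d}{=}Z(Z^\top Z/d)^{-1/2}\). It pays \(e^{-cd\varepsilon^6}\) to ensure \(\|Z^\top Z/d-I\|_{\mathrm{op}}\le\varepsilon^3\), which makes the normalized \(\ell_2\) coupling error at most \(C\varepsilon^3\). A small-ball count at scale \(b=\varepsilon^2\), combined with a Markov-type bound, then shows that at most a \(C\varepsilon^2\) fraction of signs change. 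All three statistics are finished by Hoeffding/Bernstein over the i.i.d.\ Gaussian rows.

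In your route the same \(\varepsilon^6\) arises differently: from an \(O(1/\delta)\) Lipschitz constant at deviation \(\varepsilon\), with \(\delta=\varepsilon^2\). The paper's coupling needs only scalar Gaussian tools, and it treats \(O(d)\) and \(\mathrm{SO}(d)\) identically because only the Stiefel law of \(RF\) enters. Your route gives exact expectations on the sphere (the Sheppard identity and the \(\mathbb E|q_1|\) identity) and avoids the matrix-square-root perturbation. In exchange it needs Gromov--Milman on \(\mathrm{SO}(d)\). For Haar measure on \(O(d)\) you must also condition on the component, since a Lipschitz function on the disconnected \(O(d)\) need not concentrate. That step is legitimate here: each statistic depends on \(R\) only through \(RF\), and for \(d\ge3\) the two conditional laws of \(RF\) coincide, so the conditional means agree.

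Two points need tightening.

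First, the smoothed statistics are \(O(1/\delta)\)-Lipschitz in Frobenius norm. For instance, the sign-correlation statistic has constant at most \((\|x_i\|_2+\|x_j\|_2)/\delta\). Your stated constant \(O(\sqrt d/\delta)\) is too large, and your gradient bound \(\frac{\sqrt d}{d\delta}\|x\|_2\) is missing a factor \(\sqrt d\). The exponent \(e^{-cd\varepsilon^2\delta^2}\) you write down is the one matching the correct constant, so the conclusion stands.

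Second, the Gaussian-representation fallback for the small-ball item is not optional. The bump argument gives exponent \(d\varepsilon^4\max\{s,\varepsilon^2\}^2\), which falls short of \(d\varepsilon^6\) for small \(s\). In particular, at the auxiliary threshold \(\delta=\varepsilon^2\) that your smoothing step relies on, it gives only \(e^{-cd\varepsilon^8}\). By contrast, \(Rx_i\overset{\mathrm d}{=}\|x_i\|_2\,g/\|g\|_2\) together with Hoeffding gives \(e^{-cd\varepsilon^4}\). This single-vector representation is exactly the one-frame case of the paper's coupling, so your final argument uses a piece of the paper's device after all.
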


\begin{proof}
Fix a pair and choose an orthonormal basis \(F\in\mathbb R^{d\times m}\) of its span, where \(m\le2\).
Write \(x_i=Fu_i\), \(x_j=Fu_j\), and let \(Z\in\mathbb R^{d\times m}\) have independent standard Gaussian entries.
Then
\[
\sqrt d\,RF\ \overset{\mathrm d}=\ Y:=ZB,\qquad B:=(Z^\top Z/d)^{-1/2}.
\]
Indeed, \(Y/\sqrt d\) has orthonormal columns and its distribution is invariant under every left orthogonal transformation.
The Gaussian quadratic-form bound gives
\[
\mathbb P\!\left(\|Z^\top Z/d-I_m\|_{\mathrm{op}}>\varepsilon^3\right)\le Ce^{-cd\varepsilon^6}.
\]
On the complementary event, \(\|B-I_m\|_{\mathrm{op}}\le C\varepsilon^3\), and hence
\[
\frac1d\|(Y-Z)u\|_2^2\le C\varepsilon^6
\]
for either coefficient vector \(u\).
Set \(X=Zu\), \(\widetilde X=Yu\), and \(b=\varepsilon^2\).
Gaussian small-ball probabilities and Hoeffding's inequality imply \(d^{-1}\#\{a:|X_a|\le b\}\le C\varepsilon^2\), except with probability \(Ce^{-cd\varepsilon^4}\).
Consequently,
\[
\frac1d\#\{a:\operatorname{sign}(\widetilde X_a)\ne\operatorname{sign}(X_a)\}\le\frac1d\#\{a:|X_a|\le b\}+\frac{\|\widetilde X-X\|_2^2}{db^2}\le C\varepsilon^2.
\]
Thus the normalized Euclidean distance between the corresponding sign vectors is at most \(C\varepsilon\).
By Cauchy--Schwarz, replacing \(Y\) by \(Z\) changes the empirical averages involving one sign factor and one linear factor by at most \(C\varepsilon\), while the empirical correlations between the corresponding sign vectors change by at most \(C\varepsilon^2\).
For a standard Gaussian row \(z^\top\) of \(Z\), the corresponding expectations are
\[
\mathbb E[\operatorname{sign}(z^\top u)z^\top v]=g_0\frac{u^\top v}{\|u\|_2},\qquad \mathbb E[\operatorname{sign}(z^\top u)\operatorname{sign}(z^\top v)]=\frac2\pi\arcsin\!\left(\frac{u^\top v}{\|u\|_2\|v\|_2}\right).
\]
The first identity follows by conditioning \(z^\top v\) on \(z^\top u\), and the second is the standard arcsine formula for the signs of two jointly Gaussian variables.
Bernstein's and Hoeffding's inequalities concentrate the corresponding empirical averages within \(\varepsilon\) with failure at most \(Ce^{-cd\varepsilon^2}\).
Finally, for either prescribed \(s\),
\[
\frac1d\#\{a:|\widetilde X_a|\le s\}\le\frac1d\#\{a:|X_a|\le s+b\}+\frac{\|\widetilde X-X\|_2^2}{db^2}\le Cs+C\varepsilon^2
\]
except with probability \(Ce^{-cd\varepsilon^4}\).
A union bound over the pairs and the two thresholds proves \eqref{eq:when2-quant-haar-statistics}; independence between different pairs is not needed.
\end{proof}

\begin{lemma}[Generic common rotation restores the Row coefficient advantage]
\label{lem:when2-diet-haar-recovery}
For deterministic means, define
\begin{equation}
\delta_d:=\max_i|\|\mu_i\|_2^2-1|+\max_{i\ne j}|\mu_i^\top\mu_j|.
\label{eq:when2-diet-regular}
\end{equation}
Let \(\Sigma\succ0\) satisfy \(\operatorname{tr}\Sigma=d\) and \(c_\Sigma I_d\preceq\Sigma\preceq C_\Sigma I_d\), and set
\[
\mu_i^\circ=R\mu_i,\qquad \Sigma^\circ=R\Sigma R^\top,\qquad v_{ij}:=\frac{\mu_i-\mu_j}{\|\mu_i-\mu_j\|_2},\qquad a_{ij}:=v_{ij}^\top\Sigma v_{ij},
\]
where \(R\) is an independent Haar matrix.
There are \(\varepsilon_0,c,C>0\), depending only on \(c_\Sigma,C_\Sigma\), such that, whenever
\[
0<\varepsilon\le\varepsilon_0,\qquad \delta_d+d^{-1}+k^{-1}\le\varepsilon^2,
\]
an event of probability at least \(1-Ck^2e^{-cd\varepsilon^6}\), over \(R\) alone, satisfies
\begin{align}
\sup_{W\in\mathcal M_{\mathrm{row}}^\circ}\max_{i\ne j}\left|L_{i\to j}(W;\Sigma^\circ)-\frac1{2a_{ij}}\right|&\le C\varepsilon^{1/4},\nonumber\\
\sup_{W\in\mathcal M_{\mathrm{max}}^\circ}\max_{i\ne j}\left|L_{i\to j}(W;\Sigma^\circ)-\frac1{2a_{ij}+\pi-2}\right|&\le C\varepsilon^{1/16}.
\label{eq:when2-diet-haar-coefficients}
\end{align}
Here the superscript \(\circ\) denotes solution sets and accuracies for the rotated problem.
Consequently, if \(\delta_d\to0\), \(k/d\to\alpha\in(0,\infty)\), and \eqref{eq:when2-diet-prior} holds, there exist \(c_*,C_*>0\), independent of \(d\), such that, for every sufficiently large \(d\),
\begin{equation}
\mathbb P_R\!\left\{\inf_{W\in\mathcal M_{\mathrm{row}}^\circ}\operatorname{Acc}_{\pi,d}^\circ(W)>\sup_{W\in\mathcal M_{\mathrm{max}}^\circ}\operatorname{Acc}_{\pi,d}^\circ(W)\right\}\ge1-C_*e^{-c_*d}.
\label{eq:when2-diet-haar-row-win}
\end{equation}
\end{lemma}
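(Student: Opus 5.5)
Because the Row and Max faces and all the quantities in \eqref{eq:when2-diet-haar-coefficients} are equivariant under a common rotation, the plan is to run the deterministic whole-face stability Lemma~\ref{lem:when2-quant-deterministic-rigidity} on the rotated means \(\mu_i^\circ=R\mu_i\), taking its inputs from Lemma~\ref{lem:when2-quant-haar-sign} applied with \(x_i=\mu_i\). Rotation preserves inner products, so the Euclidean hypotheses in \eqref{eq:when2-quant-rigidity-input} already hold at precision \(\delta_d\le\varepsilon^2\). The quantity \(\|\bar\mu^\circ\|_2=\|\bar\mu\|_2\) is controlled deterministically by \(\|\bar\mu\|_2^2\le k^{-1}(1+\delta_d)+\delta_d\le C\varepsilon^2\).

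The sign inputs come from Lemma~\ref{lem:when2-quant-haar-sign}, which holds except on an event of probability \(Ck^2e^{-cd\varepsilon^6}\). On that event:
\begin{itemize}
\item \(\|\mu_i^\circ\|_1/\sqrt d=T_i^\top R\mu_i/\sqrt d=g_0+O(\varepsilon)\), so \(h=g_0\);
\item \(|T_j^\top\mu_i^\circ|/\sqrt d\le C\varepsilon\), since \(\mu_j^\top\mu_i=O(\varepsilon^2)\);
\item \(|T_i^\top T_j|/d\le C\varepsilon\), by the arcsine formula;
\item the small-ball bound \eqref{eq:when2-quant-small-ball-input} holds at the two thresholds \(s\in\{\varepsilon^{1/2},\varepsilon^{1/8}\}\).
\end{itemize}
Lemma~\ref{lem:when2-quant-deterministic-rigidity}, applied with \(\varepsilon\) replaced by \(C\varepsilon\), then gives two conclusions. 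Row minimizers satisfy \(\max_i\|w_i-\mu_i^\circ\|_2\le C\varepsilon^{1/4}\). Max minimizers satisfy \(\|w_i/t_{\max}-T_i\|_2/\sqrt d\le C\varepsilon^{1/16}\).

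Next I would evaluate \(L_{i\to j}\) on canonical representatives and transfer it to the whole face. Since \(L_{i\to j}\) is scale invariant, and its denominator is bounded below by \(c_\Sigma\|\cdot\|^2\) while \(\|\mu_i-\mu_j\|_2=\sqrt2+O(\varepsilon^2)\), a Euclidean perturbation of size \(\eta\) in the (normalized) contrast changes \(L_{i\to j}\) by \(O(\eta)\).

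For Row, the canonical contrast is \(R(\mu_i-\mu_j)\), which gives
\[
\frac{(\|\mu_i\|^2-\mu_i^\top\mu_j)^2}{\|\mu_i-\mu_j\|^2a_{ij}}=\frac1{2a_{ij}}+O(\varepsilon^2).
\]

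For Max, the canonical contrast is \((T_i-T_j)/\sqrt d\). Its numerator is \((g_0+O(\varepsilon))^2=2/\pi+O(\varepsilon)\). For the denominator I decompose the sign contrast into the component along \(Rv_{ij}\) and an orthogonal remainder:
\[
\frac{T_i-T_j}{\sqrt d}=\beta\,Rv_{ij}+e .
\]
Lemma~\ref{lem:when2-quant-haar-sign} gives \(\beta=(T_i-T_j)^\top R(\mu_i-\mu_j)/(\sqrt d\,\|\mu_i-\mu_j\|)=\sqrt2\,g_0+O(\varepsilon)\), hence \(\beta^2=4/\pi+O(\varepsilon)\). Since \(\|T_i-T_j\|^2/d=2+O(\varepsilon)\), the remainder has \(\|e\|^2=2-4/\pi+O(\varepsilon)\). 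The denominator is therefore
\[
\beta^2a_{ij}+2\beta\, v_{ij}^\top R^\top\Sigma^\circ e+e^\top\Sigma^\circ e .
\]

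\textbf{Main obstacle.} The two cross terms must be shown to equal the isotropic value up to \(O(\varepsilon)\): I need \(e^\top\Sigma^\circ e=\|e\|^2(1+O(\varepsilon))\) (using \(\operatorname{tr}\Sigma=d\)) and \(v_{ij}^\top R^\top\Sigma^\circ e=O(\varepsilon)\). The difficulty is that \(e\) depends on \(R\) through the signs, so I cannot simply treat it as a fixed vector rotated by Haar measure. I would first try to extend the Gaussian coupling inside the proof of Lemma~\ref{lem:when2-quant-haar-sign}. With \(\sqrt dRF\approx Z\), where \(F\) spans \(\mu_i,\mu_j\), and with \(\Sigma^\circ=R\Sigma R^\top\), the form \(\Sigma^\circ\) is isotropic in distribution relative to the signs. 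The cleanest route is to condition on the exposed two-frame \(RF\): given it, the rotated \(\Sigma^\circ\) restricted to the orthogonal complement is a Haar conjugate. By Haar quadratic-form concentration, any vector determined by the two-frame (and \(e\) is such a vector) sees \(\Sigma^\circ\) as close to the identity on that complement, and the cross terms with the frame vanish up to \(O(\varepsilon)\). This is exactly the "unexposed direction is uniform on the orthogonal sphere" remark made before the lemma.

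Granting the obstacle, the denominator becomes \((4/\pi)a_{ij}+2-4/\pi+O(\varepsilon)\). Dividing numerator and denominator by \(2/\pi\) gives \(L_{i\to j}=1/(2a_{ij}+\pi-2)+O(\varepsilon)\), which together with the rigidity errors proves \eqref{eq:when2-diet-haar-coefficients}.

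For \eqref{eq:when2-diet-haar-row-win}, the directed-signal differences are positive, and \(a_{ij}\in[c_\Sigma,C_\Sigma]\). For \(a\in[c_\Sigma,C_\Sigma]\),
\[
\frac1{2a}-\frac1{2a+\pi-2}\ge\frac{\pi-2}{2C_\Sigma(2C_\Sigma+\pi-2)}=:\Delta>0 ,
\]
and this gap is uniform over pairs. So \(E_d\) over the Row face exceeds \(E_d\) over the Max face by at least \(\Delta/2\) once \(\varepsilon\) is a fixed small constant and \(d\) is large (using \(\delta_d\to0\)). Lemma~\ref{lem:when2-diet-error-transfer} then yields the strict accuracy ordering, and the \(k^2\) factor in the failure probability is absorbed into \(C_*e^{-c_*d}\) because \(k=\Theta(d)\).
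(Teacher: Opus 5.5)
Your proposal is correct and follows the paper's proof essentially step for step. The common ingredients are these: the deterministic rigidity lemma is fed by the Haar sign statistics, with \(h=g_0\) and \(\|\bar\mu\|_2^2\le C\varepsilon^2\); the coefficients are computed on the canonical contrasts \(R(\mu_i-\mu_j)\) and \(T_i-T_j\); and your ``main obstacle'' is resolved exactly as in the paper, by conditioning on the exposed frame \(RF\) so that the \(S^\perp\)-part of \(R^\top(T_i-T_j)\), with \(S=\operatorname{span}\{\mu_i,\mu_j\}\), is uniform on a sphere in \(S^\perp\).

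Two small details should be made explicit. First, show that the component of your remainder \(e\) inside the exposed plane \(RS\) (essentially along \(R(\mu_i+\mu_j)\)) is also \(O(\varepsilon)\). This follows from the same sign statistics, and it is what the paper obtains by inverting the \(2\times2\) Gram matrix to get \(\|q-g_0v_{ij}\|_2\le C\varepsilon\). Second, since you invoke the rigidity lemma at precision \(C_0\varepsilon\), prescribe the small-ball thresholds at that rescaled precision, namely \((C_0\varepsilon)^{1/2}\) and \((C_0\varepsilon)^{1/8}\).
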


\begin{proof}
The hypotheses imply \(\|\bar\mu\|_2^2\le(1+\delta_d)/k+\delta_d\le C\varepsilon^2\).
Apply Lemma~\ref{lem:when2-quant-haar-sign} to the means.
The norm and coherence assumptions, together with \eqref{eq:when2-quant-haar-statistics}, give, for \(T_i=\operatorname{sign}(R\mu_i)\),
\begin{equation}
\max_i\left|\frac{\|R\mu_i\|_1}{\sqrt d}-g_0\right|+\max_{i\ne j}\frac{|T_j^\top R\mu_i|}{\sqrt d}+\max_{i\ne j}\frac{|T_i^\top T_j|}{d}\le C\varepsilon.
\label{eq:when2-diet-haar-sign-laws}
\end{equation}
Choose a fixed sufficiently large constant \(C_0\) and apply that lemma with thresholds \((C_0\varepsilon)^{1/2}\) and \((C_0\varepsilon)^{1/8}\).
After decreasing \(\varepsilon_0\), Lemma~\ref{lem:when2-quant-deterministic-rigidity} applies at precision \(C_0\varepsilon\), with \(h=g_0\).
It follows that every Row minimizer is within \(C\varepsilon^{1/4}\) per row of the rotated means, while every rescaled Max minimizer is within \(C\varepsilon^{1/16}\sqrt d\) per row of \(T_i\).

It remains to calculate the test-noise energy of the canonical sign contrast
\[
b_{ij}:=\frac{T_i-T_j}{\|T_i-T_j\|_2}.
\]
Fix a pair, let \(S=\operatorname{span}\{\mu_i,\mu_j\}\), and choose an orthonormal basis \(F\in\mathbb R^{d\times2}\) of \(S\).
The Gram matrix of the pair is uniformly well-conditioned under the assumed bound on \(\delta_d\).
Conditioning only on \(RF\), the vector \(b_{ij}\) is fixed whenever it is defined.
Set \(p=(RF)^\top b_{ij}\), \(q=Fp\), and \(r^2=1-\|p\|_2^2\).
The unexposed action of \(R\) on the orthogonal complement is Haar, so conditionally
\begin{equation}
R^\top b_{ij}=q+r\zeta,
\label{eq:when2-diet-haar-decomposition}
\end{equation}
where \(\zeta\) is uniform on the unit sphere of \(S^\perp\).
On the sign-statistics event, \eqref{eq:when2-quant-haar-statistics} gives
\[
b_{ij}^\top R\mu_i=\frac{g_0}{\sqrt2}+O(\varepsilon),\qquad b_{ij}^\top R\mu_j=-\frac{g_0}{\sqrt2}+O(\varepsilon).
\]
The same two inner products for \(g_0v_{ij}\) are \(g_0/\sqrt2+O(\delta_d)\) and \(-g_0/\sqrt2+O(\delta_d)\).
Inverting the well-conditioned two-by-two Gram matrix therefore yields
\begin{equation}
\|q-g_0v_{ij}\|_2\le C\varepsilon,\qquad |r^2-(1-g_0^2)|\le C\varepsilon.
\label{eq:when2-diet-haar-projection}
\end{equation}
For the conditional complement calculation, let \(P_S\) denote the orthogonal projector onto \(S\).
Representing \(\zeta\) by a normalized standard Gaussian in \(S^\perp\) and applying the bounded-weight quadratic-form and Gaussian linear-form bounds gives
\[
\left|\zeta^\top\Sigma\zeta-\frac{\operatorname{tr}((I_d-P_S)\Sigma)}{d-2}\right|+|q^\top\Sigma\zeta|\le C\varepsilon
\]
with conditional failure at most \(Ce^{-cd\varepsilon^2}\), uniformly in the exposed frame.
The trace ratio is \(1+O(d^{-1})\), since \(\operatorname{tr}\Sigma=d\) and \(\operatorname{rank}P_S=2\).
The conditional failure bound can be integrated and then union-bounded over pairs; we do not condition on the sign vectors of the other classes.
Combining this bound with \eqref{eq:when2-diet-haar-decomposition} and \eqref{eq:when2-diet-haar-projection} gives
\begin{equation}
\max_{i\ne j}\left|b_{ij}^\top\Sigma^\circ b_{ij}-(g_0^2a_{ij}+1-g_0^2)\right|\le C\varepsilon.
\label{eq:when2-diet-haar-energy}
\end{equation}
The canonical Row direction \(Rv_{ij}\) has directed signal \(1/\sqrt2+O(\delta_d)\) and test-noise energy exactly \(a_{ij}\).
The canonical Max direction has directed signal \(g_0/\sqrt2+O(\varepsilon)\), and
\[
\frac{g_0^2/2}{g_0^2a_{ij}+1-g_0^2}=\frac1{2a_{ij}+\pi-2}.
\]
Both canonical contrast norms are bounded away from zero after the Max contrasts are divided by \(\sqrt d\).
Normalization stability, the whole-face bounds already proved, and \(\Sigma^\circ\succeq c_\Sigma I_d\) therefore transfer these coefficients uniformly to the complete solution sets, proving \eqref{eq:when2-diet-haar-coefficients}.
The combined failure is at most \(Ck^2e^{-cd\varepsilon^6}\).

To pass to the minimum directed coefficient, put \(a_*:=\max_{i\ne j}a_{ij}\).
Then \eqref{eq:when2-diet-haar-coefficients} implies
\[
\inf_{W\in\mathcal M_{\mathrm{row}}^\circ}E_d(W;\Sigma^\circ)-\sup_{W\in\mathcal M_{\mathrm{max}}^\circ}E_d(W;\Sigma^\circ)\ge\frac1{2a_*}-\frac1{2a_*+\pi-2}-C\varepsilon^{1/16}.
\]
Since \(a_*\le C_\Sigma\), the difference of the two displayed fractions is at least
\begin{equation}
\Delta_{\mathrm{rot}}:=\frac{\pi-2}{2C_\Sigma(2C_\Sigma+\pi-2)}>0.
\label{eq:when2-quant-rotation-gap}
\end{equation}
Choose a fixed \(\varepsilon>0\) so that \(C\varepsilon^{1/16}\le\Delta_{\mathrm{rot}}/2\).
The regularity assumptions eventually ensure \(\delta_d+d^{-1}+k^{-1}\le\varepsilon^2\).
Lemma~\ref{lem:when2-diet-error-transfer} gives the strict accuracy ordering, and absorbing \(k^2\) into the exponential proves \eqref{eq:when2-diet-haar-row-win}.
\end{proof}

\begin{proposition}[Opposite Row--Max orderings before and after a common Haar rotation]
\label{prop:when2-rotation-flip}
Under the hypotheses of Proposition~\ref{prop:when2-transition}, fix \(\lambda\in(\lambda_c,1]\).
Compare the original problem with means \(\mu_i\) and covariance \(\Sigma_\lambda\) to the problem obtained by the independent common rotation \(\mu_i\mapsto R\mu_i\), \(\Sigma_\lambda\mapsto R\Sigma_\lambda R^\top\).
There exist \(c,C>0\), independent of \(d\), such that, for every sufficiently large \(d\), with joint probability at least \(1-Ce^{-cd}\),
\begin{align}
\inf_{W\in\mathcal M_{\mathrm{max}}}\operatorname{Acc}_{\pi,d}(W)&>\sup_{W\in\mathcal M_{\mathrm{row}}}\operatorname{Acc}_{\pi,d}(W),\nonumber\\
\inf_{W\in\mathcal M_{\mathrm{row}}^\circ}\operatorname{Acc}_{\pi,d}^\circ(W)&>\sup_{W\in\mathcal M_{\mathrm{max}}^\circ}\operatorname{Acc}_{\pi,d}^\circ(W).
\label{eq:when2-diet-rotation-flip}
\end{align}
The two problems have identical class Gram matrices, class-mean norms and pair distances, mean singular values, test-noise eigenvalues, and mean-contrast test-noise variances.
\end{proposition}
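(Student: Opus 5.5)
The plan is to obtain the two strict orderings from Proposition~\ref{prop:when2-transition} and Lemma~\ref{lem:when2-diet-haar-recovery}, intersect their events, and then check the list of rotation invariants directly. The first inequality in \eqref{eq:when2-diet-rotation-flip} is exactly \eqref{eq:when2-diet-max-win} for the fixed \(\lambda>\lambda_c\). That statement already holds with probability at least \(1-C_\lambda e^{-c_\lambda d}\) over the class means, so nothing new is needed there.

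For the rotated problem, I will apply Lemma~\ref{lem:when2-diet-haar-recovery} conditionally on the class means, with \(\Sigma=\Sigma_\lambda\). Its hypotheses hold uniformly in \(\lambda\in[0,1]\) for the following reasons.
\begin{itemize}
\item \(\operatorname{tr}\Sigma_\lambda=d\), since \(\operatorname{tr}\Gamma_d=d\).
\item \(c_\Gamma I_d\preceq\Sigma_\lambda\preceq C_\Gamma I_d\), because \(c_\Gamma\le1\le C_\Gamma\).
\item The regularity quantity satisfies \(\delta_d\le C\varepsilon\) on the aligned event \(\mathcal A_d(\varepsilon)\) of Lemma~\ref{lem:when2-diet-aligned-geometry}, by \eqref{eq:when2-diet-aligned-laws}.
\end{itemize}
Two points need care. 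First, the lemma is stated with \(\delta_d\to0\) along the sequence. I will instead use its quantitative form \eqref{eq:when2-diet-haar-coefficients} at a fixed small precision. I fix \(\varepsilon\) so that \(C\varepsilon^{1/16}\le\Delta_{\mathrm{rot}}/2\), with \(\Delta_{\mathrm{rot}}\) as in \eqref{eq:when2-quant-rotation-gap} and \(C_\Sigma=C_\Gamma\). I then choose the aligned precision \(\varepsilon'\) with \(C\varepsilon'\le\varepsilon^2/2\), so that \(\delta_d+d^{-1}+k^{-1}\le\varepsilon^2\) for large \(d\). Second, \(R\) is independent of the means. So on \(\mathcal A_d(\varepsilon')\) the conditional failure probability over \(R\) is at most \(Ck^2e^{-cd\varepsilon^6}\). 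Integrating over the means gives a joint bound.

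Lemma~\ref{lem:when2-diet-error-transfer} then turns the coefficient gap \(\Delta_{\mathrm{rot}}/2\) into the second strict inequality. The union of the failure events is at most \(C_\lambda e^{-c_\lambda d}+Ck^2e^{-cd}\). Since \(k=\Theta(d)\), this is \(\le Ce^{-cd}\) after adjusting constants that depend on the fixed \(\lambda\).

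For the invariants, \(R\) orthogonal gives the following.
\begin{itemize}
\item \((R\mu_i)^\top(R\mu_j)=\mu_i^\top\mu_j\), so the Gram matrix, the norms and the pair distances are unchanged.
\item \(RM^\top\) has the same singular values as \(M^\top\).
\item \(R\Sigma_\lambda R^\top\) has the same spectrum as \(\Sigma_\lambda\).
\item \((R(\mu_i-\mu_j))^\top R\Sigma_\lambda R^\top R(\mu_i-\mu_j)=(\mu_i-\mu_j)^\top\Sigma_\lambda(\mu_i-\mu_j)\).
\end{itemize}

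The main obstacle is justifying that Lemma~\ref{lem:when2-diet-haar-recovery} can be used at fixed precision on a random-mean event, rather than under the deterministic hypothesis \(\delta_d\to0\). I expect to handle this by rerunning its final paragraph with \(\varepsilon\) fixed in advance, as described above, and noting that its proof uses \(\delta_d\) only through the bound \(\delta_d\le\varepsilon^2\).
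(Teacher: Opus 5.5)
Your proposal is correct and matches the paper's proof essentially step for step. The first ordering comes from Proposition~\ref{prop:when2-transition}, and the second from applying Lemma~\ref{lem:when2-diet-haar-recovery} conditionally on the means at a fixed precision \(\varepsilon\), integrating the conditional failure bound \(Ck^2e^{-cd\varepsilon^6}\), and intersecting the events, with the invariants following from orthogonality of \(R\). The only cosmetic difference is that you control \(\delta_d\le\varepsilon^2/2\) through the aligned event \(\mathcal A_d(\varepsilon')\) of Lemma~\ref{lem:when2-diet-aligned-geometry}, whereas the paper cites Gaussian quadratic and bilinear concentration directly; both give an exponentially small failure probability.
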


\begin{proof}
Proposition~\ref{prop:when2-transition} gives the first ordering with exponentially small failure.
For the second ordering, choose the fixed precision \(\varepsilon\) in the last paragraph of Lemma~\ref{lem:when2-diet-haar-recovery}.
Gaussian quadratic and bilinear concentration gives
\[
\mathbb P\{\delta_d>\varepsilon^2/2\}\le Ck^2e^{-cd\varepsilon^4}.
\]
For all sufficiently large \(d\), we also have \(d^{-1}+k^{-1}\le\varepsilon^2/2\).
Conditional on any means in this event, Lemma~\ref{lem:when2-diet-haar-recovery} applies with the same constants, since \(c_\Gamma I_d\preceq\Sigma_\lambda\preceq C_\Gamma I_d\).
Its conditional failure is at most \(Ck^2e^{-cd\varepsilon^6}\).
Integrating this uniform conditional bound and intersecting with the first ordering proves \eqref{eq:when2-diet-rotation-flip}.
The invariances follow directly from orthogonality; in particular,
\[
(Rv_{ij})^\top(R\Sigma_\lambda R^\top)(Rv_{ij})=v_{ij}^\top\Sigma_\lambda v_{ij}.
\]
\end{proof}

\begin{proof}[Proof of Proposition~\ref{prop:main-coordinate-reversal}]
Lemma~\ref{lem:app-common-rescaling} identifies the direction sets and accuracies in the main-text normalization with those analyzed here.
Since \(h=g_0\bar\gamma_{1/2}\), the transition point in Proposition~\ref{prop:when2-transition} is
\[
\frac{h^{-2}-1}{\bar\gamma_2-1}=\frac{\pi/(2\bar\gamma_{1/2}^2)-1}{\bar\gamma_2-1}=\lambda_c.
\]
That proposition therefore gives the diagonal complete-face ordering with probability at least \(1-C_\lambda e^{-c_\lambda d}\) for every fixed \(\lambda\ne\lambda_c\) and sufficiently large \(d\).

For every fixed \(\lambda\in[0,1]\), the conditional argument in the proof of Proposition~\ref{prop:when2-rotation-flip} gives the rotated Row-over-Max ordering with probability at least \(1-C_{\mathrm{rot}}e^{-c_{\mathrm{rot}}d}\).
This argument does not require \(\lambda>\lambda_c\).
Its spectral bounds and the positive gap in \eqref{eq:when2-quant-rotation-gap} are uniform in \(\lambda\), so \(c_{\mathrm{rot}},C_{\mathrm{rot}}\) can be chosen independently of \(\lambda\).

At each fixed dimension on these events, Proposition~\ref{cor:app-row-full-batch} and Theorem~\ref{thm:app-ms-full-batch-adam}, together with Lemma~\ref{lem:exact-margin-direction-convergence}, place the normalized RowGDM and Adam trajectories asymptotically in their respective Row and Max maximum-margin direction sets.
Every classifier in these direction sets has nonzero pairwise row differences.
The compactness and continuity argument of Lemma~\ref{lem:app-acc-trajectory-continuity} applies to the present positive-definite Gaussian test covariances as well, since all score-tie hyperplanes have probability zero.
It transfers each uniform complete-face ordering to the stated strict comparison between the trajectory liminf and limsup accuracies, without an additional exceptional event or a directional-convergence assumption.
The same argument applies to RowGD.
\end{proof}

\begin{corollary}[Spectral dispersion implies a transition]
\label{cor:app-coordinate-dispersed-profiles}
Suppose the assumptions of Proposition~\ref{prop:main-coordinate-reversal} concerning the existence of the limiting spectral summaries hold.
Assume additionally that, for some fixed \(0<\underline\gamma<1<\overline\gamma<\infty\),
\[
\underline\gamma\le\gamma_{a,d}\le\overline\gamma\qquad\text{for all }a,d,
\]
and that
\[
v_\Gamma:=\lim_{d\to\infty}\frac1d\sum_{a=1}^d(\gamma_{a,d}-1)^2>\frac{\pi}{2q_*^2}-1,\qquad q_*:=\frac{(\overline\gamma-1)\sqrt{\underline\gamma}+(1-\underline\gamma)\sqrt{\overline\gamma}}{\overline\gamma-\underline\gamma}.
\]
Then \(2\bar\gamma_{1/2}^2\bar\gamma_2/\pi>1\), and hence the transition point in Proposition~\ref{prop:main-coordinate-reversal} lies in \((0,1)\).
In particular, the transition condition does not require a special spectral shape: within a fixed bounded spectral range, the displayed lower bound on the limiting spectral variance is sufficient.
\end{corollary}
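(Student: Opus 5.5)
The plan is to bound the two limiting spectral summaries in terms of the data of the corollary: \(\bar\gamma_2\) exactly through \(v_\Gamma\), and \(\bar\gamma_{1/2}\) from below through \(q_*\). The argument is elementary and uses only the trace normalization \(d^{-1}\sum_a\gamma_{a,d}=1\) from Assumption~\ref{ass:aniso-clean} together with the two-sided spectral range.

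First, for \(\bar\gamma_2\): expanding the square and using the unit mean gives, at every \(d\),
\[
\frac1d\sum_{a=1}^d(\gamma_{a,d}-1)^2=\frac1d\sum_{a=1}^d\gamma_{a,d}^2-1.
\]
Passing to the limit yields \(\bar\gamma_2=1+v_\Gamma\).

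Second, for \(\bar\gamma_{1/2}\): since \(x\mapsto\sqrt x\) is concave, on \([\underline\gamma,\overline\gamma]\) it lies above its chord, so
\[
\sqrt x\ \ge\ \frac{(\overline\gamma-x)\sqrt{\underline\gamma}+(x-\underline\gamma)\sqrt{\overline\gamma}}{\overline\gamma-\underline\gamma},\qquad x\in[\underline\gamma,\overline\gamma].
\]
The right-hand side is affine in \(x\). Applying it with \(x=\gamma_{a,d}\), averaging over \(a\), and using \(d^{-1}\sum_a\gamma_{a,d}=1\) gives \(\bar\gamma_{1/2,d}\ge q_*\) for every \(d\), and hence \(\bar\gamma_{1/2}\ge q_*\). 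The hypothesis \(0<\underline\gamma<1<\overline\gamma\) makes both chord coefficients \(\overline\gamma-1\) and \(1-\underline\gamma\) positive, so \(q_*>0\).

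Third, the two bounds combine. The hypothesis \(v_\Gamma>\pi/(2q_*^2)-1\) is equivalent to \(2q_*^2(1+v_\Gamma)>\pi\). Since \(\bar\gamma_{1/2}\ge q_*>0\), this gives
\[
2\bar\gamma_{1/2}^2\bar\gamma_2\ \ge\ 2q_*^2(1+v_\Gamma)\ >\ \pi,
\]
which is the displayed claim. Proposition~\ref{prop:main-coordinate-reversal} then places \(\lambda_c\) in \((0,1)\). To see this directly, Jensen's inequality gives \(\bar\gamma_{1/2}\le1\), so \(\pi/(2\bar\gamma_{1/2}^2)-1>0\). The inequality just proved gives \(\bar\gamma_2>\pi/(2\bar\gamma_{1/2}^2)\ge\pi/2>1\), and therefore \(\pi/(2\bar\gamma_{1/2}^2)-1<\bar\gamma_2-1\).

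No step is a serious obstacle. The only point needing care is noticing that the chord bound is affine in \(x\), so that the exact unit mean collapses the averaged bound to the single constant \(q_*\).
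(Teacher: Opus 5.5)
Your proposal is correct and follows essentially the same route as the paper: you get \(\bar\gamma_2=1+v_\Gamma\) from the trace normalization and \(\bar\gamma_{1/2}\ge q_*\) by averaging the affine chord bound for \(\sqrt{x}\), then combine the two. Your added direct check that \(\lambda_c\in(0,1)\), via \(\bar\gamma_{1/2}\le 1\) from Jensen, is also correct; the paper simply defers this step to Proposition~\ref{prop:main-coordinate-reversal}.
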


\begin{proof}
Concavity of \(x\mapsto\sqrt{x}\) places its graph above the chord joining \((\underline\gamma,\sqrt{\underline\gamma})\) and \((\overline\gamma,\sqrt{\overline\gamma})\).
Averaging this chord inequality and using \(d^{-1}\sum_a\gamma_{a,d}=1\) gives \(\bar\gamma_{1/2}\ge q_*\).
Trace normalization also gives \(\bar\gamma_2=1+v_\Gamma\), and therefore
\[
\frac{2\bar\gamma_{1/2}^2\bar\gamma_2}{\pi}\ge\frac{2q_*^2(1+v_\Gamma)}{\pi}>1.
\]
\end{proof}

\begin{remark}[Non-vacuity of the dispersion condition]
\label{rem:app-coordinate-dispersion-nonvacuous}
Write \(l:=\underline\gamma\) and \(u:=\overline\gamma\).
The variance permitted by the spectral range is itself bounded.
Indeed, since \(l\le\gamma_{a,d}\le u\), we have \((\gamma_{a,d}-l)(u-\gamma_{a,d})\ge0\), or equivalently \(\gamma_{a,d}^2\le(l+u)\gamma_{a,d}-lu\).
Averaging and using the trace normalization \(d^{-1}\sum_a\gamma_{a,d}=1\) gives \(\bar\gamma_2\le l+u-lu\), and hence \(v_\Gamma=\bar\gamma_2-1\le(1-l)(u-1)\).
This bound is sharp: it is attained asymptotically by a two-point spectral profile supported on \(l\) and \(u\) with respective masses \((u-1)/(u-l)\) and \((1-l)/(u-l)\).

Consequently, the dispersion condition in Corollary~\ref{cor:app-coordinate-dispersed-profiles} is non-vacuous if and only if \((1-l)(u-1)>\pi/(2q_*^2)-1\).
Using \(q_*=(1+\sqrt{lu})/(\sqrt l+\sqrt u)\), this is equivalently
\[
\frac{2(1+\sqrt{lu})^2(l+u-lu)}{\pi(\sqrt l+\sqrt u)^2}>1.
\]
For each fixed \(l\in(0,1)\), the left-hand side is strictly increasing in \(u>1\), so there is a unique critical value \(u_c(l)>1\) above which the sufficient dispersion condition can be satisfied.
\end{remark}

\section{Additional Experiments and Experimental Details}
\label{app:additional-experiments}

\subsection{Synthetic Experiments: Setup and Additional Results}
\label{app:experimental-details}

\paragraph{Setup.}
The main-text experiment in Figure~\ref{fig:experiments} (left) uses \(k=60\) classes and dimension \(d=20\). For each class \(i\), we draw \(\mu_i\sim\mathcal N({\mathbf 0},I_d/d)\), draw the number of training examples independently and uniformly from \(\{1,\ldots,5\}\), and generate \(x_{i,r}^{\mathrm{tr}}=\mu_i+0.2236\,\varepsilon_{i,r}^{\mathrm{tr}}\) and \(x_{i,r}^{\mathrm{te}}=\mu_i+2.2361\,\varepsilon_{i,r}^{\mathrm{te}}\), where all residual vectors are independent \(\mathcal N({\mathbf 0},I_d/d)\). Test-class proportions are drawn independently from \(\operatorname{Dirichlet}(\mathbf 1_k)\), and each of the 30 paired seeds uses \(120{,}000\) evaluation examples. All methods within a seed use the same training and evaluation data.

\paragraph{Optimization and learning-rate selection.}
We train a bias-free multiclass linear classifier from \(W_0={\mathbf 0}\) with cross-entropy for \(20{,}000\) updates in float64 and use \(\eta_t=\eta_0(t+1)^{-1/2}\). For every optimizer and seed, we sweep the common effective-Frobenius coefficients \(c\in\{0.03,0.1,0.3,1,3,10,30\}\), setting \(\eta_0=c/\|\Delta_0\|_{\mathrm F}\), and select \(c\) by maximum final evaluation accuracy, with ties broken by lower training loss and then smaller \(c\). Because the same evaluation sample is used for learning-rate selection and reporting, these results are comparisons under the stated discrete sweep rather than untouched-test estimates; some RowGD and RowGDM runs select the lower endpoint of the grid.

\paragraph{Additional deterministic and stochastic comparisons.}
Figure~\ref{fig:synthetic-core-appendix} complements the six-optimizer deterministic comparison in Figure~\ref{fig:experiments} (left) in two ways. The left panel adds Signum to give the full deterministic comparison among RowGD, RowGDM, SpecGD, Muon, SignGD, Signum, and Adam, while the right panel compares the corresponding random-reshuffling methods RR-RowSGDM, RR-Muon, RR-Signum, and RR-Adam. RowGDM, Muon, and Signum use momentum \(\beta=0.9\); SpecGD and Muon use the exact compact-SVD polar factor; and Adam uses \((\beta_1,\beta_2)=(0.9,0.999)\) without bias correction or a numerical-stability constant. For random reshuffling, we use batch size \(b=1\), draw an independent permutation of the finite training set at each epoch, use momentum \(\beta=0.99\) for RR-RowSGDM, RR-Muon, and RR-Signum, and use \((\beta_1,\beta_2)=(0.99,0.999)\) for RR-Adam. The deterministic panel preserves the Row-over-Spectral and Row-over-coordinate-wise ordering of the main-text experiment, and RR-RowSGDM likewise outperforms all three random-reshuffling comparators on every paired seed. RR-Adam is included as an empirical comparator; our theoretical stochastic Adam comparison is formulated through AdamProxy.

\begin{figure}[t]
\centering
\begin{minipage}[t]{0.49\linewidth}
\centering
\includegraphics[width=\linewidth]{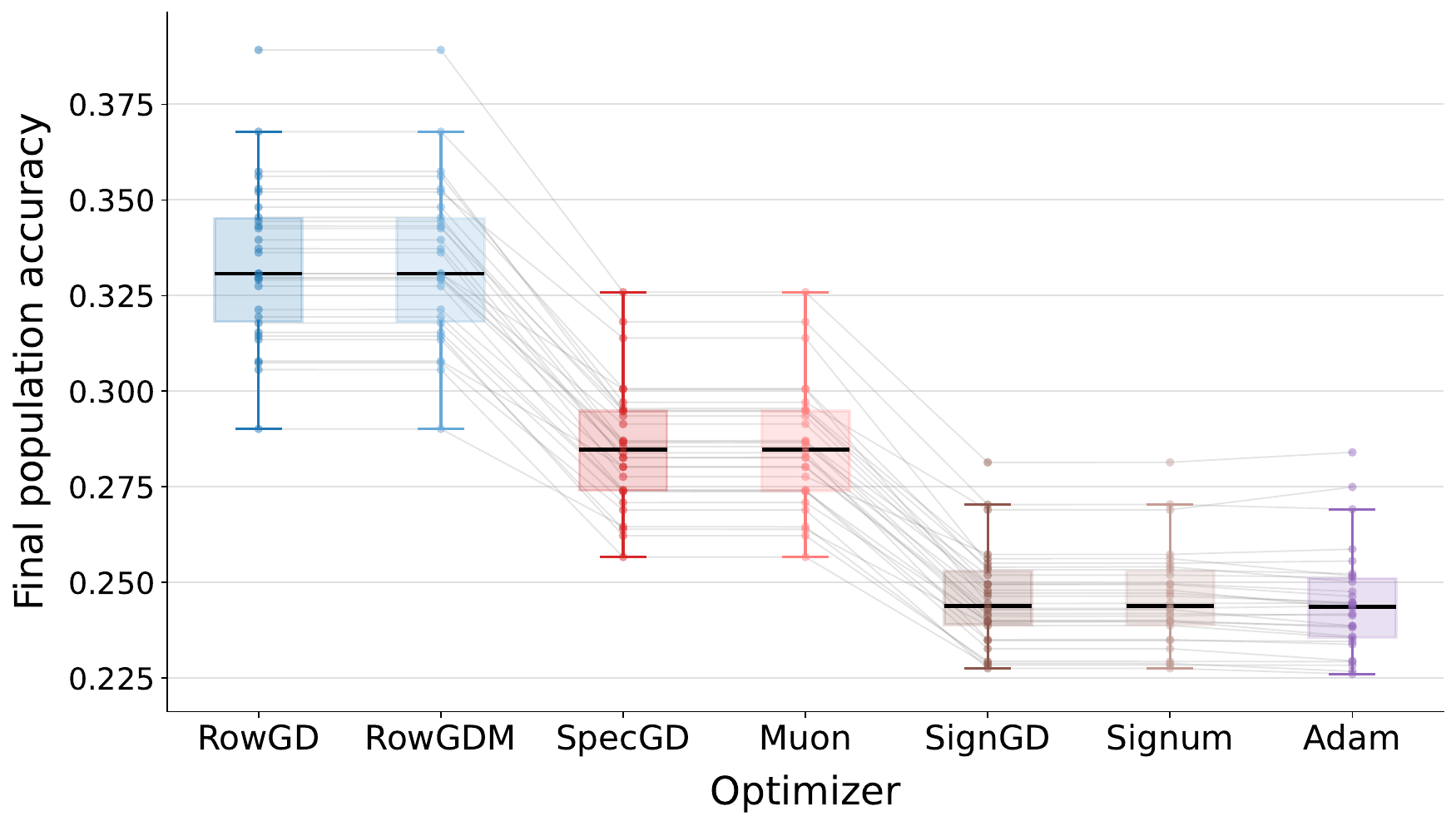}
\end{minipage}
\hfill
\begin{minipage}[t]{0.49\linewidth}
\centering
\includegraphics[width=\linewidth]{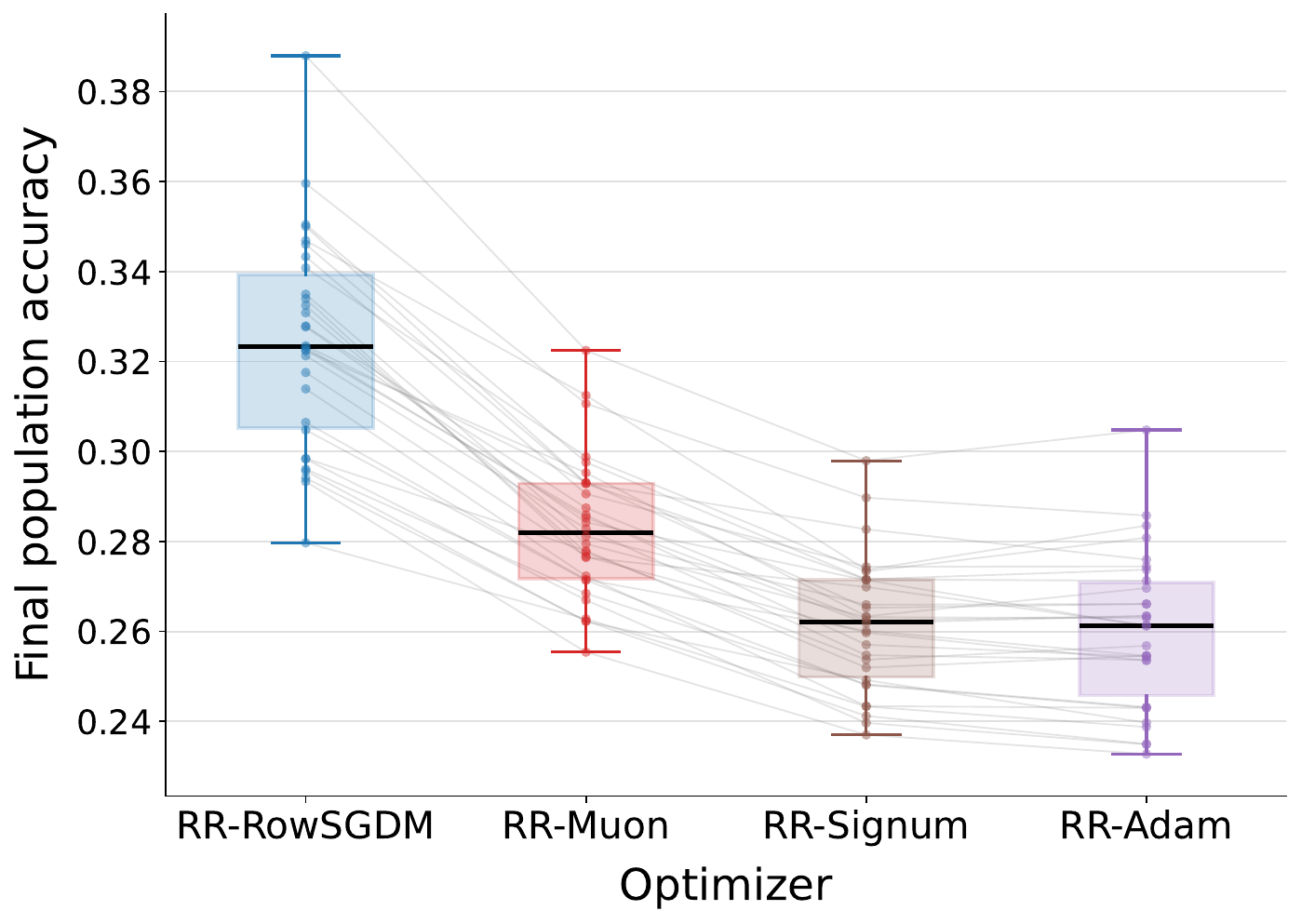}
\end{minipage}
\caption{\textbf{Additional optimizer comparisons on the baseline synthetic model.} \textbf{Left:} The deterministic comparison in Figure~\ref{fig:experiments} (left), augmented with Signum. \textbf{Right:} Random-reshuffling comparison of RR-RowSGDM, RR-Muon, RR-Signum, and RR-Adam with batch size \(b=1\). Points are paired seeds and gray lines connect methods evaluated on the same dataset.}
\label{fig:synthetic-core-appendix}
\end{figure}

\paragraph{Auxiliary geometries.}
Figure~\ref{fig:synthetic-auxiliary} supplements the main-text optimizer comparison with three reference methods that are not part of our theoretical ordering. Frobenius-normalized GD uses \(\Delta_t=G_t/\|G_t\|_{\mathrm F}\), ColumnGD independently normalizes each column of \(G_t\) in Euclidean norm, and ordinary GD uses \(\Delta_t=G_t\). The figure uses \textbf{FrobGD} as an abbreviation for Frobenius-normalized GD. These methods provide additional finite-dimensional context rather than additional theoretical comparators.

\begin{figure}[t]
\centering
\includegraphics[width=0.72\linewidth]{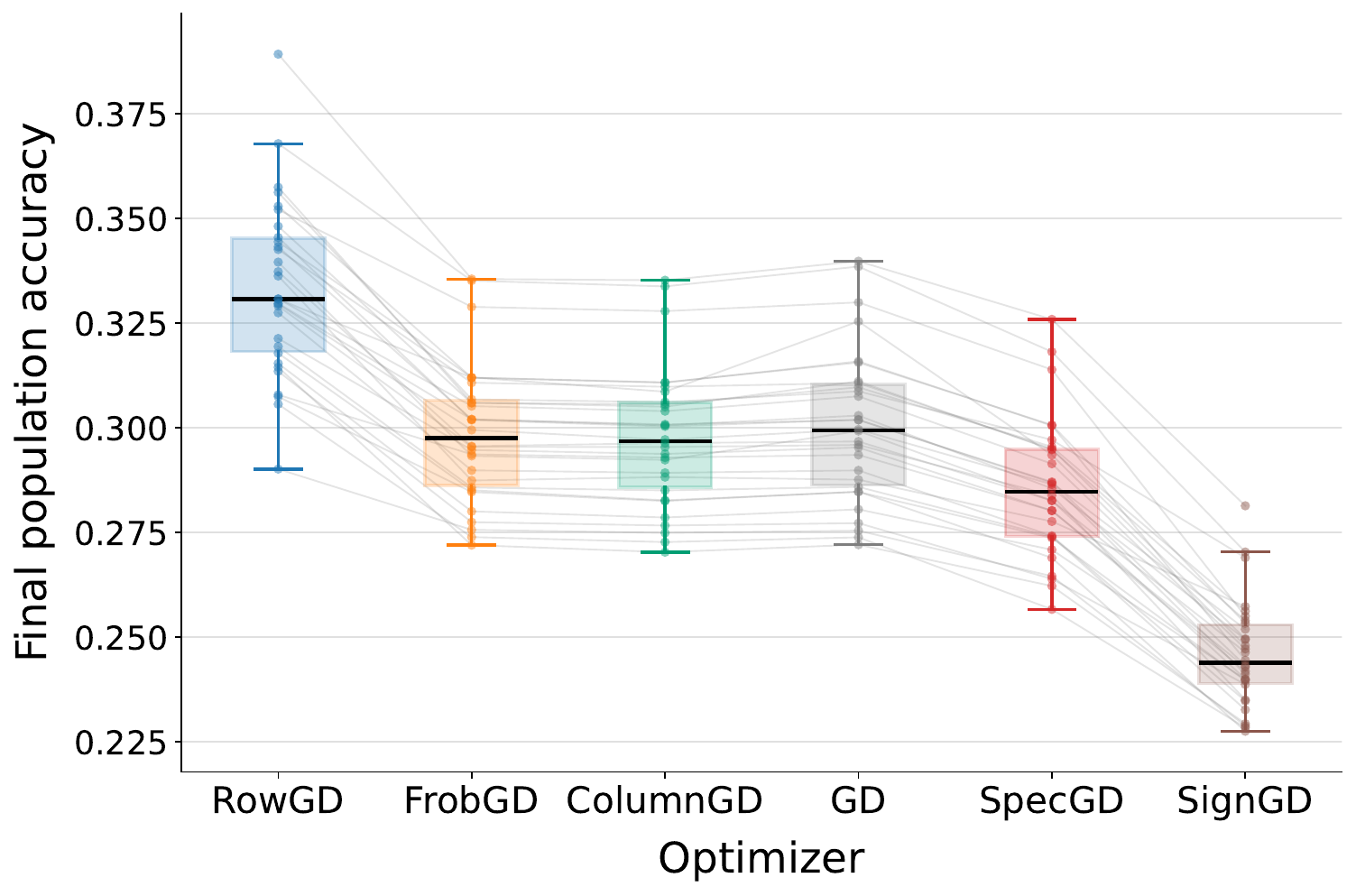}
\caption{\textbf{Auxiliary optimization geometries.} RowGD, FrobGD, ColumnGD, ordinary GD, SpecGD, and SignGD on the same 30 paired baseline datasets, where FrobGD denotes Frobenius-normalized GD.}
\label{fig:synthetic-auxiliary}
\end{figure}

\paragraph{Robustness of the deterministic ordering.}
To test whether the ordering in Figure~\ref{fig:experiments} (left) depends on the particular baseline configuration, we repeat the same six-optimizer comparison under eleven one-factor-at-a-time variations. We vary the training-noise coefficient from \(0.2236\) to \(0.0447\) or \(0.4472\), the test-noise coefficient from \(2.2361\) to \(1.3416\) or \(3.1305\), the test-class proportions from \(\operatorname{Dirichlet}(\mathbf 1_k)\) to balanced classes or \(\operatorname{Dirichlet}(0.3\mathbf 1_k)\), the number of training examples per class to exactly one or five, the problem size to \((k,d)=(30,20)\) or \((120,40)\), and the class-mean distribution from Gaussian to uniform on the unit sphere. In all Gaussian-noise settings, the residual vectors are drawn from \(\mathcal N({\mathbf 0},I_d/d)\); in the \((120,40)\) setting, the training- and test-noise coefficients are \(0.3162\) and \(3.1623\), respectively. Each setting uses 30 paired seeds and the same training and learning-rate protocol as the main-text experiment. Across all settings, RowGD and RowGDM have the two highest mean accuracies; each outperforms SignGD and Adam on every paired seed, and each outperforms SpecGD and Muon on all 30 seeds except in the one- and five-training-example settings, where the Row methods win 29 of 30 comparisons. Figures~\ref{fig:synthetic-robust-noise}--\ref{fig:synthetic-robust-size-means} show that the main-text ordering is stable across these changes.

\begin{figure}[t]
\centering
\begin{minipage}[t]{0.49\linewidth}
\centering
\includegraphics[width=\linewidth]{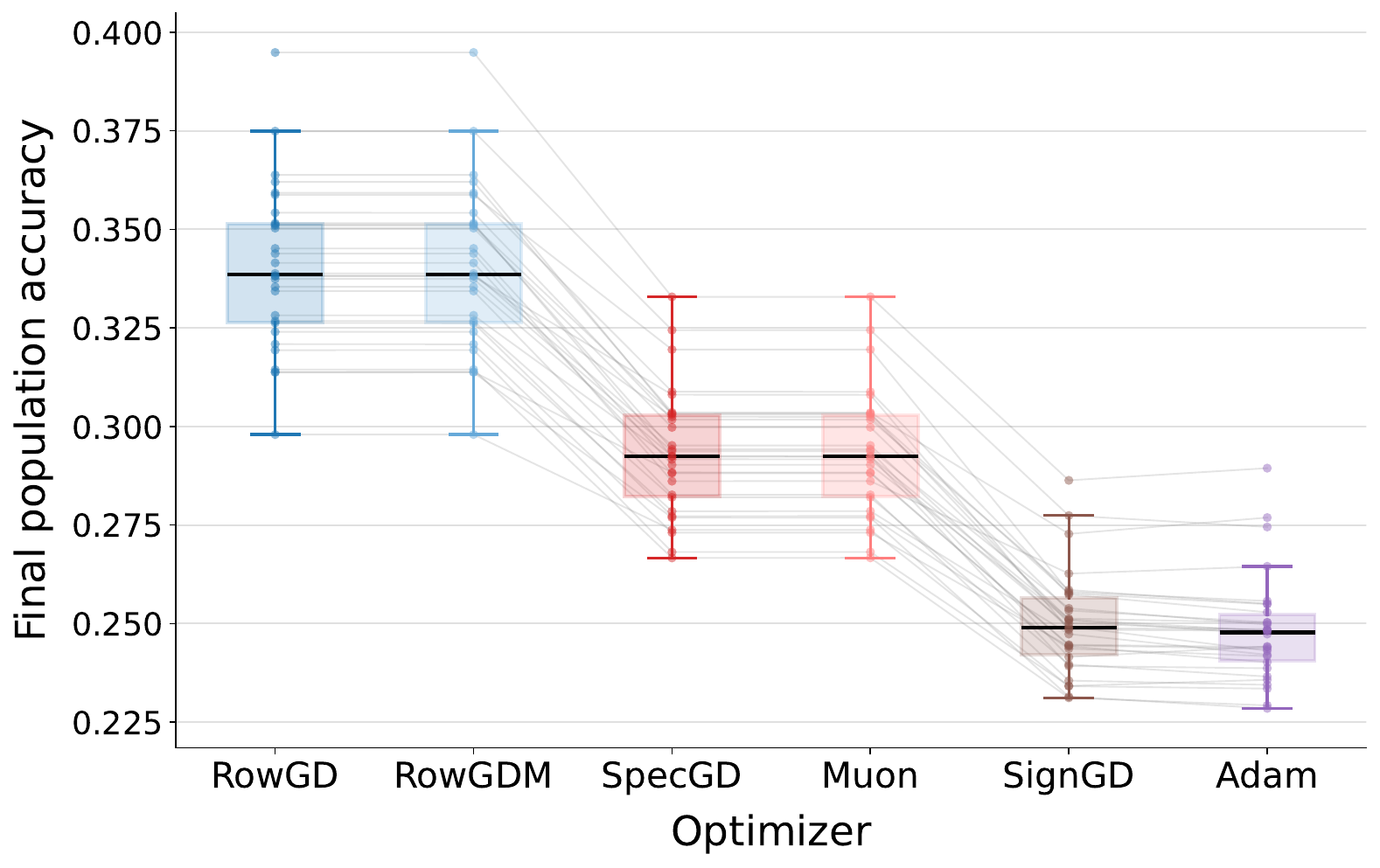}
\end{minipage}
\hfill
\begin{minipage}[t]{0.49\linewidth}
\centering
\includegraphics[width=\linewidth]{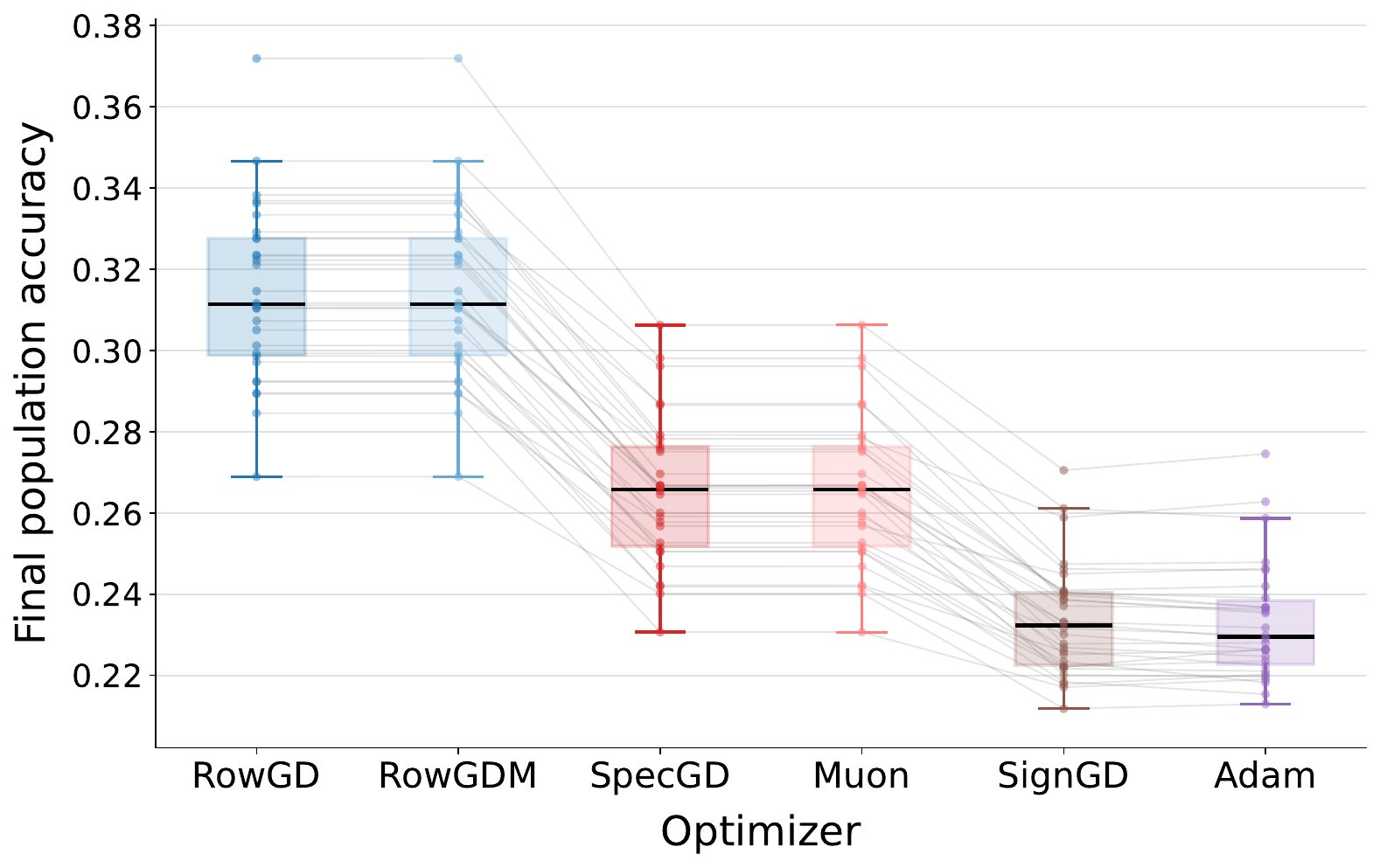}
\end{minipage}

\vspace{2mm}

\begin{minipage}[t]{0.49\linewidth}
\centering
\includegraphics[width=\linewidth]{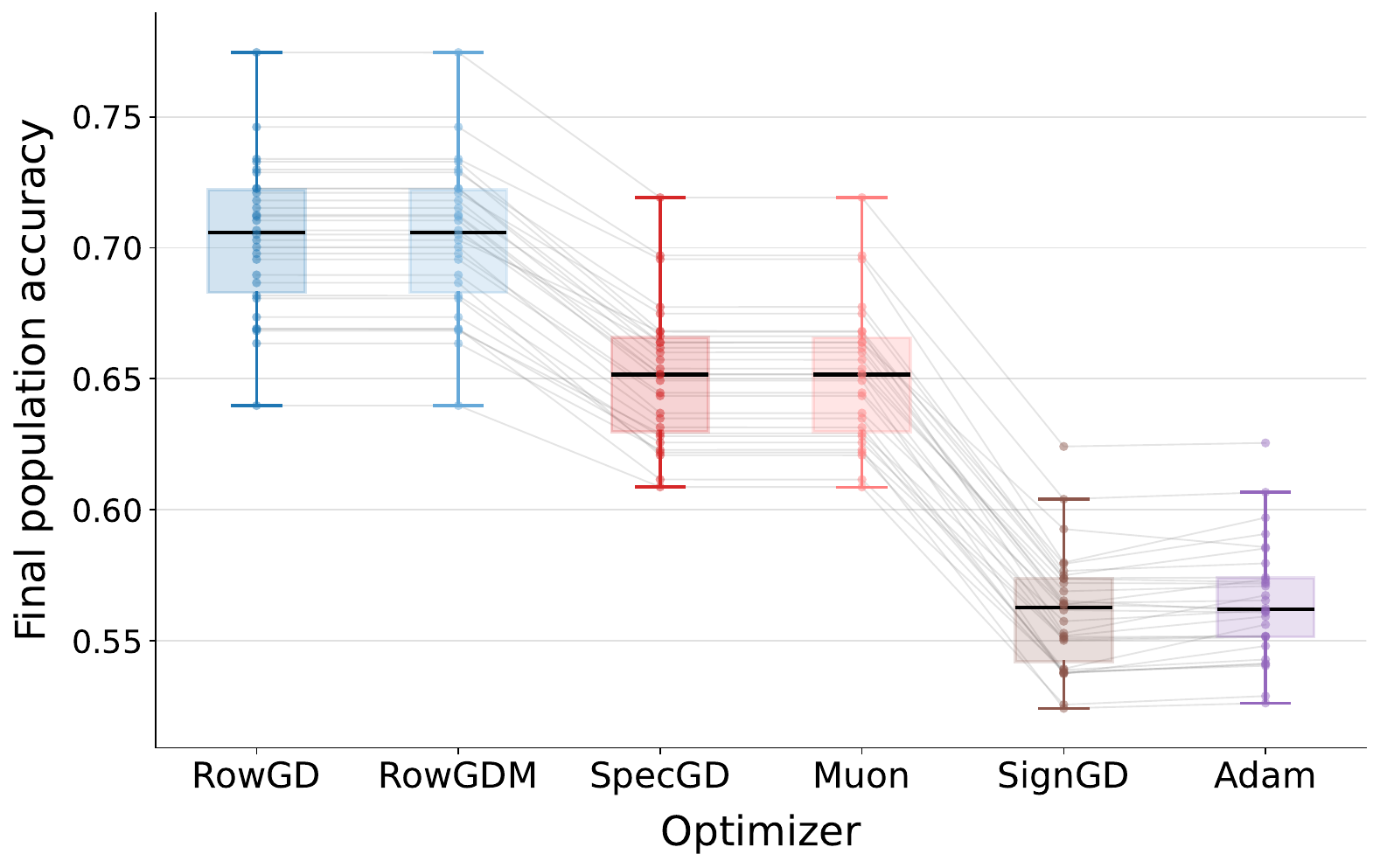}
\end{minipage}
\hfill
\begin{minipage}[t]{0.49\linewidth}
\centering
\includegraphics[width=\linewidth]{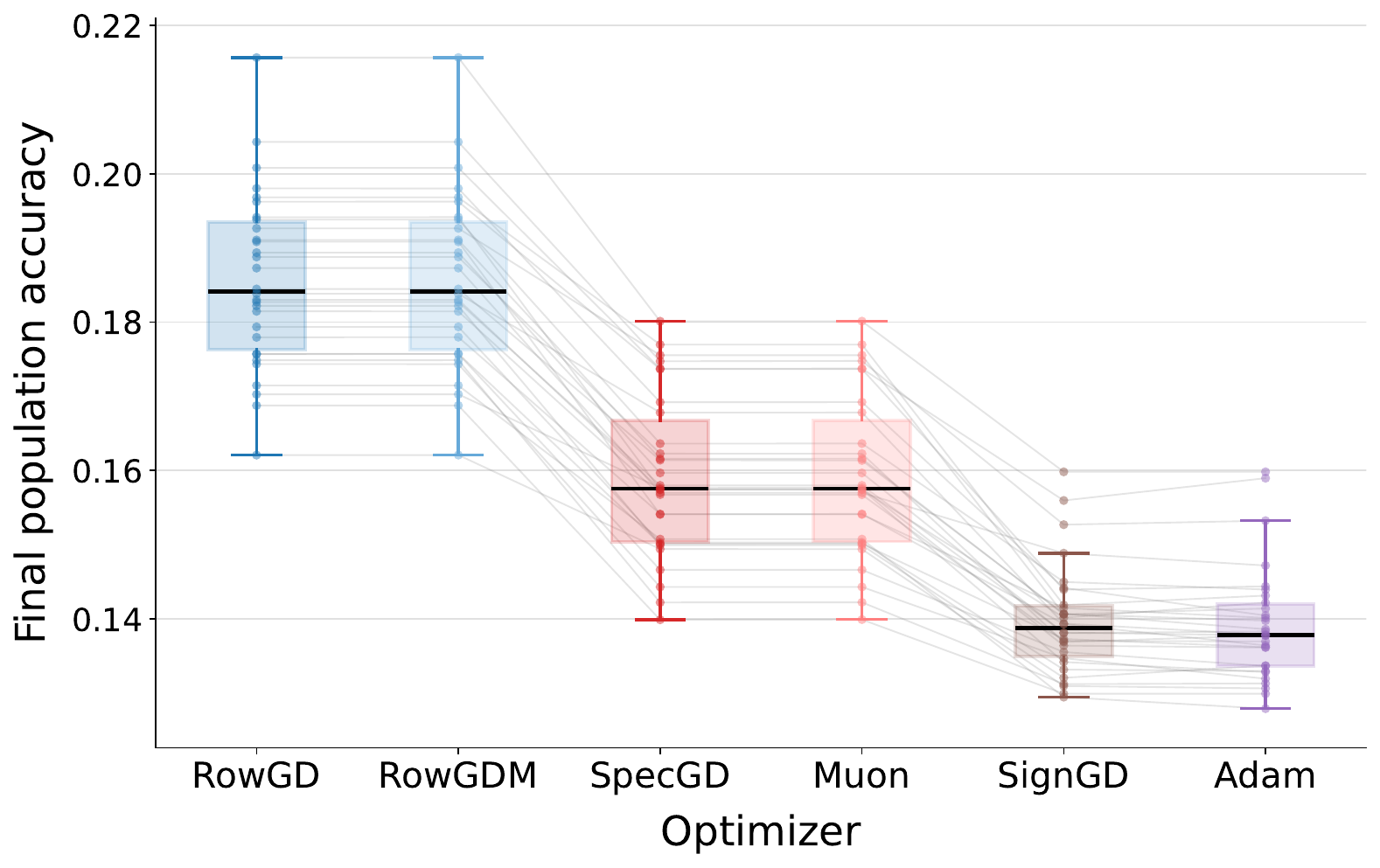}
\end{minipage}
\caption{\textbf{Robustness to training and test noise.} Top: training-noise coefficient \(0.0447\) (left) and \(0.4472\) (right), compared with \(0.2236\) in Figure~\ref{fig:experiments} (left). Bottom: test-noise coefficient \(1.3416\) (left) and \(3.1305\) (right), compared with \(2.2361\) in the main-text setting. In all cases, the residual vectors are drawn from \(\mathcal N({\mathbf 0},I_d/d)\).}
\label{fig:synthetic-robust-noise}
\end{figure}

\begin{figure}[t]
\centering
\begin{minipage}[t]{0.49\linewidth}
\centering
\includegraphics[width=\linewidth]{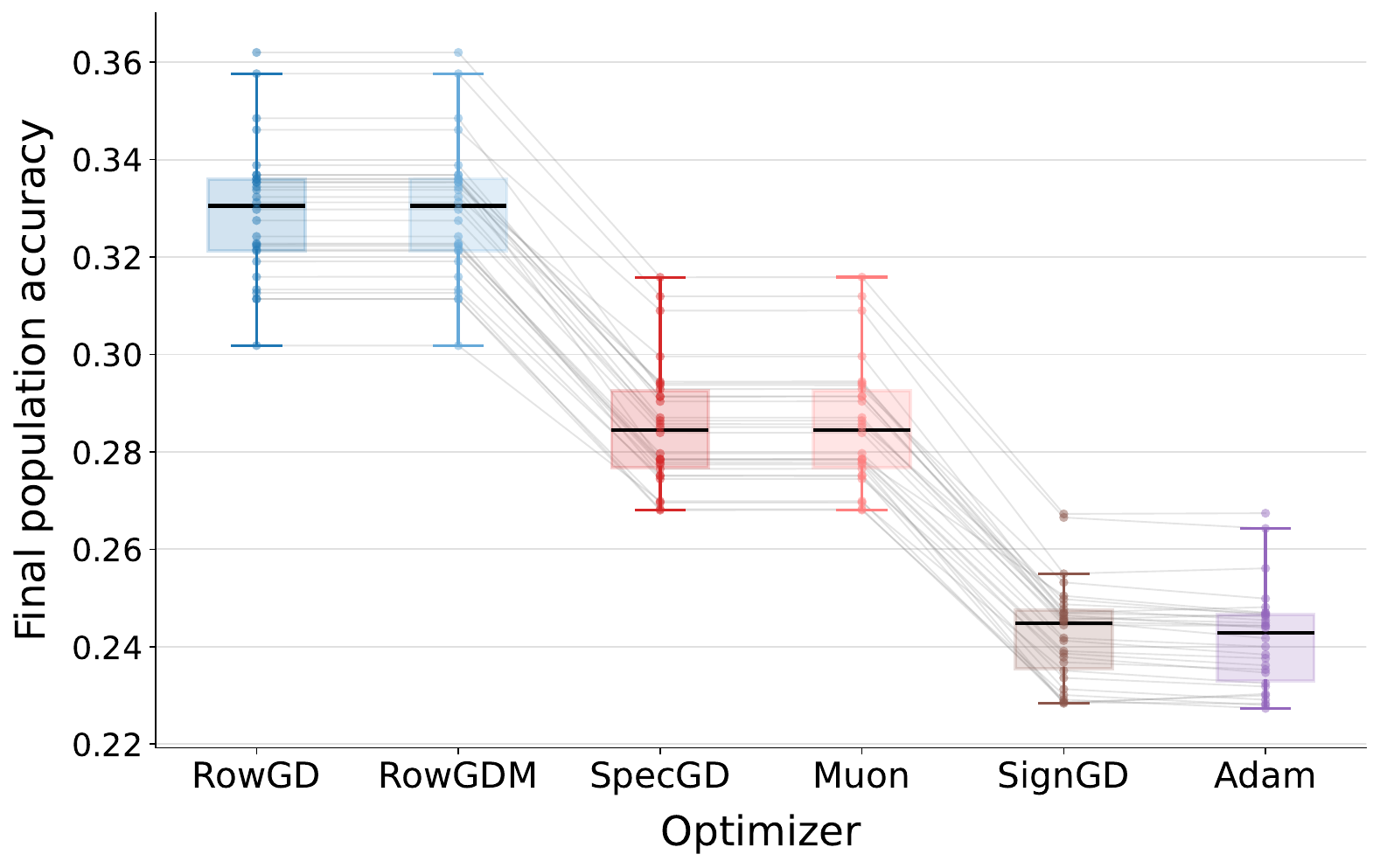}
\end{minipage}
\hfill
\begin{minipage}[t]{0.49\linewidth}
\centering
\includegraphics[width=\linewidth]{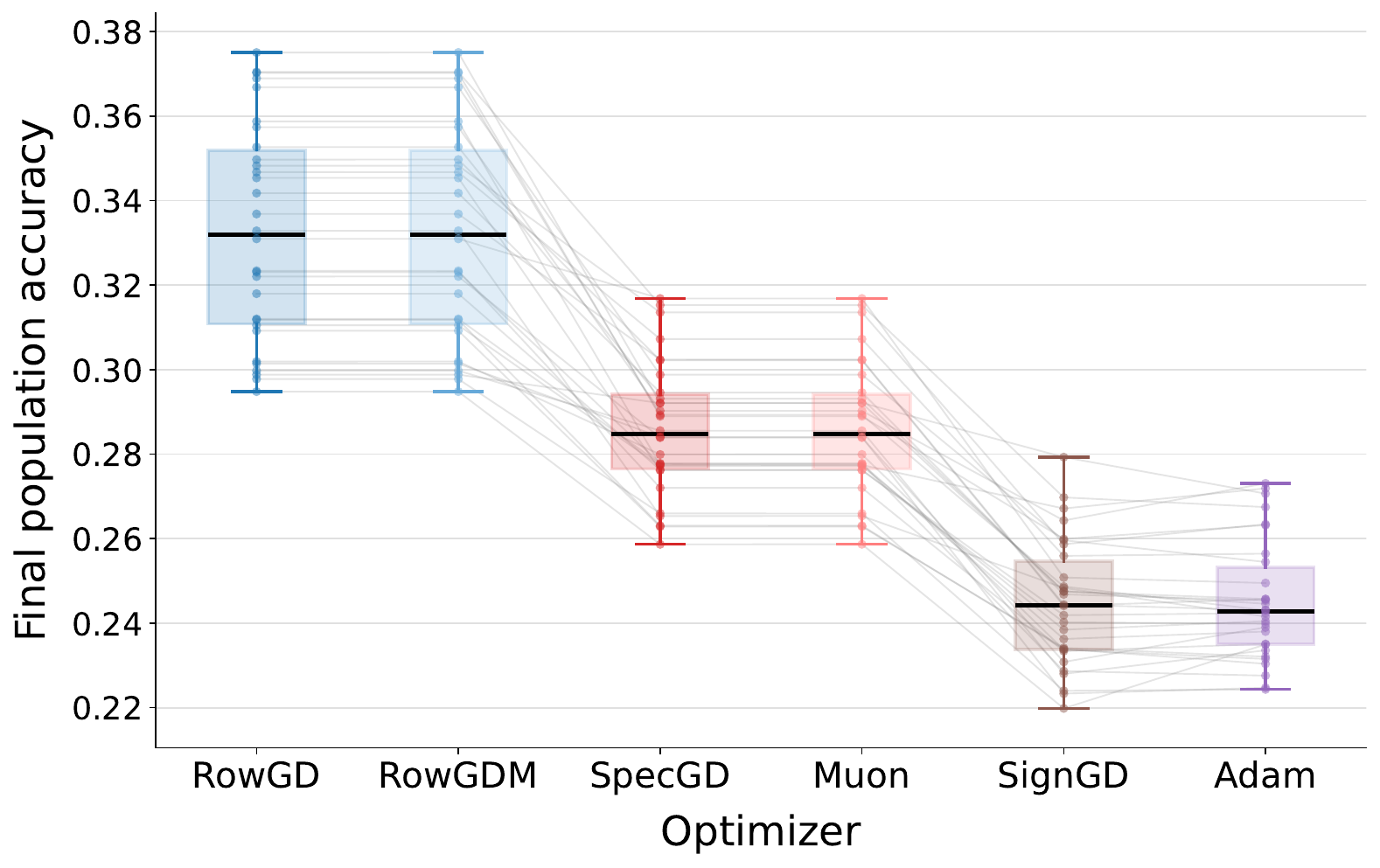}
\end{minipage}

\vspace{2mm}

\begin{minipage}[t]{0.49\linewidth}
\centering
\includegraphics[width=\linewidth]{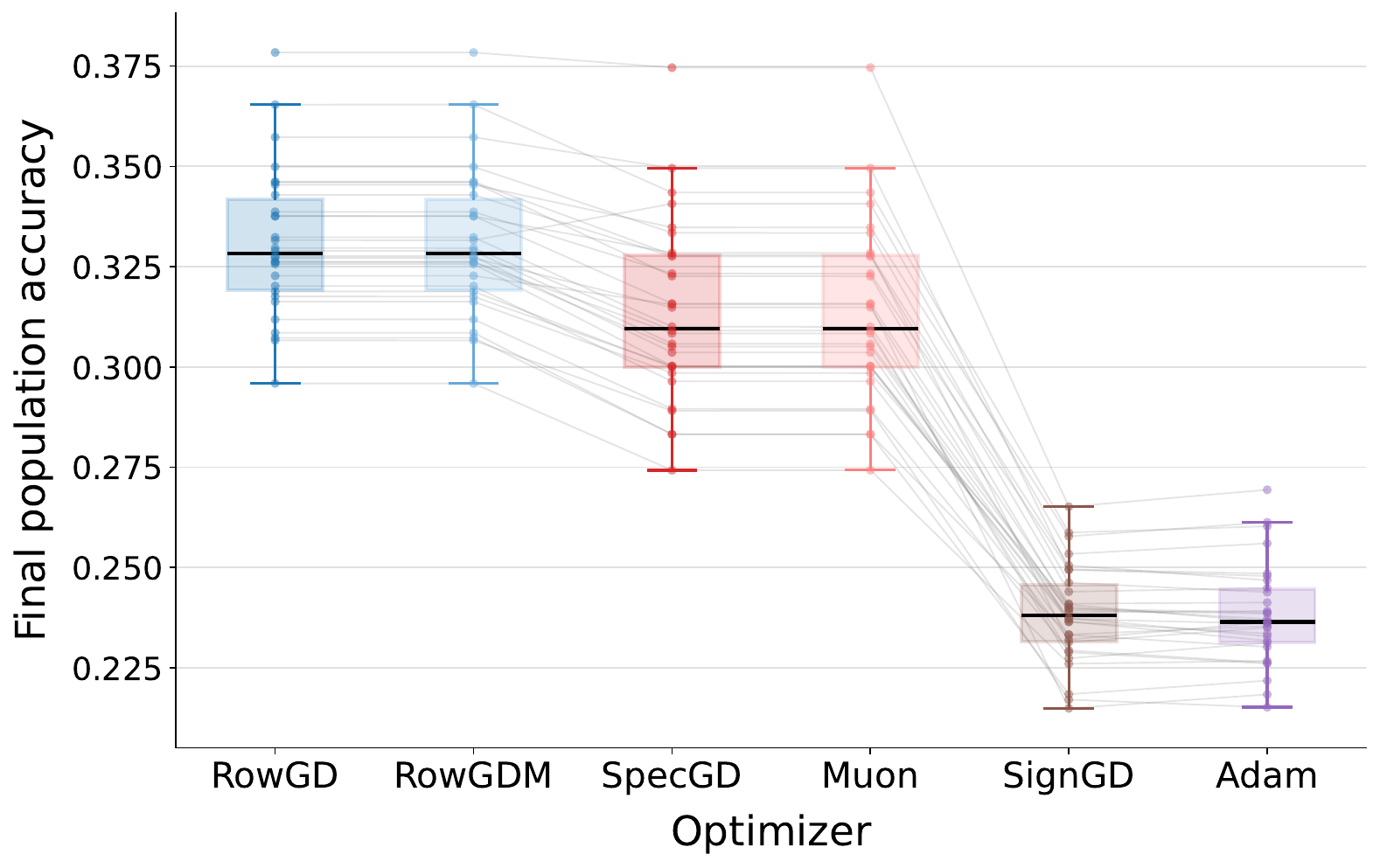}
\end{minipage}
\hfill
\begin{minipage}[t]{0.49\linewidth}
\centering
\includegraphics[width=\linewidth]{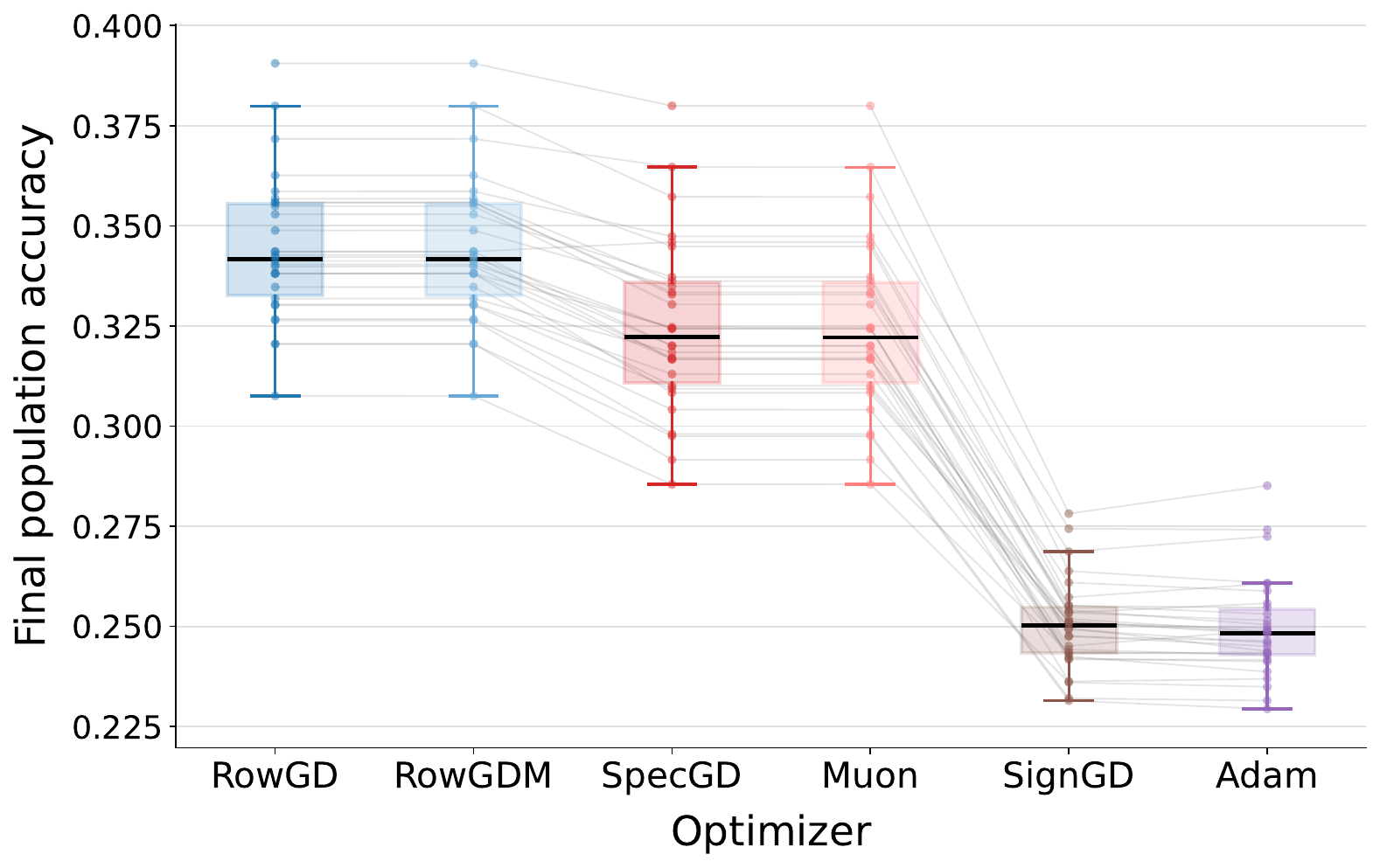}
\end{minipage}
\caption{\textbf{Robustness to class proportions and training sample count.} Top: balanced test classes (left) and \(\operatorname{Dirichlet}(0.3\mathbf 1_k)\) test proportions (right), compared with \(\operatorname{Dirichlet}(\mathbf 1_k)\) in Figure~\ref{fig:experiments} (left). Bottom: exactly one (left) and five (right) training examples per class, compared with independent counts drawn uniformly from \(\{1,\ldots,5\}\) in the main-text setting.}
\label{fig:synthetic-robust-data}
\end{figure}

\begin{figure}[t]
\centering
\begin{minipage}[t]{0.49\linewidth}
\centering
\includegraphics[width=\linewidth]{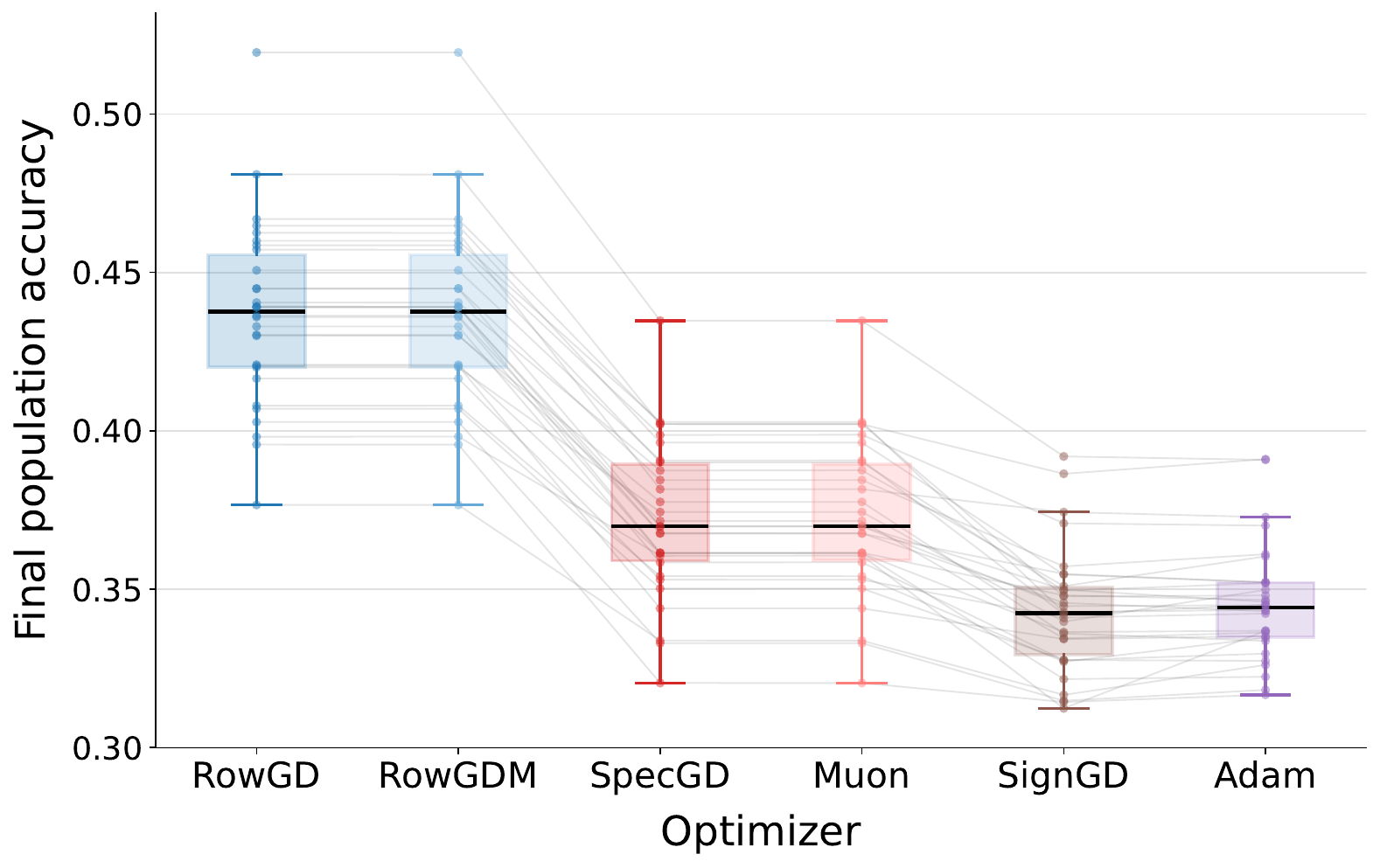}
\end{minipage}
\hfill
\begin{minipage}[t]{0.49\linewidth}
\centering
\includegraphics[width=\linewidth]{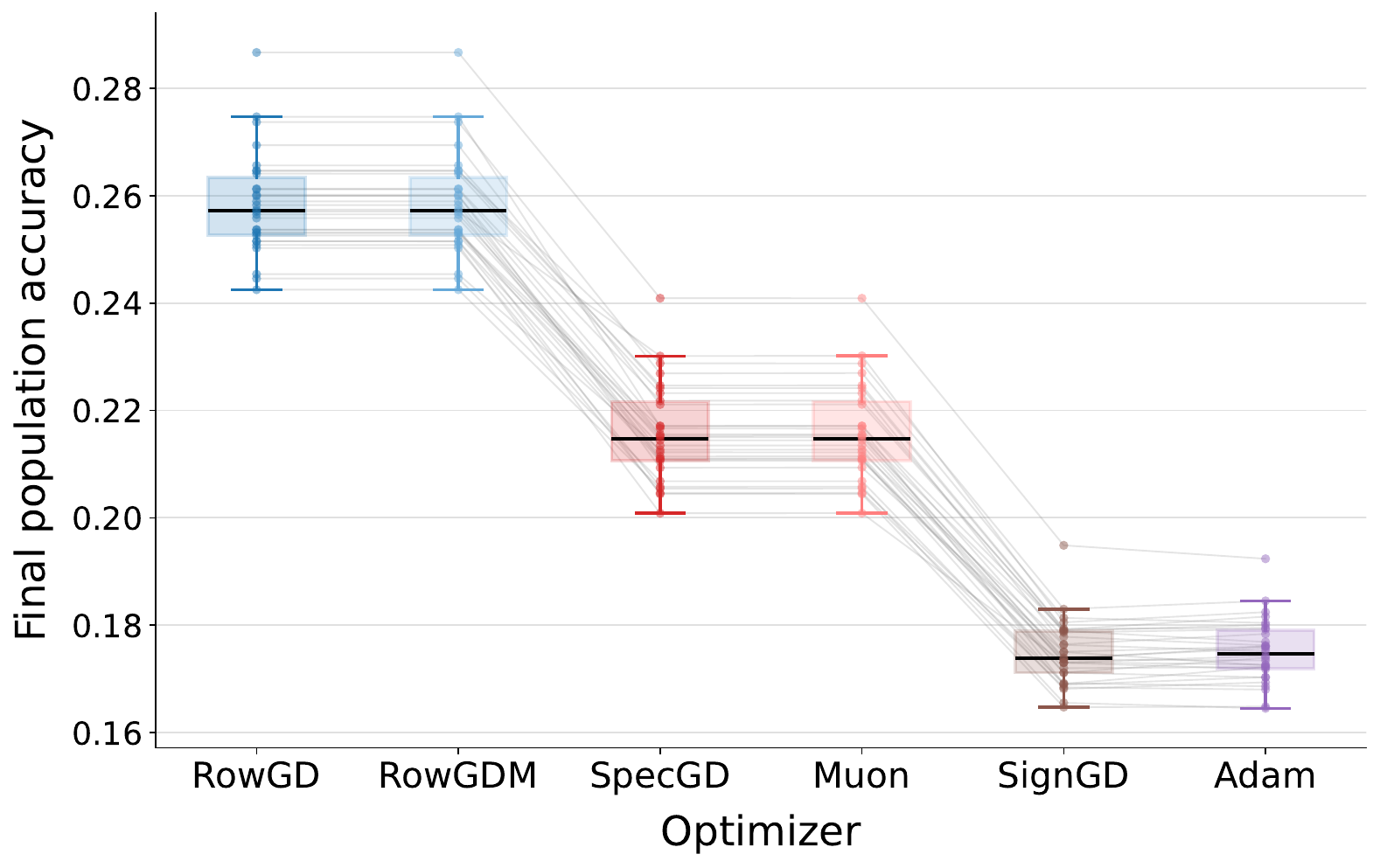}
\end{minipage}

\vspace{2mm}

\begin{minipage}[t]{0.49\linewidth}
\centering
\includegraphics[width=\linewidth]{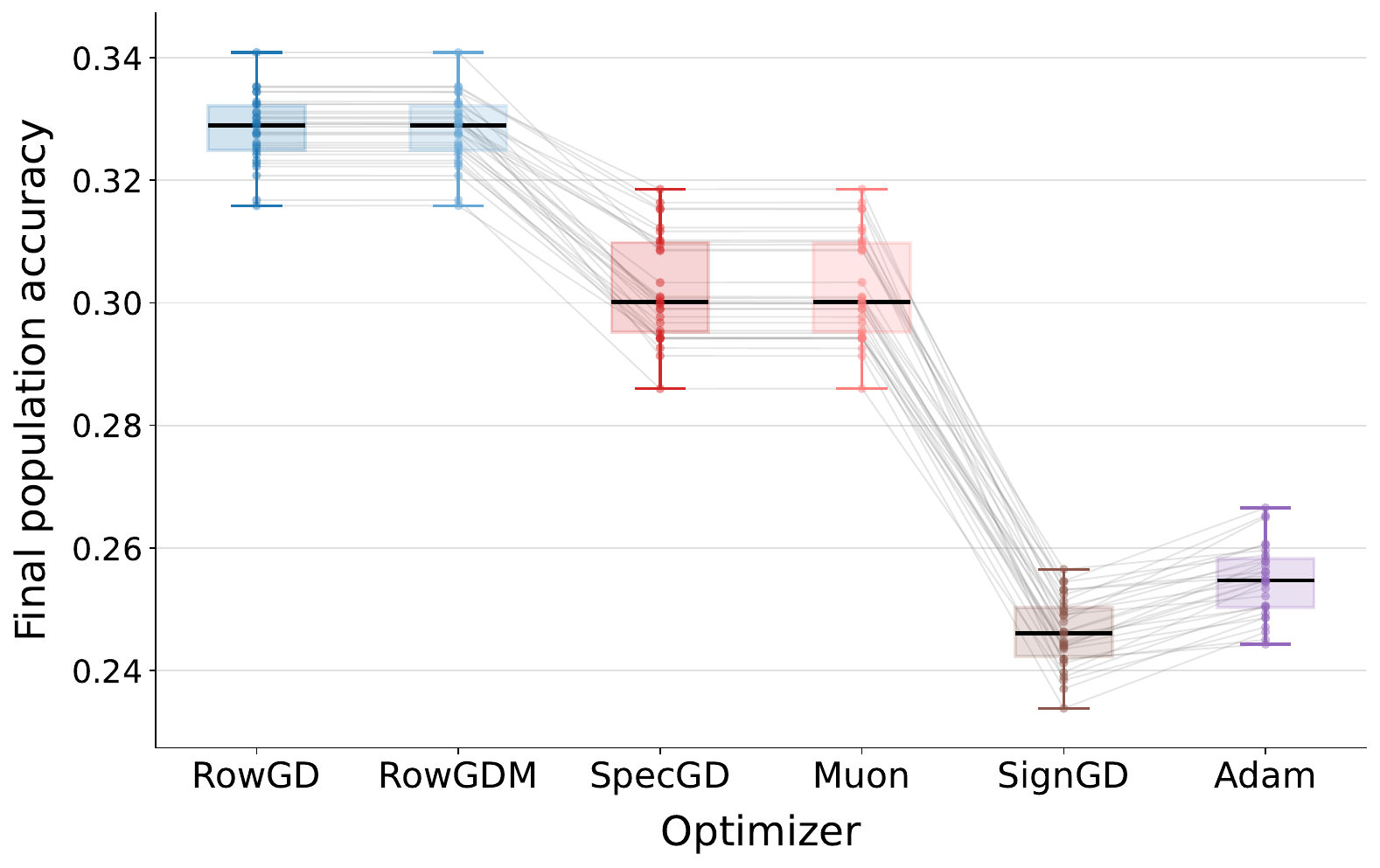}
\end{minipage}
\caption{\textbf{Robustness to problem size and class-mean distribution.} Top: \((k,d)=(30,20)\) (left) and \((120,40)\) (right), compared with \((60,20)\) in Figure~\ref{fig:experiments} (left). Bottom: class means drawn independently and uniformly from the unit sphere instead of from \(\mathcal N({\mathbf 0},I_d/d)\).}
\label{fig:synthetic-robust-size-means}
\end{figure}

\paragraph{Anisotropic synthetic experiments.}
We also test the anisotropic predictions of Section~\ref{sec:anisotropic-extension}. Unless otherwise noted, the training and evaluation protocol is the same as in the baseline synthetic experiment above. Figure~\ref{fig:synthetic-anisotropic-independent} shows a representative independent-orientation setting with power-law exponents \(s_\mu=0.5\) and \(s_\Sigma=2\); across six such settings, RowGD and RowGDM outperform SpecGD, Muon, SignGD, and Adam on all \(30/30\) paired seeds in every setting, with conservative mean advantages between \(2.82\) and \(5.73\) percentage points. Figure~\ref{fig:synthetic-anisotropic-coordinate} illustrates the coordinate-aligned reversal and recovery phenomenon from Proposition~\ref{prop:main-coordinate-reversal}: in the diagonal setting, RowGDM outperforms Adam on all \(30/30\) seeds at \(\lambda=0.25\), Adam outperforms RowGDM on all \(30/30\) seeds at \(\lambda=1\), and applying the same Haar rotation restores the RowGDM-over-Adam ordering on all \(30/30\) seeds. The coordinate-aligned reversal-and-recovery experiments use \(10^6\) training steps to reduce sensitivity to finite-time transients.

\begin{figure}[t]
\centering
\includegraphics[width=0.75\linewidth]{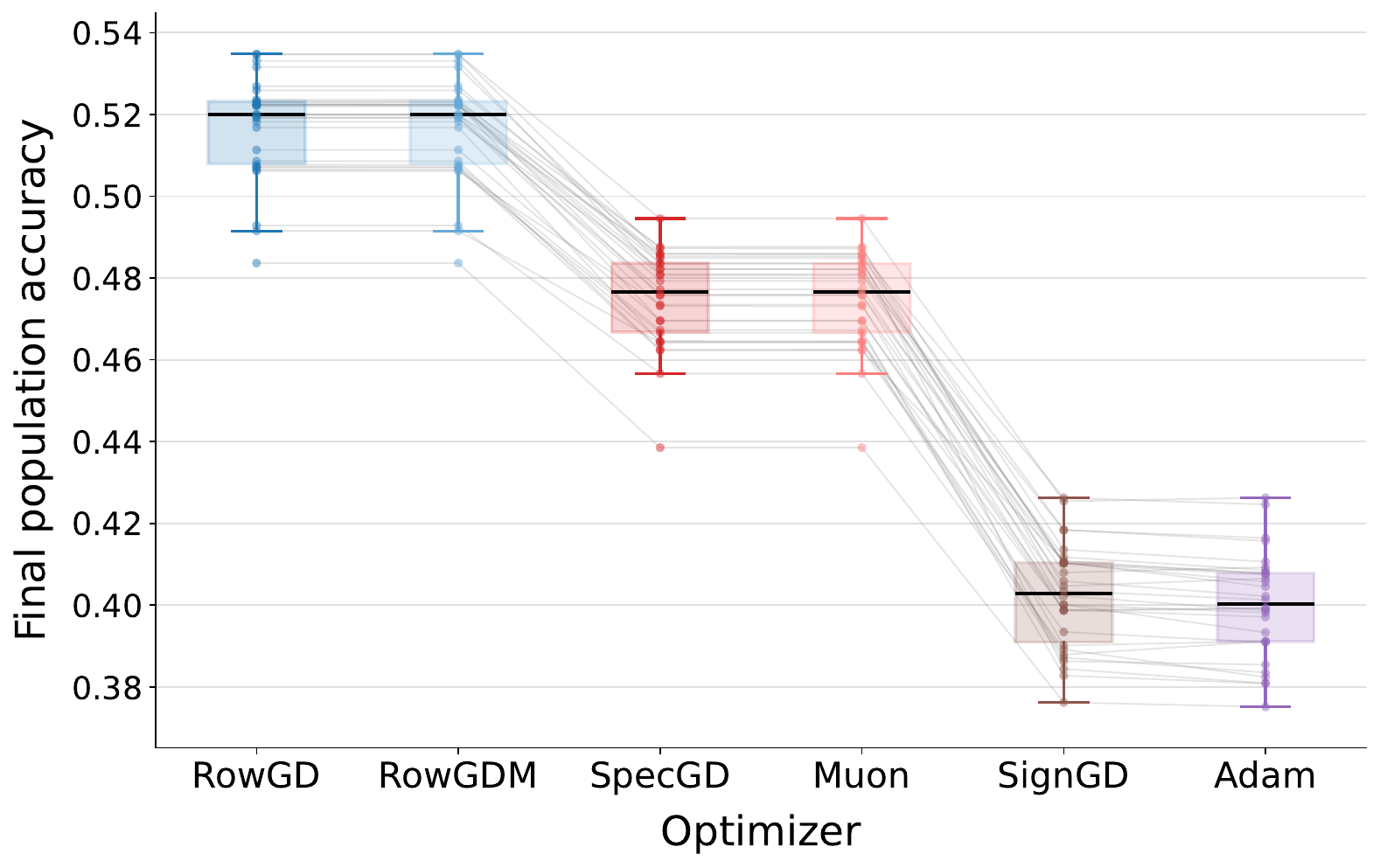}
\caption{\textbf{Independent-orientation anisotropic experiment.} Representative setting with \(s_\mu=0.5\) and \(s_\Sigma=2\), where RowGD and RowGDM outperform SpecGD, Muon, SignGD, and Adam on all \(30\) paired seeds.}
\label{fig:synthetic-anisotropic-independent}
\end{figure}

\begin{figure}[t]
\centering
\includegraphics[width=1.0\linewidth]{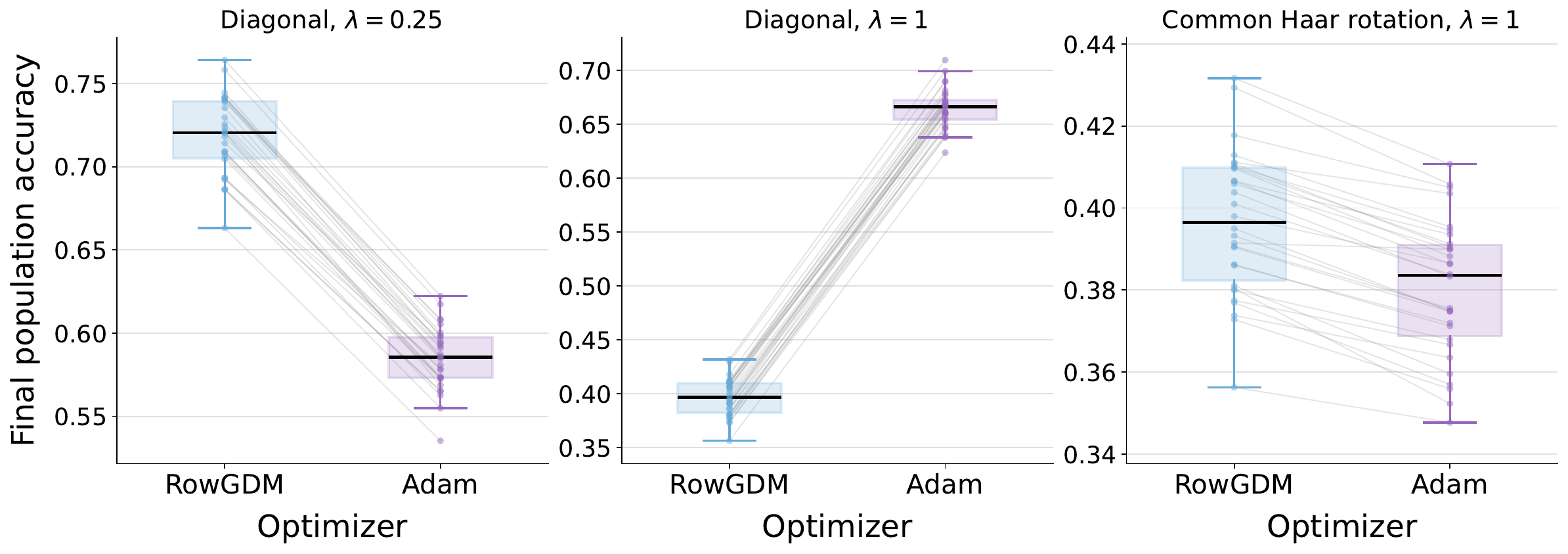}
\caption{\textbf{Coordinate-aligned reversal and recovery.} RowGDM outperforms Adam at \(\lambda=0.25\), Adam outperforms RowGDM at \(\lambda=1\), and the RowGDM-over-Adam ordering is restored after applying the same Haar rotation.}
\label{fig:synthetic-anisotropic-coordinate}
\end{figure}

\FloatBarrier

\subsection{Language Model Experiment}
\label{app:language-model-experiment-details}

\paragraph{Model and starting checkpoint.}
We use a LLaMA-\(60\)M causal language model with \(58.07\)M parameters exactly, consisting of \(8\) transformer layers with hidden size \(512\), intermediate size \(1376\), \(8\) attention heads, and vocabulary size \(32{,}000\). The starting checkpoint is obtained by training the full model from scratch on C4 for \(11{,}000\) updates with global batch size \(512\), sequence length \(256\), a peak learning rate of \(0.005\), cosine decay, and \(1{,}100\) warmup steps. This pretraining uses a hybrid optimizer: Muon is applied to the two-dimensional attention and MLP projection matrices, while AdamW is used for the token embedding, language-model head, and one-dimensional parameters. The resulting checkpoint has validation perplexity \(28.23501\). For the last-layer experiment, we freeze the entire model body, discard the pretrained output-head weights, zero-initialize the \(32{,}000\times512\) output head, and train only this \(16{,}384{,}000\)-parameter matrix.

\paragraph{Last-layer training horizon.}
The main experiment trains the zero-initialized output head for \(22{,}000\) updates in bfloat16 using distributed data parallel training on two NVIDIA RTX A6000 GPUs, with global batch size \(512\), sequence length \(256\), a cosine learning-rate schedule with \(1{,}100\) warmup steps, and zero weight decay and gradient clipping. An \(11{,}000\)-update last-layer run processes \(1.10\)B non-padding training tokens, or approximately \(18.9\) tokens per total model parameter, close to the conventional \(20\)-tokens-per-parameter Chinchilla reference; the \(22{,}000\)-update horizon approximately doubles this token budget to \(2.20\)B tokens. We use the longer horizon to examine a later-training regime in which transient optimization effects are reduced and the trajectories may be closer to the limiting-direction behavior studied theoretically, without assuming that directional convergence has occurred by \(22{,}000\) updates. The Chinchilla comparison is only a token-budget reference, since the model body is frozen and only the output head is optimized in this experiment.

\paragraph{Horizon check.}
We also run a separate \(11{,}000\)-update comparison with its own cosine schedule ending at \(11{,}000\) steps and learning rates selected independently at that horizon. The selected learning rates are \(0.00625\) for RowSGDM, \(0.0045\) for Muon, and \(0.005\) for Adam, yielding final monitoring-set perplexities \(28.5871\), \(28.6373\), and \(28.5914\), respectively. Thus, RowSGDM is already numerically best at approximately the Chinchilla token-budget reference, although its advantage over Adam is only \(0.0152\%\) in this single-seed comparison. The \(22{,}000\)-step experiment therefore probes whether the qualitative ordering persists and becomes clearer at a later training horizon rather than relying on the \(11{,}000\)-step difference alone.

\paragraph{Data and evaluation.}
We use the Hugging Face \texttt{allenai/c4} dataset in streaming mode and tokenize with \texttt{t5-base}, truncating and padding sequences to length \(256\). For each training seed, RowSGDM, Muon, and Adam receive the same shuffled stream and batch sequence. This streaming shuffle is not the epoch-wise permutation of a fixed finite dataset used by the random-reshuffling optimizers in Section~\ref{sec:preliminaries}. The validation trajectory in Figure~\ref{fig:experiments} (right) is obtained from a monitoring set evaluated every \(1{,}000\) updates and containing \(7{,}625{,}745\) valid next-token targets. Final evaluation uses a separate set of \(237{,}723\) C4 validation documents containing \(46{,}186{,}882\) valid targets and having zero document-hash overlap with the monitoring set. The held-out set is not used for learning-rate selection, early stopping, or checkpoint selection. NLL is averaged over the exact number of valid non-padding shifted targets, and perplexity is its exponential.

\paragraph{Optimizers and learning rates.}
RowSGDM uses momentum \(0.9\), row-wise RMS normalization, multiplier \(0.2\), and numerical stabilizer \(\epsilon=10^{-8}\). Muon uses momentum \(0.95\), five Newton--Schulz iterations, multiplier \(0.2\), and zero weight decay. Adam uses \((\beta_1,\beta_2)=(0.9,0.95)\), \(\epsilon=10^{-8}\), and zero weight decay. These implementations are practical counterparts of the RowSGDM, exact-SVD Muon, and zero-stability-constant Adam considered in our theory. For the \(22{,}000\)-step experiment, learning rates are selected using the monitoring set: we sweep \(\{0.004,0.005,0.006,0.007\}\) for RowSGDM and Adam and \(\{0.0025,0.003,0.0035,0.004,0.0045,0.005,0.0055,0.006\}\) for Muon, selecting \(0.005\), \(0.005\), and \(0.004\), respectively.

\paragraph{Boundary-alignment diagnostic.}
We additionally test whether the performance ordering is accompanied by the boundary geometry predicted by our analysis. For each valid held-out next-token position, let \(h_t\in\mathbb R^{512}\) denote the frozen hidden representation immediately before the output head and let \(y_t\) denote its next-token class. For class \(i\), we estimate \(\widehat\mu_i=n_i^{-1}\sum_{t:y_t=i}h_t\) and retain the \(5{,}093\) classes with \(n_i\ge1000\). We uniformly sample a fixed set \(\mathcal P\) of \(1{,}000{,}000\) unordered class pairs and use the same pairs for every optimizer and seed. For a trained output head with rows \(w_i^\top\), define
\[
\widehat u^W_{ij}:=\frac{w_i-w_j}{\|w_i-w_j\|_2},\qquad \widehat u^\mu_{ij}:=\frac{\widehat\mu_i-\widehat\mu_j}{\|\widehat\mu_i-\widehat\mu_j\|_2},
\]
and measure
\[
\widehat D_{\mathrm{avg\text{-}bdry}}(W):=\frac{1}{|\mathcal P|}\sum_{\{i,j\}\in\mathcal P}\|\widehat u^W_{ij}-\widehat u^\mu_{ij}\|_2,\qquad \widehat C_{\mathrm{bdry}}(W):=\frac{1}{|\mathcal P|}\sum_{\{i,j\}\in\mathcal P}\langle\widehat u^W_{ij},\widehat u^\mu_{ij}\rangle.
\]
Lower \(\widehat D_{\mathrm{avg\text{-}bdry}}\) and higher \(\widehat C_{\mathrm{bdry}}\) indicate closer alignment between learned classifier boundaries and empirical class-mean differences.

\paragraph{Results.}
Consistent with the final validation-perplexity ordering in Figure~\ref{fig:experiments} (right), RowSGDM achieves lower held-out NLL and perplexity than both Muon and Adam on each of the five paired seeds. RowSGDM also has the smallest boundary discrepancy and the largest boundary cosine on every seed, so its performance advantage is accompanied by closer alignment between learned classifier boundaries and empirical class-mean differences. Table~\ref{tab:language-model-last-layer-results} summarizes the five-seed held-out results.

\begin{table}[t]
\centering
\footnotesize
\setlength{\tabcolsep}{4pt}
\begin{tabular}{@{}lcccc@{}}
\toprule
Optimizer & NLL \(\downarrow\) & PPL \(\downarrow\) & \(\widehat D_{\mathrm{avg\text{-}bdry}}\downarrow\) & \(\widehat C_{\mathrm{bdry}}\uparrow\) \\
\midrule
RowSGDM & \(\mathbf{3.35503\pm0.00005}\) & \(\mathbf{28.6466\pm0.0013}\) & \(\mathbf{1.13194\pm0.00004}\) & \(\mathbf{0.35652\pm0.00005}\) \\
Muon & \(3.35699\pm0.00004\) & \(28.7027\pm0.0012\) & \(1.13807\pm0.00006\) & \(0.35018\pm0.00007\) \\
Adam & \(3.35759\pm0.00002\) & \(28.7199\pm0.0006\) & \(1.13350\pm0.00008\) & \(0.35481\pm0.00009\) \\
\bottomrule
\end{tabular}
\caption{Held-out performance and empirical boundary alignment for last-layer-only LLaMA training. Values are mean \(\pm\) sample standard deviation across five paired training seeds.}
\label{tab:language-model-last-layer-results}
\end{table}

The mean paired RowSGDM-minus-Muon NLL difference is \(-0.001957\), with a descriptive \(95\%\) Student-\(t\) interval \([-0.002058,-0.001857]\), and the RowSGDM-minus-Adam difference is \(-0.002555\), with interval \([-0.002621,-0.002489]\). The mean reductions in \(\widehat D_{\mathrm{avg\text{-}bdry}}\) are \(0.006137\) relative to Muon and \(0.001558\) relative to Adam. Together, the validation, held-out, and boundary-alignment results are consistent with the qualitative ordering and geometric mechanism predicted by our theory.

\paragraph{LLaMA-130M experiment.} We additionally repeat the last-layer comparison on a separately C4-pretrained LLaMA-130M backbone, whose actual parameter count is \(134.1\)M, with \(12\) transformer layers and hidden size \(768\).
The tokenizer, sequence length, global batch size, evaluation sets, optimizer definitions, and five paired seeds are the same as above. We train the zero-initialized \(32{,}000\times768\) output head for \(55{,}000\) updates, with each selected run processing approximately \(5.47\)B valid next-token targets, and a common full-horizon learning-rate sweep selects \(0.002828\) for all three optimizers. RowSGDM attains the lowest held-out perplexity on every seed, with five-seed mean \(\pm\) sample standard deviation \(22.3797\pm0.0014\), compared with \(22.4096\pm0.0030\) for Muon and \(22.4153\pm0.0010\) for Adam. RowSGDM also yields the smallest boundary discrepancy on every seed, with mean \(\widehat D_{\mathrm{avg\text{-}bdry}}=1.19051\), compared with \(1.19664\) for Adam and \(1.20028\) for Muon, providing a larger-model check of the same qualitative advantage.

\FloatBarrier

\subsection{Spectral Exponents for Language Model Representation}
\label{app:language-model-represen}

We empirically examine whether a fitted class-mean spectral exponent below one can occur in language-model representations, as in the regime \(s_\mu<1\) of Theorem~\ref{thm:aniso-dd-ordering}. We also measure class-conditional within-class covariance spectral exponents as descriptive statistics of within-class anisotropy. These measurements are finite-dimensional spectral summaries and are not intended as a direct test of the full probabilistic model underlying the theorem.

\paragraph{Experimental setup.} We analyze final pre-head representations from OLMoE-1B-7B-0125 \citep{muennighoff2024olmoeopenmixtureofexpertslanguage}, Qwen3-30B-A3B \citep{qwen3technicalreport}, and AMUSE-720M \citep{kim2026amuse}, all with hidden dimension \(d=2048\). For each model, we collect exactly \(20{,}000{,}000\) valid same-document next-token positions from the English validation split of C4. At a valid position, let \(h_{it}\in\mathbb R^d\) denote the final normalized hidden representation when the next-token class is \(i\), and let \(n_i\) be the number of observed positions from class \(i\). We define the empirical class mean by
\begin{equation}
\widehat\mu_i:=\frac{1}{n_i}\sum_{t=1}^{n_i}h_{it}.
\label{eq:app-lm-observed-class-mean}
\end{equation}

\paragraph{Class-mean spectral exponent.} Our primary mean-spectrum analysis assigns equal weight to every observed next-token class. Let
\[
\mathcal I_\mu:=\{i:n_i\ge1\},\qquad K:=|\mathcal I_\mu|,
\]
and define
\begin{equation}
\widehat{\bar\mu}:=\frac{1}{K}\sum_{i\in\mathcal I_\mu}\widehat\mu_i,\qquad \widehat\Gamma:=\frac{1}{K}\sum_{i\in\mathcal I_\mu}(\widehat\mu_i-\widehat{\bar\mu})(\widehat\mu_i-\widehat{\bar\mu})^\top.
\label{eq:app-lm-all-observed-mean-cov}
\end{equation}
Thus \(\widehat\Gamma\) is class balanced rather than token-frequency weighted; in particular, every observed class contributes one empirical class mean. If \(\widehat\lambda_1^\mu\ge\cdots\ge\widehat\lambda_{2048}^\mu>0\) are the eigenvalues of \(\widehat\Gamma\), we estimate the mean exponent by the unweighted pointwise OLS fit
\begin{equation}
\log\widehat\lambda_r^\mu=a-\widehat{s}_\mu\log r+\varepsilon_r,\qquad r=1,\ldots,2048.
\label{eq:app-lm-mean-spectral-fit}
\end{equation}
Each integer rank therefore contributes one regression observation; the fit is not log binned or rank weighted.

\paragraph{Within-class covariance spectral exponents.} For classes with sufficiently many observations, we form the within-class sample covariance
\begin{equation}
\widehat\Sigma_i:=\frac{1}{n_i-1}\sum_{t=1}^{n_i}(h_{it}-\widehat\mu_i)(h_{it}-\widehat\mu_i)^\top
\label{eq:app-lm-residual-cov}
\end{equation}
and fit
\begin{equation}
\log\widehat\lambda_{i,r}^{\Sigma}=a_i-\widehat{s}_{\Sigma,i}\log r+e_{i,r},\qquad r=1,\ldots,2048.
\label{eq:app-lm-residual-spectral-fit}
\end{equation}
The reliable-class threshold is \(n_i\ge8192\) for OLMoE and AMUSE and \(n_i\ge16384\) for Qwen3. This leaves \(218\), \(211\), and \(120\) classes with reliable within-class covariance estimates, respectively.

\begin{table}[t]
\centering
\caption{Full-rank pointwise-OLS spectral exponents at the three final checkpoints. \(K\) is the number of observed classes used for the class-mean covariance, and \(K_\Sigma\) is the number of reliable classes used for the within-class covariance summary.}
\label{tab:app-lm-spectral-exponents}
\small
\begin{tabular}{@{}lrrrrrr@{}}
\toprule
Model & \(K\) & \(\widehat{s}_\mu\) & \(K_\Sigma\) & Mean \(\widehat{s}_{\Sigma,i}\) & Median & Max \\
\midrule
OLMoE-1B-7B-0125 & \(47{,}190\) & \(0.957735\) & \(218\) & \(0.987339\) & \(0.973537\) & \(1.374366\) \\
Qwen3-30B-A3B & \(72{,}577\) & \(0.932780\) & \(120\) & \(1.057924\) & \(1.045483\) & \(1.258163\) \\
AMUSE-720M & \(49{,}348\) & \(0.985366\) & \(211\) & \(1.054793\) & \(1.045965\) & \(1.402045\) \\
\bottomrule
\end{tabular}
\end{table}

\paragraph{Results.} For the three final checkpoints examined here, the primary class-mean point estimates in Table~\ref{tab:app-lm-spectral-exponents} all satisfy \(\widehat{s}_\mu<1\): \(0.957735\) for OLMoE, \(0.932780\) for Qwen3, and \(0.985366\) for AMUSE. The within-class covariance exponents are more heterogeneous and can exceed one, as reflected by the maxima in the table.

\paragraph{OLMoE across annealing.} We additionally apply the \(\widehat{s}_\mu\) estimator to six published OLMoE snapshots spanning \(0\)--\(101\)B added annealing tokens. The resulting estimates are \(0.956727\), \(0.957792\), \(0.956569\), \(0.956530\), \(0.958359\), and \(0.957735\). Thus, every sampled estimate is below one, the total range is only \(0.001830\), and there is no monotone trend across the sampled snapshots.

\end{document}